\documentclass{article}

\usepackage[parfill]{parskip}

\usepackage{amsmath,amsthm,amsfonts,amssymb}
\usepackage{mathtools}

\usepackage{fontspec}
\defaultfontfeatures{Ligatures=TeX}

\usepackage{unicode-math}
\usepackage[dvipsnames]{xcolor}
\colorlet{linkcol}{MidnightBlue!85!black} 
\colorlet{citecol}{Mulberry!90!black}    
\colorlet{urlcol}{Mahogany!85!black}      
\usepackage{bbm}
\usepackage{booktabs}
\usepackage{nicefrac}
\usepackage{microtype}
\usepackage{multirow}
\usepackage[inline]{enumitem}
\usepackage{diagbox}
\usepackage{comment}
\usepackage{graphicx}
\usepackage{wrapfig}
\usepackage[font=small]{caption}
\usepackage{subcaption}
\usepackage{algorithm}
\usepackage{algorithmicx}
\usepackage{algpseudocode}
\usepackage{setspace}
\usepackage{placeins}
\usepackage{float}

\usepackage[authoryear,round]{natbib}

\usepackage{hyperref}
\hypersetup{
  colorlinks   = true,
  linkcolor    = linkcol,
  citecolor    = citecol,
  urlcolor     = urlcol,
  pdfborder    = {0 0 0},
  breaklinks   = true
}
\usepackage[capitalise,noabbrev]{cleveref}

\makeatletter
\renewenvironment{abstract}{%
  \par\begingroup
  \begin{center}
    {\bfseries\small \abstractname\par}
    \vspace{0.4em}
    \begin{minipage}{0.92\textwidth}
      \small
}{
    \end{minipage}
  \end{center}
  \endgroup
}
\makeatother

\usepackage{pifont}

\theoremstyle{plain}
\newtheorem{theorem}{Theorem}
\newtheorem{assumption}[theorem]{Assumption} 

\newtheorem{lemma}{Lemma}[section]
\newtheorem{corollary}[lemma]{Corollary}
\newtheorem{proposition}[theorem]{Proposition}
\theoremstyle{definition}
\newtheorem{definition}{Definition}
\theoremstyle{remark}
\newtheorem{remark}[lemma]{Remark}

\newtheorem*{theorem*}{Theorem}

\title{Sessa: Selective State Space Attention}
\author{Liubomyr Horbatko\\ \texttt{liubomir.horbatko@gmail.com}}
\date{}

\newcommand{\R}{\mathbb{R}}

\newcommand{\Assm}{A_{\mathrm{ssm}}}
\newcommand{\Bssm}{B_{\mathrm{ssm}}}
\newcommand{\Cssm}{C_{\mathrm{ssm}}}
\newcommand{\Dssm}{D_{\mathrm{ssm}}}

\RenewDocumentCommand{\Assm}{e{^_}}{%
  {A}_{\mathrm{ssm}\IfValueT{#2}{,#2}}%
  \IfValueT{#1}{^{#1}}%
}
\RenewDocumentCommand{\Bssm}{e{^_}}{%
  {B}_{\mathrm{ssm}\IfValueT{#2}{,#2}}%
  \IfValueT{#1}{^{#1}}%
}
\RenewDocumentCommand{\Cssm}{e{^_}}{%
  {C}_{\mathrm{ssm}\IfValueT{#2}{,#2}}%
  \IfValueT{#1}{^{#1}}%
}
\RenewDocumentCommand{\Dssm}{e{^_}}{%
  {D}_{\mathrm{ssm}\IfValueT{#2}{,#2}}%
  \IfValueT{#1}{^{#1}}%
}
\newcommand{\Assmt}[1]{\Assm_{#1}}
\newcommand{\Bssmt}[1]{\Bssm_{#1}}
\newcommand{\Cssmt}[1]{\Cssm_{#1}}
\newcommand{\Dssmt}[1]{\Dssm_{#1}}

\newcommand{\Mdom}{M_{\mathcal D}}
\newcommand{\msc}{m_{\mathrm{sc}}}

\newcommand{\Bbatch}{B_{\mathrm{batch}}}
\newcommand{\Tctx}{T}
\newcommand{\Nstate}{\mathsf{N}}

\newcommand{\VisSet}[1]{\mathcal{W}_{#1}} 
\newcommand{\rhospec}[1]{\rho_{\mathrm{spec}}\!\left(#1\right)}
\newcommand{\rhonbhd}{\rho_{\mathrm{nbhd}}}
\newcommand{\Pref}{\mathcal{P}^{\mathrm{pref}}}
\newcommand{\Gammafb}[1]{\Gamma_{\mathrm{fb}}\!\left(#1\right)}
\newcommand{\idxlogit}{i}
\newcommand{\dstate}{d_{\mathrm{state}}}

\ExplSyntaxOn
\NewDocumentCommand{\Kset}{}{
  \Kset_maybe_subscript:
}
\cs_new_protected:Npn \Kset_maybe_subscript:
{
  \peek_meaning:NTF _
    { \Kset_with_subscript:w }
    { \mathcal K_{\mathrm{set}} }
}
\cs_new_protected:Npn \Kset_with_subscript:w _ #1
{
  \mathcal K_{\mathrm{set},#1}
}
\ExplSyntaxOff

\providecommand{\alphadrv}{}%
\providecommand{\alphafb}{}%
\RenewDocumentCommand{\alphadrv}{e{^_}}{%
  \alpha^{\mathrm{fwd}\IfValueT{#1}{,#1}}%
  \IfValueT{#2}{_{#2}}%
}

\RenewDocumentCommand{\alphafb}{e{^_}}{%
  \alpha^{\mathrm{fb}\IfValueT{#1}{,#1}}%
  \IfValueT{#2}{_{#2}}%
}

\providecommand{\Bfb}{B_{\mathrm{fb}}}
\RenewDocumentCommand{\Bfb}{e{^_}}{%
  B_{\mathrm{fb}\IfValueT{#2}{,#2}}%
  \IfValueT{#1}{^{#1}}%
}

\newcommand{\Gssm}{G_{\mathrm{ssm}}} %
\RenewDocumentCommand{\Gssm}{e{^_}}{%
  {G}_{\mathrm{ssm}\IfValueT{#2}{,#2}}%
  \IfValueT{#1}{^{#1}}%
}
\newcommand{\Gssmt}[1]{\Gssm_{#1}}

\newcommand{\BfbEnt}[2]{\big[\Bfb\big]_{#1,#2}}

\newcommand{\betatail}{\beta_{\mathrm{tail}}}

\newcommand{\xnorm}{\tilde{x}}
\newcommand{\xLN}{\xnorm}

\newcommand{\sigk}{\sigma_k}
\newcommand{\GR}{G_R}

\newcommand{\Vmax}{V_{\max}}  
\newcommand{\Gmax}{G_{\max}}

\newcommand{\Alphafb}{\Alpha_{\mathrm{fb}}}

\DeclareMathOperator{\Id}{Id}
\DeclareMathOperator{\Norm}{Norm}
\DeclareMathOperator{\LN}{LN}
\DeclareMathOperator{\Softmax}{softmax}

\NewDocumentCommand{\Keffdeep}{e{_}}{%
  K^{(\Nlayer)}_{\mathrm{eff}\IfValueT{#1}{,#1}}%
}

\newcommand{\Nlayer}{N_{\mathrm{layer}}}
\newcommand{\idxlayer}{n_{\mathrm{layer}}}
\newcommand{\idxlayeri}[1]{n_{\mathrm{layer},#1}}

\DeclareMathOperator{\GELU}{GELU}
\DeclareMathOperator{\Sat}{Sat}

\begin{document}
\maketitle
\begin{abstract}
Modern sequence modeling is dominated by two families: Transformers, whose self-attention can access arbitrary elements of the visible sequence, and structured state-space models, which propagate information through an explicit recurrent state. These mechanisms face different limitations on long contexts: when attention is diffuse, the influence of individual tokens is diluted across the effective support, while recurrent state propagation can lose long-range sensitivity unless information is actively preserved. As a result, both mechanisms face challenges in preserving and selectively retrieving information over long contexts.
We propose Sessa, a decoder that places attention inside a recurrent feedback path. This creates many attention-based paths through which past tokens can influence future states, rather than relying on a single attention read or a single recurrent chain. We prove that, under explicit assumptions and matched regimes, Sessa admits power-law memory tails $O(\ell^{-\beta})$ for $0 < \beta < 1$, with slower decay than in the corresponding Transformer and Mamba-style baselines. We further give an explicit construction that achieves this power-law rate. Under the same assumptions, Sessa is the only model class among those considered that realizes flexible selective retrieval, including profiles whose influence does not decay with distance.
Consistent with this theoretical advantage, across matched experiments, Sessa achieves the strongest performance on long-context benchmarks while remaining competitive with Transformer and Mamba-style baselines on short-context language modeling.
\end{abstract}

\section{Introduction}
\label{sec:intro}

Long-context sequence modeling is central to modern foundation models across language, vision, speech, time series, and genomics \citep{bommasani2021foundation,brown2020language,dosovitskiy2020image,baevski2020wav2vec2,ansari2024chronos,dallatorre2025nucleotidetransformer}. Despite the architectural flexibility of the foundation-model paradigm, state-of-the-art systems are still overwhelmingly based on the Transformer and its self-attention mechanism \citep{vaswani2017attention}.
 
A useful lens is to describe modern sequence mixers by how they route information from the past and how they maintain memory over time.
In many modern architectures, routing decisions are input-dependent: the model uses the current token and its context to decide which parts of the visible history to consult.
Under this view, self-attention implements an input-dependent \emph{direct-read} mechanism:
at each position, it computes a query-dependent pattern of relevance over the visible context and uses it to read out information from selected past positions.
This framing highlights attention's key strength, a selection mechanism over variable support length, but also a structural limitation:
the retrieval is performed in a single pass, without an internal feedback loop that would repeatedly incorporate past readouts into an evolving state.
Separately, standard implementations are also computationally expensive at long contexts due to quadratic time/memory scaling \citep{vaswani2017attention,rabe2021selfattention}.

In parallel, structured recurrent sequence models, especially state space models (SSMs), which realize long-range dynamics through a latent state and an explicit feedback path, have re-emerged as a compelling alternative for long-context modeling \citep{gu2022efficiently, gu2022s4d}. SSMs can be interpreted as modern descendants of classical dynamical systems \citep{kalman1960new} and admit linear (or near-linear) scaling in sequence length. 
However, for information-dense discrete data, a persistent challenge is that stable feedback dynamics often exhibit rapid attenuation of distant information (commonly exponential forgetting \citep{huang2025inputselectivitymamba}), which can hinder integrating multiple far-apart evidence snippets under heavy distractors. Selective SSMs (e.g., Mamba) can conditionally slow this attenuation by modulating the effective transition \citep{gu2023mamba, dao2024transformersssms} (e.g., $\Assmt{t} \approx I$ on selected steps, ``freeze time'' \citep{huang2025inputselectivitymamba}),
but this mechanism is input-dependent and can fail when relevant and irrelevant positions induce similar local representations, leading to preserving or overwriting the wrong content.

These perspectives suggest complementary long-context failure modes. Stable feedback dynamics can suffer from
exponential forgetting. Attention, while input-dependent, can suffer from dilution:
when attention mass is spread across a large effective support of competing tokens (e.g., many near-tied logits),
individual weights, and thus per-token contributions and sensitivities, decrease roughly inversely with that support
(often behaving like $O(1/S_{\mathrm{eff}}(t))$, and in the worst case like $O(1/\Tctx)$ when the effective support grows
proportionally with context length $\Tctx$)\citep{mudarisov2025limitations}. In practice, both effects can limit reliable long-range evidence integration.

We introduce \textbf{Sessa}, a decoder architecture that injects input-dependent attention into a feedback (recurrent) path, combining direct-read input-dependent routing with stateful aggregation through the feedback channel.
Viewed through a temporal routing lens, for a fixed source token $\tau$ and target position $t$ (lag $\ell=t-\tau$), a single self-attention layer routes influence via a \emph{single routing step} (a direct edge $\tau\!\to\!t$), while chain-structured state-space recurrences propagate along the unique length-$\ell$ temporal chain.
Sessa introduces route diversity within a single layer: its attention-induced feedback operator aggregates contributions over multiple internal routing depths (and, in dense patterns, many temporal paths), which can help sustain long-range sensitivity when routing is diffuse (formalized in Section~\ref{sec:theory_memory}).
Concretely, while self-attention corresponds to an input-dependent direct-read system (in the values), Sessa realizes an
input-dependent feedback system: it maintains a latent state over unbounded horizons, while the feedback dynamics remain
input-dependent via attention-based routing inside the loop (potentially over variable-support patterns). Intuitively, Sessa
retains the representational benefits of recurrence for long-range accumulation while leveraging attention as an
input-dependent mechanism within the feedback pathway.

Related architectural ideas have introduced recurrence or feedback into sequence modeling \citep{dai2019transformerxl,fan2020feedbacktransformer,bulatov2022recurrentmemorytransformer,hutchins2022blockrecurrent,hwang2024transformerfam}.
These approaches span a variety of feedback constructions and are typically presented in architecture-specific terms.
Our contribution is complementary but mathematically different: we propose a routing-induced systems perspective that separates how context produces routing/mixing coefficients from how those coefficients are composed over time, and we use this lens to relate input-dependent routing directly to long-context sensitivity and memory-decay behavior.

Our contributions are:
\begin{itemize}
  \item \textbf{Architecture.} We propose the Sessa sequence mixer, integrating attention into the recurrent feedback pathway under an otherwise standard decoder macro-architecture.
  \item \textbf{Memory.} We characterize long-range sensitivity of Sessa and identify a heavy-tail memory regime in which the feedback solve induces a \textbf{power-law influence tail in the lag $\ell$ of order $O(\ell^{-\betatail})$ with $0<\betatail<1$}. In this diffuse, low-separation routing regime, attenuation is asymptotically slower than the exponential forgetting exhibited by many stable or contractive SSM regimes, and it mitigates inverse-support dilution effects under the stated assumptions (Section~\ref{sec:theory_memory}; Theorem~\ref{thm:poly_decay_main}).
  \item \textbf{Selective retrieval.} In the matched theoretical regime, we show that deep Sessa realizes flexible selective retrieval profiles, including non-decaying ones, whereas diffuse fixed-depth Transformers and failed-freeze-time fixed-depth Mamba do not (Section~\ref{sec:theory_uniform_finite_horizon_retrieval}; Theorem~\ref{thm:flexible_route_B_sessa}; Proposition~\ref{prop:comparison_class_impossibility_same_regime}).
  \item \textbf{Empirics.} Under matched architectures and training budgets, Sessa achieves the strongest performance on our long-context benchmarks while remaining competitive on short-context language modeling.
\end{itemize}

We additionally prove a universal approximation result for a broad class of causal sequence mappings in Appendix~\ref{app:sessa_uat} (Theorem~\ref{thm:sessa_uap_main}).

\section{Background}
\label{sec:background}

We separate two largely independent aspects of causal mixers:
\begin{enumerate}[label=(\roman*), leftmargin=*, nosep]
\item how routing/mixing coefficients are produced from context, and
\item whether information is accessed via a single read or accumulated through feedback.
\end{enumerate}

\paragraph*{Terminology}
We use \emph{system} to refer to the memory mechanism (direct-read or feedback).
We use \emph{routing} to refer to the coefficients that specify how information flows over time
for example attention weights $\alphadrv$, the induced feedback matrix $\Bfb$, or the transition operators in a recurrence.
Routing is the collection of coefficients, meaning weights or operators, that determine information flow over time.
The system determines whether routing is applied once (direct-read) or repeatedly composed via feedback.
 
\subsection{Direct-read and feedback systems}
\label{sec:directread_feedback_ops}

We model a broad class of sequence mixers by expressing each output as a mixture of a chosen stream $u_t$
with coefficients that may depend on the available context $x_{0:t}$.

\begin{definition}[Direct-read variable-support system]
\label{def:directread_onehop}
We say that $\mathcal{F}$ is a direct-read system with respect to a chosen stream $u_t$ if, for every $t$,
\begin{equation}
y_t \;=\; \sum_{\tau\in S_t} K_{t,\tau}(x_{0:t})\,u_\tau,
\qquad S_t \subseteq \{0,\dots,t\},
\end{equation}
so each $y_t$ is produced by a single input-addressed read, i.e., a mixture over the visible index set $S_t$.
If $|S_t|$ varies with $t$, we call the system \emph{variable-support}.
If there exists $W\ge 1$ such that $K_{t,\tau}\equiv 0$ whenever $t-\tau\ge W$, equivalently, $S_t\subseteq\{\max(0,t-W+1),\dots,t\}$, we call it \emph{bounded-support direct-read}.
\end{definition}

\begin{remark}[Kernel representations alone do not distinguish direct-read or feedback]
On any finite horizon $\Tctx$, any causal linear map admits a lower-triangular kernel representation \citep{kalman1960new,antsaklis2006linear}.
$y_t=\sum_{\tau\le t}K_{t,\tau}u_\tau$, so kernel form alone does not identify whether influence is produced by a single read or by an internal recurrence.
Here, direct-read refers to the computation graph: $y_t$ is formed by one read/mix over a visible set, without repeated composition of the same mixing primitive inside the layer.
\end{remark}

\paragraph{Dimensions.}
$u_\tau\in\mathbb{R}^D$, $y_t\in\mathbb{R}^D$, and $K_{t,\tau}(x_{0:t})$ is a linear map of the appropriate shape.

In contrast, models with an explicit state and feedback naturally take a feedback form.

\begin{definition}[Feedback system: state-space or operator form]
\label{def:feedback_multihop}
We say that $\mathcal{G}$ is a feedback system with respect to a chosen stream $u_t$ if there exist
states $h_t$ in a possibly time-varying state space $\mathcal H_t$ such that, for each $t\ge 0$,
\begin{equation}
\text{with, e.g., }h_{-1}=0\text{,}\qquad
h_t \;=\; \Assmt{t}(x_{0:t})\,h_{t-1} \;+\; \Bssmt{t}(x_{0:t})\,u_t,
\qquad
y_t \;=\; \Cssmt{t}(x_{0:t})\,h_t \;+\; \Dssmt{t}(x_{0:t})\,u_t.
\end{equation}
The recurrence composes the routing over time, so $y_t$ can depend on arbitrarily old inputs even when each update is local in $h_{t-1}$.
\end{definition}

\begin{remark}[One-hop and multi-hop routing]
We view routing as propagation on a directed acyclic graph (DAG) over time indices induced by the mixing coefficients.
Fix a horizon $\Tctx$ and nodes $\{0,\dots,\Tctx-1\}$.

\begin{figure}[H]
  \centering
  \begin{minipage}{0.95\linewidth}
    \centering
    \includegraphics[width=\linewidth,height=0.40\textheight,keepaspectratio]{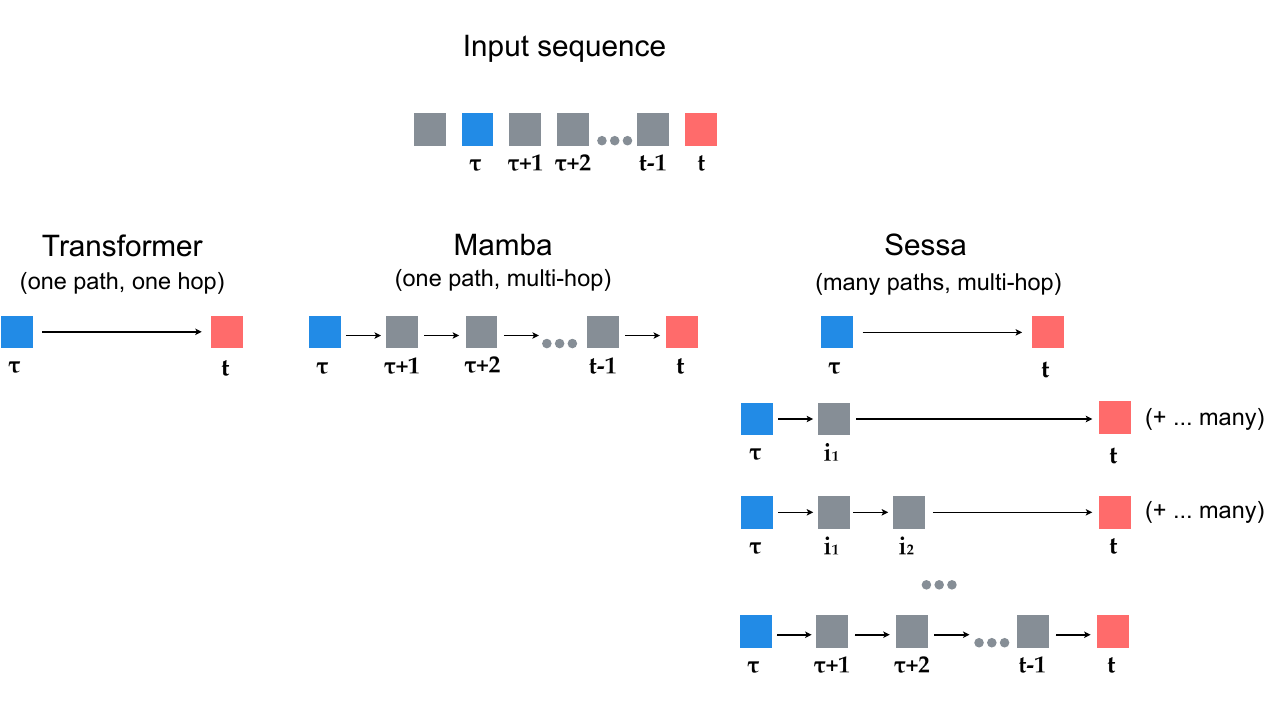}

    \captionsetup{
      format=plain,
      justification=raggedright,
      singlelinecheck=false,
      labelsep=colon,
      margin={2.2em,0em}
    }

    \caption{One-hop and multi-hop temporal routing within a single mixer layer.\newline
    \textbf{Transformer:} influence from $\tau$ to $t$ follows a single direct edge (one-hop).\newline
    \textbf{Mamba:} influence from $\tau$ to $t$ follows the chain $\tau\to\cdots\to t$ (multi-hop along a single path).\newline
    \textbf{Sessa:} influence from $\tau$ to $t$ aggregates over many paths with varying hop counts (multi-hop over many paths).}
    \label{fig:architecture_comparison}
  \end{minipage}
\end{figure}

\par\noindent\emph{Direct-read (one-hop).}
A direct-read system forms $y_t$ by a single read from a visible set $S_t$ using coefficients $K_{t,\tau}$:
in the routing graph, this corresponds to using only direct edges $\tau\to t$.
Influence from $\tau$ reaches $t$ in one routing step.

\par\noindent\emph{Feedback (multi-hop).}
A feedback mechanism can apply routing repeatedly through an internal state or solve, allowing influence from $\tau$
to reach $t$ through paths with intermediate nodes. This repeated composition is what we call multi-hop routing.
\end{remark}

The classical finite-dimensional state-space case corresponds to $\mathcal H_t=\mathbb R^{\Nstate}$ with fixed $\Nstate$ for all $t$.
Structured SSM layers (e.g., S4/S4D and Mamba) are instances of this special case.

\paragraph*{Hop counts in the solve}
Sessa's mixer output $s$ is defined by a causal lower-triangular solve
\begin{equation}
(I-\Bfb)\,s=f,
\qquad \BfbEnt{t}{j}=0 \ \text{for } j\ge t,
\end{equation}
On any finite horizon $\Tctx$, $\Bfb$ is strictly lower-triangular and hence nilpotent ($\Bfb^{\Tctx}=0$) \citep{horn2012matrix}. Hence,
\begin{equation}
(I-\Bfb)^{-1}=\sum_{k=0}^{\Tctx-1}\Bfb^k,
\qquad\text{and}\qquad
s=\sum_{k=0}^{\Tctx-1}\Bfb^k f.
\label{eq:neumann_finite}
\end{equation}
Each term $\Bfb^k f$ corresponds to routing through $k$ feedback steps, a $k$-hop contribution.
Equivalently, for indices $\tau\le t$,
\begin{equation}
(\Bfb^k)_{t,\tau}
=\!\!\!\sum_{\tau=i_0< i_1<\cdots<i_k=t}\ \prod_{r=1}^{k} \BfbEnt{i_r}{i_{r-1}},
\qquad k\ge 1,
\end{equation}
which is a sum over all length-$k$ directed paths from $\tau$ to $t$ in the feedback-induced routing graph.
This explicit path expansion is the mechanism behind heavy-tail regimes analyzed later:
even if individual edges are small under diffuse routing, the number of admissible paths grows with lag,
and the solve aggregates contributions across all hop counts.

\subsection{Self-attention as direct-read}
\label{sec:attn_directread_onehop}

Standard causal self-attention fits Definition~\ref{def:directread_onehop} when the mixed stream is the sequence of value vectors.
At position $t$, over a visible index set $\VisSet{t}\subseteq\{0,\dots,t\}$:
\begin{equation}
y_t=\sum_{j\in \VisSet{t}}\alphadrv_{t,j}\, v_j,
\qquad
\alphadrv_{t,j}=
\frac{\exp\!\left(\sigk\,q_t^\top k_j\right)}
{\sum_{i\in \VisSet{t}}\exp\!\left(\sigk\,q_t^\top k_i\right)},
\end{equation}
with $q_t=W_Q x_t$, $k_j=W_K x_j$, and $v_j=W_V x_j$.

\begin{lemma}[Self-attention is a direct-read system in $V$]
\label{lem:attn_directread_onehop}
At each position $t$, self-attention computes $y_t$ by a single input-addressed read from the visible set $\VisSet{t}$,
mixing the value vectors $(v_j)_{j\in \VisSet{t}}$ with context-dependent weights $\alphadrv_{t,j}$.
\end{lemma}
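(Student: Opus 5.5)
The plan is to verify Definition~\ref{def:directread_onehop} directly, with the value stream as the chosen stream, $u_\tau := v_\tau = W_V x_\tau$. I will take the index set to be $S_t := \VisSet{t} \subseteq \{0,\dots,t\}$. The kernel will be $K_{t,\tau}(x_{0:t}) := \alphadrv_{t,\tau}\, I_D$ for $\tau\in\VisSet{t}$, a scalar multiple of the identity on $\R^D$, and $K_{t,\tau}:=0$ otherwise. With these choices the attention formula
\[
y_t=\sum_{j\in\VisSet{t}}\alphadrv_{t,j}v_j
\]
is literally of the form $y_t=\sum_{\tau\in S_t}K_{t,\tau}(x_{0:t})\,u_\tau$, so the identity of outputs is immediate.

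The only substantive check is that the coefficients depend only on the available context $x_{0:t}$, and do so in the stated input-addressed way. From the softmax expression, $\alphadrv_{t,j}$ is a function of $q_t=W_Q x_t$ and of the keys $k_i=W_K x_i$ for $i\in\VisSet{t}$. Since $\VisSet{t}\subseteq\{0,\dots,t\}$, every one of these is a function of $x_{0:t}$. This is where causality of the visible set is used. It also shows that the weights are genuinely context-dependent, which is the "input-addressed" part of the claim. The normalizer is a sum of exponentials, so it is strictly positive and the weights are well defined. They are nonnegative and sum to one, so the read is a convex mixture of the $v_j$.

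Finally, I will address the "single read" aspect in the computation-graph sense of the remark following Definition~\ref{def:directread_onehop}. The weights $\alphadrv_{t,\cdot}$ are computed from $x$ alone, never from previously produced outputs $y_{t'}$, so the mixing primitive is applied once per position with no composition. In the routing-graph language, only direct edges $j\to t$ with $j\in\VisSet{t}$ appear. If $|\VisSet{t}|$ varies with $t$, as in full causal attention where $\VisSet{t}=\{0,\dots,t\}$, the system is variable-support. A sliding window $\VisSet{t}=\{\max(0,t-W+1),\dots,t\}$ gives the bounded-support case.

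I expect no real obstacle here. The only point needing care is to state explicitly that $K_{t,\tau}$ is matrix-valued, the scalar $\alphadrv_{t,\tau}$ times $I_D$, so that it matches the "Dimensions" convention, and that the absence of feedback follows because the $\alpha$'s do not depend on any $y$.
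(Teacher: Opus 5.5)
Your proposal is correct and follows the paper's implicit argument. The paper gives no separate proof; it simply observes that causal self-attention instantiates Definition~\ref{def:directread_onehop} with stream $u=v$, support $S_t=\VisSet{t}$, and kernel $K_{t,\tau}=\alphadrv_{t,\tau}I_D$ depending only on $x_{0:t}$, which is the same kernel that reappears as $J^{\mathrm{attn}}_{t,\tau}$ in \eqref{eq:def_J_attn}. Your remarks on the computation-graph meaning of ``single read'' and on variable versus bounded support are consistent with the paper. The only tacit assumption is $\VisSet{t}\neq\emptyset$, needed for the softmax normalizer to be defined.
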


Full-prefix, windowed, and sparse attention all fit the same direct-read template through the choice of visible set $\VisSet{t}$ \citep{child2019sparse,beltagy2020longformer,zaheer2020bigbird,ding2023longnet}.

\subsection{State-space models as feedback}
\label{sec:ssm_feedback_multihop}

Structured state-space models (SSMs) implement sequence mixing through a latent state and a (possibly selective) recurrence.
A standard form is
\begin{equation}
h_t = \Assm\,h_{t-1} + \Bssm\,x_t,
\qquad
y_t = \Cssm\,h_t,
\end{equation}
where $\Assm\in\mathbb{R}^{\Nstate\times\Nstate}$ encodes temporal dynamics and is typically constrained
(diagonal/structured/low-rank) for efficiency.

Modern language-oriented SSMs such as Mamba often employ input-dependent recurrences that fit Definition~\ref{def:feedback_multihop}:
\begin{equation}
h_t \;=\; \Assmt{t}(x_{0:t})\,h_{t-1} \;+\; \Bssmt{t}(x_{0:t})\,x_t,
\qquad
y_t \;=\; \Cssmt{t}(x_{0:t})\,h_t.
\end{equation}

In Mamba, the discrete transition commonly takes the form
\[
\Assmt{t}=\operatorname{diag}(\exp(-\lambda_n \Delta_t)),
\]
so a lag-$\ell$ memory factor contains terms of the form
\[
\exp\!\Big(-\lambda_n\sum_{r=t-\ell+1}^{t}\Delta_r\Big).
\]
Accordingly, long-range memory is preserved only when the model can create a long \emph{preserve corridor}
of steps with $\Delta_r\approx 0$.

This suggests the matched comparison principle used later in the paper.
For attention, broken sharp selection means that softmax mass cannot concentrate on a small set of indices.
For Mamba, the analogous failure mode is \emph{failed freeze time}: the model cannot sustain a long preserve corridor on the relevant interval.
For the three-way comparison in this paper, we say that a Mamba layer is in a \emph{failed freeze-time regime} on an input set of interest if there exists $c_\Delta>0$ such that for every relevant pair $\tau<t$,
\[
\sum_{r=\tau+1}^{t}\Delta_r \;\ge\; c_\Delta (t-\tau).
\]
Equivalently, the average discretization step along every relevant interval is bounded below by a positive constant.
In Mamba this implies
\[
\exp\!\Big(-\lambda_n\sum_{r=\tau+1}^{t}\Delta_r\Big)
\;\le\;
e^{-\lambda_n c_\Delta (t-\tau)},
\]
so long-range influence is exponentially small in the lag.
This is the Mamba counterpart of diffuse attention used in the matched comparisons below:
in attention, the selector cannot concentrate mass on a few indices;
in Mamba, the model cannot maintain $\Delta_r\approx 0$ on a long relevant corridor.

\section{Model Architecture}
\label{sec:model_arch}

We instantiate the one-hop and multi-hop routing viewpoint of Section~\ref{sec:directread_feedback_ops} with a concrete layer, Sessa.
Sessa uses a single gated-MLP-style block that wraps a recurrent mixer, rather than alternating separate attention and MLP blocks.
The mixer itself combines (i) a standard causal forward-attention signal and (ii) a feedback term that mixes past mixer outputs.

\noindent The official implementation is available at \url{https://github.com/LibratioAI/sessa}.

\paragraph*{Notation.}
Inputs and outputs have shape $x,y\in\mathbb{R}^{\Bbatch\times \Tctx\times D}$ with $t\in\{0,\dots,\Tctx-1\}$.
We use an internal key and query width $d_k$ and scale $\sigk=d_k^{-1/2}$.
All definitions apply per batch element; we omit the batch index when clear.

\subsection{Sessa block}
\label{sec:sessa_block}

Given $x\in\mathbb{R}^{\Bbatch\times \Tctx\times D}$, the block applies pre-norm, a gated projection, the mixer, and a residual connection:
\begin{align}
\xLN &= \LN(x), \label{eq:sessa_ln}\\
(a,g) &= \mathrm{split}\!\big(\xLN W^{\mathrm{in}}+b^{\mathrm{in}}\big),
\qquad a,g\in\mathbb{R}^{\Bbatch\times \Tctx\times D}, \label{eq:sessa_split}\\
\bar a &= \mathrm{GELU}(a), \label{eq:sessa_act}\\
s &= \mathrm{Mixer}(\bar a)\in\mathbb{R}^{\Bbatch\times \Tctx\times D}, \label{eq:sessa_mixer_call}\\
y &= x + \big((s\odot g)W^{\mathrm{out}} + b^{\mathrm{out}}\big). \label{eq:sessa_residual}
\end{align}
We use Layer Normalization \citep{ba2016layernorm} and the GELU nonlinearity \citep{hendrycks2016gelu}.
Here $W^{\mathrm{in}}\in\mathbb{R}^{D\times 2D}$ and $W^{\mathrm{out}}\in\mathbb{R}^{D\times D}$.
The elementwise gate $g$ plays the usual role of gated MLP variants \citep{hua2022tqlt,shazeer2020glu}: it modulates the mixer output before the residual add.

\begin{figure}[H]
  \centering
  \includegraphics[width=0.95\linewidth,height=0.35\textheight,keepaspectratio]{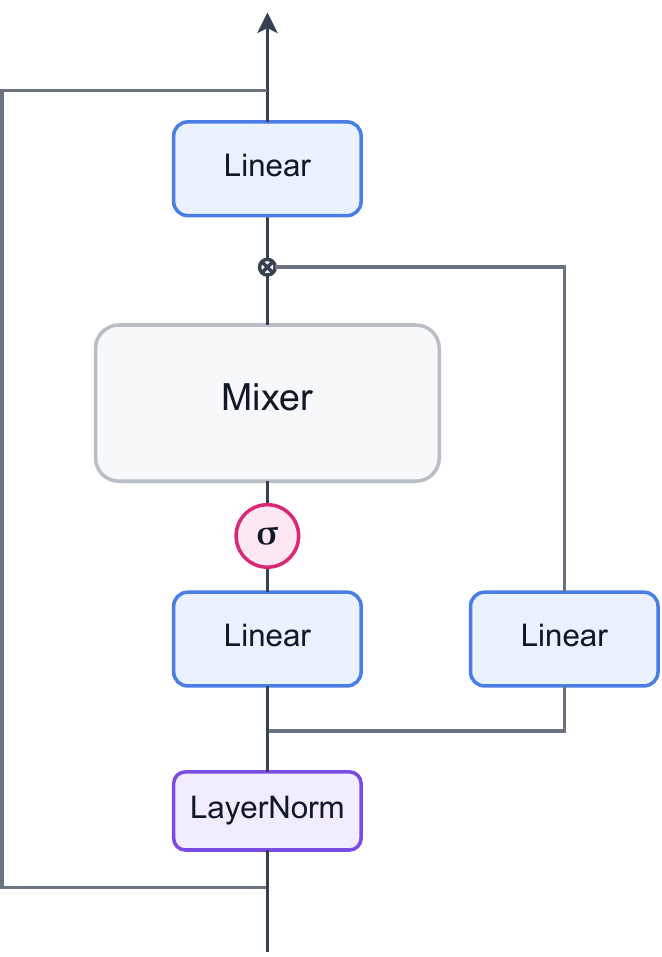}
  \caption{Sessa Layer.}
  \label{fig:sessa_block}
\end{figure}

\subsection{Sessa mixer}
\label{sec:sessa_mixer}

The mixer maps $\bar a\in\mathbb{R}^{\Bbatch\times \Tctx\times D}$ to $s\in\mathbb{R}^{\Bbatch\times \Tctx\times D}$.
It uses two causal attention mechanisms:
(i) a forward causal attention that produces a forward signal $f_t\in\mathbb{R}^D$, and
(ii) a feedback attention that produces weights over the strict past, used inside a causal feedback solve.

\paragraph*{Projections.}
At each time $t$, we form forward queries, keys, and values, as well as feedback queries and keys, using standard linear projections:
\begin{equation}
q^f_t=\bar a_t W_{Qf},\quad k^f_t=\bar a_t W_{Kf},\quad v_t=\bar a_t W_V,\qquad
q^b_t=\bar a_t W_{Qb},\quad k^b_t=\bar a_t W_{Kb},
\end{equation}
where $q^f,k^f,q^b,k^b\in\mathbb{R}^{d_k}$ and $v_t\in\mathbb{R}^{D}$.
We apply RoPE to the forward pair $(q^f,k^f)$.
We use rotary position embeddings in the forward branch \citep{su2021roformer}.

\paragraph*{Forward attention.}
Define causal weights over $j\le t$:
\begin{equation}
\alphadrv_{t,j}=\mathrm{softmax}_{0\le j\le t}\!\Big(\sigk\langle \mathrm{RoPE}(q^f_t), \mathrm{RoPE}(k^f_j)\rangle\Big),
\end{equation}
and the forward signal
\begin{equation}
f_t=\sum_{j=0}^{t} \alphadrv_{t,j}\,v_j\in\mathbb{R}^{D}.
\label{eq:def_f}
\end{equation}
This is a one-hop mixture of values $(v_j)_{j\le t}$ over a finite visible set.
\paragraph*{Feedback attention.}
Define feedback weights over the strict past $j<t$:
\begin{equation}
\alphafb_{t,j}=
\begin{cases}
\mathrm{softmax}_{0\le j\le t-1}\!\Big(\sigk\langle q^b_t, k^b_j\rangle\Big), & t\ge 1,\ j<t,\\
0, & j\ge t,
\end{cases}
\qquad \alphafb_{0,j}=0\ \ \forall j.
\label{eq:def_alpha}
\end{equation}

\paragraph*{Feedback gain.}
We modulate the feedback with a scalar gain $\gamma_t\in(-1,1)$:
\begin{equation}
\gamma_t=\tanh\!\big(\langle \bar a_t,w^\gamma\rangle + b^\gamma\big).
\label{eq:def_gamma}
\end{equation}
The bound controls feedback magnitude: since $\alphafb_{t,\cdot}$ is a convex distribution over $j<t$, the feedback term is a convex combination of past states scaled by $|\gamma_t|<1$.

\paragraph*{Feedback routing matrix.}
\begin{equation}
\BfbEnt{t}{j}=\gamma_t\,\alphafb_{t,j},\qquad \BfbEnt{t}{j}=0\ \text{for } j\ge t.
\label{eq:def_B}
\end{equation}
\paragraph*{Scalar routing and feature-wise solve.}
Here $\Bfb$ is a scalar strictly lower-triangular routing matrix (each $\BfbEnt{t}{j}\in\mathbb R$).
The solve $(I-\Bfb)s=f$ is applied independently to each feature dimension of $s,f\in\mathbb R^{\Tctx\times D}$:
for every $d\in\{1,\dots,D\}$,
\[
(I-\Bfb)\,s_{:,d}=f_{:,d},
\]
\noindent In vectorized form,
\[
\big(I_D\otimes (I-\Bfb)\big)\,\mathrm{vec}(s)=\mathrm{vec}(f).
\]
The resulting recurrence \eqref{eq:mixer_recurrence} therefore uses scalar--vector multiplication
($\BfbEnt{t}{j}\,s_j$ with $\BfbEnt{t}{j}\in\mathbb R$ and $s_j\in\mathbb R^D$).

\paragraph*{Lower-triangular solve.}
The mixer output $s\in\mathbb{R}^{\Tctx\times D}$ is the unique solution of
\begin{equation}
(I-\Bfb)s=f
\label{eq:triangular_system}
\end{equation}
which is a unit-lower-triangular solve with $D$ right-hand sides.
This can be implemented with optimized triangular-solve routines (e.g., batched \texttt{solve\_triangular}/TRSM kernels), avoiding explicit formation of $(I-\Bfb)^{-1}$.
Thus, in the dense full-prefix formulation, the mixer remains quadratic in $\Tctx$.
Equivalently, forward substitution gives the explicit recurrence
\begin{align}
s_0 &= f_0,\\
s_t &= f_t + \sum_{j=0}^{t-1}\BfbEnt{t}{j}\,s_j
    \;=\; f_t + \gamma_t\sum_{j=0}^{t-1}\alphafb_{t,j}\,s_j,\qquad t\ge 1.
\label{eq:mixer_recurrence}
\end{align}

\begin{remark}[Multi-hop routing view: exact on finite horizons]
Since $\Bfb$ is strictly lower-triangular on a finite horizon $\Tctx$, it is nilpotent ($\Bfb^{\Tctx}=0$) and therefore
\[
(I-\Bfb)^{-1}=\sum_{k=0}^{\Tctx-1}\Bfb^k \quad\text{and hence}\quad s=\sum_{k=0}^{\Tctx-1}\Bfb^k f.
\]
The term $\Bfb^k f$ aggregates contributions that traverse $k$ internal routing steps through the feedback operator.
Thus, unlike self-attention's one-hop read, the solve realizes multi-hop routing, which can produce the heavy-tail influence regimes analyzed in Section~\ref{sec:theory_memory}.
\end{remark}

\subsection{Positional encoding}
\label{sec:sessa_posenc}

\paragraph*{RoPE in the forward path.}
In the forward attention \eqref{eq:def_f} we apply RoPE to $(q^f,k^f)$, following common practice in
decoder-only Transformers \citep{touvron2023llama,black2022gptneox}. This injects relative positional information into the attention logits while
preserving causal masking.

\paragraph*{No positional encoding in feedback.}
We do not apply RoPE, or any other positional encoding, to the feedback attention \eqref{eq:def_alpha}.
The feedback path already induces an absolute time direction: the strictly lower-triangular
feedback operator \eqref{eq:def_B} and the causal solve \eqref{eq:triangular_system} correspond to a forward
substitution recurrence \eqref{eq:mixer_recurrence}, whose output at time $t$ depends on an iterated aggregation
of the strict past. This temporal asymmetry can generate position-dependent signals even when the mixer input is
time-constant.

Corollary~\ref{cor:pos_code_padding}, proved in Appendix~\ref{sec:uat_pos_code}, shows that a single Sessa block can
produce a deterministic, position-dependent additive offset:
there exist parameters and vectors $(p_t)_{t=0}^{\Tctx-1}\subset\mathbb R^D$ such that for all inputs $x$ in any
fixed compact set $\mathcal D\subset\mathbb R^{\Bbatch\times \Tctx\times D}$,
\[
y_t = x_t + p_t,\qquad t=0,\dots,\Tctx-1.
\]
Moreover, these offsets can be chosen separated on $\mathcal D$ in the following sense:
there exist a unit direction $u\in\mathbb R^D$ and a scale $\lambda>0$ such that $p_t=c_t(\lambda u)$ with $c_t$ pairwise
distinct and the scalar ranges $\{\langle x_t+p_t,u\rangle:\ x\in\mathcal D\}$ are pairwise disjoint over $t$.
By Corollary~\ref{cor:pos_code_recovery}, the position index $t$ is recoverable by a continuous token-wise map on the set of shifted tokens,
so the feedback mechanism can supply an absolute positional signal internally.

\section{Theory}
\label{sec:theory}

This section establishes four properties of Sessa:
\begin{enumerate}[label=(\roman*), leftmargin=*, nosep]
\item stability of the feedback solve,
\item long-range memory, including flexible selective retrieval,
\item internal positional encoding,
\item universal approximation.
\end{enumerate}

\begin{remark}[LayerNorm]
All stability and Jacobian statements in this section are stated for the formulation with $\Norm=\Id$.
For the pre-norm LayerNorm extension relevant to universal approximation, we assume an explicit $\varepsilon>0$
and use the corresponding Lipschitz bounds for the normalization map; see Appendix~\ref{sec:uat_with_layernorm}.
\end{remark}

\subsection{Stability of the feedback solve}
\label{sec:theory_bibo}

We isolate the operation in Sessa that induces multi-hop behavior: the causal lower-triangular solve
\begin{equation}
(I-\Bfb(x))\,s \;=\; f(x),
\qquad
\BfbEnt{t}{j}(x)=\gamma_t(x)\,\alphafb_{t,j}(x),
\qquad
\BfbEnt{t}{j}(x)=0\ \text{for } j\ge t,
\label{eq:sessa_solve_theory}
\end{equation}
where $\alphafb_{t,\cdot}(x)$ is a convex distribution over the strict past, $j<t$, produced by the feedback attention,
and $\gamma_t(x)\in(-1,1)$ is a bounded scalar gain.
The quantity $f(x)$ is the forward aggregation defined in Section~\ref{sec:model_arch}.
\paragraph*{Scalar feedback matrix}
Throughout the stability analysis, $\Bfb(x)\in\mathbb R^{\Tctx\times \Tctx}$ is scalar-valued:
each entry $\BfbEnt{t}{j}(x)\in\mathbb R$.
The solve acts feature-wise on $s,f\in\mathbb R^{\Tctx\times r}$.
In vectorized form, $(I_r\otimes (I-\Bfb))\mathrm{vec}(s)=\mathrm{vec}(f)$.

\paragraph*{Norms}
For a finite or infinite token sequence $u=(u_t)$ with $u_t\in\mathbb R^r$, define
\[
\|u\|_{\infty,2}:=\sup_t \|u_t\|_2,
\]
and for a finite tensor $U\in\mathbb R^{\Tctx\times r}$, define $\|U\|_{\infty,2}:=\max_{0\le t\le \Tctx-1}\|U_t\|_2$.

\begin{assumption}[Uniform row contraction on the feedback margin]
\label{ass:row_contraction_main}
For every radius $R\ge 0$ there exists $\rho(R)\in[0,1)$ such that for all inputs $x$ with
$\|x\|_{\infty,2}\le R$,
\begin{equation}
\sup_t |\gamma_t(x)| \;\le\; \rho(R) \;<\; 1.
\label{eq:gamma_margin}
\end{equation}
\end{assumption}

Since each $\alphafb_{t,\cdot}(x)$ is a convex distribution over $j<t$,
Assumption~\ref{ass:row_contraction_main} implies the row-sum bound
\begin{equation}
\sup_{t\ge 1} \sum_{j<t}|\BfbEnt{t}{j}(x)|
\;\le\; \rho(R) \;<\; 1.
\label{eq:star_row_sum}
\end{equation}

\begin{lemma}[Causal lower-triangular solve is bounded on $\ell_\infty$]
\label{lem:triangular_linf}
Let $\Bfb$ be strictly lower-triangular, possibly on an infinite horizon, and define
$(\Bfb s)_t:=\sum_{j<t}\BfbEnt{t}{j}s_j$.
If $\sup_t\sum_{j<t}|\BfbEnt{t}{j}|\le \rho<1$, then for every $f\in\ell_\infty(\mathbb N,\mathbb R^r)$ there exists a unique
$s\in\ell_\infty(\mathbb N,\mathbb R^r)$ solving $(I-\Bfb)s=f$, and
\[
\|s\|_{\infty,2}\;\le\;\frac{1}{1-\rho}\,\|f\|_{\infty,2}.
\]
\end{lemma}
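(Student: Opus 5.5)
The key observation is that strict lower-triangularity makes $(I-\Bfb)s=f$ solvable by forward substitution, even on an infinite horizon. We define $s$ recursively by $s_0=f_0$ and
\[
s_t=f_t+\sum_{j<t}\BfbEnt{t}{j}\,s_j,\qquad t\ge 1.
\]
Each $s_t$ involves only the finitely many earlier $s_j$, so this determines a unique sequence $s\in(\mathbb R^r)^{\mathbb N}$ satisfying the equation componentwise. This settles existence. It also gives uniqueness in a strong sense: uniqueness holds among \emph{all} sequences, not just bounded ones, since any solution must satisfy the same recursion by induction on $t$.

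It remains to show that $s\in\ell_\infty$ and to prove the bound. We will prove by strong induction that
\[
\|s_t\|_2\le M:=\frac{\|f\|_{\infty,2}}{1-\rho}\qquad\text{for all } t.
\]
The base case holds because $\|s_0\|_2=\|f_0\|_2\le\|f\|_{\infty,2}\le M$. For the inductive step, the triangle inequality together with the row-sum hypothesis gives
\[
\|s_t\|_2\le \|f_t\|_2+\sum_{j<t}|\BfbEnt{t}{j}|\,\|s_j\|_2\le \|f\|_{\infty,2}+\rho M.
\]
The right-hand side equals $M$, because $M$ is exactly the fixed point of $m\mapsto \|f\|_{\infty,2}+\rho m$. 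Hence $\sup_t\|s_t\|_2\le M$, which is the claimed estimate.

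There is an alternative route, which also confirms the operator viewpoint. One checks that $\Bfb$ acts on $\ell_\infty(\mathbb N,\mathbb R^r)$ with operator norm at most $\rho$, using the same triangle-inequality estimate. Since $\rho<1$, the Neumann series $\sum_{k\ge0}\Bfb^k$ converges in operator norm to $(I-\Bfb)^{-1}$, with norm at most $1/(1-\rho)$. This extends the finite-horizon identity \eqref{eq:neumann_finite}.

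The only delicate point is the infinite horizon. There $\Bfb$ is no longer nilpotent, so existence cannot be read off from \eqref{eq:neumann_finite}; it has to come from forward substitution or from Neumann-series convergence. We also need each row sum $\sum_{j<t}$ to be finite, which holds automatically here since every row has only $t$ entries. The induction is insensitive to the horizon, so the inductive argument is the one I would write out in full.
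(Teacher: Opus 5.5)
Your proposal is correct and follows essentially the same route as the paper: existence and uniqueness via forward substitution, then an induction giving $\|s_t\|_2\le \|f\|_{\infty,2}+\rho M=M$ with $M=\|f\|_{\infty,2}/(1-\rho)$. The only difference is cosmetic: the paper runs the induction on the partial maxima $\max_{k\le t}\|s_k\|_2$ instead of using strong induction on $\|s_t\|_2$, and your Neumann-series remark is an extra aside the paper does not use.
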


\begin{proof}[Proof sketch]
Forward substitution gives existence and uniqueness. The bound follows by a standard induction on the partial maxima
$\max_{k\le t}\|s_k\|_2$ using the row-sum estimate. See Appendix~\ref{app:proof_triangular_linf}.
\end{proof}

\begin{proposition}[One-block stability bound]
\label{prop:one_block_bibo_main}
Fix a Sessa block $G$ acting on finite or infinite sequences with the feedback solve \eqref{eq:sessa_solve_theory}.
Assume moreover that all tokenwise affine maps appearing in the block
(in particular, the output projection and the residual affine terms)
are fixed and have finite operator norms and finite bias magnitudes.
Assume that for every $R\ge 0$ there exist finite constants $F_R,\GR<\infty$ such that on the ball $\|x\|_{\infty,2}\le R$,
\[
\|f(x)\|_{\infty,2}\le F_R,
\qquad
\|g(x)\|_{\infty,2}\le \GR,
\qquad
\sup_t|\gamma_t(x)|\le \rho(R)<1,
\]
Here $g(x)$ denotes the tokenwise gate, the Hadamard multiplier applied to $s$ before the output projection.
Then there exists $C_R<\infty$ such that $\|G(x)\|_{\infty,2}\le C_R$ for all $\|x\|_{\infty,2}\le R$.
In particular, $G$ is BIBO-stable on $\ell_\infty(\mathbb N,\mathbb R^D)$.
\end{proposition}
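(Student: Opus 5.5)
The argument chains bounds through the block, \eqref{eq:sessa_ln}--\eqref{eq:sessa_residual}, in order. Fix $R\ge 0$ and an input $x$ with $\|x\|_{\infty,2}\le R$.

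The first step is the feedback solve. By Assumption~\ref{ass:row_contraction_main}, the row-sum bound \eqref{eq:star_row_sum} holds with $\rho=\rho(R)<1$, uniformly in $t$ and in $x$ on the ball. Lemma~\ref{lem:triangular_linf} then applies feature-wise: the solve is the same scalar operator acting on each column, and the proof via partial maxima works directly with $\|\cdot\|_2$ on $\mathbb R^r$. It gives a unique bounded solution $s$ with
\[
\|s\|_{\infty,2}\le \frac{F_R}{1-\rho(R)}.
\]
On a finite horizon this is just forward substitution. On an infinite horizon, uniqueness in $\ell_\infty$ is what makes $s$ well defined.

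The second step is the gate and the output projection. For each $t$, use $\|s_t\odot g_t\|_2\le \|g_t\|_\infty\|s_t\|_2\le \|g_t\|_2\|s_t\|_2$. This gives
\[
\|s\odot g\|_{\infty,2}\le \frac{\GR F_R}{1-\rho(R)}.
\]
The output map is a fixed tokenwise affine map with finite operator norm $\|W^{\mathrm{out}}\|$ and bias magnitude $\|b^{\mathrm{out}}\|$. It therefore contributes at most $\|W^{\mathrm{out}}\|\,\GR F_R/(1-\rho(R))+\|b^{\mathrm{out}}\|$ per token.

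The residual term adds $\|x_t\|_2\le R$. Any other fixed residual affine terms are handled the same way, contributing operator norm times $R$ plus bias. Collecting the pieces gives
\[
C_R := R + \|W^{\mathrm{out}}\|\,\frac{\GR F_R}{1-\rho(R)} + \|b^{\mathrm{out}}\| + (\text{analogous affine constants}) < \infty,
\]
and this constant is independent of $t$, of $\Tctx$, and of the particular $x$ in the ball. Since every bounded input lies in some ball, $G$ maps $\ell_\infty(\mathbb N,\mathbb R^D)$ into itself with a bound depending only on $R$. That is BIBO stability.

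The only delicate point is applying Lemma~\ref{lem:triangular_linf} on infinite sequences. It needs the contraction margin $\rho(R)$ to be uniform over all $t$, which is exactly what the assumption states, so that the bound does not degrade with the horizon. It also needs $f(x)\in\ell_\infty$, which the hypothesis $\|f(x)\|_{\infty,2}\le F_R$ supplies. Everything else is routine composition of tokenwise bounds.
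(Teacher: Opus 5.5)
Your proposal is correct and follows the paper's argument essentially step for step: the $\ell_\infty$ bound on the triangular solve from Lemma~\ref{lem:triangular_linf} using the uniform row-sum margin $\rho(R)$, then the Hadamard estimate $\|s_t\odot g_t\|_2\le\|s_t\|_2\|g_t\|_2$, then the fixed tokenwise affine output and residual maps, which gives the same constant as Appendix Proposition~\ref{prop:one_block_bound}. One small point: on an infinite horizon, forward substitution already determines $s$ uniquely among all sequences, so $s$ is well defined without any appeal to uniqueness in $\ell_\infty$; what the lemma contributes is the certificate that this $s$ is bounded, with the stated constant.
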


\begin{proof}[Proof sketch]
By Lemma~\ref{lem:triangular_linf} and \eqref{eq:star_row_sum},
$\|s\|_{\infty,2}\le (1-\rho(R))^{-1}\|f\|_{\infty,2}$.
Then $\|s\odot g\|_{\infty,2}\le \|s\|_{\infty,2}\|g\|_{\infty,2}$.
Since bounded tokenwise affine maps send bounded sets to bounded sets,
the output projection together with the residual affine terms yields a ball-to-ball bound for $G$.
Appendix Proposition~\ref{prop:one_block_bound} strengthens this by giving an explicit ball-to-ball
constant in terms of matrix/operator norms and bias magnitudes; see Appendix~\ref{sec:sessa_bibo_infinite}.
\end{proof}

\subsection{Long-range memory}
\label{sec:theory_memory}

We compare long-range memory through Jacobian-based diagnostics that separate the memory mechanism from routing adaptation.
Let $y=G(x)$ denote the output of a causal mixer or block applied to an input token sequence
$x=(x_0,\dots,x_{\Tctx-1})$, and fix a source position $\tau\le t$ with lag
\[
\ell := t-\tau.
\]
Our analysis uses three related diagnostics.

\paragraph*{Diagnostics.}
\begin{enumerate}[label=(\roman*), leftmargin=*, nosep]
\item \emph{Fixed-routing influence Jacobians.}
We first freeze a realized routing pattern and differentiate only the induced linear map from an injected stream to the output.
This yields, for example,
\[
J^{\mathrm{attn}}=\frac{\partial y}{\partial v}\Big|_{\alphadrv},
\qquad
J^{\mathrm{sessa}}=\frac{\partial s}{\partial f}\Big|_{\Bfb},
\]
and the corresponding SSM impulse Jacobian \(J^{\mathrm{ssm}}\) induced by a realized sequence
\((\Assmt{t},\Bssmt{t},\Cssmt{t})\).
These quantities isolate the memory mechanism under a common realized routing regime.

\item \emph{End-to-end block Jacobians.}
We then return to the full input-dependent block and measure the actual sensitivity of output token $y_t$ to a past input token $x_\tau$:
\[
J^{\mathrm{e2e}}_{t,\tau}(x)
:=
\frac{\partial y_t(x)}{\partial x_\tau}.
\]
Unlike the fixed-routing Jacobians, these derivatives include both transport through the memory mechanism and the dependence of the routing coefficients on the input.
They are the relevant one-block quantities for comparing diffuse attention, failed-freeze-time Mamba, and Sessa under smooth-routing assumptions.

\item \emph{Scalar transport scores for deep retrieval.}
For selective retrieval we extract scalar scores from deep end-to-end Jacobians.
For a depth-$\Nlayer$ stack with hidden states
\[
h^{(0)}=x,\qquad h^{(1)},\dots,h^{(\Nlayer)},
\]
we write
\[
J^{\mathrm{e2e},(\Nlayer)}_{t,\tau}(x)
:=
\frac{\partial h_t^{(\Nlayer)}(x)}{\partial h_\tau^{(0)}(x)}.
\]
Later we evaluate these blocks against source and target probes to obtain scalar transport scores,
written generically as $\mathsf S$,
which are the quantities used in the selective-retrieval theorem.
\end{enumerate}

These diagnostics play complementary roles.
Fixed-routing Jacobians expose the structural difference between one-hop direct read, chain-structured feedback, and Sessa's many-path feedback solve.
End-to-end block Jacobians capture the actual behavior of the nonlinear input-dependent block.
Scalar transport scores are needed for the positive retrieval statements, since they let us compare source and distractor influence after composing end-to-end Jacobians across layers.

All decay statements in this subsection are expressed in the lag $\ell=t-\tau$, not in the context length $\Tctx$.

The key structural difference is that, for Sessa, the fixed-routing solve
\[
(I-\Bfb)^{-1}
\]
aggregates contributions over multiple hop counts and, in dense regimes, over many temporal paths.
This accumulation across hop counts and paths is the mechanism behind the polynomial tail analyzed below.

\subsubsection{Fixed-routing Jacobians}
\label{sec:fair_comparison}

We begin with realized routing patterns and isolate the induced memory operators.
Worst-case comparisons over all inputs and parameters are uninformative, since any model can suppress a token.
Instead, we compare the architectures within common diffuse-weight regimes by studying the corresponding fixed-routing influence operators.

\paragraph*{Attention value Jacobian}
For causal self-attention, for a given set of attention weights $\alphadrv_{t,\tau}$, the map from values to output is linear:
\[
y_t=\sum_{\tau\le t}\alphadrv_{t,\tau}v_\tau
\]
We define the value influence Jacobian
\begin{equation}
J^{\mathrm{attn}}_{t,\tau}:=\frac{\partial y_t}{\partial v_\tau}\Big|_{\alphadrv}=\alphadrv_{t,\tau}\,I_D.
\label{eq:def_J_attn}
\end{equation}

\paragraph*{Solve Jacobian}
In Sessa, for a given feedback matrix $\Bfb$, i.e., a given routing pattern inside the loop, the lower-triangular solve
\[
(I-\Bfb)s=f
\]
is linear in $f$. We define the solve influence Jacobian
\begin{equation}
J^{\mathrm{sessa}}:=\frac{\partial s}{\partial f}\Big|_{\Bfb}=(I-\Bfb)^{-1},
\qquad
J^{\mathrm{sessa}}_{t,\tau}=[(I-\Bfb)^{-1}]_{t,\tau}.
\label{eq:def_J_sessa}
\end{equation}
Because $\Bfb$ is scalar-valued, the solve acts identically on each feature dimension; equivalently, if \(f_t,s_t\in\mathbb R^{d_f}\), the full feature-block Jacobian is
\[
J^{\mathrm{sessa}}_{t,\tau} I_{d_f}.
\]
\paragraph*{SSM impulse Jacobian}
For a feedback recurrence $h_t=\Assmt{t} h_{t-1}+\Bssmt{t} u_t$, $y_t=\Cssmt{t} h_t$, given a realized sequence of transitions $(\Assmt{t},\Bssmt{t},\Cssmt{t})$,
the impulse influence from $u_\tau$ to $y_t$ is
\begin{equation}
J^{\mathrm{ssm}}_{t,\tau}
:=\Cssmt{t}\Big(\prod_{r=\tau+1}^{t}\Assmt{r}\Big)\Bssmt{\tau},
\qquad 0\le \tau\le t.
\label{eq:def_J_ssm}
\end{equation}
\noindent
\emph{Convention: time-ordered product.}
We interpret the matrix product in \eqref{eq:def_J_ssm} as the left-to-right time-unrolling
consistent with the recurrence $h_t=\Assmt{t}h_{t-1}+\cdots$:
\[
\prod_{r=\tau+1}^{t}\Assmt{r}
\;:=\;
\Assmt{t}\Assmt{t-1}\cdots \Assmt{\tau+1}.
\]
Equivalently, the product is time-ordered with later-time factors on the left.
For the empty product we use
\[
\prod_{r=t+1}^{t}(\cdot)\;:=\;I,
\]
so that the definition also covers the case $t=\tau$.

These Jacobians isolate the memory mechanism under a common routing regime.

\subsubsection{End-to-end Jacobians}
\label{sec:e2e_comparison}

\begin{definition}[End-to-end block Jacobian]\label{def:e2e_jac}
Let $y=G(x)$ denote the output of a single mixer/block $G$ applied to an input token sequence
$x\in(\mathbb R^D)^{\Tctx}$.
We define the end-to-end Jacobian blocks by
\[
J^{\mathrm{e2e}}_{t,\tau}(x):=\frac{\partial y_t(x)}{\partial x_\tau}\in\mathbb R^{D\times D}.
\]
For $\tau<t$, $J^{\mathrm{e2e}}_{t,\tau}(x)$ measures long-range influence without freezing routing.
\end{definition}

\begin{definition}[Diffuse attention regime]
\label{def:diffuse_regime}
We say that an attention mechanism is in a diffuse, low-separation regime on a horizon $\Tctx$ if, for each $t$,
its pre-softmax logits $\beth_{t,j}$ over the visible set satisfy a bounded spread
\[
\max_{j\in \VisSet{t}}\beth_{t,j}-\min_{j\in \VisSet{t}}\beth_{t,j}\le \Delta
\quad\text{for some finite }\Delta,
\]
uniformly over the inputs under consideration. In this regime, softmax weights are near-uniform: Appendix Lemma~\ref{lem:bounded_logits} implies that for full-prefix attention with $|\VisSet{t}|=t+1$,
\[
\alphadrv_{t,j}=\Theta(1/|\VisSet{t}|).
\]
\noindent
In particular, for full-prefix causal attention one has
\[
\VisSet{t}=\{0,\dots,t\},\qquad |\VisSet{t}|=t+1,
\]
whereas for strictly-lower attention one has
\[
\VisSet{t}=\{0,\dots,t-1\},\qquad |\VisSet{t}|=t\quad\text{for }t\ge 1.
\]
\end{definition}

We state diffuse bounds in terms of the visible-set size $|\VisSet{t}|$ to cover full-prefix and strict-past attention uniformly.

We assume diffuse attention rows $\alphadrv_{t,j}\le c_2/|\VisSet{t}|$
(Definition~\ref{def:diffuse_regime}), together with the following smooth-routing bound on the
input set of interest:
\begin{equation}\label{eq:smooth_routing_bound}
\sum_{j\in \VisSet{t}}\Big\|\frac{\partial \alphadrv_{t,j}(x)}{\partial x_\tau}\Big\|_2
\;\le\;
\frac{L_\alpha}{|\VisSet{t}|},
\qquad \tau<t.
\end{equation}
Appendix~\ref{app:jacobian_tail} derives this from standard softmax calculus under mild
logit-sensitivity control.

\begin{lemma}[Smooth-routing for standard causal attention]\label{lem:attn_smooth_routing}
Assume a single-head causal attention row is
$\alphadrv_{t,\cdot}(x)=\mathrm{softmax}(\beth_{t,0}(x),\dots,\beth_{t,t}(x))$ with logits
$\beth_{t,j}(x)=\langle q(x_t),\,k(x_j)\rangle$ where $q,k$ are tokenwise maps.
Then for every $\tau<t$,
\[
\sum_{j\le t}\Big\|\frac{\partial \alphadrv_{t,j}(x)}{\partial x_\tau}\Big\|_2
\;\le\;
2\,\alphadrv_{t,\tau}(x)\,\Big\|\frac{\partial \beth_{t,\tau}(x)}{\partial x_\tau}\Big\|_2.
\]
In particular, if $\|\partial \beth_{t,\tau}/\partial x_\tau\|_2\le L_{\beth}$ on $\mathcal X_R$, then
\[
\sum_{j\le t}\Big\|\frac{\partial \alphadrv_{t,j}(x)}{\partial x_\tau}\Big\|_2
\;\le\;
2L_{\beth}\,\alphadrv_{t,\tau}(x)
\;\lesssim\;\frac{1}{|\VisSet{t}|}
\]
in the diffuse regime of Definition~\ref{def:diffuse_regime}.
For full-prefix attention one has $|\VisSet{t}|=t+1$.
Full proof in Appendix~\ref{app:proof_attn_smooth_routing}.
\end{lemma}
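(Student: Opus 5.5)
The argument is a direct softmax-calculus computation in which only one logit depends on $x_\tau$. The key structural observation is that $\beth_{t,j}(x)=\langle q(x_t),k(x_j)\rangle$ with $q,k$ tokenwise. So for $\tau<t$, the only logit in row $t$ that depends on $x_\tau$ is $\beth_{t,\tau}$: the query depends on $x_t$ with $t\neq\tau$, and the key $k(x_j)$ depends on $x_\tau$ only when $j=\tau$. By the chain rule,
\[
\frac{\partial \alphadrv_{t,j}}{\partial x_\tau}
=\frac{\partial \alphadrv_{t,j}}{\partial \beth_{t,\tau}}\,\frac{\partial \beth_{t,\tau}}{\partial x_\tau},
\]
where the second factor is a row vector (a $1\times D$ Jacobian).

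Next I use the standard softmax derivative $\partial \alphadrv_{t,j}/\partial \beth_{t,\tau}=\alphadrv_{t,j}(\delta_{j\tau}-\alphadrv_{t,\tau})$. Taking norms, each summand equals $|\alphadrv_{t,j}(\delta_{j\tau}-\alphadrv_{t,\tau})|\,\|\partial\beth_{t,\tau}/\partial x_\tau\|_2$, since the norm of a scalar times a vector factors. Summing over $j\le t$ gives
\[
\alphadrv_{t,\tau}(1-\alphadrv_{t,\tau})+\sum_{j\neq\tau}\alphadrv_{t,j}\alphadrv_{t,\tau}
=2\alphadrv_{t,\tau}(1-\alphadrv_{t,\tau})\le 2\alphadrv_{t,\tau},
\]
using $\sum_{j\ne\tau}\alphadrv_{t,j}=1-\alphadrv_{t,\tau}$. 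This is exactly the first claimed inequality.

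For the ``in particular'' part, I insert the bound $\|\partial\beth_{t,\tau}/\partial x_\tau\|_2\le L_\beth$ on $\mathcal X_R$ to get $2L_\beth\alphadrv_{t,\tau}$. In the diffuse regime of Definition~\ref{def:diffuse_regime}, Appendix Lemma~\ref{lem:bounded_logits} gives $\alphadrv_{t,\tau}\le e^{\Delta}/|\VisSet{t}|$, because each weight is at most $e^{\Delta}$ times the row average. This yields the bound $\lesssim 1/|\VisSet{t}|$, with $|\VisSet{t}|=t+1$ for full prefix.

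There is no real obstacle. The one point needing care is the structural claim that exactly one logit depends on $x_\tau$. This holds only because $\tau\neq t$ and the maps are tokenwise. If a positional rotation such as RoPE is present, it is applied tokenwise, so the argument is unchanged. If instead queries or keys depended on context (e.g.\ through an earlier mixing layer), the single-logit reduction would fail, so I would state that assumption explicitly.
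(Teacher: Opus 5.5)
Your proposal is correct and follows essentially the same route as the paper's proof. Both arguments note that, for tokenwise $q,k$ and $\tau<t$, only the logit $\beth_{t,\tau}$ depends on $x_\tau$, apply the softmax Jacobian $\alpha_{t,j}(\mathbf 1[j=\tau]-\alpha_{t,\tau})$, sum to get $2\alpha_{t,\tau}(1-\alpha_{t,\tau})\le 2\alpha_{t,\tau}$, and then use Lemma~\ref{lem:bounded_logits} for the diffuse-regime $\lesssim 1/|\VisSet{t}|$ consequence.
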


\subsubsection{Exponential forgetting in LTI systems}
\label{sec:lti_exp_forgetting}

Consider a finite-dimensional linear time-invariant feedback system in state-space form:
\begin{equation}
h_t = \Assm h_{t-1} + \Bssm u_t,\qquad y_t = \Cssm h_t,
\label{eq:lti_ss}
\end{equation}
with constant matrices $(\Assm,\Bssm,\Cssm)$. Under an impulse input at time $\tau$, i.e.\ $u_\tau\neq 0$ and $u_t=0$ for $t\neq\tau$,
the contribution to $y_t$ is mediated by $\Assm^{t-\tau}=\Assm^\ell$.

\begin{proposition}[Exponential decay in BIBO-stable LTI feedback systems]
\label{prop:lti_exp_decay}
Assume \eqref{eq:lti_ss} is BIBO-stable. Then there exist constants $c>0$ and $\kappa\in(0,1)$ such that for all lags $\ell\ge 0$,
\[
\|\Cssm \Assm^{\ell} \Bssm\| \;\le\; c\,\kappa^{\ell}.
\]
Equivalently, the impulse response and long-range influence mediated by the state transition decay exponentially in the lag $\ell$.
\end{proposition}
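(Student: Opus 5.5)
The plan is to work entirely with the scalar (or matrix-valued) impulse response $g_\ell:=\Cssm\Assm^{\ell}\Bssm$, $\ell\ge 0$. This avoids claiming anything about the spectrum of $\Assm$ itself: BIBO stability only constrains the input–output map, so $\Assm$ may have modes with $|\lambda|\ge 1$ that are uncontrollable or unobservable. The argument has three steps.

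\emph{Step 1: BIBO stability is equivalent to summability of the impulse response.} For zero initial state, $y_t=\sum_{\tau\le t} g_{t-\tau}u_\tau$. I will invoke the standard fact that this convolution maps $\ell_\infty$ to $\ell_\infty$ if and only if $\sum_{\ell\ge0}\|g_\ell\|<\infty$. The necessity direction is proved by feeding, entrywise, the input $u_\tau=\operatorname{sign}$ of the relevant entry of $g_{t-\tau}$ on a finite window. This gives an $\ell^1$ impulse response, and in particular $g_\ell\to 0$.

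\emph{Step 2: exponential-polynomial structure.} Let $\lambda_1,\dots,\lambda_m$ be the distinct eigenvalues of $\Assm$. Using the Jordan form (or Cayley–Hamilton), each entry of $g_\ell$ is an exponential polynomial
\[
\sum_i p_i(\ell)\lambda_i^{\ell} \qquad\text{for } \ell\ge \Nstate,
\]
with polynomials $p_i$ of degree below the index of $\lambda_i$. Equivalently, the $z$-transform $G(z)=\sum_{\ell} g_\ell z^{-\ell-1}=\Cssm(zI-\Assm)^{-1}\Bssm$ is a strictly proper rational matrix. I will put each entry in lowest terms. Partial fractions then identify its poles exactly with the $\lambda_i$ whose coefficient polynomial $p_i$ does not vanish identically.

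\emph{Step 3: summability kills the bad modes.} Because $\sum\|g_\ell\|<\infty$, the series for $G(z)$ converges absolutely and uniformly on $\{|z|\ge 1\}$. Hence $G$ extends continuously to the closed exterior of the unit disc and has no poles with $|z|\ge1$. By Step 2, every surviving mode therefore satisfies $|\lambda_i|<1$. Set $\bar\lambda:=\max\{|\lambda_i|: p_i\not\equiv0\}<1$, and pick any $\kappa\in(\bar\lambda,1)$, for instance $\kappa=(1+\bar\lambda)/2$. Since $|p_i(\ell)|(\bar\lambda/\kappa)^{\ell}$ is bounded in $\ell$, we get $\|g_\ell\|\le c'\kappa^{\ell}$ for all $\ell\ge \Nstate$. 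The finitely many lags $\ell<\Nstate$ are absorbed by enlarging $c$. In the degenerate case where no mode survives, $g_\ell$ vanishes for large $\ell$ and any $\kappa$ works.

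I expect Step 3 to be the main obstacle: justifying that $\ell^1$-summability forces the coefficients of all modes with $|\lambda|\ge1$ to vanish. The pole/analyticity argument on $|z|\ge1$ is the cleanest route. As an alternative, one can use Kalman decomposition: pass to a minimal realization $(A_{\min},B_{\min},C_{\min})$ with the same $g_\ell$, for which BIBO stability is equivalent to $\rhospec{A_{\min}}<1$. Gelfand's formula then gives $\|A_{\min}^{\ell}\|\le c\kappa^{\ell}$ directly.
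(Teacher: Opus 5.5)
Your proposal is correct, but it takes a different route from the paper.

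\textbf{The paper's route.} It argues at the level of realizations. By the Kalman decomposition it passes to the controllable--observable subsystem $(\Assm_{\mathrm{co}},\Bssm_{\mathrm{co}},\Cssm_{\mathrm{co}})$, which has the same Markov parameters $\Cssm\Assm^{\ell}\Bssm$. It then cites two standard facts: the poles of a minimal realization are exactly the eigenvalues of $\Assm_{\mathrm{co}}$, and a BIBO-stable discrete-time transfer function has all poles in the open unit disk. Together these give $\rhospec{\Assm_{\mathrm{co}}}<1$, hence $\|\Assm_{\mathrm{co}}^{\ell}\|\le c\kappa^{\ell}$.

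\textbf{Your route.} You never leave the impulse response. You derive $\ell^1$-summability from BIBO via sign inputs, and write each entry of $\Cssm\Assm^{\ell}\Bssm$ as an exponential polynomial for $\ell\ge\Nstate$. You then use boundedness of the absolutely convergent series $\sum_{\ell}g_\ell z^{-\ell-1}$ on $|z|\ge 1$ to rule out every surviving mode with $|\lambda|\ge 1$. This needs the identity theorem, since the series equals the rational function $\Cssm(zI-\Assm)^{-1}\Bssm$ only for large $|z|$.

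\textbf{Comparison.} Your version is more self-contained. It effectively proves the cited ``BIBO $\Rightarrow$ poles inside the disk'' fact, and working entrywise sidesteps the pole--eigenvalue correspondence for minimal MIMO realizations, the least elementary ingredient of the paper's proof. The paper's version is shorter given standard control theory. It also yields a structural by-product, internal exponential stability of the minimal realization, whereas yours gives only the input--output bound; that bound is all the statement asks for. Both constructions allow the same range of rates, namely any $\kappa\in(\rhospec{\Assm_{\mathrm{co}}},1)$.

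\textbf{One small caveat.} Your sign-input argument in Step~1 uses a different input for each target time $t$. It therefore needs a gain bound that is uniform over the unit ball of inputs. Definition~\ref{def:bibo_linf} provides exactly this; with a merely qualitative notion of BIBO you would invoke Banach--Steinhaus instead.
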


\begin{proof}[Proof sketch]
BIBO stability implies internal stability of any minimal controllable and observable realization, hence $\rhospec{\Assm_{\mathrm{co}}}<1$ \citep{dahleh2011_chap30}.
Therefore $\|\Assm_{\mathrm{co}}^\ell\|\le c\kappa^\ell$ and $\|\Cssm\Assm^\ell \Bssm\|=\|\Cssm_{\mathrm{co}}\Assm_{\mathrm{co}}^\ell \Bssm_{\mathrm{co}}\|\le c'\kappa^\ell$.
Proof in Appendix~\ref{app:proof_lti_exp_decay}.
\end{proof}

\subsubsection{Exponential forgetting in Mamba}
\label{sec:mamba_exp}

Mamba-style layers fit Definition~\ref{def:feedback_multihop} as feedback systems.
Their update maps
$\Assmt{t}(x_{0:t}),\Bssmt{t}(x_{0:t}),\Cssmt{t}(x_{0:t})$ depend on the input. 

\paragraph*{Convention: discrete scan coefficients}
In what follows, $\Assmt{t},\Bssmt{t},\Cssmt{t}$ denote the discrete-time scan coefficients actually used in the recurrence
$h_t=\Assmt{t} h_{t-1}+\Bssmt{t} u_t$ after discretization, such as ZOH, unless stated otherwise.

Exponential forgetting is not automatic for general input-dependent feedback systems.
Section~\ref{sec:sessa_poly_tail} gives a counterexample in a diffuse feedback-routing regime.
For Mamba, the relevant condition is \emph{failed freeze time}: the model cannot sustain a long interval with $\Delta_t\approx 0$.

\paragraph*{Accumulated discretization time}
In Mamba's standard ZOH-diagonal parameterization, long-range influence is controlled by the accumulated discretization time
\[
\sum_{r=\tau+1}^{t}\Delta_r,
\]
since the transition product contains factors of the form
\[
\exp\!\Big(-a_n\sum_{r=\tau+1}^{t}\Delta_r\Big).
\]
Accordingly, failed freeze time converts control in accumulated discretization time into exponential decay in the lag.

\begin{proposition}[Mamba end-to-end Jacobian bound]
\label{prop:mamba_e2e_decay}
Consider a Mamba block with state $h_t\in\mathbb R^{\dstate}$ and output $y_t\in\mathbb R^D$:
\[
h_{-1}=0,\qquad
h_t = \Assmt{t}(x_t)\,h_{t-1}+\Gssmt{t}(x_t)\,\widetilde{\Bssmt{t}}(x_t)\,u_t(x_t),
\qquad
y_t=\Cssmt{t}(x_t)\,h_t,
\]
where the parametrization is local and ZOH-diagonal:
for each mode $n$,
\[
[\Assmt{t}(x_t)]_n=\exp(-a_n\Delta_t(x_t)),
\qquad
[\Gssmt{t}(x_t)]_n=\frac{1-\exp(-a_n\Delta_t(x_t))}{a_n},
\]
with input-independent rates satisfying
\[
a_n\ge \lambda>0
\qquad\text{for all modes }n.
\]

Assume there exist constants $U_R,\Gmax,C_R,L_A,L_B,L_u<\infty$ such that for all $x\in\mathcal X_R$ and all $t$,
\[
\|u_t(x_t)\|\le U_R,\qquad
\|\widetilde{\Bssmt{t}}(x_t)\|\le \Gmax,\qquad
\|\Cssmt{t}(x_t)\|\le C_R,
\]
\[
\Big\|\frac{\partial \Assmt{t}(x_t)}{\partial x_t}\Big\|\le L_A,\qquad
\Big\|\frac{\partial \widetilde{\Bssmt{t}}(x_t)}{\partial x_t}\Big\|\le L_B,\qquad
\Big\|\frac{\partial u_t(x_t)}{\partial x_t}\Big\|\le L_u.
\]
For $\tau<t$ with lag $\ell=t-\tau$, define
\[
\Pi_{t,\ell}(x)
:=
\exp\!\Big(-\lambda\sum_{r=\tau+1}^{t}\Delta_r(x)\Big).
\]
Then for every $x\in\mathcal X_R$ and every $\tau<t$,
\[
\Big\|\frac{\partial y_t(x)}{\partial x_\tau}\Big\|
\le
C(R)\,\Pi_{t,\ell}(x),
\]
where one may take
\[
C(R):=C_R\,J_R,
\]
with
\[
J_R:=L_A\,H_R \;+\;\frac{L_A}{\lambda}\,\Gmax U_R \;+\;\frac{1}{\lambda}\big(L_B\,U_R+\Gmax\,L_u\big),
\qquad
H_R:=\sqrt{\dstate}\,\frac{\Gmax\,U_R}{\lambda}.
\]
\end{proposition}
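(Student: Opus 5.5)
The plan is to exploit locality: every coefficient at step $r$ depends only on $x_r$, so $x_\tau$ enters the recurrence only through the update at step $\tau$. From there the perturbation is transported to time $t$ by the diagonal transition product, and read out by $\Cssmt{t}$, which does not depend on $x_\tau$ since $t\neq\tau$. Concretely, I would first record the chain rule. For $r>\tau$, the update $h_r=\Assmt{r}(x_r)h_{r-1}+(\text{terms in }x_r)$ gives $\partial h_r/\partial x_\tau=\Assmt{r}\,\partial h_{r-1}/\partial x_\tau$. For $r<\tau$, $\partial h_r/\partial x_\tau=0$. Hence
\[
\frac{\partial y_t}{\partial x_\tau}=\Cssmt{t}\Big(\prod_{r=\tau+1}^{t}\Assmt{r}\Big)\frac{\partial h_\tau}{\partial x_\tau},
\qquad
\frac{\partial h_\tau}{\partial x_\tau}=\frac{\partial \Assmt{\tau}}{\partial x_\tau}h_{\tau-1}+\frac{\partial\big(\Gssmt{\tau}\widetilde{\Bssmt{\tau}}u_\tau\big)}{\partial x_\tau}.
\]
Since the product is diagonal with entries $\exp(-a_n\sum_{r=\tau+1}^t\Delta_r)$, and since $a_n\ge\lambda$ and $\Delta_r\ge 0$, its operator norm is at most $\Pi_{t,\ell}(x)$. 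Combining this with $\|\Cssmt{t}\|\le C_R$ reduces the claim to showing $\|\partial h_\tau/\partial x_\tau\|\le J_R$.

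Second, I would bound the state uniformly by $H_R$ through a per-mode telescoping argument. Unrolling gives
\[
[h_{\tau-1}]_n=\sum_{k\le\tau-1}\Big(\prod_{r=k+1}^{\tau-1}e^{-a_n\Delta_r}\Big)\frac{1-e^{-a_n\Delta_k}}{a_n}\,[\widetilde{\Bssmt{k}}u_k]_n .
\]
Bounding $|[\widetilde{\Bssmt{k}}u_k]_n|\le \Gmax U_R$, the weights $\big(\prod_{r>k}e^{-a_n\Delta_r}\big)(1-e^{-a_n\Delta_k})$ telescope to $1-\prod_{r\le \tau-1}e^{-a_n\Delta_r}\le 1$. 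This gives $|[h_{\tau-1}]_n|\le \Gmax U_R/a_n\le \Gmax U_R/\lambda$, and hence $\|h_{\tau-1}\|\le\sqrt{\dstate}\,\Gmax U_R/\lambda=H_R$. I expect this step to be the main obstacle. The naive geometric-series bound fails because $\Delta_r$ can be small, so the telescoping identity, which uses the ZOH form of $\Gssm$, is essential.

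Third, I would differentiate the injection term by the product rule:
\[
\frac{\partial(\Gssm\widetilde{\Bssm}u)}{\partial x_\tau}=\frac{\partial \Gssm}{\partial x_\tau}\widetilde{\Bssm}u+\Gssm\frac{\partial\widetilde{\Bssm}}{\partial x_\tau}u+\Gssm\widetilde{\Bssm}\frac{\partial u}{\partial x_\tau}.
\]
The ZOH identity $[\Gssm]_n=(1-[\Assm]_n)/a_n$ gives $\partial[\Gssm]_n=-\partial[\Assm]_n/a_n$, so $\|\partial\Gssm/\partial x_\tau\|\le L_A/\lambda$; also $|[\Gssm]_n|\le 1/a_n\le 1/\lambda$. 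The three terms are therefore bounded by $\frac{L_A}{\lambda}\Gmax U_R$, $\frac{1}{\lambda}L_BU_R$ and $\frac1\lambda\Gmax L_u$, while the first term of $\partial h_\tau/\partial x_\tau$ is bounded by $L_AH_R$. Summing gives exactly $J_R$, and multiplying by $C_R\Pi_{t,\ell}(x)$ yields the claim.

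Care is needed only in interpreting $\partial\Assm/\partial x_\tau$ as a linear map $\mathbb R^D\to$ (diagonal matrices), with the operator norm compatible with the stated constants.
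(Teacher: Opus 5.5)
Your proposal is correct and follows the paper's proof essentially step for step. It uses the same causal chain rule reducing the Jacobian to $\Cssmt{t}(\prod_{r=\tau+1}^{t}\Assmt{r})\,\partial h_\tau/\partial x_\tau$, the same diagonal transition-product bound by $\Pi_{t,\ell}(x)$, the same uniform state bound $H_R$, and the same three-term product-rule estimate of the source-time injection, which gives exactly $J_R$. The only cosmetic difference is the state bound: you unroll the recurrence and telescope the ZOH weights, while the paper inducts on the convex-combination form $h_t=\theta_t h_{t-1}+(1-\theta_t)\,b_t/a$ (Lemma~\ref{lem:zoh_state_bound}), which is the same identity.
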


\begin{proof}[Proof sketch]
Differentiate the ZOH recurrence. By locality, for $t>\tau$ one has
\[
\frac{\partial h_t}{\partial x_\tau}
=
\Assmt{t}(x_t)\frac{\partial h_{t-1}}{\partial x_\tau}.
\]
Thus the long-range dependence is controlled by the transition product.
Lemma~\ref{lem:zoh_state_bound} yields the uniform state bound $H_R$, which controls the source-time injection derivative
$\partial h_\tau/\partial x_\tau$.
Since each diagonal transition satisfies
\[
\Big\|\prod_{r=\tau+1}^{t}\Assmt{r}(x_r)\Big\|
\le
\exp\!\Big(-\lambda\sum_{r=\tau+1}^{t}\Delta_r(x)\Big)
=
\Pi_{t,\ell}(x),
\]
the displayed bound follows. Proof in Appendix~\ref{app:proof_mamba_e2e_decay}.
\end{proof}

\paragraph*{ZOH discretization under freezing}
In Mamba, the discrete-time coefficients arise from a stable continuous-time diagonal kernel via ZOH \citep{gu2023mamba}.
For each mode with continuous parameter $A=-a$ with $a>0$ and step size $\Delta_t\ge 0$,
\[
\bar A_t = e^{-a\Delta_t}\in[0,1],\qquad
\bar B_t = \frac{1-e^{-a\Delta_t}}{a}\,\widetilde{\Bssmt{t}}.
\]
Here $\Assmt{t}=\bar A_t$ and $\bar B_t u_t = \Gssmt{t}\,\widetilde{\Bssmt{t}}u_t$.
In particular, when ``freezing time'' with $\Delta_t=0$ one has $\bar A_t=1$ and $\bar B_t=0$,
so the update injects no new input while holding the state.

\begin{lemma}[Bounded state for ZOH-diagonal Mamba channels]\label{lem:zoh_state_bound}
Consider the scalar ZOH recurrence
\[
h_{-1}=0,\qquad
h_t = e^{-a\Delta_t}\,h_{t-1} + \frac{1-e^{-a\Delta_t}}{a}\,b_t,
\qquad a\ge a_{\min}>0,\ \Delta_t\ge 0.
\]
If $|b_t|\le M$ for all $t$, then $\sup_t |h_t|\le M/a_{\min}$.
More generally, $\sup_t |h_t|\le \max\{|h_{-1}|,\ \sup_s |b_s|/a_{\min}\}$.
\end{lemma}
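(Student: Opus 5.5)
The plan is to show that $h_t$ is always a convex combination of the previous state and a rescaled input, and then to propagate a bound by induction on $t$. Write $\theta_t:=e^{-a\Delta_t}\in(0,1]$, which holds since $a>0$ and $\Delta_t\ge 0$. Then $\frac{1-e^{-a\Delta_t}}{a}=\frac{1-\theta_t}{a}$, and the recurrence becomes
\[
h_t=\theta_t\,h_{t-1}+(1-\theta_t)\,\frac{b_t}{a}.
\]
So $h_t$ lies between $h_{t-1}$ and $b_t/a$. The triangle inequality gives
\[
|h_t|\le \theta_t|h_{t-1}|+(1-\theta_t)\frac{|b_t|}{a}\le \max\Big\{|h_{t-1}|,\ \frac{|b_t|}{a}\Big\}.
\]
Since $a\ge a_{\min}$, we have $|b_t|/a\le |b_t|/a_{\min}$.

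For the general statement, set $K:=\max\{|h_{-1}|,\ \sup_s|b_s|/a_{\min}\}$ and prove $|h_t|\le K$ by induction on $t\ge -1$. The base case $t=-1$ holds by definition of $K$. For the inductive step, the displayed inequality gives $|h_t|\le\max\{K,\ |b_t|/a_{\min}\}=K$. The first claim is the special case $h_{-1}=0$ and $|b_t|\le M$, which yields $\sup_t|h_t|\le M/a_{\min}$. If $\sup_s|b_s|=\infty$, the statement is vacuous.

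None of the steps is a real obstacle. The only point to check is the identity $(1-e^{-a\Delta_t})/a=(1-\theta_t)/a$, together with $1-\theta_t\in[0,1)$. These two facts make the update a convex combination, and convexity is what removes any geometric-series factor. The edge case $\Delta_t=0$ is the frozen step, where $\theta_t=1$ and $h_t=h_{t-1}$; it is covered by the same inequality.

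For the vector-valued use with diagonal $\Assmt{t}$ in Proposition~\ref{prop:mamba_e2e_decay}, I would apply the lemma mode by mode with $a_n\ge\lambda$. Bounding each coordinate by $\Gmax U_R/\lambda$ then gives $\|h_t\|_2\le\sqrt{\dstate}\,\Gmax U_R/\lambda=H_R$.
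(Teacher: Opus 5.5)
Your proposal is correct and matches the paper's proof: you rewrite the update as the convex combination $h_t=\theta_t h_{t-1}+(1-\theta_t)\,b_t/a$ with $\theta_t=e^{-a\Delta_t}$, deduce $|h_t|\le\max\{|h_{t-1}|,\,|b_t|/a_{\min}\}$, and close by induction. The only difference is cosmetic: you induct against the global constant $K$ starting at $t=-1$, whereas the paper uses the running maximum $B_t$, and this changes nothing in the argument.
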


\begin{proof}[Proof sketch]
Write $h_t = \theta_t h_{t-1} + (1-\theta_t)\,\frac{b_t}{a}$ with $\theta_t:=e^{-a\Delta_t}\in[0,1]$.
Thus $h_t$ is a convex combination of $h_{t-1}$ and $\frac{b_t}{a}$, yielding
$|h_t|\le \max\{|h_{t-1}|,\ |b_t|/a\}$.
Since $a\ge a_{\min}$, we have $|b_t|/a \le |b_t|/a_{\min}$, and the claim follows by induction.
Proof in Appendix~\ref{app:proof_zoh_state_bound}.
\end{proof}

\paragraph*{Failure of freeze time}
Mamba may slow decay by keeping $\lambda_n\Delta_t\approx 0$ over selected steps.
We rule out this behavior by assuming that accumulated discretization time grows linearly on every relevant interval.

\begin{proposition}[Failed freeze time yields exponential forgetting]
\label{prop:freeze_errors_exp}
Consider a single-mode diagonal selective SSM channel with memory factor
\[
\Pi_{t,\ell}
:=
\prod_{r=t-\ell+1}^{t}\exp(-\lambda\,\Delta_r)
=
\exp\!\Big(-\lambda\sum_{r=t-\ell+1}^{t}\Delta_r\Big),
\qquad \lambda>0.
\]
Assume there exists $c_\Delta>0$ such that for every relevant pair $\tau<t$,
\[
\sum_{r=\tau+1}^{t}\Delta_r \ge c_\Delta (t-\tau).
\]
Then
\[
\Pi_{t,\ell}\le \exp\!\big(-\lambda c_\Delta \ell\big).
\]
Equivalently, once freeze time cannot be maintained over a long interval, the memory factor is exponentially small in the lag.
\end{proposition}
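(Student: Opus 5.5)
The bound reduces to rewriting the memory factor as one exponential and applying the failed-freeze-time hypothesis on exactly the interval appearing in $\Pi_{t,\ell}$. The only step that needs care is matching index ranges. With $\ell=t-\tau$, the product $\prod_{r=t-\ell+1}^{t}$ runs over $r=\tau+1,\dots,t$. Because the exponential is multiplicative, this gives
\[
\Pi_{t,\ell}=\exp\Big(-\lambda\sum_{r=\tau+1}^{t}\Delta_r\Big),
\]
which is precisely the accumulated discretization time constrained by the assumption.

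Since $\lambda>0$, the map $z\mapsto e^{-\lambda z}$ is monotonically decreasing. The hypothesis gives $\sum_{r=\tau+1}^{t}\Delta_r\ge c_\Delta(t-\tau)=c_\Delta\ell$, so
\[
\Pi_{t,\ell}\le \exp(-\lambda c_\Delta \ell).
\]
This is the claimed bound, and the constant $\kappa:=e^{-\lambda c_\Delta}\in(0,1)$ makes the geometric decay in the lag explicit. The edge case $\ell=0$ is trivial: the empty product equals $1$, which matches the right-hand side. For the multi-mode version with rates $a_n\ge\lambda$, the same argument applies mode by mode, using $e^{-a_n z}\le e^{-\lambda z}$ for $z\ge 0$, which requires $\Delta_r\ge0$ as in the ZOH setting.

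Nothing here is a real obstacle. I would add one sentence noting that, combined with Proposition~\ref{prop:mamba_e2e_decay}, this yields the end-to-end bound $\|\partial y_t/\partial x_\tau\|\le C(R)e^{-\lambda c_\Delta\ell}$.
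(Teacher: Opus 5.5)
Your proposal is correct and follows the paper's proof: rewrite $\Pi_{t,\ell}$ as $\exp\bigl(-\lambda\sum_{r=\tau+1}^{t}\Delta_r\bigr)$ with $\ell=t-\tau$, then apply the failed-freeze-time lower bound together with monotonicity of $z\mapsto e^{-\lambda z}$. One small remark on your multi-mode aside: you do not need $\Delta_r\ge 0$ pointwise, because the hypothesis already gives $\sum_{r=\tau+1}^{t}\Delta_r\ge c_\Delta\ell\ge 0$, and that is all the comparison $e^{-a_n z}\le e^{-\lambda z}$ requires.
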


\begin{proof}[Proof sketch]
This is immediate from
\[
\Pi_{t,\ell}
=
\exp\!\Big(-\lambda\sum_{r=\tau+1}^{t}\Delta_r\Big)
\]
and the assumed linear lower bound on the accumulated discretization time.
Proof in Appendix~\ref{app:proof_freeze_errors_exp}.
\end{proof}

\subsubsection{Attention dilution}
\label{sec:attn_dilution_formal}

For causal self-attention, the direct contribution of token $\tau$ to $y_t$ is the one-hop weight $\alphadrv_{t,\tau}$.
In diffuse regimes this is $O(1/|\VisSet{t}|)$, hence $O(1/(t+1))$ for full-prefix attention.
For very old tokens with $\tau=O(1)$ and $t\asymp \ell$, this becomes $O(1/\ell)$.
This is a dilution phenomenon controlled primarily by the query time $t$, rather than a multi-hop forgetting mechanism.

\subsubsection{Polynomial decay in Sessa}
\label{sec:sessa_poly_tail}

We formalize a regime in which the Sessa feedback solve yields polynomial decay in the lag $\ell$.

\paragraph*{Scalar recursion}
Let $(\gamma_t)_{t\ge 0}$ be scalars and let $(\alphafb_{t,j})_{t\ge 1,\,0\le j<t}$ satisfy
$\alphafb_{t,j}\ge 0$ and $\sum_{j<t}\alphafb_{t,j}\le 1$.
Given a forward sequence $(f_t)$, define
\begin{equation}
y_0=f_0,\qquad
y_t=f_t+\gamma_t\sum_{j=0}^{t-1}\alphafb_{t,j}y_j,\quad t\ge 1.
\label{eq:scalar_feedback}
\end{equation}
For an impulse input at time $\tau$, set $f_\tau=1$ and $f_t=0$ for $t\neq\tau$.
This yields an influence profile
$y_{t}$ supported on $t\ge \tau$; the relevant memory variable is the lag $\ell=t-\tau$.

\begin{assumption}[Diffuse feedback routing envelope]
\label{ass:alpha_upper_main}
There exists $c_2\in(0,\infty)$ such that for all $t\ge 1$ and all $0\le j<t$,
\begin{equation}
\alphafb_{t,j}\le \frac{c_2}{t}.
\label{eq:alpha_upper_main}
\end{equation}
\end{assumption}

\begin{assumption}[Bounded feedback gain]
\label{ass:gamma_bound_main}
There exists $\gamma_{\max}\in[0,1)$ such that $|\gamma_t|\le \gamma_{\max}$ for all $t\ge 0$.
\end{assumption}

Define $\betatail:=1-\gamma_{\max}c_2$ and assume $\gamma_{\max}c_2<1$, so $\betatail\in(0,1]$.

\begin{theorem}[Polynomial decay of impulse influence]
\label{thm:poly_decay_main}
Under Assumptions~\ref{ass:alpha_upper_main}--\ref{ass:gamma_bound_main} and $\betatail:=1-\gamma_{\max}c_2\in(0,1]$,
the impulse influence induced by \eqref{eq:scalar_feedback} satisfies, for all lags $\ell\ge 1$,
\[
|y_{\tau+\ell}|\;\le\; C\,\ell^{-\betatail},
\qquad\text{for instance}\qquad
C=(1-\betatail)\,e^{\,1-\betatail}.
\]
uniformly over the impulse time $\tau$ (when the same constants apply).
\end{theorem}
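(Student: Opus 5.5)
The plan is a discrete Grönwall argument on the partial sums of $|y_t|$. Throughout, write $a:=\gamma_{\max}c_2=1-\betatail\in[0,1)$. First I will record the structure of the impulse response. Since $f_t=0$ for $t<\tau$, forward substitution in \eqref{eq:scalar_feedback} gives $y_t=0$ for $t<\tau$. At the impulse time we get $y_\tau=f_\tau+\gamma_\tau\sum_{j<\tau}\alphafb_{\tau,j}y_j=1$. For $t>\tau$ the forward term vanishes, so only the feedback term remains. Applying Assumptions~\ref{ass:alpha_upper_main} and~\ref{ass:gamma_bound_main} then gives the pointwise estimate
\[
|y_t|\;\le\;\gamma_{\max}\sum_{j=\tau}^{t-1}\alphafb_{t,j}|y_j|\;\le\;\frac{a}{t}\,S_{t-1},
\qquad S_t:=\sum_{j=\tau}^{t}|y_j|,\quad t>\tau .
\]

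Next I will turn this into a recursion for the partial sums. Adding $|y_t|$ to $S_{t-1}$ gives $S_t=S_{t-1}+|y_t|\le (1+a/t)\,S_{t-1}$. Iterating from $S_\tau=1$ yields
\[
S_{t-1}\;\le\;\prod_{k=\tau+1}^{t-1}\Big(1+\frac{a}{k}\Big)\;\le\;\exp\Big(a\sum_{k=\tau+1}^{t-1}\frac1k\Big).
\]
For the harmonic sum I use the worst case $\tau=0$, which is uniform in $\tau$ because $k\ge1$ in the sum. In that case $\sum_{k=1}^{t-1}1/k\le 1+\log t$. This gives $S_{t-1}\le e^{a}\,t^{a}$.

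Combining the two steps, for $t=\tau+\ell$ with $\ell\ge1$,
\[
|y_{\tau+\ell}|\;\le\;\frac{a}{t}\,e^{a}t^{a}\;=\;a e^{a}\,t^{-(1-a)}\;=\;a e^{a}\,t^{-\betatail}.
\]
Since $t=\tau+\ell\ge\ell$ and $\betatail>0$, we have $t^{-\betatail}\le \ell^{-\betatail}$. This gives $|y_{\tau+\ell}|\le (1-\betatail)e^{1-\betatail}\ell^{-\betatail}$, with a constant independent of $\tau$, exactly as stated. The degenerate case $a=0$ (that is, $\betatail=1$) is consistent, since then $y_t=0$ for all $t>\tau$.

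The only delicate point is making the harmonic-sum bound uniform in $\tau$ while keeping the exact constant $e^{a}$. A sharper bound of the form $(t/(\tau+1))^a$ is available, but it is not needed. The crude bound $1+\log t$ suffices because the leftover factor $t^{a-1}$ is already decreasing in $t$, and $t\ge\ell$.
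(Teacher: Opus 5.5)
Your proof is correct and is essentially the paper's own argument (Theorem~\ref{thm:poly_decay_tail} together with Corollary~\ref{cor:impulse_j_tail}): a discrete Gr\"onwall bound on the partial sums via $S_t\le(1+\eta/t)S_{t-1}$ and a harmonic-number estimate $H_n\le 1+\log n$, which yields the same constant $\eta e^{\eta}$. The only cosmetic difference is bookkeeping: the paper shifts time to start at the impulse and weakens $c_2/(\tau+n)\le c_2/n$ up front, whereas you keep absolute time and use $t\ge\ell$ only at the end, which incidentally gives the slightly stronger intermediate bound $C(\tau+\ell)^{-\betatail}$.
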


\begin{proof}[Proof sketch]
Shift the recursion to start at $\tau$ and apply a comparison argument controlling partial sums by a harmonic-growth recursion,
yielding $\ell^{-\betatail}$. For $0<\betatail<1$, the full proof appears in
Appendix~\ref{sec:poly_decay}, Corollary~\ref{cor:impulse_j_tail} with $j=\tau$.
The endpoint case $\betatail=1$ corresponds to $\eta=\gamma_{\max}c_2=0$, hence
$\gamma_t=0$ for all $t$ and therefore $y_{\tau+\ell}=0$ for all $\ell\ge 1$;
see also Remark~\ref{rem:eta_zero}.
\end{proof}

\begin{remark}[Subcriticality]
\label{rem:poly_tail_subcriticality}
Whenever we refer in prose to a polynomial tail induced by diffuse feedback routing,
this always means the subcritical regime
\[
\alphafb_{t,j}\le \frac{c_2}{t},
\qquad
|\gamma_t|\le \gamma_{\max},
\qquad
\gamma_{\max}c_2<1.
\]
Equivalently,
\[
\betatail:=1-\gamma_{\max}c_2\in(0,1].
\]
The nontrivial heavy-tail case is \(0<\betatail<1\).
The endpoint \(\betatail=1\) corresponds to \(\gamma_{\max}c_2=0\), in which case
the post-source impulse is identically zero; see Remark~\ref{rem:eta_zero}.
Thus bounded gains alone do not suffice: the strict subcriticality condition
\(\gamma_{\max}c_2<1\) is essential in every use of Theorem~\ref{thm:poly_decay_main}.
\end{remark}

\paragraph*{Comparison and sharpness.}
Under the subcritical diffuse-routing assumptions above, Sessa yields a polynomial tail
\(
\ell^{-\betatail}
\),
unlike the exponential forgetting of stable LTI feedback systems
(Proposition~\ref{prop:lti_exp_decay}) and failed-freeze-time Mamba
(Section~\ref{sec:mamba_exp}).
The exponent is sharp:
in the explicit uniform-routing regime
$\alphafb_{t,j}=\frac{1}{t}\mathbf 1[j<t]$ with constant $\gamma\in(0,1)$,
Appendix Corollary~\ref{cor:theta_tail_uniform_shifted} gives the closed form
\[
y_{\tau+\ell}
=
\gamma\,\frac{\Gamma(\tau+1)}{\Gamma(\tau+1+\gamma)}
\cdot
\frac{\Gamma(\tau+\ell+\gamma)}{\Gamma(\tau+\ell+1)},
\]
and hence $y_{\tau+\ell}=\Theta_\tau(\ell^{-\betatail})$ with $\betatail=1-\gamma$ for every fixed $\tau$.
Appendix Corollary~\ref{cor:theta_tail_uniform_bounded_source} further gives a uniform two-sided envelope on every bounded source family for a single layer.
These one-layer statements are distinct from the deep selective-retrieval theorem below, which uses a different multi-layer construction.

\paragraph*{Connection to attention dilution}
Diffuse attention in a one-hop mixer yields per-token weights of order $O(1/t)$ and, for very old tokens, $O(1/\ell)$.
In contrast, under the diffuse-routing assumptions of Theorem~\ref{thm:poly_decay_main}, Sessa yields a tail $O(\ell^{-\betatail})$ with $\betatail<1$,
which is asymptotically slower than $1/\ell$ and therefore can mitigate dilution by sustaining longer-range influence through the
stateful feedback channel while remaining BIBO-stable under Section~\ref{sec:theory_bibo}.

\begin{proposition}[Decay envelopes in the diffuse regime]
\label{prop:shared_diffuse_decay_envelopes}
Fix a horizon $\Tctx$ and consider the fixed-routing influence Jacobians of
Section~\ref{sec:fair_comparison}.
The three items below are stated under the mechanism-specific assumptions introduced above.

\begin{enumerate}[label=(\roman*), leftmargin=*, nosep]
\item \textbf{Transformer.}
In the diffuse regime with full-prefix visibility, the value Jacobian satisfies
\[
\|J^{\mathrm{attn}}_{t,\tau}\| = \alphadrv_{t,\tau} = \Theta\!\Big(\frac{1}{t+1}\Big)
\qquad (\tau\le t),
\]
and in particular for a fixed old source $\tau=O(1)$ and lag $\ell=t-\tau$,
\[
\|J^{\mathrm{attn}}_{\tau+\ell,\tau}\|=\Theta(1/\ell).
\]

\item \textbf{Mamba.}
Assume the realized recurrence has diagonal transitions
\[
\Assmt{r}=\operatorname{diag}(\exp(-a_n\Delta_r)),
\qquad a_n\ge \lambda>0,
\]
and bounded input/output factors $\sup_r\|\Bssmt{r}\|,\sup_r\|\Cssmt{r}\|<\infty$.
If, on the region of interest,
\[
\sum_{r=\tau+1}^{t}\Delta_r\ge c_\Delta (t-\tau),
\]
then the impulse Jacobian obeys
\[
\|J^{\mathrm{ssm}}_{t,\tau}\|
\le c\,\exp\!\big(-\lambda c_\Delta (t-\tau)\big)
=
c\,e^{-\lambda c_\Delta \ell}.
\]
This expresses exponential forgetting under failed freeze time:
the model cannot maintain a long preserve corridor, so accumulated discretization time grows linearly in the lag.

\item \textbf{Sessa.}
Under the hypotheses of Theorem~\ref{thm:poly_decay_main}, the solve Jacobian column corresponding to an impulse
in $f$ obeys the polynomial envelope
\[
|J^{\mathrm{sessa}}_{\tau+\ell,\tau}|\le C\,\ell^{-\betatail},
\qquad \betatail\in(0,1],
\]
as in Theorem~\ref{thm:poly_decay_main}.
Moreover, in the explicit uniform-routing regime
$\BfbEnt{t}{j}=
\begin{cases}
0, & t=0,\\[2pt]
\dfrac{\gamma}{t}\mathbf 1[j<t], & t\ge 1,
\end{cases}$
with $\gamma\in(0,1)$ and $\betatail=1-\gamma$, this envelope is tight in the following qualified sense:
for every fixed source position $\tau$,
\[
|J^{\mathrm{sessa}}_{\tau+\ell,\tau}|=\Theta_\tau(\ell^{-\betatail}),
\]
by Corollary~\ref{cor:theta_tail_uniform_shifted}.
Moreover, for every bounded source family $0\le \tau\le \tau_{\max}$ there exist constants
$c^-_{\tau_{\max}},c^+_{\tau_{\max}}>0$ such that
\[
c^-_{\tau_{\max}}\,\ell^{-\betatail}
\le
|J^{\mathrm{sessa}}_{\tau+\ell,\tau}|
\le
c^+_{\tau_{\max}}\,\ell^{-\betatail}
\]
for all $0\le \tau\le \tau_{\max}$ and all $\ell\ge 1$,
by Corollary~\ref{cor:theta_tail_uniform_bounded_source}.
In particular, the same two-sided bound holds uniformly on every fixed finite horizon.
\end{enumerate}
Proof in Appendix~\ref{app:proof_shared_diffuse_decay_envelopes}.
\end{proposition}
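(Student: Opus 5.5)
The proposition bundles three separate claims, and I will prove them one at a time, reusing earlier results wherever they apply.

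\emph{Item (i), Transformer.} By \eqref{eq:def_J_attn}, $J^{\mathrm{attn}}_{t,\tau}=\alphadrv_{t,\tau}I_D$, so its operator norm is exactly $\alphadrv_{t,\tau}$. In the diffuse regime (Definition~\ref{def:diffuse_regime}) the logit spread is at most $\Delta$. For full-prefix visibility, Lemma~\ref{lem:bounded_logits} then gives
\[
e^{-\Delta}/(t+1)\le\alphadrv_{t,\tau}\le e^{\Delta}/(t+1).
\]
Concretely, each weight is at most $e^{\Delta}$ times any other weight in the same row, and the $t+1$ weights sum to one. For $\tau=O(1)$ we have $t+1=\ell+\tau+1\asymp\ell$ when $\ell\ge1$, which yields $\Theta(1/\ell)$.

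\emph{Item (ii), Mamba.} Here $J^{\mathrm{ssm}}_{t,\tau}=\Cssmt{t}\big(\prod_{r=\tau+1}^{t}\Assmt{r}\big)\Bssmt{\tau}$. The product of diagonal matrices is diagonal with entries $\exp(-a_n\sum_r\Delta_r)$. Since $a_n\ge\lambda$ and $\Delta_r\ge0$, its spectral norm is at most $\exp(-\lambda\sum_{r=\tau+1}^{t}\Delta_r)$. The failed-freeze-time hypothesis, exactly as in Proposition~\ref{prop:freeze_errors_exp}, bounds this by $e^{-\lambda c_\Delta\ell}$. Submultiplicativity then gives the claim with $c=\sup_r\|\Cssmt{r}\|\sup_r\|\Bssmt{r}\|$.

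\emph{Item (iii), Sessa upper envelope.} I first identify the Jacobian column with the scalar impulse response. The column $\tau$ of $(I-\Bfb)^{-1}$ is the solution $s$ of $(I-\Bfb)s=e_\tau$. By forward substitution \eqref{eq:mixer_recurrence}, this is exactly the recursion \eqref{eq:scalar_feedback} with $f=e_\tau$. The upper envelope is then Theorem~\ref{thm:poly_decay_main} verbatim.

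\emph{Item (iii), tightness.} In the uniform-routing regime I invoke the closed form of Corollary~\ref{cor:theta_tail_uniform_shifted}:
\[
y_{\tau+\ell}=\gamma\frac{\Gamma(\tau+1)}{\Gamma(\tau+1+\gamma)}\cdot\frac{\Gamma(\tau+\ell+\gamma)}{\Gamma(\tau+\ell+1)}.
\]
I would sanity-check this by induction on the recursion. The differences $y_{t+1}-y_t$ reduce to $y_{t+1}=y_t\,(t+\gamma)/(t+1)$ for $t>\tau$, starting from $y_{\tau+1}=\gamma/(\tau+1)$. By Wendel/Gautschi-type bounds, the ratio $\Gamma(x+\gamma)/\Gamma(x+1)$ lies between $x^{\gamma-1}$ and $(x+1)^{\gamma-1}$ up to absolute constants depending on $\gamma$. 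With $x=\tau+\ell$ this gives $\Theta_\tau(\ell^{-(1-\gamma)})$, since $(\tau+\ell)/\ell\in[1,\tau+1]$ for $\ell\ge1$.

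\emph{Item (iii), uniformity over bounded sources.} For $0\le\tau\le\tau_{\max}$, the prefactor $\Gamma(\tau+1)/\Gamma(\tau+1+\gamma)$ ranges over finitely many positive values. The distortion factor $((\tau+\ell)/\ell)^{\gamma-1}$ lies in $[(\tau_{\max}+1)^{\gamma-1},1]$. Taking minima and maxima over these finitely many $\tau$ gives $c^\pm_{\tau_{\max}}$; this is Corollary~\ref{cor:theta_tail_uniform_bounded_source}. The finite-horizon statement follows because $\tau\le\Tctx-1$.

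The only genuinely delicate step is this two-sided Gamma-ratio estimate with constants uniform in $\ell\ge1$, including small $\ell$. I would handle it with Gautschi's inequality, $x^{1-s}<\Gamma(x+1)/\Gamma(x+s)<(x+1)^{1-s}$ for $x>0$ and $s=\gamma\in(0,1)$. I would check the boundary case $x=\tau+\ell\ge1$ directly.
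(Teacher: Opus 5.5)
Your proposal is correct and follows the paper's proof essentially step for step. For (i) you use $J^{\mathrm{attn}}_{t,\tau}=\alphadrv_{t,\tau}I_D$ together with Lemma~\ref{lem:bounded_logits}; for (ii) you use the diagonal transition-product bound plus failed freeze time with $c=\sup_r\|\Cssmt{r}\|\sup_r\|\Bssmt{r}\|$; and for (iii) you identify column $\tau$ of $(I-\Bfb)^{-1}$ with the impulse response and then apply Theorem~\ref{thm:poly_decay_main} and Corollaries~\ref{cor:theta_tail_uniform_shifted}--\ref{cor:theta_tail_uniform_bounded_source}. Your ratio recursion $y_{t+1}=y_t(t+\gamma)/(t+1)$ is an equivalent rederivation of the paper's partial-sum product formula. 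One cosmetic point: Gautschi's lower bound is in terms of $\tau+\ell+1$, so the uniform lower constant naturally carries $(\tau_{\max}+2)^{-\betatail}$ rather than $(\tau_{\max}+1)^{-\betatail}$, which changes nothing of substance.
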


\begin{proposition}[End-to-end decay envelopes]
\label{prop:shared_regime_decay_envelopes_e2e}
Fix a horizon $\Tctx$ and consider one-block end-to-end Jacobians.
In item~(i) we assume the diffuse smooth-routing regime of Section~\ref{sec:e2e_comparison}.
Assume additionally that tokenwise maps are bounded and Lipschitz on the input set:
$\|v(x_t)\|\le V_R$ and $\|\partial v(x_t)/\partial x_t\|\le L_v$.

\begin{enumerate}[label=(\roman*), leftmargin=*, nosep]
\item \textbf{Transformer.}
For $y_t=\sum_{j\le t}\alphadrv_{t,j}(x)\,v(x_j)$ and any $\tau<t$,
\[
\Big\|\frac{\partial y_t}{\partial x_\tau}\Big\|
\;\le\;
\alphadrv_{t,\tau}\,L_v
\;+\;
V_R\sum_{j\le t}\Big\|\frac{\partial \alphadrv_{t,j}}{\partial x_\tau}\Big\|
\;\lesssim\;\frac{1}{t+1}.
\]
In particular, for a fixed old source $\tau=O(1)$ and lag $\ell=t-\tau$, one gets $\|J^{\mathrm{e2e}}_{\tau+\ell,\tau}\|=O(1/\ell)$.

\item \textbf{Mamba.}
Assume the block admits a local ZOH-diagonal parametrization as in Proposition~\ref{prop:mamba_e2e_decay}.
If, on the input set of interest, there exists $c_\Delta>0$ such that for every $\tau<t$,
\[
\sum_{r=\tau+1}^{t}\Delta_r(x)\ge c_\Delta(t-\tau),
\]
then Corollary~\ref{cor:mamba_e2e_freeze} yields
\[
\Big\|\frac{\partial y_t}{\partial x_\tau}\Big\|
\le C(R)\exp\!\big(-\lambda c_\Delta (t-\tau)\big)
=
C(R)e^{-\lambda c_\Delta \ell}.
\]

\item \textbf{Sessa.}
Assume additionally the hypotheses of Corollary~\ref{cor:jacobian_tail_block}.
Under the diffuse feedback routing assumptions of Appendix~\ref{app:jacobian_tail},
\[
\Big\|\frac{\partial y_t}{\partial x_\tau}\Big\|
\;\le\; C\,\ell^{-\betatail}\big(1+\log(1+\ell)\big),
\qquad \betatail\in(0,1),
\]
via Corollary~\ref{cor:jacobian_tail_block}.
\end{enumerate}
\end{proposition}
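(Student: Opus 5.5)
The plan is to prove the three items separately, each by reducing to a result already established for the corresponding mechanism.

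\textbf{Item (i), Transformer.} Differentiate
\[
y_t=\sum_{j\le t}\alphadrv_{t,j}(x)\,v(x_j)
\]
by the product rule. The only value term that depends on $x_\tau$ is $j=\tau$, which contributes $\alphadrv_{t,\tau}\,\partial v(x_\tau)/\partial x_\tau$. This gives the first summand, using $\|\partial v/\partial x_t\|\le L_v$. The routing terms contribute $\sum_j v(x_j)\,\partial\alphadrv_{t,j}/\partial x_\tau$, and with $\|v\|\le V_R$ this gives the second summand.

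Next I invoke the diffuse bound $\alphadrv_{t,\tau}\le c_2/(t+1)$ from Definition~\ref{def:diffuse_regime}. I also invoke the smooth-routing bound \eqref{eq:smooth_routing_bound}, or equivalently Lemma~\ref{lem:attn_smooth_routing}, which gives $2L_\beth\alphadrv_{t,\tau}$. Together these yield
\[
\Big\|\frac{\partial y_t}{\partial x_\tau}\Big\|\le \frac{c_2L_v+V_RL_\alpha}{t+1}.
\]
For fixed $\tau=O(1)$ we have $t+1\asymp\ell$, so the bound is $O(1/\ell)$.

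\textbf{Item (ii), Mamba.} This is a direct chaining of two earlier results. Proposition~\ref{prop:mamba_e2e_decay} gives $\|\partial y_t/\partial x_\tau\|\le C(R)\,\Pi_{t,\ell}(x)$. Proposition~\ref{prop:freeze_errors_exp}, applied pointwise in $x$ with the failed-freeze lower bound on $\sum\Delta_r(x)$, gives $\Pi_{t,\ell}(x)\le e^{-\lambda c_\Delta\ell}$. This is exactly the content of Corollary~\ref{cor:mamba_e2e_freeze}, which I would cite.

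\textbf{Item (iii), Sessa.} This is the substantive part, and it is delegated to Corollary~\ref{cor:jacobian_tail_block}. Differentiating $(I-\Bfb(x))s=f(x)$ gives
\[
\frac{\partial s}{\partial x_\tau}=(I-\Bfb)^{-1}\Big(\frac{\partial f}{\partial x_\tau}+\frac{\partial \Bfb}{\partial x_\tau}\,s\Big).
\]
The forward term $\partial f/\partial x_\tau$ is supported at rows $j\ge\tau$. It is $O(1)$ at $j=\tau$, and by the item (i) argument it is $O(1/(j+1))$ for $j>\tau$.

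The routing term is $(\partial\Bfb/\partial x_\tau)s$. Row $j$ of it is bounded by $\|s\|_{\infty,2}$ times $\sum_i\|\partial[\Bfb]_{j,i}/\partial x_\tau\|$. The first factor is controlled by Lemma~\ref{lem:triangular_linf}. The second is $O(1/j)$ by the smooth-routing bound for feedback attention together with the Lipschitz gain $\gamma$.

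The injected perturbation is therefore $e_j=O(\mathbf 1[j=\tau])+O(1/j)$ for $j>\tau$. It is then propagated by $(I-\Bfb)^{-1}$, whose column $j$ decays like $(t-j)^{-\betatail}$ by Theorem~\ref{thm:poly_decay_main} (Corollary~\ref{cor:impulse_j_tail}).

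The main obstacle is the resulting convolution sum
\[
\sum_{\tau<j\le t}\frac{1}{j}\,(t-j+1)^{-\betatail}.
\]
I would split it at $j=(t+\tau)/2$. On the near half, $1/j\lesssim 1/\ell$ for old sources, and the sum of $(t-j)^{-\betatail}$ over that half is at most $\ell^{1-\betatail}$, so the near half is $O(\ell^{-\betatail})$. On the far half, $(t-j)^{-\betatail}\lesssim\ell^{-\betatail}$, and $\sum 1/j$ contributes at most $\log(1+\ell)$. This produces the stated $\ell^{-\betatail}(1+\log(1+\ell))$.

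The output gate and projection are tokenwise and bounded, so they only change constants. The residual connection contributes nothing for $\tau<t$.
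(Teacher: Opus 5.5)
Your proposal matches the paper's proof. Item (i) goes through the product rule with the diffuse bound $\alphadrv_{t,\tau}\le c_2/(t+1)$ and the smooth-routing bound \eqref{eq:smooth_routing_bound}, item (ii) is exactly Corollary~\ref{cor:mamba_e2e_freeze}, and item (iii) cites Corollary~\ref{cor:jacobian_tail_block}, whose underlying argument you reconstruct faithfully: the resolvent identity, the column tail from Corollary~\ref{cor:impulse_j_tail}, and the midpoint-split convolution of Lemma~\ref{lem:jac_convolution}.

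Two precisions for your sketch of (iii). First, the $O(1/j)$ routing forcing for $j>\tau$ relies on the gain being tokenwise (Assumption~\ref{ass:jac_gamma_local}, so $\partial\gamma_j/\partial x_\tau=0$), not merely Lipschitz; Lipschitz alone would give an $O(1)$ forcing at every $j$ and destroy the tail. Second, the near-half bound $1/j\le 2/\ell$ holds for every source because $j\ge\tau+\ell/2$, so you do not need to restrict to old sources.
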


\begin{proof}[Proof sketch]
\begin{enumerate}[label=(\roman*), leftmargin=*, nosep]
\item Differentiate \(y_t=\sum_{j\le t}\alphadrv_{t,j}(x)v(x_j)\): one term is controlled by
\(\alphadrv_{t,\tau}L_v\) and the other by
\(V_R\sum_{j\in \VisSet{t}}\|\partial \alphadrv_{t,j}/\partial x_\tau\|\).
Under the diffuse smooth-routing regime both are \(O(1/|\VisSet{t}|)\), hence \(O(1/(t+1))\)
for full-prefix attention.

\item Combine Proposition~\ref{prop:mamba_e2e_decay} with the deterministic failed-freeze-time condition
\[
\sum_{r=\tau+1}^{t}\Delta_r(x)\ge c_\Delta(t-\tau),
\]
or equivalently use Corollary~\ref{cor:mamba_e2e_freeze}.

\item This follows from Corollary~\ref{cor:jacobian_tail_block} under the additional Sessa assumptions
stated in item~(iii).
\end{enumerate}
\end{proof}

\begin{corollary}[Failed freeze time implies exponential decay of Mamba end-to-end Jacobians]
\label{cor:mamba_e2e_freeze}
Under the hypotheses of Proposition~\ref{prop:mamba_e2e_decay}, assume additionally the failed freeze-time condition of Proposition~\ref{prop:freeze_errors_exp}, namely that there exists $c_\Delta>0$ such that
\[
\sum_{r=\tau+1}^{t}\Delta_r(x)\ge c_\Delta(t-\tau)
\]
for every relevant pair $\tau<t$ and every $x\in\mathcal X_R$.
Then
\[
\Big\|\frac{\partial y_t(x)}{\partial x_\tau}\Big\|
\le
C(R)\exp\!\big(-\lambda c_\Delta (t-\tau)\big).
\]
\end{corollary}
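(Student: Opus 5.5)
The corollary follows by chaining two results already stated: Proposition~\ref{prop:mamba_e2e_decay} and Proposition~\ref{prop:freeze_errors_exp}. No new differentiation is needed.

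First, fix $x\in\mathcal X_R$ and a relevant pair $\tau<t$ with lag $\ell=t-\tau$. The hypotheses of Proposition~\ref{prop:mamba_e2e_decay} hold, so we have the pointwise bound
\[
\Big\|\frac{\partial y_t(x)}{\partial x_\tau}\Big\|\le C(R)\,\Pi_{t,\ell}(x),
\qquad
\Pi_{t,\ell}(x)=\exp\!\Big(-\lambda\sum_{r=\tau+1}^{t}\Delta_r(x)\Big),
\]
where $C(R)=C_R J_R$. This constant depends only on the uniform bounds and the rate floor $\lambda$, and not on $x$, $t$ or $\tau$.

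Second, we bound the memory factor. The failed freeze-time condition gives $\sum_{r=\tau+1}^{t}\Delta_r(x)\ge c_\Delta(t-\tau)$. Since $\lambda>0$ and $s\mapsto e^{-\lambda s}$ is decreasing, this yields $\Pi_{t,\ell}(x)\le e^{-\lambda c_\Delta(t-\tau)}$. This is exactly the content of Proposition~\ref{prop:freeze_errors_exp}, applied with index range $r=t-\ell+1,\dots,t$, which coincides with $r=\tau+1,\dots,t$.

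Third, we substitute the second bound into the first. Because $C(R)\ge 0$, the inequality is preserved, and we obtain the claimed bound for every $x\in\mathcal X_R$ and every relevant pair.

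The only point needing care is the consistency of conventions, so that the two propositions genuinely share the same quantity. Both define the memory factor through the same $\lambda$, namely the lower bound $a_n\ge\lambda$ on all modes. Both also use the same summation window for $\Delta_r$. Proposition~\ref{prop:mamba_e2e_decay} already passed from the per-mode factors $e^{-a_n\sum\Delta_r}$ to the worst case $e^{-\lambda\sum\Delta_r}$ using $\Delta_r\ge 0$. So the $\lambda$ appearing in the freeze-time bound is the correct one, and the uniformity of $C(R)$ over $x$ carries through unchanged.
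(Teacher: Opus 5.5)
Your proof is correct and matches the paper's argument: the paper also combines the pointwise bound $\|\partial y_t/\partial x_\tau\|\le C(R)\,\Pi_{t,\ell}(x)$ from Proposition~\ref{prop:mamba_e2e_decay} with the estimate $\Pi_{t,\ell}(x)\le e^{-\lambda c_\Delta(t-\tau)}$ from Proposition~\ref{prop:freeze_errors_exp}. Your remarks on the shared $\lambda$, the matching summation window, and $C(R)\ge 0$ are accurate bookkeeping that the paper leaves implicit.
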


\begin{proof}[Proof sketch]
Combine Proposition~\ref{prop:mamba_e2e_decay} with Proposition~\ref{prop:freeze_errors_exp}.
\end{proof}

\subsubsection{Deep end-to-end bounds}
\label{sec:theory_multilayer_e2e}

The fixed-routing Jacobians remain useful as mechanism diagnostics, but deep architectural
statements must be made for the \emph{end-to-end Jacobians}
\[
J^{\mathrm{e2e},(\Nlayer)}_{t,\tau}(x)
:=
\frac{\partial h_t^{(\Nlayer)}(x)}{\partial h_\tau^{(0)}(x)}\in\mathbb R^{D\times D},
\]
since these are the quantities that compose across layers by the chain rule.
The next theorem gives the corresponding deep path-sum expansion.

\begin{theorem}[Deep end-to-end aggregation]
\label{thm:e2e_deep_calculus_main}
Fix a depth $\Nlayer\ge 1$, a finite horizon $\Tctx$, and a compact input set $\mathcal X_0$.
Let
\[
h^{(0)}=x\in\mathcal X_0,\qquad
h^{(\idxlayer)} = F_{\idxlayer}\bigl(h^{(\idxlayer-1)}\bigr),
\qquad \idxlayer=1,\dots,\Nlayer,
\]
where each block $F_{\idxlayer}$ is causal and continuously differentiable on the relevant compact set
\[
\mathcal X_{\idxlayer-1}:=
F_{\idxlayer-1}\circ\cdots\circ F_1(\mathcal X_0).
\]
Assume that for each layer $\idxlayer$ there exist constants
\[
d_{\idxlayer}\ge 0,\qquad \lambda_{\idxlayer}\ge 0,
\]
and a scalar lower-triangular kernel
\[
K_{\idxlayer}: \{(t,\tau): 0\le \tau<t\le \Tctx-1\}\to [0,\infty)
\]
such that for every $u\in\mathcal X_{\idxlayer-1}$ and every $0\le \tau\le t\le \Tctx-1$,
\begin{equation}
\left\|
\frac{\partial F_{\idxlayer,t}(u)}{\partial u_\tau}
\right\|
\le
d_{\idxlayer}\,\mathbf 1[t=\tau]
+
\lambda_{\idxlayer}\,K_{\idxlayer}(t,\tau)\,\mathbf 1[\tau<t].
\label{eq:e2e_one_block_kernel_envelope_main}
\end{equation}
Then for every $x\in\mathcal X_0$ and every $0\le \tau<t\le \Tctx-1$,
\begin{align}
\left\|
J^{\mathrm{e2e},(\Nlayer)}_{t,\tau}(x)
\right\|
&\le
\sum_{k=1}^{\Nlayer}
\ \sum_{1\le \idxlayeri{1}<\cdots<\idxlayeri{k}\le \Nlayer}
\left(\prod_{m\notin\{\idxlayeri{1},\dots,\idxlayeri{k}\}} d_m\right)
\notag\\
&\qquad\qquad\qquad\qquad\cdot
\sum_{\tau=i_0<i_1<\cdots<i_k=t}
\ \prod_{r=1}^{k}
\lambda_{\idxlayeri{r}}\,K_{\idxlayeri{r}}(i_r,i_{r-1}).
\label{eq:e2e_deep_path_sum_main}
\end{align}
The same expansion also gives the diagonal bound
\[
\left\|
J^{\mathrm{e2e},(\Nlayer)}_{t,t}(x)
\right\|
\le
\prod_{\idxlayer=1}^{\Nlayer} d_{\idxlayer}.
\]
\end{theorem}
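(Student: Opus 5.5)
We argue by the chain rule, inducting on the depth $\Nlayer$. Write $J^{(n)}_{t,\tau}:=\partial h^{(n)}_t/\partial h^{(0)}_\tau$. Since every block is causal, $\partial F_{n,t}/\partial u_\sigma=0$ for $\sigma>t$. Hence
\[
J^{(n)}_{t,\tau}=\sum_{\sigma=\tau}^{t}\frac{\partial F_{n,t}}{\partial u_\sigma}\big(h^{(n-1)}\big)\,J^{(n-1)}_{\sigma,\tau},
\]
with $J^{(0)}_{t,\tau}=\mathbf 1[t=\tau]I$. Causality of the composition also makes $J^{(n)}_{\sigma,\tau}=0$ for $\sigma<\tau$. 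Each Jacobian is evaluated at $h^{(n-1)}\in\mathcal X_{n-1}$, so the envelope \eqref{eq:e2e_one_block_kernel_envelope_main} applies pointwise. Taking norms, submultiplicativity and the triangle inequality give the scalar recursion
\[
\|J^{(n)}_{t,\tau}\|\le d_n\|J^{(n-1)}_{t,\tau}\|+\sum_{\tau\le\sigma<t}\lambda_n K_n(t,\sigma)\|J^{(n-1)}_{\sigma,\tau}\|.
\]

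The cleanest bookkeeping is matrix-valued. Let $M_n:=d_nI+\lambda_n K_n$ be the $\Tctx\times\Tctx$ nonnegative lower-triangular matrix, with $K_n$ extended by zero on and above the diagonal. Let $N^{(n)}_{t,\tau}$ dominate $\|J^{(n)}_{t,\tau}\|$. The recursion yields $N^{(n)}\le M_nN^{(n-1)}$ entrywise, and $N^{(0)}=I$. Because all entries are nonnegative, entrywise inequalities are preserved under left multiplication by $M_n$. Induction therefore gives $\|J^{(\Nlayer)}_{t,\tau}\|\le (M_{\Nlayer}\cdots M_1)_{t,\tau}$.

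Next expand the product $\prod_n(d_nI+\lambda_nK_n)$ by distributivity. For each layer we choose either the diagonal factor $d_nI$ or the strictly lower factor $\lambda_nK_n$. Let $n_1<\cdots<n_k$ be the layers where the kernel is chosen. The resulting term is $\prod_{m\notin\{n_r\}}d_m$ times the ordered product $\lambda_{n_k}K_{n_k}\cdots\lambda_{n_1}K_{n_1}$, since the scalar identities commute past everything.

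The $(t,\tau)$ entry of that ordered product of strictly lower-triangular matrices is the path sum $\sum_{\tau=i_0<\cdots<i_k=t}\prod_r\lambda_{n_r}K_{n_r}(i_r,i_{r-1})$. The index order matters here: the earliest layer $n_1$ acts on the first hop $(i_1,i_0)$. Strict increase of the indices comes from strict lower-triangularity.

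For $\tau<t$, the $k=0$ term is $(\prod d_m)\,\mathbf 1[t=\tau]=0$, so only $k\ge1$ survives. This is exactly \eqref{eq:e2e_deep_path_sum_main}. Terms with $k>t-\tau$ vanish automatically, so restricting to $k\le\Nlayer$ loses nothing. For $t=\tau$, every $k\ge1$ term needs a strictly increasing path from $t$ to $t$, which is impossible. Only the all-diagonal term remains, giving the bound $\prod_n d_n$.

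The main obstacle is keeping the ordering and the nonnegativity straight. The domination must be stated entrywise on nonnegative matrices, not only as norms of the full block Jacobian matrices. The composite points must lie in the compact sets $\mathcal X_{n-1}$ where the envelope is assumed; this holds by the definition of those sets, and $C^1$ on compacta justifies the chain rule. The rest is routine expansion.
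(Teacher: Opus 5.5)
Your proposal is correct and is essentially the paper's proof: the paper likewise inducts on depth via the chain rule to obtain the entrywise domination $\|J^{\mathrm{e2e},(p)}_{t,\tau}\|\le[(\mathcal D_p+\mathcal G_p)\cdots(\mathcal D_1+\mathcal G_1)](t,\tau)$, where $\mathcal D_n(t,\tau)=d_n\mathbf 1[t=\tau]$ and $\mathcal G_n(t,\tau)=\lambda_nK_n(t,\tau)\mathbf 1[\tau<t]$ (exactly your $M_n$), then expands the product by distributivity into ordered products of the strictly lower-triangular factors and reads off the path sum. The empty-set term is dropped off the diagonal and is the only survivor on it, just as you argue; your aside about terms with $k>t-\tau$ is harmless but unnecessary, since $k\le\Nlayer$ holds automatically in the expansion.
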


\begin{proof}[Proof sketch]
This is a direct chain-rule expansion for the full block Jacobian.
Proof in Appendix~\ref{app:deep_e2e_calculus}.
\end{proof}

Thus deep long-range memory is controlled by the path sum induced by the
one-block end-to-end Jacobian envelopes.

For the family-over-horizon comparison used below, one needs a horizon-uniform
version of this calculus, i.e., bounds whose constants are independent of the
context length \(\Tctx\). The fixed-horizon model-class estimates and the
abstract horizon-uniform lifting are recorded in
Appendix~\ref{app:deep_e2e_calculus}--\ref{app:deep_e2e_uniform_horizon}.
Here we state only the resulting horizon-uniform decay envelopes needed for the
comparison-class impossibility argument.

\begin{corollary}[Horizon-uniform deep decay envelopes]
\label{cor:e2e_deep_decay_envelopes_uniform_horizon}
Assume the hypotheses of
Appendix Theorem~\ref{thm:app_e2e_residual_calculus_uniform}.

\begin{enumerate}[label=(\roman*), leftmargin=*, nosep]
\item \textbf{Transformer.}
Assume that for each layer \(\idxlayer\) there exists \(a_{\idxlayer}>0\) such that
\[
K_{\idxlayer}(t,\tau)\le \frac{a_{\idxlayer}}{t+1},
\qquad \tau<t.
\]
Fix a bounded source family \(0\le \tau\le \tau_{\max}\).
Then for every \(\ell\ge 1\),
\[
\sup_{\Tctx\ge \tau_{\max}+\ell+1}
\ \sup_{0\le \tau\le \tau_{\max}}
\ \sup_{x\in\mathcal X_0^{(\Tctx)}}
\left\|
J^{\mathrm{e2e},(\Nlayer)}_{\tau+\ell,\tau}(x;\Tctx)
\right\|
\lesssim_{\tau_{\max},\Nlayer}
\frac{(\log(1+\ell))^{\Nlayer-1}}{1+\ell}.
\]
In particular, the right-hand side tends to \(0\) as \(\ell\to\infty\), so this is a genuine
horizon-uniform asymptotic dilution law on bounded-source families.

\item \textbf{Mamba.}
Assume that for each layer \(\idxlayer\) there exist \(a_{\idxlayer}>0\) and \(c_{\idxlayer}>0\) such that
\[
K_{\idxlayer}(t,\tau)\le a_{\idxlayer}e^{-c_{\idxlayer}(t-\tau)},
\qquad \tau<t.
\]
Set \(c_\ast:=\min_{\idxlayer}c_{\idxlayer}\).
Then for every \(\ell\ge 1\),
\[
\sup_{\Tctx\ge \ell+1}
\ \sup_{0\le \tau\le \Tctx-\ell-1}
\ \sup_{x\in\mathcal X_0^{(\Tctx)}}
\left\|
J^{\mathrm{e2e},(\Nlayer)}_{\tau+\ell,\tau}(x;\Tctx)
\right\|
\lesssim_{\Nlayer}
(1+\ell)^{\Nlayer-1}e^{-c_\ast\ell}.
\]
In particular, this yields a genuine horizon-uniform exponential forgetting law in the lag \(\ell\).

\item \textbf{Sessa.}
Assume that for each layer \(\idxlayer\) there exist \(a_{\idxlayer}>0\) and a common exponent
\(\betatail\in(0,1)\) such that
\[
K_{\idxlayer}(t,\tau)\le
a_{\idxlayer}(t-\tau)^{-\betatail}\bigl(1+\log(1+t-\tau)\bigr),
\qquad \tau<t.
\]
Then for every \(\ell\ge 1\),
\[
\sup_{\Tctx\ge \ell+1}
\ \sup_{0\le \tau\le \Tctx-\ell-1}
\ \sup_{x\in\mathcal X_0^{(\Tctx)}}
\left\|
J^{\mathrm{e2e},(\Nlayer)}_{\tau+\ell,\tau}(x;\Tctx)
\right\|
\lesssim_{\Nlayer,\betatail}
\sum_{k=1}^{\Nlayer}
\ell^{k(1-\betatail)-1}\bigl(1+\log(1+\ell)\bigr)^k.
\]
In particular, if
\[
\Nlayer(1-\betatail)<1,
\]
then the right-hand side tends to \(0\) as \(\ell\to\infty\), yielding a genuine
horizon-uniform asymptotic decay law in the lag.
Outside this subcritical regime, one still retains a controlled horizon-uniform upper envelope.
\end{enumerate}
\end{corollary}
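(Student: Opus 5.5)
The plan is to start from the deep path-sum expansion of Theorem~\ref{thm:e2e_deep_calculus_main}, in its horizon-uniform form from Appendix Theorem~\ref{thm:app_e2e_residual_calculus_uniform}. There the diagonal constants $d_{\idxlayer}$, the off-diagonal constants $\lambda_{\idxlayer}$ and the kernels $K_{\idxlayer}$ are independent of $\Tctx$. This reduces each item to bounding, for every $1\le k\le \Nlayer$, the $k$-fold causal convolution-type sum
\[
S_k(t,\tau):=\sum_{\tau=i_0<i_1<\cdots<i_k=t}\ \prod_{r=1}^{k}K_{\idxlayeri{r}}(i_r,i_{r-1}).
\]
The prefactors $\prod_m d_m$ and $\prod_r\lambda_{\idxlayeri{r}}$, together with the $\binom{\Nlayer}{k}$ choices of layer subsets, contribute only a constant depending on $\Nlayer$. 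Because $S_k$ does not involve $\Tctx$ or $x$, the suprema over $\Tctx$ and $x\in\mathcal X_0^{(\Tctx)}$ in each item are automatic. Only the supremum over $\tau$ needs care, and it is the reason item~(i) is restricted to bounded sources. Each item therefore comes down to an induction on $k$ that bounds $S_k$ by the stated envelope.

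For item~(ii), insert $K\le a e^{-c_\ast(i_r-i_{r-1})}$. The exponentials telescope to $e^{-c_\ast\ell}$. What remains is the number of compositions of $\ell$ into $k$ positive parts, namely $\binom{\ell-1}{k-1}\le \ell^{k-1}$. Summing over $k\le\Nlayer$ gives $(1+\ell)^{\Nlayer-1}e^{-c_\ast\ell}$, uniformly in $\tau$.

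For item~(iii), the kernel is translation invariant, so I would show by induction that
\[
S_k(\tau+\ell,\tau)\lesssim \ell^{k(1-\betatail)-1}\bigl(1+\log(1+\ell)\bigr)^k.
\]
The key step is the discrete Beta-integral estimate
\[
\sum_{m=1}^{\ell-1}m^{a-1}(\ell-m)^{b-1}\lesssim_{a,b}\ell^{a+b-1}\qquad (a,b>0),
\]
applied with $a=(k-1)(1-\betatail)$ and $b=1-\betatail$. The logarithms are handled by bounding each factor's log by $1+\log(1+\ell)$, which is monotone, and pulling it out. At $k=1$ the claim is just the kernel bound. One subtlety is that $a=(k-1)(1-\betatail)$ can be small, but it stays positive for $k\ge2$. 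The exponents then add as $k(1-\betatail)-1$, and the constants depend only on $\Nlayer$ and $\betatail$. The subcritical remark follows because when $\Nlayer(1-\betatail)<1$, every exponent $k(1-\betatail)-1$ is negative, so each term decays.

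Item~(i) is the main obstacle, since $K\le a/(t+1)$ depends on absolute time and not on the lag. Writing $i_r$ for the intermediate times, $S_k$ is bounded by $\prod_r a\,/(i_r+1)$ summed over increasing chains from $\tau$ to $t$. The final factor gives $1/(t+1)$. The remaining $k-1$ intermediate factors form a sum over $\tau<i_1<\cdots<i_{k-1}<t$ of $\prod 1/(i_r+1)$. Dropping the ordering constraint, each index contributes at most the harmonic sum $\sum_{i=\tau+1}^{t}1/(i+1)\le \log(1+t)$, so $S_k\lesssim (\log(1+t))^{k-1}/(t+1)$. Since $t=\tau+\ell$ with $\tau\le\tau_{\max}$, we have $t+1\ge 1+\ell$ and $\log(1+t)\lesssim_{\tau_{\max}}\log(1+\ell)$ for $\ell\ge1$. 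Taking the largest power $k=\Nlayer$ gives the claim. This is where the bounded-source restriction enters. The point to check is that the $\ell$-uniform comparison $\log(2+\tau_{\max}+\ell)\le C_{\tau_{\max}}\log(1+\ell)$ holds from $\ell=1$ onward, which it does because $\log 2>0$.
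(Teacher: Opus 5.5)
Your proposal is correct and follows essentially the paper's route. You reduce via the horizon-uniform path-sum bound of Theorem~\ref{thm:app_e2e_residual_calculus_uniform}, then bound the $k$-jump sums by a nested harmonic estimate for item~(i), composition counting for item~(ii), and the discrete power-convolution estimate after pulling out the logarithms for item~(iii), exactly as in Propositions~\ref{prop:app_e2e_transformer_deep}, \ref{prop:app_e2e_ssm_deep} and~\ref{prop:app_e2e_sessa_deep}; the only difference is that in item~(i) you drop the ordering of the intermediate indices, which loses the harmless $1/(k-1)!$ factor of Lemma~\ref{lem:app_nested_harmonic}.
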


\begin{proof}[Proof sketch]
Apply the horizon-uniform residual calculus in
Appendix Theorem~\ref{thm:app_e2e_residual_calculus_uniform}.
The Transformer, Mamba, and Sessa kernel-class estimates are proved in
Appendix Propositions~\ref{prop:app_e2e_transformer_deep},
\ref{prop:app_e2e_ssm_deep}, and \ref{prop:app_e2e_sessa_deep}, respectively.
Combining those bounds yields the stated horizon-uniform envelopes.
\end{proof}

\paragraph*{Consequence}
The fixed-horizon deep bounds are recorded in Appendix~\ref{app:deep_e2e_calculus},
whereas Corollary~\ref{cor:e2e_deep_decay_envelopes_uniform_horizon} gives lag laws uniform in \(\Tctx\).
Thus diffuse Transformers dilute like \((\log \ell)^{\Nlayer-1}/\ell\) on bounded-source families,
failed-freeze-time Mamba attenuates exponentially,
and Sessa retains the stated heavy-tail upper envelope.
These are upper-envelope results.
They are the right tool for the impossibility statements for the comparison classes,
but they do not yet yield a positive retrieval theorem for Sessa.
The next subsection does.

\subsubsection{Flexible finite-horizon selective retrieval}
\label{sec:theory_uniform_finite_horizon_retrieval}

We now state the main positive memory theorem of the section.
The point is not merely that Sessa admits a heavy-tail upper envelope,
but that on each finite-horizon family it can realize prescribed retrieval exponents
\(\nu_k(\beta)=k(1-\beta)-1\),
with constants uniform in both the horizon \(H\) and the source index \(\tau_\ast\).
For each \(H\) and \(\tau_\ast\), the realizing network may depend on \((H,\tau_\ast)\),
while the retrieval-profile constants remain uniform in both parameters.
 
\begin{definition}[Flexible finite-horizon profile realization]
\label{def:uniform_finite_horizon_profile_realization}
Fix an integer \(\tau_{\max}\ge 0\), an exponent \(\nu\in\mathbb R\), and for each \(H\ge 1\) a horizon
\[
T_H:=\tau_{\max}+H+1.
\]
Let \(\mathcal X_0^{(H)}\subset (\mathbb R^D)^{T_H}\) be compact input sets satisfying the uniform bound
\[
\sup_{H\ge 1}\ \sup_{x\in\mathcal X_0^{(H)}} \|x\|_{\infty,2}\le R<\infty.
\]

Let \(\mathfrak C\) be an architecture class.
We say that \(\mathfrak C\) realizes the profile \(\nu\) on the bounded source family
\[
0\le \tau_\ast\le \tau_{\max}
\]
if there exist constants
\[
m_->0,\qquad m_+<\infty,\qquad c_->0,
\]
independent of \(H\) and \(\tau_\ast\), such that for every \(H\ge 1\) and every source index
\(\tau_\ast\in\{0,\dots,\tau_{\max}\}\), there exist

\begin{enumerate}[label=(\roman*), leftmargin=*, nosep]
\item a network \(G_{H,\tau_\ast}\in \mathfrak C\) acting on \((\mathbb R^D)^{T_H}\),

\item a source probe
\[
c^{(H,\tau_\ast)}\in\mathbb R^D
\qquad\text{and target probes}\qquad
\rho_t^{(H,\tau_\ast)}\in\mathbb R^D,\quad 0\le t\le T_H-1,
\]
satisfying the normalization bounds
\[
\|c^{(H,\tau_\ast)}\|_2\le 1,
\qquad
\|\rho_t^{(H,\tau_\ast)}\|_2\le 1
\quad (0\le t\le T_H-1),
\]

\item the full end-to-end Jacobian blocks
\[
J^{G_{H,\tau_\ast}}_{t,\tau}(x)
:=
\frac{\partial G_{H,\tau_\ast,t}(x)}{\partial x_\tau}
\in\mathbb R^{D\times D},
\]

\item the scalar transport score
\[
\mathsf S^{(H,\tau_\ast)}_{t,\tau}(x)
:=
\bigl(\rho_t^{(H,\tau_\ast)}\bigr)^\top
J^{G_{H,\tau_\ast}}_{t,\tau}(x)\,
c^{(H,\tau_\ast)},
\]

\item and the corresponding selective margin
\[
\mathsf M^{(H,\tau_\ast)}_{t,\tau_\ast}(x)
:=
\mathsf S^{(H,\tau_\ast)}_{t,\tau_\ast}(x)
-
\sum_{\substack{0\le \tau<t\\ \tau\neq \tau_\ast}}
\bigl|\mathsf S^{(H,\tau_\ast)}_{t,\tau}(x)\bigr|.
\]
\end{enumerate}

These data are required to satisfy, for every \(x\in\mathcal X_0^{(H)}\),
\[
m_-
\le
\mathsf M^{(H,\tau_\ast)}_{\tau_\ast+1,\tau_\ast}(x)
\le
m_+,
\]
and
\[
\mathsf M^{(H,\tau_\ast)}_{\tau_\ast+\ell,\tau_\ast}(x)
\ge
c_-(1+\ell)^\nu,
\qquad
1\le \ell\le H.
\]
\end{definition}

\begin{theorem}[Flexible finite-horizon selective retrieval for deep Sessa]
\label{thm:flexible_route_B_sessa}
Work in the identity-normalized formulation with the exact GELU activation
\[
\mathrm{GELU}(z)=z\,\Phi(z),
\]
and assume
\[
D\ge 7.
\]
Fix
\[
\beta\in(0,1),\qquad k\ge 1,\qquad \tau_{\max}\ge 0,
\]
and define
\[
\nu_k(\beta):=k(1-\beta)-1.
\]
Let \(\{\mathcal X_0^{(H)}\}_{H\ge 1}\) be a uniformly bounded family of compact sets as in
Definition~\ref{def:uniform_finite_horizon_profile_realization}.
Then the class of LN-free Sessa networks realizes the profile \(\nu_k(\beta)\) on the
bounded source family \(0\le \tau_\ast\le \tau_{\max}\) in the sense of
Definition~\ref{def:uniform_finite_horizon_profile_realization}.

More precisely, there exist constants
\[
m_->0,\qquad m_+<\infty,\qquad c_->0,
\]
depending only on \((k,\beta,\tau_{\max},R)\), but independent of \(H\) and \(\tau_\ast\), such that for every \(H\ge 1\) and every \(\tau_\ast\in\{0,\dots,\tau_{\max}\}\), there exist a finite-depth LN-free Sessa network
\[
G_{H,\tau_\ast}:(\mathbb R^D)^{T_H}\to (\mathbb R^D)^{T_H}
\]
and a scalar channel score \(\mathsf S^{(H,\tau_\ast)}\) with selective margin
\(\mathsf M^{(H,\tau_\ast)}\) such that for every \(x\in\mathcal X_0^{(H)}\),
\[
m_-
\le
\mathsf M^{(H,\tau_\ast)}_{\tau_\ast+1,\tau_\ast}(x)
\le
m_+,
\]
and
\[
\mathsf M^{(H,\tau_\ast)}_{\tau_\ast+\ell,\tau_\ast}(x)
\ge
c_-(1+\ell)^{\nu_k(\beta)},
\qquad
1\le \ell\le H.
\]

Consequently:
if \(\nu_k(\beta)<0\), deep Sessa realizes a decaying profile; if \(\nu_k(\beta)=0\), it realizes a frozen profile; and if \(\nu_k(\beta)>0\), it realizes an increasing profile.
\end{theorem}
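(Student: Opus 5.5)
The plan is an explicit construction in three stages: \emph{tagging}, \emph{transport}, \emph{readout}. We reserve a few of the $D\ge 7$ coordinates as dedicated channels: a data channel carrying the source probe direction $c$, a positional channel, a marker channel, and a transport channel, with the rest set to zero.

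\emph{Stage 1: tag the source.} Use the block of Corollary~\ref{cor:pos_code_padding} to add separated positional offsets $p_t=c_t(\lambda u)$ along the positional channel. Then use one further block to write a gate into the marker channel. Its input projection $W^{\mathrm{in}}$ reads the positional coordinate, and its bias is chosen so that $\mathrm{GELU}$ is applied to an argument that is of order $+1$ exactly at $t=\tau_\ast$ and very negative at every other $t$. This works because the scalar ranges $\langle x_t+p_t,u\rangle$ are pairwise disjoint and bounded uniformly in $R$; only finitely many positions $0,\dots,\tau_{\max}+1$ need to be distinguished sharply, while later positions are simply pushed further down.

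The key device for selectivity is the multiplicative gate $s\odot g$. In this block we route the data channel (along $c$) through the mixer and gate it by the marker. Then $\partial(\text{transport channel})_t/\partial x_\tau$ in direction $c$ at $\tau\neq\tau_\ast$ is proportional to $\Phi(\text{very negative})$ or its derivative. We let the negative offset depend on $H$ (allowed, since $G_{H,\tau_\ast}$ may depend on $H$), so that this leakage is at most $e^{-\kappa H}$ times a constant. Summed over at most $T_H$ distractors, it stays below any prescribed fraction of the main term, uniformly in $x\in\mathcal X_0^{(H)}$.

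\emph{Stage 2: transport.} Stack $k$ Sessa blocks. In each we set the feedback query/key weights to zero, so $\alphafb_{t,j}=1/t$ exactly. We set $w^\gamma=0$ and $b^\gamma=\operatorname{artanh}(1-\beta)$, so $\gamma_t\equiv\gamma:=1-\beta$. The forward attention is made a pure identity read: zero $q^f,k^f$ would give uniform averaging, so instead we let $f$ carry the signal through the value path restricted to the token itself, accepting a diagonal factor. GELU on the transport channel is evaluated at a bias-shifted point, so that its derivative $\Phi(z)+z\phi(z)$ lies in a fixed interval $[\underline\phi,\overline\phi]\subset(0,\infty)$ on the compact image; this uses a uniform state bound from Proposition~\ref{prop:one_block_bibo_main} and Lemma~\ref{lem:triangular_linf}. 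The gate $g$ on this channel is a positive constant from the bias.

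The one-block transport-channel Jacobian is then, up to these bounded positive factors, the matrix $(I-\Bfb)^{-1}$. By Corollary~\ref{cor:theta_tail_uniform_bounded_source} its entries are nonnegative, which is clear from the path expansion since all $\BfbEnt{t}{j}\ge0$, and satisfy $\Theta(\ell^{-\beta})$ two-sided bounds. The residual is handled by routing the output into a fresh channel, so that the composed Jacobian of the transport chain is exactly a product of nonnegative lower-triangular matrices. Composing $k$ such layers, the score $\mathsf S_{\tau_\ast+\ell,\tau_\ast}$ is a $k$-fold discrete convolution of kernels $\asymp (t-\tau)^{-\beta}$, possibly with time-inhomogeneous factors that are nevertheless uniformly comparable on $t\ge\tau$ with $\tau\le\tau_{\max}$. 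Because everything is nonnegative, there is no cancellation, and Beta-integral estimates for $\sum_{i}i^{-\beta}(\ell-i)^{-\beta}\asymp \ell^{1-2\beta}$, iterated, give a lower bound $\gtrsim(1+\ell)^{k(1-\beta)-1}$. The matching upper envelope is Corollary~\ref{cor:e2e_deep_decay_envelopes_uniform_horizon}(iii). The $\ell=1$ value is a fixed product of bounded factors, which gives $m_\pm$.

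\emph{Stage 3: readout.} Take $\rho_t$ to be the transport-channel unit vector at every $t$, and $c$ the data-channel unit vector. The margin is the main term minus the leakage from Stage 1, and the leakage is absorbed as above.

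\emph{Main obstacle.} I expect the hardest part to be keeping all constants uniform in $H$ and $\tau_\ast$ while also guaranteeing that the end-to-end Jacobian has no stray contributions: from the input dependence of attention weights and gains (killed by zero query/key and $w^\gamma$ weights), from residual cross-talk between channels (killed by the block-diagonal channel assignment, using $D\ge7$), and from GELU curvature. The delicate point is the tagging layer, where exact separation is impossible with GELU. There I would first try an $H$-dependent steepness $\lambda_H$ with $\Phi(-\lambda_H)T_H\to0$, and check that this does not spoil the uniform upper bound $m_+$.
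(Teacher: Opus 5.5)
Your skeleton is close to the paper's: a positional code, a tag for $\tau_\ast$ with $H$-dependent leakage, $k$ diffuse feedback layers, and probes on a distinguished channel. The gap is in Stage 2. A stack of plain uniform-routing Sessa layers does not produce the exponent $\nu_k(\beta)$ for any $k\ge 2$.

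With $\alpha^{\mathrm{fb}}_{t,j}=1/t$ and $\gamma_t\equiv\gamma=1-\beta$, the resolvent is exactly
\[
\Theta_{t,\tau}=[(I-B_{\mathrm{fb}})^{-1}]_{t,\tau}=\gamma\,\frac{\Gamma(\tau+1)}{\Gamma(\tau+1+\gamma)}\,\frac{\Gamma(t+\gamma)}{\Gamma(t+1)}\asymp(\tau+1)^{-(1-\beta)}(t+1)^{-\beta}\qquad(\tau<t).
\]
This is a factorized kernel, not a lag kernel $(t-\tau)^{-\beta}$; for example $\Theta_{\tau+1,\tau}=\gamma/(\tau+1)$. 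Corollary~\ref{cor:theta_tail_uniform_bounded_source} gives $\Theta(\ell^{-\beta})$ only for sources $\tau\le\tau_{\max}$. Corollary~\ref{cor:theta_tail_uniform_shifted} states explicitly that no such lower bound holds uniformly in the source.

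In your $k$-fold composition only the first hop starts in the bounded family. The intermediate jump times $i_1,\dots,i_{k-1}$ are of order $\ell$, and each contributes $(i_r+1)^{-(1-\beta)}(i_r+1)^{-\beta}=(i_r+1)^{-1}$. The $k$-jump part of the composed kernel is therefore
\[
(\tau_\ast+1)^{-(1-\beta)}(t+1)^{-\beta}\sum_{\tau_\ast<i_1<\cdots<i_{k-1}<t}\ \prod_{r=1}^{k-1}\frac{1}{i_r+1}\lesssim(1+\ell)^{-\beta}\bigl(\log(2+\ell)\bigr)^{k-1},
\]
and paths with fewer jumps are no larger. Bounded positive $\mathrm{GELU}'$ factors and a near-diagonal forward read do not change this. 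So your score decays like $\ell^{-\beta}$ up to logarithms for every $k$. The required bound $c_-(1+\ell)^{k(1-\beta)-1}$ fails whenever $k\ge 2$, and in particular you obtain no frozen or increasing profile. The iterated Beta-integral estimate is valid only for a translation-invariant kernel, which one uniform-routing block does not provide.

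The missing idea is \emph{profile compensation}. The paper first builds a control channel $r_t\asymp(t+1)^{1-\beta}$ with a signal-transparent preparatory network $Q_H$ (Corollary~\ref{cor:preparatory_power_profile}). It does this in two steps: a source-$0$ tail $\asymp(t+1)^{-\beta}$ is made by selecting a constant seed at $t=0$ and pushing it through the same uniform feedback solve, and this tail is then summed by a damped predecessor integrator. Each macro-layer (Lemma~\ref{lem:profile_weighted_macro}) multiplies the signal at position $j$ by a self-focused read $m_j\asymp(j+1)^{1-\beta}$ of that channel before the uniform forward average and feedback solve. This cancels the source factor $(j+1)^{-(1-\beta)}$ and gives $K_{\mathrm{mac}}(i,j)\asymp(i+1)^{-\beta}$ uniformly in $j<i$. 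Only then do balanced $k$-jump paths (Lemma~\ref{lem:balanced_profile_paths}) yield $(1+\ell)^{k(1-\beta)-1}$: there are about $\ell^{k-1}$ of them, each of weight $\asymp\ell^{-k\beta}$.

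Stage 1 also needs two smaller repairs.
\begin{itemize}
\item A single GELU unit of an affine function of the monotone positional scalar cannot be very negative on both sides of an interior $\tau_\ast$. The tag has to be a bump assembled from several GELU units, as in the paper's plateau window.
\item Gating an already-mixed $s_t$ by a marker at $t$ lets the inputs $x_j$, $j<\tau_\ast$, enter $s_{\tau_\ast}$ with forward weights that are not small. They then become competitors of the same size as the main term. Instead, gate the signal at its own position and mix the marker, as the paper's local multiplier does, with positional self-focusing of the forward attention, since softmax is never exactly diagonal.
\end{itemize}
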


\begin{proof}[Proof sketch]
\smallskip\noindent\emph{Composite architecture.}
Fix \(H\ge 1\) and \(0\le \tau_\ast\le \tau_{\max}\).
Set
\[
L_H:=\tau_{\max}+H,
\qquad
T_H:=L_H+1.
\]
We construct
\[
G_{H,\tau_\ast}
=
M_{H,k}\circ\cdots\circ M_{H,1}\circ
S_{H,\tau_\ast,\varepsilon_H}\circ Q_H\circ P_H.
\]
Here \(P_H\) writes a strictly ordered positional code, \(Q_H\) is a signal-transparent preparatory network producing the power profile, \(S_{H,\tau_\ast,\varepsilon_H}\) selects the source \(\tau_\ast\), and \(M_{H,1},\dots,M_{H,k}\) are diffuse profile-compensated macro-layers.

By Corollaries~\ref{cor:pos_code_one_direction} and~\ref{cor:pos_code_signal_transparency}, \(P_H\) writes a strictly ordered positional code on \(e_{\mathrm{pos}}\) while remaining transparent to perturbations along \(e_{\mathrm{sig}}\). Corollary~\ref{cor:preparatory_power_profile} yields a constant-depth network \(Q_H\) that preserves the signal and positional channels and writes a profile
\[
r_t\asymp (t+1)^{1-\beta}.
\]
Lemma~\ref{lem:selector_window} yields a selector with gain \(\asymp 1\) at \(\tau_\ast\) and off-target suppression \(\varepsilon_H\asymp(H+1)^{-1}\). Lemma~\ref{lem:profile_weighted_macro} yields macro-layers whose selected-channel transport has kernel size \(\asymp (i+1)^{-\beta}\).

Appendix Lemma~\ref{lem:transparent_preprocessing_exact_transport} identifies the selected-channel transport of the post-preparatory stack with the actual Jacobian score. The desired lower bound follows by restricting to balanced \(k\)-jump paths and applying Lemma~\ref{lem:balanced_profile_paths}, while the competitor contribution is controlled by Lemma~\ref{lem:competitor_suppression_profile}. Choosing the construction constants appropriately makes the competitor mass absorbable for all \(1\le \ell\le H\), yielding the stated anchored bounds.
\end{proof}

\begin{corollary}[Flexible frozen and increasing profiles require depth]
\label{cor:flexible_route_B_depth_requirement}
Under Theorem~\ref{thm:flexible_route_B_sessa}:
\begin{enumerate}[label=(\roman*), leftmargin=*, nosep]
\item for \(k=1\), one has
\[
\nu_1(\beta)=-\beta<0,
\]
so only decaying profiles occur;
\item for \(k\ge 2\) and
\[
\beta=1-\frac1k,
\]
one gets the frozen profile \(\nu_k(\beta)=0\);
\item for \(k\ge 2\) and
\[
0<\beta<1-\frac1k,
\]
one gets the increasing profile \(\nu_k(\beta)>0\).
\end{enumerate}
\end{corollary}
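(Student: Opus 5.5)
This corollary is a direct specialization of Theorem~\ref{thm:flexible_route_B_sessa}. The plan is to evaluate the exponent \(\nu_k(\beta)=k(1-\beta)-1\) in each of the three cases and then read off the sign classification stated at the end of that theorem ("if \(\nu_k(\beta)<0\) \ldots decaying; \(=0\) frozen; \(>0\) increasing"). No new construction is needed. For each admissible pair \((k,\beta)\), the theorem already supplies networks \(G_{H,\tau_\ast}\) and margins satisfying \(\mathsf M_{\tau_\ast+\ell,\tau_\ast}\ge c_-(1+\ell)^{\nu_k(\beta)}\) uniformly in \(H\) and \(\tau_\ast\). So the only work is checking that the stated parameters lie in the theorem's range \(\beta\in(0,1)\), \(k\ge1\), and computing the sign.

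The steps are as follows. For (i), substitute \(k=1\) to get \(\nu_1(\beta)=(1-\beta)-1=-\beta\). Since \(\beta\in(0,1)\), this is strictly negative, so every profile realized with one macro-layer is decaying. For (ii), take \(k\ge2\) and \(\beta=1-1/k\). Since \(k\ge2\), we have \(\beta\in[1/2,1)\subset(0,1)\), so the theorem applies, and \(\nu_k(\beta)=k\cdot\frac1k-1=0\): the frozen profile. For (iii), take \(k\ge2\) and \(0<\beta<1-1/k\). Then \(1-\beta>1/k\), so \(k(1-\beta)>1\) and \(\nu_k(\beta)>0\): an increasing profile. The interval \((0,1-1/k)\) is nonempty exactly when \(k\ge2\).

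The only point needing care is the phrase ``require depth'' in the title. I read it as the observation that, within the family of constructions of Theorem~\ref{thm:flexible_route_B_sessa}, nonnegative exponents need \(k\ge2\). This holds because \(\nu_k(\beta)\ge0\) iff \(\beta\le1-1/k\), and for \(k=1\) this would force \(\beta\le0\), which the theorem excludes. I will state it in that form, and I will not claim a lower bound over all one-layer Sessa networks, since that would require an impossibility argument beyond the cited theorem. That interpretive point is the only potential obstacle; the remainder is arithmetic.
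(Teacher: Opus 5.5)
Your proposal is correct and matches the paper's treatment: the corollary is read off directly from the sign classification at the end of Theorem~\ref{thm:flexible_route_B_sessa}, by evaluating \(\nu_k(\beta)=k(1-\beta)-1\) in each case and checking that \(\beta\in(0,1)\). Your caveat that ``require depth'' refers only to the theorem's construction family (\(k\ge 2\) macro-layers), and is not an impossibility claim for all shallow Sessa networks, is the right reading.
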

\subsubsection{Impossibility for the comparison classes in the same flexible finite-horizon regime}
\label{sec:theory_uniform_finite_horizon_comparison_classes}

This is the matching negative statement in the same family-over-\(H\) regime.
By the horizon-uniform end-to-end envelopes from
Section~\ref{sec:theory_multilayer_e2e},
diffuse fixed-depth Transformers and failed-freeze-time fixed-depth Mamba admit only decaying upper bounds,
so they cannot realize frozen or increasing retrieval profiles.

\begin{proposition}[Comparison-class impossibility for flexible selective retrieval]
\label{prop:comparison_class_impossibility_same_regime}
Fix \(\tau_{\max}\ge 0\), and let
\[
T_H=\tau_{\max}+H+1.
\]

Assume we are given, for every \(H\ge 1\) and every \(\tau_\ast\in\{0,\dots,\tau_{\max}\}\), a network
\[
G^{\mathrm{comp}}_{H,\tau_\ast}
\]
from one of the following two comparison classes:
a depth-\(L\) causal Transformer in the diffuse smooth-routing regime,
or a depth-\(L\) causal Mamba stack in the failed-freeze-time regime.

Assume moreover that, in the Transformer case, the family satisfies the hypotheses of
Corollary~\ref{cor:e2e_deep_decay_envelopes_uniform_horizon}, item~(i),
with constants independent of \(H\) and \(\tau_\ast\), and that, in the Mamba case,
the family satisfies the hypotheses of
Corollary~\ref{cor:e2e_deep_decay_envelopes_uniform_horizon}, item~(ii),
with constants independent of \(H\) and \(\tau_\ast\).

Then no such comparison-class family can realize a frozen or increasing profile in the sense of
Definition~\ref{def:uniform_finite_horizon_profile_realization}. More precisely:

\begin{enumerate}[label=(\roman*), leftmargin=*, nosep]
\item \textbf{Transformer.}
There do not exist constants \(m_->0\), \(m_+<\infty\), \(c_->0\), and \(\nu\ge 0\),
independent of \(H\) and \(\tau_\ast\), such that
\[
m_-\le \mathsf M^{(H,\tau_\ast)}_{\tau_\ast+1,\tau_\ast}(x)\le m_+,
\]
and
\[
\mathsf M^{(H,\tau_\ast)}_{\tau_\ast+\ell,\tau_\ast}(x)\ge c_-(1+\ell)^\nu,
\qquad 1\le \ell\le H,
\]
hold uniformly for all \(H,\tau_\ast,x\).

\item \textbf{Mamba.}
The same impossibility holds for failed-freeze-time Mamba families.
\end{enumerate}
\end{proposition}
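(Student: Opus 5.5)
The argument is by contradiction: the selective margin is bounded above by the score at the true source, and the score is bounded above by the norm of the full end-to-end Jacobian block. Suppose constants \(m_->0\), \(m_+<\infty\), \(c_->0\) and \(\nu\ge 0\) exist as in the statement. Since the competitor sum in \(\mathsf M\) is nonnegative, for every \(H\), \(\tau_\ast\), \(x\in\mathcal X_0^{(H)}\) and \(1\le\ell\le H\) we have
\[
c_-(1+\ell)^{\nu}\le \mathsf M^{(H,\tau_\ast)}_{\tau_\ast+\ell,\tau_\ast}(x)\le \mathsf S^{(H,\tau_\ast)}_{\tau_\ast+\ell,\tau_\ast}(x)\le \|\rho^{(H,\tau_\ast)}_{\tau_\ast+\ell}\|_2\,\bigl\|J^{G^{\mathrm{comp}}_{H,\tau_\ast}}_{\tau_\ast+\ell,\tau_\ast}(x)\bigr\|_2\,\|c^{(H,\tau_\ast)}\|_2 .
\]
By the probe normalization in Definition~\ref{def:uniform_finite_horizon_profile_realization}, the right-hand side is at most \(\|J^{G^{\mathrm{comp}}_{H,\tau_\ast}}_{\tau_\ast+\ell,\tau_\ast}(x)\|_2\). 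With \(\nu\ge0\), this gives the uniform lower bound \(\|J_{\tau_\ast+\ell,\tau_\ast}\|\ge c_->0\) for all admissible \(H,\tau_\ast,\ell,x\).

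Next I would bring in the horizon-uniform upper envelopes of Corollary~\ref{cor:e2e_deep_decay_envelopes_uniform_horizon}. Their constants are independent of \(H\) and \(\tau_\ast\) by hypothesis. The horizons \(T_H=\tau_{\max}+H+1\) satisfy \(T_H\ge\tau_{\max}+\ell+1\) whenever \(\ell\le H\), so every pair \((\tau_\ast,\tau_\ast+\ell)\) with \(\tau_\ast\le\tau_{\max}\) falls inside the supremum range of item~(i). In the Mamba case, \(\tau_\ast\le T_H-\ell-1\), so these pairs also fall inside the range of item~(ii). Hence, for each fixed \(\ell\), taking the supremum over all \(H\ge\ell\), all \(\tau_\ast\) and all \(x\) yields
\[
c_-\le C_{\tau_{\max},L}\,\frac{(\log(1+\ell))^{L-1}}{1+\ell}\quad\text{(Transformer)},\qquad c_-\le C_L\,(1+\ell)^{L-1}e^{-c_\ast\ell}\quad\text{(Mamba)}.
\]

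Both right-hand sides tend to \(0\) as \(\ell\to\infty\). Choose \(\ell_0\) so large that the right-hand side is below \(c_-\), and then take any \(H\ge\ell_0\). This contradicts the lower bound from the first step. The upper constraint \(m_-\le\mathsf M_{\tau_\ast+1,\tau_\ast}\le m_+\) plays no role; only the growth condition on the profile is used.

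The main point needing care is the bookkeeping of quantifiers. The envelope constants must be uniform over the whole family indexed by \((H,\tau_\ast)\), and the relevant Jacobian is the one of the specific network \(G^{\mathrm{comp}}_{H,\tau_\ast}\) on \(\mathcal X_0^{(H)}\). This is exactly what the added hypothesis ("constants independent of \(H\) and \(\tau_\ast\)") provides, so I would spell out that the supremum in Corollary~\ref{cor:e2e_deep_decay_envelopes_uniform_horizon} is applied family-wise. A secondary check is that the operator-norm convention matches: \(\|\cdot\|_2\) on \(D\times D\) blocks is the same norm used in the envelopes, possibly up to a dimension constant, which is harmless.
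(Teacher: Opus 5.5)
Your proposal is correct and follows essentially the same route as the paper. Both bound the margin by the source score, bound the score by the Jacobian block norm via the probe normalization, and then contradict the decaying horizon-uniform envelopes of Corollary~\ref{cor:e2e_deep_decay_envelopes_uniform_horizon}, items (i) and (ii), as $\ell\to\infty$. Your explicit check that every pair $(\tau_\ast,\tau_\ast+\ell)$ with $\ell\le H$ lies inside the supremum range of each item is bookkeeping the paper leaves implicit.
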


\begin{proof}[Proof]
Assume toward a contradiction that such a realization exists.
By Definition~\ref{def:uniform_finite_horizon_profile_realization}, the probes satisfy
\[
\|c^{(H,\tau_\ast)}\|_2\le 1,
\qquad
\|\rho_t^{(H,\tau_\ast)}\|_2\le 1.
\]
Hence for every admissible \(H,\tau_\ast,x,t,\tau\),
\[
\bigl|\mathsf S^{(H,\tau_\ast)}_{t,\tau}(x)\bigr|
=
\bigl|
(\rho_t^{(H,\tau_\ast)})^\top
J^{G^{\mathrm{comp}}_{H,\tau_\ast}}_{t,\tau}(x)\,
c^{(H,\tau_\ast)}
\bigr|
\le
\bigl\|J^{G^{\mathrm{comp}}_{H,\tau_\ast}}_{t,\tau}(x)\bigr\|.
\]
Therefore
\[
\mathsf M^{(H,\tau_\ast)}_{t,\tau_\ast}(x)
\le
\bigl|\mathsf S^{(H,\tau_\ast)}_{t,\tau_\ast}(x)\bigr|
\le
\bigl\|J^{G^{\mathrm{comp}}_{H,\tau_\ast}}_{t,\tau_\ast}(x)\bigr\|.
\]

For Transformers, Corollary~\ref{cor:e2e_deep_decay_envelopes_uniform_horizon}, item~(i),
applied to the family \(G^{\mathrm{comp}}_{H,\tau_\ast}\), gives the horizon-uniform
bounded-source-family envelope
\[
\bigl\|J^{G^{\mathrm{comp}}_{H,\tau_\ast}}_{\tau+\ell,\tau}(x)\bigr\|
\lesssim
\frac{(\log(1+\ell))^{L-1}}{1+\ell},
\]
uniformly over all admissible \(H,\tau_\ast,x\) and all \(0\le \tau\le \tau_{\max}\).
This tends to \(0\) as \(\ell\to\infty\).

For Mamba, item~(ii) gives
\[
\bigl\|J^{G^{\mathrm{comp}}_{H,\tau_\ast}}_{\tau+\ell,\tau}(x)\bigr\|
\lesssim
(1+\ell)^{L-1}e^{-c\ell},
\]
uniformly over all admissible \(H,\tau_\ast,x,\tau\).
This also tends to \(0\).

Since a frozen or increasing profile would require
\[
\mathsf M^{(H,\tau_\ast)}_{\tau_\ast+\ell,\tau_\ast}(x)\ge c_-(1+\ell)^\nu
\qquad (\nu\ge 0),
\]
uniformly in all admissible \(H,\tau_\ast,x,\ell\), this is impossible in either comparison class.
\end{proof}

\begin{corollary}[Flexible selective retrieval separates Sessa from the comparison classes]
\label{cor:uniform_finite_horizon_separation}
In the regime of Definition~\ref{def:uniform_finite_horizon_profile_realization}:
\begin{enumerate}[label=(\roman*), leftmargin=*, nosep]
\item deep identity-normalized Sessa realizes the full exponent family
\[
\nu_k(\beta)=k(1-\beta)-1;
\]
\item diffuse fixed-depth Transformers and failed-freeze-time fixed-depth Mamba do not realize
frozen or increasing profiles.
\end{enumerate}
Thus, in this uniform finite-horizon family-over-\(H\) regime, deep Sessa supports flexible selective retrieval,
whereas the two comparison classes do not.
\end{corollary}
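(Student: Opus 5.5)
This corollary is a direct assembly of Theorem~\ref{thm:flexible_route_B_sessa} and Proposition~\ref{prop:comparison_class_impossibility_same_regime}, both stated in the same family-over-\(H\) regime of Definition~\ref{def:uniform_finite_horizon_profile_realization}. No new estimate is needed. The work is to check that the two results use the same realization notion, with the same probe normalization \(\|c\|_2,\|\rho_t\|_2\le 1\), the same margin \(\mathsf M\), and constants uniform in \(H\) and \(\tau_\ast\). Once that is checked, the two directions of the separation can be read off.

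For item~(i), fix any \(\beta\in(0,1)\) and \(k\ge 1\). Theorem~\ref{thm:flexible_route_B_sessa} gives constants \(m_\pm,c_-\) depending only on \((k,\beta,\tau_{\max},R)\). For every \(H\) and \(\tau_\ast\) it also gives an LN-free, identity-normalized Sessa network whose anchored margin obeys \(m_-\le \mathsf M_{\tau_\ast+1,\tau_\ast}\le m_+\) and \(\mathsf M_{\tau_\ast+\ell,\tau_\ast}\ge c_-(1+\ell)^{\nu_k(\beta)}\) for \(1\le\ell\le H\). This is exactly the realization of the profile \(\nu_k(\beta)\) in the sense of the Definition.

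Letting \((k,\beta)\) range over \(\mathbb N_{\ge1}\times(0,1)\) yields the full exponent family. By Corollary~\ref{cor:flexible_route_B_depth_requirement}, this family contains decaying, frozen (\(\beta=1-1/k\), \(k\ge2\)) and increasing (\(0<\beta<1-1/k\)) profiles. The one point to state explicitly is that the theorem's side hypotheses (\(D\ge7\), exact GELU, uniformly bounded compact input sets) are part of the standing regime.

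For item~(ii), I will apply Proposition~\ref{prop:comparison_class_impossibility_same_regime} with \(L\) equal to the fixed depth. Its proof bounds the margin by the end-to-end Jacobian norm, using the probe normalization and dropping the competitor sum. It then invokes the horizon-uniform envelopes of Corollary~\ref{cor:e2e_deep_decay_envelopes_uniform_horizon}: \((\log(1+\ell))^{L-1}/(1+\ell)\) in the Transformer case and \((1+\ell)^{L-1}e^{-c\ell}\) in the Mamba case. Both envelopes tend to \(0\), while a profile with \(\nu\ge0\) requires \(\mathsf M\ge c_->0\) for all \(\ell\le H\). Taking \(H\to\infty\) and \(\ell\) large gives the contradiction.

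The only potential obstacle is a regime mismatch. The comparison classes must be taken with envelope constants independent of \(H\) and \(\tau_\ast\); that is the meaning of "diffuse" or "failed-freeze-time" fixed-depth classes here. I will make this explicit by defining those classes through the hypotheses of the Corollary, items~(i)--(ii). With that convention, the two items combine into the stated separation.
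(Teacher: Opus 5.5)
Your proposal is correct and follows the paper's route: the corollary is read off by combining Theorem~\ref{thm:flexible_route_B_sessa}, with $(k,\beta)$ ranging over $\mathbb N_{\ge 1}\times(0,1)$ and Corollary~\ref{cor:flexible_route_B_depth_requirement} supplying the frozen and increasing cases, with Proposition~\ref{prop:comparison_class_impossibility_same_regime}, whose proof bounds the margin by the end-to-end Jacobian norm and then applies the vanishing horizon-uniform envelopes. Your explicit convention that the comparison classes carry envelope constants independent of $H$ and $\tau_\ast$ is exactly the hypothesis the paper builds into that proposition.
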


\subsection{Internal positional encoding}
\label{sec:theory_ape}
Sessa does not require an explicit absolute positional embedding in the feedback branch.
The key point is that the feedback solve can itself write a separated absolute positional signal.
The main lemma gives this positional writer, and the corollaries record the two refinements used later:
one-directional writing with signal transparency, and continuous recovery of the position index.

\begin{lemma}[Feedback generates ordered separated positional codes]
\label{lem:pos_code_main}
Fix $\Tctx\ge 2$ and model width $m\ge 1$.
There exists a single width-$m$ Sessa block $G^{(1)}$ and vectors
$p_0,\dots,p_{\Tctx-1}\in\mathbb R^m$ such that
for all token sequences $h\in\mathbb R^{\Tctx\times m}$,
\[
G^{(1)}(h)_t = h_t + p_t,\qquad t=0,\dots,\Tctx-1.
\]
Moreover, for any compact $\Kset\subset\mathbb R^{\Tctx\times m}$ the offsets can be chosen
so that there exist a unit direction $u\in\mathbb R^m$ and pairwise disjoint compact intervals
\[
J_0<J_1<\cdots<J_{\Tctx-1}\subset(0,\infty)
\]
with
\[
\langle h_t+p_t,u\rangle \in J_t
\qquad
\text{for all }h\in\Kset,\ t=0,\dots,\Tctx-1.
\]
\end{lemma}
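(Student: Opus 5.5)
The idea is to make the mixer ignore its input entirely, so that the block adds a fixed offset to the residual stream. We then read the offsets off the closed-form solution of the uniform-routing feedback recurrence \eqref{eq:mixer_recurrence}.

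\emph{Step 1 (input-independent mixer).} Set $W^{\mathrm{in}}=0$ and choose $b^{\mathrm{in}}=(b^a,b^g)$ so that $\bar a:=\mathrm{GELU}(b^a)\neq 0$ and $g:=b^g$ is a fixed nonzero vector. Then $\bar a_t\equiv\bar a$ and $g_t\equiv g$ for every $t$ and every input $h$. This holds whatever normalization precedes $W^{\mathrm{in}}$, because $W^{\mathrm{in}}=0$ annihilates it.
\begin{itemize}
\item All forward values coincide, $v_t\equiv v:=\bar a W_V$. Since $\alphadrv_{t,\cdot}$ is a probability distribution, \eqref{eq:def_f} gives $f_t=v$ for all $t$, with RoPE irrelevant.
\item All feedback logits in a row are equal, so $\alphafb_{t,j}=1/t$ for $j<t$.
\item The gain is constant, $\gamma_t\equiv\gamma=\tanh(\langle\bar a,w^\gamma\rangle+b^\gamma)$. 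Choosing $b^\gamma$ makes $\gamma\in(0,1)$.
\end{itemize}
The recurrence \eqref{eq:mixer_recurrence} then gives $s_t=c_t v$, where
\[
c_0=1,\qquad c_t=1+\frac{\gamma}{t}\sum_{j<t}c_j\quad(t\ge1).
\]
With $b^{\mathrm{out}}=0$, the block output is $G^{(1)}(h)_t=h_t+c_t\,(v\odot g)W^{\mathrm{out}}$. We choose $W_V$, $g$ and $W^{\mathrm{out}}$ so that $(v\odot g)W^{\mathrm{out}}=\lambda u$ for a chosen unit vector $u$ and scale $\lambda>0$; a rank-one $W^{\mathrm{out}}$ suffices. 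This gives $p_t=c_t\lambda u$, independent of $h$.

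\emph{Step 2 (strict ordering of $c_t$).} Let $A_t:=\frac1t\sum_{j<t}c_j$. Then $c_t=1+\gamma A_t$ for $t\ge1$, and $c_t\ge1>0$ for all $t$.
\begin{itemize}
\item Base case: $c_1=1+\gamma>1=c_0$.
\item Since $c_t-c_{t-1}=\gamma(A_t-A_{t-1})$ for $t\ge2$, it suffices to show that $A_t$ is strictly increasing.
\item $A_t>A_{t-1}$ exactly when $c_{t-1}>A_{t-1}$, i.e.\ when the newest term exceeds the running mean of the earlier ones.
\item If $c_0<\cdots<c_{t-1}$, then $c_{t-1}$ exceeds the mean of $c_0,\dots,c_{t-2}$, so $A_t>A_{t-1}$ and hence $c_t>c_{t-1}$.
\end{itemize}
By induction $0<c_0<c_1<\cdots<c_{\Tctx-1}$. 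This strict monotonicity is the only genuinely nontrivial point, and the running-mean induction should settle it. The explicit Gamma-function formula for the uniform regime, Corollary~\ref{cor:theta_tail_uniform_shifted}, could serve as a cross-check.

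\emph{Step 3 (separation on a compact set).} Given compact $\Kset$, let $R:=\sup_{h\in\Kset}\max_t|\langle h_t,u\rangle|<\infty$. Since the horizon is finite, $\delta:=\min_t(c_{t+1}-c_t)>0$. Take $\lambda$ large enough that $\lambda c_0>R$ and $\lambda\delta>2R$, and set
\[
J_t:=[\lambda c_t-R,\ \lambda c_t+R].
\]
These are compact intervals in $(0,\infty)$, pairwise disjoint and ordered $J_0<J_1<\cdots<J_{\Tctx-1}$, and $\langle h_t+p_t,u\rangle\in J_t$ for every $h\in\Kset$.

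Only $\lambda$ depends on $\Kset$, which matches the statement: the exact identity $G^{(1)}(h)_t=h_t+p_t$ holds for all $h$, while the choice of offsets (through $\lambda$) is tailored to $\Kset$.
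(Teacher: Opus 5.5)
Your proof is correct and follows the same construction as the paper's proof (Corollary~\ref{cor:pos_code_padding}): $W^{\mathrm{in}}=0$, constant forward signal, uniform strict-past feedback with constant gain $\gamma\in(0,1)$, the recursion $c_t=1+\frac{\gamma}{t}\sum_{j<t}c_j$, a rank-one output map onto $u$, and a large scale $\lambda$. Two small differences: your running-mean induction proves strict monotonicity where the paper instead shows the running mean stays below $1/(1-\gamma)$, and your condition $\lambda c_0>R$ states explicitly why the intervals $J_t$ lie in $(0,\infty)$, a point the appendix corollary leaves implicit.
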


\begin{proof}[Proof sketch]
Choose parameters so that the mixer input is constant, the forward branch produces a constant forward
signal, and the feedback routing is chosen so that the induced scalar solve generates a deterministic
strictly increasing sequence on the finite prefix. Project that scalar sequence onto a chosen direction,
then shift and rescale it so that the resulting compact scalar ranges are pairwise disjoint, strictly
ordered, and contained in $(0,\infty)$. See Appendix~\ref{sec:uat_pos_code}.
\end{proof}
\begin{corollary}[One-directional internal positional writer]
\label{cor:pos_code_one_direction}
Under the hypotheses of Lemma~\ref{lem:pos_code_main}, the block can be chosen so that there exists
a unit direction \(e_{\mathrm{pos}}\in\mathbb R^m\) and scalars \(\lambda_0,\dots,\lambda_{\Tctx-1}\) with
\[
G^{(1)}(h)_t = h_t + \lambda_t e_{\mathrm{pos}},
\qquad t=0,\dots,\Tctx-1,
\]
for all token sequences \(h\in\mathbb R^{\Tctx\times m}\).
Moreover, for any compact \(\Kset\subset\mathbb R^{\Tctx\times m}\), the same block can be chosen so that there exist
pairwise disjoint compact intervals
\[
J_0<J_1<\cdots<J_{\Tctx-1}\subset(0,\infty)
\]
with
\[
\bigl\langle G^{(1)}(h)_t,e_{\mathrm{pos}}\bigr\rangle\in J_t
\qquad
\text{for all }h\in\Kset,\ t=0,\dots,\Tctx-1.
\]
\end{corollary}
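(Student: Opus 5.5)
The plan is to revisit the construction behind Lemma~\ref{lem:pos_code_main} and make the offset lie along a single direction from the start, instead of treating $p_t$ as a general vector. In that construction the mixer input $\bar a$ is constant in $t$ and independent of $h$. This is arranged by taking the rows of $W^{\mathrm{in}}$ acting on $\xLN=h$ (identity normalization) equal to zero, so that $a=b^{\mathrm{in}}_a$ and $g=b^{\mathrm{in}}_g$ are constant biases. Consequently $f_t$, $\gamma_t$, $\alphafb_{t,j}$ and hence $s_t$ depend only on $t$, and the block output is $y_t=h_t+((s_t\odot g)W^{\mathrm{out}}+b^{\mathrm{out}})$. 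I would choose $W_V$ so that $v_t=\bar a_tW_V=c\,e_1$ for a fixed coordinate vector $e_1$. Then $f_t=c\,e_1$ for every $t$, since $\alphadrv_{t,\cdot}$ sums to one. Because the solve acts feature-wise with a scalar $\Bfb$, we get $s_t=\sigma_t e_1$, where $\sigma_t$ is the scalar solution of $(I-\Bfb)\sigma=c\mathbf 1$. Next I would take the gate bias so that $g=e_1$ on that coordinate, and $W^{\mathrm{out}}=e_1^\top e_{\mathrm{pos}}$ with $b^{\mathrm{out}}=b\,e_{\mathrm{pos}}$. This gives $G^{(1)}(h)_t=h_t+\lambda_t e_{\mathrm{pos}}$ with $\lambda_t=\sigma_t+b$ (up to the gate scalar). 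That proves the first claim for every $h$, since nothing depended on $h$.

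For ordering I would use constant feedback logits, which make $\alphafb_{t,j}=1/t$, together with a constant gain $\gamma\in(0,1)$. By the recurrence \eqref{eq:mixer_recurrence} with $c>0$, $\sigma_0=c$ and $\sigma_t=c+\gamma\cdot\mathrm{mean}_{j<t}\sigma_j$. An induction shows $\sigma_t$ is strictly increasing and bounded by $c/(1-\gamma)$. Alternatively one can read off the closed Gamma-form of Corollary~\ref{cor:theta_tail_uniform_shifted} applied to a constant forcing. Either way the minimal gap $\delta:=\min_t(\sigma_{t+1}-\sigma_t)>0$ on the finite horizon $\Tctx$.

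For separation, let $K$ be compact and set $M:=\sup_{h\in K,t}|\langle h_t,e_{\mathrm{pos}}\rangle|<\infty$. I would scale the increments by a factor $\kappa>0$: replace $c$ by $\kappa c$, or equivalently scale $W^{\mathrm{out}}$ by $\kappa$, which multiplies the gaps by $\kappa$. Choose $\kappa$ with $\kappa\delta>2M$, and choose the bias $b$ with $\kappa\sigma_0+b-M>0$. The intervals $J_t:=[\lambda_t-M,\lambda_t+M]$ are then compact, pairwise disjoint, strictly ordered and contained in $(0,\infty)$, and $\langle G^{(1)}(h)_t,e_{\mathrm{pos}}\rangle=\langle h_t,e_{\mathrm{pos}}\rangle+\lambda_t\in J_t$. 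The step needing most care is checking that the recurrence really yields strict monotonicity. That follows by induction: $\sigma_1-\sigma_0=\gamma c>0$, and if $\sigma$ is increasing up to $t$, then the mean of $\sigma_j$ over $j\le t$ exceeds the mean over $j<t$, so $\sigma_{t+1}>\sigma_t$. A further point to verify is that the zero input weights are admissible within the parametrization; this holds since all projections are free linear maps. The GELU nonlinearity plays no role here, because $\bar a=\mathrm{GELU}(b^{\mathrm{in}}_a)$ is simply a constant vector, and I would choose $b^{\mathrm{in}}_a$ with a coordinate where GELU is nonzero so that $W_V$ can map $\bar a$ to $c\,e_1$.
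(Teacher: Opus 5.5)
Your proposal is correct and follows essentially the paper's construction: zero input weights so the mixer input is a constant bias, a constant forward value along $e_1$, uniform strict-past feedback with constant gain $\gamma\in(0,1)$ giving a strictly increasing scalar solve $s_t=c_t e_1$, gated and projected through $W^{\mathrm{out}}$ onto the single direction $e_{\mathrm{pos}}$, then rescaled so the shifted ranges become disjoint. There are two minor differences. You prove monotonicity by direct induction ($\sigma_t$ exceeds the mean of its predecessors) instead of via the paper's bound $\mu_t<1/(1-\gamma)$ on running means. You also add an output bias to force $J_0\subset(0,\infty)$, whereas the paper's unbiased block already gives positivity because $c_t\ge 1$ and its scale satisfies $\lambda>2R/\Delta_c>2R$.
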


\begin{proof}
In the construction underlying Lemma~\ref{lem:pos_code_main}, the deterministic scalar sequence generated by the
feedback solve is written onto a chosen output direction. Choosing that output direction to be \(e_{\mathrm{pos}}\)
and writing no offset on the orthogonal complement yields the form
\[
G^{(1)}(h)_t=h_t+\lambda_t e_{\mathrm{pos}}.
\]
The interval-separation conclusion is exactly the same as in Lemma~\ref{lem:pos_code_main}.
\end{proof}

\begin{corollary}[Signal transparency of the one-directional positional writer]
\label{cor:pos_code_signal_transparency}
Under the hypotheses of Corollary~\ref{cor:pos_code_one_direction}, let \(e_{\mathrm{sig}}\in\mathbb R^m\) be any unit vector with
\[
e_{\mathrm{sig}}\perp e_{\mathrm{pos}}.
\]
Then for every token sequence \(h\in\mathbb R^{\Tctx\times m}\), every source index \(\tau\in\{0,\dots,\Tctx-1\}\), and every scalar \(a\in\mathbb R\),
\[
G^{(1)}\bigl(h+a\,e_{\mathrm{sig}}\mathbf 1[\cdot=\tau]\bigr)_t
=
G^{(1)}(h)_t+a\,e_{\mathrm{sig}}\mathbf 1[t=\tau],
\qquad
t=0,\dots,\Tctx-1.
\]
In particular,
\[
\bigl\langle G^{(1)}(h+a\,e_{\mathrm{sig}}\mathbf 1[\cdot=\tau])_t,e_{\mathrm{pos}}\bigr\rangle
=
\bigl\langle G^{(1)}(h)_t,e_{\mathrm{pos}}\bigr\rangle
\qquad
\forall\,t,
\]
so perturbations along \(e_{\mathrm{sig}}\) leave the internally written positional coordinate unchanged.
\end{corollary}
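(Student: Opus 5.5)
The approach is to exploit the fact that the block of Corollary~\ref{cor:pos_code_one_direction} acts on the token sequence as an input-independent additive shift. Writing $G^{(1)}(h)_t = h_t + \lambda_t e_{\mathrm{pos}}$ for all $h\in\mathbb R^{\Tctx\times m}$, with the scalars $\lambda_t$ fixed by the block's parameters and not depending on $h$, is exactly the conclusion of that corollary. No further property of the Sessa mixer should be needed. In particular, I will not have to re-derive how the mixer input, forward signal, or feedback solve behave under the perturbation.

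The steps, in order, are as follows.

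First, apply the identity of Corollary~\ref{cor:pos_code_one_direction} to the perturbed sequence $h' := h + a\,e_{\mathrm{sig}}\mathbf 1[\cdot=\tau]$. This is legitimate because the identity holds for \emph{all} token sequences, not only those in the compact set $\Kset$. The compact set enters only through the interval-separation clause, which is not used here. This gives
\[
G^{(1)}(h')_t = h_t + a\,e_{\mathrm{sig}}\mathbf 1[t=\tau] + \lambda_t e_{\mathrm{pos}}.
\]

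Second, apply the same identity to $h$ itself, $G^{(1)}(h)_t = h_t+\lambda_t e_{\mathrm{pos}}$, and subtract the two expressions. Since the offsets $\lambda_t e_{\mathrm{pos}}$ are identical in both, the difference is $a\,e_{\mathrm{sig}}\mathbf 1[t=\tau]$, which is the first claim.

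Third, take the inner product of this identity with $e_{\mathrm{pos}}$. Because $e_{\mathrm{sig}}\perp e_{\mathrm{pos}}$, the term $a\,\mathbf 1[t=\tau]\langle e_{\mathrm{sig}},e_{\mathrm{pos}}\rangle$ vanishes, and this yields the invariance of the positional coordinate for every $t$.

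The only point needing care, and the nearest thing to an obstacle, is confirming that the block chosen in Corollary~\ref{cor:pos_code_one_direction} produces offsets independent of $h$ globally. One might worry that the construction, which makes the mixer input constant, only achieves this on $\Kset$. The statement of Lemma~\ref{lem:pos_code_main} and of the corollary asserts the additive form for all $h\in\mathbb R^{\Tctx\times m}$, with only the interval clause tied to $\Kset$. I will invoke it in that form, remarking that the construction achieves it by zeroing the input projection $W^{\mathrm{in}}$ and using biases only, so that $\bar a$, $f$, $\Bfb$, $s$ and $g$ are constant in $h$. If the global form were unavailable, I would need $h$ and $h'$ to both lie in the admissible set. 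In that case I would fall back on enlarging $\Kset$ to contain the perturbed sequences before choosing the block.
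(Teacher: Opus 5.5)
Your proposal is correct and matches the paper's proof: apply the global additive identity $G^{(1)}(h)_t=h_t+\lambda_t e_{\mathrm{pos}}$ from Corollary~\ref{cor:pos_code_one_direction} to the perturbed sequence, observe that the $h$-independent offsets cancel, and take the $e_{\mathrm{pos}}$-coordinate using $e_{\mathrm{sig}}\perp e_{\mathrm{pos}}$. Your remark that only the interval-separation clause depends on $\Kset$, while the additive form holds for all $h$ because $W^{\mathrm{in}}=0$, is accurate; the fallback of enlarging $\Kset$ is therefore not needed.
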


\begin{proof}
By Corollary~\ref{cor:pos_code_one_direction},
\[
G^{(1)}(h)_t=h_t+\lambda_t e_{\mathrm{pos}}.
\]
Therefore
\[
G^{(1)}\bigl(h+a\,e_{\mathrm{sig}}\mathbf 1[\cdot=\tau]\bigr)_t
=
h_t+a\,e_{\mathrm{sig}}\mathbf 1[t=\tau]+\lambda_t e_{\mathrm{pos}}
=
G^{(1)}(h)_t+a\,e_{\mathrm{sig}}\mathbf 1[t=\tau].
\]
Since \(e_{\mathrm{sig}}\perp e_{\mathrm{pos}}\), taking the \(e_{\mathrm{pos}}\)-coordinate gives the second claim.
\end{proof}

\begin{corollary}[Continuous recovery of the position index]
\label{cor:pos_code_recovery}
Under the hypotheses of Corollary~\ref{cor:pos_code_one_direction}, fix a compact set
\[
\Kset\subset\mathbb R^{\Tctx\times m},
\]
and choose the block so that there exist pairwise disjoint compact intervals
\[
J_0<J_1<\cdots<J_{\Tctx-1}\subset(0,\infty)
\]
with
\[
\bigl\langle G^{(1)}(h)_t,e_{\mathrm{pos}}\bigr\rangle\in J_t
\qquad
\forall\,h\in\Kset,\ \forall\,t=0,\dots,\Tctx-1.
\]
Then there exists a continuous map
\[
\psi:\mathbb R^m\to\mathbb R
\]
such that
\[
\psi\bigl(G^{(1)}(h)_t\bigr)=t
\qquad
\forall\,h\in\Kset,\ \forall\,t=0,\dots,\Tctx-1.
\]
In particular, the position index \(t\) is recoverable by a continuous tokenwise map on the shifted-token set
\[
\bigcup_{t=0}^{\Tctx-1}\{G^{(1)}(h)_t:\ h\in\Kset\}.
\]
\end{corollary}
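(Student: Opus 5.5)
The plan is to build $\psi$ explicitly as a continuous piecewise-linear function of the single scalar coordinate $z=\langle w,e_{\mathrm{pos}}\rangle$. The separation hypothesis already places the positional readouts into disjoint ordered compact intervals, so only a one-dimensional continuous interpolation is needed. We set $\psi(w):=\phi(\langle w,e_{\mathrm{pos}}\rangle)$ with $\phi:\mathbb R\to\mathbb R$ continuous. Then $\psi$ is continuous as the composition of the continuous linear functional $w\mapsto\langle w,e_{\mathrm{pos}}\rangle$ with $\phi$.

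To construct $\phi$, write $J_t=[a_t,b_t]$. The ordering $J_0<J_1<\cdots<J_{\Tctx-1}$ with pairwise disjointness means $b_t<a_{t+1}$ for every $t$. We define $\phi\equiv t$ on $J_t$, and on each gap $[b_t,a_{t+1}]$ we interpolate linearly from $t$ to $t+1$. We also set $\phi\equiv 0$ on $(-\infty,a_0]$ and $\phi\equiv \Tctx-1$ on $[b_{\Tctx-1},\infty)$.

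Next we check continuity. Each piece is continuous on a closed interval, and adjacent pieces agree at the shared endpoints $a_t,b_t$. By the pasting lemma, $\phi$ is continuous on $\mathbb R$; it is in fact nondecreasing and Lipschitz.

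It remains to verify the identity. For $h\in\Kset$ and any $t$, the hypothesis gives $\langle G^{(1)}(h)_t,e_{\mathrm{pos}}\rangle\in J_t$, hence $\psi(G^{(1)}(h)_t)=\phi(\cdot)=t$. The final remark about the shifted-token set is just a restatement: $\psi$ maps each point of $\{G^{(1)}(h)_t: h\in\Kset\}$ to $t$. These sets are disjoint for distinct $t$, since their $e_{\mathrm{pos}}$-projections lie in disjoint intervals, so the assignment is well defined.

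There is no serious obstacle. The only point needing care is that disjointness plus the stated ordering gives strict gaps $b_t<a_{t+1}$, which makes the linear interpolation well defined with no division by zero. Compactness of each $J_t$ ensures that the endpoints $a_t,b_t$ exist.
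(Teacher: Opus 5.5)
Your proposal is correct and follows the paper's proof essentially verbatim: you compose the linear functional $w\mapsto\langle w,e_{\mathrm{pos}}\rangle$ with a piecewise-linear $\phi$ that equals $t$ on $J_t=[a_t,b_t]$, interpolates linearly across each gap $[b_t,a_{t+1}]$, and is extended constantly on the two outer rays. Your added remarks on the pasting lemma and on the strict gaps $b_t<a_{t+1}$ are exactly the details that make this construction well defined.
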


\begin{proof}
Write each compact interval as
\[
J_t=[a_t,b_t].
\]
Since the intervals are pairwise disjoint and ordered, one has
\[
b_t<a_{t+1}
\qquad
(t=0,\dots,\Tctx-2).
\]
Define a continuous function \(g:\mathbb R\to\mathbb R\) by requiring
\[
g(s)=t
\qquad
\text{for all } s\in J_t,
\]
interpolating linearly on each gap \([b_t,a_{t+1}]\), and extending constantly on
\((-\infty,a_0]\) and \([b_{\Tctx-1},\infty)\).
Then \(g\) is continuous on \(\mathbb R\) and satisfies \(g|_{J_t}\equiv t\) for every \(t\).

Now define
\[
\psi(z):=g\bigl(\langle z,e_{\mathrm{pos}}\rangle\bigr),
\qquad z\in\mathbb R^m.
\]
Since \(z\mapsto \langle z,e_{\mathrm{pos}}\rangle\) is continuous, \(\psi\) is continuous. Moreover, for every
\(h\in\Kset\) and every \(t\),
\[
\bigl\langle G^{(1)}(h)_t,e_{\mathrm{pos}}\bigr\rangle\in J_t,
\]
hence
\[
\psi\bigl(G^{(1)}(h)_t\bigr)
=
g\bigl(\langle G^{(1)}(h)_t,e_{\mathrm{pos}}\rangle\bigr)
=
t.
\]
\end{proof}

\paragraph*{Consequence}
Sessa can internally generate an absolute positional code through feedback, even when the forward branch uses only relative-position-aware routing such as RoPE.

\subsection{Universal approximation of causal maps}
\label{sec:theory_uat}

We state a universal approximation result for Sessa networks on compact domains, in the standard causal decoder setting.
Since intermediate constructions may require an internal width $m\ge D$, we state the result for Sessa with tokenwise
linear adapters $D\to m\to D$.

\begin{definition}[Causality]
A map $F:\mathcal D\to\mathbb R^{\Tctx\times D}$ is causal if for every $t$ and all $x,x'\in\mathcal D$,
$x_{0:t}=x'_{0:t}$ implies $F(x)_t=F(x')_t$.
\end{definition}

\begin{theorem}[Universal approximation by concrete Sessa with adapters]
\label{thm:sessa_uap_main}
Let $\mathcal D\subset\mathbb R^{\Tctx\times D}$ be compact and let $F:\mathcal D\to\mathbb R^{\Tctx\times D}$ be continuous and causal.
Then for any $\varepsilon>0$ there exist an even query/key width \(d_k\ge 2\), a model width \(m\ge D\), tokenwise adapters
\[
\mathrm{Embed}:\mathbb R^D\to\mathbb R^{m},
\qquad
\mathrm{Unembed}:\mathbb R^{m}\to\mathbb R^{D},
\]
and a finite-depth width-\(m\) concrete Sessa network \(G\) such that
\[
\sup_{x\in\mathcal D}\Big\|F(x)-\mathrm{Unembed}\big(G(\mathrm{Embed}(x))\big)\Big\|_F < \varepsilon.
\]
\end{theorem}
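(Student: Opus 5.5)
We follow the standard universal-approximation route for causal sequence maps on a finite horizon, in three stages. (i) Make positions explicit. (ii) Reduce the causal target to tokenwise continuous functions of the prefix $x_{0:t}$. (iii) Realize those tokenwise functions with the gated-MLP part of the Sessa block, using the mixer only for prefix aggregation. Causality of $F$ means $F(x)_t=F_t(x_{0:t})$ with each $F_t$ continuous on the compact projection $\mathcal D_{0:t}$. It therefore suffices to build a network that first encodes enough of the prefix into each token $t$, and then applies an approximation of $F_t$ selected by $t$. The embedding width $m\ge D$ gives room to store the original token, a positional coordinate, and a prefix code in orthogonal blocks.

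\emph{Step 1 (position).} Apply the internal positional writer of Corollary~\ref{cor:pos_code_one_direction} on a direction $e_{\mathrm{pos}}$ orthogonal to the embedded data block. By Corollary~\ref{cor:pos_code_signal_transparency} this leaves the data coordinates intact. By Corollary~\ref{cor:pos_code_recovery} the index $t$ becomes a continuous tokenwise function of the shifted token, with strictly separated compact ranges.

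\emph{Step 2 (prefix encoding).} Write the whole prefix into each token with a stack of blocks. The mechanism is as follows. Using the separated positional code, the forward attention queries/keys are chosen so that, with large logit scale $\sigk$, the attention at position $t$ concentrates arbitrarily sharply on a chosen earlier index $t-j$. The feedback gain is set to $\gamma_t\equiv 0$ via $w^\gamma=0,b^\gamma=0$, which reduces the mixer to $s=f$. The retrieved value $v_{t-j}$ is then copied into a dedicated block of coordinates through $W^{\mathrm{out}}$ and the residual.

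Two pieces of the block complicate the copy. The GELU and gate $g$ are handled by working in a regime where $\mathrm{GELU}(z)\approx z$ after an affine shift on the compact range, with $g$ a near-constant positive bias; this gives a near-linear copy with controlled error. Repeating for $j=0,\dots,\Tctx-1$ (with zero contribution when $t-j<0$, enforced via position) yields tokens carrying $(x_{0},\dots,x_t,\text{pad},t)$ up to small error. Since hardmax-via-softmax errors vanish uniformly on compact sets as $\sigk\to\infty$, and every block is uniformly continuous there, the errors propagate controllably.

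\emph{Step 3 (tokenwise readout).} Now $F(x)_t=\Phi(z_t)$ for a continuous $\Phi$ defined on the compact, position-separated set of encoded tokens. Extend $\Phi$ continuously by Tietze. Approximate it with a stack of Sessa blocks acting tokenwise: set $\gamma\equiv0$ and make the forward attention attend to the current token, which a sharp self-score from the position code makes available. Each block then reduces to $x\mapsto x+W^{\mathrm{out}}(\mathrm{GELU}(a)\odot g)$, a residual gated GELU unit. A finite composition of such residual units approximates any continuous tokenwise map on a compact set, by the classical one-hidden-layer GELU theorem implemented across width $m$ and summed through residuals. Finally, Unembed projects onto the output block.

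\emph{Main obstacle.} The hardest part is Step 2 together with the error bookkeeping. The concrete block's GELU, multiplicative gate, RoPE in the forward branch, and shared $W^{\mathrm{in}}$ projection all perturb an exact copy, and these perturbations must be made uniformly small while the positional separation is preserved.

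I would first fix all constructions exactly in an idealized hardmax/linear limit. I would then invoke continuity of the concrete network in its parameters, uniformly on compact sets, to pass to finite $\sigk$ and near-linear GELU. The error budget $\varepsilon$ is split across the finitely many layers using their uniform Lipschitz constants on the relevant compacts. The RoPE issue can be removed by choosing $d_k$ even and placing the content-independent positional matching in a rotation-invariant pair of coordinates.
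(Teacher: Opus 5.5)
Your skeleton matches the paper's proof: an internal positional writer, per-lag copy blocks with $\gamma\equiv 0$ and a linearized GELU, Tietze extension with uniform continuity, and a tokenwise GELU readout built from residual batches under near-diagonal attention. But Step~2 has a genuine gap, at exactly the point you call the main obstacle. You never say how the forward attention at $t$ is made to concentrate on $t-j$, and the one concrete device you offer is not available.

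First, queries and keys are linear in the post-GELU features of a single token. The natural matching from a monotone positional scalar, $\langle q(c_t),k(c_s)\rangle\propto c_tc_s$, is therefore monotone in $s$ and selects an extreme visible index. That is why a self-score works in your Step~3, but it does not give a shift. Hitting an interior index would need non-monotone logits such as $2c_{t-j}c_s-c_s^2$, built from nonlinear features on both sides, and you do not construct these.

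Second, the forward branch always applies RoPE, and there is no rotation-invariant pair. Under the paper's convention every pair rotates with $\omega_r=\vartheta^{-2r/d_k}>0$ (for $d_k=2$ there is only $\omega_0=1$). Any content-based positional matching is therefore multiplied by relative-offset rotations that you have not controlled.

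The missing idea is that RoPE is itself the shift operator, so the positional code is not needed for this step. Feed a bias-driven constant $a$-coordinate into the forward query and key, with $q^f_t\equiv c_\ell\,\mathrm{RoPE}_{-\ell}(1,0)$ and $k^f_j\equiv c_\ell(1,0)$ on the first pair. The logits are then $\sigk c_\ell^2\cos\bigl((t-\ell)-j\bigr)$.
\begin{itemize}
\item The value $1$ is attained only at argument $0$, so for $t\ge\ell$ the unique maximizer is $j=t-\ell$.
\item For $t<\ell$ the arguments are distinct negative integers with pairwise distinct cosines (since $\pi$ is irrational), so the maximizer is still unique.
\item Over the finite horizon this gives a uniform positive gap. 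You sharpen by enlarging $c_\ell$; note that $\sigk=d_k^{-1/2}$ is fixed and cannot be sent to infinity.
\end{itemize}

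This also makes your ``zero contribution when $t-j<0$'' unnecessary. It is also unclear how you would enforce it, since the gate $g$ is only affine in the token. For $t<\ell$ the slot simply holds $x_{j^*(t,\ell)}$, a deterministic function of the prefix. Because the positional coordinate identifies $t$, the target map can be defined on each $t$-slice using only the slots for lags $0,\dots,t$.

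With this replacement, and with each lag block reading only the exactly preserved source slot so that packing errors do not compound, your Steps~1 and~3 go through as written.
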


\begin{proof}[Proof sketch]
\begin{enumerate}[label=(\roman*), leftmargin=*, nosep]
\item Use a single Sessa block to write an internal positional code.
\item Use a finite stack of concrete Sessa blocks to encode each relevant causal prefix into dedicated internal coordinates.
\item Apply a finite tokenwise readout stack, again implemented by concrete Sessa blocks, to approximate the desired causal output on the resulting compact encoded-state set.
\end{enumerate}
Details appear in Appendix~\ref{app:sessa_uat}, in the proof of Theorem~\ref{thm:sessa_uap_main}.
\end{proof}
\section{Experiments}
\label{sec:experiments}

We compare three model variants that share the same decoder macro-architecture and training setup and differ only in the sequence mixer. The mixers are Sessa mixer, multi-head self-attention, and Mamba2 mixer. 
We match parameter count, use the same optimizer and training schedule, and train all models for the same number of optimization steps.

We do not report aggregate results on the full Long Range Arena (LRA) suite \citep{tay2021longrangearena}. 
Although LRA was originally proposed as a testbed for long-range dependencies, subsequent analyses have highlighted several issues suggesting that strong performance on LRA can be confounded by factors unrelated to robust long-context reasoning. \citep{tay2021longrangearena, miralles2025localitybiaslra} We evaluate long-context behavior on SymbolSoup and Diffuse MQAR, and short-context language modeling on SimpleStories. \citep{finke2025simplestories,simplestories2025dataset}

\subsection{Synthetic long-range tasks}

\subsubsection{Datasets and tasks}

\paragraph{SymbolSoup.}
SymbolSoup is a long-range classification dataset with two informative stylized blocks separated by label-independent noise. Each example contains three noise blocks and two stylized blocks, one from each style family. The order of the two stylized blocks is randomized.
\begin{quote}
noise \ \texttt{<sep1>} \ first/second stylized part \ \texttt{<sep2>} \ noise \ \texttt{<sep1>} \ second/first stylized part \ \texttt{<sep2>} \ noise \ \texttt{<sep>} \ \texttt{<label>}.
\end{quote}
The label is the pair of styles used in the two stylized blocks. 
Stylized blocks are generated by a Markov-like process with unigram and bigram preferences and occasional motif insertion plus small symbol noise.

\paragraph{Diffuse MQAR.}
We additionally evaluate on a modified multi-query associative recall benchmark based on MQAR \citep{arora2024zoology}. 
Relative to the original formulation, our variant uses multi-token keys, structured distractors with shared prefixes and mismatched suffixes, and explicit control of the source--query lag. 
Each example contains a prefix memory block of key--value pairs, a noise block populated with distractor key--value-like patterns, and a terminal query block. The test split includes retrieval lags up to \(4\times\) larger than those seen during training.

\begin{center}
\small
\setlength{\tabcolsep}{6pt}
{%
\captionsetup{width=0.9\linewidth}%
\captionof{table}{Long-context test results (mean \(\pm\) std over 2 seeds). For SymbolSoup we report classification accuracy; for Diffuse MQAR we report token accuracy.}%
\label{tab:long_context}%
}
\begin{tabular}{lcc}
\hline
Model & SymbolSoup Acc \(\uparrow\) & Diffuse MQAR Token Acc \(\uparrow\) \\
\hline
Sessa       & \(0.8601 \pm 0.0016\) & \(0.1541 \pm 0.0071\) \\
Transformer & \(0.7921 \pm 0.0070\) & \(0.1222 \pm 0.0003\) \\
Mamba2      & \(0.0500 \pm 0.0000\) & \(0.0021 \pm 0.0000\) \\
\hline
\end{tabular}
\end{center}
\FloatBarrier

Mamba-2 did not converge on SymbolSoup or Diffuse MQAR. 
We view this as qualitatively consistent with our selective-SSM theory: when noise makes the selection signal weakly separable, the resulting non-vanishing freeze-time errors restore exponential attenuation of long-range influence, as formalized in Proposition~\ref{prop:freeze_errors_exp} and Corollary~\ref{cor:mamba_e2e_freeze}. 
This interpretation is relevant to Mamba-2 because it is itself a selective SSM, specifically a scalar-identity restricted variant in the SSD framework \citep{dao2024transformersssms}.

\subsection{SimpleStories language modeling}

\subsubsection{Dataset and task}

For the short-context regime we use a SimpleStories corpus of short, synthetically generated stories. Each story is written in simplified English with a small vocabulary and constrained syntax.

We treat this corpus as a causal language modeling benchmark. The text is tokenized with a subword tokenizer shared across all architectures, and training sequences are formed by concatenating stories and splitting them into fixed-length segments. 
The model predicts the next token at each position using a left-to-right mask. We report validation perplexity.

\begin{table}[H]
\centering
\caption{SimpleStories test results (mean \(\pm\) std over 2 seeds).}
\label{tab:simplestories}
\begin{tabular}{lrrr}
\hline
Model & Perplexity \(\downarrow\) & Top-1 acc \(\uparrow\) & Top-5 acc \(\uparrow\) \\
\hline
Transformer & \(7.6701 \pm 0.0313\) & \(50.441 \pm 0.059\)\% & \(78.497 \pm 0.062\)\% \\
Mamba2      & \(7.7229 \pm 0.0207\) & \(50.299 \pm 0.046\)\% & \(78.302 \pm 0.043\)\% \\
Sessa       & \(8.3700 \pm 0.0482\) & \(49.144 \pm 0.081\)\% & \(77.119 \pm 0.090\)\% \\
\hline
\end{tabular}
\end{table}
\FloatBarrier

We hypothesize that the small performance drop of Sessa in the short-context regime is due to the feedback mechanism being less necessary for this task. 
Under matched parameter count, a portion of Sessa's capacity is allocated to the feedback branch, which may be weakly utilized on short-context. 
To test this interpretation, we ran a control experiment with the feedback branch removed while keeping the remainder of the architecture unchanged. 
The ablated model improves over full Sessa on SimpleStories, reducing test perplexity from \(8.3700 \pm 0.0482\) to \(8.0902 \pm 0.0192\) and increasing top-1 accuracy from \(49.144 \pm 0.081\)\% to \(49.648 \pm 0.026\)\%. 
This supports the view that feedback is less beneficial in the short-context regime, while remaining consistent with Sessa's stronger results on long-context tasks, where feedback appears to be more useful.
\section{Discussion}
\label{sec:discussion}

The main comparison in this paper is not between favorable operating regimes of Transformers, Mamba, and Sessa, but between matched regimes in which sharp retrieval is unavailable. For attention, this appears as diffuse, low-separation routing, so the selector cannot concentrate mass on a small set of relevant indices. 
For Mamba, the analogous failure is failed freeze time, so the model cannot maintain a long preserve corridor on the relevant interval. These are natural failure regimes for the respective architectures, and they provide a common basis for comparison.

In this matched setting, the difference comes from the memory mechanism rather than from access to sharp routing. Diffuse attention remains one-hop and therefore suffers dilution. 
Failed-freeze-time Mamba remains chain-structured and therefore exhibits exponential attenuation. Sessa is also studied in a diffuse regime, but its feedback solve aggregates influence over multiple hop counts and, in dense settings, over many temporal paths. 
This is the structural source of its slower long-range decay.

The main separation is not only in the polynomial tail, but in the selective-retrieval result. 
In the same family-over-\(H\) regime, deep Sessa realizes flexible selective retrieval profiles, whereas diffuse fixed-depth Transformers and failed-freeze-time fixed-depth Mamba do not realize frozen or increasing profiles. 
Thus the separation is not merely quantitative at the level of decay rates; it is qualitative at the level of what retrieval behavior the architectures can realize under the same matched breakdown of sharp retrieval.

The broader point is that long-context behavior depends not only on how routing coefficients are produced, but also on how they are composed over time. 
When sharp retrieval fails, as can become increasingly likely as context length grows, this distinction becomes decisive. In that regime, Sessa can still support flexible selective retrieval through its multi-hop feedback structure.

\FloatBarrier
\clearpage
\bibliographystyle{plainnat}
\bibliography{refs}

\clearpage
\appendix
\appendix
\section*{Appendix}
\addcontentsline{toc}{section}{Appendix}

\section{Definitions and notation}
\label{app:shared_defs}

\subsection{Sequence norms and bounded-input sets}
\label{app:shared_defs:norms}

\begin{definition}[Sup--$\ell_2$ norm and bounded-input balls]
\label{def:linf2_ball}
Fix a horizon $\Tctx\in\mathbb N^*$ and token width $D\in\mathbb N^*$.
For a finite sequence $x=(x_0,\dots,x_{\Tctx-1})\in(\mathbb R^D)^{\Tctx}$ define
\[
\|x\|_{\infty,2}:=\max_{0\le t\le \Tctx-1}\|x_t\|_2.
\]
For $R\ge 0$ define the ball
\[
\mathcal X_R:=\{x\in(\mathbb R^D)^{\Tctx}:\ \|x\|_{\infty,2}\le R\}.
\]
For infinite sequences $(x_t)_{t\ge 0}$ we use the analogous norm
$\|x\|_{\infty,2}:=\sup_{t\ge 0}\|x_t\|_2\in[0,\infty]$.
\end{definition}

\begin{equation}\label{eq:norm_compare}
\|X\|_{\infty,2}\le \|X\|_F \le \sqrt{\Tctx}\,\|X\|_{\infty,2}
\qquad\text{for }X\in\mathbb R^{\Tctx\times D}.
\end{equation}

\subsection{BIBO stability on \texorpdfstring{$\ell_\infty$}{ell_infty}}
\label{app:shared_defs:bibo}

\begin{definition}[BIBO stability on $\ell_\infty$]
\label{def:bibo_linf}
A map $\mathcal N:\ell_\infty(\mathbb N,\mathbb R^D)\to \ell_\infty(\mathbb N,\mathbb R^D)$
is BIBO-stable with respect to $\|\cdot\|_{\infty,2}$ if for every $B\ge 0$ there exists
$C_B<\infty$ such that
\[
\|x\|_{\infty,2}\le B\quad\Longrightarrow\quad \|\mathcal N(x)\|_{\infty,2}\le C_B.
\]
\end{definition}
\section{Jacobian tails under diffuse feedback routing}
\label{app:jacobian_tail}

\subsection{Sessa feedback solve as a parametric linear system}

Fix a horizon $\Tctx\in\mathbb N^*$ and token width $D\in\mathbb N^*$.
Let $x=(x_0,\dots,x_{\Tctx-1})\in(\mathbb R^D)^{\Tctx}$ be the input token sequence.
Let $f(x)=(f_0(x),\dots,f_{\Tctx-1}(x))\in(\mathbb R^r)^{\Tctx}$ be the forward sequence, where $r$ is the value space dimension,
and let $\alphafb(x)=(\alphafb_{t,j}(x))_{0\le j<t\le \Tctx-1}$ be the strictly-lower attention weights.
Let $\gamma(x)=(\gamma_0(x),\dots,\gamma_{\Tctx-1}(x))$ be the feedback gains.

Define the strictly lower-triangular matrix $\Bfb(x)\in\mathbb R^{\Tctx\times \Tctx}$ by
\[
\BfbEnt{t}{j}(x)=
\begin{cases}
\gamma_t(x)\,\alphafb_{t,j}(x), & j<t,\\
0, & j\ge t.
\end{cases}
\]
The mixer output $s(x)=(s_0(x),\dots,s_{\Tctx-1}(x))\in(\mathbb R^r)^{\Tctx}$ is defined as the unique solution to the causal solve
\begin{equation}\label{eq:app_solve}
(I-\Bfb(x))\,s(x)=f(x).
\end{equation}
Equivalently, by forward substitution,
\begin{equation}\label{eq:app_recur}
s_0=f_0,\qquad
s_t=f_t+\gamma_t\sum_{j=0}^{t-1}\alphafb_{t,j}s_j,\quad t\ge 1.
\end{equation}

We measure long-range sensitivity by the Jacobian blocks
\[
J_{t,\tau}(x):=\frac{\partial s_t(x)}{\partial x_\tau}\in\mathbb R^{r\times D},
\qquad 0\le \tau\le t\le \Tctx-1.
\]
Throughout this appendix we focus on the long-range case $\tau<t$ and lag $\ell:=t-\tau\ge 1$.

\subsection{Assumptions for diffuse routing and smoothness}

Fix a radius $R\ge 0$ and work on the ball $\mathcal X_R$ from Definition~\ref{def:linf2_ball}.
\begin{remark}[On the use of $t+1$ and $t$ in dilution bounds]\label{rem:tplus1_appendix}
In this appendix the feedback attention is strictly-lower, meaning that $j<t$, so $|\VisSet{t}|=t$ for $t\ge 1$.
We write $O(1/(t+1))$ to avoid a special case at $t=0$ and to match harmonic-series bounds;
for $t\ge 1$ this is equivalent to $O(1/t)$ up to absolute constants.
\end{remark}
\begin{assumption}[Row-stochasticity and diffuse envelope of feedback attention]\label{ass:jac_alpha}
For every $x\in\mathcal X_R$ and every $t\ge 1$,
\[
\alphafb_{t,j}(x)\ge 0,\qquad \sum_{j=0}^{t-1}\alphafb_{t,j}(x)=1,
\qquad \alphafb_{t,j}(x)\le \frac{c_2}{t}\quad\forall j<t,
\]
for some constant $c_2=c_2(R)\in(0,\infty)$.
We set $\alphafb_{0,\cdot}\equiv 0$.
\end{assumption}

\begin{assumption}[Bounded feedback gain and nontrivial diffuse regime]\label{ass:jac_gamma}
For every $x\in\mathcal X_R$ and every $t$,
\[
|\gamma_t(x)|\le \gamma_{\max}<1,
\]
and the diffuse feedback mass satisfies
\[
\eta:=\gamma_{\max}c_2<1,
\qquad \betatail:=1-\eta\in(0,1).
\]
\end{assumption}

\begin{assumption}[Token-wise local feedback gain]\label{ass:jac_gamma_local}
On $\mathcal X_R$, the feedback gain is token-wise: for each $t$ one has $\gamma_t(x)=\gamma(x_t)$.
In particular, for $\tau<t$,
\[
\frac{\partial \gamma_t(x)}{\partial x_\tau}=0.
\]
Assume additionally the token-wise Jacobian is bounded:
\[
\Big\|\frac{\partial \gamma(x_t)}{\partial x_t}\Big\|_2\le L_\gamma
\qquad\text{for all } \|x_t\|_2\le R.
\]
\end{assumption}

\begin{assumption}[Causality of forward branch and routing]\label{ass:jac_causal}
For each time $k$, the quantities $f_k(x)$, $\alphafb_{k,\cdot}(x)$, and $\gamma_k(x)$ depend only on the prefix $x_{0:k}$.
Equivalently, for any $\tau>k$,
\[
\frac{\partial f_k(x)}{\partial x_\tau}=0,\qquad
\frac{\partial \alphafb_{k,j}(x)}{\partial x_\tau}=0\ \ (\forall j<k),\qquad
\frac{\partial \gamma_k(x)}{\partial x_\tau}=0.
\]
\end{assumption}

\begin{assumption}[Local, same-token smoothness bounds]\label{ass:jac_local_bounds}
There exist finite constants $L_{f,0}=L_{f,0}(R)$ and $L_{\alpha,0}=L_{\alpha,0}(R)$ such that for all $x\in\mathcal X_R$ and all $t$,
\[
\Big\|\frac{\partial f_t(x)}{\partial x_t}\Big\|_2 \le L_{f,0},
\qquad
\sum_{j=0}^{t-1}\Big\|\frac{\partial \alphafb_{t,j}(x)}{\partial x_t}\Big\|_2 \le L_{\alpha,0}.
\]
\end{assumption}

\begin{assumption}[Bounded forward sequence]\label{ass:jac_f_bound}
There exists $F_R<\infty$ such that
\[
\|f(x)\|_{\infty,2}\le F_R\qquad\forall x\in\mathcal X_R.
\]
\end{assumption}

\begin{assumption}[Forward-branch dilution of cross-token Jacobians]\label{ass:jac_f_jac}
There exists $L_f=L_f(R)<\infty$ such that for all $x\in\mathcal X_R$,
all $t\ge \tau$, and all $\tau<t$,
\[
\Big\|\frac{\partial f_t(x)}{\partial x_\tau}\Big\|_2 \le \frac{L_f}{t+1}.
\]
Here $\|\cdot\|_2$ is the operator norm of the matrix $\mathbb R^D\to\mathbb R^r$.
\end{assumption}

\begin{assumption}[Smooth routing: $\alpha$-weighted logit sensitivity]\label{ass:jac_alpha_der}
Let $\alphafb_{t,\cdot}(x)=\mathrm{softmax}(\beth_{t,0}(x),\dots,\beth_{t,t-1}(x))$ denote the feedback-attention row
at time $t$, over $j<t$, with pre-softmax logits $\beth_{t,\idxlogit}(x)$ that may depend on the full prefix $x_{0:t}$.
There exists $L_{\mathrm{route}}=L_{\mathrm{route}}(R)<\infty$ such that for all $x\in\mathcal X_R$ and all $t>\tau\ge 0$,
\[
\sum_{\idxlogit=0}^{t-1}\alphafb_{t,\idxlogit}(x)\,
\Big\|\frac{\partial \beth_{t,\idxlogit}(x)}{\partial x_\tau}\Big\|_2
\;\le\;
\frac{L_{\mathrm{route}}}{t+1}.
\]
Consequently, by Lemma~\ref{lem:softmax_key_der},
\[
\sum_{j=0}^{t-1}\Big\|\frac{\partial \alphafb_{t,j}(x)}{\partial x_\tau}\Big\|_2
\;\le\;
\frac{2L_{\mathrm{route}}}{t+1}.
\]
\end{assumption}

\begin{remark}[When Assumption~\ref{ass:jac_alpha_der} holds]
If the feedback query is token-wise, $q_t=q(x_t)$, then for $\tau<t$ the dependence of $\alphafb_{t,\cdot}$ on $x_\tau$
typically enters only through key-side logits involving $k_\tau$, so only a small subset of logits have nonzero
$\partial \beth_{t,\idxlogit}/\partial x_\tau$. In that case, Assumption~\ref{ass:jac_alpha_der} reduces to the corresponding
localized logit-sensitivity bound.
More generally, if $q_t$, or other components upstream of logits, has cross-token sensitivity, Assumption~\ref{ass:jac_alpha_der}
requires that the resulting $\alphafb$-weighted logit sensitivities still dilute as $O(1/(t+1))$ on $\mathcal X_R$.
\end{remark}

\subsection{Auxiliary lemmas}

\begin{lemma}[Bound on the mixer state]\label{lem:jac_s_bound}
Under Assumption~\ref{ass:jac_gamma}--\ref{ass:jac_f_bound}, for all $x\in\mathcal X_R$,
\[
\|s(x)\|_{\infty,2}\le S_R:=\frac{F_R}{1-\gamma_{\max}}.
\]
\end{lemma}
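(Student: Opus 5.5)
The plan is to reduce this to the $\ell_\infty$ solve bound of Lemma~\ref{lem:triangular_linf}, applied pointwise in $x$. Fix $x\in\mathcal X_R$ and consider the feedback matrix $\Bfb(x)$ with entries $\BfbEnt{t}{j}(x)=\gamma_t(x)\alphafb_{t,j}(x)$ for $j<t$. First I verify the row-sum hypothesis of Lemma~\ref{lem:triangular_linf}. By Assumption~\ref{ass:jac_alpha}, each row $\alphafb_{t,\cdot}(x)$ is nonnegative and sums to one, and Assumption~\ref{ass:jac_gamma} gives $|\gamma_t(x)|\le\gamma_{\max}$. Hence
\[
\sum_{j<t}|\BfbEnt{t}{j}(x)|=|\gamma_t(x)|\sum_{j<t}\alphafb_{t,j}(x)\le\gamma_{\max}<1
\]
for all $t\ge1$. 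Row $t=0$ is identically zero, so it trivially satisfies the bound as well.

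Next, I apply Lemma~\ref{lem:triangular_linf} with $\rho=\gamma_{\max}$ and right-hand side $f(x)$. On the finite horizon $\Tctx$, I view $f(x)$ as an $\ell_\infty$ sequence by extending it by zero; alternatively, I repeat the lemma's induction directly on $\{0,\dots,\Tctx-1\}$. The lemma yields
\[
\|s(x)\|_{\infty,2}\le (1-\gamma_{\max})^{-1}\|f(x)\|_{\infty,2}.
\]
Assumption~\ref{ass:jac_f_bound} then bounds the right-hand side by $F_R/(1-\gamma_{\max})=S_R$. Since this holds uniformly over $x\in\mathcal X_R$, it gives the claim.

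For self-containedness, and to address the finite-horizon point, I would also sketch the induction behind the lemma. Set $M_t:=\max_{k\le t}\|s_k(x)\|_2$. From the recurrence \eqref{eq:app_recur} and the triangle inequality,
\[
\|s_t\|_2\le\|f_t\|_2+\gamma_{\max}M_{t-1}\le F_R+\gamma_{\max}M_{t-1}.
\]
By induction on $t$, starting from $M_0=\|f_0\|_2\le F_R\le S_R$, this gives $M_t\le F_R\sum_{i\ge0}\gamma_{\max}^i=S_R$. There is no genuine obstacle here. The only point needing care is to use the exact row-stochasticity $\sum_j\alphafb_{t,j}=1$ together with the gain bound, rather than the diffuse envelope $c_2/t$, which would give the worse constant $\eta$ summed over $t$ terms. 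Notably, the constant $S_R$ does not require $\eta<1$ at all.
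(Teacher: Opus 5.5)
Your proposal is correct and matches the paper's proof: the paper bounds $\|s_t\|_2\le\|f_t\|_2+\gamma_{\max}\max_{j<t}\|s_j\|_2$ using convexity of the feedback rows and $|\gamma_t|\le\gamma_{\max}$, then runs the same induction on partial maxima that you sketch, which is exactly Lemma~\ref{lem:triangular_linf} with $\rho=\gamma_{\max}$. Your side remark is also accurate: only $\sum_{j<t}\alphafb_{t,j}\le 1$ and $\gamma_{\max}<1$ are needed, and neither the diffuse envelope nor $\eta<1$ enters.
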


\begin{proof}
Since each $\alphafb_{t,\cdot}$ is a convex distribution and $|\gamma_t|\le\gamma_{\max}$,
\[
\|s_t\|_2\le \|f_t\|_2+\gamma_{\max}\max_{j<t}\|s_j\|_2.
\]
A standard induction on $\max_{k\le t}\|s_k\|_2$ yields
$\|s\|_{\infty,2}\le (1-\gamma_{\max})^{-1}\|f\|_{\infty,2}\le (1-\gamma_{\max})^{-1}F_R$.
\end{proof}
\begin{lemma}[Softmax row derivative: total variation bound]\label{lem:softmax_key_der}
Let $\alpha=\mathrm{softmax}(\beth)\in\mathbb R^{n}$ with logits $\beth\in\mathbb R^n$ depending on a parameter $z$.
Then
\[
\sum_{j}\Big\|\frac{\partial \alpha_j}{\partial z}\Big\|
\;\le\;
2\sum_{\idxlogit}\alpha_{\idxlogit}\,\Big\|\frac{\partial \beth_{\idxlogit}}{\partial z}\Big\|_2.
\]
\end{lemma}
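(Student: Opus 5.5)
The plan is to differentiate the softmax map explicitly and bound the resulting sum of absolute values using the triangle inequality and row-stochasticity. Write $\alpha_j=\exp(\beth_j)/\sum_i\exp(\beth_i)$. Standard softmax calculus gives the Jacobian $\partial\alpha_j/\partial\beth_i=\alpha_j(\mathbf 1[i=j]-\alpha_i)$. By the chain rule,
\[
\frac{\partial \alpha_j}{\partial z}
=\sum_i \alpha_j(\mathbf 1[i=j]-\alpha_i)\frac{\partial \beth_i}{\partial z}
=\alpha_j\Big(\frac{\partial \beth_j}{\partial z}-\bar g\Big),
\qquad
\bar g:=\sum_i\alpha_i\frac{\partial \beth_i}{\partial z}.
\]
Here $\partial\beth_i/\partial z$ is a row vector, or more generally a linear functional on the parameter space. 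The norm on the left is understood in the same sense as $\|\cdot\|_2$ on the right, namely the operator norm of a functional, which is the Euclidean norm of the gradient.

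Next, apply the triangle inequality termwise:
\[
\Big\|\frac{\partial\alpha_j}{\partial z}\Big\|\le \alpha_j\Big\|\frac{\partial \beth_j}{\partial z}\Big\|+\alpha_j\|\bar g\|.
\]
Sum over $j$. The first terms give exactly $\sum_j\alpha_j\|\partial\beth_j/\partial z\|_2$. For the second terms, use $\sum_j\alpha_j=1$ (softmax rows are probability vectors, with $\alpha_j\ge0$) to get $\|\bar g\|$. Then bound $\|\bar g\|\le\sum_i\alpha_i\|\partial\beth_i/\partial z\|_2$ by the triangle inequality and nonnegativity of the weights. Adding the two contributions gives the factor $2$.

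There is essentially no obstacle; the only points needing care are the following.
\begin{enumerate}
\item The norm conventions: the same norm must be used for the derivative of the scalar $\alpha_j$ and of $\beth_i$ with respect to $z$. Any norm works, since only the triangle inequality and homogeneity are used.
\item Nonnegativity of $\alpha_j$, which lets $\alpha_j$ be pulled outside the norms.
\end{enumerate}
If $z$ is matrix-valued, the same argument applies verbatim, with the derivatives viewed as elements of a normed dual space.
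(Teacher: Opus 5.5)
Your proof is correct; it differs from the paper's only in how the chain rule is organized.

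The paper works logit by logit. It computes the column sums of the softmax Jacobian, $\sum_j|\partial\alpha_j/\partial\beth_i|=2\alpha_i(1-\alpha_i)\le 2\alpha_i$, and then bounds $\sum_j\|\partial\alpha_j/\partial z\|\le\sum_i\bigl(\sum_j|\partial\alpha_j/\partial\beth_i|\bigr)\|\partial\beth_i/\partial z\|$.

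You work output by output. You first collapse the chain rule into the centered form $\partial\alpha_j/\partial z=\alpha_j(\partial\beth_j/\partial z-\bar g)$. This gives the exact identity $\sum_j\|\partial\alpha_j/\partial z\|=\sum_j\alpha_j\|\partial\beth_j/\partial z-\bar g\|$, which makes the shift invariance of softmax visible: the left side vanishes when all logit gradients coincide. The only loss in your argument is the single triangle inequality $\|\partial\beth_j/\partial z-\bar g\|\le\|\partial\beth_j/\partial z\|+\|\bar g\|$.

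The paper's route gives the slightly sharper intermediate bound $2\sum_i\alpha_i(1-\alpha_i)\|\partial\beth_i/\partial z\|_2$ at no extra cost. You can recover it from your identity by writing $\partial\beth_j/\partial z-\bar g=(1-\alpha_j)\,\partial\beth_j/\partial z-\sum_{i\neq j}\alpha_i\,\partial\beth_i/\partial z$ before applying the triangle inequality. Either way the stated factor $2$ follows.
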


\begin{proof}
The softmax Jacobian satisfies $\partial\alpha_j/\partial \beth_{\idxlogit}=\alpha_j(\mathbf 1[j=\idxlogit]-\alpha_{\idxlogit})$.
Thus
\[
\sum_{j=1}^{n}\Big|\frac{\partial \alpha_j}{\partial \beth_{\idxlogit}}\Big|
=
2\alpha_{\idxlogit}(1-\alpha_{\idxlogit})\le 2\alpha_{\idxlogit}.
\]
By the chain rule,
$\sum_j\|\partial\alpha_j/\partial z\|\le \sum_{\idxlogit} (\sum_j|\partial\alpha_j/\partial \beth_{\idxlogit}|)\,\|\partial\beth_{\idxlogit}/\partial z\|
\le 2\sum_{\idxlogit} \alpha_{\idxlogit}\|\partial\beth_{\idxlogit}/\partial z\|$.
\end{proof}

\begin{lemma}[Polynomial tail of the inverse kernel entries]\label{lem:jac_kernel_tail}
Fix $x\in\mathcal X_R$ and let $K(x):=(I-\Bfb(x))^{-1}$.
Under Assumptions~\ref{ass:jac_alpha}--\ref{ass:jac_gamma}, there exists a constant
\[
C_K:=\eta e^{\eta}=(1-\betatail)e^{1-\betatail}
\]
such that for all $0\le k<t\le \Tctx-1$,
\[
|K_{t,k}(x)|\le C_K\,(t-k)^{-\betatail},
\qquad\text{and}\qquad
K_{t,t}(x)=1.
\]
\end{lemma}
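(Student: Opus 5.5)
The plan is to majorize each column of $K(x)$ by the solution of an explicit scalar recursion whose partial sums telescope into a product. Fix $x\in\mathcal X_R$ and a source index $k$. The column $y_t:=K_{t,k}(x)$ is the impulse response of the forward-substitution recurrence \eqref{eq:app_recur}, taken in a scalar channel with $f_t=\mathbf 1[t=k]$. Because $\Bfb$ is strictly lower-triangular, this gives $y_t=0$ for $t<k$ and $y_k=1$, which is already the claim $K_{t,t}=1$. For $t>k$ it gives
\[
y_t=\sum_{j=k}^{t-1}\BfbEnt{t}{j}\,y_j .
\]
By Assumptions~\ref{ass:jac_alpha}--\ref{ass:jac_gamma}, every entry satisfies $|\BfbEnt{t}{j}|\le \gamma_{\max}c_2/t=\eta/t$.

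\emph{Step 1 (comparison).} Define the majorant $z_k=1$ and $z_t=\frac{\eta}{t}\sum_{j=k}^{t-1}z_j$ for $t>k$. Since every $z_j\ge 0$, strong induction on $t$ gives $|y_t|\le \frac{\eta}{t}\sum_{j<t}|y_j|\le z_t$.

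\emph{Step 2 (closed form).} Set $S_t:=\sum_{j=k}^{t}z_j$. The recursion becomes $S_t=S_{t-1}+z_t=S_{t-1}(1+\eta/t)$. Starting from $S_k=1$, this gives
\[
z_t=\frac{\eta}{t}\prod_{i=k+1}^{t-1}\Bigl(1+\frac{\eta}{i}\Bigr)
\qquad (t\ge k+1).
\]

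\emph{Step 3 (harmonic estimate).} Using $1+u\le e^u$ together with $\sum_{i=k+1}^{t-1}1/i\le 1+\log\frac{t}{k+1}$, we get $\prod_{i=k+1}^{t-1}(1+\eta/i)\le e^{\eta}\bigl(t/(k+1)\bigr)^{\eta}$. The harmonic bound holds uniformly, including when the sum is empty and when $k=0$. Hence
\[
|K_{t,k}|\le z_t\le \eta e^{\eta}\,t^{\eta-1}(k+1)^{-\eta}=\eta e^{\eta}\,t^{-\betatail}(k+1)^{-\eta}.
\]

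\emph{Step 4 (convert to the lag).} Since $t\ge t-k\ge 1$ and $\betatail>0$, we have $t^{-\betatail}\le (t-k)^{-\betatail}$. Also $(k+1)^{-\eta}\le 1$ because $\eta\ge 0$. Together these give $|K_{t,k}|\le \eta e^{\eta}(t-k)^{-\betatail}$, with $C_K=\eta e^\eta=(1-\betatail)e^{1-\betatail}$.

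The argument is short, and the only delicate point is Step 3. The harmonic estimate must be uniform in $k$, including the edge case $k=0$ where $\log$ of $k$ would blow up. That is why I normalize by $k+1$ and absorb the extra $+1$ into the factor $e^\eta$.
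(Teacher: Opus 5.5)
Your proof is correct and uses the same mechanism as the paper: majorize the column by a nonnegative impulse recursion, telescope its partial sums into $\prod(1+\eta/i)$, and bound that product by $e^{\eta}$ times a power via the harmonic sum. The paper instead goes through Corollary~\ref{cor:impulse_j_tail}, which weakens $c_2/t\le c_2/(t-k)$ and runs the harmonic estimate in the lag variable; by keeping the absolute-time envelope $\eta/t$ you get the sharper intermediate bound $|K_{t,k}|\le \eta e^{\eta}\,t^{-\betatail}(k+1)^{-\eta}$ (mirroring the factorized exact form in Corollary~\ref{cor:theta_tail_uniform_shifted}) before dropping $(k+1)^{-\eta}$ to reach the stated lag form with the same constant.
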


\begin{proof}
Fix \(x\in\mathcal X_R\), and abbreviate
\[
\Bfb:=\Bfb(x),\qquad
\alpha_{t,j}:=\alphafb_{t,j}(x),\qquad
K:=K(x)=(I-\Bfb)^{-1}.
\]
Since \(\Bfb\) is strictly lower-triangular on the finite horizon
\(\{0,\dots,\Tctx-1\}\), one has \(\Bfb^{\Tctx}=0\), hence
\[
K=(I-\Bfb)^{-1}=\sum_{m=0}^{\Tctx-1}\Bfb^m.
\]
Therefore \(K\) is lower-triangular with unit diagonal:
\[
K_{t,t}=1,\qquad K_{t,k}=0\ \text{for }t<k.
\]
It remains to prove the off-diagonal estimate.

Fix a source index \(k\in\{0,\dots,\Tctx-1\}\), and define
\[
u_t:=|K_{t,k}|\qquad (t\ge k).
\]
Then \(u_k=|K_{k,k}|=1\). Also, since \((I-\Bfb)K=I\), equivalently
\(K=I+\Bfb K\), for every \(t>k\) we have
\[
K_{t,k}
=
\sum_{j<t}\BfbEnt{t}{j}\,K_{j,k}.
\]
Because \(K_{j,k}=0\) for \(j<k\), this reduces to
\[
K_{t,k}
=
\sum_{j=k}^{t-1}\BfbEnt{t}{j}\,K_{j,k}
=
\gamma_t(x)\sum_{j=k}^{t-1}\alpha_{t,j}\,K_{j,k}.
\]
Taking absolute values and using Assumption~\ref{ass:jac_gamma},
\[
u_t
\le
|\gamma_t(x)|\sum_{j=k}^{t-1}\alpha_{t,j}\,u_j
\le
\gamma_{\max}\sum_{j=k}^{t-1}\alpha_{t,j}\,u_j,
\qquad t>k.
\tag{1}\label{eq:kernel_tail_u_rec}
\]

We now compare \(u\) to an explicit impulse-response sequence.
Define \((v_t^{(k)})_{t\ge 0}\) by
\[
v_t^{(k)}:=
\begin{cases}
0, & t<k,\\[2mm]
1, & t=k,\\[2mm]
\gamma_{\max}\displaystyle\sum_{j=0}^{t-1}\widetilde\alpha_{t,j}\,v_j^{(k)}, & t>k,
\end{cases}
\]
where the coefficients \(\widetilde\alpha_{t,j}\) are the following extension of the
finite-horizon row weights:
\[
\widetilde\alpha_{t,j}:=
\begin{cases}
\alpha_{t,j}, & 0\le j<t\le \Tctx-1,\\[2mm]
0, & t\ge \Tctx,\ 0\le j<t.
\end{cases}
\]
Then \(\widetilde\alpha_{t,j}\ge 0\), \(\sum_{j<t}\widetilde\alpha_{t,j}\le 1\) for every
\(t\ge 1\), and by Assumption~\ref{ass:jac_alpha},
\[
\widetilde\alpha_{t,j}\le \frac{c_2}{t}\qquad (t\ge 1,\ 0\le j<t).
\]
Thus the scalar recursion defining \(v^{(k)}\) satisfies the hypotheses of
Corollary~\ref{cor:impulse_j_tail} with impulse position \(j=k\), attention envelope
constant \(c_2\), and feedback bound \(\gamma_{\max}\). In particular, with
\[
\eta:=\gamma_{\max}c_2,\qquad \betatail:=1-\eta\in(0,1),
\]
that corollary yields
\[
v_t^{(k)}\le \eta e^{\eta}\,(t-k)^{-\betatail}
\qquad\text{for all }t>k.
\tag{2}\label{eq:kernel_tail_v_bound}
\]

It remains to show that \(u_t\le v_t^{(k)}\) for all \(t\in\{k,\dots,\Tctx-1\}\).
We prove this by induction on \(t\).

For \(t=k\), one has \(u_k=1=v_k^{(k)}\).

Now let \(t>k\), and assume \(u_j\le v_j^{(k)}\) for every \(j\in\{k,\dots,t-1\}\).
Using \eqref{eq:kernel_tail_u_rec}, the nonnegativity of the coefficients \(\alpha_{t,j}\),
and the induction hypothesis, we obtain
\[
u_t
\le
\gamma_{\max}\sum_{j=k}^{t-1}\alpha_{t,j}\,u_j
\le
\gamma_{\max}\sum_{j=k}^{t-1}\alpha_{t,j}\,v_j^{(k)}.
\]
Since \(v_j^{(k)}=0\) for \(j<k\) and \(\widetilde\alpha_{t,j}=\alpha_{t,j}\) for
\(t\le \Tctx-1\), this is exactly
\[
u_t
\le
\gamma_{\max}\sum_{j=0}^{t-1}\widetilde\alpha_{t,j}\,v_j^{(k)}
=
v_t^{(k)}.
\]
This closes the induction.

Combining the comparison \(u_t\le v_t^{(k)}\) with
\eqref{eq:kernel_tail_v_bound}, we conclude that for every \(0\le k<t\le \Tctx-1\),
\[
|K_{t,k}(x)|=u_t\le v_t^{(k)}\le \eta e^{\eta}\,(t-k)^{-\betatail}.
\]
Thus the claim holds with
\[
C_K:=\eta e^{\eta}=(1-\betatail)e^{\,1-\betatail}.
\]
Together with \(K_{t,t}=1\), this proves the lemma.
\end{proof}

\begin{lemma}[A convolution bound]\label{lem:jac_convolution}
Let $\betatail\in(0,1)$. There exists $C_{\betatail}<\infty$ such that for all integers $\ell\ge 1$ and all $\tau\ge 0$,
\[
\sum_{k=\tau}^{\tau+\ell-1}\frac{1}{(\tau+\ell-k)^{\betatail}}\cdot \frac{1}{k+1}
\ \le\
C_{\betatail}\,\ell^{-\betatail}\big(1+\log(1+\ell)\big).
\]
One may take, for instance,
\[
C_{\betatail}:=2^{\betatail}\,+\,\frac{2^{\betatail}}{1-\betatail}.
\]
\end{lemma}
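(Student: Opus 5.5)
We need to bound $\sum_{k=\tau}^{\tau+\ell-1}(\tau+\ell-k)^{-\betatail}(k+1)^{-1}$. We substitute $m:=\tau+\ell-k\in\{1,\dots,\ell\}$, so the sum becomes
\[
\Sigma:=\sum_{m=1}^{\ell} m^{-\betatail}\,\frac{1}{\tau+\ell-m+1}.
\]
Since $\tau\ge 0$, each factor satisfies $(\tau+\ell-m+1)^{-1}\le(\ell-m+1)^{-1}$. It therefore suffices to treat the worst case $\tau=0$, namely
\[
\Sigma_0=\sum_{m=1}^{\ell}m^{-\betatail}(\ell-m+1)^{-1},
\]
and the resulting bound is automatically uniform in $\tau$.

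We split the range at the midpoint. \emph{Near range} $1\le m\le \lceil \ell/2\rceil$: here $\ell-m+1\ge \ell/2$, so these terms are bounded by
\[
\frac{2}{\ell}\sum_{m\le \ell/2+1}m^{-\betatail}.
\]
The integral comparison $\sum_{m\le M}m^{-\betatail}\le M^{1-\betatail}/(1-\betatail)+1$ (or a cruder variant) makes this $\lesssim \frac{2}{\ell}\cdot\frac{(\ell/2+1)^{1-\betatail}}{1-\betatail}\lesssim \frac{2^{\betatail}}{1-\betatail}\,\ell^{-\betatail}$. This matches the second summand of $C_{\betatail}$ up to the handling of the $+1$ and ceiling effects.

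\emph{Far range} $m>\ell/2$: here $m^{-\betatail}\le (\ell/2)^{-\betatail}=2^{\betatail}\ell^{-\betatail}$. The remaining factor $\sum_{m>\ell/2}(\ell-m+1)^{-1}$ is a partial harmonic sum, at most $1+\log(1+\ell)$. This yields $2^{\betatail}\ell^{-\betatail}(1+\log(1+\ell))$, which is the first summand of $C_{\betatail}$. Adding the two ranges and using $1\le 1+\log(1+\ell)$ to absorb the near-range term gives
\[
\Sigma\le\Big(2^{\betatail}+\tfrac{2^{\betatail}}{1-\betatail}\Big)\ell^{-\betatail}\bigl(1+\log(1+\ell)\bigr).
\]

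The only delicate point is the bookkeeping needed to land exactly on the stated constant, rather than a larger absolute multiple. This concerns parity and ceiling effects at the split point, and the case $\ell=1$, which should be checked directly: there $\Sigma\le 1\le C_{\betatail}$. If the boundary terms do not fit cleanly, we would assign the midpoint term to whichever range has slack. We would also use the sharper bound $\sum_{m=1}^{M}m^{-\betatail}\le M^{1-\betatail}/(1-\betatail)$, which holds since $m^{-\betatail}\le\int_{m-1}^{m}s^{-\betatail}\,ds$, together with $(\ell/2)^{1-\betatail}\cdot 2/\ell=2^{\betatail}\ell^{-\betatail}$, so that the near-range term is exactly at most $\frac{2^{\betatail}}{1-\betatail}\ell^{-\betatail}$ when the near range is taken as $m\le\lfloor\ell/2\rfloor$. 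The far range $m\ge\lfloor\ell/2\rfloor+1$ then still satisfies $m>\ell/2$, as needed.
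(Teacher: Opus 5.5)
Your proof is correct and follows essentially the same route as the paper. Both split at the midpoint. On the large-lag half you bound the kernel factor by $2^{\betatail}\ell^{-\betatail}$, leaving a harmonic sum at most $1+\log(1+\ell)$; on the small-lag half you bound the harmonic factor by $2/\ell$, leaving $\sum_{r\le \ell/2} r^{-\betatail}\le (\ell/2)^{1-\betatail}/(1-\betatail)$. The only differences are that you index by the lag rather than by $k-\tau$ and reduce to $\tau=0$ up front.

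Your final bookkeeping lands exactly on the stated constant: take the small-lag range as $m\le\lfloor \ell/2\rfloor$ and use $\sum_{m=1}^{M}m^{-\betatail}\le M^{1-\betatail}/(1-\betatail)$. You can therefore drop the earlier $\lceil \ell/2\rceil$ variant with its hedged $\lesssim$ estimates.
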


\begin{proof}
Write $k=\tau+m$ where $m=0,\dots,\ell-1$:
\[
\sum_{m=0}^{\ell-1}\frac{1}{(\ell-m)^{\betatail}}\cdot\frac{1}{\tau+m+1}.
\]
Split into $m\le \lfloor \ell/2\rfloor$ and $m> \lfloor \ell/2\rfloor$.

If $m\le \ell/2$, then $(\ell-m)^{-\betatail}\le (\ell/2)^{-\betatail}=2^{\betatail}\ell^{-\betatail}$ and
\[
\sum_{m=0}^{\lfloor \ell/2\rfloor}\frac{1}{\tau+m+1}
\le 1+\int_{0}^{\ell/2}\frac{dm}{\tau+m+1}
\le 1+\log(1+\ell).
\]
Thus this part is $\le 2^{\betatail}\ell^{-\betatail}(1+\log(1+\ell))$.

If $m>\ell/2$, then $\tau+m+1\ge \ell/2$, so $(\tau+m+1)^{-1}\le 2/\ell$, hence
\[
\sum_{m>\ell/2}\frac{1}{(\ell-m)^{\betatail}}\cdot\frac{1}{\tau+m+1}
\le \frac{2}{\ell}\sum_{r=1}^{\lfloor \ell/2\rfloor}\frac{1}{r^{\betatail}}
\le \frac{2}{\ell}\Big(1+\int_{1}^{\ell/2}r^{-\betatail}\,dr\Big)
\le \frac{2}{\ell}\cdot \frac{1}{1-\betatail}\left(\frac{\ell}{2}\right)^{1-\betatail}
= \frac{2^{\betatail}}{1-\betatail}\,\ell^{-\betatail}.
\]
Combine the two bounds.
\end{proof}

\subsection{Polynomial Jacobian tail}

\begin{theorem}[Polynomial Jacobian tail under diffuse routing]\label{thm:jacobian_tail}
Assume Assumptions~\ref{ass:jac_alpha}--\ref{ass:jac_alpha_der}, \ref{ass:jac_gamma_local},
\ref{ass:jac_causal}, and \ref{ass:jac_local_bounds} hold on $\mathcal X_R$, and let
$\betatail:=1-\gamma_{\max}c_2\in(0,1)$ as in Assumption~\ref{ass:jac_gamma}.
Then there exists a constant $C(R)<\infty$ such that for every $x\in\mathcal X_R$ and every pair $\tau<t$ with lag $\ell=t-\tau\ge 1$,
\[
\Big\|\frac{\partial s_t(x)}{\partial x_\tau}\Big\|_2
\ \le\
C(R)\,\ell^{-\betatail}\big(1+\log(1+\ell)\big).
\]
In particular, long-range sensitivity decays at least polynomially in the lag, up to a logarithmic factor.

One may take explicitly
\[
C(R):=\widetilde C_K\Big(A_0(R)+(1+C_{\betatail})\,A_1(R)\Big),
\quad
\widetilde C_K:=\max\{1,C_K\},
\quad
C_K=\eta e^{\eta},
\quad
\eta=\gamma_{\max}c_2,
\]
where $C_{\betatail}$ is as in Lemma~\ref{lem:jac_convolution} and
\[
A_1(R):=L_f + 2\gamma_{\max}\,S_R\,L_{\mathrm{route}},
\qquad
A_0(R):=L_{f,0} + L_\gamma\,S_R + \gamma_{\max}\,S_R\,L_{\alpha,0},
\qquad S_R=\frac{F_R}{1-\gamma_{\max}}.
\]
\end{theorem}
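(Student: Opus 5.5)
Differentiating the solve $(I-\Bfb(x))s(x)=f(x)$ with respect to $x_\tau$ gives $(I-\Bfb)\,\partial s/\partial x_\tau = \partial f/\partial x_\tau + (\partial \Bfb/\partial x_\tau)\,s$, hence
\[
\frac{\partial s_t}{\partial x_\tau}=\sum_{k=\tau}^{t}K_{t,k}(x)\,g_k,
\qquad
g_k:=\frac{\partial f_k}{\partial x_\tau}+\sum_{j<k}\frac{\partial \BfbEnt{k}{j}}{\partial x_\tau}\,s_j ,
\]
with $K=(I-\Bfb)^{-1}$. By causality (Assumption~\ref{ass:jac_causal}), $g_k=0$ for $k<\tau$, so only $k\in\{\tau,\dots,t\}$ contributes. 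The plan is to bound each source term $g_k$, insert the kernel tail of Lemma~\ref{lem:jac_kernel_tail}, and sum with Lemma~\ref{lem:jac_convolution}.

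\emph{Step 1: bounding the source terms.} Write $\partial \BfbEnt{k}{j}/\partial x_\tau=(\partial\gamma_k/\partial x_\tau)\alphafb_{k,j}+\gamma_k\,\partial\alphafb_{k,j}/\partial x_\tau$, and use $\|s_j\|\le S_R$ from Lemma~\ref{lem:jac_s_bound}.
\begin{itemize}
\item For the same-token term $k=\tau$: Assumptions~\ref{ass:jac_local_bounds} and \ref{ass:jac_gamma_local} together with $\sum_j\alphafb_{\tau,j}\le 1$ give $\|g_\tau\|\le L_{f,0}+L_\gamma S_R+\gamma_{\max}S_R L_{\alpha,0}=A_0(R)$.
\item For $k>\tau$: the gain derivative vanishes by token-locality. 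Assumption~\ref{ass:jac_f_jac} gives $\|\partial f_k/\partial x_\tau\|\le L_f/(k+1)$, and Assumption~\ref{ass:jac_alpha_der} gives $\sum_j\|\partial\alphafb_{k,j}/\partial x_\tau\|\le 2L_{\mathrm{route}}/(k+1)$. Hence $\|g_k\|\le A_1(R)/(k+1)$.
\end{itemize}

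\emph{Step 2: kernel insertion and summation.} Use $K_{t,t}=1$ and $|K_{t,k}|\le C_K (t-k)^{-\betatail}$ for $k<t$. Split off three pieces:
\begin{itemize}
\item The $k=\tau$ piece is bounded by $\widetilde C_K A_0\,\ell^{-\betatail}$.
\item The $k=t$ piece is $\|g_t\|\le A_1/(t+1)\le A_1/\ell$, which is at most $A_1\ell^{-\betatail}$ since $\betatail<1$ and $\ell\ge1$.
\item The remaining range $\tau<k<t$ is bounded by $C_K A_1\sum_{k=\tau}^{t-1}(t-k)^{-\betatail}(k+1)^{-1}$. Lemma~\ref{lem:jac_convolution} bounds this by $C_K A_1 C_{\betatail}\ell^{-\betatail}(1+\log(1+\ell))$; enlarging the range to include $k=\tau$ only helps.
\end{itemize}
Collecting terms with $\widetilde C_K=\max\{1,C_K\}$ yields $C(R)=\widetilde C_K(A_0+(1+C_{\betatail})A_1)$.

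\emph{Main obstacle.} The delicate point is Step 1: the cross-token routing derivative must dilute like $1/(k+1)$ uniformly in $j$. This holds only because Assumption~\ref{ass:jac_alpha_der} controls the \emph{total-variation} sum via Lemma~\ref{lem:softmax_key_der}, rather than individual entries, which would cost an extra factor of $k$. It must then be paired with the uniform state bound $S_R$; this pairing is exactly why row-stochasticity and $\gamma_{\max}<1$ enter. The other points are bookkeeping. One must check the $k=t$ endpoint separately, because Lemma~\ref{lem:jac_kernel_tail} gives no decay on the diagonal. One must also absorb the $k=\tau$ term separately, because its source bound $A_0$ does not decay in $k$.
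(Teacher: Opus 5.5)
Your proposal is correct and follows essentially the same route as the paper. It differentiates the solve to get $\partial s/\partial x_\tau = K\bigl(\partial f/\partial x_\tau + (\partial \Bfb/\partial x_\tau)\,s\bigr)$ with $K=(I-\Bfb)^{-1}$, bounds the forcing by $A_0(R)$ at $k=\tau$ and by $A_1(R)/(k+1)$ for $k>\tau$, and then treats the $k=\tau$ and $k=t$ terms separately while summing the interior with Lemma~\ref{lem:jac_convolution}. This yields exactly the paper's constant $C(R)=\widetilde C_K\bigl(A_0(R)+(1+C_{\betatail})A_1(R)\bigr)$.
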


\begin{proof}
Fix $x\in\mathcal X_R$ and a source index $\tau$.
Differentiate the solve \eqref{eq:app_solve} with respect to $x_\tau$:
\[
(I-\Bfb)\,\frac{\partial s}{\partial x_\tau}
-\frac{\partial \Bfb}{\partial x_\tau}\,s
=
\frac{\partial f}{\partial x_\tau}.
\]
Multiplying by $K=(I-\Bfb)^{-1}$ gives
\begin{equation}\label{eq:app_J_rep}
\frac{\partial s}{\partial x_\tau}
=
K\Big(\frac{\partial f}{\partial x_\tau}+\frac{\partial \Bfb}{\partial x_\tau}\,s\Big).
\end{equation}
Taking the $t$-th row and operator norms yields
\begin{equation}\label{eq:app_J_row}
\Big\|\frac{\partial s_t}{\partial x_\tau}\Big\|_2
\le
\sum_{k=0}^{t}|K_{t,k}|\cdot
\Big\|\frac{\partial f_k}{\partial x_\tau}+\big(\frac{\partial \Bfb}{\partial x_\tau}s\big)_k\Big\|_2.
\end{equation}
By Assumption~\ref{ass:jac_causal}, if $k<\tau$ then $\partial f_k/\partial x_\tau=0$ and
$\partial \Bfb_{k,\cdot}/\partial x_\tau=0$, hence the sum starts at $k=\tau$.

\paragraph{Bounding the forcing term.}
We treat the single index $k=\tau$ separately from the range $k>\tau$.

\emph{Case 1: $k>\tau$.}
For $k>\tau$, Assumption~\ref{ass:jac_f_jac} gives
\[
\Big\|\frac{\partial f_k}{\partial x_\tau}\Big\|_2\le \frac{L_f}{k+1}.
\]
It remains to bound $\|(\partial \Bfb/\partial x_\tau)s\|$.
For $k>\tau$ we use the full decomposition
\[
\frac{\partial \BfbEnt{k}{j}}{\partial x_\tau}
=
\frac{\partial \gamma_k}{\partial x_\tau}\,\alphafb_{k,j}
+\gamma_k\,\frac{\partial\alphafb_{k,j}}{\partial x_\tau}.
\]
By Assumption~\ref{ass:jac_gamma_local}, $\partial\gamma_k/\partial x_\tau=0$ for $k>\tau$, so only the second term remains.
Therefore, using Lemma~\ref{lem:jac_s_bound} and Assumption~\ref{ass:jac_alpha_der},
\[
\Big\|\big(\frac{\partial \Bfb}{\partial x_\tau}s\big)_k\Big\|_2
\le
|\gamma_k|
\sum_{j<k}\Big\|\frac{\partial\alphafb_{k,j}}{\partial x_\tau}\Big\|_2\cdot \|s_j\|_2
\le
\gamma_{\max}\,S_R\sum_{j<k}\Big\|\frac{\partial\alphafb_{k,j}}{\partial x_\tau}\Big\|_2
\le
\gamma_{\max}\,S_R\cdot \frac{2L_{\mathrm{route}}}{k+1}.
\]
Thus for all $k>\tau$,
\[
\Big\|\frac{\partial f_k}{\partial x_\tau}+\big(\frac{\partial \Bfb}{\partial x_\tau}s\big)_k\Big\|_2
\le \frac{A_1(R)}{k+1},
\qquad
A_1(R):=L_f + 2\gamma_{\max}\,S_R\,L_{\mathrm{route}}.
\]

\emph{Case 2: $k=\tau$.}
Using Assumption~\ref{ass:jac_local_bounds} and Lemma~\ref{lem:jac_s_bound}, we bound
\[
\Big\|\frac{\partial f_\tau}{\partial x_\tau}\Big\|_2 \le L_{f,0}.
\]
Moreover, since $\BfbEnt{\tau}{j}=\gamma_\tau\,\alphafb_{\tau,j}$ for $j<\tau$,
\[
\Big\|\big(\tfrac{\partial \Bfb}{\partial x_\tau}s\big)_\tau\Big\|_2
\le
\Big\|\tfrac{\partial\gamma_\tau}{\partial x_\tau}\Big\|_2\cdot
\sum_{j<\tau}\alphafb_{\tau,j}\|s_j\|_2
\;+\;
|\gamma_\tau|\sum_{j<\tau}\Big\|\tfrac{\partial\alphafb_{\tau,j}}{\partial x_\tau}\Big\|_2\cdot \|s_j\|_2
\le
L_\gamma\,S_R+\gamma_{\max}\,L_{\alpha,0}\,S_R.
\]
Hence
\[
\Big\|\frac{\partial f_\tau}{\partial x_\tau}+\big(\frac{\partial \Bfb}{\partial x_\tau}s\big)_\tau\Big\|_2
\le A_0(R),
\qquad
A_0(R):=L_{f,0} + L_\gamma\,S_R + \gamma_{\max}\,S_R\,L_{\alpha,0}.
\]
\paragraph{Kernel tail and convolution.}
Plugging the forcing bound into \eqref{eq:app_J_row} and using Lemma~\ref{lem:jac_kernel_tail} yields
\[
\Big\|\frac{\partial s_t}{\partial x_\tau}\Big\|_2
\le
\ |K_{t,\tau}|\,A_0(R)\ +\ \sum_{k=\tau+1}^{t}|K_{t,k}|\cdot \frac{A_1(R)}{k+1}
\le
|K_{t,\tau}|\,A_0(R)\ +\ A_1(R)\Big(\frac{1}{t+1}+\sum_{k=\tau+1}^{t-1} C_K\,(t-k)^{-\betatail}\cdot \frac{1}{k+1}\Big).
\]
Let $\ell=t-\tau\ge 1$. 

We keep the $k=t$ term explicit and show it can be absorbed into the final tail factor:
\[
\frac{1}{t+1}\le \frac{1}{\tau+\ell+1}\le \frac{1}{\ell+1}\le \ell^{-1}.
\]
Since $\betatail\in(0,1)$ and $\ell\ge 1$, we have $\ell^{1-\betatail}\ge 1$, hence
\[
\ell^{-\betatail}=\ell^{1-\betatail}\,\ell^{-1}\ \ge\ \ell^{-1}.
\]
Therefore,
\begin{equation}\label{eq:app_diag_absorb}
\frac{1}{t+1}\ \le\ \ell^{-1}\ \le\ \ell^{-\betatail}\ \le\ \ell^{-\betatail}\big(1+\log(1+\ell)\big),
\end{equation}
so the $k=t$ contribution $\frac{A_1(R)}{t+1}$ is dominated by the same
$\ell^{-\betatail}(1+\log(1+\ell))$ envelope, with constant $1$.

For the isolated term, Lemma~\ref{lem:jac_kernel_tail} gives $|K_{t,\tau}|\le C_K\,\ell^{-\betatail}$.
For the remaining sum, apply Lemma~\ref{lem:jac_convolution} 
Note that $\sum_{k=\tau+1}^{t-1}\le\sum_{k=\tau}^{t-1}$:
\[
\sum_{k=\tau}^{t-1}(t-k)^{-\betatail}\cdot \frac{1}{k+1}
\le
C_{\betatail}\,\ell^{-\betatail}(1+\log(1+\ell)).
\]
Therefore
\[
\Big\|\frac{\partial s_t}{\partial x_\tau}\Big\|_2
\le
C_K\,A_0(R)\,\ell^{-\betatail}
\ +\ A_1(R)\,\ell^{-\betatail}(1+\log(1+\ell))
\ +\ C_K\,C_{\betatail}\,A_1(R)\,\ell^{-\betatail}(1+\log(1+\ell)).
\]
Since $\ell^{-\betatail}\le \ell^{-\betatail}(1+\log(1+\ell))$ for $\ell\ge 1$ and
$\widetilde C_K=\max\{1,C_K\}\ge 1$ and $\widetilde C_K\ge C_K$, we obtain
\[
\Big\|\frac{\partial s_t}{\partial x_\tau}\Big\|_2
\le
\widetilde C_K\Big(A_0(R)+(1+C_{\betatail})A_1(R)\Big)\,
\ell^{-\betatail}(1+\log(1+\ell)),
\]
which is the claim with the stated $C(R)$.
\end{proof}

\subsection{Jacobian tail for block outputs}

Consider the simplified block output of the form
\[
y_t = x_t + W^{\mathrm{out}}\,(s_t\odot g_t) + b^{\mathrm{out}},
\]
where $g_t=g_t(x_t)$ is token-wise and serves as a gate, and $W^{\mathrm{out}}$ is a fixed matrix.

\begin{corollary}[Jacobian tail for block outputs]\label{cor:jacobian_tail_block}
Under the assumptions of Theorem~\ref{thm:jacobian_tail}, suppose additionally that
$\|g(x)\|_{\infty,2}\le \GR$ for all $x\in\mathcal X_R$. Then for every $\tau<t$ with lag $\ell=t-\tau\ge 1$,
\[
\Big\|\frac{\partial y_t(x)}{\partial x_\tau}\Big\|_2
\ \le\
\|W^{\mathrm{out}}\|_2\,\GR\cdot C(R)\,\ell^{-\betatail}(1+\log(1+\ell)),
\qquad \forall x\in\mathcal X_R.
\]
\end{corollary}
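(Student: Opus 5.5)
The plan is to differentiate the simplified block output directly and then reuse Theorem~\ref{thm:jacobian_tail}. For $\tau<t$ the residual term $x_t$ and the bias $b^{\mathrm{out}}$ have no dependence on $x_\tau$. The gate $g_t=g_t(x_t)$ is token-wise, so $\partial g_t/\partial x_\tau=0$ as well. The product rule applied to $s_t\odot g_t$ therefore leaves a single term:
\[
\frac{\partial y_t(x)}{\partial x_\tau}
=
W^{\mathrm{out}}\,\operatorname{diag}(g_t(x))\,\frac{\partial s_t(x)}{\partial x_\tau}.
\]
There is one implicit point to fix: the Hadamard product requires $s_t,g_t\in\mathbb R^r$ with matching dimension. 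I will state this explicitly, taking $r=D$ as in the block definition.

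Next I take operator norms and use submultiplicativity:
\[
\Big\|\frac{\partial y_t}{\partial x_\tau}\Big\|_2
\le
\|W^{\mathrm{out}}\|_2\,\|\operatorname{diag}(g_t)\|_2\,\Big\|\frac{\partial s_t}{\partial x_\tau}\Big\|_2 .
\]
For the middle factor, $\|\operatorname{diag}(g_t)\|_2=\|g_t\|_\infty\le\|g_t\|_2$. By hypothesis $\|g_t\|_2\le\|g(x)\|_{\infty,2}\le \GR$ on $\mathcal X_R$. For the last factor, Theorem~\ref{thm:jacobian_tail} gives $C(R)\,\ell^{-\betatail}(1+\log(1+\ell))$ uniformly over $x\in\mathcal X_R$. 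Multiplying the three bounds yields the claim, with the same constant $C(R)$ and the same lag exponent.

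There is no real obstacle here. The only point needing care is to confirm that the gate introduces no cross-token derivative term, and that holds by the token-wise assumption. Without that assumption an extra term $W^{\mathrm{out}}\operatorname{diag}(s_t)\,\partial g_t/\partial x_\tau$ would appear, and it would have to be controlled separately, for example by $S_R$ times a cross-token Lipschitz bound for $g$.
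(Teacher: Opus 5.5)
Your proposal is correct and follows the paper's proof essentially line for line: for $\tau<t$ the residual, the bias, and the token-wise gate contribute no cross-token derivative, so $\partial y_t/\partial x_\tau = W^{\mathrm{out}}\operatorname{diag}(g_t)\,\partial s_t/\partial x_\tau$. This is then bounded via submultiplicativity, $\|\operatorname{diag}(g_t)\|_2\le\|g_t\|_2\le G_R$, and Theorem~\ref{thm:jacobian_tail}; your specialization to $r=D$ is harmless but unnecessary, since the same computation works verbatim with $s_t,g_t\in\mathbb R^r$ and $W^{\mathrm{out}}:\mathbb R^r\to\mathbb R^D$.
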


\begin{proof}
For $\tau<t$, $\partial x_t/\partial x_\tau=0$, and since $g_t$ is token-wise, $\partial g_t/\partial x_\tau=0$.
Thus
\[
\frac{\partial y_t}{\partial x_\tau}
=
W^{\mathrm{out}}\,\mathrm{Diag}(g_t)\,\frac{\partial s_t}{\partial x_\tau}.
\]
Taking operator norms and using $\|\mathrm{Diag}(g_t)\|_2\le \|g_t\|_2\le \GR$ plus Theorem~\ref{thm:jacobian_tail}
gives the result.
\end{proof}

\section{Proofs for Section~\ref{sec:theory_memory}}
\label{app:sec4_aux_proofs}

\begin{lemma}[Bounded logit spread implies near-uniform softmax weights]
\label{lem:bounded_logits}
Let $\mathcal I$ be a finite index set with $n:=|\mathcal I|$, and let $(\beth_j)_{j\in \mathcal I}\subset\mathbb R$ be logits.
Define the softmax weights
\[
\alpha_j \;=\; \frac{e^{\beth_j}}{\sum_{i\in \mathcal I} e^{\beth_i}},\qquad j\in \mathcal I.
\]
If the logit spread is bounded by
\[
\Delta \;:=\; \max_{i\in \mathcal I}\beth_i \;-\; \min_{i\in \mathcal I}\beth_i \;\le\; \Delta_0<\infty,
\]
then for every $j\in \mathcal I$,
\begin{equation}
\frac{e^{-\Delta_0}}{n}\;\le\;\alpha_j\;\le\;\frac{e^{\Delta_0}}{n}.
\label{eq:softmax_near_uniform}
\end{equation}
Equivalently, for all $i,j\in \mathcal I$ one has $e^{-\Delta_0}\le \alpha_i/\alpha_j \le e^{\Delta_0}$.
In particular, if $\Delta_0$ is uniformly bounded while $n$ grows, then $\alpha_j=\Theta(1/n)$ uniformly over $j\in \mathcal I$.
\end{lemma}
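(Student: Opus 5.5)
The plan is to compare every weight with the largest one and then normalize. Let $M:=\max_{i\in\mathcal I}\beth_i$ and $m:=\min_{i\in\mathcal I}\beth_i$, so that $M-m=\Delta\le\Delta_0$. Softmax is invariant under a common shift of the logits, so I first rewrite
\[
\alpha_j=\frac{e^{\beth_j-M}}{\sum_{i\in\mathcal I}e^{\beth_i-M}}.
\]
After this shift every exponent $\beth_i-M$ lies in $[-\Delta,0]$. Hence each term $e^{\beth_i-M}$ lies in $[e^{-\Delta_0},1]$. It follows that the denominator lies in $[n e^{-\Delta_0},\,n]$.

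Next I bound the numerator $e^{\beth_j-M}$, which also lies in $[e^{-\Delta_0},1]$. The upper bound on $\alpha_j$ comes from the largest numerator over the smallest denominator, $1/(ne^{-\Delta_0})=e^{\Delta_0}/n$. The lower bound comes from the smallest numerator over the largest denominator, $e^{-\Delta_0}/n$. Together these give \eqref{eq:softmax_near_uniform}.

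For the ratio statement, I use that the normalizer cancels: $\alpha_i/\alpha_j=e^{\beth_i-\beth_j}$. Since $|\beth_i-\beth_j|\le\Delta\le\Delta_0$, this ratio lies in $[e^{-\Delta_0},e^{\Delta_0}]$.

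Finally, for the $\Theta(1/n)$ claim I note that the constants $e^{\pm\Delta_0}$ do not depend on $n$ or on $j$. So whenever $\Delta_0$ stays bounded as $n$ grows, the two-sided bound gives $\alpha_j=\Theta(1/n)$ uniformly over $j\in\mathcal I$.

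No step is a real obstacle. The only point needing care is to bound the numerator and the denominator using the same extremal shift $M$, so that the two bounds combine consistently.
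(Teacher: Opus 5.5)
Your proof is correct and is essentially the paper's argument: you sandwich each exponential and the normalizer between extremal values and divide. The only cosmetic difference is that the paper anchors at the minimum logit, bounding $e^{\beth_j}$ in $[e^{\beth_{\min}}, e^{\beth_{\min}+\Delta_0}]$ and the normalizer in $[n e^{\beth_{\min}}, n e^{\beth_{\min}+\Delta_0}]$, whereas you shift by the maximum. Your direct check $\alpha_i/\alpha_j = e^{\beth_i-\beth_j}$ for the ratio claim is a small improvement, since the paper only asserts that claim as ``equivalent''; the absolute bounds alone would only place the ratio in $[e^{-2\Delta_0}, e^{2\Delta_0}]$.
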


\begin{proof}
Let $\beth_{\min}:=\min_{i\in \mathcal I}\beth_i$. Then $\beth_{\min}\le \beth_j\le \beth_{\min}+\Delta_0$ for all $j\in\mathcal I$, hence
$e^{\beth_{\min}}\le e^{\beth_j}\le e^{\beth_{\min}+\Delta_0}$ and
\[
n\,e^{\beth_{\min}} \;\le\; \sum_{i\in \mathcal I} e^{\beth_i} \;\le\; n\,e^{\beth_{\min}+\Delta_0}.
\]
Dividing $e^{\beth_j}$ by these bounds yields \eqref{eq:softmax_near_uniform}.
\end{proof}

\subsection{Proof of Lemma~\ref{lem:attn_smooth_routing}}
\label{app:proof_attn_smooth_routing}
\begin{proof}[Proof of Lemma~\ref{lem:attn_smooth_routing}]
Fix a time $t$ and an index $\tau<t$.  Write
\[
\alphadrv_{t,\cdot}(x)=\Softmax(\beth_{t,0}(x),\dots,\beth_{t,t}(x)),
\qquad
\alpha_j:=\alphadrv_{t,j}(x),\quad
\beta_j:=\beth_{t,j}(x),\qquad 0\le j\le t.
\]
Thus $\alpha=\Softmax(\beta)\in\R^{t+1}$ and $\sum_{j\le t}\alpha_j=1$.

Recall the standard softmax Jacobian identity: for all $j,\idxlogit\in\{0,\dots,t\}$, the softmax partial derivatives satisfy
\begin{equation}\label{eq:softmax_jac_attn_smooth}
\frac{\partial \alpha_j}{\partial \beta_{\idxlogit}}
=\alpha_j(\mathbf 1[j=\idxlogit]-\alpha_{\idxlogit}).
\end{equation}

By assumption, for each $j\le t$,
\[
\beta_j=\beth_{t,j}(x)=\langle q(x_t),\,k(x_j)\rangle,
\]
where $q,k$ are token-wise maps.  Since $\tau<t$, the quantity $q(x_t)$ depends only on $x_t$, hence
$\partial q(x_t)/\partial x_\tau=0$.  Similarly, $k(x_j)$ depends only on $x_j$, hence
$\partial k(x_j)/\partial x_\tau=0$ unless $j=\tau$.  Therefore,
\begin{equation}\label{eq:only_tau_logit_depends}
\frac{\partial \beta_{\idxlogit}}{\partial x_\tau}=0\quad\text{for all }\idxlogit\neq \tau,
\qquad\text{and potentially}\qquad
\frac{\partial \beta_\tau}{\partial x_\tau}\neq 0.
\end{equation}

Consequently, by the chain rule and \eqref{eq:only_tau_logit_depends},
\[
\frac{\partial \alpha_j}{\partial x_\tau}
=\sum_{\idxlogit\le t}\frac{\partial \alpha_j}{\partial \beta_{\idxlogit}}\frac{\partial \beta_{\idxlogit}}{\partial x_\tau}
=\frac{\partial \alpha_j}{\partial \beta_\tau}\frac{\partial \beta_\tau}{\partial x_\tau}
=\alpha_j(\mathbf 1[j=\tau]-\alpha_\tau)\,\frac{\partial \beta_\tau}{\partial x_\tau},
\]
where we used \eqref{eq:softmax_jac_attn_smooth} in the last step.  Taking operator norms gives
\begin{equation}\label{eq:dalpha_norm_pointwise}
\Big\|\frac{\partial \alpha_j}{\partial x_\tau}\Big\|_2
=
\big|\alpha_j(\mathbf 1[j=\tau]-\alpha_\tau)\big|
\cdot
\Big\|\frac{\partial \beta_\tau}{\partial x_\tau}\Big\|_2.
\end{equation}
Summing \eqref{eq:dalpha_norm_pointwise} over $j\le t$ yields
\[
\sum_{j\le t}\Big\|\frac{\partial \alpha_j}{\partial x_\tau}\Big\|_2
=
\Big(\sum_{j\le t}\big|\alpha_j(\mathbf 1[j=\tau]-\alpha_\tau)\big|\Big)
\Big\|\frac{\partial \beta_\tau}{\partial x_\tau}\Big\|_2.
\]
To evaluate the scalar sum, note that
\[
\sum_{j\le t}\big|\alpha_j(\mathbf 1[j=\tau]-\alpha_\tau)\big|
=
\underbrace{\alpha_\tau(1-\alpha_\tau)}_{j=\tau}
+
\underbrace{\sum_{j\neq \tau}\alpha_j\alpha_\tau}_{j\neq\tau}
=
\alpha_\tau(1-\alpha_\tau)+\alpha_\tau\sum_{j\neq\tau}\alpha_j
=
2\alpha_\tau(1-\alpha_\tau)
\le 2\alpha_\tau,
\]
since $\sum_{j\neq\tau}\alpha_j=1-\alpha_\tau$ and $1-\alpha_\tau\le 1$.
Therefore,
\[
\sum_{j\le t}\Big\|\frac{\partial \alphadrv_{t,j}(x)}{\partial x_\tau}\Big\|_2
\le
2\,\alphadrv_{t,\tau}(x)\,\Big\|\frac{\partial \beth_{t,\tau}(x)}{\partial x_\tau}\Big\|_2,
\]
which is the first claim.

\paragraph{In particular.}
If $\|\partial\beth_{t,\tau}(x)/\partial x_\tau\|_2\le L_{\beth}$ on $\mathcal X_R$, then
\[
\sum_{j\le t}\Big\|\frac{\partial \alphadrv_{t,j}(x)}{\partial x_\tau}\Big\|_2
\le
2L_{\beth}\,\alphadrv_{t,\tau}(x).
\]
In the diffuse regime of Definition~\ref{def:diffuse_regime}, Lemma~\ref{lem:bounded_logits} implies
$\alphadrv_{t,\tau}(x)=\Theta(1/|\VisSet{t}|)$ uniformly over $\tau\in\VisSet{t}$, hence the right-hand side is
$\lesssim 1/|\VisSet{t}|$.  For full-prefix attention $|\VisSet{t}|=t+1$.
\end{proof}

\subsection{Proof of Proposition~\ref{prop:shared_diffuse_decay_envelopes}}
\label{app:proof_shared_diffuse_decay_envelopes}
\begin{proof}[Proof of Proposition~\ref{prop:shared_diffuse_decay_envelopes}]
Fix a horizon $\Tctx$ and work with the fixed-routing Jacobians from
Section~\ref{sec:fair_comparison}.

\paragraph*{(1) Transformer: attention one-hop dilution.}
By definition of the value influence Jacobian under realized attention weights, by Eq.~\eqref{eq:def_J_attn},
\[
J^{\mathrm{attn}}_{t,\tau}
=\frac{\partial y_t}{\partial v_\tau}\Big|_{\alphadrv}
=\alphadrv_{t,\tau}\,I.
\]
Taking operator norms and using $\|I\|=1$ gives
\[
\|J^{\mathrm{attn}}_{t,\tau}\|=\|\alphadrv_{t,\tau}I\|=\alphadrv_{t,\tau}.
\]
Assume the shared diffuse (low-separation) regime of Definition~\ref{def:diffuse_regime} with
full-prefix visibility $\VisSet{t}=\{0,\dots,t\}$, so $|\VisSet{t}|=t+1$.
The bounded logit spread over $\VisSet{t}$ implies, by Lemma~\ref{lem:bounded_logits}, that for every $\tau\le t$,
\[
\frac{e^{-\Delta}}{t+1}\le \alphadrv_{t,\tau}\le \frac{e^{\Delta}}{t+1},
\]
hence $\alphadrv_{t,\tau}=\Theta(1/(t+1))$ and therefore
\[
\|J^{\mathrm{attn}}_{t,\tau}\|=\Theta\!\Big(\frac{1}{t+1}\Big)\qquad(\tau\le t).
\]
For a fixed old source $\tau=O(1)$ and lag $\ell=t-\tau$, we have
\[
\|J^{\mathrm{attn}}_{\tau+\ell,\tau}\|
=\alphadrv_{\tau+\ell,\tau}
=\Theta\!\Big(\frac{1}{\tau+\ell+1}\Big)
=\Theta(1/\ell),
\]
since $\tau$ is fixed and $\ell\to\infty$.

\paragraph*{(2) Mamba under failed freeze time.}
By definition of the fixed-routing impulse Jacobian for an SSM, by Eq.~\eqref{eq:def_J_ssm},
\[
J^{\mathrm{ssm}}_{t,\tau}
=
\Cssmt{t}\Big(\prod_{r=\tau+1}^{t}\Assmt{r}\Big)\Bssmt{\tau},
\qquad 0\le \tau\le t.
\]
Assume the realized recurrence has diagonal transitions
\[
\Assmt{r}=\operatorname{diag}(\exp(-a_n\Delta_r)),
\qquad a_n\ge \lambda>0,
\]
and bounded input/output factors
\[
\sup_r\|\Bssmt{r}\|\le B_{\max},
\qquad
\sup_r\|\Cssmt{r}\|\le C_{\max}.
\]
Then
\[
\Big\|\prod_{r=\tau+1}^{t}\Assmt{r}\Big\|
=
\max_n \exp\!\Big(-a_n\sum_{r=\tau+1}^{t}\Delta_r\Big)
\le
\exp\!\Big(-\lambda\sum_{r=\tau+1}^{t}\Delta_r\Big).
\]
Under the failed-freeze-time condition
\[
\sum_{r=\tau+1}^{t}\Delta_r\ge c_\Delta (t-\tau),
\]
it follows that
\[
\|J^{\mathrm{ssm}}_{t,\tau}\|
\le
C_{\max}B_{\max}\exp\!\big(-\lambda c_\Delta (t-\tau)\big).
\]
Setting $c:=C_{\max}B_{\max}$ and $\ell:=t-\tau$ gives
\[
\|J^{\mathrm{ssm}}_{t,\tau}\|\le c\,e^{-\lambda c_\Delta \ell}.
\]

\paragraph*{(3) Sessa: diffuse feedback routing.}
For a realized feedback matrix $\Bfb$, the solve Jacobian is the resolvent given by Eq.~\eqref{eq:def_J_sessa}
\[
J^{\mathrm{sessa}}=(I-\Bfb)^{-1},
\qquad
J^{\mathrm{sessa}}_{t,\tau}=[(I-\Bfb)^{-1}]_{t,\tau}.
\]
Since $\Bfb$ is scalar-valued, $J^{\mathrm{sessa}}_{t,\tau}\in\R$ is a scalar coefficient shared across features.

Fix $\tau$ and consider the impulse in the forward stream $f$ at time $\tau$:
$f_\tau=1$ and $f_t=0$ for $t\neq \tau$.  Let $s$ be the solution to $(I-\Bfb)s=f$.
By linearity, $s_t=J^{\mathrm{sessa}}_{t,\tau}$ for all $t$.
Moreover, by forward substitution (equivalently \eqref{eq:scalar_feedback}), $s_\tau=1$ and for $t>\tau$,
\[
s_t
=
f_t+\gamma_t\sum_{j=0}^{t-1}\alphafb_{t,j}s_j
=
\gamma_t\sum_{j=\tau}^{t-1}\alphafb_{t,j}s_j,
\]
since $f_t=0$ for $t\neq\tau$ and $s_j=0$ for $j<\tau$ in a strictly causal solve.

Under Assumptions~\ref{ass:alpha_upper_main}--\ref{ass:gamma_bound_main} we have
$\alphafb_{t,j}\le c_2/t$ for all $j<t$ and $|\gamma_t|\le \gamma_{\max}<1$, and defining
$\betatail:=1-\gamma_{\max}c_2\in(0,1]$, with $\gamma_{\max}c_2<1$,
Theorem~\ref{thm:poly_decay_main} applies to this impulse recursion, shifted to start at $\tau$, and yields that for all lags $\ell\ge 1$,
\[
|J^{\mathrm{sessa}}_{\tau+\ell,\tau}|
=
|s_{\tau+\ell}|
\le
C\,\ell^{-\betatail},
\]
for an explicit constant $C$, e.g.\ $C=(1-\betatail)e^{\,1-\betatail}$.

\paragraph{Tightness.}
In the explicit uniform-routing regime
\[
\BfbEnt{t}{j}=
\begin{cases}
0, & t=0,\\[2pt]
\dfrac{\gamma}{t}\mathbf 1[j<t], & t\ge 1,
\end{cases}
\qquad \gamma\in(0,1),
\]
one has $\alphafb_{t,j}=t^{-1}\mathbf 1[j<t]$ and constant gain
$\gamma_t\equiv\gamma$, hence $\betatail=1-\gamma$.
Appendix Corollary~\ref{cor:theta_tail_uniform_shifted} gives, for every fixed source position $\tau$,
\[
|J^{\mathrm{sessa}}_{\tau+\ell,\tau}|=\Theta_\tau(\ell^{-\betatail}).
\]
Moreover, Appendix Corollary~\ref{cor:theta_tail_uniform_bounded_source} yields the stronger
uniform statement that for every $\tau_{\max}<\infty$ there exist constants
$c^-_{\tau_{\max}},c^+_{\tau_{\max}}>0$ such that
\[
c^-_{\tau_{\max}}\,\ell^{-\betatail}
\le
|J^{\mathrm{sessa}}_{\tau+\ell,\tau}|
\le
c^+_{\tau_{\max}}\,\ell^{-\betatail}
\]
for all $0\le \tau\le \tau_{\max}$ and all $\ell\ge 1$.
Thus the one-layer envelope is tight for each fixed source and uniformly on every bounded source family,
in particular on every fixed finite horizon.
\end{proof}

\subsection{Proof of Proposition~\ref{prop:lti_exp_decay}}
\label{app:proof_lti_exp_decay}
\begin{proof}
The claim is about the input--output map and is independent of the chosen realization.
By the controllable and observable decomposition, also known as the Kalman decomposition~\citep{antsaklis2006linear}, there exists a similarity transform that isolates the
controllable and observable subsystem $(\Assm_{\mathrm{co}},\Bssm_{\mathrm{co}},\Cssm_{\mathrm{co}})$ such that for all $\ell\ge 0$,
\[
\Cssm \Assm^{\ell} \Bssm \;=\; \Cssm_{\mathrm{co}}\,\Assm_{\mathrm{co}}^{\ell}\,\Bssm_{\mathrm{co}}.
\]
Moreover, $(\Assm_{\mathrm{co}},\Bssm_{\mathrm{co}},\Cssm_{\mathrm{co}})$ is a minimal realization of the same transfer function, so it admits no pole--zero cancellations and its poles coincide with the reachable and observable eigenvalues of $\Assm_{\mathrm{co}}$~\citep{dahleh2011_chap27}. Since the transfer function is BIBO stable, all its poles lie strictly inside the unit disk (DT case)~\citep{dahleh2011_chap15}; hence $\rhospec{\Assm_{\mathrm{co}}}<1$.
It follows from standard finite-dimensional matrix power bounds that there exist $c>0$ and $\kappa\in(0,1)$ such that
$\|\Assm_{\mathrm{co}}^{\ell}\|\le c\,\kappa^{\ell}$ for all $\ell$, and therefore
$\|\Cssm \Assm^{\ell} \Bssm\|=\|\Cssm_{\mathrm{co}}\Assm_{\mathrm{co}}^{\ell}\Bssm_{\mathrm{co}}\|\le c'\kappa^{\ell}$.
\end{proof}

\subsection{Proof of Proposition~\ref{prop:mamba_e2e_decay}}
\label{app:proof_mamba_e2e_decay}

The key point is that, under ZOH discretization, the state-transition product is controlled by the
accumulated discretization time
\[
\sum_{r=\tau+1}^{t}\Delta_r(x),
\]
since each channel contributes a factor \(\exp(-a_n\Delta_r(x))\). Accordingly, the proof first obtains
an end-to-end Jacobian bound in terms of
\[
\Pi_{t,\ell}(x)=\exp\!\Big(-\lambda\sum_{r=\tau+1}^{t}\Delta_r(x)\Big),
\]
and only then converts this into exponential-in-lag decay under failed freeze time.

\begin{proof}
Fix $x\in\mathcal X_R$ and indices $\tau<t$, and set $\ell:=t-\tau\ge 1$.
Write $J^h_{t,\tau}:=\partial h_t(x)/\partial x_\tau$ and $J^{\mathrm{e2e}}_{t,\tau}:=\partial y_t(x)/\partial x_\tau$.
We use the product convention
\[
\prod_{r=\tau+1}^{t}\Assmt{r}:=\Assmt{t}\Assmt{t-1}\cdots \Assmt{\tau+1},
\qquad
\prod_{r=t+1}^{t}(\cdot):=I.
\]

\paragraph{State bound via ZOH convexity.}
In a ZOH-diagonal channel, each mode $n$ evolves as the scalar recursion
\[
(h_t)_n = e^{-a_n\Delta_t}\,(h_{t-1})_n + \frac{1-e^{-a_n\Delta_t}}{a_n}\,(b_t)_n,
\qquad a_n\ge \lambda,\ \Delta_t\ge 0,
\]
where we take
\[
b_t:=\widetilde{\Bssmt{t}}(x_t)u_t(x_t).
\]
By the bounds on $\widetilde{\Bssmt{t}}$ and $u_t$ on $\mathcal X_R$, we have
\[
\|b_t\|\le \Gmax U_R,
\]
and hence $|(b_t)_n|\le \Gmax U_R$ for each mode.
Since \(h_{-1}=0\), Lemma~\ref{lem:zoh_state_bound} applied componentwise with $a_{\min}=\lambda$ gives
\[
\sup_t |(h_t)_n|\le \frac{\Gmax U_R}{\lambda}\qquad\text{for every mode }n.
\]
Therefore
\[
\|h_t\|_2\le \sqrt{\dstate}\,\|h_t\|_\infty
\le \sqrt{\dstate}\,\frac{\Gmax U_R}{\lambda}
=:H_R.
\]

\paragraph{Jacobian recursion for $t>\tau$.}
For $t>\tau$, locality implies
\[
\frac{\partial \Assmt{t}(x_t)}{\partial x_\tau}
=
\frac{\partial \widetilde{\Bssmt{t}}(x_t)}{\partial x_\tau}
=
\frac{\partial u_t(x_t)}{\partial x_\tau}
=
\frac{\partial \Gssmt{t}(x_t)}{\partial x_\tau}
=0.
\]
Differentiating
\[
h_t=\Assmt{t}(x_t)\,h_{t-1}+\Gssmt{t}(x_t)\,\widetilde{\Bssmt{t}}(x_t)\,u_t(x_t)
\]
with respect to $x_\tau$ yields
\[
J^h_{t,\tau}=\Assmt{t}(x_t)\,J^h_{t-1,\tau},
\qquad t>\tau.
\]
Iterating gives
\[
J^h_{t,\tau}=\Big(\prod_{r=\tau+1}^{t}\Assmt{r}(x_r)\Big)\,J^h_{\tau,\tau}.
\]

\paragraph{Source-time derivative bound.}
At $t=\tau$, write $b_\tau:=\widetilde{\Bssmt{\tau}}(x_\tau)u_\tau(x_\tau)$ and differentiate the ZOH update:
\[
J^h_{\tau,\tau}
=
\Big(\frac{\partial \Assmt{\tau}(x_\tau)}{\partial x_\tau}\Big)h_{\tau-1}
+\Big(\frac{\partial \Gssmt{\tau}(x_\tau)}{\partial x_\tau}\Big)b_\tau
+\Gssmt{\tau}(x_\tau)\,\frac{\partial b_\tau}{\partial x_\tau}.
\]
Moreover,
\[
\frac{\partial b_\tau}{\partial x_\tau}
=
\Big(\frac{\partial \widetilde{\Bssmt{\tau}}(x_\tau)}{\partial x_\tau}\Big)u_\tau
+
\widetilde{\Bssmt{\tau}}(x_\tau)\Big(\frac{\partial u_\tau(x_\tau)}{\partial x_\tau}\Big).
\]
Since
\[
\Gssmt{\tau}(x_\tau)=\operatorname{diag}\!\Big(\frac{1-[\Assmt{\tau}(x_\tau)]_n}{a_n}\Big)_n,
\]
we have the operator bounds
\[
\|\Gssmt{\tau}(x_\tau)\|\le \frac{1}{\lambda},
\qquad
\Big\|\frac{\partial \Gssmt{\tau}(x_\tau)}{\partial x_\tau}\Big\|
\le \frac{1}{\lambda}\Big\|\frac{\partial \Assmt{\tau}(x_\tau)}{\partial x_\tau}\Big\|.
\]
Using
\[
\|h_{\tau-1}\|\le H_R,\qquad
\|b_\tau\|\le \Gmax U_R,
\]
together with the derivative bounds gives
\[
\|J^h_{\tau,\tau}\|
\le
L_A\,H_R \;+\;\frac{L_A}{\lambda}\,\Gmax U_R \;+\;\frac{1}{\lambda}\big(L_B\,U_R+\Gmax\,L_u\big)
=:J_R.
\]

\paragraph{Transition product bound by accumulated discretization time.}
Since each $\Assmt{r}$ is diagonal with entries $\exp(-a_n\Delta_r)$ and $a_n\ge \lambda$,
\[
\Big\|\prod_{r=\tau+1}^{t}\Assmt{r}(x_r)\Big\|
=
\max_n \exp\!\Big(-a_n\sum_{r=\tau+1}^{t}\Delta_r(x)\Big)
\le
\exp\!\Big(-\lambda\sum_{r=\tau+1}^{t}\Delta_r(x)\Big)
=:\Pi_{t,\ell}(x).
\]
Therefore
\[
\|J^h_{t,\tau}\|
\le
\Pi_{t,\ell}(x)\,\|J^h_{\tau,\tau}\|
\le
J_R\,\Pi_{t,\ell}(x).
\]

\paragraph{Output Jacobian.}
For $\tau<t$, locality implies $\partial \Cssmt{t}(x_t)/\partial x_\tau=0$, so
\[
\frac{\partial y_t}{\partial x_\tau}
=
\Cssmt{t}(x_t)\,J^h_{t,\tau}.
\]
Hence
\[
\Big\|\frac{\partial y_t(x)}{\partial x_\tau}\Big\|
\le
\|\Cssmt{t}(x_t)\|\,\|J^h_{t,\tau}\|
\le
C_R\,J_R\,\Pi_{t,\ell}(x).
\]
Thus the claim holds with
\[
C(R):=C_RJ_R.
\]
\end{proof}

\subsection{Proof of Lemma~\ref{lem:zoh_state_bound}}
\label{app:proof_zoh_state_bound}
\begin{proof}[Proof of Lemma~\ref{lem:zoh_state_bound}]
Fix $t\ge 0$ and define $\theta_t:=e^{-a\Delta_t}\in[0,1]$, since $a>0$ and $\Delta_t\ge 0$.
Then $1-\theta_t=1-e^{-a\Delta_t}\in[0,1]$, and the update can be rewritten as
\[
h_t
=\theta_t\,h_{t-1}+(1-\theta_t)\,\frac{b_t}{a}.
\]
Taking absolute values and using the triangle inequality yields
\[
|h_t|
\le
\theta_t|h_{t-1}|+(1-\theta_t)\frac{|b_t|}{a}.
\]
Since $\theta_t\in[0,1]$, for any $u,v\ge 0$ one has $\theta_t u+(1-\theta_t)v\le \max\{u,v\}$, hence
\[
|h_t|
\le
\max\Big\{|h_{t-1}|,\ \frac{|b_t|}{a}\Big\}
\le
\max\Big\{|h_{t-1}|,\ \frac{|b_t|}{a_{\min}}\Big\},
\]
using $a\ge a_{\min}$.

Define
\[
B_t:=\max\Big\{|h_{-1}|,\ \max_{0\le s\le t}\frac{|b_s|}{a_{\min}}\Big\}.
\]
We claim by induction that $|h_t|\le B_t$ for all $t\ge 0$.
For $t=0$ this follows from the previous inequality. If $|h_{t-1}|\le B_{t-1}$, then
\[
|h_t|
\le
\max\Big\{|h_{t-1}|,\ \frac{|b_t|}{a_{\min}}\Big\}
\le
\max\Big\{B_{t-1},\ \frac{|b_t|}{a_{\min}}\Big\}
=B_t,
\]
proving the induction. Taking $\sup_{t\ge 0}$ gives
\[
\sup_{t\ge 0}|h_t|
\le
\max\Big\{|h_{-1}|,\ \sup_{s\ge 0}\frac{|b_s|}{a_{\min}}\Big\},
\]
which is the general bound.

If additionally $|b_t|\le M$ for all $t$ and $h_{-1}=0$, then the right-hand side is at most $M/a_{\min}$,
proving $\sup_t|h_t|\le M/a_{\min}$.
\end{proof}

\begin{remark}[Vector and diagonal case]
For diagonal $A=-\mathrm{diag}(a_n)$ with $\min_n a_n\ge a_{\min}$, the bound holds componentwise for each mode and channel,
and hence yields the uniform bound $\|h_t\|_\infty\le \sup_s\|b_s\|_\infty/a_{\min}$.
More generally, for any monotone norm $\|\cdot\|$ one has
$\|h_t\|\le \|\mathbf 1\|\,\sup_s\|b_s\|_\infty/a_{\min}$.
\end{remark}

\subsection{Proof of Corollary~\ref{cor:mamba_e2e_freeze}}
\label{app:proof_mamba_e2e_freeze}
\begin{proof}
Proposition~\ref{prop:mamba_e2e_decay} gives
\[
\Big\|\frac{\partial y_t(x)}{\partial x_\tau}\Big\|
\le
C(R)\,\Pi_{t,\ell}(x),
\qquad
\Pi_{t,\ell}(x)
=
\exp\!\Big(-\lambda\sum_{r=\tau+1}^{t}\Delta_r(x)\Big).
\]
Under failed freeze time,
\[
\sum_{r=\tau+1}^{t}\Delta_r(x)\ge c_\Delta(t-\tau).
\]
Applying Proposition~\ref{prop:freeze_errors_exp} yields
\[
\Pi_{t,\ell}(x)\le \exp\!\big(-\lambda c_\Delta (t-\tau)\big),
\]
and therefore
\[
\Big\|\frac{\partial y_t(x)}{\partial x_\tau}\Big\|
\le
C(R)\exp\!\big(-\lambda c_\Delta (t-\tau)\big).
\]
\end{proof}

\begin{remark}[Local windows]
If $\Assmt{t},\widetilde{\Bssmt{t}},\Cssmt{t},u_t$ depend on a fixed window $x_{t-K:t}$, the same argument yields
\[
\Big\|\frac{\partial y_t}{\partial x_\tau}\Big\|
\le
C(R)\,\exp\!\Big(-\lambda\sum_{r=\tau+K+1}^{t}\Delta_r(x)\Big)
\qquad (t>\tau+K),
\]
so the same failed-freeze-time conclusion holds up to a finite-window slack.
\end{remark}

\subsection{Proof of Proposition~\ref{prop:freeze_errors_exp}}
\label{app:proof_freeze_errors_exp}
\begin{proof}
By definition,
\[
\Pi_{t,\ell}
=
\exp\!\Big(-\lambda\sum_{r=\tau+1}^{t}\Delta_r\Big).
\]
Under the failed-freeze-time condition
\[
\sum_{r=\tau+1}^{t}\Delta_r\ge c_\Delta(t-\tau)=c_\Delta \ell,
\]
we obtain
\[
\Pi_{t,\ell}
\le
\exp\!\big(-\lambda c_\Delta \ell\big).
\]
This is exactly the claim.
\end{proof}

\subsection{Details for Proposition~\ref{prop:shared_regime_decay_envelopes_e2e}}
\label{app:proof_shared_regime_decay_envelopes_e2e_items12}
\begin{proof}
\textbf{(1) Transformer attention in the no-freeze setting.}
Let $y_t(x)=\sum_{j\in \VisSet{t}}\alpha_{t,j}(x)\,v(x_j)$.
For $\tau<t$, differentiate:
\[
\frac{\partial y_t}{\partial x_\tau}
=
\alpha_{t,\tau}\frac{\partial v(x_\tau)}{\partial x_\tau}
+\sum_{j\in \VisSet{t}}\frac{\partial \alpha_{t,j}(x)}{\partial x_\tau}\,v(x_j).
\]
Taking operator norms and using $\|\partial v(x_\tau)/\partial x_\tau\|\le L_v$ and $\|v(x_j)\|\le V_R$ yields
\[
\Big\|\frac{\partial y_t}{\partial x_\tau}\Big\|
\le
\alpha_{t,\tau}L_v
+
V_R\sum_{j\in \VisSet{t}}\Big\|\frac{\partial \alpha_{t,j}}{\partial x_\tau}\Big\|.
\]
Under the shared regime in Section~\ref{sec:e2e_comparison}, $\alpha_{t,\tau}\le c_2/|\VisSet{t}|$ and
$\sum_{j\in \VisSet{t}}\|\partial \alpha_{t,j}/\partial x_\tau\|\le L_\alpha/|\VisSet{t}|$, hence
$\|\partial y_t/\partial x_\tau\|\lesssim 1/|\VisSet{t}|$.
For full-prefix attention $|\VisSet{t}|=t+1$, recovering $\|\partial y_t/\partial x_\tau\|\lesssim 1/(t+1)$.

\textbf{(2) Mamba under failed freeze time.}
Item~(2) follows by combining Proposition~\ref{prop:mamba_e2e_decay} with failed freeze time, namely
\[
\sum_{r=\tau+1}^{t}\Delta_r(x)\ge c_\Delta(t-\tau),
\]
that is, by Corollary~\ref{cor:mamba_e2e_freeze}.
\end{proof}
\section{BIBO stability on infinite horizons and uniform-in-\texorpdfstring{$\Tctx$}{Tctx} bounds}
\label{sec:sessa_bibo_infinite}

We extend the finite-horizon BIBO statement to infinite sequences under an explicit
row-contraction condition, and to uniform-in-$\Tctx$ bounds for truncated length-$\Tctx$ networks
without appealing to compactness.

\subsection{Sequence norms and stability definition}

We use the norm $\|\cdot\|_{\infty,2}$ and balls from Definition~\ref{def:linf2_ball}.
For finite tensors we also use the comparison \eqref{eq:norm_compare}.

\subsection{Feedback matrix and row-contraction condition}

Fix a causal width-$m$ Sessa block $G$ as in Section~\ref{sec:sessa_block}, but now acting on
infinite sequences in $\ell_\infty(\mathbb N,\mathbb R^m)$.
We emphasize that the block input and output live in $\mathbb R^m$, while the triangular solve
$(I-\Bfb)s=f$ is performed in a value space $\mathbb R^r$: in our definition,
$s_t\in\mathbb R^r$, $f_t\in\mathbb R^r$, $g_t\in\mathbb R^r$, and $z_t=s_t\odot g_t\in\mathbb R^r$,
and the output projection is token-wise affine $o:\mathbb R^r\to\mathbb R^m$.

\paragraph{Causal feedback-attention weights.}
For each input $x$, the masked softmax in the feedback branch defines strictly lower-triangular weights
$(\alphafb_{t\tau}(x))_{t,\tau\ge 0}$ with
\begin{equation}\label{eq:attention_props}
\alphafb_{t\tau}(x)\ge 0,\qquad \alphafb_{t\tau}(x)=0\ \text{for }\tau\ge t,\qquad
\sum_{\tau<t}\alphafb_{t\tau}(x)=1\ \text{for }t\ge 1,
\end{equation}
with the empty sum $=0$ for $t=0$. These properties hold as follows: for $t\ge 1$ each row $t$ is a softmax over the finite set
$\{0,\dots,t-1\}$, hence $\alphafb_{t\tau}\ge 0$ and $\sum_{\tau<t}\alphafb_{t\tau}=1$; for $t=0$ we set
$\alphafb_{0\tau}=0$ for all $\tau$, i.e.\ the context is empty, so the empty sum equals $0$.
\paragraph{Feedback attention matrix.}
Define $\Alphafb(x):=(\alphafb_{t\tau}(x))_{t,\tau\ge 0}$.

\paragraph{Feedback coefficient and the Sessa matrix $\Bfb$.}
By definition of the Sessa block, the feedback coefficient is
\[
\gamma_t(x)=\tanh(u_t(x))\in(-1,1),
\]
computed token-wise from the block input, via affine maps and element-wise nonlinearities.
Define the diagonal operator $\Gammafb{x}:=\mathrm{diag}(\gamma_t(x))_{t\ge 0}$ and the
strictly lower-triangular matrix
\begin{equation}\label{eq:Bfb_def}
\Bfb(x):=\Gammafb{x}\,\Alphafb(x)
\qquad\Longleftrightarrow\qquad
\BfbEnt{t}{\tau}(x)=\gamma_t(x)\,\alphafb_{t\tau}(x).
\end{equation}

\begin{assumption}[Uniform feedback margin and row contraction]\label{ass:row_contraction}
For every radius $R\ge 0$ there exists $\rho(R)\in[0,1)$ such that for all inputs
$x\in\ell_\infty(\mathbb N,\mathbb R^m)$ with $\|x\|_{\infty,2}\le R$,
\begin{equation}\label{eq:rho_condition}
\sup_{t\ge 0}|\gamma_t(x)|\le \rho(R).
\end{equation}
In particular, using \eqref{eq:attention_props}--\eqref{eq:Bfb_def}, for every $x$,
\begin{equation}\tag{$\star$}\label{eq:star_condition}
\sup_{t\ge 0}\sum_{\tau<t}|\BfbEnt{t}{\tau}(x)|
=\sup_{t\ge 1}\sum_{\tau<t}|\BfbEnt{t}{\tau}(x)|
=\sup_{t\ge 1}|\gamma_t(x)|
\le \sup_{t\ge 0}|\gamma_t(x)|
\le \rho(R)<1.
\end{equation}
\end{assumption}

\begin{remark}[An explicit choice of \rho(R)]
If $u_t(x)$ is produced by a token-wise feedforward stack of affine maps and element-wise
nonlinearities $\sigma$ satisfying $|\sigma(z)|\le |z|$ coordinate-wise; this holds for $\mathrm{GELU}$.
Affine and linear maps are handled separately via spectral norms as in Lemma~\ref{lem:affine_bound}. Then for some explicit constants $c_\gamma\ge 0$, $L_{\gamma,\mathrm{pre}}\ge 0$
depending only on the block parameters,
\begin{equation}\label{eq:u_bound}
\sup_{t\ge 0}|u_t(x)|\ \le\ c_\gamma + L_{\gamma,\mathrm{pre}} \|x\|_{\infty,2}.
\end{equation}
Hence on the ball $\|x\|_{\infty,2}\le R$ one can take
\begin{equation}\label{eq:rho_explicit}
\rho(R):=\tanh(c_\gamma+L_{\gamma,\mathrm{pre}} R)\ <\ 1.
\end{equation}
The strict inequality holds since $c_\gamma+L_{\gamma,\mathrm{pre}} R<\infty$ and $\tanh(\cdot)<1$ for finite
arguments.
\end{remark}

\subsection{Causal triangular solve on \texorpdfstring{$\ell_\infty$}{ell_infty}}

The only operation that truly changes nature at $\Tctx=\infty$ is the lower-triangular solve.
We treat it as a causal linear system.

\subsection{Proof of Lemma~\ref{lem:triangular_linf}}
\label{app:proof_triangular_linf}

\begin{proof}
Let $\Bfb=(\BfbEnt{t}{\tau})_{t,\tau\ge 0}$ be strictly lower-triangular and define the causal operator
$(\Bfb s)_t:=\sum_{\tau<t}\BfbEnt{t}{\tau}s_\tau$, a finite sum for each fixed $t$, acting on $\mathbb R^r$-valued sequences.
Here $\BfbEnt{t}{\tau}\in\mathbb R$ is scalar and multiplies $s_\tau\in\mathbb R^r$, i.e.\ scalar--vector multiplication.
Assume
\[
\sup_{t\ge 0}\sum_{\tau<t}|\BfbEnt{t}{\tau}|\ \le\ \rho\ <\ 1.
\]
Then for every bounded input $f\in\ell_\infty(\mathbb N,\mathbb R^r)$ there exists a unique
bounded solution $s\in\ell_\infty(\mathbb N,\mathbb R^r)$ to
\[
s = f + \Bfb s \qquad\text{equivalently, $(I-\Bfb)s=f$},
\]
and it satisfies the explicit bound
\begin{equation}\label{eq:triangular_s_bound}
\|s\|_{\infty,2}\ \le\ \frac{1}{1-\rho}\,\|f\|_{\infty,2}.
\end{equation}

Existence and uniqueness follow by forward substitution: for $t=0$, $s_0=f_0$; for $t\ge 1$,
\[
s_t = f_t + \sum_{\tau<t} \BfbEnt{t}{\tau}s_\tau
\]
depends only on previously defined $(s_\tau)_{\tau<t}$. Thus a unique sequence $s$ exists.

For the bound, define the partial maxima
\[
M_t := \max_{0\le k\le t}\|s_k\|_2 \qquad (t\ge 0).
\]
For $t=0$ we have $s_0=f_0$, hence $M_0=\|s_0\|_2\le \|f\|_{\infty,2}$.
For $t\ge 1$, using the row-sum estimate and $M_{t-1}\ge \|s_\tau\|_2$ for all $\tau<t$,
\[
\|s_t\|_2 \le \|f_t\|_2 + \sum_{\tau<t}|\BfbEnt{t}{\tau}|\|s_\tau\|_2
\le \|f\|_{\infty,2} + \rho\, M_{t-1}.
\]
We now prove by induction that for all $t\ge 0$,
\[
M_t \le \frac{1}{1-\rho}\,\|f\|_{\infty,2}.
\]
The base case $t=0$ holds since $M_0\le \|f\|_{\infty,2}\le \frac{1}{1-\rho}\|f\|_{\infty,2}$.
Assume the claim holds for $t-1$ with some $t\ge 1$. Then the previous estimate gives
\[
\|s_t\|_2 \le \|f\|_{\infty,2} + \rho\,M_{t-1}
\le \|f\|_{\infty,2} + \rho\,\frac{1}{1-\rho}\|f\|_{\infty,2}
= \frac{1}{1-\rho}\|f\|_{\infty,2}.
\]
Hence $M_t=\max\{M_{t-1},\|s_t\|_2\}\le \frac{1}{1-\rho}\|f\|_{\infty,2}$, completing the induction.
Taking $\sup_{t\ge 0}$ gives $\|s\|_{\infty,2}=\sup_t\|s_t\|_2=\sup_t M_t
\le \frac{1}{1-\rho}\|f\|_{\infty,2}$, which is \eqref{eq:triangular_s_bound}.
\end{proof}

\subsection{Explicit one-block bound without compactness}

We now bound one Sessa block on $\ell_\infty$ balls by tracking constants explicitly.

\begin{lemma}[Token-wise affine bound]\label{lem:affine_bound}
Let $y_t=x_tW+b$ with $W\in\mathbb R^{d\times d'}$ and $b\in\mathbb R^{d'}$, where the same $W$ and $b$ are used for all tokens.
Then for any sequence $x$, finite or infinite,
\[
\|y\|_{\infty,2}\le \|W\|_2\,\|x\|_{\infty,2}+\|b\|_2,
\]
where $\|\cdot\|_2$ is the spectral norm for matrices and Euclidean norm for vectors.
\end{lemma}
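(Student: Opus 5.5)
The bound follows token by token from the triangle inequality and the definition of the spectral norm, after which we pass to the supremum over tokens.

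Fix a token index $t$. By definition $y_t=x_tW+b$, so the triangle inequality gives
\[
\|y_t\|_2\le \|x_tW\|_2+\|b\|_2 .
\]
Since $x_t$ is a row vector, $\|x_tW\|_2=\|W^\top x_t^\top\|_2$. The spectral norm satisfies $\|W^\top\|_2=\|W\|_2$, so
\[
\|x_tW\|_2\le \|W\|_2\,\|x_t\|_2 .
\]
Combining the two displays yields, for every $t$,
\[
\|y_t\|_2\le \|W\|_2\,\|x_t\|_2+\|b\|_2 .
\]

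Next use $\|x_t\|_2\le\|x\|_{\infty,2}$, which holds by Definition~\ref{def:linf2_ball} for both finite and infinite sequences. The right-hand side is then bounded by $\|W\|_2\,\|x\|_{\infty,2}+\|b\|_2$, a quantity independent of $t$.

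Taking the maximum over $t$ in the finite case, or the supremum in the infinite case, gives the claim. If $\|x\|_{\infty,2}=\infty$, the inequality holds trivially with the usual extended-real conventions.

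No step presents a real obstacle. The only point requiring care is the row-vector convention $x_tW$, where one must use $\|W^\top\|_2=\|W\|_2$ rather than assuming a column-vector form.
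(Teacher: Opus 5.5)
Your proof is correct and is the standard argument: apply the triangle inequality tokenwise, bound $\|x_tW\|_2\le\|W\|_2\|x_t\|_2$ using $\|W^\top\|_2=\|W\|_2$, then take the supremum over $t$. The paper states this lemma without a written proof, treating exactly this argument as immediate, and your handling of $\|x\|_{\infty,2}=\infty$ is fine (with $0\cdot\infty=0$ covering $W=0$).
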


\begin{lemma}[Causal attention is $\ell_\infty$-nonexpansive]\label{lem:attn_nonexpansive}
Let $\Alphafb=(\alphafb_{t\tau})$ satisfy \eqref{eq:attention_props}. Then for any value sequence $v$,
\noindent the sequence $y$ defined by $y_t:=\sum_{\tau<t}\alphafb_{t\tau}v_\tau$ satisfies
\[
\|y\|_{\infty,2}\le \|v\|_{\infty,2}.
\]
\end{lemma}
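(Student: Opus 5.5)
The plan is a pointwise triangle inequality followed by a convexity argument. Fix $t$. If $t=0$, the row $\alphafb_{0\tau}$ is identically zero by \eqref{eq:attention_props}, so $y_0=0$ and the bound holds trivially. For $t\ge 1$, write $y_t=\sum_{\tau<t}\alphafb_{t\tau}v_\tau$. Since this is a finite sum of vectors in $\mathbb R^r$ with scalar coefficients, the triangle inequality for $\|\cdot\|_2$ gives
\[
\|y_t\|_2\le \sum_{\tau<t}|\alphafb_{t\tau}|\,\|v_\tau\|_2 .
\]

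Next I use the two row properties from \eqref{eq:attention_props}. Nonnegativity gives $|\alphafb_{t\tau}|=\alphafb_{t\tau}$. Each $\|v_\tau\|_2$ is at most $\|v\|_{\infty,2}$ by the definition of the sup--$\ell_2$ norm. The row sum over $\tau<t$ equals $1$. Together these yield
\[
\|y_t\|_2\le \|v\|_{\infty,2}\sum_{\tau<t}\alphafb_{t\tau}=\|v\|_{\infty,2}.
\]
In other words, $y_t$ is a convex combination of the $v_\tau$, $\tau<t$, so it lies in the ball of radius $\|v\|_{\infty,2}$.

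Finally I take the supremum over $t\ge 0$; for finite horizons this is a maximum. This gives $\|y\|_{\infty,2}\le \|v\|_{\infty,2}$. The argument also covers the case $\|v\|_{\infty,2}=\infty$, where the bound is vacuous.

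There is no real obstacle. The only points needing care are the empty $t=0$ row and the fact that only the row-stochastic properties \eqref{eq:attention_props} are used, not any diffuseness assumption. For that reason the bound holds for every input $x$ and every infinite horizon.
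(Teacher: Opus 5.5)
Your proof is correct and follows the paper's argument. The paper likewise observes that for $t\ge 1$ the vector $y_t$ is a convex combination of $\{v_\tau\}_{\tau<t}$, so $\|y_t\|_2\le\sup_{\tau<t}\|v_\tau\|_2\le\|v\|_{\infty,2}$; it notes that $y_0=0$ because the sum is empty, and then takes the supremum over $t\ge 0$.
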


\begin{proof}
For $t\ge 1$, $y_t$ is a convex combination of $\{v_\tau\}_{\tau<t}$, hence
\[
\|y_t\|_2 \le \sup_{\tau<t}\|v_\tau\|_2 \le \|v\|_{\infty,2}.
\]
For $t=0$ the sum is empty, hence $y_0=0$ and $\|y_0\|_2\le \|v\|_{\infty,2}$ as well.
Taking the supremum over $t\ge 0$ gives $\|y\|_{\infty,2}\le \|v\|_{\infty,2}$.
\end{proof}

\begin{proposition}[One Sessa block: explicit ball-to-ball bound]\label{prop:one_block_bound}
Consider one width-$m$ Sessa block $G:\ell_\infty(\mathbb N,\mathbb R^m)\to\ell_\infty(\mathbb N,\mathbb R^m)$.
Assume:
\begin{itemize}
\item the feedback matrix is $\Bfb(x)=\Gammafb{x}\Alphafb(x)$ with $\Alphafb(x)$ satisfying \eqref{eq:attention_props} and
$\gamma_t(x)=\tanh(u_t(x))$ as above;
\item the block produces sequences $f(x),g(x)\in\ell_\infty(\mathbb N,\mathbb R^r)$ and an output projection
$o:\mathbb R^r\to\mathbb R^m$ given token-wise by
\[
o(z)_t = z_t W^{\mathrm{out}} + b^{\mathrm{out}},\qquad
W^{\mathrm{out}}\in\mathbb R^{r\times m},\ \ b^{\mathrm{out}}\in\mathbb R^{m};
\]
\item the block output is $G(x)=x+o(z)$ with $z_t=s_t\odot g_t\in\mathbb R^r$ and the solve is in value space:
\[
z_t=s_t\odot g_t\in\mathbb R^r,\qquad (I-\Bfb(x))s=f(x),\qquad s\in\ell_\infty(\mathbb N,\mathbb R^r).
\]
\end{itemize}
Suppose there exist explicit constants $c_f,c_g,c_\gamma\ge 0$ and $L_f,L_g,L_{\gamma,\mathrm{pre}}\ge 0$, depending only on the block parameters, such that for all inputs $x$,
\begin{equation}\label{eq:branch_bounds}
\|f(x)\|_{\infty,2}\le c_f+L_f\|x\|_{\infty,2},\quad
\|g(x)\|_{\infty,2}\le c_g+L_g\|x\|_{\infty,2},\quad
\sup_t|u_t(x)|\le c_\gamma+L_{\gamma,\mathrm{pre}}\|x\|_{\infty,2}.
\end{equation}
Define, for $R\ge 0$,
\[
\rho_R:=\tanh(c_\gamma+L_{\gamma,\mathrm{pre}} R)\in[0,1),
\qquad
F_R:=c_f+L_f R,
\qquad
G_R:=c_g+L_g R.
\]
Then for all $x$ with $\|x\|_{\infty,2}\le R$, the block output satisfies the explicit bound
\begin{equation}\label{eq:block_bound}
\|G(x)\|_{\infty,2}
\;\le\;
R
\;+\;
\|W^{\mathrm{out}}\|_2\,\frac{F_R\,G_R}{1-\rho_R}
\;+\;
\|b^{\mathrm{out}}\|_2.
\end{equation}
\end{proposition}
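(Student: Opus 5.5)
The plan is to chain the elementary bounds already established, in the order in which the block is computed. Fix $R\ge 0$ and an input $x$ with $\|x\|_{\infty,2}\le R$. By \eqref{eq:branch_bounds} we immediately get $\|f(x)\|_{\infty,2}\le F_R$ and $\|g(x)\|_{\infty,2}\le G_R$. Moreover, $\sup_t|u_t(x)|\le c_\gamma+L_{\gamma,\mathrm{pre}}R$, and since $\tanh$ is odd and increasing this gives $\sup_t|\gamma_t(x)|\le\rho_R<1$. This $\rho_R$ is strictly below one because its argument is finite.

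\textbf{Step 1: the solve.} Since each row of $\Alphafb(x)$ is a probability vector by \eqref{eq:attention_props}, the row-sum identity in \eqref{eq:star_condition} gives $\sup_t\sum_{\tau<t}|\BfbEnt{t}{\tau}(x)|\le\rho_R$. Lemma~\ref{lem:triangular_linf} then yields a unique $s\in\ell_\infty(\mathbb N,\mathbb R^r)$ with
\[
\|s\|_{\infty,2}\le \frac{F_R}{1-\rho_R}.
\]

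\textbf{Step 2: the gate.} For each $t$ we have $\|s_t\odot g_t\|_2\le\|s_t\|_2\|g_t\|_\infty\le\|s_t\|_2\|g_t\|_2$. This uses the coordinatewise bound $|a_ib_i|\le|a_i|\max_j|b_j|$. Taking suprema over $t$ gives $\|z\|_{\infty,2}\le F_RG_R/(1-\rho_R)$.

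\textbf{Step 3: projection and residual.} Lemma~\ref{lem:affine_bound} applied to $o$ gives $\|o(z)\|_{\infty,2}\le\|W^{\mathrm{out}}\|_2\|z\|_{\infty,2}+\|b^{\mathrm{out}}\|_2$. The triangle inequality applied tokenwise to $G(x)_t=x_t+o(z)_t$, together with $\|x\|_{\infty,2}\le R$, then yields \eqref{eq:block_bound}.

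There is no genuine obstacle here. The only points requiring care are checking that the infinite-horizon solve is well posed, which Lemma~\ref{lem:triangular_linf} handles, and confirming that the margin $\rho_R$ is uniform in $t$. That uniformity is exactly what the sup bound on $u_t$ in \eqref{eq:branch_bounds} provides.
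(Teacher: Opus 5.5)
Your proposal is correct and follows the paper's proof essentially step for step: the branch bounds on the ball, the row-sum bound \eqref{eq:star_condition} feeding Lemma~\ref{lem:triangular_linf}, the tokenwise Hadamard estimate, and then Lemma~\ref{lem:affine_bound} together with the triangle inequality for the residual. The only cosmetic difference is in the Hadamard step: you pass through $\|g_t\|_\infty$, whereas the paper bounds $\sum_i s_{ti}^2g_{ti}^2$ directly by $\|s_t\|_2^2\|g_t\|_2^2$.
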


\begin{proof}
On $\|x\|_{\infty,2}\le R$, \eqref{eq:branch_bounds} gives $\|f\|_{\infty,2}\le F_R$ and
$\|g\|_{\infty,2}\le G_R$. Also $\sup_t|u_t(x)|\le c_\gamma+L_{\gamma,\mathrm{pre}} R$, hence
$\sup_t|\gamma_t(x)|\le \rho_R$.
Using \eqref{eq:star_condition}, we get $\sup_t\sum_{\tau<t}|\BfbEnt{t}{\tau}(x)|\le \rho_R<1$.
Lemma~\ref{lem:triangular_linf} then yields
\[
\|s\|_{\infty,2}\le \frac{1}{1-\rho_R}\|f\|_{\infty,2}\le \frac{F_R}{1-\rho_R}.
\]

For the element-wise product in $\mathbb R^r$, for each $t$,
\[
\|z_t\|_2=\|s_t\odot g_t\|_2\le \|s_t\|_2\,\|g_t\|_2,
\]
since
\[
\|s_t\odot g_t\|_2^2
=\sum_i s_{ti}^2 g_{ti}^2
\le \sum_i s_{ti}^2 \Big(\sum_j g_{tj}^2\Big)
= \|s_t\|_2^2\|g_t\|_2^2.
\]
Hence
\[
\|z\|_{\infty,2}\le \|s\|_{\infty,2}\,\|g\|_{\infty,2}
\le \frac{F_R}{1-\rho_R}\,G_R.
\]

Finally, by Lemma~\ref{lem:affine_bound} for $o(z)=zW_o+b_o$ and the residual $G(x)=x+o(z)$,
\[
\|G(x)\|_{\infty,2}\le \|x\|_{\infty,2}+\|o(z)\|_{\infty,2}
\le R+\|W^{\mathrm{out}}\|_2\|z\|_{\infty,2}+\|b^{\mathrm{out}}\|_2,
\]
which gives \eqref{eq:block_bound}.
\end{proof}

\begin{remark}[Explicit dependence of the constants in \eqref{eq:branch_bounds}]
Each branch, including the query, key, and value maps and the MLPs producing $f$, $g$, and $u$ and related components,
is a finite composition of token-wise affine maps, $\mathrm{RoPE}_t$ rotations that are orthogonal and norm-preserving,
masked softmax attention as in Lemma~\ref{lem:attn_nonexpansive}, and element-wise nonlinearities whose growth is at most linear on bounded sets.
The solve $(I-\Bfb)s=f$ and the Hadamard product $z=s\odot g$ take place in the value space $\mathbb R^r$,
while the output projection $o:\mathbb R^r\to\mathbb R^m$ is token-wise affine.
Thus one can always choose $c_\bullet$ and $L_\bullet$ explicitly from the operator norms of the weight matrices involved
and the norms of the biases, by repeated use of Lemma~\ref{lem:affine_bound} and the inequality $\|\mathrm{GELU}(v)\|_2\le \|v\|_2$.
\end{remark}

\section{Polynomial decay of token influence in the feedback recursion}
\label{sec:poly_decay}

\subsection{Scalar recursion and impulse response}

We work on discrete time $t\in\mathbb N=\{0,1,2,\dots\}$.
Let $(\gamma_t)_{t\ge 0}$ be a sequence in $\mathbb R$, and let
$\{\alphafb_{t,j}\}_{t\ge 1,\ 0\le j<t}$ be nonnegative weights such that, for every $t\ge 1$,
\begin{equation}\label{eq:alpha_simplex}
\alphafb_{t,j}\ge 0,\qquad \sum_{j=0}^{t-1}\alphafb_{t,j}\le 1.
\end{equation}
Given an input sequence $(f_t)_{t\ge 0}$, consider the recursion
\begin{equation}\label{eq:feedback_general}
y_0=f_0,\qquad
y_t=f_t+\gamma_t\sum_{j=0}^{t-1}\alphafb_{t,j}y_j,\quad t\ge 1.
\end{equation}

To isolate the influence of a single token, we consider the impulse input at time $0$:
\[
f_0=1,\qquad f_t=0\ \text{for }t\ge 1,
\]
so that \eqref{eq:feedback_general} reduces to the impulse response recursion
\begin{equation}\label{eq:impulse}
\begin{cases}
y_0=1,\\[2mm]
y_t=\gamma_t\displaystyle\sum_{j=0}^{t-1}\alphafb_{t,j}y_j,\quad t\ge 1.
\end{cases}
\end{equation}
In the full vector model, $y_t$ can be interpreted as a scalar influence coefficient, e.g.\ an entry of $(I-\Bfb)^{-1}$.

\subsection{Assumptions}

\begin{assumption}[Upper envelope on attention]\label{ass:alpha_upper_tail}
There exists a constant $c_2\in(0,\infty)$ such that for all $t\ge 1$ and all $0\le j<t$,
\begin{equation}\label{eq:alpha_upper}
\alphafb_{t,j}\le \frac{c_2}{t},
\qquad\text{and \eqref{eq:alpha_simplex} holds.}
\end{equation}
\end{assumption}

\begin{remark}[On the size of $c_2$]
Under \eqref{eq:alpha_simplex} with $\sum_{j=0}^{t-1}\alphafb_{t,j}\le 1$, the conclusion $c_2\ge 1$ no longer follows.
If one additionally has $\sum_{j=0}^{t-1}\alphafb_{t,j}=1$ for all $t$, then $c_2\ge 1$ is necessary.
\end{remark}

\begin{assumption}[Bounded feedback]\label{ass:gamma_bound_tail}
There exists $\gamma_{\max}\in[0,1)$ such that for all $t\ge 0$,
\begin{equation}\label{eq:gamma_bound}
|\gamma_t|\le \gamma_{\max}.
\end{equation}
\end{assumption}

Define the feedback mass parameter
\begin{equation}\label{eq:eta_def}
\eta:=\gamma_{\max}c_2,
\end{equation}
and assume the nontrivial feedback regime
\begin{equation}\label{eq:eta_lt_1}
0<\eta<1.
\end{equation}
Equivalently, define the tail exponent
\begin{equation}\label{eq:betatail_def}
\betatail:=1-\eta=1-\gamma_{\max}c_2\in(0,1],
\end{equation}
so that $\eta=1-\betatail$.

\begin{remark}[Degenerate case $\eta=0$]\label{rem:eta_zero}
If $\eta=0$ then $\gamma_{\max}=0$ and hence $\gamma_t=0$ for all $t$.
The recursion \eqref{eq:impulse} has no feedback and the impulse response is trivial:
$y_0=1$ and $y_t=0$ for all $t\ge 1$.
We therefore focus on $0<\eta<1$ when stating a genuine power-law tail.
\end{remark}

\subsection[Bounded logits imply near-uniform softmax weights]
{Bounded logits imply near-uniform softmax weights}

\paragraph{Bounded logits imply near-uniform softmax weights.}
This is an immediate specialization of Lemma~\ref{lem:bounded_logits}.
Indeed, fix $t\ge 1$ and take the index set $\mathcal I=\{0,\dots,t-1\}$ with $n=|\mathcal I|=t$.
If the logits satisfy $\beth_{\min}\le \beth_{t,j}\le \beth_{\max}$ for all $j\in\mathcal I$, then the spread is
$\Delta_0=\beth_{\max}-\beth_{\min}$, and Lemma~\ref{lem:bounded_logits} gives, for all $j<t$,
\begin{equation}\label{eq:alpha_bounds_logits}
\frac{e^{\beth_{\min}-\beth_{\max}}}{t}\ \le\ \alphafb_{t,j}\ \le\ \frac{e^{\beth_{\max}-\beth_{\min}}}{t}.
\end{equation}
In particular, Assumption~\ref{ass:alpha_upper_tail} holds with $c_2=e^{\beth_{\max}-\beth_{\min}}$.

\subsection{Polynomial decay theorem}

\begin{theorem}[Polynomial decay of the impulse response]\label{thm:poly_decay_tail}
Consider the impulse recursion \eqref{eq:impulse}. Suppose Assumptions~\ref{ass:alpha_upper_tail}
and~\ref{ass:gamma_bound_tail} hold and $0<\eta=\gamma_{\max}c_2<1$; equivalently $\betatail=1-\eta\in(0,1)$.
Then for all $t\ge 1$,
\begin{equation}\label{eq:poly_decay_bound}
|y_t|\le C\,t^{-\betatail},
\qquad\text{where one may take}\qquad
C:=(1-\betatail)\,e^{\,1-\betatail}=\eta e^{\eta}.
\end{equation}
In particular, since $\betatail>0$, we have $\lim_{t\to\infty}y_t=0$.
\end{theorem}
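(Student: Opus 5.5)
The plan is to work with absolute values and control the partial sums of $|y_t|$ by a multiplicative recursion. Set $u_t:=|y_t|$ and $S_t:=\sum_{j=0}^{t}u_j$, so $u_0=S_0=1$. From \eqref{eq:impulse}, Assumption~\ref{ass:gamma_bound_tail} and the envelope of Assumption~\ref{ass:alpha_upper_tail}, for every $t\ge 1$,
\[
u_t\;\le\;|\gamma_t|\sum_{j=0}^{t-1}\alphafb_{t,j}\,u_j\;\le\;\gamma_{\max}\,\frac{c_2}{t}\sum_{j=0}^{t-1}u_j\;=\;\frac{\eta}{t}\,S_{t-1}.
\]
This uses only the per-entry bound $\alphafb_{t,j}\le c_2/t$ and nonnegativity; the simplex condition \eqref{eq:alpha_simplex} is not needed for this step.

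The next step is the harmonic-growth comparison for the partial sums. Since $S_t=S_{t-1}+u_t\le S_{t-1}(1+\eta/t)$, induction from $S_0=1$ gives
\[
S_{t-1}\;\le\;\prod_{k=1}^{t-1}\Bigl(1+\frac{\eta}{k}\Bigr)\;\le\;\exp\Bigl(\eta\sum_{k=1}^{t-1}\frac1k\Bigr)\;\le\;\exp\bigl(\eta(1+\log(t-1))\bigr)\;=\;e^{\eta}(t-1)^{\eta}
\]
for $t\ge 2$. Here I use $1+z\le e^z$ and the standard bound $H_n\le 1+\log n$.

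Finally, I combine the two bounds. For $t\ge 2$,
\[
u_t\;\le\;\frac{\eta}{t}\,e^{\eta}(t-1)^{\eta}\;\le\;\eta e^{\eta}\,t^{\eta-1}\;=\;\eta e^{\eta}\,t^{-\betatail}.
\]
For $t=1$ one has directly $u_1\le\eta S_0=\eta\le\eta e^{\eta}$. This gives \eqref{eq:poly_decay_bound} with $C=\eta e^{\eta}=(1-\betatail)e^{1-\betatail}$. Since $\betatail>0$, it also follows that $y_t\to 0$.

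The only delicate point is obtaining the exact constant $\eta e^\eta$ rather than a cruder one. This requires bounding $H_{t-1}$ by $1+\log(t-1)$, not $1+\log t$, and then absorbing $(t-1)^\eta\le t^\eta$. The case $t=1$ has to be handled separately because $\log 0$ is undefined. Everything else is a routine induction.
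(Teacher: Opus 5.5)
Your proof is correct and is essentially the paper's argument: bound $|y_t|\le \frac{\eta}{t}S_{t-1}$, iterate $S_t\le S_{t-1}(1+\eta/t)$, and use $H_n\le 1+\log n$. The one streamlining is that you skip the paper's auxiliary comparison sequence $\tilde y_t$, which is indeed unnecessary because the partial-sum inequality holds directly for $|y_t|$.

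One correction to your closing remark: the constant $\eta e^{\eta}$ does not require the sharper bound $H_{t-1}\le 1+\log(t-1)$. The paper uses $S_{t-1}\le S_t\le e^{\eta}t^{\eta}$, i.e.\ $H_{t-1}\le H_t\le 1+\log t$, which gives the same constant and makes your separate treatment of $t=1$ unnecessary.
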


\begin{proof}
Assume $0<\eta<1$. The degenerate case $\eta=0$ is covered by Remark~\ref{rem:eta_zero}.
Let $z_t:=|y_t|$. From \eqref{eq:impulse} and Assumptions~\ref{ass:alpha_upper_tail}--\ref{ass:gamma_bound_tail},
for $t\ge 1$,
\[
z_t
=\Bigl|\gamma_t\sum_{j=0}^{t-1}\alphafb_{t,j}y_j\Bigr|
\le |\gamma_t|\sum_{j=0}^{t-1}\alphafb_{t,j}|y_j|
\le \gamma_{\max}\sum_{j=0}^{t-1}\alphafb_{t,j}z_j.
\]
Define the comparison sequence $(\tilde y_t)_{t\ge 0}$ by
\begin{equation}\label{eq:tilde_def}
\tilde y_0=1,\qquad
\tilde y_t=\gamma_{\max}\sum_{j=0}^{t-1}\alphafb_{t,j}\tilde y_j,\quad t\ge 1.
\end{equation}
By induction on $t$, using $\alphafb_{t,j}\ge 0$, we have $z_t\le \tilde y_t$ for all $t$, hence
\begin{equation}\label{eq:zt_le_tilde}
|y_t|=z_t\le \tilde y_t\qquad\forall t.
\end{equation}

Let $s_t:=\sum_{k=0}^t\tilde y_k$. Since $\tilde y_k\ge 0$, the sequence $s_t$ is increasing and $s_t\ge 1$.
Using \eqref{eq:tilde_def} and $\alphafb_{t,j}\le c_2/t$ we obtain, for $t\ge 1$,
\[
\tilde y_t
\;=\gamma_{\max}\sum_{j=0}^{t-1}\alphafb_{t,j}\tilde y_j
\le \gamma_{\max}\sum_{j=0}^{t-1}\frac{c_2}{t}\tilde y_j
=\frac{\eta}{t}\,s_{t-1}.
\]
Therefore,
\begin{equation}\label{eq:s_rec}
s_t=s_{t-1}+\tilde y_t
\le s_{t-1}+\frac{\eta}{t}s_{t-1}
=s_{t-1}\Bigl(1+\frac{\eta}{t}\Bigr),\qquad t\ge 1.
\end{equation}
Taking logarithms and using $\log(1+x)\le x$ for $x>-1$,
\[
\log s_n
\le \log s_0 + \sum_{t=1}^n\log\Bigl(1+\frac{\eta}{t}\Bigr)
\le \sum_{t=1}^n\frac{\eta}{t}
=\eta H_n,
\]
where $H_n=\sum_{t=1}^n\frac{1}{t}$ is the $n$-th harmonic number. 
Using $H_n\le 1+\log n$ for $n\ge 1$ gives
\begin{equation}\label{eq:s_bound_poly}
s_n\le e^{\eta}n^{\eta}\qquad\forall n\ge 1.
\end{equation}

Finally, for $t\ge 1$ we use $s_{t-1}\le s_t$ and \eqref{eq:s_bound_poly}:
\[
\tilde y_t\le \frac{\eta}{t}s_{t-1}\le \frac{\eta}{t}s_t
\le \frac{\eta}{t}\,e^{\eta}t^{\eta}
=\eta e^{\eta}\,t^{\eta-1}.
\]
Since $\eta-1=-(1-\eta)=-\betatail$, we obtain $\tilde y_t\le \eta e^{\eta}\,t^{-\betatail}$.
Combining with \eqref{eq:zt_le_tilde} yields \eqref{eq:poly_decay_bound} with $C=\eta e^{\eta}$.
\end{proof}

\subsection{Finite-horizon formulation}

\begin{corollary}[Finite-horizon bound]\label{cor:finite_horizon_tail}
Fix $\Tctx\in\mathbb N^*$ and consider \eqref{eq:impulse} only for $t\in\{0,1,\dots,\Tctx-1\}$.
Assume that Assumptions~\ref{ass:alpha_upper_tail} and~\ref{ass:gamma_bound_tail} hold for all $1\le t\le \Tctx-1$
with the same constants $c_2$ and $\gamma_{\max}$, and $0<\eta=\gamma_{\max}c_2<1$; equivalently $\betatail=1-\eta\in(0,1)$.
Then \eqref{eq:poly_decay_bound} holds for all $t\in\{1,\dots,\Tctx-1\}$ with the same constant $C=\eta e^{\eta}=(1-\betatail)e^{\,1-\betatail}$.
\end{corollary}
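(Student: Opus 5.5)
The finite-horizon statement follows from Theorem~\ref{thm:poly_decay_tail} by extending the truncated recursion to an infinite one that still satisfies its hypotheses.

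\textbf{Extension.} Given the finite data $(\gamma_t)_{0\le t\le \Tctx-1}$ and $(\alphafb_{t,j})_{1\le t\le \Tctx-1,\,j<t}$, define an infinite-horizon family:
\[
\widetilde\gamma_t=\gamma_t \quad (t\le \Tctx-1),
\qquad
\widetilde\gamma_t=0 \quad (t\ge \Tctx),
\]
\[
\widetilde\alpha_{t,j}=\alphafb_{t,j} \quad (t\le \Tctx-1),
\qquad
\widetilde\alpha_{t,j}=0 \quad (t\ge \Tctx).
\]
This mirrors the extension already used in the proof of Lemma~\ref{lem:jac_kernel_tail}.

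\textbf{Checking the hypotheses.} For the extended weights, \eqref{eq:alpha_simplex} still holds, since zero rows have sum $0\le 1$. The envelope $\widetilde\alpha_{t,j}\le c_2/t$ holds trivially for $t\ge \Tctx$. The gain bound $|\widetilde\gamma_t|\le\gamma_{\max}$ holds for all $t$. So Assumptions~\ref{ass:alpha_upper_tail} and~\ref{ass:gamma_bound_tail} are satisfied on all of $\mathbb N$ with the same $c_2$, $\gamma_{\max}$, and hence the same $\eta$.

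\textbf{Agreement with the original recursion.} Let $\widetilde y$ be the infinite impulse response. The recursion \eqref{eq:impulse} is causal: $y_t$ depends only on $\gamma_t$, on $\alphafb_{t,\cdot}$, and on $y_j$ for $j<t$. By induction on $t$, therefore, $\widetilde y_t=y_t$ for all $0\le t\le \Tctx-1$.

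\textbf{Conclusion.} Theorem~\ref{thm:poly_decay_tail} gives $|\widetilde y_t|\le \eta e^{\eta} t^{-\betatail}$ for all $t\ge 1$. Restricting to $1\le t\le \Tctx-1$ yields the claim with the same constant $C=\eta e^{\eta}=(1-\betatail)e^{1-\betatail}$.

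Alternatively, one can simply rerun the proof of Theorem~\ref{thm:poly_decay_tail} verbatim for $n\le \Tctx-1$. That proof only uses the hypotheses at indices $t\le n$, namely the comparison with $\tilde y$ and the partial-sum bound $s_n\le e^\eta n^\eta$.

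The only point needing care is confirming that the extension does not change any constant. This is immediate: setting the extra rows to zero can only weaken the upper bounds used in the argument.
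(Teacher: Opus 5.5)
Your argument is correct and is essentially the paper's proof, which simply calls the corollary an immediate restriction of Theorem~\ref{thm:poly_decay_tail} to $1\le t\le \Tctx-1$; your zero-extension plus the causality check just spells out why that restriction is legitimate. One cosmetic point: the hypotheses are only assumed for $1\le t\le \Tctx-1$, so you should set $\widetilde\gamma_0:=0$ rather than $\gamma_0$, which is harmless because $\gamma_0$ never enters the impulse recursion.
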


\begin{proof}
This is an immediate restriction of Theorem~\ref{thm:poly_decay_tail} to $1\le t\le \Tctx-1$.
\end{proof}

\subsection{Impulse at an arbitrary position $j$}

\begin{corollary}[Decay from an impulse at position $j$]\label{cor:impulse_j_tail}
Fix an index $j\ge 0$. Consider \eqref{eq:feedback_general} with the impulse input at $j$:
\[
f_j=1,\qquad f_t=0\ \text{for }t\neq j,
\]
and with $y_t=0$ for $t<j$. Equivalently, $y_0=0$ if $j>0$ and the recursion is started from $t=j$.
Assume Assumptions~\ref{ass:alpha_upper_tail}--\ref{ass:gamma_bound_tail} and $0<\eta=\gamma_{\max}c_2<1$; equivalently $\betatail=1-\eta\in(0,1)$.
Then for all $t>j$,
\[
|y_t|\le C\,(t-j)^{-\betatail},
\qquad\text{where one may take}\qquad
C:=\eta e^{\eta}=(1-\betatail)e^{\,1-\betatail}.
\]
\end{corollary}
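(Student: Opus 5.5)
The plan is to reduce the shifted impulse to the time-zero case of Theorem~\ref{thm:poly_decay_tail} by re-indexing time from $j$, and then to rerun that theorem's partial-sum comparison directly, taking care with the attention envelope after the shift. First, by the same comparison used in the proof of Theorem~\ref{thm:poly_decay_tail}, I will replace $|y_t|$ by the nonnegative sequence
\[
\tilde y_j=1,\qquad
\tilde y_t=\gamma_{\max}\sum_{i=j}^{t-1}\alphafb_{t,i}\,\tilde y_i\quad (t>j),
\]
with $\tilde y_t=0$ for $t<j$. Since $y_i=0$ for $i<j$ and the weights $\alphafb_{t,i}$ are nonnegative, induction on $t$ gives $|y_t|\le \tilde y_t$.

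Next, I will define the partial sums $S_t:=\sum_{i=j}^{t}\tilde y_i$, so that $S_j=1$ and $S_t$ is nondecreasing. Applying the envelope $\alphafb_{t,i}\le c_2/t$ gives
\[
\tilde y_t\le \frac{\eta}{t}\,S_{t-1},
\qquad\text{hence}\qquad
S_t\le S_{t-1}\Big(1+\frac{\eta}{t}\Big).
\]
This step is where the shift matters. The natural denominator is $t$, not the lag $t-j$. Since $t\ge t-j$, we have $\eta/t\le \eta/(t-j)$, so the bound can only be loosened to the lag variable. Writing $m:=t-j$, this yields $S_{j+m}\le S_{j+m-1}(1+\eta/m)$ for $m\ge 1$.

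This is exactly the recursion from the $j=0$ proof. Taking logarithms, using $\log(1+x)\le x$, and bounding the harmonic number by $H_m\le 1+\log m$ gives
\[
S_{j+m}\le e^{\eta}\,m^{\eta}.
\]
Then I will combine this with $\tilde y_{j+m}\le (\eta/(j+m))\,S_{j+m-1}\le (\eta/m)\,S_{j+m}$, which gives
\[
\tilde y_{j+m}\le \eta e^{\eta}\,m^{\eta-1}=\eta e^{\eta}\,(t-j)^{-\betatail}.
\]
That is the claimed bound with $C=\eta e^{\eta}=(1-\betatail)e^{\,1-\betatail}$.

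The only delicate point is justifying the replacement of $1/t$ by $1/(t-j)$. Because it is a monotone loosening, the bound is uniform in $j$, at the cost of being conservative for large $j$; the actual decay then is even faster. A cleaner alternative would be to define shifted weights $\alpha'_{m,i}:=\alphafb_{j+m,j+i}$ and check that they satisfy Assumption~\ref{ass:alpha_upper_tail} with the same $c_2$. This check holds because $c_2/(j+m)\le c_2/m$, and the row sums stay at most $1$. Theorem~\ref{thm:poly_decay_tail} would then apply verbatim to the shifted recursion.
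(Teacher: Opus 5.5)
Your proposal is correct and follows essentially the same route as the paper: re-index from the impulse time, use $\alphafb_{j+n,j+r}\le c_2/(j+n)\le c_2/n$ to move the envelope to the lag variable, and rerun the partial-sum comparison from Theorem~\ref{thm:poly_decay_tail} to get $\eta e^{\eta}(t-j)^{-\betatail}$. One aside needs correcting. You say the decay is ``even faster'' for large $j$, but the $1/t$ envelope actually gives a smaller amplitude, not a better exponent in the lag: in the uniform case the prefactor is $\asymp j^{-\eta}$ while the lag exponent stays $\betatail$ (Corollary~\ref{cor:theta_tail_uniform_shifted}).
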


\begin{proof}
Define $u_n:=|y_{j+n}|$ for $n\ge 0$. Then $u_0=|y_j|=1$.
For $n\ge 1$, since $y_k=0$ for $k<j$ and $\alphafb_{j+n,k}\ge 0$,
\[
u_n
=|y_{j+n}|
=\Bigl|\gamma_{j+n}\sum_{k=0}^{j+n-1}\alphafb_{j+n,k}y_k\Bigr|
\le |\gamma_{j+n}|\sum_{k=j}^{j+n-1}\alphafb_{j+n,k}|y_k|
\le \gamma_{\max}\sum_{r=0}^{n-1}\alphafb_{j+n,j+r}\,u_r.
\]
Moreover, by Assumption~\ref{ass:alpha_upper_tail},
\[
\alphafb_{j+n,j+r}\le \frac{c_2}{j+n}\le \frac{c_2}{n}\qquad(n\ge 1),
\]
since $j+n\ge n$. 
Thus the sequence $u_n$ satisfies the same comparison inequality as in the proof of
Theorem~\ref{thm:poly_decay_tail}, with the same $\gamma_{\max}$ and the envelope $c_2/n$,
so repeating that argument yields
\[
u_n\le \eta e^{\eta}\,n^{\eta-1} = (\eta e^{\eta})\,n^{-\betatail}.
\]
Substituting $n=t-j$ yields the claim.
\end{proof}

\section[Tightness of the polynomial tail in a realizable regime]
{Tightness of the polynomial tail in a realizable regime}
\label{app:tightness_powerlaw}

This section complements Theorem~\ref{thm:poly_decay_tail} with the upper bound $O(\ell^{-\betatail})$
by exhibiting a concrete diffuse routing regime in which the impulse influence is
exactly polynomial, that is, $\Theta(\ell^{-\betatail})$.
This eliminates the semantic ambiguity that an upper bound alone does not preclude
faster decay, for instance exponential decay.

\subsection{Gamma-ratio inequality of Gautschi}
\label{app:gautschi}

\begin{lemma}[Gautschi inequality for $0<\gamma<1$]
\label{lem:gautschi}
Let $\gamma\in(0,1)$ and $t\ge 1$ be an integer. Then
\begin{equation}
(t+1)^{\gamma-1}
\;\le\;
\frac{\Gamma(t+\gamma)}{\Gamma(t+1)}
\;\le\;
t^{\gamma-1}.
\label{eq:gautschi_ratio}
\end{equation}
Equivalently, with $\betatail:=1-\gamma\in(0,1)$,
\[
(t+1)^{-\betatail}\le \frac{\Gamma(t+\gamma)}{\Gamma(t+1)}\le t^{-\betatail}.
\]
\end{lemma}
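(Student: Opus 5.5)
The plan is to prove both inequalities from log-convexity of $\Gamma$ on $(0,\infty)$, in the form of Gautschi's inequality $x^{1-s}\le \Gamma(x+1)/\Gamma(x+s)\le (x+1)^{1-s}$ for $x>0$ and $s\in(0,1)$. We specialize this to $x=t$ and $s=\gamma$. Taking reciprocals immediately gives $(t+1)^{\gamma-1}\le \Gamma(t+\gamma)/\Gamma(t+1)\le t^{\gamma-1}$, which is the claim. The restatement with $\betatail=1-\gamma$ is then purely notational.

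To keep the argument self-contained, we derive both sides from the convexity of $\log\Gamma$ (Bohr--Mollerup).

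\emph{Upper bound.} Write $t+\gamma=(1-\gamma)\,t+\gamma\,(t+1)$. Log-convexity gives
\[
\Gamma(t+\gamma)\le \Gamma(t)^{1-\gamma}\,\Gamma(t+1)^{\gamma}.
\]
Using $\Gamma(t)=\Gamma(t+1)/t$, this becomes $\Gamma(t+\gamma)\le \Gamma(t+1)\,t^{\gamma-1}$.

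\emph{Lower bound.} Write $t+1=\gamma\,(t+\gamma)+(1-\gamma)\,(t+\gamma+1)$. Log-convexity gives
\[
\Gamma(t+1)\le \Gamma(t+\gamma)^{\gamma}\,\Gamma(t+\gamma+1)^{1-\gamma}=\Gamma(t+\gamma)\,(t+\gamma)^{1-\gamma},
\]
where the equality uses $\Gamma(t+\gamma+1)=(t+\gamma)\Gamma(t+\gamma)$. Hence $\Gamma(t+\gamma)/\Gamma(t+1)\ge (t+\gamma)^{\gamma-1}$. Since $\gamma-1<0$ and $t+\gamma\le t+1$, we get $(t+\gamma)^{\gamma-1}\ge (t+1)^{\gamma-1}$, which is the stated lower bound (in fact slightly stronger).

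The only step needing care is verifying that the convex-combination identities are correct and that the exponent signs flip correctly when $\gamma-1<0$. I expect no real obstacle beyond this bookkeeping. The integrality of $t$ is not needed, and $t\ge1$ only guarantees that all arguments of $\Gamma$ are positive.
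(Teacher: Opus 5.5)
Your proposal is correct, and its first paragraph is exactly the paper's proof: apply Gautschi's inequality at $x=t$, $s=\gamma$, then take reciprocals. The log-convexity derivation you add goes beyond the paper, which simply cites Gautschi's inequality; both convex-combination identities check out, so it makes the argument self-contained and gives the sharper Wendel-type lower bound $(t+\gamma)^{\gamma-1}$, valid for every real $t>0$.
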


\begin{proof}
By Gautschi's inequality \citep{gautschi1959}, for $x>0$ and $0<\gamma<1$,
\[
x^{1-\gamma} < \frac{\Gamma(x+1)}{\Gamma(x+\gamma)} < (x+1)^{1-\gamma}.
\]
Setting $x=t$ and taking reciprocals yields
\[
(t+1)^{\gamma-1} \le \frac{\Gamma(t+\gamma)}{\Gamma(t+1)} \le t^{\gamma-1},
\]
which is \eqref{eq:gautschi_ratio}.
\end{proof}

\subsection{Uniform routing yields a \texorpdfstring{$\Theta(\ell^{-\betatail})$}{Theta(l^{-betatail})} tail}
\label{app:uniform_routing_theta}

We consider the scalar impulse recursion from Section~\ref{sec:poly_decay}:
\begin{equation}
y_0=f_0,\qquad
y_t=f_t+\gamma_t\sum_{j=0}^{t-1}\alphafb_{t,j}y_j,\quad t\ge 1.
\label{eq:app_scalar_rec}
\end{equation}

\begin{proposition}[Tightness under uniform routing]
\label{prop:theta_tail_uniform}
Assume uniform routing, which is maximally diffuse, and constant positive feedback:
\[
\alphafb_{t,j}=\frac{1}{t}\mathbf 1[j<t],\qquad \gamma_t\equiv \gamma\in(0,1).
\]
Consider an impulse at time $0$: $f_0=1$ and $f_t=0$ for all $t\ge 1$.
Then for every $t\ge 1$ the impulse influence admits the closed form
\begin{equation}
y_t
=
\frac{\gamma}{\Gamma(1+\gamma)}\cdot \frac{\Gamma(t+\gamma)}{\Gamma(t+1)}.
\label{eq:yt_closed_form}
\end{equation}
Consequently, letting $\betatail:=1-\gamma\in(0,1)$, one has the two-sided bound
\begin{equation}
\frac{\gamma}{\Gamma(1+\gamma)}\,(t+1)^{-\betatail}
\;\le\;
y_t
\;\le\;
\frac{\gamma}{\Gamma(1+\gamma)}\,t^{-\betatail},
\qquad t\ge 1,
\label{eq:yt_theta}
\end{equation}
and in particular
\[
y_t=\Theta(t^{-\betatail})\quad\text{and hence}\quad y_t=\Omega(t^{-\betatail}).
\]
\end{proposition}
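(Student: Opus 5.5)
The plan is to turn the full-history recursion into a two-term recursion, solve it as a telescoping product of Gamma ratios, and then read off the two-sided bound from Lemma~\ref{lem:gautschi}.

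\textbf{Step 1: reduce to a one-step recursion.} Under uniform routing and the impulse at $0$, the recursion \eqref{eq:app_scalar_rec} reads $y_t=\frac{\gamma}{t}S_{t-1}$ for $t\ge 1$, where $S_t:=\sum_{k=0}^{t}y_k$ and $S_0=1$. Then
\[
S_t=S_{t-1}+y_t=S_{t-1}\Bigl(1+\frac{\gamma}{t}\Bigr)=S_{t-1}\,\frac{t+\gamma}{t},\qquad t\ge 1.
\]

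\textbf{Step 2: solve it in Gamma form.} Iterating from $S_0=1$ gives
\[
S_t=\prod_{m=1}^{t}\frac{m+\gamma}{m}=\frac{\Gamma(t+1+\gamma)}{\Gamma(1+\gamma)\,\Gamma(t+1)},
\]
by the functional equation $\Gamma(z+1)=z\Gamma(z)$. Consequently, for $t\ge1$,
\[
y_t=\frac{\gamma}{t}\,S_{t-1}=\frac{\gamma}{t}\cdot\frac{\Gamma(t+\gamma)}{\Gamma(1+\gamma)\,\Gamma(t)}=\frac{\gamma}{\Gamma(1+\gamma)}\cdot\frac{\Gamma(t+\gamma)}{\Gamma(t+1)},
\]
using $t\,\Gamma(t)=\Gamma(t+1)$. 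This is \eqref{eq:yt_closed_form}. A sanity check at $t=1$: the formula gives $\gamma\,\Gamma(1+\gamma)/(\Gamma(1+\gamma)\Gamma(2))=\gamma$, which agrees with $y_1=\gamma y_0$.

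\textbf{Step 3: the two-sided bound.} Apply Lemma~\ref{lem:gautschi} to the ratio $\Gamma(t+\gamma)/\Gamma(t+1)$ and multiply through by the positive constant $\gamma/\Gamma(1+\gamma)$. This yields \eqref{eq:yt_theta}.

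The $\Theta(t^{-\betatail})$ statement then follows from $(t+1)^{-\betatail}\ge 2^{-\betatail}t^{-\betatail}$ for $t\ge1$. The $\Omega$ claim is the lower half of that bound.

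The only step that needs any care is the Gamma bookkeeping in Step 2, specifically checking the index shift between $S_{t-1}$ and $y_t$. The argument has no genuine obstacle.
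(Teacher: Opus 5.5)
Your proposal is correct and follows essentially the same route as the paper. The paper also uses partial sums $S_t$ with $S_t=S_{t-1}(1+\gamma/t)$ and $S_0=1$, writes the product as a Gamma ratio, recovers $y_t=\frac{\gamma}{t}S_{t-1}$ via $t\,\Gamma(t)=\Gamma(t+1)$, and reads off the two-sided bound from Lemma~\ref{lem:gautschi}. Your $t=1$ sanity check and the step $(t+1)^{-\betatail}\ge 2^{-\betatail}t^{-\betatail}$ for the $\Theta$ claim are harmless additions that the paper leaves implicit.
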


\begin{proof}
Define partial sums $S_t:=\sum_{k=0}^t y_k$. Under the stated assumptions and for $t\ge 1$,
\[
y_t=\frac{\gamma}{t}\sum_{j=0}^{t-1}y_j=\frac{\gamma}{t}S_{t-1},
\qquad
S_t=S_{t-1}+y_t=S_{t-1}\Big(1+\frac{\gamma}{t}\Big),
\]
with $S_0=y_0=f_0=1$. Thus
\[
S_t=\prod_{i=1}^t\Big(1+\frac{\gamma}{i}\Big)=\prod_{i=1}^t\frac{i+\gamma}{i}
=\frac{\Gamma(t+1+\gamma)}{\Gamma(1+\gamma)\Gamma(t+1)}.
\]
Using $y_t=\frac{\gamma}{t}S_{t-1}$ and $\Gamma(t+1)=t\,\Gamma(t)$ gives
\[
y_t=\frac{\gamma}{t}\cdot \frac{\Gamma(t+\gamma)}{\Gamma(1+\gamma)\Gamma(t)}
=\frac{\gamma}{\Gamma(1+\gamma)}\cdot \frac{\Gamma(t+\gamma)}{\Gamma(t+1)},
\]
which is \eqref{eq:yt_closed_form}. The two-sided bound \eqref{eq:yt_theta} follows directly
from Lemma~\ref{lem:gautschi} with $\betatail=1-\gamma$.
\end{proof}

\begin{corollary}[Uniform routing with an impulse at an arbitrary source position]
\label{cor:theta_tail_uniform_shifted}
Assume the same explicit uniform-routing regime as in Proposition~\ref{prop:theta_tail_uniform}:
\[
\alphafb_{t,j}=\frac{1}{t}\mathbf 1[j<t],\qquad \gamma_t\equiv \gamma\in(0,1).
\]
Consider an impulse at time $\tau\ge 0$, i.e.
\[
f_\tau=1,\qquad f_t=0\ \text{for }t\neq \tau,
\]
with $y_t=0$ for $t<\tau$. Then for every $\ell\ge 1$,
\begin{equation}
y_{\tau+\ell}
=
\gamma\,\frac{\Gamma(\tau+1)}{\Gamma(\tau+1+\gamma)}
\cdot
\frac{\Gamma(\tau+\ell+\gamma)}{\Gamma(\tau+\ell+1)}.
\label{eq:shifted_uniform_routing_exact}
\end{equation}
Consequently, with $\betatail:=1-\gamma\in(0,1)$, for every fixed source position $\tau$,
\[
y_{\tau+\ell}=\Theta_\tau(\ell^{-\betatail})
\qquad (\ell\to\infty).
\]
Moreover, the prefactor depends on $\tau$ and satisfies
\[
\gamma\,\frac{\Gamma(\tau+1)}{\Gamma(\tau+1+\gamma)}
\asymp \tau^{-\gamma}
\qquad (\tau\to\infty).
\]
In particular, there is no positive lower constant $c_->0$ such that
\[
y_{\tau+\ell}\ge c_-\,\ell^{-\betatail}
\]
for all source positions $\tau$ and all $\ell\ge 1$ on an unbounded horizon.
\end{corollary}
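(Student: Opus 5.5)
The plan is to mirror the partial-sum argument of Proposition~\ref{prop:theta_tail_uniform}, now started at the source time $\tau$. With $y_t=0$ for $t<\tau$ and $y_\tau=1$, the recursion for $t>\tau$ reads
\[
y_t=\frac{\gamma}{t}\sum_{j=\tau}^{t-1}y_j .
\]
Define $S_t:=\sum_{k=\tau}^{t}y_k$ for $t\ge\tau$, so that $S_\tau=1$. For $t>\tau$ this gives $y_t=\frac{\gamma}{t}S_{t-1}$, and hence
\[
S_t=S_{t-1}\Bigl(1+\frac{\gamma}{t}\Bigr).
\]
Telescoping from $\tau+1$ to $t$ yields
\[
S_t=\prod_{i=\tau+1}^{t}\frac{i+\gamma}{i}
=\frac{\Gamma(t+1+\gamma)\,\Gamma(\tau+1)}{\Gamma(\tau+1+\gamma)\,\Gamma(t+1)} .
\]
Substitute into $y_{\tau+\ell}=\frac{\gamma}{\tau+\ell}S_{\tau+\ell-1}$ and use $\Gamma(\tau+\ell+1)=(\tau+\ell)\Gamma(\tau+\ell)$. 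This produces the closed form \eqref{eq:shifted_uniform_routing_exact}. Note that the empty product at $\ell=1$ is consistent, since it gives $y_{\tau+1}=\gamma/(\tau+1)$.

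For the $\Theta_\tau$ statement, I will apply Lemma~\ref{lem:gautschi} with $t=\tau+\ell\ge1$. This gives
\[
(\tau+\ell+1)^{-\betatail}\le\frac{\Gamma(\tau+\ell+\gamma)}{\Gamma(\tau+\ell+1)}\le(\tau+\ell)^{-\betatail}.
\]
With $\tau$ fixed, both sides are comparable to $\ell^{-\betatail}$. Explicitly, $(\tau+\ell)^{-\betatail}\le\ell^{-\betatail}$, and $(\tau+\ell+1)^{-\betatail}\ge(\tau+2)^{-\betatail}\ell^{-\betatail}$ for $\ell\ge1$. Multiplying by the positive $\tau$-dependent prefactor finishes this part.

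For the prefactor asymptotics, I will apply Gautschi's inequality again, now to $\Gamma(\tau+1)/\Gamma(\tau+1+\gamma)$. Taking $x=\tau+\gamma$ for $\tau\ge1$ (or simply using the standard ratio $\Gamma(x+a)/\Gamma(x+b)\sim x^{a-b}$) shows it lies between $(\tau+1)^{-\gamma}$ and $\tau^{-\gamma}$ up to constants. Hence the prefactor is $\asymp\tau^{-\gamma}$.

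For the final non-uniformity claim, I will argue by contradiction. Suppose $y_{\tau+\ell}\ge c_-\ell^{-\betatail}$ for all $\tau,\ell$. Take $\ell=1$: then $y_{\tau+1}=\gamma/(\tau+1)\to0$ as $\tau\to\infty$, while $c_-\ell^{-\betatail}=c_-$ stays fixed. This is already a contradiction, and it holds more generally at any fixed $\ell$.

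The only mildly delicate point is the Gamma bookkeeping, in particular the index shift $S_{t-1}$ versus $S_t$ and the $\ell=1$ edge case. Everything else follows directly from the earlier proposition's method and Lemma~\ref{lem:gautschi}.
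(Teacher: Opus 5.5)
Your proposal is correct and follows essentially the same route as the paper: partial sums started at $\tau$, the telescoping product $S_t=\prod_{i=\tau+1}^{t}(1+\gamma/i)$ rewritten as a Gamma ratio, Lemma~\ref{lem:gautschi} at $t=\tau+\ell$ for the fixed-$\tau$ two-sided bound, and the value $y_{\tau+1}=\gamma/(\tau+1)\to 0$ to rule out a uniform lower constant. For the prefactor you apply the general Gautschi inequality with $x=\tau+\gamma$ and exponent parameter $1-\gamma$, which makes explicit the Gamma-ratio asymptotic that the paper only cites.
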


\begin{proof}
Define partial sums
\[
S_t:=\sum_{k=\tau}^{t}y_k,\qquad t\ge \tau.
\]
Then $S_\tau=y_\tau=1$. For $t\ge \tau+1$, the recursion gives
\[
y_t=\frac{\gamma}{t}\sum_{j=\tau}^{t-1}y_j=\frac{\gamma}{t}S_{t-1},
\qquad
S_t=S_{t-1}+y_t=S_{t-1}\Bigl(1+\frac{\gamma}{t}\Bigr).
\]
Hence, for every $t\ge \tau+1$,
\[
S_t
=
\prod_{i=\tau+1}^{t}\Bigl(1+\frac{\gamma}{i}\Bigr)
=
\prod_{i=\tau+1}^{t}\frac{i+\gamma}{i}
=
\frac{\Gamma(t+1+\gamma)\Gamma(\tau+1)}{\Gamma(\tau+1+\gamma)\Gamma(t+1)}.
\]
Using $y_t=\frac{\gamma}{t}S_{t-1}$ and $\Gamma(t+1)=t\,\Gamma(t)$ yields
\[
y_t
=
\frac{\gamma}{t}\cdot
\frac{\Gamma(t+\gamma)\Gamma(\tau+1)}{\Gamma(\tau+1+\gamma)\Gamma(t)}
=
\gamma\,\frac{\Gamma(\tau+1)}{\Gamma(\tau+1+\gamma)}
\cdot
\frac{\Gamma(t+\gamma)}{\Gamma(t+1)}.
\]
Setting $t=\tau+\ell$ gives \eqref{eq:shifted_uniform_routing_exact}.

For fixed $\tau$, the factor
\[
\gamma\,\frac{\Gamma(\tau+1)}{\Gamma(\tau+1+\gamma)}
\]
is a positive constant depending only on $\tau$, while Lemma~\ref{lem:gautschi} gives
\[
(\tau+\ell+1)^{-\betatail}
\le
\frac{\Gamma(\tau+\ell+\gamma)}{\Gamma(\tau+\ell+1)}
\le
(\tau+\ell)^{-\betatail}.
\]
Since $\tau$ is fixed, this implies
\[
\frac{\Gamma(\tau+\ell+\gamma)}{\Gamma(\tau+\ell+1)}
=
\Theta_\tau(\ell^{-\betatail}),
\]
hence $y_{\tau+\ell}=\Theta_\tau(\ell^{-\betatail})$.

The Gamma-ratio asymptotic gives
\[
\frac{\Gamma(\tau+1)}{\Gamma(\tau+1+\gamma)}\asymp (\tau+1)^{-\gamma}
\qquad(\tau\to\infty),
\]
so the source-dependent prefactor decays polynomially with $\tau$.

Moreover, taking $\ell=1$ in \eqref{eq:shifted_uniform_routing_exact} gives
\[
y_{\tau+1}
=
\gamma\,\frac{\Gamma(\tau+1)}{\Gamma(\tau+1+\gamma)}
\cdot
\frac{\Gamma(\tau+1+\gamma)}{\Gamma(\tau+2)}
=
\frac{\gamma}{\tau+1}.
\]
Hence $y_{\tau+1}\to 0$ as $\tau\to\infty$. Therefore no positive lower constant
independent of $\tau$ can satisfy
\[
y_{\tau+\ell}\ge c_-\,\ell^{-\betatail}
\]
for all source positions $\tau$ and all $\ell\ge 1$ on an unbounded horizon.
\end{proof}

\begin{corollary}[Uniform two-sided heavy-tail envelope on a bounded source family]
\label{cor:theta_tail_uniform_bounded_source}
Fix $\tau_{\max}\in\mathbb N$. Under the regime of
Corollary~\ref{cor:theta_tail_uniform_shifted}, there exist constants
$c^-_{\tau_{\max},\gamma},c^+_{\tau_{\max},\gamma}>0$ such that for every
source position $0\le \tau\le \tau_{\max}$ and every $\ell\ge 1$,
\[
c^-_{\tau_{\max},\gamma}\,\ell^{-\betatail}
\le
y_{\tau+\ell}
\le
c^+_{\tau_{\max},\gamma}\,\ell^{-\betatail},
\qquad \betatail:=1-\gamma.
\]
In particular, the explicit uniform-routing regime realizes a uniform two-sided
heavy-tail envelope on every bounded source family, and hence on every fixed finite horizon.
\end{corollary}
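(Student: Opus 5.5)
We start from the exact closed form in Corollary~\ref{cor:theta_tail_uniform_shifted},
\[
y_{\tau+\ell}
=
\gamma\,\frac{\Gamma(\tau+1)}{\Gamma(\tau+1+\gamma)}
\cdot
\frac{\Gamma(\tau+\ell+\gamma)}{\Gamma(\tau+\ell+1)},
\]
and bound the two factors separately. The source-dependent prefactor
\(P_\tau:=\gamma\,\Gamma(\tau+1)/\Gamma(\tau+1+\gamma)\) is a strictly positive, finite number for each \(\tau\in\{0,\dots,\tau_{\max}\}\). Because the family is finite, we set
\[
p^-:=\min_{0\le \tau\le \tau_{\max}}P_\tau>0,
\qquad
p^+:=\max_{0\le \tau\le \tau_{\max}}P_\tau<\infty .
\]
This requires no asymptotics. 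Alternatively, Lemma~\ref{lem:gautschi} gives \(P_\tau\in[\gamma(\tau+2)^{-\gamma},\gamma(\tau+1)^{-\gamma}]\) explicitly.

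For the \(\ell\)-dependent factor we apply Lemma~\ref{lem:gautschi} with \(t=\tau+\ell\ge 1\). This yields
\[
(\tau+\ell+1)^{-\betatail}\le \frac{\Gamma(\tau+\ell+\gamma)}{\Gamma(\tau+\ell+1)}\le (\tau+\ell)^{-\betatail}.
\]
The upper bound is at most \(\ell^{-\betatail}\) because \(\tau\ge 0\) and \(\betatail>0\). For the lower bound we use \(\tau+\ell+1\le (\tau_{\max}+2)\ell\), valid for all \(\ell\ge 1\) and \(\tau\le\tau_{\max}\). Hence
\[
(\tau+\ell+1)^{-\betatail}\ge (\tau_{\max}+2)^{-\betatail}\ell^{-\betatail}.
\]

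Multiplying the two estimates gives the claim with
\[
c^+_{\tau_{\max},\gamma}:=p^+,
\qquad
c^-_{\tau_{\max},\gamma}:=p^-(\tau_{\max}+2)^{-\betatail}.
\]
Both constants depend only on \(\tau_{\max}\) and \(\gamma\). Since \(y_{\tau+\ell}>0\), the same bounds hold for \(|y_{\tau+\ell}|\). The finite-horizon statement follows by taking \(\tau_{\max}=\Tctx-1\).

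There is no substantive obstacle here. The only point needing care is keeping the lower constant uniform in \(\ell\) down to \(\ell=1\), which is exactly what the inequality \(\tau+\ell+1\le(\tau_{\max}+2)\ell\) provides. This also explains why the bounded-source restriction is essential: the prefactor behaves like \(\tau^{-\gamma}\), as noted in Corollary~\ref{cor:theta_tail_uniform_shifted}, so the lower constant cannot be made uniform over unbounded \(\tau\).
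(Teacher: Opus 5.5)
Your proof is correct and is essentially the paper's proof: you take the finite min/max of the prefactor, apply Lemma~\ref{lem:gautschi} to the $\ell$-dependent ratio, and use $\tau+\ell+1\le(\tau_{\max}+2)\ell$, which gives exactly the paper's constants. One slip, in an aside the argument does not use: the interval $[\gamma(\tau+2)^{-\gamma},\gamma(\tau+1)^{-\gamma}]$ you state for $P_\tau$ is wrong. Already $P_0=\gamma/\Gamma(1+\gamma)>\gamma$, and log-convexity of $\Gamma$ gives $P_\tau>\gamma(\tau+1)^{-\gamma}$ for every $\tau$, so your upper endpoint is in fact a strict lower bound; drop that remark, since the min/max argument stands without it.
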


\begin{proof}
Write
\[
a_\tau:=\gamma\,\frac{\Gamma(\tau+1)}{\Gamma(\tau+1+\gamma)}.
\]
Since the set $\{0,\dots,\tau_{\max}\}$ is finite and each $a_\tau$ is positive,
\[
m_{\tau_{\max},\gamma}:=\min_{0\le \tau\le \tau_{\max}} a_\tau>0,
\qquad
M_{\tau_{\max},\gamma}:=\max_{0\le \tau\le \tau_{\max}} a_\tau<\infty.
\]
By Corollary~\ref{cor:theta_tail_uniform_shifted},
\[
y_{\tau+\ell}=a_\tau\,\frac{\Gamma(\tau+\ell+\gamma)}{\Gamma(\tau+\ell+1)}.
\]
Lemma~\ref{lem:gautschi} yields
\[
(\tau+\ell+1)^{-\betatail}
\le
\frac{\Gamma(\tau+\ell+\gamma)}{\Gamma(\tau+\ell+1)}
\le
(\tau+\ell)^{-\betatail}.
\]
Therefore
\[
y_{\tau+\ell}\le M_{\tau_{\max},\gamma}\,(\tau+\ell)^{-\betatail}\le M_{\tau_{\max},\gamma}\,\ell^{-\betatail}.
\]
Also, since $0\le \tau\le \tau_{\max}$ and $\ell\ge 1$,
\[
\tau+\ell+1\le \tau_{\max}+\ell+1\le (\tau_{\max}+2)\ell,
\]
hence
\[
(\tau+\ell+1)^{-\betatail}\ge (\tau_{\max}+2)^{-\betatail}\,\ell^{-\betatail}.
\]
Thus
\[
y_{\tau+\ell}
\ge
m_{\tau_{\max},\gamma}\,(\tau+\ell+1)^{-\betatail}
\ge
m_{\tau_{\max},\gamma}\,(\tau_{\max}+2)^{-\betatail}\,\ell^{-\betatail}.
\]
So one may take
\[
c^-_{\tau_{\max},\gamma}:=
m_{\tau_{\max},\gamma}\,(\tau_{\max}+2)^{-\betatail},
\qquad
c^+_{\tau_{\max},\gamma}:=
M_{\tau_{\max},\gamma}.
\]
\end{proof}

\paragraph{Consequence for the influence kernel}
In the lower-triangular solve $s=Kf$ with $K=(I-\Bfb)^{-1}$, choosing
\[
\BfbEnt{t}{j}=\gamma\,\alphafb_{t,j}
=
\begin{cases}
0, & t=0,\\[2pt]
\dfrac{\gamma}{t}\,\mathbf 1[j<t], & t\ge 1,
\end{cases}
\]
yields that the column $K_{\cdot,0}$ is precisely the impulse response $(y_t)_{t\ge 0}$ above.
Hence,
\[
|K_{t,0}|=\Theta(t^{-\betatail}),
\]
so the polynomial envelope in Theorem~\ref{thm:poly_decay_main} is sharp, and the rate is attained by a concrete heavy-tailed memory mode.

\begin{remark}[Impulse at time $\tau$]
Assume $\gamma\in(0,1)$.
The same computation applies to an impulse at time $\tau$. If $f_\tau=1$, $f_t=0$ for $t\neq\tau$, and $y_t=0$ for $t<\tau$, then for $t\ge \tau+1$
\[
y_t
=
\gamma\,
\frac{\Gamma(t+\gamma)\Gamma(\tau+1)}{\Gamma(t+1)\Gamma(\tau+1+\gamma)}
=
C(\tau,\gamma)\cdot \frac{\Gamma(t+\gamma)}{\Gamma(t+1)},
\]
with $C(\tau,\gamma):=\gamma\,\Gamma(\tau+1)/\Gamma(\tau+1+\gamma)>0$.
Hence, for $\ell=t-\tau$, the lag-$\ell$ tail is again $\Theta(\ell^{-\betatail})$ by Lemma~\ref{lem:gautschi}, in agreement with Corollary~\ref{cor:impulse_j_tail}.
\end{remark}

\section{Heavy-tail convolution estimates}
\label{app:aux_heavytail_convolution}
\begin{definition}[Discrete convolution on positive lags]
\label{def:positive_lag_convolution}
For nonnegative sequences $a,b:\mathbb N^*\to[0,\infty)$, define
\[
(a*b)(n):=\sum_{m=1}^{n-1}a(n-m)b(m),\qquad n\ge 2,
\]
and $(a*b)(1):=0$.  
Inductively define $a^{(*1)}:=a$ and $a^{(*k)}:=a^{(*(k-1))}*a$ for $k\ge 2$.
\end{definition}

\begin{lemma}[Discrete power convolution]
\label{lem:discrete_power_convolution}
Let $\sigma,\rho>0$, and define
\[
u_\sigma(n):=n^{\sigma-1},\qquad u_\rho(n):=n^{\rho-1},\qquad n\in\mathbb N^*.
\]
Then there exist constants $c_{\sigma,\rho},C_{\sigma,\rho}\in(0,\infty)$ such that
\[
c_{\sigma,\rho}\,n^{\sigma+\rho-1}
\le
(u_\sigma*u_\rho)(n)
\le
C_{\sigma,\rho}\,n^{\sigma+\rho-1},
\qquad n\ge 2.
\]
\end{lemma}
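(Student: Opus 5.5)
The plan is to write the convolution explicitly as
\[
(u_\sigma*u_\rho)(n)=\sum_{m=1}^{n-1}(n-m)^{\sigma-1}m^{\rho-1},\qquad n\ge 2,
\]
and compare it with the Beta integral $n^{\sigma+\rho-1}\int_0^1(1-x)^{\sigma-1}x^{\rho-1}\,dx=n^{\sigma+\rho-1}B(\sigma,\rho)$, which is finite because $\sigma,\rho>0$. Since $n^{\sigma+\rho-1}$ factors out after substituting $m=nx$, the statement reduces to showing that the normalized Riemann-type sum
\[
R(n):=\frac1n\sum_{m=1}^{n-1}\Big(1-\frac mn\Big)^{\sigma-1}\Big(\frac mn\Big)^{\rho-1}
\]
is bounded above and below by positive constants, uniformly in $n\ge 2$. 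We do not need $R(n)\to B(\sigma,\rho)$; two-sided bounds suffice.

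To handle possibly non-monotone factors (exponents of either sign), split the index range at $n/2$, i.e.\ into $1\le m\le \lfloor n/2\rfloor$ and $\lfloor n/2\rfloor<m\le n-1$. On the first half, $n-m\in[n/2,n]$, so $(n-m)^{\sigma-1}$ lies between $\min(1,2^{1-\sigma})n^{\sigma-1}$ and $\max(1,2^{1-\sigma})n^{\sigma-1}$. That part is therefore comparable to $n^{\sigma-1}\sum_{m=1}^{\lfloor n/2\rfloor}m^{\rho-1}$. The symmetric half (substitute $r=n-m$) is comparable to $n^{\rho-1}\sum_{r=1}^{\lceil n/2\rceil-1}r^{\sigma-1}$ when $n\ge3$; the case $n=2$ is a single explicit term and can be absorbed into constants.

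The one-sided sums are then the standard estimate $\sum_{m=1}^{M}m^{\rho-1}\asymp_\rho M^\rho$ for $M\ge1$, with constants depending only on $\rho$. We prove it by integral comparison: for $\rho\ge1$ the summand is nondecreasing, so the sum lies between $\int_0^M$ and $\int_1^{M+1}$; for $0<\rho<1$ it is nonincreasing, so the sum lies between $\int_1^{M+1}x^{\rho-1}\,dx$ and $1+\int_1^M x^{\rho-1}\,dx\le 1+M^\rho/\rho$. Both regimes give bounds of the form $c_\rho M^\rho\le\sum\le C_\rho M^\rho$, using $M\ge1$ to absorb additive constants and $(M+1)^\rho\le 2^\rho M^\rho$.

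Plugging in $M\asymp n$ gives each half $\asymp n^{\sigma+\rho-1}$. For the upper bound we add the two halves; for the lower bound it is enough to keep the first half, which is nonempty for $n\ge2$. The main obstacle is only bookkeeping: the floor/ceiling effects and the small-$n$ edge cases (e.g.\ $n=2,3$, where a half may be empty or tiny) must not destroy the positivity of the lower constant. We handle this by proving the bounds for $n\ge4$ and covering the finitely many remaining $n$ by taking minima and maxima of the explicit positive values.
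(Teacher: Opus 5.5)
Your proof is correct, and your upper bound is essentially the paper's argument: split at $n/2$, replace the far factor by $\max\{1,2^{1-\sigma}\}n^{\sigma-1}$ (resp.\ $\max\{1,2^{1-\rho}\}n^{\rho-1}$), and bound the remaining one-sided power sum by integral comparison. The lower bound differs slightly. The paper restricts to the central block $\lfloor n/4\rfloor\le m\le\lfloor 3n/4\rfloor$, where both factors are comparable to their $n$-powers, and counts the $\gtrsim n$ terms (valid for $n\ge 8$, with a finite patch for small $n$). Your first-half argument instead uses $\sum_{m\le M}m^{\rho-1}\ge c_\rho M^\rho$ with $M=\lfloor n/2\rfloor\ge n/4$, and this already holds for every $n\ge 2$, so the small-$n$ patching you plan is not needed there.
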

\begin{proof}
Fix $n\ge 2$.

For the upper bound, split the sum into the two regions
\[
1\le m\le \left\lfloor \frac n2\right\rfloor
\qquad\text{and}\qquad
\left\lfloor \frac n2\right\rfloor+1\le m\le n-1.
\]

If $1\le m\le n/2$, then $n-m\in[n/2,n-1]$, hence
\[
(n-m)^{\sigma-1}\le C_\sigma\,n^{\sigma-1},
\qquad
C_\sigma:=\max\{1,2^{1-\sigma}\}.
\]
Therefore
\[
\sum_{m=1}^{\lfloor n/2\rfloor}(n-m)^{\sigma-1}m^{\rho-1}
\le
C_\sigma n^{\sigma-1}\sum_{m=1}^{\lfloor n/2\rfloor}m^{\rho-1}.
\]
Since $\rho>0$, the standard integral comparison gives
\[
\sum_{m=1}^{\lfloor n/2\rfloor}m^{\rho-1}
\le
1+\int_{1}^{n/2}x^{\rho-1}\,dx
\le C'_\rho\,n^\rho
\]
for some constant $C'_\rho$ depending only on $\rho$. Hence
\[
\sum_{m=1}^{\lfloor n/2\rfloor}(n-m)^{\sigma-1}m^{\rho-1}
\le
C_\sigma C'_\rho\,n^{\sigma+\rho-1}.
\]

If $\lfloor n/2\rfloor+1\le m\le n-1$, then $m\in[n/2,n-1]$, hence
\[
m^{\rho-1}\le C_\rho\,n^{\rho-1},
\qquad
C_\rho:=\max\{1,2^{1-\rho}\}.
\]
Therefore
\[
\sum_{m=\lfloor n/2\rfloor+1}^{n-1}(n-m)^{\sigma-1}m^{\rho-1}
\le
C_\rho n^{\rho-1}\sum_{m=\lfloor n/2\rfloor+1}^{n-1}(n-m)^{\sigma-1}.
\]
After the change of variable $r=n-m$, the inner sum becomes
\[
\sum_{r=1}^{\lceil n/2\rceil-1} r^{\sigma-1}\le C'_\sigma n^\sigma
\]
for some constant $C'_\sigma$ depending only on $\sigma$. Hence
\[
\sum_{m=\lfloor n/2\rfloor+1}^{n-1}(n-m)^{\sigma-1}m^{\rho-1}
\le
C_\rho C'_\sigma\,n^{\sigma+\rho-1}.
\]
Adding the two estimates proves the upper bound.

For the lower bound, restrict the sum to the central block
\[
\left\lfloor \frac n4\right\rfloor \le m\le \left\lfloor \frac{3n}{4}\right\rfloor.
\]
For every such $m$ and every $n\ge 4$ one has
\[
\frac n4\le m\le \frac{3n}{4},
\qquad
\frac n4\le n-m\le \frac{3n}{4}.
\]
Hence
\[
m^{\rho-1}\ge c_\rho\,n^{\rho-1},
\qquad
(n-m)^{\sigma-1}\ge c_\sigma\,n^{\sigma-1},
\]
where one may take
\[
c_\rho:=\min\{1,4^{1-\rho}\},
\qquad
c_\sigma:=\min\{1,4^{1-\sigma}\}.
\]
Indeed, if $\rho\le 1$, then $m\le n$ implies $m^{\rho-1}\ge n^{\rho-1}$; if $\rho\ge 1$, then
$m\ge n/4$ implies $m^{\rho-1}\ge 4^{1-\rho}n^{\rho-1}$.
The same argument applies to $(n-m)^{\sigma-1}$.

Therefore every summand in the central block is bounded below by
\[
c_\sigma c_\rho\,n^{\sigma+\rho-2}.
\]
The number of integers in the central block is at least $n/2-2$. Consequently, for all $n\ge 8$,
\[
(u_\sigma*u_\rho)(n)
\ge
\left(\frac n2-2\right)c_\sigma c_\rho\,n^{\sigma+\rho-2}
\ge
\frac{c_\sigma c_\rho}{4}\,n^{\sigma+\rho-1}.
\]
Since only finitely many values $2\le n<8$ remain, their minimum ratio to
$n^{\sigma+\rho-1}$ is positive. Adjusting the constant completes the proof.
\end{proof}

\begin{theorem}[Heavy-tail convolution class]
\label{thm:heavy_tail_convolution_class}
Fix $\betatail\in(0,1)$ and define
\[
f_{\betatail}(n):=n^{-\betatail},\qquad n\in\mathbb N^*.
\]
Then, for every fixed $k\ge 1$, there exist constants
$c_{k,\betatail},C_{k,\betatail}\in(0,\infty)$ such that
\begin{equation}
c_{k,\betatail}\,n^{k(1-\betatail)-1}
\le
f_{\betatail}^{(*k)}(n)
\le
C_{k,\betatail}\,n^{k(1-\betatail)-1},
\qquad n\ge k.
\label{eq:heavy_tail_convolution_class}
\end{equation}
\end{theorem}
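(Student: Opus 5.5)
The plan is induction on $k$, with Lemma~\ref{lem:discrete_power_convolution} supplying the inductive step. Write $\sigma_k:=k(1-\betatail)>0$, so that $f_{\betatail}(n)=n^{-\betatail}=u_{\sigma_1}(n)$ and the claimed bound reads $f_{\betatail}^{(*k)}(n)\asymp u_{\sigma_k}(n)=n^{\sigma_k-1}$. For $k=1$ there is nothing to prove: take $c_{1,\betatail}=C_{1,\betatail}=1$.

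For the step $k\to k+1$, I first handle the support. By Definition~\ref{def:positive_lag_convolution}, $f_{\betatail}^{(*k)}(n)=0$ for $n<k$, since a $k$-fold convolution on positive lags needs $n\ge k$. Also $f_{\betatail}^{(*(k+1))}(n)=\sum_{m=1}^{n-1}f_{\betatail}^{(*k)}(m)\,f_{\betatail}(n-m)$, so only the terms with $m\ge k$ contribute.

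\emph{Upper bound.} By the induction hypothesis, $f_{\betatail}^{(*k)}(m)\le C_{k,\betatail}\,m^{\sigma_k-1}$ for all $m\ge 1$. This holds trivially for $m<k$, where the left side is zero. Monotonicity of convolution in nonnegative arguments then gives
\[
f_{\betatail}^{(*(k+1))}(n)\le C_{k,\betatail}\,(u_{\sigma_k}*u_{\sigma_1})(n)\le C_{k,\betatail}\,C_{\sigma_k,\sigma_1}\,n^{\sigma_{k+1}-1}.
\]
This uses the upper half of Lemma~\ref{lem:discrete_power_convolution} with $\sigma=\sigma_k$, $\rho=\sigma_1$, and the identity $\sigma_k+\sigma_1-1=\sigma_{k+1}-1$. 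It is valid for $n\ge 2$, hence for all $n\ge k+1$.

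\emph{Lower bound.} This is the step I expect to need the most care, because the induction hypothesis only gives a lower bound for $m\ge k$, not for all $m\ge1$. So Lemma~\ref{lem:discrete_power_convolution} cannot be applied verbatim. I would redo its central-block argument directly. For $n\ge n_0(k)$, say $n\ge 8k$, restrict the sum to $n/4\le m\le 3n/4$. On this block $m\ge k$, so the hypothesis applies, and both $m$ and $n-m$ are comparable to $n$. Each summand is then at least $c_{k,\betatail}\,c'\,n^{\sigma_k-1}n^{-\betatail}$, and there are at least $n/4$ summands, giving the lower bound $\gtrsim n^{\sigma_k-\betatail}=n^{\sigma_{k+1}-1}$. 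The handling of the exponent $\sigma_k-1$ mirrors the lemma: it may be negative or nonnegative, and the powers $\min\{1,4^{1-\sigma_k}\}$ cover both cases.

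It remains to cover the finitely many values $k+1\le n<n_0(k)$. For these, $f_{\betatail}^{(*(k+1))}(n)>0$, because the term with all jumps equal to one, i.e.\ the path through $m=n-1\ge k$, is positive and all other terms are nonnegative. So the minimum ratio to $n^{\sigma_{k+1}-1}$ over this finite range is positive, and shrinking the constant yields $c_{k+1,\betatail}$. This closes the induction for all $n\ge k+1$, as \eqref{eq:heavy_tail_convolution_class} requires.
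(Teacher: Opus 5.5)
Your proposal is correct and follows the paper's proof. Both use induction on $k$, obtain the upper bound from Lemma~\ref{lem:discrete_power_convolution} after extending the inductive bound by zero below $k$, prove the lower bound for large $n$, and finish with a positivity argument on finitely many $n$. The only difference is in the large-$n$ lower bound. You rerun the central-block estimate on $n/4\le m\le 3n/4$, which lies inside $m\ge k$ once $n\ge 8k$. The paper instead applies the lemma's lower bound to the full sum and subtracts the $k-1$ missing terms, which are $O(n^{-\betatail})$ and hence of lower order.
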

\begin{proof}
Set
\[
\sigma:=1-\betatail\in(0,1).
\]
Then
\[
f_{\betatail}(n)=n^{-\betatail}=n^{\sigma-1}=u_\sigma(n).
\]
We prove by induction on $k$ that there exist constants
$a_k,b_k>0$ such that
\begin{equation}
a_k\,n^{k\sigma-1}\le u_\sigma^{(*k)}(n)\le b_k\,n^{k\sigma-1},
\qquad n\ge k.
\label{eq:heavy_tail_induction_claim}
\end{equation}

For $k=1$, this is exactly
\[
u_\sigma(n)=n^{\sigma-1}.
\]

Assume now that \eqref{eq:heavy_tail_induction_claim} holds for some $k\ge 1$.

Fix $n\ge k+1$. By definition,
\[
u_\sigma^{(*(k+1))}(n)
=
\sum_{m=1}^{n-1}u_\sigma^{(*k)}(n-m)u_\sigma(m).
\]

For the upper bound, note that $u_\sigma^{(*k)}(r)=0$ for $r<k$, since it is a
$k$-fold convolution of positive-lag sequences. Hence, after enlarging $b_k$
if necessary, we may write
\[
u_\sigma^{(*k)}(r)\le b_k\,r^{k\sigma-1}
\qquad\text{for every }r\ge 1.
\]
Therefore
\[
u_\sigma^{(*(k+1))}(n)
\le
b_k\sum_{m=1}^{n-1}(n-m)^{k\sigma-1}m^{\sigma-1}.
\]
Applying Lemma~\ref{lem:discrete_power_convolution} with exponents
$k\sigma$ and $\sigma$ yields
\[
u_\sigma^{(*(k+1))}(n)\le b_{k+1}\,n^{(k+1)\sigma-1}
\]
for some constant $b_{k+1}>0$.

For the lower bound, rewrite the sum using $r:=n-m$:
\[
u_\sigma^{(*(k+1))}(n)
=
\sum_{r=1}^{n-1}u_\sigma^{(*k)}(r)\,u_\sigma(n-r).
\]
Since $u_\sigma^{(*k)}(r)=0$ for $r<k$, this becomes
\[
u_\sigma^{(*(k+1))}(n)
=
\sum_{r=k}^{n-1}u_\sigma^{(*k)}(r)\,(n-r)^{\sigma-1}.
\]
Applying the lower induction hypothesis on the range $r\ge k$ gives
\[
u_\sigma^{(*(k+1))}(n)
\ge
a_k\sum_{r=k}^{n-1} r^{k\sigma-1}(n-r)^{\sigma-1}.
\]
Now write
\[
\sum_{r=k}^{n-1} r^{k\sigma-1}(n-r)^{\sigma-1}
=
\sum_{r=1}^{n-1} r^{k\sigma-1}(n-r)^{\sigma-1}
-
\sum_{r=1}^{k-1} r^{k\sigma-1}(n-r)^{\sigma-1}.
\]
By Lemma~\ref{lem:discrete_power_convolution}, the full sum is bounded below by
\[
c\,n^{(k+1)\sigma-1}
\]
for some constant $c>0$ depending only on $k$ and $\sigma$.

On the other hand, since $k-1$ is fixed,
\[
\sum_{r=1}^{k-1} r^{k\sigma-1}(n-r)^{\sigma-1}
\le
C\,n^{\sigma-1}
\]
for some constant $C>0$ depending only on $k$ and $\sigma$.
Because $k\sigma>0$, one has
\[
n^{\sigma-1}=o\!\left(n^{(k+1)\sigma-1}\right)
\qquad\text{as }n\to\infty.
\]
Hence there exist constants $c'>0$ and $N_k$ such that, for all $n\ge N_k$,
\[
\sum_{r=k}^{n-1} r^{k\sigma-1}(n-r)^{\sigma-1}
\ge
c'\,n^{(k+1)\sigma-1}.
\]
Therefore, for all $n\ge N_k$,
\[
u_\sigma^{(*(k+1))}(n)\ge a_k c'\,n^{(k+1)\sigma-1}.
\]

It remains to treat the finitely many values $k+1\le n<N_k$.
For each such $n$, one has $u_\sigma^{(*(k+1))}(n)>0$ because $n$ can be written
as a sum of $k+1$ positive integers. Hence the ratio
\[
\frac{u_\sigma^{(*(k+1))}(n)}{n^{(k+1)\sigma-1}}
\]
is positive for each of those finitely many $n$. Taking the minimum of these
finitely many positive ratios and $a_k c'$ gives a constant $a_{k+1}>0$ such that
\[
u_\sigma^{(*(k+1))}(n)\ge a_{k+1}\,n^{(k+1)\sigma-1}
\qquad\text{for all }n\ge k+1.
\]
This closes the induction.

Since $f_{\betatail}=u_\sigma$ with $\sigma=1-\betatail$, we obtain
\[
f_{\betatail}^{(*k)}(n)\asymp n^{k(1-\betatail)-1},
\qquad n\ge k.
\]
This is \eqref{eq:heavy_tail_convolution_class}
\end{proof}

\section{Deep Jacobian estimates}
\label{app:deep_e2e_calculus}

\subsection{Setup}

Fix a depth $\Nlayer\ge 1$, a finite horizon $\Tctx$, and a compact input set $\mathcal X_0$.
Let
\[
h^{(0)}=x\in\mathcal X_0,\qquad
h^{(\idxlayer)} = F_{\idxlayer}\bigl(h^{(\idxlayer-1)}\bigr),
\qquad \idxlayer=1,\dots,\Nlayer,
\]
where each $F_{\idxlayer}$ is causal and continuously differentiable on the relevant compact set
\[
\mathcal X_{\idxlayer-1}
:=
F_{\idxlayer-1}\circ\cdots\circ F_1(\mathcal X_0).
\]
For each layer $\idxlayer$ and each $0\le \tau\le t\le \Tctx-1$, define the one-block Jacobian block
\[
J^{(\idxlayer)}_{t,\tau}(u)
:=
\frac{\partial F_{\idxlayer,t}(u)}{\partial u_\tau}
\in\mathbb R^{D\times D},
\qquad u\in\mathcal X_{\idxlayer-1}.
\]
Define also the full end-to-end Jacobian blocks
\[
J^{\mathrm{e2e},(\Nlayer)}_{t,\tau}(x)
:=
\frac{\partial h_t^{(\Nlayer)}(x)}{\partial h_\tau^{(0)}(x)}
\in\mathbb R^{D\times D}.
\]

For scalar lower-triangular kernels $\mathcal A,\mathcal B$ on
\[
\{(t,\tau): 0\le \tau\le t\le \Tctx-1\},
\]
we use the standard kernel product
\[
(\mathcal A\mathcal B)(t,\tau)
:=
\sum_{j=\tau}^{t}\mathcal A(t,j)\mathcal B(j,\tau).
\]

\subsection{Residual calculus}

\begin{theorem}[Residual calculus]
\label{thm:app_e2e_residual_calculus}
Assume that for each layer $\idxlayer$ there exist constants
\[
d_{\idxlayer}\ge 0,\qquad \lambda_{\idxlayer}\ge 0,
\]
and a scalar lower-triangular kernel
\[
K_{\idxlayer}:\{(t,\tau):0\le \tau<t\le \Tctx-1\}\to [0,\infty)
\]
such that for every $u\in\mathcal X_{\idxlayer-1}$ and every $0\le \tau\le t\le \Tctx-1$,
\begin{equation}
\|J^{(\idxlayer)}_{t,\tau}(u)\|
\le
d_{\idxlayer}\,\mathbf 1[t=\tau]
+
\lambda_{\idxlayer}\,K_{\idxlayer}(t,\tau)\,\mathbf 1[\tau<t].
\label{eq:app_one_block_e2e_envelope}
\end{equation}
Then, for every $x\in\mathcal X_0$, every $0\le \tau<t\le \Tctx-1$, and every depth $\Nlayer\ge 1$,
\begin{align}
\|J^{\mathrm{e2e},(\Nlayer)}_{t,\tau}(x)\|
&\le
\sum_{k=1}^{\Nlayer}
\ \sum_{1\le \idxlayeri{1}<\cdots<\idxlayeri{k}\le \Nlayer}
\left(\prod_{m\notin\{\idxlayeri{1},\dots,\idxlayeri{k}\}} d_m\right)
\notag\\
&\qquad\qquad\qquad\qquad\cdot
\sum_{\tau=i_0<i_1<\cdots<i_k=t}
\ \prod_{r=1}^{k}
\lambda_{\idxlayeri{r}}\,K_{\idxlayeri{r}}(i_r,i_{r-1}).
\label{eq:app_full_e2e_path_sum}
\end{align}
Moreover, for the diagonal blocks one has
\[
\|J^{\mathrm{e2e},(\Nlayer)}_{t,t}(x)\|
\le
\prod_{\idxlayer=1}^{\Nlayer} d_{\idxlayer}.
\]
\end{theorem}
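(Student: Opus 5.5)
The plan is to argue by induction on the depth $\Nlayer$, using the chain rule through the causal layers together with a splitting of each one-block Jacobian into its diagonal and strictly lower-triangular parts. Since every $F_{\idxlayer}$ is causal and $C^1$, the chain rule gives, for $\tau\le t$,
\[
J^{\mathrm{e2e},(\Nlayer)}_{t,\tau}(x)=\sum_{j=\tau}^{t} J^{(\Nlayer)}_{t,j}\bigl(h^{(\Nlayer-1)}\bigr)\,J^{\mathrm{e2e},(\Nlayer-1)}_{j,\tau}(x).
\]
Causality is what restricts the intermediate index to $\tau\le j\le t$: blocks with $j>t$ vanish for layer $\Nlayer$, and blocks with $j<\tau$ vanish for the lower stack. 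Taking operator norms and using submultiplicativity, the norm kernel $N^{(\Nlayer)}(t,\tau):=\|J^{\mathrm{e2e},(\Nlayer)}_{t,\tau}(x)\|$ satisfies
\[
N^{(\Nlayer)}\le \mathcal E_{\Nlayer}\,N^{(\Nlayer-1)}
\]
entrywise, as a product of nonnegative lower-triangular kernels. Here $\mathcal E_{\idxlayer}(t,\tau):=d_{\idxlayer}\mathbf 1[t=\tau]+\lambda_{\idxlayer}K_{\idxlayer}(t,\tau)\mathbf 1[\tau<t]$ is the envelope from \eqref{eq:app_one_block_e2e_envelope}, which applies because $h^{(\idxlayer-1)}\in\mathcal X_{\idxlayer-1}$.

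The base case is $N^{(0)}=\mathbf 1[t=\tau]$, since $h^{(0)}=x$ and the identity Jacobian has unit norm. Nonnegativity of all kernels lets us iterate the entrywise inequality monotonically, which gives
\[
N^{(\Nlayer)}\le \mathcal E_{\Nlayer}\mathcal E_{\Nlayer-1}\cdots\mathcal E_1 .
\]

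Next I expand this product. Write $\mathcal E_{\idxlayer}=d_{\idxlayer}I+\lambda_{\idxlayer}\mathcal K_{\idxlayer}$, where $\mathcal K_{\idxlayer}$ is the strictly lower part. Distributing the product, each term corresponds to a subset $\{\idxlayeri{1}<\cdots<\idxlayeri{k}\}$ of layers that contribute the strictly lower factor, with the remaining layers contributing $d_m I$.

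Scalar multiples of $I$ commute, so each term equals $\bigl(\prod_{m\notin S}d_m\bigr)\,\lambda_{\idxlayeri{k}}\mathcal K_{\idxlayeri{k}}\cdots\lambda_{\idxlayeri{1}}\mathcal K_{\idxlayeri{1}}$. Evaluating its $(t,\tau)$ entry with the kernel product, strict lower-triangularity forces the chain of intermediate indices to be strictly increasing, $\tau=i_0<i_1<\cdots<i_k=t$. The factor applied first (closest to the input) is layer $\idxlayeri{1}$ on the step $i_0\to i_1$, so the step $i_{r-1}\to i_r$ carries the kernel of layer $\idxlayeri{r}$, exactly as in \eqref{eq:app_full_e2e_path_sum}.

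For $\tau<t$ the $k=0$ term is $\prod_m d_m\cdot\mathbf 1[t=\tau]=0$, which is why the sum starts at $k=1$. For $t=\tau$ only $k=0$ survives, because every $k\ge1$ term requires a strictly increasing chain of length $k$ from $\tau$ to $t$. This yields the diagonal bound $\prod_{\idxlayer} d_{\idxlayer}$.

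The only delicate point is the bookkeeping in the non-commutative ordering of the strictly lower factors. One has to check that reordering the diagonal scalars out of the product is legitimate, which it is since they are scalar multiples of the identity kernel, and that the time-ordering of layers matches the index path. Both checks reduce to the definition of the kernel product, so I expect no genuine analytic obstacle: the proof is the chain rule plus monotone multiplication of nonnegative kernels.
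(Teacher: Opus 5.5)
Your proposal is correct and follows essentially the same route as the paper. Both arguments use induction on depth via the causal chain rule, submultiplicativity and nonnegativity to obtain the entrywise kernel bound $\|J^{\mathrm{e2e},(\Nlayer)}_{t,\tau}\|\le[(\mathcal D_{\Nlayer}+\mathcal G_{\Nlayer})\cdots(\mathcal D_1+\mathcal G_1)](t,\tau)$, and then expand over subsets of jump layers; the empty-set term vanishes off the diagonal and is the only surviving term on it. The only difference is cosmetic: you start the induction at depth $0$ with the identity kernel, while the paper starts at depth $1$ with the one-block envelope.
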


\begin{proof}
For each layer $\idxlayer$, define the scalar diagonal kernel
\[
\mathcal D_{\idxlayer}(t,\tau):=
d_{\idxlayer}\,\mathbf 1[t=\tau],
\]
and the scalar strictly lower-triangular kernel
\[
\mathcal G_{\idxlayer}(t,\tau):=
\lambda_{\idxlayer}K_{\idxlayer}(t,\tau)\,\mathbf 1[\tau<t].
\]
Then \eqref{eq:app_one_block_e2e_envelope} says precisely that
\[
\|J^{(\idxlayer)}_{t,\tau}(u)\|
\le
\mathcal D_{\idxlayer}(t,\tau)+\mathcal G_{\idxlayer}(t,\tau)
\qquad
\forall u\in\mathcal X_{\idxlayer-1}.
\]

We prove by induction on the depth $p\in\{1,\dots,\Nlayer\}$ that
\begin{equation}
\left\|
\frac{\partial h_t^{(p)}(x)}{\partial h_\tau^{(0)}(x)}
\right\|
\le
\bigl[(\mathcal D_p+\mathcal G_p)\cdots(\mathcal D_1+\mathcal G_1)\bigr](t,\tau)
\qquad (0\le \tau\le t\le \Tctx-1).
\label{eq:app_inductive_e2e_bound}
\end{equation}

For $p=1$, \eqref{eq:app_inductive_e2e_bound} is exactly
\eqref{eq:app_one_block_e2e_envelope} evaluated at $u=x\in\mathcal X_0$.

Assume now that \eqref{eq:app_inductive_e2e_bound} holds for some $p-1\ge 1$.
By the chain rule,
\[
\frac{\partial h_t^{(p)}(x)}{\partial h_\tau^{(0)}(x)}
=
\sum_{j=\tau}^{t}
\frac{\partial F_{p,t}(h^{(p-1)}(x))}{\partial h_j^{(p-1)}(x)}
\cdot
\frac{\partial h_j^{(p-1)}(x)}{\partial h_\tau^{(0)}(x)}.
\]
Taking operator norms and using submultiplicativity gives
\[
\left\|
\frac{\partial h_t^{(p)}(x)}{\partial h_\tau^{(0)}(x)}
\right\|
\le
\sum_{j=\tau}^{t}
\left\|
\frac{\partial F_{p,t}(h^{(p-1)}(x))}{\partial h_j^{(p-1)}(x)}
\right\|
\cdot
\left\|
\frac{\partial h_j^{(p-1)}(x)}{\partial h_\tau^{(0)}(x)}
\right\|.
\]
Since $h^{(p-1)}(x)\in\mathcal X_{p-1}$, the one-block bound
\eqref{eq:app_one_block_e2e_envelope} applies:
\[
\left\|
\frac{\partial F_{p,t}(h^{(p-1)}(x))}{\partial h_j^{(p-1)}(x)}
\right\|
\le
\mathcal D_p(t,j)+\mathcal G_p(t,j).
\]
Using the induction hypothesis for the second factor, we get
\[
\left\|
\frac{\partial h_t^{(p)}(x)}{\partial h_\tau^{(0)}(x)}
\right\|
\le
\sum_{j=\tau}^{t}
(\mathcal D_p+\mathcal G_p)(t,j)\,
\bigl[(\mathcal D_{p-1}+\mathcal G_{p-1})\cdots(\mathcal D_1+\mathcal G_1)\bigr](j,\tau).
\]
This is exactly
\[
\bigl[(\mathcal D_p+\mathcal G_p)\cdots(\mathcal D_1+\mathcal G_1)\bigr](t,\tau),
\]
which proves \eqref{eq:app_inductive_e2e_bound} for depth $p$.

Taking $p=\Nlayer$ yields
\[
\|J^{\mathrm{e2e},(\Nlayer)}_{t,\tau}(x)\|
\le
\bigl[(\mathcal D_{\Nlayer}+\mathcal G_{\Nlayer})\cdots(\mathcal D_1+\mathcal G_1)\bigr](t,\tau).
\]

We now expand the right-hand side.
Since each $\mathcal D_{\idxlayer}$ is diagonal and equals $d_{\idxlayer}I$ as a kernel,
one has the exact product expansion
\[
(\mathcal D_{\Nlayer}+\mathcal G_{\Nlayer})\cdots(\mathcal D_1+\mathcal G_1)
=
\sum_{S\subseteq\{1,\dots,\Nlayer\}}
\left(\prod_{m\notin S} d_m\right)
\prod_{\idxlayer\in S}^{\rightarrow}\mathcal G_{\idxlayer},
\]
where the ordered product is taken in increasing layer order.
For $\tau<t$, the empty-set term vanishes because it is purely diagonal.
Thus
\[
\|J^{\mathrm{e2e},(\Nlayer)}_{t,\tau}(x)\|
\le
\sum_{k=1}^{\Nlayer}
\ \sum_{1\le \idxlayeri{1}<\cdots<\idxlayeri{k}\le \Nlayer}
\left(\prod_{m\notin\{\idxlayeri{1},\dots,\idxlayeri{k}\}} d_m\right)
(\mathcal G_{\idxlayeri{k}}\cdots \mathcal G_{\idxlayeri{1}})(t,\tau).
\]
Finally, by repeated expansion of the kernel product,
\[
(\mathcal G_{\idxlayeri{k}}\cdots \mathcal G_{\idxlayeri{1}})(t,\tau)
=
\sum_{\tau=i_0<i_1<\cdots<i_k=t}
\ \prod_{r=1}^{k}
\lambda_{\idxlayeri{r}}\,K_{\idxlayeri{r}}(i_r,i_{r-1}),
\]
which gives \eqref{eq:app_full_e2e_path_sum}.

For the diagonal blocks $\tau=t$, only the empty-set term survives, hence
\[
\|J^{\mathrm{e2e},(\Nlayer)}_{t,t}(x)\|
\le
\prod_{\idxlayer=1}^{\Nlayer} d_{\idxlayer}.
\]
\end{proof}

\subsection{A harmonic-kernel bound}

For diffuse Transformer blocks the one-block kernel depends on the query time $t$.
The next lemma gives the corresponding convolution bound for
\[
\mathcal H(t,\tau):=\frac{1}{t+1}\mathbf 1[\tau<t].
\]

\begin{lemma}[Nested harmonic bound]
\label{lem:app_nested_harmonic}
Fix $k\ge 1$ and define
\[
\mathcal H(t,\tau):=\frac{1}{t+1}\mathbf 1[\tau<t].
\]
Then for every $0\le \tau<t\le \Tctx-1$,
\begin{equation}
(\mathcal H^k)(t,\tau)
\le
\frac{1}{t+1}\cdot \frac{H_t^{\,k-1}}{(k-1)!},
\label{eq:app_nested_harmonic_bound}
\end{equation}
where
\[
H_t:=\sum_{m=1}^{t}\frac{1}{m}
\]
is the $t$-th harmonic number, with the convention $H_0:=0$.
Consequently, for every fixed $k$,
\[
(\mathcal H^k)(t,\tau)\lesssim_k \frac{(\log(1+t))^{k-1}}{t+1}.
\]
\end{lemma}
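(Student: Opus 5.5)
We argue by induction on $k$, using the explicit form of the kernel product. For $k=1$ the claim reads $(\mathcal H)(t,\tau)=\frac{1}{t+1}\le \frac{1}{t+1}\cdot\frac{H_t^0}{0!}$, which holds with equality (recall $\tau<t$). For the inductive step we expand
\[
(\mathcal H^{k+1})(t,\tau)=\sum_{j=\tau}^{t}\mathcal H(t,j)\,(\mathcal H^{k})(j,\tau)
=\frac{1}{t+1}\sum_{j=\tau+1}^{t-1}(\mathcal H^{k})(j,\tau),
\]
since $\mathcal H(t,j)$ requires $j<t$ and $(\mathcal H^k)(j,\tau)$ with $k\ge1$ vanishes unless $\tau<j$. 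Inserting the induction hypothesis gives
\[
(\mathcal H^{k+1})(t,\tau)\le \frac{1}{t+1}\sum_{j=1}^{t-1}\frac{1}{j+1}\cdot\frac{H_j^{\,k-1}}{(k-1)!}.
\]

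The key step is to show that
\[
\sum_{j=1}^{t-1}\frac{H_j^{\,k-1}}{j+1}\le \frac{H_t^{\,k}}{k}.
\]
We will use a telescoping/convexity argument. Write $H_{j+1}=H_j+\frac{1}{j+1}$. Since $x\mapsto x^k$ is convex and increasing on $[0,\infty)$, the mean value inequality gives $H_{j+1}^k-H_j^k\ge kH_j^{k-1}(H_{j+1}-H_j)=\frac{k\,H_j^{k-1}}{j+1}$. Summing over $j=1,\dots,t-1$ telescopes to $\frac{1}{k}(H_t^k-H_1^k)\le \frac{H_t^k}{k}$, which is the claim. Dividing by $(k-1)!$ then turns $\frac{1}{k(k-1)!}$ into $\frac{1}{k!}$, closing the induction. (For $k=1$ we use $H_j^0=1$, so the sum is $H_t-1\le H_t$, consistent with the same argument.) The edge case $t=\tau+1$ is an empty sum and is trivially fine.

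Finally, the consequence follows from $H_t\le 1+\log t\le 1+\log(1+t)$. Here we bound $(1+\log(1+t))^{k-1}\lesssim_k (\log(1+t))^{k-1}$ for $t\ge1$ using $\log 2>0$, and absorb $1/(k-1)!$ into the implied constant. The only place requiring care is the index bookkeeping, namely that the inner lower limit makes the sum start at $j\ge1$ so that $H_j$ is well defined. The convexity step is the real content and is elementary.
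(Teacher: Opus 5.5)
Your proof is correct, but it is organized differently from the paper's. The paper does not induct on $k$. It expands $(\mathcal H^k)(t,\tau)$ once as a sum over paths $\tau=i_0<i_1<\cdots<i_k=t$ and factors out the final weight $\frac{1}{t+1}$. It then drops the constraint $i_1>\tau$ and bounds the resulting ordered sum of products of the weights $\frac{1}{i_r+1}$ over $1\le i_r\le t-1$ by $\frac{1}{(k-1)!}\bigl(\sum_{m=1}^{t-1}\frac{1}{m+1}\bigr)^{k-1}\le \frac{H_t^{k-1}}{(k-1)!}$. This uses the fact that each increasing $(k-1)$-tuple appears $(k-1)!$ times in the multinomial expansion.

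You instead peel off one factor at a time via $\mathcal H^{k+1}=\mathcal H\,\mathcal H^{k}$. You close the induction with $\sum_{j=1}^{t-1}\frac{H_j^{k-1}}{j+1}\le\frac{H_t^k}{k}$, which follows from convexity of $x\mapsto x^k$ and the identity $\frac{1}{j+1}=H_{j+1}-H_j$. In effect this is a left Riemann sum for $\int_0^{H_t}\frac{x^{k-1}}{(k-1)!}\,dx$.

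The two arguments are two proofs of the same ordered-sum inequality. The paper's counting argument is one-shot, and it reuses the path-sum and symmetric-sum bookkeeping that also appears in the residual calculus and the competitor-suppression lemma. Your version avoids combinatorial counting and makes clear that $H_t$ acts as the effective time variable. Both give the same constant $1/(k-1)!$. Your derivation of the logarithmic consequence, using $t\ge 1$ so that $\log(1+t)\ge\log 2$, is also fine.
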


\begin{proof}
For $k=1$ the claim is immediate:
\[
\mathcal H(t,\tau)=\frac{1}{t+1}\mathbf 1[\tau<t]
\le \frac{1}{t+1}.
\]

Assume now $k\ge 2$.
By the kernel-product expansion,
\[
(\mathcal H^k)(t,\tau)
=
\sum_{\tau=i_0<i_1<\cdots<i_k=t}
\ \prod_{r=1}^{k}\frac{1}{i_r+1}.
\]
Since $i_k=t$, the last factor is exactly $\frac{1}{t+1}$, hence
\[
(\mathcal H^k)(t,\tau)
=
\frac{1}{t+1}
\sum_{\tau<i_1<\cdots<i_{k-1}<t}
\ \prod_{r=1}^{k-1}\frac{1}{i_r+1}.
\]
Dropping the lower bound $\tau$ only enlarges the sum, so
\[
(\mathcal H^k)(t,\tau)
\le
\frac{1}{t+1}
\sum_{0<i_1<\cdots<i_{k-1}<t}
\ \prod_{r=1}^{k-1}\frac{1}{i_r+1}.
\]
Now expand
\[
\left(\sum_{m=1}^{t-1}\frac{1}{m+1}\right)^{k-1}.
\]
Every strictly increasing $(k-1)$-tuple
\[
0<i_1<\cdots<i_{k-1}<t
\]
appears exactly $(k-1)!$ times among the ordered monomials in this expansion.
Therefore
\[
\sum_{0<i_1<\cdots<i_{k-1}<t}
\ \prod_{r=1}^{k-1}\frac{1}{i_r+1}
\le
\frac{1}{(k-1)!}
\left(\sum_{m=1}^{t-1}\frac{1}{m+1}\right)^{k-1}
\le
\frac{H_t^{\,k-1}}{(k-1)!}.
\]
Substituting this into the previous display gives
\[
(\mathcal H^k)(t,\tau)
\le
\frac{1}{t+1}\cdot \frac{H_t^{\,k-1}}{(k-1)!},
\]
which is \eqref{eq:app_nested_harmonic_bound}.

Since $H_t\lesssim \log(1+t)$, the logarithmic form follows.
\end{proof}

\subsection{Model-specific bounds}

\begin{proposition}[Deep Transformer bound]
\label{prop:app_e2e_transformer_deep}
Assume the hypotheses of Theorem~\ref{thm:app_e2e_residual_calculus}.
Assume in addition that for each layer $\idxlayer$ there exists $a_{\idxlayer}>0$ such that
\[
K_{\idxlayer}(t,\tau)\le \frac{a_{\idxlayer}}{t+1},
\qquad \tau<t.
\]
Fix a bounded source family $0\le \tau\le \tau_{\max}$.
Then for every $x\in\mathcal X_0$ and every $\ell\ge 1$ with $\tau+\ell\le \Tctx-1$,
\[
\left\|
J^{\mathrm{e2e},(\Nlayer)}_{\tau+\ell,\tau}(x)
\right\|
\lesssim_{\tau_{\max},\Nlayer}
\frac{(\log(1+\ell))^{\Nlayer-1}}{1+\ell}.
\]
\end{proposition}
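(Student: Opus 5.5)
The plan is to start from the path-sum bound of Theorem~\ref{thm:app_e2e_residual_calculus} and dominate each $k$-jump term by a nested harmonic kernel, so that Lemma~\ref{lem:app_nested_harmonic} applies directly. Fix $x\in\mathcal X_0$, $0\le\tau\le\tau_{\max}$, $\ell\ge1$, and set $t=\tau+\ell$. By \eqref{eq:app_full_e2e_path_sum},
\[
\bigl\|J^{\mathrm{e2e},(\Nlayer)}_{t,\tau}(x)\bigr\|
\le
\sum_{k=1}^{\Nlayer}\sum_{\idxlayeri{1}<\cdots<\idxlayeri{k}}\Bigl(\prod_{m\notin\{\idxlayeri{1},\dots,\idxlayeri{k}\}}d_m\Bigr)
\sum_{\tau=i_0<\cdots<i_k=t}\prod_{r=1}^{k}\lambda_{\idxlayeri{r}}K_{\idxlayeri{r}}(i_r,i_{r-1}).
\]
Inserting $K_{\idxlayer}(i_r,i_{r-1})\le a_{\idxlayer}/(i_r+1)=a_{\idxlayer}\,\mathcal H(i_r,i_{r-1})$ turns the innermost path sum into at most $\bigl(\prod_r\lambda_{\idxlayeri{r}}a_{\idxlayeri{r}}\bigr)(\mathcal H^k)(t,\tau)$. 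Set $M:=\max_{\idxlayer}\max\{1,d_{\idxlayer},\lambda_{\idxlayer}a_{\idxlayer}\}$. Every product of $d$'s and $\lambda a$'s is then at most $M^{\Nlayer}$, and there are at most $2^{\Nlayer}$ layer subsets. This gives
\[
\bigl\|J^{\mathrm{e2e},(\Nlayer)}_{t,\tau}(x)\bigr\|\le (2M)^{\Nlayer}\max_{1\le k\le\Nlayer}(\mathcal H^k)(t,\tau).
\]

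Next, Lemma~\ref{lem:app_nested_harmonic} gives $(\mathcal H^k)(t,\tau)\le \frac{1}{t+1}\frac{H_t^{k-1}}{(k-1)!}$. Using $H_t\le 1+\log(1+t)$, each term is $\lesssim_{\Nlayer}(1+\log(1+t))^{\Nlayer-1}/(t+1)$. Since the bound must be stated in the lag, I will convert $t$ to $\ell$ using the bounded source family. One has $t+1=\tau+\ell+1\ge 1+\ell$, and also $1+t\le(\tau_{\max}+1)(1+\ell)$. The latter gives $\log(1+t)\le\log(\tau_{\max}+1)+\log(1+\ell)$, so $1+\log(1+t)\lesssim_{\tau_{\max}}1+\log(1+\ell)$.

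The main obstacle is purely cosmetic: the stated envelope is $(\log(1+\ell))^{\Nlayer-1}/(1+\ell)$ without the additive $1$. Since $\log(1+\ell)\ge\log2>0$ for $\ell\ge1$, we have $1+\log(1+\ell)\le(1+1/\log 2)\log(1+\ell)$, which absorbs the $1$ into a constant depending only on $\Nlayer$. Collecting the factors $(2M)^{\Nlayer}$, $(k-1)!^{-1}\le1$, and the $\tau_{\max}$-dependent logarithmic constant yields the claim. The implied constant depends only on $\tau_{\max}$, $\Nlayer$, and the layer constants, and not on $x$, $\tau$, or $\Tctx$, which is also what the horizon-uniform Corollary~\ref{cor:e2e_deep_decay_envelopes_uniform_horizon}(i) needs.
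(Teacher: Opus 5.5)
Your proposal is correct and follows the paper's proof essentially step for step. You dominate each kernel by $a_{\idxlayer}\mathcal H$, apply Lemma~\ref{lem:app_nested_harmonic} to the $k$-fold product, collapse the finite sum over layer subsets, and convert $t=\tau+\ell$ to the lag using $\tau\le\tau_{\max}$. Your explicit bounds $H_t\le 1+\log(1+t)$ and $1+\log(1+\ell)\le(1+1/\log 2)\log(1+\ell)$ for $\ell\ge1$ simply spell out the steps the paper writes as $H_t\lesssim\log(1+t)$ and $\log(1+t)\asymp_{\tau_{\max}}\log(1+\ell)$.
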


\begin{proof}
Fix an ordered layer subset
\[
1\le \idxlayeri{1}<\cdots<\idxlayeri{k}\le \Nlayer.
\]
Define
\[
\mathcal H(t,\tau):=\frac{1}{t+1}\mathbf 1[\tau<t].
\]
By the assumption on $K_{\idxlayer}$,
\[
K_{\idxlayeri{r}}(i_r,i_{r-1})\le a_{\idxlayeri{r}}\mathcal H(i_r,i_{r-1})
\qquad \forall r.
\]
Therefore
\[
\sum_{\tau=i_0<\cdots<i_k=t}
\ \prod_{r=1}^{k}\lambda_{\idxlayeri{r}}K_{\idxlayeri{r}}(i_r,i_{r-1})
\le
\left(\prod_{r=1}^{k}\lambda_{\idxlayeri{r}}a_{\idxlayeri{r}}\right)
(\mathcal H^k)(t,\tau).
\]
By Lemma~\ref{lem:app_nested_harmonic},
\[
(\mathcal H^k)(t,\tau)\lesssim_k \frac{(\log(1+t))^{k-1}}{t+1}.
\]
Insert this estimate into Theorem~\ref{thm:app_e2e_residual_calculus}:
\[
\|J^{\mathrm{e2e},(\Nlayer)}_{t,\tau}(x)\|
\lesssim_{\Nlayer}
\sum_{k=1}^{\Nlayer}
\ \sum_{1\le \idxlayeri{1}<\cdots<\idxlayeri{k}\le \Nlayer}
\left(\prod_{m\notin\{\idxlayeri{1},\dots,\idxlayeri{k}\}} d_m\right)
\left(\prod_{r=1}^{k}\lambda_{\idxlayeri{r}}a_{\idxlayeri{r}}\right)
\frac{(\log(1+t))^{k-1}}{t+1}.
\]
Since $\Nlayer$ is fixed, the finite sum is bounded by
\[
C_{\Nlayer}\frac{(\log(1+t))^{\Nlayer-1}}{t+1}.
\]
Now restrict to the bounded source family $0\le \tau\le \tau_{\max}$ and set $t=\tau+\ell$.
Then
\[
t+1=\tau+\ell+1\asymp_{\tau_{\max}} 1+\ell,
\qquad
\log(1+t)\asymp_{\tau_{\max}}\log(1+\ell),
\]
uniformly for $0\le \tau\le \tau_{\max}$.
Hence
\[
\left\|
J^{\mathrm{e2e},(\Nlayer)}_{\tau+\ell,\tau}(x)
\right\|
\lesssim_{\tau_{\max},\Nlayer}
\frac{(\log(1+\ell))^{\Nlayer-1}}{1+\ell}.
\]
\end{proof}

\begin{proposition}[Deep Mamba bound under failed freeze time]
\label{prop:app_e2e_ssm_deep}
Assume the hypotheses of Theorem~\ref{thm:app_e2e_residual_calculus}.
Assume in addition that for each layer $\idxlayer$ there exist $a_{\idxlayer}>0$ and $c_{\idxlayer}>0$ such that
\[
K_{\idxlayer}(t,\tau)\le a_{\idxlayer}e^{-c_{\idxlayer}(t-\tau)},
\qquad \tau<t.
\]
Set
\[
c_\ast:=\min_{1\le \idxlayer\le \Nlayer} c_{\idxlayer}.
\]
Then for every $x\in\mathcal X_0$ and every $\tau<t$,
\[
\left\|
J^{\mathrm{e2e},(\Nlayer)}_{t,\tau}(x)
\right\|
\lesssim_{\Nlayer}
(1+t-\tau)^{\Nlayer-1}e^{-c_\ast(t-\tau)}.
\]
\end{proposition}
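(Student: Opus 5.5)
The plan is to mirror the proof of Proposition~\ref{prop:app_e2e_transformer_deep}. We insert the exponential kernel envelope into the path-sum bound of Theorem~\ref{thm:app_e2e_residual_calculus} and control each ordered layer subset separately.

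First, fix an ordered subset $1\le \idxlayeri{1}<\cdots<\idxlayeri{k}\le \Nlayer$ and a path $\tau=i_0<i_1<\cdots<i_k=t$. Along this path the kernel product satisfies
\[
\prod_{r=1}^{k}\lambda_{\idxlayeri{r}}K_{\idxlayeri{r}}(i_r,i_{r-1})
\le
\Big(\prod_{r=1}^{k}\lambda_{\idxlayeri{r}}a_{\idxlayeri{r}}\Big)\,
\exp\!\Big(-\sum_{r=1}^{k}c_{\idxlayeri{r}}(i_r-i_{r-1})\Big).
\]
Since every $c_{\idxlayeri{r}}\ge c_\ast$ and the increments telescope to $t-\tau$, the exponential factor is at most $e^{-c_\ast(t-\tau)}$. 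This bound is independent of the intermediate nodes.

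Second, we count the paths. The number of strictly increasing chains from $\tau$ to $t$ with $k$ steps equals the number of choices of the $k-1$ intermediate indices from the $t-\tau-1$ interior points, namely $\binom{t-\tau-1}{k-1}$. This is at most $(t-\tau)^{k-1}/(k-1)!\le (1+t-\tau)^{\Nlayer-1}$ for $k\le\Nlayer$. The path sum for this subset is therefore bounded by a constant times $(1+t-\tau)^{\Nlayer-1}e^{-c_\ast(t-\tau)}$.

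Third, we sum over $k$ and over subsets. There are at most $2^{\Nlayer}$ subsets, and each carries the fixed weights $\prod_{m\notin S} d_m\prod_{r}\lambda_{\idxlayeri{r}}a_{\idxlayeri{r}}$. The total is therefore bounded by a constant depending only on $\Nlayer$ and the layer constants, times $(1+t-\tau)^{\Nlayer-1}e^{-c_\ast(t-\tau)}$, which is the claim.

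No step is really hard. The only point needing care is the telescoping step with heterogeneous rates, and this is handled by replacing each $c_{\idxlayer}$ with the minimum $c_\ast$ before summing over paths.
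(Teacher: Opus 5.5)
Your proposal is correct and matches the paper's proof step for step. You bound each path product by $\bigl(\prod_r \lambda_{n_r}a_{n_r}\bigr)e^{-c_\ast(t-\tau)}$ using $c_{n_r}\ge c_\ast$ and telescoping, count the paths as $\binom{t-\tau-1}{k-1}\lesssim_k(1+t-\tau)^{k-1}$, and sum the finitely many layer-subset terms from Theorem~\ref{thm:app_e2e_residual_calculus}.
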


\begin{proof}
Fix an ordered layer subset
\[
1\le \idxlayeri{1}<\cdots<\idxlayeri{k}\le \Nlayer
\]
and write $\ell:=t-\tau$.
For every temporal path $\tau=i_0<\cdots<i_k=t$, one has
\[
\prod_{r=1}^{k}K_{\idxlayeri{r}}(i_r,i_{r-1})
\le
\left(\prod_{r=1}^{k}a_{\idxlayeri{r}}\right)
\exp\left(-\sum_{r=1}^{k}c_{\idxlayeri{r}}(i_r-i_{r-1})\right)
\le
\left(\prod_{r=1}^{k}a_{\idxlayeri{r}}\right)e^{-c_\ast\ell}.
\]
The number of strictly increasing temporal paths
\[
\tau=i_0<i_1<\cdots<i_k=t
\]
is the number of compositions of $\ell$ into $k$ positive integers, namely
\[
\binom{\ell-1}{k-1},
\]
with the convention that this is $0$ if $\ell<k$.
Therefore
\[
\sum_{\tau=i_0<\cdots<i_k=t}
\ \prod_{r=1}^{k}\lambda_{\idxlayeri{r}}K_{\idxlayeri{r}}(i_r,i_{r-1})
\le
\left(\prod_{r=1}^{k}\lambda_{\idxlayeri{r}}a_{\idxlayeri{r}}\right)
\binom{\ell-1}{k-1}e^{-c_\ast\ell}.
\]
Insert this estimate into Theorem~\ref{thm:app_e2e_residual_calculus}:
\[
\|J^{\mathrm{e2e},(\Nlayer)}_{t,\tau}(x)\|
\le
\sum_{k=1}^{\Nlayer}
\ \sum_{1\le \idxlayeri{1}<\cdots<\idxlayeri{k}\le \Nlayer}
\left(\prod_{m\notin\{\idxlayeri{1},\dots,\idxlayeri{k}\}} d_m\right)
\left(\prod_{r=1}^{k}\lambda_{\idxlayeri{r}}a_{\idxlayeri{r}}\right)
\binom{\ell-1}{k-1}e^{-c_\ast\ell}.
\]
Since $\Nlayer$ is fixed and
\[
\binom{\ell-1}{k-1}\lesssim_k (1+\ell)^{k-1},
\]
the finite sum is bounded by a constant multiple of
\[
(1+\ell)^{\Nlayer-1}e^{-c_\ast\ell}.
\]
\end{proof}

\begin{proposition}[Deep Sessa bound]
\label{prop:app_e2e_sessa_deep}
Assume the hypotheses of Theorem~\ref{thm:app_e2e_residual_calculus}.
Assume in addition that for each layer $\idxlayer$ there exist $a_{\idxlayer}>0$ and a common exponent
$\betatail\in(0,1)$ such that
\[
K_{\idxlayer}(t,\tau)\le
a_{\idxlayer}(t-\tau)^{-\betatail}\bigl(1+\log(1+t-\tau)\bigr),
\qquad \tau<t.
\]
Then for every $x\in\mathcal X_0$ and every $\tau<t$,
\[
\left\|
J^{\mathrm{e2e},(\Nlayer)}_{t,\tau}(x)
\right\|
\lesssim_{\Nlayer,\betatail}
\sum_{k=1}^{\Nlayer}
(t-\tau)^{k(1-\betatail)-1}\bigl(1+\log(1+t-\tau)\bigr)^k.
\]
In particular, since $\Nlayer$ is fixed,
\[
\left\|
J^{\mathrm{e2e},(\Nlayer)}_{t,\tau}(x)
\right\|
\lesssim_{\Nlayer,\betatail}
(t-\tau)^{\Nlayer(1-\betatail)-1}\bigl(1+\log(1+t-\tau)\bigr)^{\Nlayer}.
\]
\end{proposition}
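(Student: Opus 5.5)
The plan mirrors the proofs of Propositions~\ref{prop:app_e2e_transformer_deep} and~\ref{prop:app_e2e_ssm_deep}. I start from the residual-calculus bound \eqref{eq:app_full_e2e_path_sum} of Theorem~\ref{thm:app_e2e_residual_calculus}. Fix $k\in\{1,\dots,\Nlayer\}$ and an ordered layer subset $\idxlayeri{1}<\cdots<\idxlayeri{k}$. The prefactors $\prod_{m\notin S}d_m$ and $\prod_r\lambda_{\idxlayeri{r}}a_{\idxlayeri{r}}$ are finite constants depending only on the layer data. There are at most $2^{\Nlayer}$ subsets, so it suffices to bound the temporal path sum
\[
P_k(\ell):=\sum_{\tau=i_0<\cdots<i_k=t}\ \prod_{r=1}^{k}(i_r-i_{r-1})^{-\betatail}\bigl(1+\log(1+i_r-i_{r-1})\bigr),\qquad \ell=t-\tau.
\]

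The key observation is that this sum depends only on the increments $n_r:=i_r-i_{r-1}\ge 1$, which range over the compositions of $\ell$ into $k$ positive parts. Hence $P_k(\ell)=g^{(*k)}(\ell)$ in the notation of Definition~\ref{def:positive_lag_convolution}, where $g(n):=n^{-\betatail}(1+\log(1+n))$; this is a translation-invariant convolution. Next I remove the logarithms. Every increment satisfies $n_r\le \ell$, so each logarithmic factor is bounded by $1+\log(1+\ell)$. This gives
\[
P_k(\ell)\le \bigl(1+\log(1+\ell)\bigr)^k\, f_{\betatail}^{(*k)}(\ell).
\]
Theorem~\ref{thm:heavy_tail_convolution_class} (upper half) then yields $f_{\betatail}^{(*k)}(\ell)\le C_{k,\betatail}\ell^{k(1-\betatail)-1}$ for $\ell\ge k$. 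For $\ell<k$ there are no compositions, so $P_k(\ell)=0$. Summing over $k$ and over the subsets gives the first displayed bound, with a constant depending on $\Nlayer$, $\betatail$ and the layer constants.

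For the ``in particular'' claim I compare the terms of the sum over $k$. Since $\ell\ge1$, $1-\betatail>0$ and $1+\log(1+\ell)\ge1$, each exponent $k(1-\betatail)-1$ is at most $\Nlayer(1-\betatail)-1$ and each log power is at most the $\Nlayer$-th. Therefore every term is dominated by $\ell^{\Nlayer(1-\betatail)-1}(1+\log(1+\ell))^{\Nlayer}$, and the finite sum over $k$ costs only a factor of $\Nlayer$.

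I do not expect a real obstacle. The one point needing care is that the log factors must be pulled out \emph{before} convolving, because Theorem~\ref{thm:heavy_tail_convolution_class} is stated only for pure powers. The crude bound $n_r\le\ell$ makes this legitimate at the cost of one $(1+\log(1+\ell))$ per jump, which is exactly the power $k$ in the statement. I also need the convention that $f^{(*k)}$ vanishes below $k$, so that short lags contribute nothing. The resulting bounds are uniform in $x$ and $\Tctx$, since the kernel envelope is.
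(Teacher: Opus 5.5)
Your proposal is correct and matches the paper's proof essentially step for step: expand via the residual calculus, bound each logarithmic factor by $1+\log(1+\ell)$ using $i_r-i_{r-1}\le\ell$, recognize the remaining path sum as $f_{\betatail}^{(*k)}(\ell)$ and apply Theorem~\ref{thm:heavy_tail_convolution_class}, then let the $k=\Nlayer$ term dominate. Your remark that $f_{\betatail}^{(*k)}(\ell)=0$ for $\ell<k$, which makes the theorem's restriction $n\ge k$ harmless, is a small point the paper leaves implicit.
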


\begin{proof}
Fix $\tau<t$ and write $\ell:=t-\tau$.
Fix an ordered layer subset
\[
1\le \idxlayeri{1}<\cdots<\idxlayeri{k}\le \Nlayer.
\]
For every temporal path $\tau=i_0<\cdots<i_k=t$, set
\[
m_r:=i_r-i_{r-1}\in\mathbb N^\ast.
\]
Then
\[
m_1+\cdots+m_k=\ell.
\]
Using the bound on $K_{\idxlayer}$,
\[
\prod_{r=1}^{k}K_{\idxlayeri{r}}(i_r,i_{r-1})
\le
\left(\prod_{r=1}^{k}a_{\idxlayeri{r}}\right)
\prod_{r=1}^{k}m_r^{-\betatail}\bigl(1+\log(1+m_r)\bigr).
\]
Since every $m_r\le \ell$, one has
\[
1+\log(1+m_r)\le 1+\log(1+\ell).
\]
Therefore
\[
\prod_{r=1}^{k}K_{\idxlayeri{r}}(i_r,i_{r-1})
\le
\left(\prod_{r=1}^{k}a_{\idxlayeri{r}}\right)
\bigl(1+\log(1+\ell)\bigr)^k
\prod_{r=1}^{k}m_r^{-\betatail}.
\]
Summing over all temporal paths from $\tau$ to $t$ gives
\begin{align*}
&\sum_{\tau=i_0<\cdots<i_k=t}
\ \prod_{r=1}^{k}\lambda_{\idxlayeri{r}}K_{\idxlayeri{r}}(i_r,i_{r-1}) \\
&\qquad\le
\left(\prod_{r=1}^{k}\lambda_{\idxlayeri{r}}a_{\idxlayeri{r}}\right)
\bigl(1+\log(1+\ell)\bigr)^k
\sum_{\substack{m_1,\dots,m_k\ge 1\\ m_1+\cdots+m_k=\ell}}
m_1^{-\betatail}\cdots m_k^{-\betatail}.
\end{align*}
The remaining sum is exactly the $k$-fold positive-lag convolution
\[
f_{\betatail}^{(*k)}(\ell),
\qquad
f_{\betatail}(n):=n^{-\betatail}.
\]
By Theorem~\ref{thm:heavy_tail_convolution_class},
\[
f_{\betatail}^{(*k)}(\ell)\lesssim_{k,\betatail}\ell^{k(1-\betatail)-1}.
\]
Hence
\[
\sum_{\tau=i_0<\cdots<i_k=t}
\ \prod_{r=1}^{k}\lambda_{\idxlayeri{r}}K_{\idxlayeri{r}}(i_r,i_{r-1})
\lesssim_{k,\betatail}
\left(\prod_{r=1}^{k}\lambda_{\idxlayeri{r}}a_{\idxlayeri{r}}\right)
\ell^{k(1-\betatail)-1}\bigl(1+\log(1+\ell)\bigr)^k.
\]
Insert this estimate into Theorem~\ref{thm:app_e2e_residual_calculus} and sum over
\[
k=1,\dots,\Nlayer.
\]
Since $\Nlayer$ is fixed, the finite sum yields the stated bound.

The final simplified estimate follows because, for $\betatail\in(0,1)$, the exponent
\[
k(1-\betatail)-1
\]
is increasing in $k$, so the $k=\Nlayer$ term dominates the smaller-$k$ terms up to a constant.
\end{proof}

\subsection{Horizon-uniform bounds}
\label{app:deep_e2e_uniform_horizon}

We now state the horizon-uniform version used in
Section~\ref{sec:theory_multilayer_e2e}.

\begin{theorem}[Horizon-uniform residual calculus]
\label{thm:app_e2e_residual_calculus_uniform}
Fix a depth \(\Nlayer\ge 1\).
For each horizon \(\Tctx\ge 1\), let
\[
h^{(0,\Tctx)}=x\in\mathcal X_0^{(\Tctx)},
\qquad
h^{(\idxlayer,\Tctx)} = F_{\idxlayer}^{(\Tctx)}\bigl(h^{(\idxlayer-1,\Tctx)}\bigr),
\qquad \idxlayer=1,\dots,\Nlayer,
\]
where \(\mathcal X_0^{(\Tctx)}\subset(\mathbb R^D)^{\Tctx}\) is compact and each
\(F_{\idxlayer}^{(\Tctx)}\) is causal and continuously differentiable on the relevant compact set
\[
\mathcal X_{\idxlayer-1}^{(\Tctx)}
:=
F_{\idxlayer-1}^{(\Tctx)}\circ\cdots\circ F_1^{(\Tctx)}(\mathcal X_0^{(\Tctx)}).
\]
Define the full end-to-end Jacobian blocks by
\[
J^{\mathrm{e2e},(\Nlayer)}_{t,\tau}(x;\Tctx)
:=
\frac{\partial h_t^{(\Nlayer,\Tctx)}(x)}{\partial h_\tau^{(0,\Tctx)}(x)}
\in\mathbb R^{D\times D},
\qquad 0\le \tau\le t\le \Tctx-1.
\]

Assume that for each layer \(\idxlayer\) there exist constants
\[
d_{\idxlayer}\ge 0,\qquad \lambda_{\idxlayer}\ge 0,
\]
independent of \(\Tctx\), and a scalar lower-triangular kernel
\[
K_{\idxlayer}:\{(t,\tau):0\le \tau<t<\infty\}\to[0,\infty)
\]
independent of \(\Tctx\), such that for every horizon \(\Tctx\ge 1\),
every \(u\in\mathcal X_{\idxlayer-1}^{(\Tctx)}\), and every
\(0\le \tau\le t\le \Tctx-1\),
\[
\left\|
\frac{\partial F_{\idxlayer,t}^{(\Tctx)}(u)}{\partial u_\tau}
\right\|
\le
d_{\idxlayer}\,\mathbf 1[t=\tau]
+
\lambda_{\idxlayer}\,K_{\idxlayer}(t,\tau)\,\mathbf 1[\tau<t].
\]
Then for every horizon \(\Tctx\ge 1\), every \(x\in\mathcal X_0^{(\Tctx)}\), and every
\(0\le \tau<t\le \Tctx-1\),
\begin{align}
\left\|
J^{\mathrm{e2e},(\Nlayer)}_{t,\tau}(x;\Tctx)
\right\|
&\le
\sum_{k=1}^{\Nlayer}
\ \sum_{1\le \idxlayeri{1}<\cdots<\idxlayeri{k}\le \Nlayer}
\left(\prod_{m\notin\{\idxlayeri{1},\dots,\idxlayeri{k}\}} d_m\right)
\notag\\
&\qquad\qquad\qquad\qquad\cdot
\sum_{\tau=i_0<i_1<\cdots<i_k=t}
\ \prod_{r=1}^{k}
\lambda_{\idxlayeri{r}}\,K_{\idxlayeri{r}}(i_r,i_{r-1}).
\label{eq:app_full_e2e_path_sum_uniform_horizon}
\end{align}
Moreover,
\[
\left\|
J^{\mathrm{e2e},(\Nlayer)}_{t,t}(x;\Tctx)
\right\|
\le
\prod_{\idxlayer=1}^{\Nlayer} d_{\idxlayer}.
\]
In particular, the right-hand side of
\eqref{eq:app_full_e2e_path_sum_uniform_horizon} is independent of \(\Tctx\).
\end{theorem}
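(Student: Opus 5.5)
The plan is to reduce the statement to Theorem~\ref{thm:app_e2e_residual_calculus} applied separately at each fixed horizon, and then note that the resulting bound involves only horizon-independent data.

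Fix $\Tctx\ge 1$ and apply Theorem~\ref{thm:app_e2e_residual_calculus} to the depth-$\Nlayer$ stack $F^{(\Tctx)}_1,\dots,F^{(\Tctx)}_{\Nlayer}$ on the compact set $\mathcal X_0^{(\Tctx)}$. Its hypotheses are met:
\begin{itemize}
\item causality and $C^1$ regularity on the compact sets $\mathcal X^{(\Tctx)}_{\idxlayer-1}$ are assumed;
\item the one-block envelope \eqref{eq:app_one_block_e2e_envelope} holds with the constants $d_{\idxlayer},\lambda_{\idxlayer}$;
\item the kernel to use is the restriction of the infinite kernel $K_{\idxlayer}$ to the finite index set $\{0\le\tau<t\le\Tctx-1\}$, which is again a nonnegative scalar lower-triangular kernel.
\end{itemize}

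That theorem then gives, for every $x\in\mathcal X_0^{(\Tctx)}$ and every $0\le\tau<t\le\Tctx-1$, exactly the path-sum bound \eqref{eq:app_full_e2e_path_sum_uniform_horizon}. It also gives the diagonal bound $\prod_{\idxlayer} d_{\idxlayer}$.

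The only point needing comment is the horizon independence of the right-hand side. That sum runs over:
\begin{itemize}
\item layer subsets $\idxlayeri{1}<\cdots<\idxlayeri{k}$, which are determined by $\Nlayer$ alone;
\item temporal paths $\tau=i_0<\cdots<i_k=t$, all of whose indices lie in $[\tau,t]$. For a fixed pair $(t,\tau)$ these paths are therefore the same for every $\Tctx>t$.
\end{itemize}
The summands are built from $d_m$, $\lambda_{\idxlayer}$, and values of $K_{\idxlayer}$, none of which depend on $\Tctx$. Consequently, for fixed $(t,\tau)$ the bound is a single number valid simultaneously for all horizons $\Tctx\ge t+1$. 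This justifies taking suprema over $\Tctx$, as is done in Corollary~\ref{cor:e2e_deep_decay_envelopes_uniform_horizon}.

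There is no real obstacle here. The one thing to check carefully is that the finite-horizon chain-rule induction in Theorem~\ref{thm:app_e2e_residual_calculus} only ever evaluates kernels at index pairs inside $[\tau,t]$. This holds because the intermediate index $j$ in the chain rule ranges over $\tau\le j\le t$ by causality. Given that, truncating the infinite kernel to the horizon introduces no boundary effects.
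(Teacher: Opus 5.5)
Your proposal is correct and follows the paper's proof: fix $\Tctx$, apply Theorem~\ref{thm:app_e2e_residual_calculus} to the horizon-$\Tctx$ stack with the same constants $d_{\idxlayer},\lambda_{\idxlayer}$ and the kernels $K_{\idxlayer}$ restricted to the horizon, then observe that the resulting path-sum bound has no $\Tctx$-dependence. Your extra check, that causality confines the chain-rule intermediate indices to $[\tau,t]$ so the truncation has no boundary effects, is a harmless point the paper leaves implicit.
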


\begin{proof}
Fix a horizon \(\Tctx\ge 1\).
Apply Theorem~\ref{thm:app_e2e_residual_calculus} to the horizon-\(\Tctx\) stack
\[
F_1^{(\Tctx)},\dots,F_{\Nlayer}^{(\Tctx)}
\]
on the compact input set \(\mathcal X_0^{(\Tctx)}\).
The hypotheses of Theorem~\ref{thm:app_e2e_residual_calculus} are satisfied with the same
layerwise constants \(d_{\idxlayer},\lambda_{\idxlayer}\) and the same kernels \(K_{\idxlayer}\),
because these are assumed to be independent of \(\Tctx\).
Therefore, for this fixed horizon \(\Tctx\), Theorem~\ref{thm:app_e2e_residual_calculus}
gives exactly the path-sum bound
\eqref{eq:app_full_e2e_path_sum_uniform_horizon} and the same diagonal estimate.

Since the displayed right-hand side contains no dependence on \(\Tctx\), the same bound
holds verbatim for every horizon \(\Tctx\ge 1\).
\end{proof}

\begin{corollary}[Horizon-uniform decay bounds]
\label{cor:app_e2e_deep_decay_uniform_horizon}
Assume the hypotheses of Theorem~\ref{thm:app_e2e_residual_calculus_uniform}.

\begin{enumerate}[label=(\roman*), leftmargin=*, nosep]
\item \textbf{Transformer.}
Assume that for each layer \(\idxlayer\) there exists \(a_{\idxlayer}>0\) such that
\[
K_{\idxlayer}(t,\tau)\le \frac{a_{\idxlayer}}{t+1},
\qquad \tau<t.
\]
Fix a bounded source family \(0\le \tau\le \tau_{\max}\).
Then
\[
\sup_{\Tctx\ge \tau_{\max}+\ell+1}
\ \sup_{0\le \tau\le \tau_{\max}}
\ \sup_{x\in\mathcal X_0^{(\Tctx)}}
\left\|
J^{\mathrm{e2e},(\Nlayer)}_{\tau+\ell,\tau}(x;\Tctx)
\right\|
\lesssim_{\tau_{\max},\Nlayer}
\frac{(\log(1+\ell))^{\Nlayer-1}}{1+\ell}.
\]

\item \textbf{Mamba.}
Assume that for each layer \(\idxlayer\) there exist \(a_{\idxlayer}>0\) and \(c_{\idxlayer}>0\) such that
\[
K_{\idxlayer}(t,\tau)\le a_{\idxlayer}e^{-c_{\idxlayer}(t-\tau)},
\qquad \tau<t.
\]
Set \(c_\ast:=\min_{\idxlayer}c_{\idxlayer}\).
Then
\[
\sup_{\Tctx\ge \ell+1}
\ \sup_{0\le \tau\le \Tctx-\ell-1}
\ \sup_{x\in\mathcal X_0^{(\Tctx)}}
\left\|
J^{\mathrm{e2e},(\Nlayer)}_{\tau+\ell,\tau}(x;\Tctx)
\right\|
\lesssim_{\Nlayer}
(1+\ell)^{\Nlayer-1}e^{-c_\ast\ell}.
\]

\item \textbf{Sessa.}
Assume that for each layer \(\idxlayer\) there exist \(a_{\idxlayer}>0\) and a common exponent
\(\betatail\in(0,1)\) such that
\[
K_{\idxlayer}(t,\tau)\le
a_{\idxlayer}(t-\tau)^{-\betatail}\bigl(1+\log(1+t-\tau)\bigr),
\qquad \tau<t.
\]
Then
\[
\sup_{\Tctx\ge \ell+1}
\ \sup_{0\le \tau\le \Tctx-\ell-1}
\ \sup_{x\in\mathcal X_0^{(\Tctx)}}
\left\|
J^{\mathrm{e2e},(\Nlayer)}_{\tau+\ell,\tau}(x;\Tctx)
\right\|
\lesssim_{\Nlayer,\betatail}
\sum_{k=1}^{\Nlayer}
\ell^{k(1-\betatail)-1}\bigl(1+\log(1+\ell)\bigr)^k.
\]
In particular, if \(\Nlayer(1-\betatail)<1\), then the right-hand side tends to \(0\) as \(\ell\to\infty\).
\end{enumerate}
\end{corollary}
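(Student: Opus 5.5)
The statement to be proved is Corollary~\ref{cor:app_e2e_deep_decay_uniform_horizon}. The plan is to reduce each item to the fixed-horizon model-specific propositions and observe that every constant produced there is independent of \(\Tctx\).

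\textbf{Reduction.} Fix \(\ell\ge 1\), a horizon \(\Tctx\), an admissible source \(\tau\) and an input \(x\in\mathcal X_0^{(\Tctx)}\). Set \(t=\tau+\ell\le \Tctx-1\). Theorem~\ref{thm:app_e2e_residual_calculus_uniform} gives the path-sum bound \eqref{eq:app_full_e2e_path_sum_uniform_horizon}. Its right-hand side depends only on \((d_m,\lambda_m,K_m)_{m\le\Nlayer}\) and on the pair \((t,\tau)\); it does not depend on \(\Tctx\) or on \(x\). So it suffices to bound this kernel expression by the stated envelope with constants depending only on \(\Nlayer\), \(\tau_{\max}\), \(\betatail\) and the layer constants. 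Taking suprema over \(x\), \(\tau\) and \(\Tctx\) then costs nothing.

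\textbf{Item (i), Transformer.} For each layer subset \(n_1<\cdots<n_k\), bound \(K_{n_r}\le a_{n_r}\mathcal H\). The path sum is then at most \(\prod_r\lambda_{n_r}a_{n_r}\,(\mathcal H^k)(t,\tau)\). Lemma~\ref{lem:app_nested_harmonic} gives
\[
(\mathcal H^k)(t,\tau)\le \frac{H_t^{k-1}}{(k-1)!\,(t+1)}.
\]
Summing over the finitely many subsets, and using \(H_t\lesssim \log(1+t)\) together with \((\log(1+t))^{k-1}\lesssim (\log(1+t))^{\Nlayer-1}+1\), yields a constant multiple of \((\log(1+t))^{\Nlayer-1}/(t+1)\). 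This is exactly the argument of Proposition~\ref{prop:app_e2e_transformer_deep}.

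The only care needed is the conversion to the lag. With \(0\le\tau\le\tau_{\max}\) and \(t=\tau+\ell\),
\[
1+\ell\le t+1\le (\tau_{\max}+1)(1+\ell)
\qquad\text{and}\qquad
\log(1+t)\le \log(1+\tau_{\max})+\log(1+\ell).
\]
Both estimates are uniform in \(\tau\), and \(1\lesssim_{\tau_{\max}}\log(1+\ell)\) for \(\ell\ge1\). Hence \((\log(1+t))^{\Nlayer-1}/(t+1)\lesssim_{\tau_{\max},\Nlayer}(\log(1+\ell))^{\Nlayer-1}/(1+\ell)\).

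\textbf{Item (ii), Mamba.} Along any temporal path the exponents add: \(\sum_r c_{n_r}(i_r-i_{r-1})\ge c_\ast\ell\). The number of paths of length \(k\) from \(\tau\) to \(t\) is \(\binom{\ell-1}{k-1}\lesssim_k(1+\ell)^{k-1}\). This reproduces Proposition~\ref{prop:app_e2e_ssm_deep}, and the resulting constant depends only on \(\Nlayer\) and the layer constants. Since the bound involves only \(\ell\), it is uniform over all \(\tau\le\Tctx-\ell-1\) and all \(\Tctx\).

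\textbf{Item (iii), Sessa.} Write the increments \(m_r=i_r-i_{r-1}\). Each logarithmic factor satisfies \(1+\log(1+m_r)\le 1+\log(1+\ell)\), so all of them can be pulled out. What remains is exactly \(f_{\betatail}^{(*k)}(\ell)\), and Theorem~\ref{thm:heavy_tail_convolution_class} bounds it by \(C_{k,\betatail}\,\ell^{k(1-\betatail)-1}\) for \(\ell\ge k\). For \(\ell<k\) no length-\(k\) path exists, so the convolution vanishes (its definition gives zero there), and the bound holds trivially for all \(\ell\ge1\). Summing over \(k\le\Nlayer\) gives the stated envelope.

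For the final claim, observe that if \(\Nlayer(1-\betatail)<1\) then every exponent \(k(1-\betatail)-1\) is negative for \(k\le\Nlayer\). A negative power of \(\ell\) beats any polylogarithmic factor, so each term tends to \(0\) as \(\ell\to\infty\).

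\textbf{Main obstacle.} The only step needing real attention is item (i): converting the query-time envelope \(1/(t+1)\) into a lag envelope. This conversion is where the restriction to the bounded source family \(\tau\le\tau_{\max}\) is essential, and where the \(\tau_{\max}\)-dependent constants arise. Items (ii) and (iii) are translation-invariant in \(\ell\) and need no such restriction.
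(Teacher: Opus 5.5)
Your proposal is correct and follows the same route as the paper. You apply the horizon-uniform path-sum bound of Theorem~\ref{thm:app_e2e_residual_calculus_uniform}, whose right-hand side is independent of \(\Tctx\), and then rerun the kernel-class estimates of Propositions~\ref{prop:app_e2e_transformer_deep}, \ref{prop:app_e2e_ssm_deep} and~\ref{prop:app_e2e_sessa_deep}. Your explicit lag conversion in item~(i) and your observation that \(f_{\betatail}^{(*k)}(\ell)=0\) for \(\ell<k\) in item~(iii) fill in details the paper leaves implicit.

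One small correction to your closing remark. The bound in item~(i) depends only on \(t\) through \((\log(1+t))^{\Nlayer-1}/(t+1)\), which is eventually decreasing in \(t\). So the item~(i) envelope in fact holds uniformly over all sources \(\tau\ge 0\), and the restriction \(\tau\le\tau_{\max}\) is a convenience of the argument rather than essential.
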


\begin{proof}
Apply Theorem~\ref{thm:app_e2e_residual_calculus_uniform} and then repeat exactly the
kernel-class estimates used in the proofs of
Propositions~\ref{prop:app_e2e_transformer_deep},
\ref{prop:app_e2e_ssm_deep}, and \ref{prop:app_e2e_sessa_deep}.
Because the layerwise envelope parameters are horizon-uniform, the resulting constants
are independent of \(\Tctx\).
Taking the indicated suprema over all admissible horizons therefore leaves the bounds unchanged.
For the Transformer case, the passage from \(t=\tau+\ell\) to \(1+\ell\) is uniform on bounded-source
families \(0\le \tau\le \tau_{\max}\).
For the Sessa case, the final asymptotic decay to \(0\) occurs exactly when the largest power
\[
\ell^{\Nlayer(1-\betatail)-1}
\]
has negative exponent, i.e.\ when \(\Nlayer(1-\betatail)<1\).
\end{proof}

\section{Universal approximation for Sessa with adapters}
\label{app:sessa_uat}

\subsection{Preliminaries and notation}

Fix $\Tctx\ge 3$ and $d_{\mathrm{ext}}\in\mathbb N^*$.
Inputs are
\[
x=(x_0,\dots,x_{\Tctx-1})\in(\mathbb R^{d_{\mathrm{ext}}})^{\Tctx}\cong \mathbb R^{\Tctx\times d_{\mathrm{ext}}},
\]
and outputs are in $\mathbb R^{\Tctx\times d_{\mathrm{ext}}}$.
For $X\in\mathbb R^{\Tctx\times d_{\mathrm{ext}}}$ define
\[
\|X\|_F^2=\sum_{t=0}^{\Tctx-1}\|X_t\|_2^2.
\]

Let $\mathcal D\subset \mathbb R^{\Tctx\times d_{\mathrm{ext}}}$ be compact and
\[
\Mdom:=\sup_{x\in\mathcal D}\|x\|_F <\infty.
\]
Hence $\|x_t\|_2\le \Mdom$ for all $x\in\mathcal D$ and all $t$.

\begin{definition}[Causality]
$F:\mathcal D\to \mathbb R^{\Tctx\times d_{\mathrm{ext}}}$ is causal if for every $t$ and all $x,x'\in\mathcal D$,
$x_{0:t}=x'_{0:t}$ implies $F(x)_t=F(x')_t$.
\end{definition}

\begin{lemma}[Prefix factorization of continuous causal maps]
\label{lem:causal_prefix_factorization}
Let
\[
\mathcal D\subset \mathbb R^{\Tctx\times d_{\mathrm{ext}}}
\]
be compact and let
\[
F:\mathcal D\to\mathbb R^{\Tctx\times d_{\mathrm{ext}}}
\]
be continuous and causal.
For each \(t\in\{0,\dots,\Tctx-1\}\), define
\[
p_t:\mathcal D\to (\mathbb R^{d_{\mathrm{ext}}})^{t+1},
\qquad
p_t(x):=x_{0:t},
\]
and
\[
\Pref_t:=p_t(\mathcal D).
\]
Then there exists a unique continuous map
\[
\widehat F_t:\Pref_t\to\mathbb R^{d_{\mathrm{ext}}}
\]
such that
\[
\widehat F_t(x_{0:t})=F(x)_t
\qquad \forall x\in\mathcal D.
\]
\end{lemma}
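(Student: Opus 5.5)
The plan is to define $\widehat F_t$ directly on the prefix set and then prove well-definedness, the factorization identity, uniqueness and continuity in that order. For $z\in\Pref_t$, pick any $x\in\mathcal D$ with $p_t(x)=z$ and set $\widehat F_t(z):=F(x)_t$. This is well defined by causality: if $x,x'\in\mathcal D$ satisfy $x_{0:t}=x'_{0:t}=z$, then $F(x)_t=F(x')_t$. By construction $\widehat F_t(x_{0:t})=F(x)_t$ for all $x\in\mathcal D$. Uniqueness is immediate, since $p_t$ maps $\mathcal D$ onto $\Pref_t$, so any map satisfying the identity is pinned down at every point of $\Pref_t$.

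The main step is continuity, and I will use compactness for it. The projection $p_t$ is continuous, so $\Pref_t=p_t(\mathcal D)$ is compact. I will argue sequentially. Let $z_n\to z$ in $\Pref_t$, and suppose for contradiction that $\widehat F_t(z_n)\not\to\widehat F_t(z)$. Then there are $\varepsilon>0$ and a subsequence with $\|\widehat F_t(z_n)-\widehat F_t(z)\|\ge\varepsilon$. For each $n$, choose $x^{(n)}\in\mathcal D$ with $p_t(x^{(n)})=z_n$. Since $\mathcal D$ is compact, a further subsequence converges to some $x\in\mathcal D$. Continuity of $p_t$ gives $p_t(x)=\lim z_n=z$, and continuity of $F$ gives $F(x^{(n)})_t\to F(x)_t$. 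Therefore
\[
\widehat F_t(z_n)=F(x^{(n)})_t\to F(x)_t=\widehat F_t(z),
\]
which contradicts the $\varepsilon$-separation.

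An equivalent route is to note that $p_t$ is a closed continuous surjection between compact Hausdorff spaces, hence a quotient map. Then $\widehat F_t$ is continuous because $\widehat F_t\circ p_t=F(\cdot)_t$ is continuous.

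The only delicate point is the continuity step, where the compactness of $\mathcal D$ is essential. Without it, fibers could escape to infinity and $\widehat F_t$ could fail to be continuous. The subsequence argument above handles this directly.
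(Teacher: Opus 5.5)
Your proof is correct and matches the paper's: well-definedness from causality, uniqueness from surjectivity of $p_t$ onto $\Pref_t$, and continuity from compactness of $\mathcal D$. The paper proves continuity by your second (closed-map/quotient) route, showing that $\widehat F_t^{-1}(C)=p_t\bigl(\{x\in\mathcal D:\ F(x)_t\in C\}\bigr)$ is compact, hence closed, for every closed $C$; your subsequence argument is the equivalent sequential form of the same compactness step.
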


\begin{proof}
Uniqueness is immediate because \(p_t\) is surjective onto \(\Pref_t\).

Causality ensures that \(\widehat F_t\) is well defined: if \(p_t(x)=p_t(x')\), then \(x_{0:t}=x'_{0:t}\), hence
\[
F(x)_t=F(x')_t.
\]

Let
\[
\operatorname{pr}_t:\mathbb R^{\Tctx\times d_{\mathrm{ext}}}\to\mathbb R^{d_{\mathrm{ext}}},
\qquad
\operatorname{pr}_t(y):=y_t,
\]
and define
\[
g_t:=\operatorname{pr}_t\circ F:\mathcal D\to\mathbb R^{d_{\mathrm{ext}}}.
\]
Then
\[
g_t=\widehat F_t\circ p_t.
\]

Let \(C\subset\mathbb R^{d_{\mathrm{ext}}}\) be closed.
Since \(g_t\) is continuous, \(g_t^{-1}(C)\) is closed in the compact set \(\mathcal D\), hence compact.
Applying \(p_t\), the image
\[
p_t\big(g_t^{-1}(C)\big)
\]
is compact in \(\Pref_t\), hence closed because \(\Pref_t\) is Hausdorff.
Moreover,
\[
\widehat F_t^{-1}(C)=p_t\big(g_t^{-1}(C)\big).
\]
Therefore \(\widehat F_t\) is continuous.
\end{proof}

\subsection{Architecture and function classes}

\paragraph{Sessa blocks of width $m$}

Fix an even query--key width $d_k\in 2\mathbb N$, a model width $m\in\mathbb N^*$,
and a tokenwise pre-normalization map
\[
\Norm:\mathbb R^m\to\mathbb R^m
\]
applied independently to each token.
We consider two choices:
\[
\Norm=\Id
\qquad\text{and}\qquad
\Norm=\LN_{\varepsilon_{\ln}}\ \ (\varepsilon_{\ln}>0).
\]

A width-$m$ Sessa block is the block of Section~\ref{sec:model_arch} specialized to model width $m$,
and we use the following RoPE convention throughout this section.

Write every $z\in\mathbb R^{d_k}$ as
\[
z=(z^{(0)},z^{(1)},\dots,z^{(d_k/2-1)}),
\qquad z^{(r)}\in\mathbb R^2.
\]
Fix a RoPE base $\vartheta>1$ and define the standard pairwise frequencies
\[
\omega_r:=\vartheta^{-2r/d_k},
\qquad r=0,\dots,d_k/2-1.
\]
In particular,
\[
\omega_0=1.
\]
For every $\tau\in\mathbb R$ define
\[
\mathrm{RoPE}_\tau(z)
:=
\big(
R_{\omega_0 \tau}z^{(0)},
R_{\omega_1 \tau}z^{(1)},
\dots,
R_{\omega_{d_k/2-1} \tau}z^{(d_k/2-1)}
\big),
\]
where $R_\theta$ denotes the planar rotation by angle $\theta$.
In the architecture, $\tau=t\in\{0,\dots,\Tctx-1\}$; in the constructions below we also allow shifts such as $\tau=-\ell$.
All diagonalization arguments use only the first rotary pair.
Hence, whenever $q,k\in\mathbb R^{d_k}$ are supported on that first pair,
\[
\langle \mathrm{RoPE}_t(q),\mathrm{RoPE}_j(k)\rangle
=
\langle R_t q_{1:2},\,R_j k_{1:2}\rangle.
\]
The comparison RoPE-Transformer class uses the same convention.
\paragraph{Parameters and dimensions}
\[
W^{\mathrm{in}}\in\mathbb R^{m\times 2m},\qquad
b^{\mathrm{in}}\in\mathbb R^{2m},\qquad
W^{\mathrm{out}}\in\mathbb R^{m\times m},\qquad
b^{\mathrm{out}}\in\mathbb R^{m},
\]
\[
W_{Qf},W_{Kf},W_{Qb},W_{Kb}\in\mathbb R^{m\times d_k},\qquad
W_V\in\mathbb R^{m\times m},
\]
\[
w^\gamma\in\mathbb R^m,\qquad b^\gamma\in\mathbb R.
\]

\paragraph{Tokenwise preprocessing}
Given $x\in\mathbb R^{\Tctx\times m}$:
\begin{align*}
\xnorm_t &= \Norm(x_t)\in\mathbb R^m,\\
u_t &= \xnorm_t W^{\mathrm{in}} + b^{\mathrm{in}} \in\mathbb R^{2m},\\
u_t &= (a_t,g_t),\qquad a_t,g_t\in\mathbb R^m,\\
\bar a_t &= \mathrm{GELU}(a_t)\in\mathbb R^m.
\end{align*}

\paragraph{Attention-feedback operator}
We fix the attention scale to
\[
\sigk:=d_k^{-1/2}.
\]
Define
\[
q^f_t=\bar a_t W_{Qf},\qquad
k^f_t=\bar a_t W_{Kf},\qquad
v_t=\bar a_t W_V,\qquad
q^b_t=\bar a_t W_{Qb},\qquad
k^b_t=\bar a_t W_{Kb},
\]
with
\[
q^f_t,k^f_t,q^b_t,k^b_t\in\mathbb R^{d_k},
\qquad
v_t\in\mathbb R^m.
\]

For the causal forward branch $(j\le t)$, define
\[
\tilde q^f_t=\mathrm{RoPE}_t(q^f_t),
\qquad
\tilde k^f_j=\mathrm{RoPE}_j(k^f_j),
\]
and define
\[
\alphadrv_{t,j}
=
\frac{
\exp\!\Big(\sigk\langle \tilde q^f_t,\tilde k^f_j\rangle\Big)\mathbf 1[j\le t]
}{
\sum_{\tau\le t}\exp\!\Big(\sigk\langle \tilde q^f_t,\tilde k^f_\tau\rangle\Big)
},
\qquad
f_t=\sum_{j\le t}\alphadrv_{t,j}v_j.
\]

For the strictly lower feedback branch $(j<t)$, define
\[
\alphafb_{t,j}
=
\frac{
\exp\!\Big(\sigk\langle q^b_t,k^b_j\rangle\Big)\mathbf 1[j<t]
}{
\sum_{\tau<t}\exp\!\Big(\sigk\langle q^b_t,k^b_\tau\rangle\Big)
},
\qquad
\alphafb_{0,\cdot}=0.
\]
\[
\gamma_t=\tanh\!\big(\langle \bar a_t,w^\gamma\rangle+b^\gamma\big)\in(-1,1).
\]
\[
\BfbEnt{t}{j}=\gamma_t\alphafb_{t,j},\qquad \BfbEnt{t}{j}=0\ \text{for }j\ge t.
\]

The mixer output is defined by the exact solve
\[
(I-\Bfb)s=f.
\]
Since $\Bfb$ is strictly lower triangular, the system has a unique solution.

\paragraph{Residual update}
\[
y_t=x_t+\big((s_t\odot g_t)W^{\mathrm{out}}+b^{\mathrm{out}}\big).
\]

\paragraph{Function classes}
Let
\[
\mathrm{ConcreteSessaBlocks}_{\Norm}(d_k,m)
\]
denote the set of all width-$m$ concrete Sessa blocks above with the chosen pre-normalization map $\Norm$.
Define
\[
\Omega_{\mathrm{Sessa},\Norm}^{d_k}(m)
:=
\Big\{
G_{\Nlayer}\circ\cdots\circ G_1:\ 
G_{\idxlayer}\in \mathrm{ConcreteSessaBlocks}_{\Norm}(d_k,m)\ \text{for all }\idxlayer,\ 
\Nlayer\in\mathbb N^*
\Big\}.
\]

\paragraph{Tokenwise input and output adapters}\label{app:uat_adapters}
Fix the external data dimension $d_{\mathrm{ext}}$ and a model width $m\ge d_{\mathrm{ext}}$.
Define tokenwise affine adapters
\[
\mathrm{Embed}(x)_t := x_t W^{\mathrm{emb}} + b^{\mathrm{emb}}\in\mathbb R^{m},
\qquad
\mathrm{Unembed}(h)_t := h_t W^{\mathrm{un}} + b^{\mathrm{un}}\in\mathbb R^{d_{\mathrm{ext}}}.
\]

\paragraph{Parameters and dimensions}
\[
W^{\mathrm{emb}}\in\mathbb R^{d_{\mathrm{ext}}\times m},\quad b^{\mathrm{emb}}\in\mathbb R^{m},\qquad
W^{\mathrm{un}}\in\mathbb R^{m\times d_{\mathrm{ext}}},\quad b^{\mathrm{un}}\in\mathbb R^{d_{\mathrm{ext}}}.
\]

\[
\mathrm{Unembed}\circ \mathrm{Embed}=\Id
\qquad\text{on }\mathbb R^{\Tctx\times d_{\mathrm{ext}}}.
\]

We consider Sessa networks of the form
\[
x\ \mapsto\ \mathrm{Unembed}\big( G(\mathrm{Embed}(x))\big),
\]
with
\[
G\in\Omega_{\mathrm{Sessa},\Id}^{d_k}(m)
\]
in the main LN-free theorem, and
\[
G\in\Omega_{\mathrm{Sessa},\LN_{\varepsilon_{\ln}}}^{d_k}(m)
\]
in the LayerNorm extension.

\paragraph{Causal RoPE-Transformer class}

We also define a causal decoder-only RoPE-Transformer class of functions from
$\mathbb R^{\Tctx\times d_{\mathrm{ext}}}\to \mathbb R^{\Tctx\times d_{\mathrm{ext}}}$,
with internal model width $m$ and adapters.

A width-$m$ RoPE-Transformer block is a standard decoder block operating on $\mathbb R^{\Tctx\times m}$:
it consists of causal self-attention with $j\le t$, RoPE applied to queries and keys in the logits, and fixed scale $\sigk=d_k^{-1/2}$, together with a tokenwise FFN of hidden width $r$ and residual connections in $\mathbb R^m$.
An absolute positional embedding $E\in\mathbb R^{\Tctx\times m}$ is added once at the network input.
Let $\Omega^{H,d_k,r}_{\mathrm{RoPETr,cau}}(m)$ be the set of finite compositions of such blocks on $\mathbb R^{\Tctx\times m}$.

Finally define the adapted function class
\[
\Omega^{H,d_k,r}_{\mathrm{RoPETr,cau}}(d_{\mathrm{ext}}\to m\to d_{\mathrm{ext}})
:=
\Big\{
x\mapsto \mathrm{Unembed}\big(\widetilde g(\mathrm{Embed}(x)+E)\big)
:\ \widetilde g\in \Omega^{H,d_k,r}_{\mathrm{RoPETr,cau}}(m),\ E\in\mathbb R^{\Tctx\times m}
\Big\}.
\]

\subsection{Softmax lemmas}

\begin{lemma}[Softmax concentration]\label{lem:softmax_onehot}
Let $v\in\mathbb R^n$ and let $i^*=\arg\max_i v_i$ be unique.
Let $\Delta=v_{i^*}-\max_{i\ne i^*}v_i>0$ and fix $\delta\in(0,1)$.
For $\sigk>0$, define $p=\mathrm{softmax}(\sigk v)$.
\[
p_{i^*}\ge 1-\delta
\quad\text{whenever}\quad
\sigk\Delta \ge \log\frac{n-1}{\delta}.
\]
\end{lemma}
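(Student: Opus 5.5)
The plan is to bound the mass on the non-maximal indices directly, with no asymptotics. With $p=\mathrm{softmax}(\sigk v)$, the complement of $p_{i^*}$ is
\[
1-p_{i^*}=\frac{\sum_{i\ne i^*}e^{\sigk v_i}}{e^{\sigk v_{i^*}}+\sum_{i\ne i^*}e^{\sigk v_i}}
\;\le\;
\sum_{i\ne i^*}e^{\sigk (v_i-v_{i^*})}.
\]
The inequality holds because dropping the competitor terms from the denominator only enlarges the fraction. This bound is the single estimate the argument needs.

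Next, the margin definition gives $v_i-v_{i^*}\le-\Delta$ for every $i\ne i^*$. Since $\sigk>0$, each summand is at most $e^{-\sigk\Delta}$, and there are exactly $n-1$ of them. So $1-p_{i^*}\le (n-1)e^{-\sigk\Delta}$.

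Finally, we impose the hypothesis $\sigk\Delta\ge\log\frac{n-1}{\delta}$. Exponentiating gives $e^{-\sigk\Delta}\le \delta/(n-1)$, so $1-p_{i^*}\le\delta$, which is the claim.

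The degenerate case $n=1$ needs a separate line. There the logarithm is $\log 0$ and is not meaningful, but $p_{i^*}=1$ trivially, so the claim still holds and should be recorded directly. No step here is a real obstacle. The only care points are the sign convention, which needs $\sigk>0$, and this $n=1$ edge case.
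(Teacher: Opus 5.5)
Your proof is correct and is essentially the same as the paper's. The paper also replaces the full denominator by $e^{\sigma_k v_{i^*}}$ and bounds each competitor term by $e^{\sigma_k(v_{i^*}-\Delta)}$, which gives $1-p_{i^*}\le (n-1)e^{-\sigma_k\Delta}\le\delta$. Your separate treatment of $n=1$ is a harmless extra that the paper leaves implicit.
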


\begin{proof}
\[
1-p_{i^*}
=
\frac{\sum_{i\ne i^*}e^{\sigk v_i}}{\sum_{i}e^{\sigk v_i}}
\le
\frac{(n-1)e^{\sigk(v_{i^*}-\Delta)}}{e^{\sigk v_{i^*}}}
=
(n-1)e^{-\sigk\Delta}.
\]
Thus $1-p_{i^*}\le\delta$ if $\sigk\Delta\ge \log\frac{n-1}{\delta}$.
\end{proof}

\begin{corollary}[Sharpening at fixed attention scale]
\label{cor:softmax_onehot_fixed_scale}
Let $v\in\mathbb R^n$ and let $i^*=\arg\max_i v_i$ be unique.
Let $\Delta=v_{i^*}-\max_{i\ne i^*}v_i>0$, fix $\delta\in(0,1)$, and fix the attention scale
$\sigk>0$.
For $c>0$, define
\[
p^{(c)}:=\mathrm{softmax}(\sigk\,c^2 v).
\]
Then
\[
p^{(c)}_{i^*}\ge 1-\delta
\quad\text{whenever}\quad
\sigk\,c^2\Delta \ge \log\frac{n-1}{\delta}.
\]
Thus, in the concrete architecture where $\sigk=d_k^{-1/2}$ is fixed,
arbitrarily sharp softmax rows are obtained by scaling the query and key vectors by a common factor $c$.
\end{corollary}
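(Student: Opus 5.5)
The claim is an immediate consequence of Lemma~\ref{lem:softmax_onehot}, applied to the rescaled logit vector. The plan is to absorb the factor $c^2$ into the logits and then quote that lemma verbatim.

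Set $w:=c^2 v\in\mathbb R^n$. Since $c^2>0$, multiplication by $c^2$ is strictly order-preserving, so $i^*$ is still the unique maximizer of $w$. Its gap is
\[
\Delta_w = w_{i^*}-\max_{i\ne i^*} w_i = c^2\Delta>0 .
\]
By definition $p^{(c)}=\mathrm{softmax}(\sigk w)$, so Lemma~\ref{lem:softmax_onehot} with logits $w$, gap $\Delta_w$ and the same $\delta$ gives $p^{(c)}_{i^*}\ge 1-\delta$ whenever $\sigk\Delta_w=\sigk c^2\Delta\ge\log\frac{n-1}{\delta}$. This is exactly the stated condition.

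For the architectural remark, I use bilinearity of the logit. Replacing $q\mapsto cq$ and $k_j\mapsto ck_j$ gives
\[
\langle cq,ck_j\rangle = c^2\langle q,k_j\rangle .
\]
If RoPE is present, it is linear in each argument, so
\[
\langle \mathrm{RoPE}_t(cq),\mathrm{RoPE}_j(ck_j)\rangle = c^2\langle \mathrm{RoPE}_t(q),\mathrm{RoPE}_j(k_j)\rangle .
\]
Scaling the query and key vectors (equivalently, the projection matrices $W_Q,W_K$) by a common factor $c$ therefore produces precisely the rows $p^{(c)}$ with $\sigk=d_k^{-1/2}$ held fixed. Choosing
\[
c^2 \ge \frac{\log((n-1)/\delta)}{\sigk\Delta}
\]
makes the row put mass at least $1-\delta$ on $i^*$, for any $\delta\in(0,1)$.

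There is no real obstacle here. The only points to check are that $c^2>0$ keeps the argmax unique and scales the gap linearly, and that both the plain and the RoPE-rotated logits are bilinear in the pair $(q,k)$.
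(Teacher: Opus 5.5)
Your proposal is correct and matches the paper's proof, which simply applies Lemma~\ref{lem:softmax_onehot} to the rescaled logits $c^2 v$; that rescaling keeps $i^*$ the unique maximizer and multiplies the gap by $c^2$. Your bilinearity argument for the architectural remark, that RoPE rotations are linear so scaling $q$ and $k$ by $c$ scales the logits by $c^2$, spells out what the paper leaves implicit.
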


\begin{proof}
Apply Lemma~\ref{lem:softmax_onehot} to the logits $c^2 v$.
\end{proof}

\begin{lemma}[Error of an almost one-hot mixture]\label{lem:convex_error}
Let $(w_j)_{j\in J}\subset\mathbb R^m$ and let $p_j\ge 0$, $\sum_{j\in J}p_j=1$.
If $p_{j^*}\ge 1-\delta$ then
\[
\Big\|\sum_{j\in J}p_j w_j - w_{j^*}\Big\|_2
\le
2\delta\cdot \Vmax,
\]
where $\Vmax := \max_{j\in J}\|w_j\|_2$.
\end{lemma}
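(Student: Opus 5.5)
The proof is a direct computation: write the mixture error as a weighted sum of deviations from $w_{j^*}$, then bound it with the triangle inequality. Since $\sum_{j\in J}p_j=1$, we can write
\[
\sum_{j\in J}p_j w_j - w_{j^*}
=
\sum_{j\in J}p_j\,(w_j-w_{j^*})
=
\sum_{j\ne j^*}p_j\,(w_j-w_{j^*}).
\]
The $j=j^*$ term vanishes, so only the off-target mass contributes. This reduction is the only structural step.

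Next, apply the triangle inequality and the nonnegativity of the weights $p_j$:
\[
\Big\|\sum_{j\ne j^*}p_j\,(w_j-w_{j^*})\Big\|_2
\le
\sum_{j\ne j^*}p_j\,\|w_j-w_{j^*}\|_2
\le
\Big(\sum_{j\ne j^*}p_j\Big)\max_{j\ne j^*}\|w_j-w_{j^*}\|_2 .
\]
For the first factor, use $\sum_{j\ne j^*}p_j=1-p_{j^*}\le\delta$. For the second, use $\|w_j-w_{j^*}\|_2\le\|w_j\|_2+\|w_{j^*}\|_2\le 2\Vmax$. Multiplying the two bounds gives $2\delta\Vmax$, which is the claim.

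There is no real obstacle here. The only point that needs care is to subtract $w_{j^*}$ inside the convex combination before bounding. Bounding $\|\sum_j p_j w_j\|_2$ and $\|w_{j^*}\|_2$ separately would lose the factor $\delta$ and give only $2\Vmax$. The degenerate case $|J|=1$ is trivial, since the error is then identically zero.
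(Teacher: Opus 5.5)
Your proof is correct and is essentially the paper's argument: your rewriting $\sum_{j\ne j^*}p_j(w_j-w_{j^*})$ is algebraically identical to the paper's split $(p_{j^*}-1)w_{j^*}+\sum_{j\ne j^*}p_jw_j$, and both arguments then apply the triangle inequality together with $\sum_{j\ne j^*}p_j=1-p_{j^*}\le\delta$. The only difference is cosmetic: you bound $\|w_j-w_{j^*}\|_2\le 2\Vmax$ inside the sum, while the paper bounds each of its two pieces by $\delta\Vmax$; your grouping also yields the slightly sharper intermediate bound $\delta\max_{j\ne j^*}\|w_j-w_{j^*}\|_2$.
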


\begin{proof}
\[
\sum_{j}p_j w_j - w_{j^*}
=
(p_{j^*}-1)w_{j^*} + \sum_{j\ne j^*}p_j w_j.
\]
Since $\sum_{j\ne j^*}p_j=1-p_{j^*}\le\delta$,
\[
\Big\|\sum_j p_j w_j - w_{j^*}\Big\|_2
\le
|1-p_{j^*}|\|w_{j^*}\|_2 + \sum_{j\ne j^*}p_j\|w_j\|_2
\le 2\delta\,\Vmax,
\]
where $\Vmax:=\max_j\|w_j\|_2$.
\end{proof}

\subsection{RoPE diagonalization and triangular solve}

\begin{lemma}[RoPE-diagonalization]\label{lem:rope_diag}
Fix $\Tctx\ge 2$ and an even query--key width $d_k\in 2\mathbb N$.
For any $\delta\in(0,1)$ there exists a parameter choice with one head and this $d_k$
such that for all $t$,
\[
\alphadrv_{t,t}\ge 1-\delta,\qquad \sum_{\substack{j\le t\\ j\ne t}}\alphadrv_{t,j}\le \delta.
\]
At the architectural scale $\sigk=d_k^{-1/2}$, it suffices to scale the
active query/key pair by a common factor $c_{\mathrm{diag}}>0$ such that
\[
\sigk\,c_{\mathrm{diag}}^{2}\Delta_{\Tctx} \ge \log\frac{\Tctx-1}{\delta},
\qquad
\Delta_{\Tctx}:=1-\max_{s\in\{1,\dots,\Tctx-1\}}\cos(s) \ >\ 0.
\]
\end{lemma}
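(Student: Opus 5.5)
We will exhibit an explicit parameter choice that makes the forward attention logits depend only on the lag through a cosine. Set $W_{Kf}$ and $W_{Qf}$ so that, for every token, the forward query and key are the same constant vector $c_{\mathrm{diag}}e_1\in\mathbb R^{d_k}$, supported on the first rotary pair. One way to get these constants independently of the input is to make the relevant row of $\bar a_t$ constant: choose $W^{\mathrm{in}}$ with a zero column and a bias $b>0$ in one coordinate of $a_t$, so that $\bar a_t$ has the constant entry $\mathrm{GELU}(b)>0$ there. Then let $W_{Qf},W_{Kf}$ read only that coordinate and rescale so that $q^f_t=k^f_j=c_{\mathrm{diag}}e_1$. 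The other blocks of the layer are irrelevant to the statement.

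With this choice we use the RoPE convention fixed above, with $\omega_0=1$. Only the first pair is active, so
\[
\langle \mathrm{RoPE}_t(q^f_t),\mathrm{RoPE}_j(k^f_j)\rangle=c_{\mathrm{diag}}^2\langle R_t e_1,R_j e_1\rangle=c_{\mathrm{diag}}^2\cos(t-j).
\]
For row $t$ the logits over $j\le t$ are therefore $\sigk c_{\mathrm{diag}}^2\cos(s)$ with $s=t-j\in\{0,\dots,t\}$. The maximum is attained uniquely at $s=0$, that is, at $j=t$. The gap to every other entry is at least $\Delta_{\Tctx}=1-\max_{1\le s\le \Tctx-1}\cos(s)$.

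The only nontrivial point is $\Delta_{\Tctx}>0$, which amounts to $\cos(s)\neq 1$ for every integer $1\le s\le \Tctx-1$. If $\cos s=1$ held, then $s$ would be a nonzero integer multiple of $2\pi$, contradicting the irrationality of $\pi$. Because the set of $s$ is finite, the maximum is strictly below $1$.

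Now apply Corollary~\ref{cor:softmax_onehot_fixed_scale}, or Lemma~\ref{lem:softmax_onehot}, to row $t$. The row has $n=t+1\le \Tctx$ entries and gap at least $\Delta_{\Tctx}$. This gives $\alphadrv_{t,t}\ge 1-\delta$ as soon as $\sigk c_{\mathrm{diag}}^2\Delta_{\Tctx}\ge\log\frac{t}{\delta}$, and that follows from the stated condition because $t\le \Tctx-1$. For $t=0$ the row is trivially one-hot, and the lemma's $\log 0$ degeneracy is handled by treating that row separately. The complementary bound $\sum_{j\ne t}\alphadrv_{t,j}=1-\alphadrv_{t,t}\le\delta$ follows because each row sums to one.

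No step is a real obstacle. The points needing care are ensuring that the query and key are genuinely input-independent, which holds because GELU of a constant bias is constant, and the irrationality argument for $\Delta_{\Tctx}>0$.
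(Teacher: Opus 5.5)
Your proposal is correct and follows essentially the same route as the paper: activate only the first rotary pair with constant query and key $c_{\mathrm{diag}}e_1$, obtain logits $c_{\mathrm{diag}}^2\cos(t-j)$ with gap at least $\Delta_{\Tctx}$ at $j=t$, and invoke Corollary~\ref{cor:softmax_onehot_fixed_scale}. The extra details you supply are all sound and only make explicit what the paper leaves implicit: the irrationality of $\pi$ for $\Delta_{\Tctx}>0$, the constant-bias realization through GELU, the row-size bound $t\le \Tctx-1$, and the trivial $t=0$ row.
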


\begin{proof}
Under the RoPE convention above, $\mathrm{RoPE}_t$ acts pairwise on consecutive
$2$-dimensional coordinates with frequencies $(\omega_r)_{r=0}^{d_k/2-1}$ and
\[
\omega_0=1.
\]
Activate only the first $2$-dimensional pair by choosing
\[
q_0=(1,0,0,\dots,0)\in\mathbb R^{d_k},\qquad
k_0=(1,0,0,\dots,0)\in\mathbb R^{d_k},
\]
and then setting
\[
q=c_{\mathrm{diag}}q_0,\qquad
k=c_{\mathrm{diag}}k_0.
\]
With RoPE, $\tilde q_t=\mathrm{RoPE}_t(q)$ and $\tilde k_j=\mathrm{RoPE}_j(k)$ satisfy
\[
\langle \tilde q_t,\tilde k_j\rangle=c_{\mathrm{diag}}^2\cos(t-j),
\]
since all coordinate pairs except the first are identically zero, and the first pair rotates with
frequency $\omega_0=1$.
For fixed $t$ and $j\le t$, the unique maximum equals $c_{\mathrm{diag}}^2$ at $j=t$.
For $j\ne t$, $s=t-j\in\{1,\dots,\Tctx-1\}$ so $\cos(s)\le 1-\Delta_{\Tctx}$.
Hence the logit gap is at least $c_{\mathrm{diag}}^2\Delta_{\Tctx}$.
Apply Corollary~\ref{cor:softmax_onehot_fixed_scale}.
\end{proof}

\begin{lemma}[Mixing error under diagonalization]\label{lem:mixing_error}
Assume $\|v_j\|_2\le \Vmax$. If $\alphadrv_{t,t}\ge 1-\delta$, then
\[
\Big\|\sum_{j\le t}\alphadrv_{t,j}v_j - v_t\Big\|_2 \le 2\delta \Vmax,
\qquad
\|f-v\|_F \le 2\delta \Vmax\sqrt{\Tctx}.
\]
\end{lemma}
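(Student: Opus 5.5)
The pointwise bound comes from Lemma~\ref{lem:convex_error}; the Frobenius bound then follows by summing over tokens.

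\emph{Pointwise bound.} Fix $t$. The forward weights $(\alphadrv_{t,j})_{j\le t}$ are a convex distribution over $J=\{0,\dots,t\}$, since they come from a causal softmax. By hypothesis $\alphadrv_{t,t}\ge 1-\delta$. Apply Lemma~\ref{lem:convex_error} with
\[
w_j=v_j,\qquad p_j=\alphadrv_{t,j},\qquad j^*=t .
\]
This gives
\[
\Big\|\sum_{j\le t}\alphadrv_{t,j}v_j-v_t\Big\|_2\le 2\delta\max_{j\le t}\|v_j\|_2\le 2\delta\Vmax ,
\]
where the last step uses the assumed uniform bound $\|v_j\|_2\le\Vmax$.

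\emph{Frobenius bound.} By definition $f_t=\sum_{j\le t}\alphadrv_{t,j}v_j$, so the pointwise bound reads $\|f_t-v_t\|_2\le 2\delta\Vmax$. This holds for every $t$, provided the diagonal condition $\alphadrv_{t,t}\ge1-\delta$ holds at every $t$. That is the intended reading of the hypothesis, and it is what Lemma~\ref{lem:rope_diag} supplies uniformly in $t$. Hence
\[
\|f-v\|_F^2=\sum_{t=0}^{\Tctx-1}\|f_t-v_t\|_2^2\le \Tctx(2\delta\Vmax)^2 ,
\]
and taking square roots yields $\|f-v\|_F\le 2\delta\Vmax\sqrt{\Tctx}$. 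Equivalently, one can invoke the norm comparison \eqref{eq:norm_compare}.

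There is no real obstacle. The only point needing care is that the hypothesis must hold at every $t$ for the Frobenius bound to go through.
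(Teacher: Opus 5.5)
Your proposal is correct and follows the paper's own argument: apply Lemma~\ref{lem:convex_error} with $j^*=t$ to each row, then sum the squared tokenwise errors over $t$ to get the Frobenius bound. Your remark that the diagonal condition $\alphadrv_{t,t}\ge 1-\delta$ must hold for every $t$ is the right reading of the hypothesis; it is implicit in the paper's ``sum over $t$''.
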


\begin{proof}
Lemma~\ref{lem:convex_error} with $j^*=t$, then sum over $t$.
\end{proof}

\begin{lemma}[Lower-triangular inversion]\label{lem:triangular_inverse}
For every input $x$, $\Bfb(x)\in\mathbb R^{\Tctx\times \Tctx}$ is strictly lower-triangular.
Hence $\Bfb(x)$ is nilpotent, with $\Bfb(x)^{\Tctx}=0$.
\[
(I-\Bfb(x))^{-1}=\sum_{k=0}^{\Tctx-1}\Bfb(x)^k.
\]
\end{lemma}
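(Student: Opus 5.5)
The claim is purely algebraic, so I will prove the three assertions in turn: strict lower-triangularity, nilpotency, and the finite Neumann formula.

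\emph{Strict lower-triangularity.} This comes straight from the definition. By \eqref{eq:def_B} and the feedback-attention definition, $\BfbEnt{t}{j}(x)=\gamma_t(x)\alphafb_{t,j}(x)$ for $j<t$, and $\BfbEnt{t}{j}(x)=0$ for $j\ge t$. This holds for every input $x$, including row $t=0$, where $\alphafb_{0,\cdot}=0$. No property of softmax or $\tanh$ is needed, only the masking.

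\emph{Nilpotency.} I will use the path expansion already displayed in the background section. For $k\ge1$,
\[
(\Bfb^k)_{t,\tau}=\sum_{\tau=i_0<\cdots<i_k=t}\prod_{r=1}^{k}\BfbEnt{i_r}{i_{r-1}}.
\]
This follows by induction on $k$, since $(\Bfb^{k})_{t,\tau}=\sum_j \BfbEnt{t}{j}(\Bfb^{k-1})_{j,\tau}$ and only $j<t$ contributes. A strictly increasing chain of $k+1$ indices inside $\{0,\dots,\Tctx-1\}$ requires $k\le \Tctx-1$. So for $k=\Tctx$ the index set of the sum is empty, and every entry of $\Bfb^{\Tctx}$ vanishes.

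\emph{Inverse formula.} I will verify the telescoping identity
\[
(I-\Bfb)\sum_{k=0}^{\Tctx-1}\Bfb^k=I-\Bfb^{\Tctx}=I,
\]
and likewise with the factors in the other order, since the two factors commute. This shows that $I-\Bfb$ is invertible with the stated inverse. As a consistency check, $\det(I-\Bfb)=1$ because $I-\Bfb$ is unit lower-triangular, which agrees with the uniqueness claim made for the solve.

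I do not expect any real obstacle. The only point needing care is the bookkeeping in the path-expansion induction, namely that the chain must stay strictly increasing. Alternatively, one can note that $\Bfb$ maps $\mathrm{span}\{e_0,\dots,e_j\}$ into $\mathrm{span}\{e_0,\dots,e_{j-1}\}$, so $\Tctx$ applications reach zero.
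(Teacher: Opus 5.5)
Your proof is correct and follows the paper's route (strict lower-triangularity $\Rightarrow$ $\Bfb^{\Tctx}=0$ $\Rightarrow$ the Neumann series terminates). The only difference is that the paper just cites nilpotency of strictly lower-triangular matrices, whereas you prove it via the path expansion and check $(I-\Bfb)\sum_{k=0}^{\Tctx-1}\Bfb^k=I-\Bfb^{\Tctx}=I$ explicitly.

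Your closing aside, however, has the flag reversed. With the paper's convention $(\Bfb s)_t=\sum_{j<t}\BfbEnt{t}{j}s_j$, the matrix $\Bfb$ maps $\mathrm{span}\{e_j,\dots,e_{\Tctx-1}\}$ into $\mathrm{span}\{e_{j+1},\dots,e_{\Tctx-1}\}$; for example, $\Bfb e_0$ is column $0$, supported on $t\ge 1$. The flag you wrote is the one for $\Bfb^{\top}$, so the aside is false as stated for $\Bfb$, though the fix is immediate.
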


\begin{proof}
A strictly lower-triangular $\Tctx\times\Tctx$ matrix is nilpotent of index at most $\Tctx$. Hence $\Bfb^{\Tctx}=0$, and the Neumann series terminates after $\Tctx-1$ terms.
\end{proof}

\subsection{Generating positional codes via feedback}
\label{sec:uat_pos_code}

\begin{corollary}[A Sessa block can generate separated positional codes]
\label{cor:pos_code_padding}
Fix any tokenwise pre-normalization map
\[
\Norm:\mathbb R^m\to\mathbb R^m
\]
(applied independently to each token), any even query/key width $d_k\ge 2$, and any model width $m\ge 1$.
Then there exists a single width-$m$ concrete Sessa block
\[
G^{\mathrm{pos}}\in \mathrm{ConcreteSessaBlocks}_{\Norm}(d_k,m)
\]
and vectors $p_0,\dots,p_{\Tctx-1}\in\mathbb R^m$ such that:
\begin{enumerate}[label=(\roman*), leftmargin=*, nosep]
\item for all $h\in\mathbb R^{\Tctx\times m}$ and all $t$,
\[
G^{\mathrm{pos}}(h)_t = h_t+p_t;
\]
\item for any prescribed unit vector $u\in\mathbb R^m$, one may choose
\[
p_t=(\lambda c_t)u
\]
with pairwise distinct scalars $(c_t)_{t=0}^{\Tctx-1}$ and some $\lambda>0$,
so that on any compact $\Kset\subset\mathbb R^{\Tctx\times m}$ the scalar sets
\[
\mathcal I_t:=\{\langle h_t+p_t,u\rangle:\ h\in\Kset\}
\]
are pairwise disjoint after choosing $\lambda$ large enough.
\end{enumerate}
\end{corollary}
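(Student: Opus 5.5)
The plan is to make the mixer output $s$ independent of the input $h$, so the block reduces to adding a fixed offset to each token. The offset will then be written along $u$. First, zero out everything in the pre-normalization path that depends on the input: set $W^{\mathrm{in}}=0$ and choose $b^{\mathrm{in}}=(b^a,b^g)$. Then $a_t=b^a$ and $g_t=b^g$ for every token, whatever $\Norm(h_t)$ is. Take $b^a=\kappa e_1$ with $\kappa>0$, so that $\bar a_t=\mathrm{GELU}(\kappa)e_1=:\bar a$ is a fixed nonzero vector, the same for every $t$. All projections are then constant in $t$: $v_t=\bar aW_V$, $q^b_t=\bar aW_{Qb}$, $k^b_t=\bar aW_{Kb}$. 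Because the values are all equal, $f_t=\sum_{j\le t}\alphadrv_{t,j}v_j=\bar aW_V$ for any forward weights, and RoPE plays no role. Choose $W_V$ so that $\bar aW_V=e_1$.

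Next, fix the feedback routing. Take $W_{Qb}=W_{Kb}=0$. The feedback logits vanish, so $\alphafb_{t,j}=\frac1t\mathbf 1[j<t]$ for $t\ge1$. Take $w^\gamma=0$ and $b^\gamma=\operatorname{artanh}\gamma$ with a fixed $\gamma\in(0,1)$, so $\gamma_t\equiv\gamma$. This is exactly the uniform-routing regime, and the solve acts coordinate-wise. In coordinate $1$ the input is $f_t\equiv1$, and the recursion $s_t=1+\frac{\gamma}{t}\sum_{j<t}s_j$ defines a deterministic scalar sequence $\sigma_t$.

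The key step is showing that the $\sigma_t$ are pairwise distinct. I would prove by induction that $\sigma$ is strictly increasing. We have $\sigma_0=1$ and $\sigma_1=1+\gamma>1$. For the step, write $A_t=\frac1t\sum_{j<t}\sigma_j$. Then $\sigma_{t+1}-\sigma_t=\gamma(A_{t+1}-A_t)$, and $A_{t+1}=\frac{tA_t+\sigma_t}{t+1}>A_t$ holds if and only if $\sigma_t>A_t$. That last inequality holds because $\sigma_t$ exceeds every earlier term by the induction hypothesis. So the sequence is strictly increasing, and we set $c_t:=\sigma_t$. (The closed form of Corollary~\ref{cor:theta_tail_uniform_shifted}, summed over the impulse positions, is an alternative route.) The solve output is then $s_t=\sigma_te_1$.

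Finally, write the offset along $u$. Set $b^g=e_1$, so $s_t\odot g_t=\sigma_te_1$. Take $W^{\mathrm{out}}=\lambda e_1u^\top$ and $b^{\mathrm{out}}=0$. This gives $G^{\mathrm{pos}}(h)_t=h_t+\lambda\sigma_tu$ identically in $h$, which is item (i) with $p_t=\lambda c_tu$.

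For item (ii), let $M:=\sup_{h\in\Kset,t}|\langle h_t,u\rangle|<\infty$, which is finite by compactness. Let $\delta:=\min_t(\sigma_{t+1}-\sigma_t)>0$, a minimum over finitely many $t$. Then $\mathcal I_t\subset[\lambda\sigma_t-M,\lambda\sigma_t+M]$, and these intervals are disjoint and ordered once $\lambda\delta>2M$. Since $\sigma_t\ge1$, taking $\lambda>M$ additionally places them in $(0,\infty)$, which is what Lemma~\ref{lem:pos_code_main} requires. The only real obstacle is the monotonicity of $\sigma$; everything else is bookkeeping, since the construction is independent of $\Norm$ because $W^{\mathrm{in}}=0$.
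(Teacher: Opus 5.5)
Your proposal is correct and follows essentially the same construction as the paper: $W^{\mathrm{in}}=0$, constant values giving $f_t\equiv e_1$, zero feedback queries and keys for uniform strict-past routing, constant gain $\gamma$, the output written along $u$, and $\lambda$ chosen large relative to the minimal gap. The only difference is the monotonicity step. Your strong induction ($\sigma_t$ exceeds the mean of its predecessors, so the running mean increases) replaces the paper's a priori bound $\mu_t<1/(1-\gamma)$ on the running mean, and you additionally observe that $\lambda>M$ places the intervals in $(0,\infty)$.
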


\begin{proof}
Fix a prescribed unit vector $u\in\mathbb R^m$.

The construction does not depend on $\Norm$: setting $W^{\mathrm{in}}=0$ gives
\[
u_t=\xnorm_t W^{\mathrm{in}}+b^{\mathrm{in}}=b^{\mathrm{in}},
\]
for all $t$.

Choose $W^{\mathrm{in}}=0$ and choose $b^{\mathrm{in}}$ so that for every token
\[
a_t\equiv a_\ast e_1,
\qquad
g_t\equiv e_1,
\]
for some $a_\ast>0$.
Set
\[
A:=\GELU(a_\ast)>0.
\]
Then
\[
\bar a_t=Ae_1
\qquad
\forall t.
\]

Choose
\[
W_{Qf}=0,\qquad W_{Kf}=0.
\]
Then all forward logits vanish, so each forward row is a causal probability vector.
Choose $W_V$ so that
\[
v_t=e_1
\qquad
\forall t.
\]
Therefore
\[
f_t=\sum_{j\le t}\alphadrv_{t,j}v_j=e_1
\qquad
\forall t.
\]

Choose
\[
W_{Qb}=0,\qquad W_{Kb}=0.
\]
Then for $t\ge 1$,
\[
\alphafb_{t,j}=\frac1t\mathbf 1[j<t],
\qquad
\alphafb_{0,\cdot}=0.
\]

Fix any constant $\gamma\in(0,1)$, and choose
\[
w^\gamma=0,\qquad b^\gamma=\operatorname{arctanh}(\gamma).
\]
Then
\[
\gamma_t\equiv \gamma,
\qquad
\BfbEnt{t}{j}
=
\begin{cases}
0,& t=0,\\[2pt]
\dfrac{\gamma}{t}\mathbf 1[j<t],& t\ge 1.
\end{cases}
\]

Since $f_t=e_1$, we have
\[
s_t=c_t e_1,
\]
where
\[
c_0=1,\qquad
c_t=1+\frac{\gamma}{t}\sum_{j=0}^{t-1}c_j
\qquad (t\ge 1).
\]
Let
\[
S_t:=\sum_{j=0}^{t}c_j,
\qquad
\mu_t:=\frac{S_t}{t+1}.
\]
Then
\[
S_t=\Bigl(1+\frac{\gamma}{t}\Bigr)S_{t-1}+1,
\]
hence
\[
\mu_t=\frac{t+\gamma}{t+1}\mu_{t-1}+\frac1{t+1},
\qquad
\mu_t-\mu_{t-1}=\frac{1-(1-\gamma)\mu_{t-1}}{t+1}.
\]
Since $\mu_0=1<\frac1{1-\gamma}$, an induction gives
\[
\mu_t<\frac1{1-\gamma}
\qquad
\forall t,
\]
so
\[
\mu_t-\mu_{t-1}>0
\qquad
\forall t\ge 1.
\]
Now
\[
c_1=1+\gamma>1=c_0,
\]
and for $t\ge 1$,
\[
c_{t+1}-c_t
=
\gamma\Bigl(\frac{S_t}{t+1}-\frac{S_{t-1}}{t}\Bigr)
=
\gamma(\mu_t-\mu_{t-1})>0.
\]
Therefore $(c_t)$ is strictly increasing.

Choose $W^{\mathrm{out}}$ so that its first row is $\lambda u^\top$ and all other rows are zero, and set
$b^{\mathrm{out}}=0$.
Since
\[
s_t\odot g_t=(c_t e_1)\odot e_1=c_t e_1,
\]
the residual update equals
\[
(s_t\odot g_t)W^{\mathrm{out}}=c_t(\lambda u)=:p_t.
\]
Hence
\[
G^{\mathrm{pos}}(h)_t=h_t+p_t.
\]

Let $\Kset\subset\mathbb R^{\Tctx\times m}$ be compact and set
\[
R:=\sup_{h\in\Kset}\max_t\|h_t\|_2<\infty.
\]
Then
\[
|\langle h_t,u\rangle|\le R
\qquad
\forall h\in\Kset,\ \forall t.
\]
Since the $c_t$ are pairwise distinct, let
\[
\Delta_c:=\min_{s\neq t}|c_s-c_t|>0.
\]
Choose
\[
\lambda>\frac{2R}{\Delta_c}.
\]
Then the shifted scalar sets
\[
\mathcal I_t
=
\{\langle h_t+p_t,u\rangle:\ h\in\Kset\}
=
\{\langle h_t,u\rangle+\lambda c_t:\ h\in\Kset\}
\]
are pairwise disjoint.
\end{proof}

\subsection{Composition error control}

\begin{lemma}[Composition error on thickened compacts]\label{lem:composition_error}
Let $(X,d)$ be a metric space such that closed neighborhoods of compact sets are compact,
for example, $X=\mathbb R^n$ with the Euclidean metric.
Fix a compact $\Kset_1\subset X$ and continuous maps $f_i:X\to X$ for $i=1,\dots,L$.

Fix $\rhonbhd>0$ and define recursively
\[
\widetilde{\Kset}_1:=\Kset_1,\qquad
\Kset_{i+1}:=f_i(\widetilde{\Kset}_i),\qquad
\widetilde{\Kset}_{i+1}:=\overline{\mathcal N}_{\rhonbhd}(\Kset_{i+1})=\{x\in X:\ d(x,\Kset_{i+1})\le \rhonbhd\}.
\]
Then each $\widetilde{\Kset}_i$ is compact.

For every $\varepsilon>0$ there exist tolerances $\delta_1,\dots,\delta_L>0$ such that:
for any continuous maps $g_i:\widetilde{\Kset}_i\to X$ satisfying, for each $i$,
\[
\sup_{x\in \widetilde{\Kset}_i} d\big(f_i(x),g_i(x)\big)\le \delta_i
\quad\text{and}\quad
\delta_i\le \rhonbhd,
\]
the compositions $F:=f_L\circ\cdots\circ f_1$ and $G:=g_L\circ\cdots\circ g_1$ are well-defined on $\Kset_1$
(and in fact $g_i(\widetilde{\Kset}_i)\subset \widetilde{\Kset}_{i+1}$), and
\[
\sup_{x\in \Kset_1} d\big(F(x),G(x)\big)\le \varepsilon.
\]
\end{lemma}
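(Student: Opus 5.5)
The plan is to prove compactness of the thickened sets first, and then fix the tolerances by a backward induction driven by uniform continuity.

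\emph{Compactness.} We induct on $i$. $\widetilde{\Kset}_1=\Kset_1$ is compact. If $\widetilde{\Kset}_i$ is compact, then $\Kset_{i+1}=f_i(\widetilde{\Kset}_i)$ is compact because $f_i$ is continuous. Its closed $\rhonbhd$-neighborhood $\widetilde{\Kset}_{i+1}$ is then compact by the standing hypothesis on $X$; it is closed because $x\mapsto d(x,\Kset_{i+1})$ is continuous.

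\emph{Well-definedness.} Suppose $\delta_i\le\rhonbhd$ and $x\in\widetilde{\Kset}_i$. Then
\[
d\big(g_i(x),\Kset_{i+1}\big)\le d\big(g_i(x),f_i(x)\big)\le\rhonbhd ,
\]
so $g_i(\widetilde{\Kset}_i)\subset\widetilde{\Kset}_{i+1}$. Hence $G$ is defined on $\Kset_1$, and every intermediate $G$-iterate lies in the corresponding $\widetilde{\Kset}_{i+1}$. The exact $F$-iterates lie in $\Kset_{i+1}\subset\widetilde{\Kset}_{i+1}$.

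\emph{Choosing the tolerances.} Each $f_i$ restricted to the compact set $\widetilde{\Kset}_i$ is uniformly continuous. Write $F_{>i}:=f_L\circ\cdots\circ f_{i+1}$, with $F_{>L}=\Id$. A simple way to set up the induction is through a sequence of budgets $\varepsilon_L\ge\cdots$. We want that whenever $y,y'\in\widetilde{\Kset}_{i+1}$ and $d(y,y')\le\eta_{i+1}$, the perturbed tail starting from $y'$ stays within $\varepsilon$ of $F_{>i}(y)$.

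Concretely, define backwards:
\begin{itemize}
\item $\eta_{L+1}:=\varepsilon$.
\item Given $\eta_{i+1}$, use uniform continuity of $f_i$ on $\widetilde{\Kset}_i$ to pick $\omega_i>0$ with
\[
d(y,y')\le\omega_i\ \Longrightarrow\ d\big(f_i(y),f_i(y')\big)\le\eta_{i+1}/2
\qquad (y,y'\in\widetilde{\Kset}_i).
\]
\item Set $\delta_i:=\min\{\eta_{i+1}/2,\ \rhonbhd\}$ and $\eta_i:=\omega_i$.
\end{itemize}

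\emph{Error propagation.} Let $x\in\Kset_1$, and write $y_i$ for the exact $F$-iterate and $y'_i$ for the $G$-iterate, with $y_1=y'_1=x$. We show by forward induction that $d(y_i,y'_i)\le\eta_i$; the case $i=1$ is trivial since the distance is $0$. For the step,
\[
d(y_{i+1},y'_{i+1})\le d\big(f_i(y_i),f_i(y'_i)\big)+d\big(f_i(y'_i),g_i(y'_i)\big)\le \eta_{i+1}/2+\delta_i\le\eta_{i+1}.
\]
Here both $y_i$ and $y'_i$ lie in $\widetilde{\Kset}_i$ by the well-definedness step, so the uniform-continuity modulus applies. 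At $i=L+1$ this gives $d(F(x),G(x))\le\eta_{L+1}=\varepsilon$, as required.

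\emph{Main obstacle.} The one delicate point is that the uniform-continuity modulus must be applied on sets that contain the perturbed iterates, not only the exact ones. This is precisely why the tolerances are capped by $\rhonbhd$, so that all iterates stay inside the compact thickenings $\widetilde{\Kset}_i$, and why those thickenings must be shown compact first. Everything else is a routine triangle-inequality induction.
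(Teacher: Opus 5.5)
Your proposal is correct and takes essentially the same approach as the paper: uniform continuity of each $f_i$ on its compact thickening, tolerances capped by $\rhonbhd$ so the perturbed iterates stay inside the thickenings, and a triangle-inequality estimate. Your backward recursion of moduli followed by a forward error induction is just the unrolled form of the paper's induction on the depth $L$, which peels off $f_L$ and invokes the hypothesis for the first $L-1$ maps with target accuracy $\eta$. You also spell out the compactness induction for the thickened sets, which the paper's proof leaves implicit.
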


\begin{proof}
Well-definedness is immediate.
Fix $i$ and $x\in \widetilde{\Kset}_i$. By definition, $f_i(x)\in \Kset_{i+1}=f_i(\widetilde{\Kset}_i)$, hence
$d\big(f_i(x),\Kset_{i+1}\big)=0$. Therefore
\[
d\big(g_i(x),\Kset_{i+1}\big)
\le d\big(g_i(x),f_i(x)\big)+d\big(f_i(x),\Kset_{i+1}\big)
\le \delta_i\le \rhonbhd,
\]
so $g_i(x)\in \widetilde{\Kset}_{i+1}$. Thus $g_i(\widetilde{\Kset}_i)\subset \widetilde{\Kset}_{i+1}$ and all compositions are defined.

The remainder of the proof is by induction on $L$.
For $L=1$ it is immediate.

Assume the claim holds for $L-1$. Let
\[
F_{<L}:=f_{L-1}\circ\cdots\circ f_1,\qquad G_{<L}:=g_{L-1}\circ\cdots\circ g_1.
\]
Since $\widetilde{\Kset}_L$ is compact and $f_L$ is continuous, $f_L$ is uniformly continuous on $\widetilde{\Kset}_L$.
Pick $\eta>0$ such that
\[
d(u,v)\le \eta \ \Rightarrow\ d\big(f_L(u),f_L(v)\big)\le \varepsilon/2
\qquad \forall u,v\in \widetilde{\Kset}_L.
\]
Set $\delta_L:=\min(\rhonbhd,\varepsilon/2)$.
By the inductive hypothesis applied with target accuracy $\eta$, choose $\delta_1,\dots,\delta_{L-1}>0$ so that
\[
\sup_{x\in \Kset_1} d\big(F_{<L}(x),G_{<L}(x)\big)\le \eta.
\]
Then for $x\in \Kset_1$, noting that $G_{<L}(x)\in \widetilde{\Kset}_L$ by well-definedness,
\[
d\big(F(x),G(x)\big)
\le
d\big(f_L(F_{<L}(x)),f_L(G_{<L}(x))\big)
+
d\big(f_L(G_{<L}(x)),g_L(G_{<L}(x))\big)
\le
\varepsilon/2+\delta_L
\le \varepsilon.
\]
\end{proof}

\begin{lemma}[Tokenwise GELU approximation]
\label{lem:tokenwise_gelu_vector_uap}
Let \(S\subset\mathbb R^m\) be compact and let \(\Theta:S\to\mathbb R^p\) be continuous.
Then for every \(\eta>0\) there exist \(r\in\mathbb N^*\) and affine maps
\[
A:\mathbb R^m\to\mathbb R^r,
\qquad
B:\mathbb R^r\to\mathbb R^p
\]
such that
\[
\sup_{z\in S}\|B(\mathrm{GELU}(A(z)))-\Theta(z)\|_2\le \eta.
\]
Moreover, if a larger width \(r'\ge r\) is prescribed in advance, the same conclusion still holds with \(r'\) in place of \(r\),
by padding the hidden layer with unused coordinates.
\end{lemma}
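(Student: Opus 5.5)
The plan is to reduce the vector statement to the scalar case via a universal approximation theorem for one-hidden-layer networks with a non-polynomial activation. Two facts about GELU are needed first. It is continuous, since $\mathrm{GELU}(z)=z\,\Phi(z)$ with $\Phi$ the standard normal CDF, which is smooth. It is not a polynomial: $\mathrm{GELU}(z)\to 0$ as $z\to-\infty$, while $\mathrm{GELU}(z)\sim z$ as $z\to+\infty$, and no polynomial has both behaviours. By the Leshno--Lin--Pinkus--Schocken theorem (a continuous activation gives dense one-hidden-layer networks in $C(S)$ for every compact $S$ if and only if it is not a polynomial), the family
\[
z\mapsto \sum_{i=1}^{r_j} c_i\,\mathrm{GELU}(\langle w_i,z\rangle+b_i)+c_0
\]
is dense in $C(S,\mathbb R)$ in the sup norm. (An absolute output bias may be absorbed into $B$, which is affine.)

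Next, approximate coordinatewise. Write $\Theta=(\Theta_1,\dots,\Theta_p)$; each $\Theta_j$ is continuous on $S$. For each $j$, choose a scalar network with hidden width $r_j$ such that $\sup_{z\in S}|N_j(z)-\Theta_j(z)|\le \eta/\sqrt p$. Stack the hidden layers: set $r:=\sum_j r_j$ and let $A(z)$ concatenate the $r_j$ affine preactivations of all the blocks. Define $B$ so that output coordinate $j$ reads only block $j$, using the weights $c_i$ and bias $c_0$ of $N_j$. Then $B(\mathrm{GELU}(A(z)))_j=N_j(z)$ for each $j$, so
\[
\|B(\mathrm{GELU}(A(z)))-\Theta(z)\|_2\le \sqrt{p\cdot \eta^2/p}=\eta
\]
uniformly on $S$.

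For the padding claim, given $r'\ge r$, extend $A$ by $r'-r$ arbitrary affine coordinates (for instance zero maps) and give them zero columns in $B$. The output is unchanged, so the same bound holds with $r'$ in place of $r$.

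The only nontrivial step is justifying that GELU is non-polynomial and continuous, so that the density theorem applies. This is handled by the asymptotic argument above. If one prefers to avoid citing Leshno et al., an alternative is to note that $\mathrm{GELU}(z+h)-\mathrm{GELU}(z)$ with $h$ large gives sigmoidal-type differences. Combinations of the form $\mathrm{GELU}(z+1)-\mathrm{GELU}(z)$ tend to $0$ at $-\infty$ and to $1$ at $+\infty$, so they form a bounded sigmoidal function, and Cybenko's theorem then applies. This alternative requires a little more bookkeeping.
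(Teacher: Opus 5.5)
Your proposal is correct and follows essentially the same route as the paper: apply the one-hidden-layer universal approximation theorem for continuous non-polynomial activations (Leshno et al.) to each coordinate, concatenate the hidden units into one layer, and pad with units that have zero incoming and outgoing weights. Your explicit $\eta/\sqrt{p}$ error bookkeeping and your asymptotic argument that GELU is not a polynomial supply details the paper leaves implicit.
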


\begin{proof}
Apply the standard one-hidden-layer universal approximation theorem for non-polynomial activations coordinatewise to the components of \(\Theta\), and concatenate the resulting hidden units into a single hidden layer. Since \(\mathrm{GELU}\) is continuous and non-polynomial, the theorem applies; see, e.g., \citet{hornik1989universal,leshno1993nonpolynomial}. The padding claim is immediate by adding hidden coordinates with zero incoming and outgoing weights.
\end{proof}

\begin{lemma}[Tokenwise GELU approximation with zero-padding]
\label{lem:tokenwise_gelu_vector_uap_pad}
Let \(S\subset\mathbb R^m\) be compact, let \(\Theta:S\to\mathbb R^{p_0}\) be continuous,
let \(\eta>0\), and let \(r_0\in\mathbb N^*\).
For each \(r\ge r_0\), let
\[
E_r:\mathbb R^{p_0}\hookrightarrow \mathbb R^{p(r)}
\]
be a coordinate zero-padding embedding, where \(p(r)\) may depend on \(r\).
Then there exist \(r\ge r_0\) and affine maps
\[
A:\mathbb R^m\to\mathbb R^r,
\qquad
B:\mathbb R^r\to\mathbb R^{p(r)}
\]
such that
\[
\sup_{z\in S}\big\|B(\mathrm{GELU}(A(z))) - E_r(\Theta(z))\big\|_2\le \eta.
\]
\end{lemma}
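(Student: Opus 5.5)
The plan is to reduce this statement to Lemma~\ref{lem:tokenwise_gelu_vector_uap}, applied with output dimension $p_0$, and then absorb the zero-padding into the output affine map $B$.

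\emph{Step 1: approximate $\Theta$ at base width.} Apply Lemma~\ref{lem:tokenwise_gelu_vector_uap} to the continuous map $\Theta:S\to\mathbb R^{p_0}$ with tolerance $\eta$. This yields a width $r_1\in\mathbb N^*$ and affine maps $A_1:\mathbb R^m\to\mathbb R^{r_1}$, $B_1:\mathbb R^{r_1}\to\mathbb R^{p_0}$ with
\[
\sup_{z\in S}\|B_1(\mathrm{GELU}(A_1(z)))-\Theta(z)\|_2\le\eta .
\]

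\emph{Step 2: meet the lower bound $r\ge r_0$.} Set $r:=\max\{r_0,r_1\}$ and invoke the padding clause of Lemma~\ref{lem:tokenwise_gelu_vector_uap}. Concretely, define $A:\mathbb R^m\to\mathbb R^r$ as $A_1$ on the first $r_1$ coordinates and the zero map on the rest. Define $B_0:\mathbb R^r\to\mathbb R^{p_0}$ as $B_1$ applied to the first $r_1$ coordinates, ignoring the others. Then $B_0(\mathrm{GELU}(A(z)))=B_1(\mathrm{GELU}(A_1(z)))$ exactly. The extra hidden units output $\mathrm{GELU}(0)=0$, but they are ignored anyway, so this step is harmless.

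\emph{Step 3: compose with the embedding.} Put $B:=E_r\circ B_0:\mathbb R^r\to\mathbb R^{p(r)}$. This map is affine, because $E_r$ is a linear coordinate inclusion and $B_0$ is affine. Since $E_r$ is a zero-padding of coordinates, it is an isometry for $\|\cdot\|_2$. Therefore
\[
\|B(\mathrm{GELU}(A(z)))-E_r(\Theta(z))\|_2=\|E_r(B_0(\mathrm{GELU}(A(z)))-\Theta(z))\|_2\le\eta
\]
holds uniformly on $S$.

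\emph{Main obstacle.} The only real point is that $p(r)$ depends on $r$. We must fix $r$ first and only then build $B$ into $\mathbb R^{p(r)}$, and Step 2 handles this order. Everything else is bookkeeping: $E_r$ is linear and norm-preserving, and affine maps stay affine under composition with it.
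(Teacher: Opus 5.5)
Your proposal is correct and follows the paper's proof essentially step for step: apply Lemma~\ref{lem:tokenwise_gelu_vector_uap} at some width $s$, pad the hidden layer to $r=\max\{r_0,s\}$ with zero preactivations, take $B=E_r\circ\bar B\circ\Pi_{r\to s}$, and use that $E_r$ is a linear isometry. You are right that only the coordinatewise action of $\mathrm{GELU}$ is actually needed, since the projection discards the padded units; the paper also invokes $\mathrm{GELU}(0)=0$, but nothing depends on it.
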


\begin{proof}
By Lemma~\ref{lem:tokenwise_gelu_vector_uap}, there exist
\(s\in\mathbb N^*\) and affine maps
\[
\bar A:\mathbb R^m\to\mathbb R^s,
\qquad
\bar B:\mathbb R^s\to\mathbb R^{p_0}
\]
such that
\[
\sup_{z\in S}\big\|\bar B(\mathrm{GELU}(\bar A(z))) - \Theta(z)\big\|_2\le \eta.
\]
Set
\[
r:=\max\{r_0,s\}.
\]
Let
\[
I_{s\to r}:\mathbb R^s\hookrightarrow\mathbb R^r
\]
be the coordinate zero-padding inclusion into the first \(s\) coordinates, and let
\[
\Pi_{r\to s}:\mathbb R^r\to\mathbb R^s
\]
be the projection onto those first \(s\) coordinates.
Define
\[
A:=I_{s\to r}\circ \bar A,
\qquad
B:=E_r\circ \bar B\circ \Pi_{r\to s}.
\]
Then \(A\) is affine and \(B\) is affine.
Since \(\mathrm{GELU}(0)=0\) and \(\mathrm{GELU}\) acts coordinatewise,
\[
\Pi_{r\to s}\big(\mathrm{GELU}(A(z))\big)
=
\Pi_{r\to s}\big(\mathrm{GELU}(I_{s\to r}\bar A(z))\big)
=
\mathrm{GELU}(\bar A(z)).
\]
Hence
\[
B(\mathrm{GELU}(A(z)))
=
E_r\!\big(\bar B(\mathrm{GELU}(\bar A(z)))\big).
\]
Because \(E_r\) is coordinate zero-padding, it is an isometric embedding for the Euclidean norm, so
\[
\big\|B(\mathrm{GELU}(A(z))) - E_r(\Theta(z))\big\|_2
=
\big\|\bar B(\mathrm{GELU}(\bar A(z))) - \Theta(z)\big\|_2.
\]
Taking the supremum over \(z\in S\) gives the claim.
\end{proof}

\subsection{Stability of finite-horizon RoPE attention}
\label{sec:uat_attn_stability}

For fixed $\Tctx$, causal RoPE attention depends continuously on the query, key, and value arrays.
The next two lemmas collect the continuity and near-diagonal transport estimates used below.

\begin{lemma}[Stability of finite-horizon RoPE attention]
\label{lem:rope_attn_tokenwise_stability}
Fix a horizon \(\Tctx\ge 1\), number of heads \(H\ge 1\), even key/query width \(d_k\ge 2\), value width \(d_v\ge 1\),
attention scale \(\sigk>0\), and an output matrix
\[
W^O\in\mathbb R^{H d_v\times m}.
\]
Let \(\Kset\subset\mathbb R^{\Tctx\times m}\) be compact, and define the compact token set
\[
S_{\Kset}:=\{u_t:\ u\in\Kset,\ 0\le t\le \Tctx-1\}\subset\mathbb R^m.
\]

For each head \(a=1,\dots,H\), let
\[
q^a,k^a,\widehat q^a,\widehat k^a:S_{\Kset}\to\mathbb R^{d_k},
\qquad
v^a,\widehat v^a:S_{\Kset}\to\mathbb R^{d_v}
\]
be continuous.
Let \(A,\widehat A:\Kset\to\mathbb R^{\Tctx\times m}\) be the corresponding causal RoPE-attention maps:
for \(u\in\Kset\),
\[
A(u)_t=\Big(\operatorname{concat}_{a=1}^{H} z_t^a(u)\Big)W^O,
\qquad
z_t^a(u):=\sum_{j\le t}\alpha_{t,j}^a(u)\,v^a(u_j),
\]
where
\[
\alpha_{t,j}^a(u)
=
\frac{
\exp\!\Big(\sigk\big\langle \mathrm{RoPE}_t(q^a(u_t)),\,\mathrm{RoPE}_j(k^a(u_j))\big\rangle\Big)\mathbf 1[j\le t]
}{
\sum_{\tau\le t}
\exp\!\Big(\sigk\big\langle \mathrm{RoPE}_t(q^a(u_t)),\,\mathrm{RoPE}_\tau(k^a(u_\tau))\big\rangle\Big)
},
\]
and similarly \(\widehat A\) is defined from \((\widehat q^a,\widehat k^a,\widehat v^a)\).

Then for every \(\varepsilon>0\) there exists \(\eta>0\) such that
\[
\sup_{z\in S_{\Kset}}
\max_{1\le a\le H}
\Big(
\|q^a(z)-\widehat q^a(z)\|_2
+
\|k^a(z)-\widehat k^a(z)\|_2
+
\|v^a(z)-\widehat v^a(z)\|_2
\Big)
\le \eta
\]
implies
\[
\sup_{u\in\Kset}\|A(u)-\widehat A(u)\|_F\le \varepsilon.
\]
\end{lemma}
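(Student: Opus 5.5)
We prove the statement (Lemma~\ref{lem:rope_attn_tokenwise_stability}) by a uniform-continuity argument on a compact parameter set. Everything is finite-dimensional: fixed $\Tctx$, $H$, $d_k$, $d_v$. So it suffices to show that $A(u)$ is a continuous function of the finitely many arrays
\[
\big(q^a(u_t),k^a(u_t),v^a(u_t)\big)_{a\le H,\ t<\Tctx},
\]
and that these arrays stay in a fixed compact set when the perturbation is small.

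\textbf{Step 1: the abstract attention map.} Define
\[
\Phi:\big((\mathbb R^{d_k})^{\Tctx}\times(\mathbb R^{d_k})^{\Tctx}\times(\mathbb R^{d_v})^{\Tctx}\big)^H\to\mathbb R^{\Tctx\times m},
\]
which sends arrays $(Q^a,K^a,V^a)_a$ to the causal RoPE softmax mixture followed by the concatenation and $W^O$. Then $A(u)=\Phi(\text{arrays of }u)$, and $\widehat A(u)$ is $\Phi$ applied to the hatted arrays. The map $\Phi$ is continuous on the whole domain, because it is built from three continuous pieces:
\begin{enumerate*}[label=(\roman*)]
\item $\mathrm{RoPE}_t$ is a fixed orthogonal map;
\item the logits are inner products;
\item softmax over the nonempty finite set $\{0,\dots,t\}$ has a strictly positive denominator.
\end{enumerate*}
Products with values and a fixed linear map preserve continuity.

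\textbf{Step 2: a compact neighborhood.} Since $q^a,k^a,v^a$ are continuous on the compact set $S_{\Kset}$, their images are bounded. Let $M$ bound all of them. Restrict $\Phi$ to the compact set $\mathcal C$ of arrays whose entries have norm at most $M+1$. On $\mathcal C$, $\Phi$ is uniformly continuous. Given $\varepsilon$, choose $\eta_0$ such that any two points of $\mathcal C$ within distance $\eta_0$ have images within Frobenius distance $\varepsilon$.

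\textbf{Step 3: choosing $\eta$.} Set $\eta:=\min\{1,\eta_0/\sqrt{3H\Tctx}\}$. Under the hypothesis, every hatted entry is within $\eta\le1$ of the unhatted entry, so both arrays lie in $\mathcal C$. Their Euclidean distance is at most $\sqrt{3H\Tctx}\,\eta\le\eta_0$; the factor accounts for the $3H\Tctx$ vector entries, each within $\eta$. Hence $\|A(u)-\widehat A(u)\|_F\le\varepsilon$ for every $u\in\Kset$.

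The only point requiring care is Step 2. The hatted maps are not assumed bounded a priori, so we must first use the perturbation bound $\eta\le1$ to place them in a compact set before invoking uniform continuity. An explicit Lipschitz estimate is also available as an alternative route: use Lemma~\ref{lem:softmax_key_der}, logit perturbation at most $\sigk\,2M\eta$, and total-variation change of softmax at most twice that. We prefer the compactness argument since no rate is needed.
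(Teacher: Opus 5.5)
Your proposal is correct and follows essentially the same route as the paper: both factor $A$ and $\widehat A$ through one continuous finite-horizon attention operator on the array space, then apply uniform continuity on a compact set that contains both the exact and the perturbed arrays. The differences are cosmetic. You take the product of closed balls of radius $M+1$ with the Euclidean norm, hence the $\sqrt{3H\Tctx}$ factor, whereas the paper takes the closed $\eta_0$-neighborhood of $\Xi(\Kset)$ under a max norm.
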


\begin{proof}
Define the finite-dimensional array space
\[
\mathcal X
:=
\Big((\mathbb R^{d_k})^H\Big)^{\Tctx}
\times
\Big((\mathbb R^{d_k})^H\Big)^{\Tctx}
\times
\Big((\mathbb R^{d_v})^H\Big)^{\Tctx},
\]
and equip it with the max norm
\[
\|(Q,K,V)\|_{\max}
:=
\max\Big\{
\max_{t,a}\|q_t^a\|_2,\ 
\max_{t,a}\|k_t^a\|_2,\ 
\max_{t,a}\|v_t^a\|_2
\Big\}.
\]

Let
\[
\mathcal A:\mathcal X\to\mathbb R^{\Tctx\times m}
\]
denote the finite-horizon causal RoPE-attention operator defined by the displayed formulas above.
RoPE attention is continuous as a composition of continuous finite-dimensional operations.

Now define continuous maps
\[
\Xi,\widehat\Xi:\Kset\to\mathcal X
\]
by collecting the tokenwise arrays:
\[
\Xi(u):=\big((q^a(u_t))_{t,a},\ (k^a(u_t))_{t,a},\ (v^a(u_t))_{t,a}\big),
\]
\[
\widehat\Xi(u):=\big((\widehat q^a(u_t))_{t,a},\ (\widehat k^a(u_t))_{t,a},\ (\widehat v^a(u_t))_{t,a}\big).
\]
Then
\[
A=\mathcal A\circ \Xi,\qquad \widehat A=\mathcal A\circ \widehat\Xi.
\]

The image \(\Xi(\Kset)\subset\mathcal X\) is compact. Fix \(\eta_0>0\); then its closed \(\eta_0\)-neighborhood
\[
\overline{\mathcal N}_{\eta_0}(\Xi(\Kset))
\]
is compact as well. Hence \(\mathcal A\) is uniformly continuous on this neighborhood.
Therefore, for the given \(\varepsilon>0\), there exists \(\delta>0\) such that
\[
x,x'\in \overline{\mathcal N}_{\eta_0}(\Xi(\Kset)),
\qquad
\|x-x'\|_{\max}\le \delta
\quad\Longrightarrow\quad
\|\mathcal A(x)-\mathcal A(x')\|_F\le \varepsilon.
\]

Set \(\eta:=\min\{\eta_0,\delta\}\).
If the stated tokenwise bound holds, then for every \(u\in\Kset\),
\[
\|\Xi(u)-\widehat\Xi(u)\|_{\max}\le \eta,
\]
because each of the three summands is individually bounded by \(\eta\).
In particular,
\[
\widehat\Xi(u)\in \overline{\mathcal N}_{\eta_0}(\Xi(\Kset)).
\]
Applying the uniform continuity estimate to \(\Xi(u)\) and \(\widehat\Xi(u)\) gives
\[
\|A(u)-\widehat A(u)\|_F
=
\|\mathcal A(\Xi(u))-\mathcal A(\widehat\Xi(u))\|_F
\le \varepsilon
\qquad \forall u\in\Kset.
\]
Taking the supremum over \(u\in\Kset\) proves the claim.
\end{proof}

\begin{lemma}[Near-diagonal attention transports values]
\label{lem:near_diag_attn_transports_values}
Fix a horizon \(\Tctx\ge 1\), an output width \(s\ge 1\), and a compact set
\[
\Kset'\subset\mathbb R^{\Tctx\times m}.
\]
Let
\[
S_{\Kset'}:=\{u_t:\ u\in\Kset',\ 0\le t\le \Tctx-1\}\subset\mathbb R^m.
\]
Let \(\phi,v:S_{\Kset'}\to\mathbb R^s\) be continuous, and define
\[
M_\phi:=\sup_{z\in S_{\Kset'}}\|\phi(z)\|_2<\infty.
\]

Suppose a one-head causal attention mechanism on \(\Kset'\) produces weights \(\alpha_{t,j}(u)\) and outputs
\[
f_t(u):=\sum_{j\le t}\alpha_{t,j}(u)\,v(u_j),
\qquad u\in\Kset'.
\]
Assume that for some \(\delta\in(0,1)\) and \(\eta\ge 0\),
\[
\alpha_{t,t}(u)\ge 1-\delta
\qquad
\forall\,u\in\Kset',\ \forall\,t\in\{0,\dots,\Tctx-1\},
\]
and
\[
\sup_{z\in S_{\Kset'}}\|v(z)-\phi(z)\|_2\le \eta.
\]
Then
\[
\sup_{u\in\Kset'}\max_{0\le t\le \Tctx-1}\|f_t(u)-\phi(u_t)\|_2
\le
2\delta(M_\phi+\eta)+\eta.
\]
\end{lemma}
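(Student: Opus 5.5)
The plan is to split the error with the triangle inequality, inserting \(v(u_t)\) as an intermediate term. For every \(u\in\Kset'\) and every \(t\),
\[
\|f_t(u)-\phi(u_t)\|_2
\le
\Big\|\sum_{j\le t}\alpha_{t,j}(u)\,v(u_j)-v(u_t)\Big\|_2
+\|v(u_t)-\phi(u_t)\|_2 .
\]
The second term is at most \(\eta\) directly from the hypothesis, since \(u_t\in S_{\Kset'}\).

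For the first term we apply Lemma~\ref{lem:convex_error} with \(J=\{0,\dots,t\}\), weights \(p_j=\alpha_{t,j}(u)\), vectors \(w_j=v(u_j)\), and distinguished index \(j^*=t\). The attention weights are nonnegative and sum to one because they are a causal softmax row, and \(p_{t}\ge 1-\delta\) holds by assumption. The lemma then bounds the first term by \(2\delta\max_{j\le t}\|v(u_j)\|_2\).

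To finish, we bound the values uniformly:
\[
\|v(z)\|_2\le\|\phi(z)\|_2+\|v(z)-\phi(z)\|_2\le M_\phi+\eta
\qquad\text{for all } z\in S_{\Kset'}.
\]
Each \(u_j\) lies in \(S_{\Kset'}\), so \(\max_{j\le t}\|v(u_j)\|_2\le M_\phi+\eta\). Combining the two terms gives \(2\delta(M_\phi+\eta)+\eta\) for every \(u\) and \(t\), and taking the supremum and maximum yields the claim.

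There is no real obstacle here. The only points to verify are that \(M_\phi<\infty\), which follows from continuity of \(\phi\) on the compact set \(S_{\Kset'}\), and that the row is a probability vector, which is exactly what Lemma~\ref{lem:convex_error} requires.
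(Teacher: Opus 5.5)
Your proposal is correct and matches the paper's proof: it sets $w_j=v(u_j)$, bounds $\|w_j\|_2\le M_\phi+\eta$, applies Lemma~\ref{lem:convex_error} with distinguished index $t$ to get $2\delta(M_\phi+\eta)$, and then adds $\|w_t-\phi(u_t)\|_2\le\eta$ via the triangle inequality. The finiteness of $M_\phi$ is fine as you note, since $S_{\Kset'}$ is a finite union of continuous images of the compact set $\Kset'$.
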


\begin{proof}
Fix \(u\in\Kset'\) and \(t\in\{0,\dots,\Tctx-1\}\).
Set
\[
w_j:=v(u_j)\in\mathbb R^s,\qquad 0\le j\le t.
\]
Then \((\alpha_{t,j}(u))_{j\le t}\) is a convex distribution and
\[
f_t(u)=\sum_{j\le t}\alpha_{t,j}(u)\,w_j.
\]
Moreover,
\[
\|w_j\|_2
\le
\|\phi(u_j)\|_2+\|v(u_j)-\phi(u_j)\|_2
\le
M_\phi+\eta
\qquad \forall j\le t.
\]

Since \(\alpha_{t,t}(u)\ge 1-\delta\), Lemma~\ref{lem:convex_error} yields
\[
\|f_t(u)-w_t\|_2
\le
2\delta(M_\phi+\eta).
\]
Also,
\[
\|w_t-\phi(u_t)\|_2\le \eta.
\]
Hence
\[
\|f_t(u)-\phi(u_t)\|_2
\le
\|f_t(u)-w_t\|_2+\|w_t-\phi(u_t)\|_2
\le
2\delta(M_\phi+\eta)+\eta.
\]
Since this bound is uniform in \(u\) and \(t\), the claim follows.
\end{proof}

\subsection{Universal approximation for causal RoPE-Transformers with adapters}
\begin{lemma}[Universality of causal RoPE-Transformers with adapters]
\label{lem:ropeTr_uap}
Let
\[
\mathcal D\subset\mathbb R^{\Tctx\times d_{\mathrm{ext}}}
\]
be compact and let
\[
F:\mathcal D\to\mathbb R^{\Tctx\times d_{\mathrm{ext}}}
\]
be continuous and causal.
Then for any $\varepsilon>0$ there exist finite $(H,d_k,r,m)$ and
\[
g\in \Omega^{H,d_k,r}_{\mathrm{RoPETr,cau}}(d_{\mathrm{ext}}\to m\to d_{\mathrm{ext}})
\]
such that
\[
\sup_{x\in\mathcal D}\|F(x)-g(x)\|_F<\varepsilon.
\]
Moreover, the construction in the proof allows an arbitrary choice of distinct scalars $(c_t)_{t=0}^{\Tctx-1}$
in Paragraph~3, hence an arbitrary absolute embedding $E$ supported on the pos-scalar coordinate of slice $h=1$
with distinct entries.
\end{lemma}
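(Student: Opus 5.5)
We will follow the classical three-stage route to universality for causal decoders: absolute positional tagging, prefix encoding into dedicated coordinates, and tokenwise readout. The remark about "Paragraph~3" suggests the proof is organized by paragraphs. We will fix that organization now.

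\emph{Paragraph~1} embeds the data. We take $m$ large, split into slices $h=0,\dots,\Tctx-1$ plus scratch coordinates, and let $\mathrm{Embed}$ copy $x_t$ into a data slice. $\mathrm{Unembed}$ reads a designated output slice; we keep $\mathrm{Unembed}\circ\mathrm{Embed}=\Id$ by routing the original data coordinates through. \emph{Paragraph~2} reduces to prefix functions. By Lemma~\ref{lem:causal_prefix_factorization}, $F(x)_t=\widehat F_t(x_{0:t})$ with each $\widehat F_t$ continuous on the compact set $\Pref_t$. \emph{Paragraph~3} adds the positional embedding. We choose $E$ supported on a single pos-scalar coordinate of slice $h=1$ with entries $\lambda c_t$, where the $c_t$ are arbitrary distinct scalars. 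For $\lambda$ large, the ranges $\{\langle x_t+E_t,u\rangle\}$ are pairwise disjoint compact intervals, exactly as in the proof of Corollary~\ref{cor:pos_code_padding}. As in Corollary~\ref{cor:pos_code_recovery}, a continuous tokenwise map then recovers $t$. Since only distinctness of the $c_t$ is used, the "moreover" clause comes for free.

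\emph{Paragraph~4} gathers each prefix into its token. We use $\Tctx$ attention layers (or heads), indexed by a lag $\ell=0,\dots,\Tctx-1$. Head $\ell$ copies $x_{t-\ell}$ into slice $\ell$ of token $t$, and writes zero when $t<\ell$. We construct this head by RoPE diagonalization as in Lemma~\ref{lem:rope_diag}, with the rotation shifted by $-\ell$. Using the first rotary pair, the query $R_{\ell}e_1$ peaks at key position $j=t-\ell$. Sharpening is done by common scaling, per Corollary~\ref{cor:softmax_onehot_fixed_scale}. Boundary positions with $t<\ell$ are handled with the positional scalar. One option is a key-side bias from the pos coordinate. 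Another is to multiply the copied value by a tokenwise gate that vanishes on the interval $J_t$ for $t<\ell$. Lemma~\ref{lem:near_diag_attn_transports_values} then gives a uniform copy error of order $2\delta(M+\eta)+\eta$. After this stage, token $t$ carries the pair $(t, x_{0:t})$, padded with zeros, in a fixed compact set.

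\emph{Paragraph~5} is the readout, composed with error control. The target is the continuous tokenwise map $(t,\text{padded }x_{0:t})\mapsto \widehat F_t(x_{0:t})$. It is well defined and continuous on the encoded compact set, because the position intervals are disjoint. We approximate it with a one-hidden-layer GELU FFN by Lemma~\ref{lem:tokenwise_gelu_vector_uap_pad}, placed in the residual stream of a final block. To make the error propagation rigorous, we regard every layer as an exact continuous map $f_i$ and its realized version as a perturbation $g_i$. Lemma~\ref{lem:rope_attn_tokenwise_stability} shows each attention perturbation is uniformly small, and Lemma~\ref{lem:composition_error} then yields total error below $\varepsilon$. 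Finally, $\mathrm{Unembed}$ is Lipschitz, so the bound survives readout.

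The main obstacle is Paragraph~4, and specifically the exact lag-$\ell$ selector at fixed scale with causal masking. For $t<\ell$ the target index does not exist, and the softmax then spreads mass over admissible positions. Our first attempt is to suppress this with the positional gate in the value/FFN path: a continuous function of the pos coordinate that equals $0$ on $J_t$ for $t<\ell$ and $1$ otherwise. We would implement it with an FFN before and after the attention, using residual bookkeeping. A second concern is the logit gap $1-\cos(s)$ at lag differences. It is positive for integer $s\ne0$, as in Lemma~\ref{lem:rope_diag}, so it still works after the shift.
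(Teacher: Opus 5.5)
Your proposal is correct and follows essentially the paper's construction: a dedicated positional coordinate carrying $c_t$; $\Tctx$ lag heads whose queries and keys are constant (read from a constant coordinate) and RoPE-shifted so that the logit is $c^2\cos((t-\ell)-j)$; sharpening by common scaling; and a tokenwise GELU FFN approximating a Tietze-extended readout of $(t,x_{0:t})$, with uniform-continuity error control. Where you deviate, the extra step is unnecessary and rests on a wrong claim. For $t<\ell$ the softmax does not spread out, because $j\mapsto\cos((t-\ell)-j)$ still has a unique maximizer on $\{0,\dots,t\}$ (the arguments are distinct negative integers, whose cosines are distinct since $\pi$ is irrational), so that head just copies a deterministic $x_{j^*(t,\ell)}$; since the positional scalar identifies $t$, the readout can simply ignore slices with $\ell>t$, and no gating FFN is needed. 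Likewise, if $\mathrm{Embed}$ leaves the positional coordinate at zero, as it should, the scale $\lambda$ is superfluous, and taking $\lambda=1$ is exactly what makes the ``moreover'' clause hold for arbitrary distinct $c_t$.
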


\begin{proof}
Fix $\varepsilon>0$.

\paragraph{0. Causal factorization}
For each \(t\in\{0,\dots,\Tctx-1\}\), define the compact set of attainable prefixes
\[
\Pref_t:=\{(x_0,\dots,x_t):x\in\mathcal D\}\subset (\mathbb R^{d_{\mathrm{ext}}})^{t+1}.
\]
By Lemma~\ref{lem:causal_prefix_factorization}, there exists a unique continuous map
\[
\widehat F_t:\Pref_t\to\mathbb R^{d_{\mathrm{ext}}},
\qquad
\widehat F_t(x_0,\dots,x_t):=F(x)_t
\quad (x\in\mathcal D).
\]
Since \(\Pref_t\) is compact in Euclidean space, it is closed in \((\mathbb R^{d_{\mathrm{ext}}})^{t+1}\).
By Tietze extension applied coordinatewise \citep{tietze1915extension}, extend \(\widehat F_t\) to a continuous map
\[
F_t:(\mathbb R^{d_{\mathrm{ext}}})^{t+1}\to\mathbb R^{d_{\mathrm{ext}}}
\]
such that \(F(x)_t=F_t(x_0,\dots,x_t)\) for all \(x\in\mathcal D\).
Let \(\Mdom:=\sup_{x\in\mathcal D}\|x\|_F\).

\paragraph{1. Model width}
Set the number of heads to be
\[
H:=\Tctx+1,\qquad d_k:=2,
\]
and choose the per-head value width
\[
d_v:=d_{\mathrm{ext}}+2.
\]
Define
\[
m:=H\,d_v=(\Tctx+1)(d_{\mathrm{ext}}+2).
\]
We index coordinates of $\mathbb R^m$ by head-slices:
\[
\mathbb R^m \cong \bigoplus_{h=1}^{H}\mathbb R^{d_v},
\]
and within each slice $\mathbb R^{d_v}$ we separate content coordinates, the first $d_{\mathrm{ext}}$ coordinates,
a constant coordinate with index $d_{\mathrm{ext}}+1$, and a pos-scalar coordinate with index $d_{\mathrm{ext}}+2$.

\paragraph{2. Adapters}
We now fix concrete adapters $\mathrm{Embed},\mathrm{Unembed}$ of the form introduced in Paragraph~\ref{app:uat_adapters}.
This choice satisfies $\mathrm{Unembed}\circ\mathrm{Embed}=\mathrm{Id}$ on $\mathbb R^{\Tctx\times d_{\mathrm{ext}}}$.
Define the sequence-level affine adapter
\[
\mathrm{Embed}:\mathbb R^{\Tctx\times d_{\mathrm{ext}}}\to\mathbb R^{\Tctx\times m}
\]
tokenwise by placing $x_t$ into the content coordinates of slice $h=1$,
setting the constant coordinate to $1$, and all other coordinates to $0$:
\[
\mathrm{Embed}(x)_t
=
\Big( (x_t,\ 1,\ 0)\ ;\ 0\ ;\ 0\ ;\ \cdots\ ;\ 0 \Big)\in\bigoplus_{h=1}^{H}\mathbb R^{d_{\mathrm{ext}}+2}.
\]
This is an affine map $x_t\mapsto x_tW^{\mathrm{emb}}+b^{\mathrm{emb}}$ for suitable $W^{\mathrm{emb}}$ and $b^{\mathrm{emb}}$.

Define $\mathrm{Unembed}:\mathbb R^{\Tctx\times m}\to\mathbb R^{\Tctx\times d_{\mathrm{ext}}}$ tokenwise by reading out the content coordinates of slice $h=1$:
\[
\mathrm{Unembed}(h)_t := \big(h_t^{(h=1)}\big)_{1:d_{\mathrm{ext}}}\in\mathbb R^{d_{\mathrm{ext}}},
\]
which is exactly a coordinate projection (equivalently, an affine map with $b^{\mathrm{un}}=0$) and satisfies
$\mathrm{Unembed}\circ\mathrm{Embed}=\mathrm{Id}$ on $\mathbb R^{\Tctx\times d_{\mathrm{ext}}}$.
Thus $\mathrm{Unembed}$ is linear and non-expansive in Frobenius norm:
\[
\|\mathrm{Unembed}(U)-\mathrm{Unembed}(U')\|_F\le \|U-U'\|_F
\qquad \forall U,U'\in\mathbb R^{\Tctx\times m}.
\]

Let $\bar x:=\mathrm{Embed}(x)\in\mathbb R^{\Tctx\times m}$. The set $\bar{\mathcal D}:=\mathrm{Embed}(\mathcal D)$ is compact.

\paragraph{3. Absolute positional code}
Choose distinct scalars $c_0,\dots,c_{\Tctx-1}\in\mathbb R$ and define $E\in\mathbb R^{\Tctx\times m}$ by:
\[
E_t \ \text{is zero in all coordinates except the pos-scalar coordinate of slice }h=1,\ \text{where it equals }c_t.
\]
Thus for all $x\in\mathcal D$ and all $t$,
\[
(\bar x_t+E_t)^{(h=1)}_{d_{\mathrm{ext}}+2} = c_t,
\]
i.e. the pos-scalar is exactly $c_t$, independent of $x$.

\paragraph{4. Prefix encoding}
Fix a diagonalization tolerance \(\delta\in(0,1)\), to be chosen sufficiently small later.
Under the standing RoPE convention fixed above, when $d_k=2$ there is only one rotary pair and
$\omega_0=1$, so
\[
\mathrm{RoPE}_t(z)=R_t z
\]
with $R_t$ the planar rotation by angle $t$ radians \citep{su2021roformer}.
Construct a single causal RoPE-attention sublayer whose output at time $t$ stores
\[
x_t,\ x_{t-1},\ \dots,\ x_0
\]
in the content coordinates of slices $h=2,3,\dots,t+2$, respectively. Equivalently, lag $\ell=0,\dots,t$ is stored in slice $h=\ell+2$, and all active slices $h=2,\dots,H$ are controlled uniformly via the one-hot estimates below.
 
Because slice $h=1$ has a constant coordinate equal to $1$, we may choose the linear maps $W^Q_h,W^K_h$
so that for every token representation $u$:
\[
q_t^{(h)} = (u_t^{(h=1)})_{d_{\mathrm{ext}}+1}\,\bar q^{(h)}=\bar q^{(h)}\in\mathbb R^{2},\qquad
k_j^{(h)} = (u_j^{(h=1)})_{d_{\mathrm{ext}}+1}\,\bar k=\bar k\in\mathbb R^{2},
\]
for fixed vectors $\bar q^{(h)},\bar k\in\mathbb R^{2}$.
Fix a scaling factor $c_{\mathrm{pack}}>0$.
We set $\bar k=c_{\mathrm{pack}}(1,0)$ and for head $h\in\{2,\dots,H\}$ set
\[
\bar q^{(h)} := c_{\mathrm{pack}}\mathrm{RoPE}_{-(h-2)}(1,0)\in\mathbb R^{2}.
\]
Under RoPE inside logits, for $j\le t$,
\[
\left\langle \mathrm{RoPE}_t(\bar q^{(h)}),\ \mathrm{RoPE}_j(\bar k)\right\rangle
=
c_{\mathrm{pack}}^2\cos\big((t-(h-2))-j\big).
\]
Define for each \((t,h)\) the maximizer
\[
j^*(t,h)\in\arg\max_{0\le j\le t}\cos\big((t-(h-2))-j\big).
\]
For \(h\le t+2\), the unique maximizer is \(j^*(t,h)=t-(h-2)\), since the maximum value \(1\) is attained only at argument \(0\).
For \(h>t+2\), all arguments \((t-(h-2))-j\) are distinct negative integers, and the corresponding cosine values are pairwise distinct
(since \(\cos(a)=\cos(b)\) implies \(a=\pm b+2\pi k\) for some \(k\in\mathbb Z\), and for integers \(a,b\) this forces \(k=0\) because \(2\pi\) is irrational, hence \(a=\pm b\)).
Thus the maximizer is unique for every \((t,h)\).

Let
\[
v_{t,h}(j):=\cos\big((t-(h-2))-j\big),\qquad j\in\{0,\dots,t\},
\]
and for \(t\ge 1\) define
\[
\Delta_{t,h}:=
v_{t,h}\big(j^*(t,h)\big)
-
\max_{j\in\{0,\dots,t\}\setminus\{j^*(t,h)\}} v_{t,h}(j)
>0.
\]
Since the set of pairs \((t,h)\) is finite, the uniform gap
\[
\Delta_*:=\min_{\substack{t\in\{1,\dots,\Tctx-1\}\\ h\in\{2,\dots,H\}}}\Delta_{t,h}
\]
is strictly positive.
For \(t=0\), the row is exactly one-hot.

Choose $c_{\mathrm{pack}}$ such that
\[
\sigk c_{\mathrm{pack}}^2\Delta_* \ge \log\frac{\Tctx-1}{\delta}.
\]
Then by Corollary~\ref{cor:softmax_onehot_fixed_scale}, for every $x\in\mathcal D$, every $t\ge 1$, and every head $h\in\{2,\dots,H\}$,
\[
\alphadrv^{(h)}_{t,\,j^*(t,h)} \ge 1-\delta.
\]
For $t=0$ the distribution is exactly one-hot on $j=0$.

For heads $h=2,\dots,H$, choose $W^V_h$ so that the value vector copies the content coordinates of slice $h=1$
(and has zeros in the last two coordinates of the head output):
\[
v_j^{(h)} = \big( x_j,\ 0,\ 0 \big)\in\mathbb R^{d_{\mathrm{ext}}+2}.
\]
For head $h=1$, set $W^V_1\equiv 0$, so head $1$ contributes $0$.

Let $f_t\in\mathbb R^m$ denote the concatenation of head outputs.
Choose $W^O=I_m$.
Since slices $h\ge 2$ are initially zero, the residual update
\[
h_t \leftarrow h_t + f_t
\]
injects the head outputs directly into these slices.

Let $\Vmax:=\sup_{x\in\mathcal D}\max_{j}\|x_j\|_2 \le \Mdom$.
For each $t$ and each head $h\in\{2,\dots,H\}$, by Lemma~\ref{lem:convex_error},
\[
\left\|\big(f_t^{(h)}\big)_{1:d_{\mathrm{ext}}} - x_{j^*(t,h)}\right\|_2 \le 2\delta\,\Vmax \le 2\delta\,\Mdom.
\]
In particular, for $h\le t+2$ we have $j^*(t,h)=t-(h-2)$, hence slices $h=2,\dots,t+2$ recover
$(x_t,x_{t-1},\dots,x_0)$ with per-slice content error at most $2\delta\,\Vmax$.

\paragraph{5. Ideal encoded state and target map}
Fix $H:=\Tctx+1$ heads indexed by $h=1,\dots,H$, with head $h=1$ unused as before.
For each $(t,h)$ with $t\in\{0,\dots,\Tctx-1\}$ and $h\in\{2,\dots,H\}$ define the deterministic index
\[
j^*(t,h)\in\arg\max_{0\le j\le t}\ \cos\big((t-(h-2))-j\big).
\]

With the same $c_{\mathrm{pack}}$ chosen in Paragraph~4 so that
\[
\sigk\,c_{\mathrm{pack}}^2\Delta_* \ \ge\ \log\frac{\Tctx-1}{\delta},
\]
Corollary~\ref{cor:softmax_onehot_fixed_scale} gives, for every $x\in\mathcal D$, every $t\ge 1$, and every head $h\in\{2,\dots,H\}$,
the causal attention distribution over $j\le t$ satisfies
\[
\alphadrv^{(h)}_{t,\,j^*(t,h)} \ \ge\ 1-\delta.
\]
For $t=0$ the attention is exactly one-hot.

Define $\widehat h_t(x)\in\mathbb R^m$, where $m=(\Tctx+1)(d_{\mathrm{ext}}+2)$, by
letting slice $h=1$ equal $(x_t,1,c_t)$ in coordinates $(1{:}d_{\mathrm{ext}},\,d_{\mathrm{ext}}{+}1,\,d_{\mathrm{ext}}{+}2)$ and zero elsewhere, and for each slice $h=2,\dots,H$ placing $x_{j^*(t,h)}$ in the first $d_{\mathrm{ext}}$ coordinates and zeros in the last two;
and set
\[
\widehat S := \{\widehat h_t(x): x\in\mathcal D,\ t\in\{0,\dots,\Tctx-1\}\}\subset\mathbb R^m.
\]
Then $\widehat S$ is compact as a continuous image of a compact set.

For each fixed $t\in\{0,\dots,\Tctx-1\}$, define the affine map, in fact linear,
\[
\mathrm{Read}_t:\mathbb R^m\to (\mathbb R^{d_{\mathrm{ext}}})^{t+1}
\]
by reading the content coordinates of slices $h=2,\dots,t+2$ in reverse order:
\[
\mathrm{Read}_t(u)
:=
\Big(
\big(u^{(t+2)}\big)_{1:d_{\mathrm{ext}}},\,
\big(u^{(t+1)}\big)_{1:d_{\mathrm{ext}}},\,
\dots,\,
\big(u^{(2)}\big)_{1:d_{\mathrm{ext}}}
\Big).
\]
Equivalently, for $\ell=0,\dots,t$,
\[
\big(\mathrm{Read}_t(u)\big)_\ell = \big(u^{(t-\ell+2)}\big)_{1:d_{\mathrm{ext}}}.
\]
By construction of the ideal encoded state and because $j^*(t,h)=t-(h-2)$ for $h\le t+2$,
\[
\mathrm{Read}_t\big(\widehat h_t(x)\big)=(x_0,\dots,x_t)
\qquad \forall x\in\mathcal D.
\]

Thus the pos-scalar coordinate identifies $t$, while the encoded slices determine the prefix $(x_0,\dots,x_t)$.

Decompose $\widehat S$ as the finite disjoint union $\widehat S=\bigsqcup_{t=0}^{\Tctx-1}\widehat S_t$ where
$\widehat S_t:=\{\widehat h_t(x):x\in\mathcal D\}$.
Each $\widehat S_t$ is compact and contained in the affine hyperplane
$\{u\in\mathbb R^m:\ (u^{(h=1)})_{d_{\mathrm{ext}}+2}=c_t\}$.
Since the scalars $c_t$ are distinct, the sets $\widehat S_t$ are pairwise separated. Therefore \(\widehat\Phi\) is continuous on \(\widehat S\) once each restriction \(\widehat\Phi|_{\widehat S_t}\) is continuous.
Now fix $t$. For every $u=\widehat h_t(x)\in\widehat S_t$, by the defining property of $F_t$ from Paragraph~0 and by the
readout identity above,
\[
\widehat\Phi(u)
=F(x)_t
=F_t(x_0,\dots,x_t)
=F_t\!\big(\mathrm{Read}_t(u)\big).
\]
Therefore
\[
\widehat\Phi|_{\widehat S_t} = F_t\circ \mathrm{Read}_t|_{\widehat S_t}.
\]
$\mathrm{Read}_t$ is a linear map, and $F_t:(\mathbb R^{d_{\mathrm{ext}}})^{t+1}\to\mathbb R^{d_{\mathrm{ext}}}$ is continuous, so
$\widehat\Phi|_{\widehat S_t}$ is continuous.
Thus $\widehat\Phi$ is continuous on $\widehat S$.

By Tietze extension applied coordinatewise, extend $\widehat\Phi$ to a continuous $\widetilde\Phi:\mathbb R^m\to\mathbb R^{d_{\mathrm{ext}}}$.

\paragraph{6. FFN approximation}
Let $h^{\mathrm{enc}}_t(x)\in\mathbb R^m$ denote the token state after the first RoPE-attention block,
constructed in Paragraph~4, with $W^O=I_m$, head $h=1$ set to zero, and the FFN set to zero.
Slice $h=1$ is unchanged by the residual, since the concatenated head output has zero slice $h=1$,
so $(h^{\mathrm{enc}}_t(x))^{(h=1)}=(x_t,1,c_t)$ exactly.

For each head slice $h\in\{2,\dots,H\}$, by the encoding construction in Paragraph~4 we have
$\|v_j^{(h)}\|_2\le \Vmax$ and $\alphadrv^{(h)}_{t,\,j^*(t,h)}\ge 1-\delta$.
Therefore Lemma~\ref{lem:convex_error} gives, for each $x\in\mathcal D$, each $t$, each $h\in\{2,\dots,H\}$,
\[
\left\|\big(h^{\mathrm{enc}}_t(x)\big)^{(h)}_{1:d_{\mathrm{ext}}} - x_{j^*(t,h)}\right\|_2 \ \le\ 2\delta \Vmax,
\]
and the last two coordinates of each slice are exactly zero on both sides.
Therefore, for each $(x,t)$,
\[
\big\|h^{\mathrm{enc}}_t(x)-\widehat h_t(x)\big\|_2
\ \le\ \sqrt{\sum_{h=2}^{H} (2\delta \Vmax)^2}
\ =\ 2\delta \Vmax\sqrt{\Tctx}.
\]
In particular,
\[
\sup_{x\in\mathcal D}\max_{t}\ \big\|h^{\mathrm{enc}}_t(x)-\widehat h_t(x)\big\|_2
\ \le\ 2\delta \Vmax\sqrt{\Tctx}.
\]

Let
\[
S_{\mathrm{enc}}:=\{h^{\mathrm{enc}}_t(x):x\in\mathcal D,\ t=0,\dots,\Tctx-1\}\subset\mathbb R^m
\]
(compact).
Since \(\widehat S\) is compact, for every radius \(r_{\mathrm{nbhd}}>0\) the closed neighborhood
\[
\overline{\mathcal N}_{r_{\mathrm{nbhd}}}(\widehat S)
:=
\{u\in\mathbb R^m:\ \mathrm{dist}(u,\widehat S)\le r_{\mathrm{nbhd}}\}
\]
is compact. Fix such an \(r_{\mathrm{nbhd}}>0\).

By uniform continuity of \(\widetilde\Phi\) on the compact set \(\overline{\mathcal N}_{r_{\mathrm{nbhd}}}(\widehat S)\), there exists a continuity tolerance
\[
\delta_{\mathrm{UC}}>0
\]
such that
\[
u,v\in \overline{\mathcal N}_{r_{\mathrm{nbhd}}}(\widehat S),
\qquad
\|u-v\|_2\le \delta_{\mathrm{UC}}
\quad\Longrightarrow\quad
\|\widetilde\Phi(u)-\widetilde\Phi(v)\|_2\le \varepsilon/(3\sqrt{\Tctx}).
\]

Now choose the diagonalization parameter \(\delta\in(0,1)\) above small enough so that
\[
2\delta \Vmax\sqrt{\Tctx}\le \min\{r_{\mathrm{nbhd}},\delta_{\mathrm{UC}}\}.
\]
Then \(S_{\mathrm{enc}}\subset \overline{\mathcal N}_{r_{\mathrm{nbhd}}}(\widehat S)\), and for all \(x\in\mathcal D\) and all \(t\),
\[
\|h^{\mathrm{enc}}_t(x)-\widehat h_t(x)\|_2\le \delta_{\mathrm{UC}}.
\]
Hence
\[
\|\widetilde\Phi(h^{\mathrm{enc}}_t(x))-\widetilde\Phi(\widehat h_t(x))\|_2\le \varepsilon/(3\sqrt{\Tctx}).
\]
Since $\widetilde\Phi(\widehat h_t(x))=\widehat\Phi(\widehat h_t(x))=F(x)_t$ by construction, it follows that
\[
\|\widetilde\Phi(h^{\mathrm{enc}}_t(x)) - F(x)_t\|_2 \le \varepsilon/(3\sqrt{\Tctx}).
\]

Define the continuous map \(\Psi:S_{\mathrm{enc}}\to\mathbb R^{d_{\mathrm{ext}}}\) by
\[
\Psi(u) := \widetilde\Phi(u) - \big(u^{(h=1)}\big)_{1:d_{\mathrm{ext}}},
\]
i.e. the increment needed (in slice $h=1$ content) to turn the current content into $\widetilde\Phi(u)$.
By the universal approximation theorem for tokenwise GELU FFNs \citep{leshno1993nonpolynomial,hornik1989universal}, there exists a tokenwise FFN (hidden width $r$ large enough)
whose output $\mathrm{FFN}(h)_t\in\mathbb R^m$ is supported only on slice $h=1$ content coordinates and satisfies
\[
\sup_{u\in S_{\mathrm{enc}}}
\left\|\big(\mathrm{FFN}(u)\big)^{(h=1)}_{1:d_{\mathrm{ext}}} - \Psi(u)\right\|_2
\ \le\ \varepsilon/(3\sqrt{\Tctx}),
\]
and $\mathrm{FFN}(u)$ equals $0$ on all other coordinates.
Applying this tokenwise, define the sequence-level FFN by $\mathrm{FFN}(h)_t:=\mathrm{FFN}(h_t)$.
Using the residual connection in the second block (with its attention set to zero), the slice $h=1$ content becomes
\[
\big(h^{\mathrm{enc}}_t(x)\big)^{(h=1)}_{1:d_{\mathrm{ext}}} + \big(\mathrm{FFN}(h^{\mathrm{enc}}_t(x))\big)^{(h=1)}_{1:d_{\mathrm{ext}}}
\approx \widetilde\Phi\big(h^{\mathrm{enc}}_t(x)\big)
\approx F(x)_t.
\]
Combining the encoding and FFN errors yields for each $t$
\[
\left\| \big(h^{\mathrm{out}}_t(x)\big)^{(h=1)}_{1:d_{\mathrm{ext}}} - F(x)_t \right\|_2
\le \varepsilon/\sqrt{\Tctx},
\]
hence $\|F(x)-g(x)\|_F\le \varepsilon$ uniformly on $\mathcal D$ after applying $\mathrm{Unembed}$.
\end{proof}

\subsection{Direct Sessa building blocks}
\paragraph{Storage decomposition}
Fix a model width
\[
m=(\Tctx+1)d_{\mathrm{ext}}+2.
\]
Write $\mathbb R^m$ as the orthogonal direct sum of coordinate subspaces
\[
\mathbb R^m
=
U_0\oplus U_1\oplus\cdots\oplus U_{\Tctx-1}\oplus U_{\mathrm{out}}\oplus \mathrm{span}\{e_{\mathrm{const}},e_{\mathrm{pos}}\},
\]
where each $U_\ell$ is a coordinate copy of $\mathbb R^{d_{\mathrm{ext}}}$ and
$U_{\mathrm{out}}$ is a coordinate copy of $\mathbb R^{d_{\mathrm{ext}}}$.

Fix linear isometries
\[
J_\ell:\mathbb R^{d_{\mathrm{ext}}}\to U_\ell
\qquad (\ell=0,\dots,\Tctx-1),
\qquad
J_{\mathrm{out}}:\mathbb R^{d_{\mathrm{ext}}}\to U_{\mathrm{out}},
\]
and let
\[
R_\ell:=J_\ell^{-1}:U_\ell\to\mathbb R^{d_{\mathrm{ext}}},
\qquad
R_{\mathrm{out}}:=J_{\mathrm{out}}^{-1}:U_{\mathrm{out}}\to\mathbb R^{d_{\mathrm{ext}}}.
\]

Let $\pi_\ell:\mathbb R^m\to U_\ell$ denote the projection onto $U_\ell$, let $\pi_{\mathrm{out}}:\mathbb R^m\to U_{\mathrm{out}}$ denote the projection onto $U_{\mathrm{out}}$, and let
\[
\pi_{\mathrm{st}}:\mathbb R^m\to
U_0\oplus\cdots\oplus U_{\Tctx-1}\oplus \mathrm{span}\{e_{\mathrm{const}},e_{\mathrm{pos}}\}
\]
denote the projection onto the storage slice.

For each \(\ell\in\{1,\dots,\Tctx-1\}\), let
\[
T_{0\to \ell}:=J_\ell\circ R_0:U_0\to U_\ell
\]
denote the fixed coordinate-copy isomorphism, and let
\[
T_{0\to \mathrm{out}}:=J_{\mathrm{out}}\circ R_0:U_0\to U_{\mathrm{out}}
\]
denote the corresponding copy map into the output slice.

Let
\[
\iota_{\mathrm{st}}:\pi_{\mathrm{st}}(\mathbb R^m)\to \mathbb R^m
\]
denote the linear lift obtained by restoring the output slice as the copy of $U_0$, i.e.
\[
\pi_{\mathrm{st}}(\iota_{\mathrm{st}}(z))=z,
\qquad
\pi_{\mathrm{out}}(\iota_{\mathrm{st}}(z))=T_{0\to \mathrm{out}}(\pi_0(z)).
\]

\begin{lemma}[Uniform small-signal linearization of GELU]
\label{lem:gelu_small_signal}
Let $K\subset\mathbb R^q$ be compact.
Then
\[
\sup_{u\in K}
\left\|
\frac{2}{\varepsilon}\,\mathrm{GELU}(\varepsilon u)-u
\right\|_2
\longrightarrow 0
\qquad\text{as }\varepsilon\downarrow 0.
\]
Consequently, for every compact $K\subset\mathbb R^p$, every linear map $L:\mathbb R^p\to\mathbb R^q$, and every $\eta>0$,
there exists $\varepsilon>0$ such that
\[
\sup_{z\in K}
\left\|
\frac{2}{\varepsilon}\,\mathrm{GELU}(\varepsilon Lz)-Lz
\right\|_2
\le \eta.
\]
\end{lemma}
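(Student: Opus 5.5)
The plan is to exploit the explicit form $\mathrm{GELU}(z)=z\,\Phi(z)$, where $\Phi$ is the standard normal CDF with $\Phi(0)=1/2$. For $u\in\mathbb R^q$ and $\varepsilon>0$, GELU acts coordinatewise, so each coordinate satisfies
\[
\frac{2}{\varepsilon}\,\mathrm{GELU}(\varepsilon u_i)-u_i
=
2u_i\,\Phi(\varepsilon u_i)-u_i
=
2u_i\bigl(\Phi(\varepsilon u_i)-\Phi(0)\bigr).
\]
Since $\Phi$ is Lipschitz with constant $\sup\Phi'=(2\pi)^{-1/2}$, we get $|\Phi(\varepsilon u_i)-\Phi(0)|\le (2\pi)^{-1/2}\varepsilon|u_i|$. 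The coordinate error is therefore at most $2(2\pi)^{-1/2}\varepsilon u_i^2$.

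Summing over coordinates gives the bound
\[
\left\|\tfrac{2}{\varepsilon}\mathrm{GELU}(\varepsilon u)-u\right\|_2\le \sqrt{2/\pi}\;\varepsilon\,\|u\|_4^2\le \sqrt{2/\pi}\;\varepsilon\,\|u\|_2^2 .
\]
On a compact $K$ we have $M:=\sup_{u\in K}\|u\|_2<\infty$, so the supremum is at most $\sqrt{2/\pi}\,\varepsilon M^2\to 0$ as $\varepsilon\downarrow 0$. This proves the first claim, with an explicit linear rate in $\varepsilon$.

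For the consequence, I apply the first claim to the set $K':=L(K)\subset\mathbb R^q$. This set is compact as the continuous image of a compact set, and it satisfies $\sup_{z\in K}\|Lz\|_2\le \|L\|_2\sup_{z\in K}\|z\|_2$. Choosing $\varepsilon\le \eta\big/\bigl(\sqrt{2/\pi}\,\|L\|_2^2\sup_K\|z\|_2^2\bigr)$ gives the stated bound $\eta$. If $L=0$ or $K\subseteq\{0\}$, the error is identically zero and any $\varepsilon$ works.

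There is no real obstacle in this argument. The only points needing care are using the exact (erf-based) GELU rather than the tanh approximation, which matches the standing convention of Theorem~\ref{thm:flexible_route_B_sessa}, and noting that the coordinatewise estimate is uniform because it is quadratic in $|u_i|$ and hence controlled by the norm bound on the compact set.
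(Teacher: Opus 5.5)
Your proof is correct, but it takes a different route from the paper's. The paper argues qualitatively. Since $\mathrm{GELU}$ is $C^1$ with $\mathrm{GELU}(0)=0$ and $\mathrm{GELU}'(0)=1/2$, one has $\mathrm{GELU}(u)=\tfrac12 u+r(u)$ with $\|r(u)\|_2/\|u\|_2\to 0$. Hence $\tfrac{2}{\varepsilon}\mathrm{GELU}(\varepsilon u)-u=\tfrac{2}{\varepsilon}r(\varepsilon u)$ is uniformly small on $K$, because $\varepsilon K$ shrinks to the origin. The second claim is then just the substitution $u=Lz$.

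You instead use the closed form $\mathrm{GELU}(z)=z\Phi(z)$ to write the coordinate error exactly as $2u_i\bigl(\Phi(\varepsilon u_i)-\Phi(0)\bigr)$. You then bound it with the Lipschitz constant $(2\pi)^{-1/2}$ of $\Phi$.

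What you gain is an explicit linear rate, $\sqrt{2/\pi}\,\varepsilon M^2$, and an explicit admissible $\varepsilon$ in the consequence. This explicit choice is also why you need the separate degenerate case $L=0$ or $K\subseteq\{0\}$; the paper's qualitative substitution argument never meets it.

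What the paper's route gains is generality. It uses only the value and slope of the activation at $0$, so it applies verbatim to any such activation, including the tanh-approximated GELU. Yours relies on the erf-based form, or at least on a Lipschitz bound for $z\mapsto\sigma(z)/z$. That dependence is harmless under the standard definition $\mathrm{GELU}(z)=z\Phi(z)$. Note, though, that this lemma is used in the universal-approximation appendix, not under Theorem~\ref{thm:flexible_route_B_sessa}, and there the paper's proof needs nothing beyond the slope at $0$.
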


\begin{proof}
$\mathrm{GELU}$ is $C^1$ and $\mathrm{GELU}'(0)=1/2$. Hence
\[
\mathrm{GELU}(u)=\frac12 u+r(u),
\qquad
\frac{\|r(u)\|_2}{\|u\|_2}\to 0
\quad\text{as }u\to 0.
\]
Apply this uniformly on the compact set $\varepsilon K$.
The second statement follows by substituting $u=Lz$.
\end{proof}

\begin{lemma}[A single Sessa block copies one lag into a dedicated slice]
\label{lem:one_lag_pack}
Fix $\ell\in\{1,\dots,\Tctx-1\}$ and a compact set $\Kset\subset\mathbb R^{\Tctx\times m}$.
Define the compact source-token set
\[
S_0:=\{\pi_0(h_t):\ h\in\Kset,\ 0\le t\le \Tctx-1\}\subset U_0.
\]
Then for every $\eta>0$ there exists a width-$m$ concrete Sessa block
\[
G^{\mathrm{lag}}_\ell\in \mathrm{ConcreteSessaBlocks}_{\Id}(2,m)
\]
such that:
\begin{enumerate}[label=(\roman*), leftmargin=*, nosep]
\item feedback is turned off identically, i.e. $\gamma_t\equiv 0$;

\item for every $h\in\Kset$ and every $t$, the block can be chosen so that its input projection depends only on the source slice \(U_0\) (and fixed biases), i.e. it ignores all coordinates in \(U_r\) for \(r\neq 0\), as well as \(U_{\mathrm{out}}\), \(e_{\mathrm{const}}\), and \(e_{\mathrm{pos}}\);
\[
\pi_r(G^{\mathrm{lag}}_\ell(h)_t)=\pi_r(h_t)
\qquad
\text{for all }r\in\{0,\dots,\Tctx-1\}\setminus\{\ell\},
\]
and the coordinates in $U_{\mathrm{out}}$, $e_{\mathrm{const}}$, and $e_{\mathrm{pos}}$ are unchanged;

\item if
\[
j^*(t,\ell)\in\arg\max_{0\le j\le t}\cos\big((t-\ell)-j\big),
\]
then
\[
\sup_{h\in\Kset}\max_{0\le t\le \Tctx-1}
\left\|
\pi_\ell\big(G^{\mathrm{lag}}_\ell(h)_t\big)-\pi_\ell(h_t)-T_{0\to\ell}\!\big(\pi_0(h_{j^*(t,\ell)})\big)
\right\|_2
\le \eta.
\]
In particular, for $t\ge \ell$ one has $j^*(t,\ell)=t-\ell$.
\end{enumerate}
\end{lemma}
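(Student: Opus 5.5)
We build $G^{\mathrm{lag}}_\ell$ explicitly with $\Norm=\Id$ and $d_k=2$, using the forward branch as a RoPE shift-by-$\ell$ selector, feedback switched off, and a small-signal GELU regime so the value path is approximately linear in $\pi_0(h_j)$.

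\emph{Feedback and gate.} Set $W_{Qb}=W_{Kb}=0$, $w^\gamma=0$, $b^\gamma=0$. Then $\gamma_t\equiv\tanh(0)=0$, so $\Bfb\equiv 0$ and $s=f$; this is item~(i).

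\emph{Input and attention.} Choose $W^{\mathrm{in}}$ and $b^{\mathrm{in}}$ so that the $a$-part reads only $U_0$, plus one constant coordinate.
\begin{itemize}
\item On a $d_{\mathrm{ext}}$-dimensional block, $a_t=\varepsilon R_0\pi_0(h_t)$.
\item On one extra coordinate, $a_t$ equals a fixed bias $a_\ast>0$, giving $\bar a_t$ the constant entry $A=\GELU(a_\ast)$.
\item The gate $g$ is a constant bias equal to $1$ on the coordinates that carry values, and $0$ elsewhere.
\end{itemize}
Both $a$ and $g$ therefore ignore every coordinate outside $U_0$, which gives the first clause of (ii). Build $W_{Qf}$ and $W_{Kf}$ from the constant coordinate only, so that $q^f_t=c\,\mathrm{RoPE}_{-\ell}(1,0)$ and $k^f_j=c(1,0)$. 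As in Lemma~\ref{lem:rope_diag} and Paragraph~4 of Lemma~\ref{lem:ropeTr_uap}, the logits are $c^2\cos((t-\ell)-j)$, up to the positive factor $A^2$. The maximiser $j^*(t,\ell)$ is unique by irrationality of $\pi$, the gap is uniformly positive over the finitely many $t$, and $j^*=t-\ell$ when $t\ge\ell$. Corollary~\ref{cor:softmax_onehot_fixed_scale} then gives $\alphadrv_{t,j^*}\ge 1-\delta$ for $c$ large enough.

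\emph{Values and output.} Take $W_V$ to read the small-signal block of $\bar a$, so that $v_j=\GELU(\varepsilon R_0\pi_0(h_j))$, placed in value coordinates. Take $W^{\mathrm{out}}=\frac{2}{\varepsilon}\times$ the map that sends those coordinates to $U_\ell$ via $J_\ell$, and set $b^{\mathrm{out}}=0$. Then the residual update is nonzero only in $U_\ell$, so every $\pi_r$ with $r\neq\ell$, and $U_{\mathrm{out}}$, $e_{\mathrm{const}}$, $e_{\mathrm{pos}}$, are unchanged. This is the rest of (ii).

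\emph{Error estimate for (iii).} Apply Lemma~\ref{lem:near_diag_attn_transports_values} to the rescaled values $\frac{2}{\varepsilon}v_j$ against the target $\phi(z)=R_0\pi_0 z$, on the compact token set. Lemma~\ref{lem:gelu_small_signal} gives an $\varepsilon$ for which $\sup\|\frac{2}{\varepsilon}\GELU(\varepsilon R_0\pi_0 z)-R_0\pi_0 z\|\le\eta'$. The resulting bound is $2\delta(M_\phi+\eta')+\eta'$, where $M_\phi=\sup_{S_0}\|z\|$. The target lemma is stated for the diagonal, but its proof uses only $\alpha_{t,j^*}\ge1-\delta$ through Lemma~\ref{lem:convex_error}, so it applies with $j^*$ in place of $t$. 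Since $J_\ell$ is an isometry, $\|\pi_\ell(G(h)_t)-\pi_\ell(h_t)-T_{0\to\ell}\pi_0(h_{j^*})\|\le 2\delta(M_\phi+\eta')+\eta'$.

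\emph{Order of parameter choices.} Fix $\eta'=\eta/2$ and pick $\varepsilon$ first. Then pick $\delta$ with $2\delta(M_\phi+\eta')\le\eta/2$, and finally the attention scale $c$.

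\emph{Main obstacle.} The delicate point is the interaction of the fixed $\sigk$ and GELU with the \emph{bounded} layout of $\bar a$. The constant coordinate used for queries and keys must not leak into the values, and the values must stay linear. Keeping the small-signal coordinates and the constant coordinate in disjoint parts of $\bar a$ (width $m$ suffices, since $m\ge d_{\mathrm{ext}}+1$) resolves this, because $W_V$, $W_{Qf}$ and $W_{Kf}$ then read disjoint coordinates.
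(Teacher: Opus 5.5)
Your proposal is correct and follows essentially the same construction as the paper. The feedback gain is set to zero, and a constant-bias coordinate drives a RoPE shift-by-$\ell$ query/key pair that is sharpened via Corollary~\ref{cor:softmax_onehot_fixed_scale}; the values are small-signal GELU reads of $U_0$, and the error bound comes from Lemma~\ref{lem:convex_error} with distinguished index $j^*(t,\ell)$. The only difference is cosmetic: you place the $2/\varepsilon$ rescaling and $J_\ell$ in $W^{\mathrm{out}}$, whereas the paper puts them into $W_V$, takes $g\equiv\mathbf 1$, and sets $W^{\mathrm{out}}$ to the identity on $U_\ell$.
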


\begin{proof}
Reserve one coordinate of $a_t$ for a constant bias so that the corresponding coordinate of $\bar a_t$ is strictly positive.
Fix a diagonalization tolerance \(\delta\in(0,1)\), to be chosen sufficiently small later.
Choose \(W_{Qf},W_{Kf}\) so that only the designated constant coordinate of \(\bar a_t\) contributes to the forward queries and keys, and set
\[
q^f_t\equiv q^{(\ell)}_{\mathrm{diag}}:=c_\ell\,\mathrm{RoPE}_{-\ell}(1,0),
\qquad
k^f_t\equiv k_{\mathrm{diag}}:=c_\ell(1,0)\in\mathbb R^{2},
\]
for some scale \(c_\ell>0\).
Then for \(j\le t\),
\[
\big\langle \mathrm{RoPE}_t(q^f_t),\,\mathrm{RoPE}_j(k^f_j)\big\rangle
=
c_\ell^2\cos\big((t-\ell)-j\big).
\]
For each \(t\), the maximizer of \(j\mapsto \cos((t-\ell)-j)\) on \(\{0,\dots,t\}\) is unique; denote it by \(j^*(t,\ell)\).
Uniqueness is proved as in Lemma~\ref{lem:rope_diag}: for \(t\ge \ell\), the maximizer is \(j=t-\ell\), while for \(t<\ell\) the arguments are distinct negative integers and therefore yield distinct cosine values.
Hence, by the proof of Lemma~\ref{lem:rope_diag} together with Corollary~\ref{cor:softmax_onehot_fixed_scale}, after choosing \(c_\ell\) large enough we obtain
\[
\alpha_{t,j^*(t,\ell)}\ge 1-\delta
\qquad
\forall t=0,\dots,\Tctx-1.
\]

Use \(d_{\mathrm{ext}}\) further coordinates of \(a_t\) to encode the source slice via
\[
a^{\mathrm{src}}_t=\varepsilon\,\pi_0(h_t)\in U_0.
\]
By Lemma~\ref{lem:gelu_small_signal}, after choosing \(\varepsilon>0\) small enough, these coordinates of
\[
\bar a_t=\mathrm{GELU}(a_t)
\]
can be linearly mapped by \(W_V\) to approximate \(T_{0\to\ell}(\pi_0(h_t))\) uniformly on the compact source-token set \(S_0\).
Choose \(W_V\) so that the resulting value vector lives only in the destination slice \(U_\ell\).
Set \(g\equiv \mathbf 1\), set \(W^{\mathrm{out}}\) to be the identity on \(U_\ell\) and zero on all other coordinates,
and set \(b^{\mathrm{out}}=0\).
Choose the feedback branch identically zero.

Define the compact token set
\[
S_{\Kset}:=\{h_t:\ h\in\Kset,\ 0\le t\le \Tctx-1\}\subset\mathbb R^m,
\]
and let
\[
\phi(z):=T_{0\to \ell}(\pi_0(z)),
\qquad z\in S_{\Kset}.
\]
Set
\[
M_\ell:=\sup_{z\in S_{\Kset}}\|\phi(z)\|_2<\infty.
\]
Choose the small-signal approximation so that the induced value map \(v:S_{\Kset}\to U_\ell\) satisfies
\[
\sup_{z\in S_{\Kset}}\|v(z)-\phi(z)\|_2\le \eta_{\mathrm{val}}.
\]
Then for every \(h\in\Kset\) and every \(t\), Lemma~\ref{lem:convex_error} applied to
\[
f_t(h)=\sum_{j\le t}\alpha_{t,j}\,v(h_j)
\]
with distinguished index \(j^*(t,\ell)\) gives
\[
\big\|f_t(h)-v(h_{j^*(t,\ell)})\big\|_2
\le
2\delta\,(M_\ell+\eta_{\mathrm{val}}).
\]
Therefore
\[
\big\|f_t(h)-\phi(h_{j^*(t,\ell)})\big\|_2
\le
2\delta\,(M_\ell+\eta_{\mathrm{val}})+\eta_{\mathrm{val}}.
\]
Since
\[
\phi(h_{j^*(t,\ell)})=T_{0\to\ell}\big(\pi_0(h_{j^*(t,\ell)})\big),
\]
choosing \(\delta\) and \(\eta_{\mathrm{val}}\) sufficiently small makes the total error at most \(\eta\).
All remaining coordinates are unchanged by construction.
\end{proof}

\begin{lemma}[A diagonal Sessa block realizes a block of tokenwise GELU units]
\label{lem:one_batch_tokenwise}
Let
\[
A:\pi_{\mathrm{st}}(\mathbb R^m)\to\mathbb R^q
\]
be affine, with
\[
q\in\{1,\dots,m-1\},
\]
and let
\[
B:\mathbb R^q\to U_{\mathrm{out}}
\]
be linear.
Fix a compact set
\[
S\subset \pi_{\mathrm{st}}(\mathbb R^m).
\]
Then for every $\eta>0$ there exists a width-$m$ concrete Sessa block
\[
G^{\mathrm{batch}}\in \mathrm{ConcreteSessaBlocks}_{\Id}(2,m)
\]
such that:
\begin{enumerate}[label=(\roman*), leftmargin=*, nosep]
\item feedback is turned off identically;
\item the storage coordinates are preserved exactly:
\[
\pi_{\mathrm{st}}(G^{\mathrm{batch}}(h)_t)=\pi_{\mathrm{st}}(h_t)
\qquad\forall h,\forall t;
\]
\item the input projection ignores the current output slice, i.e. it depends only on $\pi_{\mathrm{st}}(h_t)$;
\item for every sequence $h$ whose tokenwise storage states lie in $S$,
\[
\sup_t
\left\|
\pi_{\mathrm{out}}(G^{\mathrm{batch}}(h)_t)-\pi_{\mathrm{out}}(h_t)-B(\mathrm{GELU}(A(\pi_{\mathrm{st}}(h_t))))
\right\|_2
\le \eta.
\]
\end{enumerate}
\end{lemma}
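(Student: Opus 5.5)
The plan is to build $G^{\mathrm{batch}}$ by switching off every sequence-mixing mechanism, so that the block acts tokenwise, and then to let the gated path produce $B(\mathrm{GELU}(A(\cdot)))$. We write the block's input projection so that $a_t = A(\pi_{\mathrm{st}}(h_t))$ occupies $q$ coordinates of $a_t\in\mathbb R^m$. This is possible because $q\le m-1$ and $A$ is affine (bias goes into $b^{\mathrm{in}}$). We reserve one further coordinate of $a_t$ for a constant bias. All rows of $W^{\mathrm{in}}$ indexed by $U_{\mathrm{out}}$ are set to zero, which gives (iii). The coordinates then satisfy $\bar a_t=\mathrm{GELU}(a_t)$, with the first $q$ entries equal to $\mathrm{GELU}(A(\pi_{\mathrm{st}}(h_t)))$.

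Next comes the mixing. We set $w^\gamma=0$ and $b^\gamma=0$, so $\gamma_t\equiv 0$ and $s=f$; this gives (i). For the forward attention we use Lemma~\ref{lem:rope_diag} with $d_k=2$: the queries and keys are driven only by the constant coordinate (scaled by $c_{\mathrm{diag}}$), so $\alphadrv_{t,t}\ge 1-\delta$ for all inputs. We choose $W_V$ as the identity on the $q$ GELU coordinates into some $q$-dimensional auxiliary space, zero elsewhere, so that $v_t=\mathrm{GELU}(A(\pi_{\mathrm{st}}(h_t)))$. For the gate we set the $g$-half of $W^{\mathrm{in}}$ to zero with $b^{\mathrm{in}}$ giving $g_t\equiv\mathbf 1$ on those coordinates. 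Then $W^{\mathrm{out}}$ is taken to be the linear map $B$ on those coordinates, composed into $U_{\mathrm{out}}$, and zero on all others, with $b^{\mathrm{out}}=0$. The residual update therefore changes only $U_{\mathrm{out}}$, so (ii) holds exactly. The $U_{\mathrm{out}}$-increment is $B f_t$.

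The remaining error is $\|B f_t - B v_t\|\le \|B\|_2\cdot 2\delta V_{\max}$ by Lemma~\ref{lem:mixing_error}. Here $V_{\max}=\sup_{z\in S}\|\mathrm{GELU}(A z)\|_2<\infty$ by compactness of $S$ and continuity. Choosing $\delta\le \eta/(2\|B\|_2 V_{\max}+1)$ and $c_{\mathrm{diag}}$ accordingly gives (iv), uniformly in $t$.

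The main point to check is dimension bookkeeping: $a_t$ must hold the $q$ GELU inputs plus one constant coordinate, and $q\le m-1$ ensures this fits. The value dimension $m$ also suffices. A subtle issue is that the constant coordinate must satisfy $\mathrm{GELU}(\text{bias})>0$; we pick a bias of $1$. A second subtlety is that the gate coordinates outside the used ones are irrelevant, since $W^{\mathrm{out}}$ vanishes there. Finally, no small-signal linearization (Lemma~\ref{lem:gelu_small_signal}) is needed here, because the GELU is exactly the desired nonlinearity.
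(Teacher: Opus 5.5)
Your proposal is correct and follows essentially the same construction as the paper: feedback off via $\gamma_t\equiv 0$, queries and keys driven only by a constant post-GELU coordinate so that Lemma~\ref{lem:rope_diag} makes the forward attention near-diagonal, gate $g\equiv\mathbf 1$, the input rows for $U_{\mathrm{out}}$ zeroed, and the error controlled by Lemma~\ref{lem:convex_error}. The only cosmetic difference is that the paper folds $B$ into $W_V$ and takes $W^{\mathrm{out}}$ to be the identity on $U_{\mathrm{out}}$, whereas you put $B$ into $W^{\mathrm{out}}$; this merely changes the error constant from $\sup_{z\in S}\|B(\mathrm{GELU}(Az))\|_2$ to $\|B\|_2\,V_{\max}$ (cap $\delta$ below $1$ as well).
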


\begin{proof}
Let the first \(q\) coordinates of \(a_t\) encode the affine preactivations
\[
A(\pi_{\mathrm{st}}(h_t)).
\]
Reserve one additional coordinate of $a_t$ for a constant bias so that the corresponding coordinate of $\bar a_t$ is strictly positive.
Choose $W_{Qf},W_{Kf}$ so that only that coordinate contributes to the forward queries and keys, yielding constant queries and keys that make the forward attention arbitrarily close to diagonal uniformly in $t$ by Lemma~\ref{lem:rope_diag}.
 
Choose \(W_V\) so that the resulting value vector equals
\[
B(\bar a_{1:q})\in U_{\mathrm{out}}
\]
in the output slice and is zero on the storage slice.
Choose \(g\equiv \mathbf 1\), choose \(W^{\mathrm{out}}\) to be the identity on \(U_{\mathrm{out}}\) and zero on the storage slice,
set \(b^{\mathrm{out}}=0\), and set the columns of the input projection corresponding to the current output slice \(U_{\mathrm{out}}\) to zero.
Choose the feedback branch identically zero.

Let
\[
\phi(z):=B(\mathrm{GELU}(A(z))),
\qquad z\in S,
\]
and set
\[
M_\phi:=\sup_{z\in S}\|\phi(z)\|_2<\infty.
\]
Because the input projection ignores the current output slice, the preactivations \(a_t\) depend only on \(\pi_{\mathrm{st}}(h_t)\), hence for every sequence \(h\) whose tokenwise storage states lie in \(S\), the resulting value vector is exactly
\[
v_t=\phi(\pi_{\mathrm{st}}(h_t))\in U_{\mathrm{out}}.
\]
By the diagonal forward-attention construction, after choosing the diagonalization tolerance \(\delta\in(0,1)\) sufficiently small we have
\[
\alpha_{t,t}\ge 1-\delta
\qquad\forall t=0,\dots,\Tctx-1.
\]
Therefore, for every such sequence \(h\) and every \(t\), Lemma~\ref{lem:convex_error} applied to
\[
f_t=\sum_{j\le t}\alpha_{t,j}v_j
\]
with distinguished index \(j^*=t\) gives
\[
\|f_t-v_t\|_2\le 2\delta M_\phi.
\]
Choosing \(\delta\) so that \(2\delta M_\phi\le \eta\) (trivial if \(M_\phi=0\)) yields
\[
\sup_t\left\|f_t-\phi(\pi_{\mathrm{st}}(h_t))\right\|_2\le \eta.
\]
Since the residual update is added only in \(U_{\mathrm{out}}\), this gives the desired conclusion.
\end{proof}

\begin{corollary}[Tokenwise GELU approximation by stacked Sessa blocks]
\label{cor:serialized_tokenwise_concrete}
Let
\[
S\subset \pi_{\mathrm{st}}(\mathbb R^m)
\]
be compact and let
\[
\Theta:S\to U_{\mathrm{out}}
\]
be continuous.
Then for every $\eta>0$ there exists a finite composition
\[
G^{\mathrm{tok}}=G_M^{\mathrm{batch}}\circ\cdots\circ G_1^{\mathrm{batch}},
\qquad
G_b^{\mathrm{batch}}\in \mathrm{ConcreteSessaBlocks}_{\Id}(2,m),
\]
such that:
\begin{enumerate}[label=(\roman*), leftmargin=*, nosep]
\item every $G_b^{\mathrm{batch}}$ preserves the storage slice exactly and ignores the current output slice in its input projection;
\item for every sequence $h$ whose tokenwise storage states lie in $S$,
\[
\pi_{\mathrm{st}}(G^{\mathrm{tok}}(h)_t)=\pi_{\mathrm{st}}(h_t)\qquad\forall t,
\]
and
\[
\sup_t
\left\|
\pi_{\mathrm{out}}(G^{\mathrm{tok}}(h)_t)-\pi_{\mathrm{out}}(h_t)-\Theta(\pi_{\mathrm{st}}(h_t))
\right\|_2
\le \eta.
\]
\end{enumerate}
\end{corollary}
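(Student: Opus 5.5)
The statement to prove is Corollary~\ref{cor:serialized_tokenwise_concrete}. The plan is to approximate $\Theta$ by a one-hidden-layer GELU network and then split its hidden units into batches. Each batch is implemented by one diagonal block from Lemma~\ref{lem:one_batch_tokenwise}, and the batches are summed through the residual connections acting on $U_{\mathrm{out}}$.

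\emph{Step 1: a single wide GELU network.} Apply Lemma~\ref{lem:tokenwise_gelu_vector_uap} on the compact set $S\subset\pi_{\mathrm{st}}(\mathbb R^m)$, with $U_{\mathrm{out}}\cong\mathbb R^{d_{\mathrm{ext}}}$, to the target $\Theta$ at accuracy $\eta/2$. This gives a width $r$ and affine maps $A:\pi_{\mathrm{st}}(\mathbb R^m)\to\mathbb R^r$ and $B:\mathbb R^r\to U_{\mathrm{out}}$ with
\[
\sup_{z\in S}\|B(\mathrm{GELU}(A(z)))-\Theta(z)\|_2\le \eta/2 .
\]
We may assume $B$ is linear. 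If it has a bias $b_0$, add one extra hidden unit with a large constant preactivation $c$, so that $\mathrm{GELU}(c)\approx c$ up to an exponentially small error, and give it outgoing weight $b_0/c$; alternatively, use two units with constant preactivations and solve exactly for the bias.

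\emph{Step 2: batching.} Fix a batch size $q\le m-2$; one coordinate of $a_t$ must remain free for the constant used in diagonalization, and the lemma requires $q\le m-1$. Partition the $r$ hidden units into $M=\lceil r/q\rceil$ consecutive groups $I_1,\dots,I_M$. Let $A_b$ be the restriction of $A$ to the group $I_b$, padding with zero units if the last group is short; this is harmless because $\mathrm{GELU}(0)=0$. Let $B_b$ be the corresponding columns of $B$, so that
\[
B\,\mathrm{GELU}(A(z))=\sum_b B_b\,\mathrm{GELU}(A_b(z)).
\]
For each $b$, Lemma~\ref{lem:one_batch_tokenwise}, applied with $(A_b,B_b,S)$ and tolerance $\eta/(2M)$, produces a block $G_b^{\mathrm{batch}}$. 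Each such block preserves the storage slice exactly and its input projection ignores $U_{\mathrm{out}}$.

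\emph{Step 3: composition.} The key point is that composition is well defined on the relevant inputs. Each $G_b^{\mathrm{batch}}$ preserves the storage coordinates, so if $h$ has tokenwise storage states in $S$, then so does every intermediate sequence. Hence the hypothesis of Lemma~\ref{lem:one_batch_tokenwise} holds at every stage. Because the blocks ignore the output slice, there is also no drift of the compact set $S$, and so no thickening argument such as Lemma~\ref{lem:composition_error} is needed. Telescoping the $U_{\mathrm{out}}$ increments then gives
\[
\pi_{\mathrm{out}}(G^{\mathrm{tok}}(h)_t)-\pi_{\mathrm{out}}(h_t)=\sum_b\bigl[B_b\,\mathrm{GELU}(A_b(\pi_{\mathrm{st}}(h_t)))+e_{b,t}\bigr],\qquad \|e_{b,t}\|_2\le \eta/(2M).
\]
The triangle inequality together with Step 1 yields total error at most $\eta$.

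The only mildly delicate point is the bias of $B$ in Step 1, which Lemma~\ref{lem:one_batch_tokenwise} does not allow because it takes $B$ linear. I would handle it with a constant hidden unit: preactivation $c$ large, so that $\mathrm{GELU}(c)=c\,\Phi(c)$, with error absorbed into the $\eta/2$ budget. Everything else is bookkeeping.
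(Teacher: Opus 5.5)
Your proposal is correct and follows the paper's proof essentially step for step: a one-hidden-layer approximation via Lemma~\ref{lem:tokenwise_gelu_vector_uap} at accuracy $\eta/2$, a split of the hidden units into batches of size at most $m-1$, one diagonal block per batch from Lemma~\ref{lem:one_batch_tokenwise}, and summation of the increments in $U_{\mathrm{out}}$, which is valid because every block preserves the storage slice exactly and ignores the output slice. The only difference is in the bias: the paper realizes it exactly with an extra constant batch $A_{\mathrm{const}}\equiv 1$, $L_{\mathrm{const}}(\xi)=\xi\,b_{\mathrm{tot}}/\mathrm{GELU}(1)$, so you can replace the large-$c$ approximation by the outgoing weight $b_0/\mathrm{GELU}(c)$ for any $c$ with $\mathrm{GELU}(c)\neq 0$, and then the $\eta/2$ budget does not need to be split further.
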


\begin{proof}
By Lemma~\ref{lem:tokenwise_gelu_vector_uap}, for every $\eta'>0$ there exist
a width $R\in\mathbb N^*$, an affine map
\[
A_{\mathrm{tot}}:\pi_{\mathrm{st}}(\mathbb R^m)\to\mathbb R^R,
\]
and an affine map
\[
B_{\mathrm{tot}}:\mathbb R^R\to U_{\mathrm{out}}
\]
such that
\[
\sup_{z\in S}
\left\|
B_{\mathrm{tot}}(\mathrm{GELU}(A_{\mathrm{tot}}(z))) - \Theta(z)
\right\|_2
\le \eta'.
\]

Write
\[
B_{\mathrm{tot}}(u)=L_{\mathrm{tot}}u+b_{\mathrm{tot}},
\]
where
\[
L_{\mathrm{tot}}:\mathbb R^R\to U_{\mathrm{out}}
\]
is linear and
\[
b_{\mathrm{tot}}\in U_{\mathrm{out}}.
\]

Partition the $R$ hidden units into batches of size at most $m-1$:
\[
R=q_1+\cdots+q_M,
\qquad
1\le q_b\le m-1.
\]
Write accordingly
\[
A_{\mathrm{tot}}=(A_1,\dots,A_M),
\]
with each
\[
A_b:\pi_{\mathrm{st}}(\mathbb R^m)\to\mathbb R^{q_b}
\]
affine, and decompose the linear map $L_{\mathrm{tot}}$ as
\[
L_{\mathrm{tot}}(u^{(1)},\dots,u^{(M)})=\sum_{b=1}^M L_b u^{(b)},
\]
where each
\[
L_b:\mathbb R^{q_b}\to U_{\mathrm{out}}
\]
is linear.

Choose $\eta'>0$ so that
\[
\eta'\le \eta/2
\]
and
\[
\sup_{z\in S}
\left\|
B_{\mathrm{tot}}(\mathrm{GELU}(A_{\mathrm{tot}}(z))) - \Theta(z)
\right\|_2
\le \eta'.
\]

Apply Lemma~\ref{lem:one_batch_tokenwise} to each pair $(A_b,L_b)$ with accuracy
\[
\frac{\eta}{2(M+1)}.
\]
This yields concrete Sessa batch blocks
\[
G^{\mathrm{batch}}_b\in \mathrm{ConcreteSessaBlocks}_{\Id}(2,m),
\qquad b=1,\dots,M,
\]
such that each block preserves the storage slice exactly, ignores the current output slice in its input projection,
and contributes
\[
L_b(\mathrm{GELU}(A_b(\cdot)))
\]
to the output slice up to error at most $\eta/(2(M+1))$.

It remains to represent the constant term $b_{\mathrm{tot}}$.
Choose the scalar constant hidden map
\[
A_{\mathrm{const}}:\pi_{\mathrm{st}}(\mathbb R^m)\to\mathbb R,
\qquad
A_{\mathrm{const}}(z)\equiv 1,
\]
and the linear map
\[
L_{\mathrm{const}}:\mathbb R\to U_{\mathrm{out}},
\qquad
L_{\mathrm{const}}(\xi):=\frac{\xi}{\mathrm{GELU}(1)}\,b_{\mathrm{tot}}.
\]
Then
\[
L_{\mathrm{const}}(\mathrm{GELU}(A_{\mathrm{const}}(z)))=b_{\mathrm{tot}}
\qquad\forall z\in S.
\]
Apply Lemma~\ref{lem:one_batch_tokenwise} once more to $(A_{\mathrm{const}},L_{\mathrm{const}})$,
again with accuracy
\[
\frac{\eta}{2(M+1)}.
\]

Since each batch block preserves storage exactly and ignores the current output slice in its input projection,
all blocks act on the same storage input and their contributions add in $U_{\mathrm{out}}$.
Hence the cumulative implementation error of the $M$ linear batches together with the one constant batch is at most
\[
(M+1)\cdot \frac{\eta}{2(M+1)}=\frac{\eta}{2}.
\]
Combining this with the approximation error $\eta'\le \eta/2$ gives the total error bound $\eta$.
\end{proof}

\subsection{Sessa universality for causal maps}

\begin{theorem*}[Universal approximation for Sessa with adapters]
Let $\mathcal{D}\subset\mathbb{R}^{\Tctx\times d_{\mathrm{ext}}}$ be compact and let
\[
F:\mathcal{D}\to\mathbb{R}^{\Tctx\times d_{\mathrm{ext}}}
\]
be continuous and causal.
Then for any $\varepsilon>0$ there exist a model width $m\in\mathbb N^*$,
an even key/query width $d_k$ (in fact $d_k=2$ suffices),
tokenwise adapters
\[
\mathrm{Embed}:\mathbb R^{d_{\mathrm{ext}}}\to\mathbb R^m,
\qquad
\mathrm{Unembed}:\mathbb R^m\to\mathbb R^{d_{\mathrm{ext}}},
\]
and a finite-depth network
\[
G\in \Omega_{\mathrm{Sessa},\Id}^{d_k}(m)
\]
consisting only of the concrete Sessa blocks from Section~\ref{sec:model_arch},
such that
\[
\sup_{x\in\mathcal{D}}
\Big\|
F(x)-\mathrm{Unembed}\big(G(\mathrm{Embed}(x))\big)
\Big\|_F
<\varepsilon.
\]
\end{theorem*}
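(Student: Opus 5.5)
The plan is to assemble the network from the building blocks already proved, in the storage layout with model width $m=(\Tctx+1)d_{\mathrm{ext}}+2$ and $d_k=2$. The construction has four stages.

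\textbf{Stage 1: embedding and positional code.} First fix $\mathrm{Embed}$. It places $x_t$ into $U_0$, sets the $e_{\mathrm{const}}$ coordinate to $1$, and sets all other coordinates to $0$. Then $\mathrm{Unembed}:=R_{\mathrm{out}}\circ\pi_{\mathrm{out}}$; this is affine, with a bias only if needed. Next apply the positional block $G^{\mathrm{pos}}$ of Corollary~\ref{cor:pos_code_padding}, with $u=e_{\mathrm{pos}}$. By Corollary~\ref{cor:pos_code_one_direction} it writes $\lambda c_t e_{\mathrm{pos}}$ exactly. Because the $e_{\mathrm{pos}}$ coordinate of every embedded token is $0$, the position scalars are the distinct values $\lambda c_t$ independently of $x$.

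\textbf{Stage 2: prefix encoding.} Apply $G^{\mathrm{lag}}_1,\dots,G^{\mathrm{lag}}_{\Tctx-1}$ from Lemma~\ref{lem:one_lag_pack}. Each block reads only $U_0$, which is never modified, and writes into its own empty slice $U_\ell$. So the errors do not compound through inputs: each slice is within $\eta$ of the ideal $T_{0\to\ell}(\pi_0(h_{j^*(t,\ell)}))$. Define the ideal storage state $\widehat h_t(x)$ exactly as in Paragraph~5 of the proof of Lemma~\ref{lem:ropeTr_uap}. Its components are $U_0=x_t$, $U_\ell=x_{j^*(t,\ell)}$, $e_{\mathrm{const}}=1$ and $e_{\mathrm{pos}}=\lambda c_t$. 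Reusing that argument verbatim, the sets $\widehat S_t$ lie in distinct hyperplanes of the $e_{\mathrm{pos}}$ coordinate. Hence $\widehat\Phi(\widehat h_t(x)):=F(x)_t$ is well defined and continuous: on each $\widehat S_t$ it equals $F_t\circ\mathrm{Read}_t$, using Lemma~\ref{lem:causal_prefix_factorization}. Extend it by Tietze to $\widetilde\Phi$ on all of $\pi_{\mathrm{st}}(\mathbb R^m)$.

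\textbf{Stage 3: tokenwise readout.} Apply Corollary~\ref{cor:serialized_tokenwise_concrete} with $\Theta=J_{\mathrm{out}}\circ\widetilde\Phi$ on a compact thickening $S$ of $\widehat S$. The output slice starts at $0$ after embedding, since the lag blocks leave $U_{\mathrm{out}}$ untouched. The readout therefore writes approximately $J_{\mathrm{out}}F(x)_t$, and $\mathrm{Unembed}$ recovers it.

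\textbf{Error control.} This is where the main work lies. The encoded states must land inside the compact thickening on which the readout approximation holds, and the uniform continuity of $\widetilde\Phi$ must absorb the encoding error. I would follow Paragraph~6 of Lemma~\ref{lem:ropeTr_uap}, or use Lemma~\ref{lem:composition_error} directly. Fix a thickening radius $r_{\mathrm{nbhd}}$ and take a uniform-continuity modulus of $\widetilde\Phi$ on $\overline{\mathcal N}_{r_{\mathrm{nbhd}}}(\widehat S)$ for tolerance $\varepsilon/(3\sqrt{\Tctx})$. Then choose the lag-block tolerances so that $\sqrt{\Tctx}\,\eta\le\min\{r_{\mathrm{nbhd}},\delta_{\mathrm{UC}}\}$, and the readout tolerance at most $\varepsilon/(3\sqrt{\Tctx})$.

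The exactness properties do most of the work here. Storage is preserved exactly by the batch blocks, the positional write is exact, and $U_0$ is never altered. So the only errors are the per-slice lag-copy errors and the readout error. Summing the squared errors over $t$ gives a Frobenius bound below $\varepsilon$.
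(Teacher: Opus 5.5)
Your proposal is correct and follows the paper's proof essentially step for step: the same width $m=(\Tctx+1)d_{\mathrm{ext}}+2$ and storage layout, the positional block with exact write along $e_{\mathrm{pos}}$, and non-compounding lag-copy blocks that read only $U_0$. It likewise uses the Tietze-extended target map built from $F_t\circ\mathrm{Read}_t$ on the separated sets $\widehat S_t$, the serialized tokenwise readout of Corollary~\ref{cor:serialized_tokenwise_concrete}, and the same uniform-continuity error budget.

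The only deviation is your $\mathrm{Embed}$, which leaves $U_{\mathrm{out}}$ empty, so $\mathrm{Unembed}\circ\mathrm{Embed}=0$. The paper's adapter setup stipulates $\mathrm{Unembed}\circ\mathrm{Embed}=\Id$. To meet it, the paper also copies $x_t$ into $U_{\mathrm{out}}$ and has the readout write the increment $\Theta=J_{\mathrm{out}}\circ\widetilde\Phi-T_{0\to\mathrm{out}}\circ\pi_0$ rather than $J_{\mathrm{out}}\circ\widetilde\Phi$. This is a one-line change to your Stage~3.
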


\begingroup\makeatletter
\@ifundefined{hyper@linkstart}
  {\def\@currentlabel{\ref{thm:sessa_uap_main}}}
  {\def\@currentlabel{\ref*{thm:sessa_uap_main}}}
\label{app:proof_sessa_uap}
\endgroup
\begin{proof}[Proof of Theorem~\ref{thm:sessa_uap_main}]
Fix $\varepsilon>0$.

\paragraph{Step 0: causal factorization.}
For each \(t\in\{0,\dots,\Tctx-1\}\), define the compact set of attainable prefixes
\[
\Pref_t:=\{(x_0,\dots,x_t):x\in\mathcal D\}\subset (\mathbb R^{d_{\mathrm{ext}}})^{t+1}.
\]
By Lemma~\ref{lem:causal_prefix_factorization}, there exists a unique continuous map
\[
\widehat F_t:\Pref_t\to\mathbb R^{d_{\mathrm{ext}}},
\qquad
\widehat F_t(x_0,\dots,x_t):=F(x)_t
\quad (x\in\mathcal D).
\]
Since \(\Pref_t\) is compact in Euclidean space, it is closed in \((\mathbb R^{d_{\mathrm{ext}}})^{t+1}\).
By Tietze extension applied coordinatewise, extend \(\widehat F_t\) to a continuous map
\[
F_t:(\mathbb R^{d_{\mathrm{ext}}})^{t+1}\to\mathbb R^{d_{\mathrm{ext}}}
\]
such that
\[
F(x)_t=F_t(x_0,\dots,x_t)
\qquad\forall x\in\mathcal D.
\]

\paragraph{Step 1: width and adapters}
Set
\[
m:=(\Tctx+1)d_{\mathrm{ext}}+2.
\]
Use the storage decomposition introduced above.

Define the tokenwise embedding by
\[
\mathrm{Embed}(x)_t
=
J_0(x_t)+J_{\mathrm{out}}(x_t)+e_{\mathrm{const}},
\]
that is, place $x_t$ in both $U_0$ and $U_{\mathrm{out}}$, set the constant coordinate to $1$, and set all other coordinates to $0$.

Define $\mathrm{Unembed}$ tokenwise by
\[
\mathrm{Unembed}(h)_t:=R_{\mathrm{out}}(\pi_{\mathrm{out}}(h_t))\in\mathbb R^{d_{\mathrm{ext}}}.
\]
Then
\[
\mathrm{Unembed}(\mathrm{Embed}(x))=x
\qquad \forall x\in\mathbb R^{\Tctx\times d_{\mathrm{ext}}},
\]
$\mathrm{Embed}(\mathcal D)$ is compact, and $\mathrm{Unembed}$ is linear and non-expansive in Frobenius norm.

\paragraph{Step 2: positional code}
Apply Corollary~\ref{cor:pos_code_padding} with $u=e_{\mathrm{pos}}$ to obtain a block
\[
G^{\mathrm{pos}}\in \mathrm{ConcreteSessaBlocks}_{\Id}(2,m)
\]
and pairwise distinct scalars $c_0,\dots,c_{\Tctx-1}$ such that
\[
G^{\mathrm{pos}}(h)_t = h_t + c_t e_{\mathrm{pos}}
\qquad\forall h,\forall t.
\]
By construction, $G^{\mathrm{pos}}$ leaves $U_0,\dots,U_{\Tctx-1}$ and $U_{\mathrm{out}}$ unchanged.

\paragraph{Step 3: prefix encoding}
Fix a packing tolerance
\[
\delta_{\mathrm{pack}}>0,
\]
to be specified later in Step~4.
For each lag $\ell=1,\dots,\Tctx-1$, apply Lemma~\ref{lem:one_lag_pack} successively on the compact set obtained after the previous blocks to construct a concrete Sessa block
\[
G^{\mathrm{lag}}_\ell\in \mathrm{ConcreteSessaBlocks}_{\Id}(2,m)
\]
that preserves all coordinates except $U_\ell$ and writes an approximation of the lag-$\ell$ token from $U_0$ into $U_\ell$.

For $t\in\{0,\dots,\Tctx-1\}$ and $\ell\in\{1,\dots,\Tctx-1\}$, define
\[
j^*(t,\ell)\in\arg\max_{0\le j\le t}\cos\big((t-\ell)-j\big).
\]
For $t\ge \ell$ one has $j^*(t,\ell)=t-\ell$.

Define the ideal encoded state $\widehat h_t(x)\in\mathbb R^m$ by:
\[
\pi_0(\widehat h_t(x))=J_0(x_t),
\qquad
\pi_\ell(\widehat h_t(x))=J_\ell\!\big(x_{j^*(t,\ell)}\big)\quad (1\le \ell\le \Tctx-1),
\]
\[
\pi_{\mathrm{out}}(\widehat h_t(x))=J_{\mathrm{out}}(x_t),
\qquad
\langle \widehat h_t(x),e_{\mathrm{const}}\rangle=1,
\qquad
\langle \widehat h_t(x),e_{\mathrm{pos}}\rangle=c_t.
\]

Since each lag block depends only on the exact source slice \(U_0\) and fixed biases, while writing only to its own destination slice and preserving all previously written slices, the packing errors do not propagate to later lag blocks.
Hence, choosing per-lag accuracies $\eta_\ell>0$ with
\[
\sum_{\ell=1}^{\Tctx-1}\eta_\ell^2\le \delta_{\mathrm{pack}}^2,
\]
we obtain for
\[
G^{\mathrm{pack}}
:=
G^{\mathrm{lag}}_{\Tctx-1}\circ\cdots\circ G^{\mathrm{lag}}_1\circ G^{\mathrm{pos}}
\]
that
\[
\sup_{x\in\mathcal D}\max_{0\le t\le \Tctx-1}
\left\|
G^{\mathrm{pack}}(\mathrm{Embed}(x))_t - \widehat h_t(x)
\right\|_2
\le \delta_{\mathrm{pack}}.
\]

\paragraph{Step 4: target map}
For each $t$, let
\[
\widehat S_t:=\{\widehat h_t(x):x\in\mathcal D\}\subset\mathbb R^m,
\qquad
\widehat S:=\bigcup_{t=0}^{\Tctx-1}\widehat S_t.
\]
Each $\widehat S_t$ is compact.
Since the $e_{\mathrm{pos}}$-coordinate equals $c_t$ on $\widehat S_t$ and the scalars $c_t$ are distinct, the sets $\widehat S_t$ are pairwise disjoint and positively separated.

Define the linear readout
\[
\mathrm{Read}_t:\mathbb R^m\to (\mathbb R^{d_{\mathrm{ext}}})^{t+1}
\]
by
\[
\mathrm{Read}_t(u):=\big(R_t\pi_t(u),\,R_{t-1}\pi_{t-1}(u),\,\dots,\,R_0\pi_0(u)\big).
\]
For $u=\widehat h_t(x)$, one has
\[
R_0\pi_0(\widehat h_t(x))=x_t,
\]
and for \(1\le \ell\le t\),
\[
R_\ell\pi_\ell(\widehat h_t(x))=x_{j^*(t,\ell)}.
\]
Since \(j^*(t,\ell)=t-\ell\) for \(1\le \ell\le t\), it follows that
\[
\mathrm{Read}_t(\widehat h_t(x))=(x_0,\dots,x_t).
\]

Define
\[
\widehat\Phi:\widehat S\to U_{\mathrm{out}}
\]
by
\[
\widehat\Phi(u):=J_{\mathrm{out}}\big(F_t(\mathrm{Read}_t(u))\big)
\qquad\text{for }u\in\widehat S_t.
\]
This is well defined because the index $t$ is uniquely determined by the $e_{\mathrm{pos}}$-coordinate of $u$, and if
\[
u=\widehat h_t(x)=\widehat h_t(x'),
\]
then
\[
\mathrm{Read}_t(u)=(x_0,\dots,x_t)=(x'_0,\dots,x'_t),
\]
so the value of $J_{\mathrm{out}}(F_t(\mathrm{Read}_t(u)))$ does not depend on the choice of $x$.

Moreover, on each $\widehat S_t$ one has
\[
\widehat\Phi|_{\widehat S_t}
=
J_{\mathrm{out}}\circ F_t\circ \mathrm{Read}_t|_{\widehat S_t},
\]
hence $\widehat\Phi$ is continuous on each $\widehat S_t$, and therefore continuous on $\widehat S$.

Apply Tietze extension coordinatewise to the $\mathbb R^{d_{\mathrm{ext}}}$-valued map
\[
R_{\mathrm{out}}\circ \widehat\Phi:\widehat S\to\mathbb R^{d_{\mathrm{ext}}}.
\]
This yields a continuous extension
\[
\bar\Phi:\mathbb R^m\to\mathbb R^{d_{\mathrm{ext}}}
\]
of $R_{\mathrm{out}}\circ \widehat\Phi$. Set
\[
\widetilde\Phi:=J_{\mathrm{out}}\circ \bar\Phi:\mathbb R^m\to U_{\mathrm{out}}.
\]
Then $\widetilde\Phi$ extends $\widehat\Phi$.

Fix $\rho>0$ and let
\[
N:=\overline{\mathcal N}_{\rho}(\widehat S)\subset\mathbb R^m.
\]
Then $N$ is compact, so $\widetilde\Phi$ is uniformly continuous on $N$.
Choose $\delta_{\mathrm{UC}}>0$ such that
\[
u,v\in N,\ \|u-v\|_2\le \delta_{\mathrm{UC}}
\quad\Longrightarrow\quad
\|\widetilde\Phi(u)-\widetilde\Phi(v)\|_2\le \frac{\varepsilon}{2\sqrt{\Tctx}}.
\]

Choose $\delta_{\mathrm{pack}}>0$ small enough that
\[
\delta_{\mathrm{pack}}\le \min\{\rho,\delta_{\mathrm{UC}}\}
\]
and that the encoding construction of Step~3 yields
\[
\sup_{x\in\mathcal D}\max_{0\le t\le \Tctx-1}
\left\|
G^{\mathrm{pack}}(\mathrm{Embed}(x))_t - \widehat h_t(x)
\right\|_2
\le \delta_{\mathrm{pack}}.
\]
Then for every $x\in\mathcal D$ and every $t$ one has
\[
G^{\mathrm{pack}}(\mathrm{Embed}(x))_t\in N,
\]
and
\[
\left\|
\widetilde\Phi(G^{\mathrm{pack}}(\mathrm{Embed}(x))_t)-J_{\mathrm{out}}(F(x)_t)
\right\|_2
\le \frac{\varepsilon}{2\sqrt{\Tctx}}.
\]

\paragraph{Step 5: tokenwise readout}
Define the compact storage-token set
\[
S_{\mathrm{st}}
:=
\left\{
\pi_{\mathrm{st}}\big(G^{\mathrm{pack}}(\mathrm{Embed}(x))_t\big)
:\ x\in\mathcal D,\ 0\le t\le \Tctx-1
\right\}.
\]
Define
\[
\Theta:S_{\mathrm{st}}\to U_{\mathrm{out}},
\qquad
\Theta(z):=\widetilde\Phi(\iota_{\mathrm{st}}(z)) - T_{0\to \mathrm{out}}(\pi_0(z)).
\]
Since $\iota_{\mathrm{st}}$ is linear and $\widetilde\Phi$ is continuous, $\Theta$ is continuous.
Moreover, for every $x\in\mathcal D$ and every $t$,
\[
\iota_{\mathrm{st}}\!\big(\pi_{\mathrm{st}}(G^{\mathrm{pack}}(\mathrm{Embed}(x))_t)\big)
=
G^{\mathrm{pack}}(\mathrm{Embed}(x))_t,
\]
since $\mathrm{Embed}$ initializes the output slice as a copy of $U_0$ and $G^{\mathrm{pack}}$ preserves $U_{\mathrm{out}}$.
Hence
\[
\Theta\!\big(\pi_{\mathrm{st}}(G^{\mathrm{pack}}(\mathrm{Embed}(x))_t)\big)
=
\widetilde\Phi(G^{\mathrm{pack}}(\mathrm{Embed}(x))_t)
-
\pi_{\mathrm{out}}(G^{\mathrm{pack}}(\mathrm{Embed}(x))_t),
\]
so $\Theta$ is exactly the tokenwise increment that must be added in $U_{\mathrm{out}}$.
Apply Corollary~\ref{cor:serialized_tokenwise_concrete} to $S_{\mathrm{st}}$ and $\Theta$.
This yields a finite composition
\[
G^{\mathrm{tok}}
=
G^{\mathrm{batch}}_M\circ\cdots\circ G^{\mathrm{batch}}_1
\]
of concrete Sessa blocks such that every batch block preserves the storage coordinates exactly, every batch block ignores the current output slice in its input projection, and for all $x\in\mathcal D$ and all $t$,
\[
\left\|
\pi_{\mathrm{out}}\big(G^{\mathrm{tok}}(G^{\mathrm{pack}}(\mathrm{Embed}(x)))_t\big)
-
\widetilde\Phi\big(G^{\mathrm{pack}}(\mathrm{Embed}(x))_t\big)
\right\|_2
\le \frac{\varepsilon}{2\sqrt{\Tctx}}.
\]

\paragraph{Step 6: conclusion}
Set
\[
G:=G^{\mathrm{tok}}\circ G^{\mathrm{pack}}\in \Omega_{\mathrm{Sessa},\Id}^{2}(m).
\]
Since
\[
\mathrm{Unembed}(h)_t=R_{\mathrm{out}}(\pi_{\mathrm{out}}(h_t)),
\]
combining the two error bounds and using that $R_{\mathrm{out}}$ is an isometry gives
\[
\left\|
\mathrm{Unembed}(G(\mathrm{Embed}(x)))_t - F(x)_t
\right\|_2
=
\left\|
R_{\mathrm{out}}(\pi_{\mathrm{out}}(G(\mathrm{Embed}(x))_t)) - F(x)_t
\right\|_2
\le \frac{\varepsilon}{\sqrt{\Tctx}}
\qquad
\forall x\in\mathcal D,\ \forall t.
\]
Hence
\[
\sup_{x\in\mathcal D}
\Big\|
\mathrm{Unembed}(G(\mathrm{Embed}(x)))-F(x)
\Big\|_F
<\varepsilon.
\]
\end{proof}

\section{Universal approximation in the pre-norm LayerNorm setting}
\label{sec:uat_with_layernorm}

We now extend Theorem~\ref{thm:sessa_uap_main} from $\Norm=\Id$
to the pre-norm LayerNorm case $\Norm=\LN_{\varepsilon_{\ln}}$ with $\varepsilon_{\ln}>0$ \citep{xiong2020layernormtransformer},
after a width expansion via a fixed scaffold.

\subsection{Tokenwise LayerNorm}

Fix a width $m\ge 2$ and $\varepsilon_{\ln}>0$.
For $z\in\mathbb R^m$, define
\[
\mu_{\ln}(z):=\frac{1}{m}\langle z,\mathbf 1\rangle,
\qquad
\sigma_{\ln}(z):=\sqrt{\frac{1}{m}\|z-\mu_{\ln}(z)\mathbf 1\|_2^2+\varepsilon_{\ln}},
\qquad
\LN_{\varepsilon_{\ln}}(z):=\frac{z-\mu_{\ln}(z)\mathbf 1}{\sigma_{\ln}(z)}.
\]
With $\varepsilon_{\ln}>0$, $\LN_{\varepsilon_{\ln}}$ is well-defined and continuous on all of $\mathbb R^m$,
in particular, there is no singularity at nearly-constant tokens.

\subsection{Zero-mean scaffold embedding}

Fix a ``dynamic'' width $m_0\ge 1$ and let $\msc\ge 2$ be an even scaffold width.
Let $m:=m_0+\msc$ and define, for $c>0$, the fixed zero-mean scaffold vector
\[
s_{c,\msc}:=(\underbrace{c,\dots,c}_{\msc/2},\underbrace{-c,\dots,-c}_{\msc/2})\in\mathbb R^{\msc},
\qquad
\langle s_{c,\msc},\mathbf 1_{\msc}\rangle =0.
\]
Define the scaffold embedding
\[
\Phi_{c,\msc}:\mathbb R^{m_0}\to\mathbb R^{m},\qquad
\Phi_{c,\msc}(u):=(u,\ s_{c,\msc}).
\]
Let $\pi_{\mathrm{dyn}}:\mathbb R^{m}\to\mathbb R^{m_0}$ be the projection onto the first $m_0$ coordinates, and
let $\pi_{\mathrm{sc}}:\mathbb R^{m}\to\mathbb R^{\msc}$ be the projection onto the last $\msc$ coordinates:
\[
\pi_{\mathrm{dyn}}(z_1,\dots,z_{m_0+\msc})=(z_1,\dots,z_{m_0}),
\qquad
\pi_{\mathrm{sc}}(z_1,\dots,z_{m_0+\msc})=(z_{m_0+1},\dots,z_{m_0+\msc}).
\]

\begin{lemma}[Approximate linearity of LayerNorm on scaffold sets]
\label{lem:ln_almost_linear_scaffold}
Fix $m_0\ge 1$, $\varepsilon_{\ln}>0$, a compact set $\Kset\subset\mathbb R^{m_0}$, and $\delta>0$.
Then there exist an even $\msc\ge 2$, a scalar $c>0$, and a constant $a>0$ such that
\[
\sup_{u\in \Kset}
\Big\|
\pi_{\mathrm{dyn}}\!\big(\LN_{\varepsilon_{\ln}}(\Phi_{c,\msc}(u))\big) - a\,u
\Big\|_2
\le \delta.
\]
Moreover, $\pi_{\mathrm{sc}}(\Phi_{c,\msc}(u))\equiv s_{c,\msc}$ is constant on $\Kset$.
\end{lemma}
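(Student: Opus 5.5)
The approach is to compute the LayerNorm explicitly on scaffold-embedded tokens and show that, for a wide scaffold, the normalization becomes essentially a fixed rescaling of the dynamic coordinates. Take $z=\Phi_{c,\msc}(u)=(u,s_{c,\msc})$ with $m=m_0+\msc$. Because $\langle s_{c,\msc},\mathbf 1\rangle=0$, the mean is
\[
\mu_{\ln}(z)=\frac{\langle u,\mathbf 1_{m_0}\rangle}{m}.
\]
The centered squared norm can be written as
\[
\|z-\mu\mathbf 1\|_2^2=\|u-\mu\mathbf 1_{m_0}\|_2^2+\msc\,(c^2+\mu^2),
\]
where the cross terms against $s$ vanish by the zero-mean property. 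Hence
\[
\sigma_{\ln}(z)^2=\frac{\msc c^2}{m}+\varepsilon_{\ln}+\frac{1}{m}\Big(\|u-\mu\mathbf 1_{m_0}\|_2^2+\msc\mu^2\Big).
\]
The dynamic part of the output is $\pi_{\mathrm{dyn}}\LN(z)=(u-\mu\mathbf 1_{m_0})/\sigma_{\ln}(z)$.

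Define the candidate constant $\sigma_*:=\sqrt{\msc c^2/m+\varepsilon_{\ln}}$ and set $a:=1/\sigma_*$. Let $M:=\sup_{u\in\Kset}\|u\|_2$. There are two error sources. The first is the mean shift: $|\mu|\le\sqrt{m_0}M/m$, so $\|\mu\mathbf 1_{m_0}\|_2\le m_0M/m\to0$ as $\msc\to\infty$. The second is the fluctuation of $\sigma_{\ln}$ around $\sigma_*$: the extra term is at most $\big((M+m_0M/m)^2+\msc m_0M^2/m^2\big)/m=O(1/\msc)$. Since $\sigma_*\ge\sqrt{\varepsilon_{\ln}}>0$, it follows that $|1/\sigma_{\ln}-1/\sigma_*|=O(1/\msc)$ uniformly on $\Kset$.

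Combining the two bounds gives
\[
\Big\|\frac{u-\mu\mathbf 1}{\sigma_{\ln}}-a u\Big\|_2\le \frac{\|\mu\mathbf 1\|_2}{\sigma_{\ln}}+M\,\Big|\frac1{\sigma_{\ln}}-\frac1{\sigma_*}\Big|.
\]
Each term is $O(1/\msc)$ with constants depending only on $m_0$, $M$, $c$ and $\varepsilon_{\ln}$. Fix any $c>0$, say $c=1$, which keeps $\sigma_*$ bounded above and below independently of $\msc$. Then choose $\msc$ even and large enough that the bound is at most $\delta$. The final claim, that $\pi_{\mathrm{sc}}(\Phi_{c,\msc}(u))\equiv s_{c,\msc}$, holds by the definition of $\Phi_{c,\msc}$.

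The main obstacle is controlling the $1/\sigma$ perturbation uniformly in $u$. This requires $\sigma_*$ to stay bounded away from $0$, which the $\varepsilon_{\ln}>0$ term guarantees. It also requires the $u$-dependent part of $\sigma^2$ to be dominated, which the $1/m$ dilution from the wide scaffold provides. Once both are in hand, the remaining estimates are routine.
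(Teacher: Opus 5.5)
Your proposal is correct and follows essentially the same route as the paper. It uses the same reference scale $\sigma_0=\sqrt{\msc c^2/m+\varepsilon_{\ln}}$ with $a=1/\sigma_0$, and the same split into a mean-leakage term and a $1/\sigma_{\ln}$-perturbation term, each of order $1/m$ because $\sigma_{\ln}\ge\sqrt{\varepsilon_{\ln}}$. The only differences are cosmetic: you use the cruder bound $\|u-\mu\mathbf 1_{m_0}\|_2\le M+m_0M/m$ where the paper uses $\|\bar u\|_2\le\|u\|_2$, and you fix $c=1$, whereas the paper notes that $c$ is arbitrary.
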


\begin{proof}
Let $R:=\sup_{u\in \Kset}\|u\|_2<\infty$ and fix an even $\msc\ge 2$. Set $m:=m_0+\msc$.
For $u\in \Kset$ write
\[
z:=\Phi_{c,\msc}(u)=(u,s_{c,\msc})\in\mathbb R^{m}.
\]
Since $\langle s_{c,\msc},\mathbf 1_{\msc}\rangle=0$, we have
\[
\mu_{\ln}(z)=\frac{1}{m}\sum_{i=1}^{m_0}u_i =:\mu_u,
\qquad
|\mu_u|\le \frac{1}{m}\Big|\sum_{i=1}^{m_0}u_i\Big|\le \frac{\sqrt{m_0}}{m}\,\|u\|_2
\le \frac{\sqrt{m_0}}{m}R.
\]
Define the mean-centered dynamic vector $\bar u:=u-\mu_u\mathbf 1_{m_0}$.
Then the dynamic slice of LayerNorm equals
\[
\pi_{\mathrm{dyn}}(\LN_{\varepsilon_{\ln}}(z))=\frac{\bar u}{\sigma_{\ln}(z)}.
\]

Define the reference scale
\[
\sigma_0:=\sigma_{\ln}(\Phi_{c,\msc}(0))=\sqrt{\frac{1}{m}\|s_{c,\msc}\|_2^2+\varepsilon_{\ln}}
=\sqrt{\frac{1}{m}\,(\msc c^2)+\varepsilon_{\ln}},
\qquad
a:=\frac{1}{\sigma_0}.
\]

We estimate
\[
\left\|\frac{\bar u}{\sigma_{\ln}(z)}-a u\right\|_2
\le
\left\|\frac{\bar u-u}{\sigma_{\ln}(z)}\right\|_2
+
\left\|u\left(\frac{1}{\sigma_{\ln}(z)}-\frac{1}{\sigma_0}\right)\right\|_2
=:T_1+T_2.
\]

For the term $T_1$ (mean leakage),
Since $\bar u-u=-\mu_u\mathbf 1_{m_0}$,
\[
T_1
=
\frac{\|\mu_u\mathbf 1_{m_0}\|_2}{\sigma_{\ln}(z)}
\le
\frac{\sqrt{m_0}|\mu_u|}{\sqrt{\varepsilon_{\ln}}}
\le
\frac{\sqrt{m_0}}{\sqrt{\varepsilon_{\ln}}}\cdot \frac{\sqrt{m_0}}{m}R
=
\frac{m_0R}{m\sqrt{\varepsilon_{\ln}}}.
\]

for the term $T_2$ (variance perturbation),
Note that $\sigma_{\ln}(z)^2=\frac{1}{m}\|z-\mu_u\mathbf 1\|_2^2+\varepsilon_{\ln}$ and, because
$\langle s_{c,\msc},\mathbf 1_{\msc}\rangle=0$, we have the exact decomposition
\[
\|z-\mu_u\mathbf 1\|_2^2
=
\|u-\mu_u\mathbf 1_{m_0}\|_2^2 + \|s_{c,\msc}-\mu_u\mathbf 1_{\msc}\|_2^2
=
\|\bar u\|_2^2 + \|s_{c,\msc}\|_2^2 + \msc\mu_u^2,
\]
and the cross term vanishes since $\langle s_{c,\msc},\mathbf 1_{\msc}\rangle=0$.
Therefore
\[
\sigma_{\ln}(z)^2-\sigma_0^2
=
\frac{1}{m}\big(\|\bar u\|_2^2+\msc\mu_u^2\big)
\le
\frac{1}{m}\big(\|u\|_2^2+\msc\mu_u^2\big)
\le
\frac{1}{m}\left(R^2+\msc\cdot \frac{m_0R^2}{m^2}\right)
\le
\frac{2R^2}{m},
\]
since $\msc\le m$ implies $\msc m_0/m^2\le m_0/m\le 1$ for $m\ge m_0$.

Using $|\sqrt{A}-\sqrt{B}|\le |A-B|/( \sqrt{A}+\sqrt{B})$ and $\sigma_{\ln}(z),\sigma_0\ge \sqrt{\varepsilon_{\ln}}$,
\[
|\sigma_{\ln}(z)-\sigma_0|
\le
\frac{|\sigma_{\ln}(z)^2-\sigma_0^2|}{\sigma_{\ln}(z)+\sigma_0}
\le
\frac{(2R^2/m)}{2\sqrt{\varepsilon_{\ln}}}
=
\frac{R^2}{m\sqrt{\varepsilon_{\ln}}}.
\]
Hence
\[
\left|\frac{1}{\sigma_{\ln}(z)}-\frac{1}{\sigma_0}\right|
=
\frac{|\sigma_{\ln}(z)-\sigma_0|}{\sigma_{\ln}(z)\sigma_0}
\le
\frac{R^2}{m\sqrt{\varepsilon_{\ln}}}\cdot \frac{1}{\varepsilon_{\ln}}
=
\frac{R^2}{m\,\varepsilon_{\ln}^{3/2}}.
\]
Therefore
\[
T_2
\le
\|u\|_2\left|\frac{1}{\sigma_{\ln}(z)}-\frac{1}{\sigma_0}\right|
\le
R\cdot \frac{R^2}{m\,\varepsilon_{\ln}^{3/2}}
=
\frac{R^3}{m\,\varepsilon_{\ln}^{3/2}}.
\]

Combining,
\[
\sup_{u\in \Kset}\left\|\pi_{\mathrm{dyn}}(\LN_{\varepsilon_{\ln}}(\Phi_{c,\msc}(u))) - a u\right\|_2
\le
\frac{m_0R}{m\sqrt{\varepsilon_{\ln}}}+\frac{R^3}{m\,\varepsilon_{\ln}^{3/2}}.
\]
Choose $\msc$ (hence $m=m_0+\msc$) large enough so that the right-hand side is $\le \delta$.
This proves the claim; note that $c>0$ can be arbitrary and only changes the scaling $a$.
\end{proof}

\subsection{Simulating identity-normalized Sessa blocks with pre-norm LN-Sessa blocks}

We call a pre-norm LN-Sessa block a Sessa block with $\Norm=\LN_{\varepsilon_{\ln}}$ in the tokenwise preprocessing stage, i.e. $\xnorm_t=\LN_{\varepsilon_{\ln}}(x_t)$, and residual $y_t=x_t+o_t$.

\begin{lemma}[Simulation of an identity-normalized block by a pre-norm LN block on a scaffold]
\label{lem:ln_block_simulates_lnfree}
Let $G:\mathbb R^{\Tctx\times m_0}\to\mathbb R^{\Tctx\times m_0}$ be a width-$m_0$ concrete Sessa block
from Section~\ref{sec:model_arch}, with $\Norm=\Id$.
Fix a compact set $\Kset\subset\mathbb R^{\Tctx\times m_0}$ and $\varepsilon_{\mathrm{sim}}>0$.
Then there exist an even $\msc\ge 2$, a scalar $c>0$, and a width-$m$ pre-norm LN-Sessa block
$\widetilde G:\mathbb R^{\Tctx\times (m_0+\msc)}\to\mathbb R^{\Tctx\times (m_0+\msc)}$ with $\Norm=\LN_{\varepsilon_{\ln}}$
such that, with $m:=m_0+\msc$,
\[
\sup_{x\in \Kset}
\Big\|
\pi_{\mathrm{dyn}}\big(\widetilde G(\Phi_{c,\msc}(x))\big)-G(x)
\Big\|_F
\le \varepsilon_{\mathrm{sim}},
\qquad\text{and}\qquad
\pi_{\mathrm{sc}}\big(\widetilde G(\Phi_{c,\msc}(x))\big)\equiv s_{c,\msc}.
\]
Here $\Phi_{c,\msc}(x)$ denotes the tokenwise application of $\Phi_{c,\msc}$.
\end{lemma}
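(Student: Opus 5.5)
The plan is to build $\widetilde G$ so that, once the input is scaffold-embedded, its normalized stream is approximately a fixed positive multiple of the identity-normalized stream, and everything downstream of the normalization is then rescaled to undo that multiple.

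\textbf{Step 1: make the normalization almost linear.} Let $S_{\Kset}:=\{x_t: x\in\Kset,\ 0\le t\le\Tctx-1\}\subset\mathbb R^{m_0}$, which is compact. For a tolerance $\delta>0$ fixed later, apply Lemma~\ref{lem:ln_almost_linear_scaffold} to $S_{\Kset}$. This gives $\msc$, $c$, and $a>0$ such that
\[
\big\|\pi_{\mathrm{dyn}}\LN_{\varepsilon_{\ln}}(\Phi_{c,\msc}(u))-a u\big\|_2\le\delta
\qquad\text{for all } u\in S_{\Kset}.
\]
The scaffold coordinates of the input are constant, so their normalized values vary only through the scalar factor $1/\sigma_{\ln}(z)$.

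\textbf{Step 2: define the parameters of $\widetilde G$.} Take the input projection $\widetilde W^{\mathrm{in}}\in\mathbb R^{m\times 2m}$ to have rows $a^{-1}W^{\mathrm{in}}$ on the dynamic coordinates and zero rows on the scaffold coordinates. Its output columns write $(a,g)$ into the first $m_0$ coordinates of each of the two $m$-dimensional halves. The remaining coordinates of $a$ and $g$ get bias $0$; since $\mathrm{GELU}(0)=0$, they stay identically zero. Embed $W_{Qf},W_{Kf},W_V,W_{Qb},W_{Kb},w^\gamma$ by padding with zero rows on the scaffold coordinates; $W_V$ also gets zero columns there. Give $\widetilde W^{\mathrm{out}}$ the block $W^{\mathrm{out}}$ on the dynamic part and zeros elsewhere, and pad $b^{\mathrm{out}}$ with zeros.

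With these choices the scaffold component of the residual update vanishes exactly, so $\pi_{\mathrm{sc}}$ of the output equals $s_{c,\msc}$. On the dynamic part, $\widetilde G$ computes exactly what $G$ computes, except that the true input $x_t$ to $W^{\mathrm{in}}$ is replaced by $x_t+e_t$, where $e_t:=a^{-1}\big(\pi_{\mathrm{dyn}}\LN_{\varepsilon_{\ln}}(\Phi_{c,\msc}(x_t))-a x_t\big)$ satisfies $\|e_t\|_2\le\delta/a$. The residual term $x_t$ itself is passed through exactly.

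\textbf{Step 3: continuity of the block body.} Write $G(x)=x+\mathcal B(x)$, where $\mathcal B$ is the composite map: tokenwise affine map, GELU, the two attention branches, $\tanh$ gain, the triangular solve $(I-\Bfb)^{-1}f$, the gate, and the output affine map. Then $\pi_{\mathrm{dyn}}\widetilde G(\Phi(x))=x+\mathcal B(x+e)$. Each constituent is continuous on $\mathbb R^{\Tctx\times m_0}$; the solve is continuous because $\Bfb$ is strictly lower triangular, so $(I-\Bfb)^{-1}$ is a polynomial in its entries (Lemma~\ref{lem:triangular_inverse}). Hence $\mathcal B$ is uniformly continuous on the compact neighborhood $\overline{\mathcal N}_1(\Kset)$. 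Pick $\eta\le 1$ with $\|\mathcal B(y)-\mathcal B(x)\|_F\le\varepsilon_{\mathrm{sim}}$ whenever $\|y-x\|_F\le\eta$ and both lie in that neighborhood. It then suffices to have $\sqrt{\Tctx}\,\delta/a\le\eta$.

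\textbf{Main obstacle.} The constant $a=1/\sigma_0$ depends on $\msc$, and $\sigma_0=\sqrt{\msc c^2/m+\varepsilon_{\ln}}$, so shrinking $\delta$ by enlarging $\msc$ also moves $a$. I would remove the dependence by fixing $c$ in advance, say $c=1$. Then $\sigma_0^2\le c^2+\varepsilon_{\ln}$ for every $\msc$, so $a\ge(1+\varepsilon_{\ln})^{-1/2}$ uniformly. The explicit bound in the proof of Lemma~\ref{lem:ln_almost_linear_scaffold} decays like $1/m$ for fixed $c$, so taking $\msc$ large makes $\delta/a$ as small as needed. This closes the argument.
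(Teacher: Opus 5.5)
Your proposal is correct and follows essentially the same route as the paper. Both rescale the dynamic input projection by $a^{-1}$, zero-pad every learned map on the scaffold so the scaffold passes through exactly, and combine the $c$-independent $O(1/m)$ bound from the proof of Lemma~\ref{lem:ln_almost_linear_scaffold} with uniform continuity of the block body on a compact neighborhood of the input set. The only difference is cosmetic: the paper fixes $a\in(0,\varepsilon_{\ln}^{-1/2})$ first and calibrates $c$ (depending on $m_{\mathrm{sc}}$) so that $\sigma_0=a^{-1}$ exactly. You instead fix $c=1$ and control the $m_{\mathrm{sc}}$-dependence of $a$ through the uniform lower bound $a\ge(1+\varepsilon_{\ln})^{-1/2}$, which works equally well because you measure the error in the unscaled variable $x+e$.
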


\begin{proof}
Define the compact set of attainable tokens
\[
S_{\Kset}:=\{x_t:\ x\in \Kset,\ t=0,\dots,\Tctx-1\}\subset\mathbb R^{m_0}.
\]
Choose once and for all
\[
a\in \bigl(0,\varepsilon_{\ln}^{-1/2}\bigr).
\]
Define the continuous map
\[
\Delta:\mathbb R^{\Tctx\times m_0}\to\mathbb R^{\Tctx\times m_0},
\]
i.e. given $v\in\mathbb R^{\Tctx\times m_0}$, run the Sessa block from the stage after normalization, with the dynamic weights scaled by $1/a$, i.e.\ with first input projection on the dynamic slice
$\widetilde W^{\mathrm{in}}_{\mathrm{dyn}}:=a^{-1}W^{\mathrm{in}}$, $\widetilde b^{\mathrm{in}}:=b^{\mathrm{in}}$,
and all other dynamic parameters copied from $G$.
Then, by construction,
\[
G(x)=x+\Delta(ax)\qquad \forall x\in\mathbb R^{\Tctx\times m_0}.
\]

Since $\Kset$ is compact, so is $a\Kset$, and $\Delta$ is uniformly continuous on a compact neighborhood of $a\Kset$.
Choose $\eta_{\mathrm{UC}}>0$ such that
\[
\|v-v'\|_F\le \eta_{\mathrm{UC}} \ \Rightarrow\ \|\Delta(v)-\Delta(v')\|_F\le \varepsilon_{\mathrm{sim}}
\quad \text{for all }v,v' \text{ in that neighborhood.}
\]
\[
\eta_{\mathrm{LN}}:=\eta_{\mathrm{UC}}/\sqrt{\Tctx}.
\]

Fix an even $\msc\ge 2$ (to be chosen large enough), set $m:=m_0+\msc$, and define
\[
c:=\sqrt{\frac{m}{\msc}\bigl(a^{-2}-\varepsilon_{\ln}\bigr)} \ >\ 0.
\]
Then the reference scale in Lemma~\ref{lem:ln_almost_linear_scaffold} equals exactly
\[
\sigma_0=\sqrt{\frac{\msc c^2}{m}+\varepsilon_{\ln}}=a^{-1},
\qquad\text{hence}\qquad
\frac{1}{\sigma_0}=a.
\]
Inspecting the proof of Lemma~\ref{lem:ln_almost_linear_scaffold}, the approximation bound depends on $m=m_0+\msc$
(and on $S_{\Kset},\varepsilon_{\ln}$) and tends to $0$ as $m\to\infty$; therefore, after increasing the even $\msc$ if needed,
we obtain
\[
\sup_{u\in S_{\Kset}}
\Big\|
\pi_{\mathrm{dyn}}\!\big(\LN_{\varepsilon_{\ln}}(\Phi_{c,\msc}(u))\big)-a u
\Big\|_2
\le \eta_{\mathrm{LN}}.
\]

Write the width-\(m_0\) input projection of \(G\) as
\[
W^{\mathrm{in}}=[W_a\ \ W_g],
\qquad
b^{\mathrm{in}}=(b_a,b_g),
\]
with
\[
W_a,W_g\in\mathbb R^{m_0\times m_0},
\qquad
b_a,b_g\in\mathbb R^{m_0}.
\]
Decompose the widened coordinates as
\[
\mathbb R^m=\mathbb R^{m_0}\oplus\mathbb R^{\msc},
\]
where the first summand is the dynamic slice and the second is the scaffold slice.

Define
\[
\widetilde W_a=
\begin{bmatrix}
a^{-1}W_a & 0\\
0 & 0
\end{bmatrix},
\qquad
\widetilde W_g=
\begin{bmatrix}
a^{-1}W_g & 0\\
0 & 0
\end{bmatrix}
\in\mathbb R^{m\times m},
\]
and
\[
\widetilde W^{\mathrm{in}}=[\widetilde W_a\ \widetilde W_g]\in\mathbb R^{m\times 2m},
\qquad
\widetilde b^{\mathrm{in}}=(b_a,0_{\msc},\,b_g,0_{\msc})\in\mathbb R^{2m}.
\]

For the mixer parameters define
\[
\widetilde W_{Qf}=\begin{bmatrix}W_{Qf}\\0\end{bmatrix},\qquad
\widetilde W_{Kf}=\begin{bmatrix}W_{Kf}\\0\end{bmatrix},\qquad
\widetilde W_{Qb}=\begin{bmatrix}W_{Qb}\\0\end{bmatrix},\qquad
\widetilde W_{Kb}=\begin{bmatrix}W_{Kb}\\0\end{bmatrix}
\in\mathbb R^{m\times d_k},
\]
\[
\widetilde W_V=
\begin{bmatrix}
W_V & 0\\
0 & 0
\end{bmatrix}\in\mathbb R^{m\times m},
\qquad
\widetilde w^\gamma=(w^\gamma,0_{\msc})\in\mathbb R^m,
\qquad
\widetilde b^\gamma:=b^\gamma.
\]

For the output map define
\[
\widetilde W^{\mathrm{out}}=
\begin{bmatrix}
W^{\mathrm{out}} & 0\\
0 & 0
\end{bmatrix}\in\mathbb R^{m\times m},
\qquad
\widetilde b^{\mathrm{out}}=(b^{\mathrm{out}},0_{\msc})\in\mathbb R^m.
\]
All remaining scaffold rows and columns are set to zero.

Thus, once the pre-norm token
\[
z_t:=\LN_{\varepsilon_{\ln}}(X_t)
\]
is formed, every learned linear map in \(\widetilde G\) reads only \(\pi_{\mathrm{dyn}}(z_t)\), while the residual increment has zero scaffold coordinates.

For \(X=\Phi_{c,\msc}(x)\), define
\[
v_t:=\pi_{\mathrm{dyn}}\!\big(\LN_{\varepsilon_{\ln}}(X_t)\big)\in\mathbb R^{m_0}.
\]
Then the widened block has
\[
\widetilde a_t=(a^{-1}v_tW_a+b_a,\ 0_{\msc}),
\qquad
\widetilde g_t=(a^{-1}v_tW_g+b_g,\ 0_{\msc}),
\]
hence
\[
\mathrm{GELU}(\widetilde a_t)
=
\bigl(\mathrm{GELU}(a^{-1}v_tW_a+b_a),\,0_{\msc}\bigr).
\]
Therefore the forward logits, feedback logits, gains, dynamic mixer output, and dynamic residual increment of \(\widetilde G\) coincide exactly with those of the width-\(m_0\) block defining \(\Delta(v)\), whereas the scaffold part of \(f\), \(s\), and of the residual increment is identically zero. Consequently
\[
\pi_{\mathrm{dyn}}\big(\widetilde G(\Phi_{c,\msc}(x))\big)
=
x+\Delta(v),
\qquad
\pi_{\mathrm{sc}}\big(\widetilde G(\Phi_{c,\msc}(x))\big)
=
s_{c,\msc}.
\]

For $x\in \Kset$, the tokenwise bound above implies
\[
\left\|\pi_{\mathrm{dyn}}(\LN_{\varepsilon_{\ln}}(\Phi_{c,\msc}(x)))-ax\right\|_F\le \eta_{\mathrm{LN}}\sqrt{\Tctx}=\eta_{\mathrm{UC}},
\]
hence
\[
\left\|\pi_{\mathrm{dyn}}\big(\widetilde G(\Phi_{c,\msc}(x))\big)-G(x)\right\|_F
=
\left\|\Delta\!\big(\pi_{\mathrm{dyn}}(\LN_{\varepsilon_{\ln}}(\Phi_{c,\msc}(x)))\big)-\Delta(ax)\right\|_F
\le \varepsilon_{\mathrm{sim}}.
\]
Finally, since the increment has zero scaffold coordinates, the scaffold stays constant:
$\pi_{\mathrm{sc}}(\widetilde G(\Phi_{c,\msc}(x)))\equiv s_{c,\msc}$.
\end{proof}

\subsection{Universal approximation for pre-norm LN-Sessa}

\begin{corollary}[Universal approximation for pre-norm LN-Sessa]
\label{cor:uat_prenorm_ln_sessa}
Let $\mathcal{D}\subset\mathbb{R}^{\Tctx\times d_{\mathrm{ext}}}$ be compact and let
\[
F:\mathcal{D}\to\mathbb{R}^{\Tctx\times d_{\mathrm{ext}}}
\]
be continuous and causal. Fix $\varepsilon_{\ln}>0$ for tokenwise LayerNorm.
Then for any $\varepsilon>0$ there exist a model width $m\in\mathbb N^*$,
an even key/query width $d_k$,
tokenwise adapters
\[
\mathrm{Embed}:\mathbb R^{d_{\mathrm{ext}}}\to\mathbb R^{m},
\qquad
\mathrm{Unembed}:\mathbb R^{m}\to\mathbb R^{d_{\mathrm{ext}}},
\]
and a finite-depth pre-norm LN-Sessa network
\[
G_{\ln}\in \Omega_{\mathrm{Sessa},\LN_{\varepsilon_{\ln}}}^{d_k}(m),
\]
such that
\[
\sup_{x\in\mathcal D}\Big\|F(x)-\mathrm{Unembed}\big(G_{\ln}(\mathrm{Embed}(x))\big)\Big\|_F < \varepsilon.
\]
\end{corollary}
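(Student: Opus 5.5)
The plan is to reduce Corollary~\ref{cor:uat_prenorm_ln_sessa} to the LN-free Theorem~\ref{thm:sessa_uap_main}, simulating each identity-normalized block with Lemma~\ref{lem:ln_block_simulates_lnfree} and controlling the accumulated error with Lemma~\ref{lem:composition_error}.

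\textbf{Step 1: the LN-free approximant.} Apply Theorem~\ref{thm:sessa_uap_main} with accuracy $\varepsilon/2$. This gives a width $m_0$, adapters $\mathrm{Embed}_0,\mathrm{Unembed}_0$, and a network $G=G_L\circ\cdots\circ G_1\in\Omega^{d_k}_{\mathrm{Sessa},\Id}(m_0)$ such that
\[
\sup_{x\in\mathcal D}\|F(x)-\mathrm{Unembed}_0(G(\mathrm{Embed}_0(x)))\|_F<\varepsilon/2 .
\]
Set $\Kset_1:=\mathrm{Embed}_0(\mathcal D)$, which is compact. Fix $\rhonbhd>0$ and form the thickened compacts $\widetilde{\Kset}_i$ of Lemma~\ref{lem:composition_error} with $f_i=G_i$, working on $X=\mathbb R^{\Tctx\times m_0}$ with the Frobenius metric. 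Since $\mathrm{Unembed}_0$ is linear, choose $\varepsilon'$ so that $\|\mathrm{Unembed}_0\|\varepsilon'\le\varepsilon/2$. Lemma~\ref{lem:composition_error} then supplies tolerances $\delta_i\le\rhonbhd$ that guarantee a total dynamic-slice error of at most $\varepsilon'$.

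\textbf{Step 2: a common scaffold for all blocks.} Apply Lemma~\ref{lem:ln_block_simulates_lnfree} to each $G_i$ on $\widetilde{\Kset}_i$ with $\varepsilon_{\mathrm{sim}}=\delta_i$. The main obstacle is that all $L$ simulating blocks must share one scaffold $(c,\msc)$, whereas the lemma picks these per block.

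The lemma's proof allows this. The scalar $a\in(0,\varepsilon_{\ln}^{-1/2})$ may be fixed once, globally. For that fixed $a$, $c$ is determined by $\msc$ through $c^2=\tfrac{m}{\msc}(a^{-2}-\varepsilon_{\ln})$. The LN-linearization error bound from Lemma~\ref{lem:ln_almost_linear_scaffold} tends to $0$ as $\msc\to\infty$ for each fixed compact set.

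So I would fix $a$, compute for each $i$ the threshold on $\msc$ required by block $i$, take the maximum even $\msc$, and let $c$ be the corresponding value. This $c$ depends only on $\msc$ and $a$, so it is common to all blocks. Each simulated block $\widetilde G_i$ then satisfies two properties on $\Phi_{c,\msc}(\widetilde{\Kset}_i)$: its dynamic part is within $\delta_i$ of $G_i$, and its scaffold part stays exactly $s_{c,\msc}$.

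\textbf{Step 3: composition.} Because every $\widetilde G_i$ preserves the scaffold exactly, each composite $\widetilde G_i\circ\cdots\circ\widetilde G_1$ acts on states of the form $\Phi_{c,\msc}(h)$ with $h\in\widetilde{\Kset}_{i+1}$. The inclusion $g_i(\widetilde{\Kset}_i)\subset\widetilde{\Kset}_{i+1}$ holds by Lemma~\ref{lem:composition_error}. So the maps $g_i:=\pi_{\mathrm{dyn}}\circ\widetilde G_i\circ\Phi_{c,\msc}$ satisfy that lemma's hypotheses, and
\[
\pi_{\mathrm{dyn}}\widetilde G_L\circ\cdots\circ\widetilde G_1\circ\Phi_{c,\msc}=g_L\circ\cdots\circ g_1 ,
\]
which lies within $\varepsilon'$ of $G$ on $\Kset_1$.

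\textbf{Step 4: adapters and conclusion.} Define $\mathrm{Embed}(x)_t:=\Phi_{c,\msc}(\mathrm{Embed}_0(x)_t)$. This map is tokenwise affine, because the scaffold is a constant bias. Define $\mathrm{Unembed}:=\mathrm{Unembed}_0\circ\pi_{\mathrm{dyn}}$, which is linear. Take $m=m_0+\msc$ and $G_{\ln}=\widetilde G_L\circ\cdots\circ\widetilde G_1$. The triangle inequality then gives the total error
\[
\|F(x)-\mathrm{Unembed}(G_{\ln}(\mathrm{Embed}(x)))\|_F<\varepsilon/2+\varepsilon/2=\varepsilon .
\]
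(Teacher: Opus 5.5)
Your proposal is correct and follows essentially the same route as the paper. You reduce to the LN-free Theorem~\ref{thm:sessa_uap_main}, fix the scale $a\in(0,\varepsilon_{\ln}^{-1/2})$ once so that all simulating blocks from Lemma~\ref{lem:ln_block_simulates_lnfree} share a single scaffold $(\msc,c)$ (the paper takes the minimum of the per-layer LN tolerances over the union of token sets, which is equivalent to your maximum of per-block thresholds on $\msc$), and then chain the blocks via exact scaffold invariance and Lemma~\ref{lem:composition_error}. Bounding the final step by the operator norm of $\mathrm{Unembed}_0$ instead of its non-expansiveness is only a cosmetic difference.
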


\begin{proof}
By Theorem~\ref{thm:sessa_uap_main} for $\Norm=\Id$, choose adapters
\[
\mathrm{Embed}_0:\mathbb R^{d_{\mathrm{ext}}}\to\mathbb R^{m_0},
\qquad
\mathrm{Unembed}_0:\mathbb R^{m_0}\to\mathbb R^{d_{\mathrm{ext}}},
\]
and a concrete Sessa network with $\Norm=\Id$
\[
G^\star\in \Omega_{\mathrm{Sessa},\Id}^{d_{k,0}}(m_0)
\]
of depth $\Nlayer$ such that
\[
\sup_{x\in\mathcal D}\Big\|F(x)-\mathrm{Unembed}_0\big(G^\star(\mathrm{Embed}_0(x))\big)\Big\|_F < \varepsilon/2.
\]
Write
\[
G^\star=G_{\Nlayer}\circ\cdots\circ G_1
\]
as a composition of concrete Sessa blocks with $\Norm=\Id$ on $\mathbb R^{\Tctx\times m_0}$.

Let $\Kset_1:=\mathrm{Embed}_0(\mathcal D)$ (compact).
Fix $\rhonbhd>0$ and define the thickened compacts recursively as in
Lemma~\ref{lem:composition_error}:
\[
\widetilde{\Kset}_1:=\Kset_1,\qquad
\Kset_{\idxlayer+1}:=G_{\idxlayer}(\widetilde{\Kset}_{\idxlayer}),\qquad
\widetilde{\Kset}_{\idxlayer+1}:=\overline{\mathcal N}_{\rhonbhd}(\Kset_{\idxlayer+1})
\quad\text{for }\idxlayer=1,\dots,\Nlayer.
\]

Since $\Nlayer$ is finite, the union of attainable token sets
\[
S:=\bigcup_{\idxlayer=1}^{\Nlayer} \{u_t:\ u\in \widetilde{\Kset}_{\idxlayer},\ t=0,\dots,\Tctx-1\}\subset\mathbb R^{m_0}.
\]
is a finite union of compact sets and hence compact.

By Lemma~\ref{lem:composition_error}, choose tolerances $\varepsilon^{\mathrm{sim}}_{\idxlayer}>0$ such that if each block $G_{\idxlayer}$
is approximated on $\widetilde{\Kset}_{\idxlayer}$ within $\varepsilon^{\mathrm{sim}}_{\idxlayer}$, then the composed approximation error on $\Kset_1$ is at most $\varepsilon/2$.

Moreover, by the same lemma we may (and do) choose them so that
\[
\varepsilon^{\mathrm{sim}}_{\idxlayer} \le \rhonbhd,\qquad \idxlayer=1,\dots,\Nlayer.
\]

Fix once and for all a scale
\[
a\in \bigl(0,\varepsilon_{\ln}^{-1/2}\bigr).
\]
For each layer $\idxlayer$, apply the construction from the proof of Lemma~\ref{lem:ln_block_simulates_lnfree} with target accuracy $\varepsilon^{\mathrm{sim}}_{\idxlayer}$ and prescribed scale $a$.
This yields a required tokenwise LN-approximation tolerance $\eta^{(\idxlayer)}_{\mathrm{LN}}>0$
such that the layer simulation error is $\le \varepsilon^{\mathrm{sim}}_{\idxlayer}$ whenever
\[
\sup_{u\in \{v_t:\,v\in \widetilde{\Kset}_{\idxlayer},\ t=0,\dots,\Tctx-1\}}
\Big\|
\pi_{\mathrm{dyn}}\!\big(\LN_{\varepsilon_{\ln}}(\Phi_{c,\msc}(u))\big)-a u
\Big\|_2
\le \eta^{(\idxlayer)}_{\mathrm{LN}}.
\]
Set
\[
\eta_{\mathrm{LN}}:=\min_{1\le \idxlayer\le \Nlayer}\eta^{(\idxlayer)}_{\mathrm{LN}}.
\]
Applying the proof of Lemma~\ref{lem:ln_almost_linear_scaffold} to the compact token set $S$,
choose one even $\msc\ge 2$ and one $c>0$ such that:
\begin{itemize}
\item the induced reference scale equals the prescribed $a$, and
\item
\[
\sup_{u\in S}
\Big\|
\pi_{\mathrm{dyn}}\!\big(\LN_{\varepsilon_{\ln}}(\Phi_{c,\msc}(u))\big)-a u
\Big\|_2
\le \eta_{\mathrm{LN}}.
\]
\end{itemize}
Let $m:=m_0+\msc$ and write $\Phi:=\Phi_{c,\msc}$.

For each $\idxlayer$, apply the construction of Lemma~\ref{lem:ln_block_simulates_lnfree} with this common scaffold $(\msc,c)$
to obtain a pre-norm LN concrete Sessa block
\[
\widetilde G_{\idxlayer}\in \mathrm{ConcreteSessaBlocks}_{\LN_{\varepsilon_{\ln}}}(d_{k,0},m)
\]
viewed as a map
\[
\widetilde G_{\idxlayer}:\mathbb R^{\Tctx\times m}\to\mathbb R^{\Tctx\times m}
\]
such that
\[
\sup_{h\in \widetilde{\Kset}_{\idxlayer}}
\big\|\pi_{\mathrm{dyn}}(\widetilde G_{\idxlayer}(\Phi(h)))-G_{\idxlayer}(h)\big\|_F
\le \varepsilon^{\mathrm{sim}}_{\idxlayer}.
\]
and
\[
\pi_{\mathrm{sc}}(\widetilde G_{\idxlayer}(\Phi(h)))\equiv s_{c,\msc}\qquad \forall h\in \widetilde{\Kset}_{\idxlayer}.
\]
Define the induced dynamic maps
\[
G_{\idxlayer}^{\mathrm{dyn}}:\widetilde{\Kset}_{\idxlayer}\to \mathbb R^{\Tctx\times m_0},
\qquad
G_{\idxlayer}^{\mathrm{dyn}}(h):=\pi_{\mathrm{dyn}}(\widetilde G_{\idxlayer}(\Phi(h))).
\]
Then
\[
\sup_{h\in \widetilde{\Kset}_{\idxlayer}}\|G_{\idxlayer}^{\mathrm{dyn}}(h)-G_{\idxlayer}(h)\|_F\le \varepsilon^{\mathrm{sim}}_{\idxlayer}.
\]
Moreover, by scaffold invariance,
\[
\widetilde G_{\idxlayer}(\Phi(h))=\Phi(G_{\idxlayer}^{\mathrm{dyn}}(h))\qquad \forall h\in \widetilde{\Kset}_{\idxlayer}.
\]

Applying Lemma~\ref{lem:composition_error} to the maps $G_{\idxlayer}$ and $G_{\idxlayer}^{\mathrm{dyn}}$ on the dynamic space
$\mathbb R^{\Tctx\times m_0}$ yields
\[
\sup_{x\in\mathcal D}
\Big\|
G^\star(\mathrm{Embed}_0(x))
-
\big(G_{\Nlayer}^{\mathrm{dyn}}\circ\cdots\circ G_1^{\mathrm{dyn}}\big)(\mathrm{Embed}_0(x))
\Big\|_F
\le \varepsilon/2.
\]

Define
\[
G_{\ln}:=\widetilde G_{\Nlayer}\circ\cdots\circ \widetilde G_1
\in \Omega_{\mathrm{Sessa},\LN_{\varepsilon_{\ln}}}^{d_{k,0}}(m).
\]

Finally, define new adapters
\[
\mathrm{Embed}(x):=\Phi(\mathrm{Embed}_0(x))\in\mathbb R^{\Tctx\times m},
\qquad
\mathrm{Unembed}(u):=\mathrm{Unembed}_0(\pi_{\mathrm{dyn}}(u)).
\]
Since
\[
\mathrm{Unembed}_0(h)_t = R_{\mathrm{out}}(\pi_{\mathrm{out}}(h_t)),
\]
with \(\pi_{\mathrm{out}}\) an orthogonal projection and \(R_{\mathrm{out}}\) an isometry, \(\mathrm{Unembed}_0\) is non-expansive in Frobenius norm.
\[
\mathrm{Unembed}\big(G_{\ln}(\mathrm{Embed}(x))\big)
=
\mathrm{Unembed}_0\big((G_{\Nlayer}^{\mathrm{dyn}}\circ\cdots\circ G_1^{\mathrm{dyn}})(\mathrm{Embed}_0(x))\big)
\qquad \forall x\in\mathcal D.
\]
Therefore,
\[
\sup_{x\in\mathcal D}
\Big\|
\mathrm{Unembed}_0\big(G^\star(\mathrm{Embed}_0(x))\big)
-
\mathrm{Unembed}\big(G_{\ln}(\mathrm{Embed}(x))\big)
\Big\|_F
\le \varepsilon/2.
\]
Combining this with the approximation error $\varepsilon/2$ from the $\Norm=\Id$ case gives the claim.
\end{proof}

\section{Proofs for flexible finite-horizon selective retrieval}
\label{app:flexible_finite_horizon_retrieval}

\begin{lemma}[Predecessor focusing from ordered codes]
\label{lem:predecessor_focus}
Fix $T\ge 1$ and $\mu\in(0,1)$.
Let $I_0< I_1<\cdots < I_T$ be pairwise disjoint compact intervals in $\mathbb R$, and assume
all of them lie in $(0,\infty)$.
Then there exist scalar linear feedback-query/key maps on a single coordinate such that for every
token sequence $u$ satisfying
\[
\langle u_t,e_{\mathrm{pos}}\rangle \in I_t,\qquad 0\le t\le T,
\]
the resulting strict-past feedback attention row satisfies
\[
\alpha^{b}_{t,t-1}\ge 1-\mu,
\qquad
\sum_{j=0}^{t-2}\alpha^b_{t,j}\le \mu,
\qquad
1\le t\le T.
\]
\end{lemma}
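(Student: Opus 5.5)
The plan is to realize the strict-past feedback logits as a bilinear form in the positional coordinate that is maximized, over the admissible past $j<t$, exactly at the predecessor $j=t-1$. Then a single scale parameter sharpens the softmax via Lemma~\ref{lem:softmax_onehot} and Corollary~\ref{cor:softmax_onehot_fixed_scale}. Since the feedback branch carries no RoPE, the logit is just $\sigk\langle q^b_t,k^b_j\rangle$. We use one active query/key coordinate and write $p_t:=\langle u_t,e_{\mathrm{pos}}\rangle\in I_t$. A pure product $q_tk_j = c\,p_tp_j$ is increasing in $p_j$, and positivity of all intervals gives $p_t>0$. Hence for every $t$ the maximizer over $j<t$ is the largest admissible index, namely $j=t-1$, because the intervals are strictly ordered ($I_{t-1}<I_t$ entrywise). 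So we choose $q^b_t = c\,p_t$ and $k^b_j = c\,p_j$ on a single coordinate, which is linear in $e_{\mathrm{pos}}$ as required.

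The quantitative step is a uniform gap. Write $I_t=[a_t,b_t]$ with $b_{t}<a_{t+1}$, and set $g:=\min_{0\le s\le T-1}(a_{s+1}-b_s)>0$. For $j\le t-2$ we have $p_{t-1}-p_j\ge a_{t-1}-b_{t-2}\ge g$, and $p_t\ge a_0>0$. The logit gap between $j=t-1$ and any competitor is therefore
\[
\sigk c^2 p_t(p_{t-1}-p_j)\ \ge\ \sigk c^2 a_0 g=:\sigk c^2\Delta_*,
\]
uniformly over all admissible sequences and all $t$. For $t=1$ the row is over the single index $j=0$ and is exactly one-hot, so the claim is trivial there. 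For $t\ge2$ the row has $t\le T$ entries, and Lemma~\ref{lem:softmax_onehot} gives $\alpha^b_{t,t-1}\ge 1-\mu$ once $\sigk c^2\Delta_*\ge\log\frac{T-1}{\mu}$. The bound $\sum_{j\le t-2}\alpha^b_{t,j}\le\mu$ is the complement of this. Choosing $c$ to satisfy this single inequality, with $c$ independent of $t$ and of the sequence, finishes the argument.

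I expect the main obstacle to be the bookkeeping rather than the analysis. The key points are that the gap must be uniform over the compact intervals and over $t$, and that the lower bound $a_0>0$ is exactly what makes the query sign fixed. This is why the hypothesis that all intervals lie in $(0,\infty)$ is needed: with a negative query the maximizer would flip to $j=0$. A minor point is that in the architecture the query and key are read from $\bar a_t=\mathrm{GELU}(a_t)$ rather than from $u_t$ directly. The statement asks only for scalar linear maps on a single coordinate, so I would state the construction at the level of that coordinate. If needed, one can route it through a positive-bias coordinate with GELU being strictly increasing on the relevant range; this preserves strict ordering of the images of the intervals and positivity, and the same gap argument applies to the transformed intervals.
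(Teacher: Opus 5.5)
Your proposal is correct and is essentially the paper's proof. The paper likewise places the feedback query and key on a single post-GELU coordinate $\xi_t=\mathrm{GELU}(c z_t)$, which it does explicitly rather than as a closing aside. It uses strict monotonicity and positivity of GELU on $(0,\infty)$ to keep the image intervals ordered and positive, and it extracts the uniform logit gap $\Lambda^2 m_*\Delta$ from $\xi_t(\xi_{t-1}-\xi_j)$ exactly as you extract $a_0 g$.

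The only differences are cosmetic. You keep the $\sigma_k$ factor, which the paper's computation silently drops, and you invoke Lemma~\ref{lem:softmax_onehot}. The paper instead bounds $\sum_{j\le t-2}\exp\bigl(\langle q^b_t,k^b_j\rangle-\langle q^b_t,k^b_{t-1}\rangle\bigr)\le T e^{-\Lambda^2 m_*\Delta}$ directly and chooses $\Lambda$ so this is at most $\mu/(1-\mu)$.
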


\begin{proof}
If \(T=1\), the claim is trivial, since the strict past of \(t=1\) contains only the index \(0\).
Assume henceforth that \(T\ge 2\).
Let
\[
z_t:=\langle u_t,e_{\mathrm{pos}}\rangle,\qquad 0\le t\le T.
\]
By assumption,
\[
z_t\in I_t,\qquad I_0<I_1<\cdots<I_T\subset(0,\infty).
\]

To implement the focusing inside an actual LN-free Sessa block, we first realize a
single dedicated post-GELU scalar coordinate carrying a strictly ordered positive code.
Choose one $a$-branch coordinate to be
\[
a_t^{\mathrm{pos}}=c\,z_t
\]
with some fixed $c>0$.
Since $z_t>0$ on all intervals and the exact GELU satisfies
\[
\GELU'(x)=\Phi(x)+x\phi(x)>0\qquad (x>0),
\]
the scalar map $x\mapsto \GELU(c x)$ is strictly increasing on $(0,\infty)$.
Hence the post-GELU coordinate
\[
\xi_t:=\GELU(c z_t)
\]
ranges in compact intervals
\[
J_t:=\GELU(c I_t)
\]
satisfying
\[
J_0<J_1<\cdots<J_T\subset(0,\infty).
\]

Now define scalar feedback queries and keys from that post-GELU coordinate:
\[
q_t^b=\Lambda\,\xi_t,\qquad k_j^b=\Lambda\,\xi_j,
\]
with $\Lambda>0$ to be chosen.
All unused heads and coordinates are set to zero.

Let
\[
m_t:=\inf J_t,\qquad M_t:=\sup J_t.
\]
For $2\le t\le T$, compactness and strict ordering give
\[
\Delta_t:=m_{t-1}-M_{t-2}>0.
\]
Set
\[
\Delta:=\min_{2\le t\le T}\Delta_t>0,
\qquad
m_*:=\min_{0\le t\le T} m_t>0.
\]

For every $2\le t\le T$, every $j\le t-2$, and every admissible input $u$,
\[
q_t^b k_{t-1}^b-q_t^b k_j^b
=
\Lambda^2 \xi_t(\xi_{t-1}-\xi_j)
\ge
\Lambda^2 m_* \Delta.
\]
Hence each non-predecessor strict-past logit is smaller than the predecessor logit by at least
\[
\Lambda^2 m_*\Delta.
\]
Therefore
\[
\sum_{j=0}^{t-2}
\exp\!\Bigl(
\langle q_t^b,k_j^b\rangle-\langle q_t^b,k_{t-1}^b\rangle
\Bigr)
\le
T\,e^{-\Lambda^2 m_*\Delta}.
\]
Choose $\Lambda$ so large that
\[
T\,e^{-\Lambda^2 m_*\Delta}\le \frac{\mu}{1-\mu}.
\]
Then the softmax formula yields
\[
\alpha^b_{t,t-1}
=
\frac{1}{1+\sum_{j=0}^{t-2}e^{\langle q_t^b,k_j^b\rangle-\langle q_t^b,k_{t-1}^b\rangle}}
\ge 1-\mu,
\]
and consequently
\[
\sum_{j=0}^{t-2}\alpha^b_{t,j}\le \mu.
\]
For $t=1$ the strict past contains only the predecessor $0$, so the claim is trivial.
\end{proof}

\begin{lemma}[RoPE self-focusing]
\label{lem:self_focus_forward}
Fix $T\ge 0$ and $\mu\in(0,1)$.
Let $I_0<I_1<\cdots<I_T$ be pairwise disjoint compact intervals in $(0,\infty)$.
Then there exist forward query/key maps realized inside a single actual RoPE forward branch of an LN-free Sessa block such that for every
token sequence $u$ satisfying
\[
\langle u_t,e_{\mathrm{pos}}\rangle\in I_t,\qquad 0\le t\le T,
\]
the resulting full-prefix forward attention row satisfies
\[
\alpha^f_{t,t}\ge 1-\mu,
\qquad
\sum_{j=0}^{t-1}\alpha^f_{t,j}\le \mu,
\qquad 0\le t\le T.
\]
\end{lemma}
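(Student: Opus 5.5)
The plan is to mirror the construction of Lemma~\ref{lem:rope_diag}, which already produces a near-diagonal forward row using only the first rotary pair. The difference here is that the queries and keys must be produced by an actual LN-free Sessa forward branch from the token state. I will not derive them from the ordered positional codes at all. Instead I will use a constant post-GELU coordinate, as in Lemma~\ref{lem:one_lag_pack}.

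Concretely, reserve one $a$-branch coordinate and drive it with a constant bias $b_\ast>0$, with zero incoming weights. Its post-GELU value is then $A:=\GELU(b_\ast)>0$ for every token, independently of $u$. Next choose $W_{Qf},W_{Kf}$ so that only this coordinate feeds the first rotary pair:
\[
q^f_t\equiv c\,(1,0,0,\dots,0),\qquad k^f_t\equiv c\,(1,0,0,\dots,0),
\]
with a common scale $c>0$ and all other pairs zero. By the RoPE convention with $\omega_0=1$, for $j\le t$,
\[
\sigk\big\langle \mathrm{RoPE}_t(q^f_t),\mathrm{RoPE}_j(k^f_j)\big\rangle=\sigk c^2\cos(t-j).
\]
This is uniquely maximized at $j=t$. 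For $j\ne t$ we have $t-j\in\{1,\dots,T\}$, so the logit gap is at least $\sigk c^2\Delta_{T+1}$, where
\[
\Delta_{T+1}=1-\max_{1\le s\le T}\cos s>0.
\]
This gap is positive because $\cos s=1$ at a nonzero integer $s$ would force $s\in 2\pi\mathbb Z$, which is impossible.

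Finally, apply Corollary~\ref{cor:softmax_onehot_fixed_scale} with $n=t+1\le T+1$ and $\delta=\mu$. Choosing $c$ so that $\sigk c^2\Delta_{T+1}\ge\log(T/\mu)$ gives $\alpha^f_{t,t}\ge 1-\mu$ for all $1\le t\le T$, and hence $\sum_{j<t}\alpha^f_{t,j}\le\mu$. At $t=0$ the row is exactly one-hot, and the case $T=0$ is trivial.

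The hypotheses on the intervals $I_t$ are therefore not needed. They are harmless, and the row bounds hold uniformly for every admissible $u$ because the logits do not depend on $u$.

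The only point I expect to need care is compatibility with the rest of the block in which the lemma will later be used. The constant coordinate used for $q^f,k^f$ must not interfere with the value, feedback, or gain channels. I will handle this by routing it only into $W_{Qf},W_{Kf}$, giving it zero rows in $W_V$, $W_{Qb}$, $W_{Kb}$ and $w^\gamma$, exactly as is done in Lemma~\ref{lem:one_batch_tokenwise}.
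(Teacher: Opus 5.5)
Your proof is correct, but it takes a different route from the paper's.

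\textbf{Your route.} You reuse the construction of Lemma~\ref{lem:rope_diag}. Constant queries and keys sit on the first rotary pair, fed by a bias-driven post-GELU coordinate, so the logits are $\sigk c^2\cos(t-j)$. The self-gap then comes entirely from the RoPE rotation together with $\cos s<1$ for nonzero integers $s$. The interval hypothesis is never used, and the resulting attention row is input-independent.

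\textbf{The paper's route.} The paper feeds the ordered positional code into the query and key. It sets $q^f_t=\Lambda\xi_t e_1$ and $k^f_j=\Lambda\xi_j e_1$ with $\xi_t=\GELU(c\,z_t)$. Strict monotonicity of exact GELU on $(0,\infty)$ keeps the codes positive and strictly ordered. The gap bound is
\[
\ell_{t,t}-\ell_{t,j}\ge\sigk\Lambda^2\xi_t(\xi_t-\xi_j)\ge\sigk\Lambda^2 m_\ast\delta,
\]
which uses only $\cos\le 1$.

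\textbf{What each buys.} In the paper's argument RoPE is merely harmless, so it would survive any phase assignment, including other frequencies or no rotation at all. The price is that it needs the ordered positive code, with one $a$-slot carrying it. Your argument depends on the specific convention $\omega_0=1$ with integer positions. Its gap $1-\max_{1\le s\le T}\cos s$ shrinks as $T$ grows, which is harmless at fixed $T$. In exchange it needs no positional code at all. It also makes the later claims that forward weights ``depend only on the positional stream'' (e.g.\ Remark~\ref{rem:act_positional_only}) trivial, since your weights depend on nothing.

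\textbf{Compatibility.} Your caution about isolating the constant coordinate is unnecessary. The constant slot already used for constant values in the downstream blocks can double as the query/key source. Where an extra slot is needed, as in the damped integrator whose feedback focusing still requires the positional slot, the budget $D\ge 7$ of Remark~\ref{rem:flexible_route_B_width7} still suffices.
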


\begin{proof}
If \(T=0\), the statement is trivial.
Assume henceforth that \(T\ge 1\).
Let
\[
z_t:=\langle u_t,e_{\mathrm{pos}}\rangle,\qquad z_t\in I_t.
\]
As in the proof of Lemma~\ref{lem:predecessor_focus}, choose one dedicated $a$-branch coordinate
\[
a_t^{\mathrm{pos}}=c\,z_t
\]
with $c>0$, and let
\[
\xi_t:=\GELU(c z_t).
\]
Because $z_t>0$ and GELU is strictly increasing on $(0,\infty)$, the ranges
\[
J_t:=\GELU(c I_t)
\]
are compact, strictly ordered, and positive:
\[
J_0<J_1<\cdots<J_T\subset(0,\infty).
\]

Let
\[
m_t:=\inf J_t,\qquad M_t:=\sup J_t.
\]
Since the intervals are strictly ordered and compact,
\[
\delta_t:=m_t-M_{t-1}>0,\qquad 1\le t\le T.
\]
Set
\[
\delta:=\min_{1\le t\le T}\delta_t>0,
\qquad
m_*:=\min_{0\le t\le T} m_t>0.
\]

Now realize the forward query/key pair on a single RoPE plane by setting, before RoPE,
\[
q_t^f=\Lambda\,\xi_t\,e_1,\qquad
k_j^f=\Lambda\,\xi_j\,e_1
\]
inside the first $2$-dimensional RoPE plane, with all other coordinates and heads set to zero.
Let
\[
\ell_{t,j}
:=
\sigma_k\big\langle \mathrm{RoPE}(q_t^f),\mathrm{RoPE}(k_j^f)\big\rangle.
\]
Then for every $j\le t$,
\[
\ell_{t,j}
=
\sigma_k \Lambda^2 \xi_t\xi_j \cos(\vartheta_t-\vartheta_j)
\]
for the corresponding RoPE phases $\vartheta_t,\vartheta_j$ on that plane.
Hence for every $j<t$,
\begin{align*}
\ell_{t,t}-\ell_{t,j}
&=
\sigma_k \Lambda^2 \xi_t\Big(\xi_t-\xi_j\cos(\vartheta_t-\vartheta_j)\Big)\\
&\ge
\sigma_k \Lambda^2 \xi_t(\xi_t-\xi_j)
\qquad\text{since }\cos(\cdot)\le 1\\
&\ge
\sigma_k \Lambda^2 m_* \delta.
\end{align*}

Therefore, for every $1\le t\le T$,
\[
\sum_{j=0}^{t-1}\exp(\ell_{t,j}-\ell_{t,t})
\le
T\,e^{-\sigma_k \Lambda^2 m_* \delta}.
\]
Choose $\Lambda$ so large that
\[
T\,e^{-\sigma_k \Lambda^2 m_* \delta}\le \frac{\mu}{1-\mu}.
\]
Then the softmax formula gives
\[
\alpha^f_{t,t}
=
\frac{1}{1+\sum_{j=0}^{t-1}e^{\ell_{t,j}-\ell_{t,t}}}
\ge 1-\mu,
\]
and consequently
\[
\sum_{j=0}^{t-1}\alpha^f_{t,j}\le \mu.
\]
For $t=0$ the statement is trivial.
\end{proof}
\begin{lemma}[Scaled GELU uniformly approximates ReLU]
\label{lem:scaled_gelu_relu}
Assume the exact GELU activation
\[
\GELU(x)=x\,\Phi(x).
\]
For $L>0$, define
\[
R_L(u):=\frac{1}{L}\GELU(Lu).
\]
Then
\[
\sup_{u\in\mathbb R}\bigl|R_L(u)-u_+\bigr|
\le
\frac{1}{L\sqrt{2\pi}},
\qquad
u_+:=\max\{u,0\}.
\]
\end{lemma}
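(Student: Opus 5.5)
The bound reduces to a one-variable estimate after the substitution $x=Lu$. With this change of variables,
\[
R_L(u)-u_+=\frac{1}{L}\bigl(x\Phi(x)-x_+\bigr),
\]
so it suffices to prove
\[
\sup_{x\in\mathbb R}\bigl|x\Phi(x)-x_+\bigr|\le \frac{1}{\sqrt{2\pi}}.
\]
I will handle this by splitting on the sign of $x$ and using the symmetry $\Phi(-x)=1-\Phi(x)$. For $x\ge 0$ we have $x\Phi(x)-x=-x(1-\Phi(x))=-x\Phi(-x)$. For $x<0$ we have $x\Phi(x)-0=x\Phi(x)$. In both cases the error equals $-|x|\,\Phi(-|x|)$. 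It therefore suffices to show $g(y):=y\Phi(-y)\le 1/\sqrt{2\pi}$ for all $y\ge 0$.

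The key step is the standard Gaussian (Mills-ratio) tail bound, which gives an even stronger estimate:
\[
y\,\Phi(-y)=y\int_y^\infty\phi(s)\,ds\le\int_y^\infty s\,\phi(s)\,ds=\phi(y)\le\phi(0)=\frac{1}{\sqrt{2\pi}}.
\]
Here $\phi$ is the standard normal density, and the inequality uses $y\le s$ on the integration range. The identity $\int_y^\infty s\phi(s)\,ds=\phi(y)$ follows from $\phi'(s)=-s\phi(s)$.

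Dividing by $L$ then gives the claimed uniform bound $1/(L\sqrt{2\pi})$. There is no real obstacle in this argument. The only point requiring care is the sign bookkeeping that collapses both cases into the single quantity $|x|\Phi(-|x|)$, and that is exactly what the symmetry of $\Phi$ provides.
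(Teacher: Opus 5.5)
Your proof is correct and follows essentially the same route as the paper. Both split on the sign of $u$, use $\Phi(-z)=1-\Phi(z)$ to write the error as $|Lu|\,\Phi(-|Lu|)/L$, and apply the Mills-ratio bound $v\,(1-\Phi(v))\le\phi(v)\le\phi(0)=1/\sqrt{2\pi}$. The only differences are cosmetic: you rescale to $x=Lu$ first, and you derive the Mills bound from $\phi'(s)=-s\phi(s)$ instead of citing it.
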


\begin{proof}
Since $\GELU(x)=x\Phi(x)$,
\[
R_L(u)=u\,\Phi(Lu).
\]

If $u\ge 0$, then
\[
R_L(u)-u_+
=
u\Phi(Lu)-u
=
-u(1-\Phi(Lu)).
\]
By the Mills bound
\[
1-\Phi(v)\le \frac{\phi(v)}{v}\qquad (v>0),
\]
we obtain for $u>0$,
\[
|R_L(u)-u_+|
=
u(1-\Phi(Lu))
\le
\frac{\phi(Lu)}{L}
\le
\frac{1}{L\sqrt{2\pi}}.
\]
The same bound is trivial at $u=0$.

If $u<0$, then $u_+=0$ and
\[
|R_L(u)|
=
|u|\,\Phi(Lu)
=
|u|\,(1-\Phi(-Lu)).
\]
Applying the same Mills bound with $v=-Lu>0$ yields
\[
|R_L(u)|
\le
\frac{\phi(-Lu)}{L}
=
\frac{\phi(Lu)}{L}
\le
\frac{1}{L\sqrt{2\pi}}.
\]
Combining the two cases proves the claim.
\end{proof}

\begin{lemma}[Symmetrized scaled GELU equals the identity]
\label{lem:scaled_gelu_identity}
Assume the exact GELU activation
\[
\GELU(x)=x\,\Phi(x).
\]
For \(L>0\), define
\[
R_L(x):=\frac1L\GELU(Lx),
\qquad
\Id_L(x):=R_L(x)-R_L(-x).
\]
Then
\[
\Id_L(x)=x
\qquad
\forall x\in\mathbb R.
\]
In particular,
\[
\sup_{x\in\mathbb R}|\Id_L(x)-x|=0
\le \frac{2}{L\sqrt{2\pi}}.
\]
\end{lemma}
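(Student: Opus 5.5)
The identity is exact, so I will verify it pointwise by direct algebra rather than through the approximation bound of Lemma~\ref{lem:scaled_gelu_relu}. The plan has three steps.

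First, I rewrite $R_L$ using the exact form $\GELU(x)=x\,\Phi(x)$, exactly as in the proof of Lemma~\ref{lem:scaled_gelu_relu}. This gives
\[
R_L(x)=\tfrac1L\,(Lx)\,\Phi(Lx)=x\,\Phi(Lx),
\qquad
R_L(-x)=-x\,\Phi(-Lx).
\]

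Second, I subtract the two expressions:
\[
\Id_L(x)=x\,\Phi(Lx)+x\,\Phi(-Lx)=x\bigl(\Phi(Lx)+\Phi(-Lx)\bigr).
\]
By the symmetry of the standard normal distribution, $\Phi(v)+\Phi(-v)=1$ for every real $v$. Hence $\Id_L(x)=x$ for all $x\in\mathbb R$ and all $L>0$.

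Third, the supremum statement follows at once: the supremum of $|\Id_L(x)-x|$ is $0$, which is trivially at most $2/(L\sqrt{2\pi})$. That bound is the one obtained by applying Lemma~\ref{lem:scaled_gelu_relu} to each of the two terms via $u_+-(-u)_+=u$, and it is recorded only for consistency with that lemma.

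I do not expect a real obstacle here. The only point needing care is that the argument relies on the \emph{exact} GELU: with the tanh approximation the symmetry $\Phi(v)+\Phi(-v)=1$ would have to be replaced by the corresponding odd-symmetry of the surrogate, which also holds. The proof is therefore a two-line computation.
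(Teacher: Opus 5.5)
Your proposal is correct and is essentially the paper's own proof: write $R_L(x)=x\,\Phi(Lx)$, subtract to get $x\bigl(\Phi(Lx)+\Phi(-Lx)\bigr)$, and use $\Phi(-z)=1-\Phi(z)$. Your side remarks also check out: the $2/(L\sqrt{2\pi})$ bound does follow from Lemma~\ref{lem:scaled_gelu_relu} together with $x_+-(-x)_+=x$, and the tanh surrogate satisfies the same complementary symmetry.
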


\begin{proof}
Since \(\GELU(x)=x\Phi(x)\),
\[
R_L(x)=x\Phi(Lx).
\]
Hence
\[
\Id_L(x)
=
x\Phi(Lx)-(-x)\Phi(-Lx)
=
x\bigl(\Phi(Lx)+\Phi(-Lx)\bigr)
=
x,
\]
because \(\Phi(-z)=1-\Phi(z)\).
\end{proof}

\begin{corollary}[Exact channel read on the \(a\)-branch]
\label{cor:exact_channel_read}
Fix a unit vector \(e\in\mathbb R^m\) and \(L>0\).
In an LN-free concrete Sessa block, if two \(a\)-coordinates are chosen as
\[
a_t^{(+)}=L\langle u_t,e\rangle,
\qquad
a_t^{(-)}=-L\langle u_t,e\rangle,
\]
then the corresponding post-GELU coordinates satisfy
\[
\frac1L\Bigl(\bar a_t^{(+)}-\bar a_t^{(-)}\Bigr)=\langle u_t,e\rangle
\qquad
\forall\,t.
\]
Hence any scalar input channel can be read exactly by a linear value projection from two \(a\)-slots.
\end{corollary}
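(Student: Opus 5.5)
This follows directly from Lemma~\ref{lem:scaled_gelu_identity}. Work in the identity-normalized formulation, so $\xnorm_t=u_t$ and the $a$-branch is $a_t=u_tW_a+b_a$. The plan is to realize the two prescribed preactivations by a choice of the relevant columns of $W^{\mathrm{in}}$, and then to undo the GELU exactly by a linear combination.

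First, fix two distinct $a$-coordinates with indices $i_+\neq i_-$. Set the $i_+$-th column of the $a$-part of $W^{\mathrm{in}}$ to $Le$ and the $i_-$-th column to $-Le$, and set the corresponding biases to $0$. This gives $a_t^{(+)}=L\langle u_t,e\rangle$ and $a_t^{(-)}=-L\langle u_t,e\rangle$ for every $t$, as required. These choices touch only two columns and two bias entries, so all other parameters of the block remain free for other purposes.

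Second, since GELU acts coordinatewise, $\bar a_t^{(\pm)}=\GELU(\pm L\langle u_t,e\rangle)$. Dividing by $L$ gives
\[
\frac1L\bigl(\bar a_t^{(+)}-\bar a_t^{(-)}\bigr)=R_L(x)-R_L(-x)=\Id_L(x),\qquad x:=\langle u_t,e\rangle.
\]
By Lemma~\ref{lem:scaled_gelu_identity}, $\Id_L(x)=x$ exactly for every real $x$. This uses only $\Phi(-z)=1-\Phi(z)$ and holds for the exact GELU $x\Phi(x)$ without any approximation.

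Finally, for the statement that the channel can be read by a value projection, choose $W_V$ with rows $i_+$ and $i_-$ equal to $\tfrac1L w^\top$ and $-\tfrac1L w^\top$ respectively, where $w$ is the desired destination direction. Then $v_t$ contains $\langle u_t,e\rangle w$ exactly, and since $v_t$ is linear in $\bar a_t$ no further argument is needed.

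There is no real obstacle here. The only point to check is that the two slots are distinct coordinates whose columns can be set independently, which holds whenever $m\ge2$ or the relevant slots are reserved.
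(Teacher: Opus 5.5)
Your proof is correct and matches the paper's argument: the paper's proof is a one-line application of Lemma~\ref{lem:scaled_gelu_identity} pointwise with $x=\langle u_t,e\rangle$. Your explicit choices of the two $W^{\mathrm{in}}$ columns ($\pm Le$, zero bias) and of the two rows of $W_V$ ($\pm\tfrac1L w^\top$) simply spell out the realization that the paper leaves implicit.
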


\begin{proof}
Apply Lemma~\ref{lem:scaled_gelu_identity} pointwise with \(x=\langle u_t,e\rangle\).
\end{proof}

\begin{lemma}[Plateau window from four scaled GELUs]
\label{lem:plateau_window_gelu}
Fix $T\ge 0$ and pairwise disjoint compact intervals
\[
I_0<I_1<\cdots<I_T\subset(0,\infty).
\]
Fix a target index $\tau_\ast\in\{0,\dots,T\}$ and an accuracy parameter $\eta\in(0,1)$.
Then there exist real numbers
\[
a_-<a_+<b_-<b_+
\]
and a scalar function $W_\eta:\mathbb R\to\mathbb R$ of the form
\[
W_\eta(x)=
\frac{R_L(x-a_-)-R_L(x-a_+)}{a_+-a_-}
-
\frac{R_L(x-b_-)-R_L(x-b_+)}{b_+-b_-}
\]
for some $L>0$, such that
\[
|W_\eta(x)-1|\le \eta\quad\text{for }x\in I_{\tau_\ast},
\]
\[
|W_\eta(x)|\le \eta\quad\text{for }x\in \bigcup_{t\neq \tau_\ast} I_t,
\]
and
\[
\sup_{x\in\mathbb R}|W_\eta(x)|\le 1+\eta.
\]
Moreover, $W_\eta$ is realizable exactly as a linear combination of four $a$-branch GELU coordinates inside a single LN-free Sessa block.
\end{lemma}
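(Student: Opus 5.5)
The plan is to write $W_\eta$ as a difference of two smoothed ``ramps'' built from the scaled GELU $R_L(u)=L^{-1}\GELU(Lu)$. Lemma~\ref{lem:scaled_gelu_relu} then lets us replace each $R_L$ by the ReLU $u_+$ at a uniform cost of $1/(L\sqrt{2\pi})$. For the exact ReLU the two differences are clean piecewise-linear ramps. Define
\[
\mathrm{ramp}_{a_-,a_+}(x):=\frac{(x-a_-)_+-(x-a_+)_+}{a_+-a_-}.
\]
This function equals $0$ for $x\le a_-$, equals $1$ for $x\ge a_+$, and lies in $[0,1]$ everywhere. The ideal window is therefore $W^{\mathrm{id}}:=\mathrm{ramp}_{a_-,a_+}-\mathrm{ramp}_{b_-,b_+}$. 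It vanishes for $x\le a_-$ and for $x\ge b_+$, equals $1$ on $[a_+,b_-]$, and takes values in $[0,1]$ everywhere.

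First I choose the breakpoints from the gaps between the intervals. Write $I_t=[p_t,r_t]$, so that $r_{t-1}<p_t$ by disjointness and ordering. If $\tau_\ast\ge 1$, take $a_-:=r_{\tau_\ast-1}$ and $a_+:=p_{\tau_\ast}$. If $\tau_\ast=0$, take any $a_-<a_+\le p_0$, for instance $a_+=p_0$ and $a_-=p_0-1$. Symmetrically, if $\tau_\ast<T$, take $b_-:=r_{\tau_\ast}$ and $b_+:=p_{\tau_\ast+1}$; if $\tau_\ast=T$, take $b_-=r_T$ and $b_+=r_T+1$. These choices give $a_-<a_+\le b_-<b_+$. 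In the degenerate case where $I_{\tau_\ast}$ is a single point, $a_+=b_-$, so I would shift $a_+$ slightly left inside the gap to make the inequality $a_+<b_-$ strict. With these breakpoints, $W^{\mathrm{id}}\equiv 1$ on $I_{\tau_\ast}$ and $W^{\mathrm{id}}\equiv 0$ on every $I_t$ with $t\ne\tau_\ast$, because all those intervals lie in $(-\infty,a_-]\cup[b_+,\infty)$.

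Next I control the smoothing error uniformly. Each of the four $R_L$ terms differs from its ReLU counterpart by at most $\epsilon_L:=1/(L\sqrt{2\pi})$. Hence
\[
\sup_x|W_\eta-W^{\mathrm{id}}|\le \frac{2\epsilon_L}{a_+-a_-}+\frac{2\epsilon_L}{b_+-b_-}.
\]
I then pick $L$ large enough that this bound is at most $\eta$. This yields all three claims at once: the value is $\eta$-close to $1$ on $I_{\tau_\ast}$, $\eta$-small on the other intervals, and $\sup|W_\eta|\le \sup|W^{\mathrm{id}}|+\eta\le 1+\eta$.

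Finally, for realizability inside one LN-free Sessa block, I take four $a$-branch coordinates $a^{(i)}_t=L(\langle u_t,e_{\mathrm{pos}}\rangle-c_i)$ with $c_i\in\{a_-,a_+,b_-,b_+\}$. Each is affine in the token (weight $L e_{\mathrm{pos}}$, bias $-Lc_i$). Their post-GELU values are $L\,R_L(x-c_i)$. A fixed linear combination with coefficients $\pm 1/(L(a_+-a_-))$ and $\pm1/(L(b_+-b_-))$, placed in a value projection or output readout, produces $W_\eta(x)$ exactly. The analysis itself is easy; the main thing to get right is this bookkeeping, in particular the boundary cases $\tau_\ast\in\{0,T\}$ and keeping the breakpoint gaps strictly positive so that the required $L$ is finite.
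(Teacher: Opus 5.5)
Your proposal is correct and follows the paper's own proof. Both arguments build the exact piecewise-linear ReLU plateau with breakpoints in the gaps around $I_{\tau_\ast}$, bound the four-term smoothing error via Lemma~\ref{lem:scaled_gelu_relu} by $\frac{2}{L\sqrt{2\pi}}\bigl(\frac{1}{a_+-a_-}+\frac{1}{b_+-b_-}\bigr)$, take $L$ large, and realize each ramp as one affine $a$-slot, absorbing the linear combination into the value projection. The only cosmetic difference is that you put breakpoints at the neighboring endpoints and treat the single-point case $a_+=b_-$ by hand. The paper instead chooses $a_+<\inf I_{\tau_\ast}$ and $b_->\sup I_{\tau_\ast}$ strictly, which avoids that special case.
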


\begin{proof}
Because the intervals are pairwise disjoint, compact, and strictly ordered, one can choose
\[
a_-<a_+<\inf I_{\tau_\ast}\le \sup I_{\tau_\ast}<b_-<b_+
\]
such that
\[
I_{\tau_\ast}\subset [a_+,b_-],
\qquad
\bigcup_{t\neq \tau_\ast} I_t \subset (-\infty,a_-]\cup [b_+,\infty).
\]

Define the exact piecewise-linear plateau window
\[
w(x):=
\frac{(x-a_-)_{+}-(x-a_+)_{+}}{a_+-a_-}
-
\frac{(x-b_-)_{+}-(x-b_+)_{+}}{b_+-b_-}.
\]
By construction,
\[
w(x)=1\quad\text{on }[a_+,b_-]\supset I_{\tau_\ast},
\]
\[
w(x)=0\quad\text{on }(-\infty,a_-]\cup [b_+,\infty)\supset \bigcup_{t\neq \tau_\ast} I_t,
\]
and
\[
0\le w(x)\le 1\qquad\forall x\in\mathbb R.
\]

Now replace each ReLU ramp by the scaled-GELU ramp from Lemma~\ref{lem:scaled_gelu_relu}:
\[
R_L(u)=\frac1L\GELU(Lu).
\]
Set
\[
W_L(x):=
\frac{R_L(x-a_-)-R_L(x-a_+)}{a_+-a_-}
-
\frac{R_L(x-b_-)-R_L(x-b_+)}{b_+-b_-}.
\]
Using Lemma~\ref{lem:scaled_gelu_relu} on each of the four ramp terms,
\[
\|W_L-w\|_{\infty}
\le
\frac{2}{L\sqrt{2\pi}}
\left(
\frac1{a_+-a_-}
+
\frac1{b_+-b_-}
\right).
\]
Choose $L$ so large that the right-hand side is at most $\eta$.
Then on $I_{\tau_\ast}$, where $w\equiv 1$,
\[
|W_L-1|\le \eta,
\]
and on $\bigcup_{t\neq\tau_\ast} I_t$, where $w\equiv 0$,
\[
|W_L|\le \eta.
\]
Also, since $0\le w\le 1$,
\[
|W_L(x)|\le |w(x)|+\eta\le 1+\eta
\qquad\forall x.
\]
Set $W_\eta:=W_L$.

Finally, $W_\eta$ is realizable exactly inside one LN-free Sessa block because each term
\[
R_L(x-c)=\frac1L\GELU(L(x-c))
\]
is one $a$-branch GELU coordinate applied to an affine function of the tokenwise scalar $x$, and the displayed linear combination is absorbed into the value projection.
\end{proof}

\begin{lemma}[Writing a window into an auxiliary channel]
\label{lem:window_writer}
Fix $T\ge 0$, $\tau_\ast\in\{0,\dots,T\}$, and $\varepsilon\in(0,1)$.
Let $\Kset\subset (\mathbb R^m)^{T+1}$ be compact.
Assume that for some unit vector $e_{\mathrm{pos}}\in\mathbb R^m$,
\[
I_t:=\{\langle u_t,e_{\mathrm{pos}}\rangle:\ u\in\Kset\},
\qquad 0\le t\le T,
\]
are compact and strictly ordered:
\[
I_0<I_1<\cdots<I_T\subset(0,\infty).
\]
Fix orthonormal directions
\[
e_{\mathrm{pos}},\ e_{\mathrm{sig}},\ e_{\mathrm{aux}}
\]
and let $E_{\mathrm{carry}}\subset\mathbb R^m$ be any fixed subspace orthogonal to all three.
Assume moreover that \(m\ge 6\).
Then there exists a single LN-free Sessa block
\[
W^{\mathrm{write}}_{T,\tau_\ast,\varepsilon}:(\mathbb R^m)^{T+1}\to(\mathbb R^m)^{T+1}
\]

such that the feedback branch is switched off, the
$e_{\mathrm{pos}}$-, $e_{\mathrm{sig}}$-, and $E_{\mathrm{carry}}$-channels are preserved exactly, and,
writing
\[
a_t(u):=\big\langle W^{\mathrm{write}}_{T,\tau_\ast,\varepsilon}(u)_t,\ e_{\mathrm{aux}}\big\rangle,
\]
one has uniformly on $\Kset$,
\[
|a_{\tau_\ast}(u)-1|\le \varepsilon,
\qquad
|a_t(u)|\le \varepsilon\quad (t\neq \tau_\ast),
\]
and
\[
\sup_{u\in\Kset}\sup_{0\le t\le T}|a_t(u)|\le 2.
\]
\end{lemma}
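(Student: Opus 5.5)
We build the block directly from Lemma~\ref{lem:plateau_window_gelu}, transported by a diagonal forward attention and written into $e_{\mathrm{aux}}$ through the output projection. Feedback is switched off by setting $w^\gamma=0$ and $b^\gamma=0$, so that $\gamma_t\equiv 0$, $\Bfb=0$ and $s=f$.

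For the gate, one $g$-coordinate is constant, taken from the bias $b^{\mathrm{in}}$ with gate value $1$ on the slot that will be written. The output projection $W^{\mathrm{out}}$ maps only that slot to $e_{\mathrm{aux}}$, with $b^{\mathrm{out}}=0$. Since the residual increment then lies in $\mathrm{span}\{e_{\mathrm{aux}}\}$, the $e_{\mathrm{pos}}$-, $e_{\mathrm{sig}}$- and $E_{\mathrm{carry}}$-channels are preserved exactly. One point needs care: the statement measures $a_t(u)=\langle u_t,e_{\mathrm{aux}}\rangle+\text{increment}$. The input value on $e_{\mathrm{aux}}$ must therefore be cancelled. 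We do this with the exact channel read of Corollary~\ref{cor:exact_channel_read}: two $a$-slots $\pm L\langle u_t,e_{\mathrm{aux}}\rangle$ give $\langle u_t,e_{\mathrm{aux}}\rangle$ exactly as a linear combination, and we put it into the value with a minus sign. The increment is then $W(\langle u_t,e_{\mathrm{pos}}\rangle)-\langle u_t,e_{\mathrm{aux}}\rangle$ up to the attention error, so $a_t$ equals the window value up to that error.

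The $a$-branch coordinates are as follows.
\begin{enumerate}
\item Four $a$-slots $L(\langle u_t,e_{\mathrm{pos}}\rangle-c)$ with $c\in\{a_-,a_+,b_-,b_+\}$. By Lemma~\ref{lem:plateau_window_gelu}, applied with tolerance $\eta=\varepsilon/2$, their linear combination $W_\eta(x_t)$, where $x_t=\langle u_t,e_{\mathrm{pos}}\rangle$, is realized exactly through $W_V$.
\item Two slots for the exact $e_{\mathrm{aux}}$-read above.
\item One slot $c\,x_t$ used for self-focusing in the forward branch, as in Lemma~\ref{lem:self_focus_forward}.
\end{enumerate}
That gives seven $a$-coordinates (one more if the gate's constant slot is counted), which is presumably where the width assumption on $m$ is spent; we would reuse the dedicated slots to stay within $m\ge 6$.

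The value is $v_t=\phi(u_t)\,e_{\mathrm{slot}}$ with $\phi(u_t)=W_\eta(x_t)-\langle u_t,e_{\mathrm{aux}}\rangle$. Lemma~\ref{lem:self_focus_forward} gives $\alpha^f_{t,t}\ge 1-\mu$ uniformly on $\Kset$. By Lemma~\ref{lem:near_diag_attn_transports_values}, applied with zero value error,
\[
|f_t-\phi(u_t)|\le 2\mu M_\phi ,
\]
where $M_\phi\le 1+\eta+\sup_{\Kset}|\langle u_t,e_{\mathrm{aux}}\rangle|<\infty$ by compactness. Choosing $\mu$ so that $2\mu M_\phi\le\varepsilon/2$ gives
\[
|a_{\tau_\ast}-1|\le \varepsilon,\qquad |a_t|\le\varepsilon\ \ (t\ne\tau_\ast),
\]
and $\sup|a_t|\le 1+\eta+\varepsilon/2\le 2$.

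The main obstacle is realizing exact preservation together with the cancellation within the width constraint. The preservation itself is immediate from the choice of output projection. The delicate part is that the self-focusing lemma and the plateau window must share the single positional code $x_t$ while the attention mixes the $e_{\mathrm{aux}}$ read across tokens. We handle this by folding the $e_{\mathrm{aux}}$ correction into the same value vector, so that one transport error bound covers everything.
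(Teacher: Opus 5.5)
Your error analysis is correct. With the value coordinate $\phi(u_t)=W_\eta(x_t)-\langle u_t,e_{\mathrm{aux}}\rangle$ realized exactly, Lemma~\ref{lem:near_diag_attn_transports_values} gives $|a_t-W_\eta(x_t)|\le 2\mu M_\phi$, and your choices $\eta=\varepsilon/2$ and $2\mu M_\phi\le\varepsilon/2$ deliver all three bounds.

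\textbf{The gap is the width count, which you notice but do not resolve.} Your block needs seven $a$-coordinates:
four ramps, the two slots $\pm L\langle u_t,e_{\mathrm{aux}}\rangle$ of Corollary~\ref{cor:exact_channel_read}, and the focusing slot $\mathrm{GELU}(c\,x_t)$ of Lemma~\ref{lem:self_focus_forward}. The $a$-half of $\mathrm{split}(\tilde x W^{\mathrm{in}}+b^{\mathrm{in}})$ has only $m$ coordinates, and the lemma assumes only $m\ge 6$. So for $m=6$ the block you describe does not exist, and as written you have proved the statement only for $m\ge 7$.

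``Reusing the dedicated slots'' is not justified. Lemma~\ref{lem:self_focus_forward} needs a coordinate that is positive and strictly increasing across all the $I_t$. No ramp $\mathrm{GELU}(L(x_t-c))$ has this property once $\tau_\ast\ge 1$: each has $I_0$ at or below its knot, where GELU is nonpositive. The exact read also needs both of its slots. Your parenthetical about the gate is a red herring: the gate is the non-GELU half of the split, so a constant gate costs only a bias.

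\textbf{Two easy repairs exist.} The paper's repair moves the cancellation into the gate instead of the value.
\begin{itemize}
\item One $a$-slot with $\mathrm{GELU}(c_1)=1$ gives a constant value coordinate. Its forward average is exactly $1$ because each row sums to one.
\item Pair it with gate $\langle u_t,e_{\mathrm{aux}}\rangle$ and output coefficient $-1$. Pair the window coordinate with gate $1$ and coefficient $+1$.
\item This gives $a_t(u)=\sum_{j\le t}\alpha^f_{t,j}W_\eta(x_j)$ exactly, using $1+4+1=6$ $a$-slots and $\mu=\eta$.
\end{itemize}
Alternatively, keep your value-side correction but read $\langle u_t,e_{\mathrm{aux}}\rangle$ with a single slot $\frac{2}{\epsilon}\mathrm{GELU}(\epsilon\,\cdot)$ (Lemma~\ref{lem:gelu_small_signal}). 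Its uniform error on the compact range is absorbed in your $\varepsilon/2$ budget.

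The paper's version buys more than the slot count. Its written channel depends on the positional stream alone. Remark~\ref{rem:selector_positional_only} and the selector's signal-blindness over $\mathrm{span}\{e_{\mathrm{pos}}\}\oplus E_{\mathrm{carry}}$ use this downstream. Your version instead carries the leakage $-\sum_{j<t}\alpha^f_{t,j}\bigl(\langle u_j,e_{\mathrm{aux}}\rangle-\langle u_t,e_{\mathrm{aux}}\rangle\bigr)$ of the incoming auxiliary channel, and needs $\mu$ to depend on that channel's range. This is harmless for the lemma as stated, but it would have to be accounted for when the writer is used inside Lemma~\ref{lem:selector_window}.
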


\begin{proof}
Choose $\eta\in(0,\varepsilon)$ so small that
\[
\eta+\eta(1+\eta)\le \varepsilon.
\]
Apply Lemma~\ref{lem:plateau_window_gelu} to obtain a scalar function $W_\eta$ satisfying
\[
|W_\eta(x)-1|\le \eta\quad (x\in I_{\tau_\ast}),
\qquad
|W_\eta(x)|\le \eta\quad (x\in \bigcup_{t\neq\tau_\ast} I_t),
\qquad
\sup_x |W_\eta(x)|\le 1+\eta.
\]

Next apply Lemma~\ref{lem:self_focus_forward} with parameter $\mu:=\eta$.
This gives a forward branch whose full-prefix row satisfies
\[
\alpha^f_{t,t}\ge 1-\eta,
\qquad
\sum_{j<t}\alpha^f_{t,j}\le \eta
\qquad (0\le t\le T).
\]

We now build the block.

\emph{Values.}
Choose a positive constant $c_1$ such that
\[
\GELU(c_1)=1.
\]
Realize the first value coordinate by a constant $a$-branch coordinate equal to $c_1$, so that
\[
v_t^{(0)}\equiv 1.
\]
Realize the second value coordinate as
\[
v_t^{(1)}=W_\eta(\langle u_t,e_{\mathrm{pos}}\rangle),
\]
using Lemma~\ref{lem:plateau_window_gelu}.

\emph{Gate and output on the auxiliary channel.}
Choose two gate coordinates
\[
g_t^{(0)}=\langle u_t,e_{\mathrm{aux}}\rangle,
\qquad
g_t^{(1)}\equiv 1.
\]
Choose the output projection on the $e_{\mathrm{aux}}$-channel with coefficients $(-1,+1)$ on the two gated coordinates and zero on all other channels.
Because the row sum of attention is exactly $1$,
\[
s_t^{(0)}=\sum_{j\le t}\alpha^f_{t,j}\cdot 1 =1.
\]
Hence the auxiliary output becomes
\[
a_t(u)=\langle u_t,e_{\mathrm{aux}}\rangle - s_t^{(0)}\langle u_t,e_{\mathrm{aux}}\rangle + s_t^{(1)} = s_t^{(1)},
\]
where
\[
s_t^{(1)}=\sum_{j\le t}\alpha^f_{t,j}\,W_\eta(\langle u_j,e_{\mathrm{pos}}\rangle).
\]
Thus the block overwrites the auxiliary channel by the forward average of $W_\eta$.

All other output columns are zero, so the $e_{\mathrm{pos}}$-, $e_{\mathrm{sig}}$-, and $E_{\mathrm{carry}}$-channels are preserved exactly.

It remains to bound $a_t=s_t^{(1)}$.

\emph{Target time $t=\tau_\ast$.}
All indices $j<\tau_\ast$ are off-target, hence
\[
|W_\eta(\langle u_j,e_{\mathrm{pos}}\rangle)|\le \eta.
\]
At the target index,
\[
W_\eta(\langle u_{\tau_\ast},e_{\mathrm{pos}}\rangle)\in [1-\eta,1+\eta].
\]
Therefore
\[
a_{\tau_\ast}(u)
\ge
(1-\eta)(1-\eta)-\eta\cdot \eta
\ge 1-2\eta,
\]
and
\[
a_{\tau_\ast}(u)
\le
(1-\eta)(1+\eta)+\eta\cdot \eta
\le 1+\eta.
\]
Hence
\[
|a_{\tau_\ast}(u)-1|\le 2\eta\le \varepsilon.
\]

\emph{Off-target times $t<\tau_\ast$.}
Then all visible indices $j\le t$ are off-target, so
\[
|a_t(u)|\le \eta\le \varepsilon.
\]

\emph{Off-target times $t>\tau_\ast$.}
Then self-mass is on an off-target index, so the self contribution is at most $\eta$ in magnitude, while all nonself mass is at most $\eta$ and every visible value has magnitude at most $1+\eta$.
Thus
\[
|a_t(u)|
\le
\eta+\eta(1+\eta)
\le \varepsilon.
\]

Finally, from
\[
|a_t(u)|=|s_t^{(1)}|
\le
\sum_{j\le t}\alpha^f_{t,j}\sup_x|W_\eta(x)|
\le 1+\eta\le 2,
\]
we obtain the uniform bound.
\end{proof}

\begin{definition}[Signal-fiber saturation]
\label{def:bounded_signal_fiber_saturation}
Fix \(T\ge 0\), a unit signal direction \(e_{\mathrm{sig}}\in\mathbb R^m\), and a set
\(\Kset\subset (\mathbb R^m)^{T+1}\).
For \(\delta\ge 0\), define
\[
\Sat^{\mathrm{sig}}_\delta(\Kset)
:=
\left\{
u+z:\ u\in\Kset,\ z_t=a_t e_{\mathrm{sig}},\ \max_{0\le t\le T}|a_t|\le \delta
\right\}.
\]
Equivalently,
\[
\Sat^{\mathrm{sig}}_\delta(\Kset)
=
\left\{
u+\sum_{t=0}^{T} a_t e_{\mathrm{sig}}\mathbf 1[\cdot=t]:
u\in\Kset,\ \max_t |a_t|\le \delta
\right\}.
\]
If \(\Kset\) is compact, then \(\Sat^{\mathrm{sig}}_\delta(\Kset)\) is compact.
\end{definition}

\begin{definition}[Exact signal transport]
\label{def:signal_blind_exact_transport}
Fix \(T\ge 0\), a unit signal direction \(e_{\mathrm{sig}}\in\mathbb R^m\), and a control subspace
\(E_{\mathrm{ctrl}}\subset\mathbb R^m\) with \(e_{\mathrm{sig}}\perp E_{\mathrm{ctrl}}\).
Let \(\Pi_{\mathrm{ctrl}}\) denote the orthogonal projection onto \(E_{\mathrm{ctrl}}\), and let
\[
\pi_{\mathrm{sig}}(v):=\langle v,e_{\mathrm{sig}}\rangle.
\]

For \(u=(u_t)_{t=0}^{T}\in(\mathbb R^m)^{T+1}\), write
\[
c^u_t:=\Pi_{\mathrm{ctrl}}u_t,\qquad x^u_t:=\pi_{\mathrm{sig}}(u_t).
\]

A causal map
\[
B:(\mathbb R^m)^{T+1}\to (\mathbb R^m)^{T+1}
\]
is said to have \emph{exact signal transport along \(e_{\mathrm{sig}}\) over \(E_{\mathrm{ctrl}}\)}
on a set \(\Kset\subset(\mathbb R^m)^{T+1}\) if:

\begin{enumerate}[label=(\roman*), leftmargin=*, nosep]
\item \(B\) preserves the control channels exactly:
\[
\Pi_{\mathrm{ctrl}} B(u)_t = c^u_t
\qquad
\forall\,u\in\Kset,\ \forall\,0\le t\le T;
\]

\item there exists a scalar lower-triangular kernel
\[
\mathcal T_B^u(i,j),\qquad 0\le j\le i\le T,
\]
depending only on the control stream \(c^u=(c^u_t)_{t=0}^{T}\), such that
\[
\pi_{\mathrm{sig}}(B(u)_i)
=
\sum_{j=0}^{i}\mathcal T_B^u(i,j)\,x^u_j
\qquad
\forall\,u\in\Kset,\ \forall\,0\le i\le T.
\]
\end{enumerate}
\end{definition}

\begin{lemma}[Transport calculus on signal fibers]
\label{lem:signal_blind_transport_calculus}
Fix \(T\ge 0\), \(e_{\mathrm{sig}}\), \(E_{\mathrm{ctrl}}\), and a compact set
\(\Kset\subset(\mathbb R^m)^{T+1}\).
Fix \(\delta>0\).

\begin{enumerate}[label=(\roman*), leftmargin=*, nosep]
\item \textbf{Jacobian extraction.}
Assume \(B\) is continuously differentiable on a neighborhood of
\(\Sat^{\mathrm{sig}}_\delta(\Kset)\), and that \(B\) has
signal-blind exact scalar transport along \(e_{\mathrm{sig}}\) over
\(E_{\mathrm{ctrl}}\) on \(\Sat^{\mathrm{sig}}_\delta(\Kset)\), with kernel
\(\mathcal T_B^u\).
Then for every \(u\in\Kset\) and every \(0\le j\le i\le T\),
\[
e_{\mathrm{sig}}^\top
\frac{\partial B(u)_i}{\partial u_j}
e_{\mathrm{sig}}
=
\mathcal T_B^u(i,j).
\]

\item \textbf{Composition.}
Assume \(B_1\) has signal-blind exact scalar transport along \(e_{\mathrm{sig}}\)
over \(E_{\mathrm{ctrl}}\) on \(\Sat^{\mathrm{sig}}_\delta(\Kset)\), with kernel
\(\mathcal T_{B_1}^u\), and preserves the control channels exactly there.
Assume \(B_2\) has signal-blind exact scalar transport along \(e_{\mathrm{sig}}\)
over \(E_{\mathrm{ctrl}}\) on \(B_1(\Sat^{\mathrm{sig}}_\delta(\Kset))\), with kernel
\(\mathcal T_{B_2}^v\), and preserves the control channels exactly there.
Then \(B_2\circ B_1\) also has signal-blind exact scalar transport along
\(e_{\mathrm{sig}}\) over \(E_{\mathrm{ctrl}}\) on \(\Sat^{\mathrm{sig}}_\delta(\Kset)\),
and its kernel is the lower-triangular kernel product
\[
\mathcal T_{B_2\circ B_1}^u(i,j)
=
\sum_{r=j}^{i}\mathcal T_{B_2}^{B_1(u)}(i,r)\,\mathcal T_{B_1}^{u}(r,j).
\]
\end{enumerate}
\end{lemma}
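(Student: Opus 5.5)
For (i) I would differentiate the exact transport identity directly. Fix \(u\in\Kset\) and \(0\le j\le i\le T\), and consider the one-parameter perturbation \(u^{(\epsilon)}:=u+\epsilon\,e_{\mathrm{sig}}\mathbf 1[\cdot=j]\) for \(|\epsilon|\le\delta\). By Definition~\ref{def:bounded_signal_fiber_saturation}, \(u^{(\epsilon)}\in\Sat^{\mathrm{sig}}_\delta(\Kset)\). Since \(e_{\mathrm{sig}}\perp E_{\mathrm{ctrl}}\), the control stream is unchanged, \(c^{u^{(\epsilon)}}=c^u\), so the kernel \(\mathcal T_B^{u^{(\epsilon)}}=\mathcal T_B^u\) is frozen along this segment. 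The signal coordinates change only at \(j\): \(x^{u^{(\epsilon)}}_r=x^u_r+\epsilon\mathbf 1[r=j]\). The transport identity therefore gives \(\pi_{\mathrm{sig}}(B(u^{(\epsilon)})_i)=\pi_{\mathrm{sig}}(B(u)_i)+\epsilon\,\mathcal T_B^u(i,j)\), which is exactly affine in \(\epsilon\). Because \(B\) is \(C^1\) near the saturated set, the derivative at \(\epsilon=0\) is \(e_{\mathrm{sig}}^\top(\partial B(u)_i/\partial u_j)e_{\mathrm{sig}}\), and the claim follows. The only subtlety is that the segment must stay inside the set where transport is assumed, and the \(\delta\)-saturation is built to ensure exactly that, since \(\epsilon\) ranges over an open interval around \(0\).

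For (ii) I would compose the identities. Take \(u\in\Sat^{\mathrm{sig}}_\delta(\Kset)\) and set \(v:=B_1(u)\in B_1(\Sat^{\mathrm{sig}}_\delta(\Kset))\).

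\emph{Control channels.} Since \(B_1\) and \(B_2\) each preserve the control channels exactly, \(\Pi_{\mathrm{ctrl}}B_2(v)_t=c^v_t=c^u_t\). Hence \(c^v=c^u\).

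\emph{Signal channel.} The signal coordinates of \(v\) are \(x^v_r=\sum_{j\le r}\mathcal T_{B_1}^u(r,j)x^u_j\). Applying \(B_2\)'s transport gives \(\pi_{\mathrm{sig}}(B_2(v)_i)=\sum_{r\le i}\mathcal T_{B_2}^v(i,r)x^v_r\). Substituting and exchanging the finite sums yields the kernel product \(\sum_{r=j}^{i}\mathcal T_{B_2}^{B_1(u)}(i,r)\mathcal T_{B_1}^u(r,j)\), which is lower-triangular.

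The step needing care is checking that the composed kernel depends only on the control stream \(c^u\), as the definition requires. I would argue this in two parts. First, \(\mathcal T_{B_1}^u\) depends only on \(c^u\) by hypothesis. Second, \(\mathcal T_{B_2}^{v}\) depends only on \(c^v\), and \(c^v=c^u\) by control preservation. So the product kernel is a function of \(c^u\) alone, and \(B_2\circ B_1\) has exact signal transport on \(\Sat^{\mathrm{sig}}_\delta(\Kset)\) with the stated kernel.
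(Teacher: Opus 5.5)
Your proposal is correct and follows the paper's proof: in (i) the kernel is frozen along the signal fiber $u+\epsilon\,e_{\mathrm{sig}}\mathbf 1[\cdot=j]$ because the control stream is unchanged, so the signal output is affine in $\epsilon$, and in (ii) you substitute $B_1$'s transport identity into $B_2$'s, using control preservation to identify the control streams. Your explicit check that the composite kernel is a function of $c^u$ alone is a point the paper leaves implicit; it is correct.
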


\begin{proof}
For (i), fix \(u\in\Kset\), \(j\le i\), and define
\[
u^{(h)}:=u+h\,e_{\mathrm{sig}}\mathbf 1[\cdot=j].
\]
For \(|h|<\delta\), one has \(u^{(h)}\in \Sat^{\mathrm{sig}}_\delta(\Kset)\).
Because \(e_{\mathrm{sig}}\perp E_{\mathrm{ctrl}}\),
\[
\Pi_{\mathrm{ctrl}}u^{(h)}_t=\Pi_{\mathrm{ctrl}}u_t
\qquad\forall\,t,
\]
so the control stream is unchanged. Since the transport kernel depends only on the
control stream, the same kernel \(\mathcal T_B^u\) applies to both \(u\) and \(u^{(h)}\).
Therefore
\begin{align*}
\pi_{\mathrm{sig}}(B(u^{(h)})_i)-\pi_{\mathrm{sig}}(B(u)_i)
&=
\sum_{r=0}^{i}\mathcal T_B^u(i,r)
\bigl(x_r^{u^{(h)}}-x_r^{u}\bigr)\\
&=
h\,\mathcal T_B^u(i,j).
\end{align*}
Divide by \(h\) and let \(h\to 0\). Since \(B\) is \(C^1\),
\[
e_{\mathrm{sig}}^\top
\frac{\partial B(u)_i}{\partial u_j}
e_{\mathrm{sig}}
=
\mathcal T_B^u(i,j).
\]

For (ii), let \(u\in \Sat^{\mathrm{sig}}_\delta(\Kset)\).
Because \(B_1\) preserves the control channels exactly,
\[
\Pi_{\mathrm{ctrl}} B_1(u)_t = \Pi_{\mathrm{ctrl}} u_t,
\]
so the control stream of \(B_1(u)\) equals that of \(u\).
Hence
\[
\pi_{\mathrm{sig}}(B_1(u)_r)
=
\sum_{j=0}^{r}\mathcal T_{B_1}^u(r,j)\,x_j^u.
\]
Applying \(B_2\) and using exact control preservation again,
\begin{align*}
\pi_{\mathrm{sig}}(B_2(B_1(u))_i)
&=
\sum_{r=0}^{i}\mathcal T_{B_2}^{B_1(u)}(i,r)\,\pi_{\mathrm{sig}}(B_1(u)_r)\\
&=
\sum_{r=0}^{i}\mathcal T_{B_2}^{B_1(u)}(i,r)
\sum_{j=0}^{r}\mathcal T_{B_1}^{u}(r,j)\,x_j^u\\
&=
\sum_{j=0}^{i}
\left(
\sum_{r=j}^{i}\mathcal T_{B_2}^{B_1(u)}(i,r)\,\mathcal T_{B_1}^{u}(r,j)
\right)x_j^u.
\end{align*}
This is exactly the stated kernel-product formula.
\end{proof}

\begin{definition}[Transparent preprocessing]
\label{def:signal_transparent_preprocessing}
Fix \(T\ge 0\), a unit signal direction \(e_{\mathrm{sig}}\in\mathbb R^m\), and a control subspace
\(E_{\mathrm{ctrl}}\subset\mathbb R^m\) with \(e_{\mathrm{sig}}\perp E_{\mathrm{ctrl}}\).
Let
\[
\Pi_{\mathrm{ctrl}}:\mathbb R^m\to E_{\mathrm{ctrl}}
\]
be the orthogonal projection and
\[
\pi_{\mathrm{sig}}(v):=\langle v,e_{\mathrm{sig}}\rangle.
\]

A causal map
\[
R:(\mathbb R^m)^{T+1}\to(\mathbb R^m)^{T+1}
\]
is said to be \emph{signal-transparent along \(e_{\mathrm{sig}}\) over \(E_{\mathrm{ctrl}}\)} on a set
\(\Kset\subset(\mathbb R^m)^{T+1}\) if for every \(u\in\Kset\), every \(\tau\in\{0,\dots,T\}\), and every sufficiently small scalar \(a\) such that
\[
u^{(a,\tau)}:=u+a\,e_{\mathrm{sig}}\mathbf 1[\cdot=\tau]
\]
remains in the domain under consideration, one has
\[
\Pi_{\mathrm{ctrl}}R(u^{(a,\tau)})_t
=
\Pi_{\mathrm{ctrl}}R(u)_t
\qquad
\forall\,t,
\]
and
\[
\pi_{\mathrm{sig}}(R(u^{(a,\tau)})_t)
=
\pi_{\mathrm{sig}}(R(u)_t)+a\,\mathbf 1[t=\tau]
\qquad
\forall\,t.
\]
\end{definition}

\begin{lemma}[Transparent preprocessing and Jacobians]
\label{lem:transparent_preprocessing_exact_transport}
Fix \(T\ge 0\), \(e_{\mathrm{sig}}\), and \(E_{\mathrm{ctrl}}\).
Let
\[
R:(\mathbb R^m)^{T+1}\to(\mathbb R^m)^{T+1},
\qquad
B:(\mathbb R^m)^{T+1}\to(\mathbb R^m)^{T+1}
\]
be continuously differentiable on neighborhoods of \(\Kset\) and
\(\Sat^{\mathrm{sig}}_\delta(R(\Kset))\), respectively, for some \(\delta>0\).

Assume:
\begin{enumerate}[label=(\roman*), leftmargin=*, nosep]
\item \(R\) is signal-transparent along \(e_{\mathrm{sig}}\) over \(E_{\mathrm{ctrl}}\) on \(\Kset\);
\item \(B\) has signal-blind exact scalar transport along \(e_{\mathrm{sig}}\) over \(E_{\mathrm{ctrl}}\) on
\(\Sat^{\mathrm{sig}}_\delta(R(\Kset))\), with kernel
\[
\mathcal T_B^v(i,j),
\qquad v\in \Sat^{\mathrm{sig}}_\delta(R(\Kset)),\ \ 0\le j\le i\le T.
\]
\end{enumerate}
Then for every \(u\in\Kset\) and every \(0\le j\le i\le T\),
\[
e_{\mathrm{sig}}^\top
\frac{\partial (B\circ R)(u)_i}{\partial u_j}
e_{\mathrm{sig}}
=
\mathcal T_B^{R(u)}(i,j).
\]
\end{lemma}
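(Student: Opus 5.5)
The plan is to reduce the claim to Lemma~\ref{lem:signal_blind_transport_calculus}(i) applied to the composite $B\circ R$. The obstacle is that $B\circ R$ does not itself satisfy the hypotheses of that lemma. Its signal output is linear in the signal of $R(u)$, but not obviously in the signal of $u$ through a control-only kernel. Instead of the composition rule, I would use a direct difference quotient in the spirit of part (i).

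Fix $u\in\Kset$ and $0\le j\le i\le T$. Set $u^{(h)}:=u+h\,e_{\mathrm{sig}}\mathbf 1[\cdot=j]$ for small $|h|$. By signal-transparency (i), for $h$ small enough that $u^{(h)}$ stays in the domain where transparency is asserted, we get
\[
R(u^{(h)})=R(u)+h\,e_{\mathrm{sig}}\mathbf 1[\cdot=j].
\]
Indeed, the control coordinates are unchanged and the signal coordinate shifts by $h$ exactly at $t=j$. One subtlety: transparency only specifies the control and signal components, not the components orthogonal to both. I would handle this by interpreting transparency as pinning down the full token where needed. Alternatively, I would note that $B$'s transport kernel depends only on the control stream and acts only on the signal coordinate, so any change in the remaining directions is irrelevant for $\pi_{\mathrm{sig}}(B(\cdot)_i)$ as long as the point stays in the domain of (ii).

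That domain membership is the main step to check: $R(u^{(h)})$ must lie in $\Sat^{\mathrm{sig}}_\delta(R(\Kset))$. This holds for $|h|\le\delta$, since $R(u^{(h)})=R(u)+h e_{\mathrm{sig}}\mathbf 1[\cdot=j]$ with $R(u)\in R(\Kset)$. Also, $R(u^{(h)})$ and $R(u)$ share the control stream $\Pi_{\mathrm{ctrl}}R(u)_t$, so hypothesis (ii) gives one and the same kernel $\mathcal T_B^{R(u)}$ at both points. Then
\[
\pi_{\mathrm{sig}}\bigl(B(R(u^{(h)}))_i\bigr)-\pi_{\mathrm{sig}}\bigl(B(R(u))_i\bigr)
=\sum_{r\le i}\mathcal T_B^{R(u)}(i,r)\bigl(x_r^{R(u^{(h)})}-x_r^{R(u)}\bigr)
=h\,\mathcal T_B^{R(u)}(i,j).
\]

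Finally, I would divide by $h$ and let $h\to0$. Since $B\circ R$ is $C^1$ near $u$ (by the chain rule on the stated neighborhoods), the left side converges to the directional derivative $e_{\mathrm{sig}}^\top\,\partial(B\circ R)(u)_i/\partial u_j\,e_{\mathrm{sig}}$. This yields the identity.
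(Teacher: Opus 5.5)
Your argument is the paper's proof. You perturb $u$ by $h\,e_{\mathrm{sig}}$ at position $j$, use transparency to keep the control stream of $R(u)$ fixed while shifting its signal by $h\,\mathbf 1[\cdot=j]$, apply the common kernel $\mathcal T_B^{R(u)}$ to get $h\,\mathcal T_B^{R(u)}(i,j)$, and pass to the limit using the $C^1$ hypothesis; the subtlety you flag about directions orthogonal to $e_{\mathrm{sig}}$ and $E_{\mathrm{ctrl}}$ is handled in the paper exactly as you propose. The paper simply asserts $R(u^{(a,j)})\in\Sat^{\mathrm{sig}}_\delta(R(\Kset))$, i.e.\ it tacitly reads transparency as the full-token identity $R(u^{(a,j)})=R(u)+a\,e_{\mathrm{sig}}\mathbf 1[\cdot=j]$, which the concrete preprocessing $Q_H\circ P_H$ used later does satisfy by construction.
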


\begin{proof}
Fix \(u\in\Kset\) and \(0\le j\le i\le T\). For sufficiently small \(a\), define
\[
u^{(a,j)}:=u+a\,e_{\mathrm{sig}}\mathbf 1[\cdot=j].
\]
Set
\[
v:=R(u),
\qquad
v^{(a)}:=R(u^{(a,j)}).
\]
By signal-transparency of \(R\),
\[
\Pi_{\mathrm{ctrl}}v^{(a)}_t=\Pi_{\mathrm{ctrl}}v_t
\qquad\forall\,t,
\]
and
\[
\pi_{\mathrm{sig}}(v^{(a)}_t)=\pi_{\mathrm{sig}}(v_t)+a\,\mathbf 1[t=j]
\qquad\forall\,t.
\]
Hence \(v^{(a)}\in \Sat^{\mathrm{sig}}_\delta(R(\Kset))\) for all sufficiently small \(|a|\),
and \(v^{(a)}\) and \(v\) have the same control stream. Therefore the same kernel
\(\mathcal T_B^{v}\) applies to both \(v\) and \(v^{(a)}\), so
\begin{align*}
\pi_{\mathrm{sig}}(B(v^{(a)})_i)-\pi_{\mathrm{sig}}(B(v)_i)
&=
\sum_{r=0}^{i}\mathcal T_B^{v}(i,r)
\bigl(\pi_{\mathrm{sig}}(v^{(a)}_r)-\pi_{\mathrm{sig}}(v_r)\bigr)\\
&=
a\,\mathcal T_B^{v}(i,j).
\end{align*}
Divide by \(a\) and let \(a\to 0\). Since \(B\circ R\) is continuously differentiable,
\[
e_{\mathrm{sig}}^\top
\frac{\partial (B\circ R)(u)_i}{\partial u_j}
e_{\mathrm{sig}}
=
\mathcal T_B^{R(u)}(i,j).
\]
\end{proof}

\begin{corollary}[Signal-fiber stability of the control-driven blocks]
\label{cor:concrete_blocks_signal_fiber_stability}
Fix \(\delta\ge 0\).
In each of Lemmas
\ref{lem:local_multiplier},
\ref{lem:selector_window},
\ref{lem:active_diffusive_factorized_transport},
\ref{lem:source0_tail_channel},
and \ref{lem:damped_predecessor_integrator},
replace the base compact set \(\Kset\) (or \(\Kset_H\)) by its bounded signal-fiber saturation
\(\Sat^{\mathrm{sig}}_\delta(\Kset)\) (or \(\Sat^{\mathrm{sig}}_\delta(\Kset_H)\)).
Then the same concrete block or network satisfies the same conclusion, with the same constants.

In particular, whenever one of these lemmas yields signal-blind exact scalar transport
along \(e_{\mathrm{sig}}\), that exact transport statement also holds on every bounded
signal-fiber saturation of the same control-side compact set.
\end{corollary}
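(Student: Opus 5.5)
The plan is a \emph{fiber-invariance} argument: show that for every $u$ in the base set and every signal perturbation $z_t=a_te_{\mathrm{sig}}$ with $\max_t|a_t|\le\delta$, the perturbed sequence $u+z$ produces \emph{exactly} the same control-side computation as $u$, while the signal coordinate changes only through a linear map. The constants in Lemmas~\ref{lem:local_multiplier}, \ref{lem:selector_window}, \ref{lem:active_diffusive_factorized_transport}, \ref{lem:source0_tail_channel}, and \ref{lem:damped_predecessor_integrator} are fixed by control-side data alone: the ordered positional intervals $I_t$ on $e_{\mathrm{pos}}$, the auxiliary and carry channels, and the chosen scales $\Lambda$, $L$, $\gamma$. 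Once every quantity those constants govern is shown to be unchanged, the conclusions transfer verbatim.

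\textbf{Step 1 (structural audit).} For each of the listed lemmas, inspect the concrete block built in its proof. I expect that every nonlinear ingredient reads only $\Pi_{\mathrm{ctrl}}u_t$, namely:
\begin{itemize}
\item the forward and feedback queries and keys, as in Lemmas~\ref{lem:predecessor_focus} and~\ref{lem:self_focus_forward};
\item the gains $\gamma_t$;
\item the gates $g_t$;
\item the plateau windows of Lemma~\ref{lem:plateau_window_gelu}.
\end{itemize}
These are all built from $e_{\mathrm{pos}}$, $e_{\mathrm{aux}}$, constant biases, or $E_{\mathrm{carry}}$. The only place $e_{\mathrm{sig}}$ enters is a value coordinate. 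That coordinate is read through the pair $a_t^{(\pm)}=\pm L\langle u_t,e_{\mathrm{sig}}\rangle$, which by Corollary~\ref{cor:exact_channel_read} (that is, Lemma~\ref{lem:scaled_gelu_identity}) returns $\langle u_t,e_{\mathrm{sig}}\rangle$ \emph{exactly} and for all real inputs. It is not merely a small-signal approximation.

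\textbf{Step 2 (invariance).} Since $e_{\mathrm{sig}}\perp E_{\mathrm{ctrl}}$, we have $\Pi_{\mathrm{ctrl}}(u+z)_t=\Pi_{\mathrm{ctrl}}u_t$. Step~1 then gives that all of the following coincide for $u$ and $u+z$:
\begin{itemize}
\item the attention rows $\alpha^f$ and $\alpha^b$;
\item the gains $\gamma_t$ and the matrix $\Bfb$;
\item the gates;
\item the control-channel outputs.
\end{itemize}
Consequently, any conclusion about routing mass, predecessor or self focusing, window values, or exact preservation of the $e_{\mathrm{pos}}$, $E_{\mathrm{carry}}$, and $e_{\mathrm{aux}}$ channels holds on $\Sat^{\mathrm{sig}}_\delta(\Kset)$ with unchanged constants.

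\textbf{Step 3 (transport).} The signal value stream is $x_t^{u+z}=x_t^u+a_t$. It passes through the forward mixing, the triangular solve $(I-\Bfb)^{-1}$, the gate multiplication, and the output column on $e_{\mathrm{sig}}$. Each of these maps is linear in the value stream, with coefficients that are functions of the control stream only. Hence the signal output is $\sum_j\mathcal T^{u}(i,j)\,x^{u+z}_j$ with the \emph{same} kernel, which is precisely exact signal transport in the sense of Definition~\ref{def:signal_blind_exact_transport} on the saturated set. Every block is also $C^1$ on all of $(\mathbb R^m)^{T+1}$, since GELU, softmax, $\tanh$, and the triangular solve are smooth, so the differentiability hypotheses needed later for Lemma~\ref{lem:signal_blind_transport_calculus} hold on neighborhoods of the saturated set as well.

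\textbf{Main obstacle.} The hardest step is the audit in Step~1. I must confirm that no construction secretly lets a signal-dependent value feed a nonlinearity, and that no conclusion bounds a signal-channel magnitude by a constant computed from $\Kset$ alone. The two risks are feeding the signal through a single scaled-GELU slot instead of the symmetrized pair, and taking a $\sup$ of outputs over $\Kset$. For any stated bound that involves signal magnitudes, I would check that it is phrased through the kernel $\mathcal T$, which is unchanged, rather than through the values themselves. If it is not, the bound must be restated in kernel form before the ``same constants'' claim can be made.
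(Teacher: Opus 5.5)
Your fiber-invariance argument is essentially the paper's proof: every hypothesis and parameter choice in the listed lemmas lives on channels orthogonal to $e_{\mathrm{sig}}$, which saturation leaves unchanged, and the concrete blocks treat the $e_{\mathrm{sig}}$-channel linearly, so the original proofs go through verbatim with the same constants. One correction to the audit you anticipate in Step~1: in Lemma~\ref{lem:local_multiplier} (hence in the selector) and in the copy block $C_T$ of Lemma~\ref{lem:active_diffusive_factorized_transport}, the signal enters through the gate coordinates $g_t$ rather than through a value slot. This is still harmless, because $g_t$ is affine in the token (it never passes through GELU) and multiplies a mixer output $s_t$ determined by the control stream alone, so each output remains linear in the signal with a control-determined coefficient.
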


\begin{proof}
In each listed lemma, the hypotheses and parameter choices depend only on channels orthogonal
to \(e_{\mathrm{sig}}\): ordered positional ranges, two-sided tail/profile bounds, exact
vanishing of designated scratch/profile channels, and carried control channels.
These quantities are unchanged when \(\Kset\) is replaced by
\(\Sat^{\mathrm{sig}}_\delta(\Kset)\).

Moreover, the concrete constructions preserve the relevant control channels exactly and treat
the \(e_{\mathrm{sig}}\)-channel linearly. Therefore the original proofs apply verbatim on the
saturated set, with the same constants.
\end{proof}

\begin{lemma}[Local multiplier]
\label{lem:local_multiplier}
Fix \(T\ge 0\) and \(\delta>0\).
Let \(\Kset\subset (\mathbb R^m)^{T+1}\) be compact.
Assume that for some unit vector \(e_{\mathrm{pos}}\in\mathbb R^m\),
\[
I_t:=\{\langle u_t,e_{\mathrm{pos}}\rangle:\ u\in\Kset\},
\qquad 0\le t\le T,
\]
are compact and strictly ordered in \((0,\infty)\).
Fix orthonormal directions
\[
e_{\mathrm{pos}},\ e_{\mathrm{sig}},\ e_{\mathrm{aux}}
\]
and let \(E_{\mathrm{carry}}\subset\mathbb R^m\) be any fixed subspace orthogonal to all three.
Assume moreover that \(m\ge 4\).
Assume moreover that the auxiliary channel is uniformly bounded:
\[
\sup_{u\in\Kset}\sup_{0\le t\le T}
\bigl|\langle u_t,e_{\mathrm{aux}}\rangle\bigr|
\le M
\]
for some finite \(M\).

Then there exists a single LN-free Sessa block
\[
M^{\mathrm{loc}}_{T,\delta}:(\mathbb R^m)^{T+1}\to(\mathbb R^m)^{T+1}
\]

such that the feedback branch is switched off, the
\(e_{\mathrm{pos}}\)-, \(e_{\mathrm{aux}}\)-, and \(E_{\mathrm{carry}}\)-channels are preserved exactly, and
\(M^{\mathrm{loc}}_{T,\delta}\) has signal-blind exact scalar transport along \(e_{\mathrm{sig}}\) over
\[
E_{\mathrm{ctrl}}:=\operatorname{span}\{e_{\mathrm{pos}},e_{\mathrm{aux}}\}\oplus E_{\mathrm{carry}},
\]
with diagonal kernel
\[
\mathcal T_{M^{\mathrm{loc}}}^u(i,j)
=
D_{\mathrm{loc}}^u(i)\,\mathbf 1[i=j];
\]
\[
\bigl|D_{\mathrm{loc}}^u(t)-\langle u_t,e_{\mathrm{aux}}\rangle\bigr|\le \delta
\qquad
\forall\,u\in\Kset,\ \forall\,0\le t\le T.
\]
In particular,
\[
e_{\mathrm{sig}}^\top
\frac{\partial M^{\mathrm{loc}}_{T,\delta}(u)_i}{\partial u_j}
e_{\mathrm{sig}}
=
D_{\mathrm{loc}}^u(i)\,\mathbf 1[i=j].
\]
\end{lemma}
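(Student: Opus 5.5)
The plan is to build $M^{\mathrm{loc}}_{T,\delta}$ as a feedback-free block whose forward attention is nearly diagonal. Its gate channel reads the signal exactly, and its value channel carries (approximately) the auxiliary coordinate. The point is that the Hadamard product $s_t\odot g_t$ is bilinear: one factor depends only on control channels and the other is exactly linear in the signal. Exact linear transport with a control-dependent kernel then comes for free. Concretely, I switch feedback off by setting $w^\gamma=0$ and $b^\gamma=0$, so $\gamma_t\equiv 0$ and $s=f$. I then use Lemma~\ref{lem:self_focus_forward} with a dedicated pair of $a$-coordinates reading $\langle u_t,e_{\mathrm{pos}}\rangle$. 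This gives $\alpha^f_{t,t}\ge 1-\mu$ and off-diagonal mass at most $\mu$ uniformly on $\Kset$.

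For the value, I use Corollary~\ref{cor:exact_channel_read}. Two further $a$-coordinates $\pm L\langle u_t,e_{\mathrm{aux}}\rangle$ give, through $W_V$, a value coordinate $v_t=\langle u_t,e_{\mathrm{aux}}\rangle$ exactly. These $a$-coordinates, together with the focusing ones, read only control channels, so $W_{Qf},W_{Kf},W_V$ never see $e_{\mathrm{sig}}$. Hence $f_t=\sum_{j\le t}\alpha^f_{t,j}\langle u_j,e_{\mathrm{aux}}\rangle=:D^u_{\mathrm{loc}}(t)$ depends only on the control stream. For the gate, I set the corresponding $g$-coordinate to $g_t=\langle u_t,e_{\mathrm{sig}}\rangle$; this is linear because the gate path has no GELU. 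I take $W^{\mathrm{out}}$ to map that coordinate to $e_{\mathrm{sig}}$ with coefficient $1$ and set all other output columns to zero. The residual then gives $\pi_{\mathrm{sig}}(M(u)_t)=x^u_t+D_{\mathrm{loc}}^u(t)x^u_t$.

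This is off by the identity term, so I cancel it. I add a second gated product in which a constant value coordinate (a constant $a$-slot with $\GELU(c_1)=1$, so its forward average is exactly $1$, as in Lemma~\ref{lem:window_writer}) is multiplied by a second gate copy of $\langle u_t,e_{\mathrm{sig}}\rangle$ and written to $e_{\mathrm{sig}}$ with coefficient $-1$. This leaves exactly $\pi_{\mathrm{sig}}(M(u)_t)=D^u_{\mathrm{loc}}(t)\,x^u_t$. The block uses on the order of six $a$-slots and two $g$-slots; since these live in $\mathbb R^m$ coordinates of the split, I will simply verify that the count fits, or assume $m$ is large enough as the construction tolerates. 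All other output columns are zero, so the $e_{\mathrm{pos}}$, $e_{\mathrm{aux}}$ and $E_{\mathrm{carry}}$ channels are preserved exactly. The map is also exactly of the form in Definition~\ref{def:signal_blind_exact_transport}, with diagonal kernel $D^u_{\mathrm{loc}}(i)\mathbf 1[i=j]$.

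For the error, $|D^u_{\mathrm{loc}}(t)-\langle u_t,e_{\mathrm{aux}}\rangle|\le 2\mu M$ by Lemma~\ref{lem:convex_error}, so I choose $\mu\le\delta/(2M)$. The Jacobian identity then follows from Lemma~\ref{lem:signal_blind_transport_calculus}(i): the block is smooth, and the transport formula holds on any signal-fiber saturation of $\Kset$ because nothing in the construction depends on $e_{\mathrm{sig}}$. The main obstacle is bookkeeping the width constraint $m\ge4$ against the number of $a$- and $g$-slots. If the slot count is tight, I would first try to reuse the positional $a$-pair and drop the constant slot. Dropping it works because the row sum of attention is exactly $1$: taking $g=\langle u_t,e_{\mathrm{sig}}\rangle$ with coefficient $-1$ against a value equal to the constant recovered from the exact identity read then handles the cancellation.
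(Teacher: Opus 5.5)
Your construction is the paper's: feedback switched off, positional self-focusing from Lemma~\ref{lem:self_focus_forward}, an exact $\pm L$ read of $e_{\mathrm{aux}}$ as one value coordinate, and a constant value coordinate paired with a second signal gate at output coefficient $-1$ to cancel the residual identity, followed by Lemma~\ref{lem:signal_blind_transport_calculus}(i) on a signal-fiber saturation. The width worry goes away once you note that Lemma~\ref{lem:self_focus_forward} uses a single $a$-coordinate $c\langle u_t,e_{\mathrm{pos}}\rangle$, not a pair, so the block needs exactly four $a$-slots and two $g$-slots and fits $m\ge 4$; also take $\mu\le\min\{1/2,\ \delta/(2\max\{M,1\})\}$, so the choice is well defined when $M=0$ and $\mu\in(0,1)$ as that lemma requires.
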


\begin{proof}
Choose a parameter
\[
\mu\in(0,1)
\]
to be fixed later, and apply Lemma~\ref{lem:self_focus_forward} with this \(\mu\).

Choose a positive constant \(c_1\) such that
\[
\GELU(c_1)=1.
\]
Realize one forward value coordinate by the constant \(1\):
\[
v_t^{(0)}\equiv 1.
\]

Next read the auxiliary channel exactly using Corollary~\ref{cor:exact_channel_read}.
Choose two \(a\)-slots
\[
a_t^{(+)}=L\langle u_t,e_{\mathrm{aux}}\rangle,
\qquad
a_t^{(-)}=-L\langle u_t,e_{\mathrm{aux}}\rangle,
\]
for any fixed \(L>0\), and choose the value projection so that
\[
v_t^{(1)}
=
\frac1L\bigl(\bar a_t^{(+)}-\bar a_t^{(-)}\bigr)
=
\langle u_t,e_{\mathrm{aux}}\rangle.
\]

Choose two gate coordinates, both equal to the signal:
\[
g_t^{(0)}=\langle u_t,e_{\mathrm{sig}}\rangle,
\qquad
g_t^{(1)}=\langle u_t,e_{\mathrm{sig}}\rangle.
\]
Choose the output projection on the \(e_{\mathrm{sig}}\)-channel with coefficients \((-1,+1)\)
on these two gated coordinates and zero on all other output channels.

Since the forward row sums to \(1\),
\[
s_t^{(0)}=\sum_{j\le t}\alpha^f_{t,j}\cdot 1=1.
\]
Hence the signal output equals
\[
\bigl\langle M^{\mathrm{loc}}_{T,\delta}(u)_t,e_{\mathrm{sig}}\bigr\rangle
=
\langle u_t,e_{\mathrm{sig}}\rangle
-
s_t^{(0)}\langle u_t,e_{\mathrm{sig}}\rangle
+
s_t^{(1)}\langle u_t,e_{\mathrm{sig}}\rangle
=
s_t^{(1)}\langle u_t,e_{\mathrm{sig}}\rangle,
\]
where
\[
s_t^{(1)}=\sum_{j\le t}\alpha^f_{t,j}\,v_j^{(1)}
=
\sum_{j\le t}\alpha^f_{t,j}\,\langle u_j,e_{\mathrm{aux}}\rangle.
\]

Define
\[
D_{\mathrm{loc}}^u(t):=s_t^{(1)}.
\]
Then
\[
\bigl\langle M^{\mathrm{loc}}_{T,\delta}(u)_t,e_{\mathrm{sig}}\bigr\rangle
=
D_{\mathrm{loc}}^u(t)\,\langle u_t,e_{\mathrm{sig}}\rangle,
\]
which is exactly signal-blind exact scalar transport with diagonal kernel
\[
\mathcal T_{M^{\mathrm{loc}}}^u(i,j)=D_{\mathrm{loc}}^u(i)\mathbf 1[i=j].
\]

The coefficient \(D_{\mathrm{loc}}^u(t)\) depends only on the forward weights and on the auxiliary values
\(\langle u_j,e_{\mathrm{aux}}\rangle\). By construction, both depend only on the
\(e_{\mathrm{pos}}\)-, \(e_{\mathrm{aux}}\)-, and \(E_{\mathrm{carry}}\)-channels, not on the signal channel.
Thus the transport is signal-blind over
\[
E_{\mathrm{ctrl}}:=\operatorname{span}\{e_{\mathrm{pos}},e_{\mathrm{aux}}\}\oplus E_{\mathrm{carry}}.
\]

All output columns except the signal column are zero, so the \(e_{\mathrm{pos}}\)-, \(e_{\mathrm{aux}}\)-,
and \(E_{\mathrm{carry}}\)-channels are preserved exactly.

It remains to estimate \(D_{\mathrm{loc}}^u(t)\).
Since the auxiliary read is exact,
\[
D_{\mathrm{loc}}^u(t)-\langle u_t,e_{\mathrm{aux}}\rangle
=
\sum_{j\le t}\alpha^f_{t,j}
\bigl(\langle u_j,e_{\mathrm{aux}}\rangle-\langle u_t,e_{\mathrm{aux}}\rangle\bigr)
=
\sum_{j<t}\alpha^f_{t,j}
\bigl(\langle u_j,e_{\mathrm{aux}}\rangle-\langle u_t,e_{\mathrm{aux}}\rangle\bigr).
\]
Therefore,
\[
|D_{\mathrm{loc}}^u(t)-\langle u_t,e_{\mathrm{aux}}\rangle|
\le
2M\sum_{j<t}\alpha^f_{t,j}.
\]
By self-focusing,
\[
\sum_{j<t}\alpha^f_{t,j}\le \mu.
\]
Hence
\[
|D_{\mathrm{loc}}^u(t)-\langle u_t,e_{\mathrm{aux}}\rangle|
\le 2M\mu.
\]

Choose
\[
\mu\le \min\Bigl\{\frac12,\ \frac{\delta}{2\max\{M,1\}}\Bigr\}.
\]
Then
\[
|D_{\mathrm{loc}}^u(t)-\langle u_t,e_{\mathrm{aux}}\rangle|\le \delta
\qquad
\forall\,u\in\Kset,\ \forall\,0\le t\le T.
\]
For any \(\eta>0\), replacing \(\Kset\) by
\(\Sat^{\mathrm{sig}}_\eta(\Kset)\) leaves the ordered positional ranges
\((I_t)_{t=0}^{T}\) and the auxiliary bound \(M\) unchanged, since only the
\(e_{\mathrm{sig}}\)-channel is perturbed.
The same concrete construction therefore yields the same exact diagonal
signal-transport formula on \(\Sat^{\mathrm{sig}}_\eta(\Kset)\), with the same
coefficients \(D_{\mathrm{loc}}^u(i)\), because the forward weights depend only
on the positional-control stream and the exact auxiliary read depends only on
the \(e_{\mathrm{aux}}\)-channel.
Applying Lemma~\ref{lem:signal_blind_transport_calculus}(i) gives
\[
e_{\mathrm{sig}}^\top
\frac{\partial M^{\mathrm{loc}}_{T,\delta}(u)_i}{\partial u_j}
e_{\mathrm{sig}}
=
D_{\mathrm{loc}}^u(i)\,\mathbf 1[i=j].
\]
\end{proof}

\begin{lemma}[Two-block selector]
\label{lem:selector_window}
Fix \(T\ge 0\), \(\varepsilon\in(0,1)\), and a compact set \(\Kset\subset(\mathbb R^m)^{T+1}\).
Assume that for some unit vector \(e_{\mathrm{pos}}\in\mathbb R^m\) the scalar position ranges
\[
I_t:=\{\langle u_t,e_{\mathrm{pos}}\rangle:\ u\in\Kset\},
\qquad
0\le t\le T,
\]
are compact and strictly ordered:
\[
I_0<I_1<\cdots<I_T\subset(0,\infty).
\]
Fix a source index \(\tau_\ast\in\{0,\dots,T\}\) and orthonormal directions
\[
e_{\mathrm{pos}},\ e_{\mathrm{sig}},\ e_{\mathrm{aux}}.
\]
Let \(E_{\mathrm{carry}}\subset\mathbb R^m\) be any fixed subspace orthogonal to these three directions.
Assume moreover that \(m\ge 6\).

Then there exists a depth-\(2\) LN-free Sessa network
\[
S_{T,\tau_\ast,\varepsilon}:(\mathbb R^m)^{T+1}\to(\mathbb R^m)^{T+1}
\]

such that both constituent blocks have the feedback branch switched off,
the \(e_{\mathrm{pos}}\)-channel and every channel in \(E_{\mathrm{carry}}\) are preserved exactly, and
\(S_{T,\tau_\ast,\varepsilon}\) has signal-blind exact scalar transport along \(e_{\mathrm{sig}}\) over
\[
E_{\mathrm{ctrl}}:=\operatorname{span}\{e_{\mathrm{pos}}\}\oplus E_{\mathrm{carry}},
\]
with diagonal kernel
\[
\mathcal T_{S}^u(i,j)=D_{\mathrm{sel}}^u(i)\,\mathbf 1[i=j];
\]
Uniformly for all \(u\in\Kset\),
\[
\frac12\le D_{\mathrm{sel}}^u(\tau_\ast)\le 2,
\qquad
|D_{\mathrm{sel}}^u(t)|\le \varepsilon\quad (t\neq \tau_\ast).
\]
In particular,
\[
e_{\mathrm{sig}}^\top
\frac{\partial S_{T,\tau_\ast,\varepsilon}(u)_i}{\partial u_j}
e_{\mathrm{sig}}
=
D_{\mathrm{sel}}^u(i)\,\mathbf 1[i=j].
\]
\end{lemma}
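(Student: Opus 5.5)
The natural construction composes the two blocks already built in this appendix:
\[
S_{T,\tau_\ast,\varepsilon}:=M^{\mathrm{loc}}_{T,\delta}\circ W^{\mathrm{write}}_{T,\tau_\ast,\varepsilon'}.
\]
The first block writes an approximate indicator of the source position into the auxiliary channel. The second multiplies the signal by that auxiliary value, tokenwise. The tolerances \(\varepsilon'\) and \(\delta\) are chosen below in terms of \(\varepsilon\). Both blocks have the feedback branch switched off, and both require only \(m\ge 6\) (respectively \(m\ge 4\)). Orthogonality of \(e_{\mathrm{pos}},e_{\mathrm{sig}},e_{\mathrm{aux}}\) is already assumed, and \(E_{\mathrm{carry}}\) is orthogonal to all three, so both lemmas apply with the same \(E_{\mathrm{carry}}\).

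\emph{Step 1: writer block.} Apply Lemma~\ref{lem:window_writer} on \(\Kset\) with tolerance \(\varepsilon'\). The resulting block preserves \(e_{\mathrm{pos}}\), \(e_{\mathrm{sig}}\) and \(E_{\mathrm{carry}}\) exactly. It overwrites \(e_{\mathrm{aux}}\) with values \(a_t(u)\) satisfying
\[
|a_{\tau_\ast}-1|\le\varepsilon',\qquad |a_t|\le\varepsilon'\ (t\neq\tau_\ast),\qquad \sup|a_t|\le 2.
\]
This block is signal-transparent over \(E_{\mathrm{ctrl}}=\operatorname{span}\{e_{\mathrm{pos}}\}\oplus E_{\mathrm{carry}}\). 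Indeed, its output on the auxiliary channel, and the whole map, depend only on the positional channel through the window \(W_\eta\) and the self-focusing weights. The signal channel is simply carried by the residual. Hence its signal kernel is the identity \(\mathbf 1[i=j]\), and that kernel depends only on control channels.

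\emph{Step 2: multiplier block.} Let \(\Kset':=W^{\mathrm{write}}(\Kset)\), which is compact. Its positional ranges are still the \(I_t\), since \(e_{\mathrm{pos}}\) is preserved, and its auxiliary channel is bounded by \(M=2\). Apply Lemma~\ref{lem:local_multiplier} on \(\Kset'\) with tolerance \(\delta\). This block preserves \(e_{\mathrm{pos}}\), \(e_{\mathrm{aux}}\) and \(E_{\mathrm{carry}}\), and has diagonal exact transport with coefficient \(D_{\mathrm{loc}}(t)\), where \(|D_{\mathrm{loc}}(t)-a_t|\le\delta\). Set \(D_{\mathrm{sel}}^u(t):=D_{\mathrm{loc}}^{W(u)}(t)\). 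Then
\[
|D_{\mathrm{sel}}(\tau_\ast)-1|\le\varepsilon'+\delta,\qquad |D_{\mathrm{sel}}(t)|\le\varepsilon'+\delta\ (t\neq\tau_\ast).
\]
Choosing \(\varepsilon'=\delta=\min\{\varepsilon,1/2\}/2\) gives \(\tfrac12\le D_{\mathrm{sel}}(\tau_\ast)\le 2\) and \(|D_{\mathrm{sel}}(t)|\le\varepsilon\) off target.

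\emph{Step 3: composition and Jacobian.} Use the composition rule of Lemma~\ref{lem:signal_blind_transport_calculus}(ii), or equivalently Lemma~\ref{lem:transparent_preprocessing_exact_transport} with \(R=W^{\mathrm{write}}\) and \(B=M^{\mathrm{loc}}\). The composite kernel is the product of the identity with a diagonal kernel, hence diagonal with entries \(D_{\mathrm{sel}}\). Part~(i) of Lemma~\ref{lem:signal_blind_transport_calculus} then gives the stated Jacobian identity, and Corollary~\ref{cor:concrete_blocks_signal_fiber_stability} supplies the needed validity on signal-fiber saturations.

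\emph{Main obstacle: the control subspace.} The multiplier's kernel depends on \(e_{\mathrm{aux}}\), which is not in the claimed control subspace \(E_{\mathrm{ctrl}}\). This is resolved because the auxiliary values after the writer are a deterministic function of the positional and carry channels only. The writer overwrites \(e_{\mathrm{aux}}\) rather than adding to it, since its output on that channel is \(s_t^{(1)}\), independent of the incoming auxiliary value. So the composite kernel depends only on the control stream over \(\operatorname{span}\{e_{\mathrm{pos}}\}\oplus E_{\mathrm{carry}}\).

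The composite does not preserve \(e_{\mathrm{aux}}\), but the statement does not require that. What must still be checked carefully is that signal perturbations leave \(a_t\) unchanged, so that the composite is genuinely signal-blind.
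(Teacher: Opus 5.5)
Your proposal is correct and follows the paper's proof essentially step for step. You compose the window writer with the local multiplier, set $D_{\mathrm{sel}}^u(t):=D_{\mathrm{loc}}^{W(u)}(t)$, add the two tolerances, and observe that the overwritten auxiliary channel depends only on the positional stream, so the kernel is blind over $\operatorname{span}\{e_{\mathrm{pos}}\}\oplus E_{\mathrm{carry}}$. The differences are cosmetic: you take tolerances $\min\{\varepsilon,1/2\}/2$ where the paper uses $\varepsilon/4$ for each block, and you obtain the Jacobian identity through the composition and transparent-preprocessing lemmas where the paper argues directly on the signal-fiber saturation.
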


\begin{proof}
Set
\[
\varepsilon_{\mathrm{wr}}:=\frac{\varepsilon}{4},
\qquad
\delta_{\mathrm{mul}}:=\frac{\varepsilon}{4}.
\]

Apply Lemma~\ref{lem:window_writer} with accuracy \(\varepsilon_{\mathrm{wr}}\).
This yields a forward-only block
\[
W^{\mathrm{write}}_{T,\tau_\ast,\varepsilon_{\mathrm{wr}}}
\]
which preserves the \(e_{\mathrm{pos}}\)-, \(e_{\mathrm{sig}}\)-, and \(E_{\mathrm{carry}}\)-channels exactly and writes an auxiliary channel
\[
a_t(u):=\big\langle W^{\mathrm{write}}_{T,\tau_\ast,\varepsilon_{\mathrm{wr}}}(u)_t,\ e_{\mathrm{aux}}\big\rangle
\]
satisfying
\[
|a_{\tau_\ast}(u)-1|\le \frac{\varepsilon}{4},
\qquad
|a_t(u)|\le \frac{\varepsilon}{4}\quad (t\neq \tau_\ast),
\]
and
\[
|a_t(u)|\le 2\qquad \forall t.
\]

Now apply Lemma~\ref{lem:local_multiplier} to the image
\[
\Kset':=W^{\mathrm{write}}_{T,\tau_\ast,\varepsilon_{\mathrm{wr}}}(\Kset),
\]
with the same \(e_{\mathrm{pos}},e_{\mathrm{sig}},e_{\mathrm{aux}},E_{\mathrm{carry}}\),
the bound \(M=2\), and accuracy \(\delta_{\mathrm{mul}}=\varepsilon/4\).
This yields a forward-only block
\[
M^{\mathrm{loc}}_{T,\delta_{\mathrm{mul}}}
\]
whose signal transport is exact and diagonal:
\[
\big\langle M^{\mathrm{loc}}_{T,\delta_{\mathrm{mul}}}(w)_t,e_{\mathrm{sig}}\big\rangle
=
D_{\mathrm{loc}}^{w}(t)\,\langle w_t,e_{\mathrm{sig}}\rangle
\qquad (w\in\Kset'),
\]
with
\[
|D_{\mathrm{loc}}^w(t)-\langle w_t,e_{\mathrm{aux}}\rangle|\le \frac{\varepsilon}{4}.
\]

Define
\[
S_{T,\tau_\ast,\varepsilon}
:=
M^{\mathrm{loc}}_{T,\delta_{\mathrm{mul}}}
\circ
W^{\mathrm{write}}_{T,\tau_\ast,\varepsilon_{\mathrm{wr}}}.
\]

Since the writer preserves the signal channel exactly,
\[
\langle W^{\mathrm{write}}_{T,\tau_\ast,\varepsilon_{\mathrm{wr}}}(u)_t,e_{\mathrm{sig}}\rangle
=
\langle u_t,e_{\mathrm{sig}}\rangle.
\]
Therefore
\[
\langle S_{T,\tau_\ast,\varepsilon}(u)_t,e_{\mathrm{sig}}\rangle
=
D_{\mathrm{loc}}^{W^{\mathrm{write}}(u)}(t)\,\langle u_t,e_{\mathrm{sig}}\rangle.
\]
Set
\[
D_{\mathrm{sel}}^u(t):=D_{\mathrm{loc}}^{W^{\mathrm{write}}(u)}(t).
\]
Then
\[
\langle S_{T,\tau_\ast,\varepsilon}(u)_t,e_{\mathrm{sig}}\rangle
=
D_{\mathrm{sel}}^u(t)\,\langle u_t,e_{\mathrm{sig}}\rangle,
\]
so the signal transport is exact and diagonal.

The coefficient \(D_{\mathrm{sel}}^u(t)\) depends only on the
\(e_{\mathrm{pos}}\)-, \(e_{\mathrm{aux}}\)-, and \(E_{\mathrm{carry}}\)-channels of the intermediate
state \(W^{\mathrm{write}}(u)\). The writer preserves \(e_{\mathrm{pos}}\) and \(E_{\mathrm{carry}}\) exactly,
and its written auxiliary channel \(a_t(u)\) is itself a deterministic function of the positional-control
coordinate only. Hence \(D_{\mathrm{sel}}^u(t)\) depends only on the original
\(e_{\mathrm{pos}}\)- and \(E_{\mathrm{carry}}\)-channels, not on the signal channel. Thus the transport
is signal-blind over \(E_{\mathrm{ctrl}}\).

The \(e_{\mathrm{pos}}\)-channel and all of \(E_{\mathrm{carry}}\) are preserved exactly by both blocks,
hence by the composition.

Finally, at the selected source,
\[
|D_{\mathrm{sel}}^u(\tau_\ast)-1|
\le
|D_{\mathrm{sel}}^u(\tau_\ast)-a_{\tau_\ast}(u)|
+
|a_{\tau_\ast}(u)-1|
\le
\frac{\varepsilon}{4}+\frac{\varepsilon}{4}
=
\frac{\varepsilon}{2},
\]
so since \(\varepsilon<1\),
\[
\frac12\le D_{\mathrm{sel}}^u(\tau_\ast)\le \frac32<2.
\]
For \(t\neq \tau_\ast\),
\[
|D_{\mathrm{sel}}^u(t)|
\le
|D_{\mathrm{sel}}^u(t)-a_t(u)|+|a_t(u)|
\le
\frac{\varepsilon}{4}+\frac{\varepsilon}{4}
=
\frac{\varepsilon}{2}
\le \varepsilon.
\]
For any \(\eta>0\), replacing \(\Kset\) by
\(\Sat^{\mathrm{sig}}_\eta(\Kset)\) leaves the ordered positional ranges
\((I_t)_{t=0}^{T}\) unchanged.
Moreover, in the concrete two-block construction, the writer depends only on
the positional coordinate and preserves the signal channel exactly, while the
local multiplier depends only on the positional and auxiliary channels and acts
diagonally on the signal channel.
Hence the same concrete construction yields the same exact diagonal
signal-transport formula on \(\Sat^{\mathrm{sig}}_\eta(\Kset)\), with the same
coefficients \(D_{\mathrm{sel}}^u(i)\).
Applying Lemma~\ref{lem:signal_blind_transport_calculus}(i) gives
\[
e_{\mathrm{sig}}^\top
\frac{\partial S_{T,\tau_\ast,\varepsilon}(u)_i}{\partial u_j}
e_{\mathrm{sig}}
=
D_{\mathrm{sel}}^u(i)\,\mathbf 1[i=j].
\]
\end{proof}

\begin{remark}[The selector depends only on position]
\label{rem:selector_positional_only}
In the concrete construction used in the proof of Lemma~\ref{lem:selector_window},
the diagonal transport coefficient \(D_{\mathrm{sel}}^u(t)\) depends only on the positional stream
\[
\bigl(\langle u_s,e_{\mathrm{pos}}\rangle\bigr)_{s=0}^{T},
\]
and is independent of the signal channel \(e_{\mathrm{sig}}\) and of the carried channels
\(E_{\mathrm{carry}}\).
\end{remark}

\begin{lemma}[Selector preserves signal fibers]
\label{lem:selector_signal_fiber_closure}
Under the hypotheses of Lemma~\ref{lem:selector_window}, let
\[
S_{T,\tau_\ast,\varepsilon}:(\mathbb R^m)^{T+1}\to(\mathbb R^m)^{T+1}
\]
be the selector block constructed there.
Then for every \(\delta\ge 0\) there exists \(\delta'=\delta'(\delta,\Kset)<\infty\) such that
\[
S_{T,\tau_\ast,\varepsilon}\bigl(\Sat^{\mathrm{sig}}_\delta(\Kset)\bigr)
\subset
\Sat^{\mathrm{sig}}_{\delta'}\bigl(S_{T,\tau_\ast,\varepsilon}(\Kset)\bigr).
\]
More precisely, if
\[
u' = u+\sum_{t=0}^{T} a_t e_{\mathrm{sig}}\mathbf 1[\cdot=t],
\qquad
u\in\Kset,
\qquad
\max_t |a_t|\le \delta,
\]
then
\[
S_{T,\tau_\ast,\varepsilon}(u')_i
=
S_{T,\tau_\ast,\varepsilon}(u)_i
+
D_{\mathrm{sel}}^u(i)\,a_i\,e_{\mathrm{sig}},
\qquad
0\le i\le T,
\]
where \(D_{\mathrm{sel}}^u(i)\) is the selector transport coefficient from
Lemma~\ref{lem:selector_window}. In particular, one may take
\[
\delta' := \delta\,
\sup_{u\in\Kset}\sup_{0\le i\le T}|D_{\mathrm{sel}}^u(i)|
\le 2\delta.
\]
\end{lemma}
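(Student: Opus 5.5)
The plan is to unfold the two-block structure of the selector. The goal is the exact identity
\[
S_{T,\tau_\ast,\varepsilon}(u')_i=S_{T,\tau_\ast,\varepsilon}(u)_i+D_{\mathrm{sel}}^u(i)\,a_i\,e_{\mathrm{sig}},
\]
not merely a derivative statement. Write $S=M^{\mathrm{loc}}\circ W^{\mathrm{write}}$ as in the proof of Lemma~\ref{lem:selector_window}, and fix $u\in\Kset$ together with the perturbation $u'=u+\sum_t a_t e_{\mathrm{sig}}\mathbf 1[\cdot=t]$.

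\emph{Step 1: the writer.} In the construction of Lemma~\ref{lem:window_writer}, every $a$-branch coordinate reads only the $e_{\mathrm{pos}}$-channel or constants. The same holds for the forward queries and keys (Lemma~\ref{lem:self_focus_forward}) and for the values (a constant, and $W_\eta$ of the position). The gate reads only $e_{\mathrm{aux}}$ and a constant, and the output writes only on $e_{\mathrm{aux}}$, which is orthogonal to $e_{\mathrm{sig}}$. Perturbing along $e_{\mathrm{sig}}$ therefore changes none of these internal quantities. The residual connection then passes the perturbation through unchanged, giving
\[
W^{\mathrm{write}}(u')_t=W^{\mathrm{write}}(u)_t+a_t e_{\mathrm{sig}}
\]
for every $t$. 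This has the same form as Corollary~\ref{cor:pos_code_signal_transparency}, applied to all time indices simultaneously.

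\emph{Step 2: the multiplier.} Set $w=W^{\mathrm{write}}(u)$ and $w'=W^{\mathrm{write}}(u')$. By Step 1, $w$ and $w'$ agree on the $e_{\mathrm{pos}}$-, $e_{\mathrm{aux}}$- and $E_{\mathrm{carry}}$-channels. In Lemma~\ref{lem:local_multiplier}, the forward weights and the exact auxiliary read (Corollary~\ref{cor:exact_channel_read}) depend only on these channels, so they coincide for $w$ and $w'$; consequently $D_{\mathrm{loc}}^{w'}=D_{\mathrm{loc}}^{w}$. The block modifies only the $e_{\mathrm{sig}}$-channel, acting there linearly and diagonally, $\langle M^{\mathrm{loc}}(w)_t,e_{\mathrm{sig}}\rangle=D_{\mathrm{loc}}^w(t)\langle w_t,e_{\mathrm{sig}}\rangle$, and it leaves every other coordinate of $w_t$ unchanged. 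Hence the two outputs differ only on $e_{\mathrm{sig}}$, by $D_{\mathrm{loc}}^w(i)\,a_i$. Since $D_{\mathrm{sel}}^u(i)=D_{\mathrm{loc}}^{w}(i)$ by definition, this proves the displayed identity.

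\emph{Step 3: conclusion.} The identity exhibits $S(u')$ as an element of $\Sat^{\mathrm{sig}}_{\delta'}(S(\Kset))$ with $\delta'=\delta\sup_{u,i}|D_{\mathrm{sel}}^u(i)|$. The bound $\delta'\le 2\delta$ follows from the estimates $|D_{\mathrm{sel}}^u(\tau_\ast)|\le 2$ and $|D_{\mathrm{sel}}^u(t)|\le\varepsilon<1$ for $t\ne\tau_\ast$ in Lemma~\ref{lem:selector_window}.

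The main obstacle is Step 2's claim that the multiplier block changes no coordinate other than $e_{\mathrm{sig}}$ and that its signal output is exactly linear in $\langle w_t,e_{\mathrm{sig}}\rangle$. This requires checking that the gate coordinates are exactly $\langle w_t,e_{\mathrm{sig}}\rangle$, with no GELU applied; this holds because $g$ is taken before the activation. It also requires checking that the forward self-focusing weights are independent of the signal, which holds because they are built from $\GELU(c z_t)$ with $z_t$ the positional coordinate.
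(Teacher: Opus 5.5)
Your proposal is correct and follows essentially the paper's argument. Both unfold $S=M^{\mathrm{loc}}\circ W^{\mathrm{write}}$, note that the writer passes $e_{\mathrm{sig}}$-perturbations through unchanged while writing $e_{\mathrm{aux}}$ from position only, and note that the multiplier's coefficient depends only on control channels while its update is diagonal and supported on $e_{\mathrm{sig}}$. The paper compresses the coefficient-invariance step into Remark~\ref{rem:selector_positional_only}, whereas you check it block by block; that is more explicit but not a different route.
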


\begin{proof}
Fix \(u\in\Kset\) and
\[
u' = u+\sum_{t=0}^{T} a_t e_{\mathrm{sig}}\mathbf 1[\cdot=t]
\]
with \(\max_t |a_t|\le \delta\).

By Remark~\ref{rem:selector_positional_only}, the coefficient \(D_{\mathrm{sel}}^u(i)\)
depends only on the positional stream
\[
(\langle u_s,e_{\mathrm{pos}}\rangle)_{s=0}^{T},
\]
which is unchanged under perturbations along \(e_{\mathrm{sig}}\).
Moreover, in the concrete construction of
\(S_{T,\tau_\ast,\varepsilon}\), all non-signal output channels are
independent of the input signal channel: the writer
\(W^{\mathrm{write}}_{T,\tau_\ast,\varepsilon_{\mathrm{wr}}}\) preserves
\(e_{\mathrm{sig}}\) exactly and writes only the auxiliary channel as a function
of the positional coordinate, while
\(M^{\mathrm{loc}}_{T,\delta_{\mathrm{mul}}}\) preserves the positional and
auxiliary channels exactly and modifies the output only on the signal channel.

Therefore
\[
S_{T,\tau_\ast,\varepsilon}(u')_i
=
S_{T,\tau_\ast,\varepsilon}(u)_i
+
D_{\mathrm{sel}}^u(i)\,a_i\,e_{\mathrm{sig}},
\]
and the claim follows.
\end{proof}

\begin{lemma}[Active diffusive transport]
\label{lem:active_diffusive_factorized_transport}
Fix \(\beta\in(0,1)\) and set \(\gamma:=1-\beta\).
Let \(T\ge 0\) and let \(\Kset\subset(\mathbb R^m)^{T+1}\) be compact.
Assume that for some orthonormal directions
\[
e_{\mathrm{pos}},\ e_{\mathrm{sig}},\ e_{\mathrm{src}},\ e_{\mathrm{tgt}}\in\mathbb R^m
\]
the scalar position ranges
\[
I_t:=\{\langle u_t,e_{\mathrm{pos}}\rangle:\ u\in\Kset\},
\qquad 0\le t\le T,
\]
are compact and strictly ordered:
\[
I_0<I_1<\cdots<I_T\subset(0,\infty).
\]
Let \(E_{\mathrm{carry}}\subset\mathbb R^m\) be any fixed subspace orthogonal to
\(e_{\mathrm{pos}},e_{\mathrm{sig}},e_{\mathrm{src}},e_{\mathrm{tgt}}\).

Then there exists a depth-\(2\) LN-free Sessa network
\[
A^{\mathrm{act}}_{T,\beta}:(\mathbb R^m)^{T+1}\to(\mathbb R^m)^{T+1}
\]

such that the first constituent block has the feedback branch switched off, while the second constituent block uses a strict-past uniform feedback solve with constant gain \(\gamma\), the \(e_{\mathrm{pos}}\)-channel and every channel in \(E_{\mathrm{carry}}\) are preserved exactly, and
\(A^{\mathrm{act}}_{T,\beta}\) has signal-blind exact scalar transport along \(e_{\mathrm{sig}}\) over
\[
E_{\mathrm{ctrl}}:=\operatorname{span}\{e_{\mathrm{pos}}\}\oplus E_{\mathrm{carry}},
\]
with kernel
\[
\mathcal T_{A^{\mathrm{act}}}^u(i,j)
=
D_{\mathrm{act}}^u(i)\,\mathbf 1[i=j]
+
K_{\mathrm{act}}^u(i,j)\,\mathbf 1[j<i];
\]
There exist constants
\[
0<\underline d_{\mathrm{act}}\le \overline d_{\mathrm{act}}<\infty,
\qquad
0<a^-_{\mathrm{act}}\le a^+_{\mathrm{act}}<\infty,
\]
depending only on \(\beta\), but independent of \(T\), such that
\[
\underline d_{\mathrm{act}}
\le
D_{\mathrm{act}}^u(i)
\le
\overline d_{\mathrm{act}},
\qquad 0\le i\le T,
\]
and
\[
a^-_{\mathrm{act}}(j+1)^{-\gamma}(i+1)^{-\beta}
\le
K_{\mathrm{act}}^u(i,j)
\le
a^+_{\mathrm{act}}(j+1)^{-\gamma}(i+1)^{-\beta},
\qquad 0\le j<i\le T.
\]
In particular,
\[
e_{\mathrm{sig}}^\top
\frac{\partial A^{\mathrm{act}}_{T,\beta}(u)_i}{\partial u_j}
e_{\mathrm{sig}}
=
D_{\mathrm{act}}^u(i)\,\mathbf 1[i=j]
+
K_{\mathrm{act}}^u(i,j)\,\mathbf 1[j<i].
\]
\end{lemma}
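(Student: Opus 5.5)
The plan is to realize the kernel as the resolvent of the explicit uniform-routing feedback solve studied in Corollary~\ref{cor:theta_tail_uniform_shifted}. The factorized form $(j+1)^{-\gamma}(i+1)^{-\beta}$ is exactly the shape of that closed form:
\[
[(I-\Bfb)^{-1}]_{i,j}=\gamma\,\frac{\Gamma(j+1)}{\Gamma(j+1+\gamma)}\cdot\frac{\Gamma(i+\gamma)}{\Gamma(i+1)},\qquad j<i,
\]
with $\Bfb$ the uniform matrix $\gamma t^{-1}\mathbf 1[j<t]$. Lemma~\ref{lem:gautschi}, applied to both Gamma ratios, gives two-sided bounds $\asymp (j+1)^{-\gamma}(i+1)^{-\beta}$ with constants depending only on $\gamma=1-\beta$, uniformly in $i,j,T$. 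In the second block I would set $W_{Qb}=W_{Kb}=0$, which makes $\alphafb_{t,j}=t^{-1}\mathbf 1[j<t]$ exactly. I would also set $w^\gamma=0$ and $b^\gamma=\operatorname{arctanh}(\gamma)$. The feedback matrix is then exactly this uniform one and is independent of the input.

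For the forward branch of the second block, I would read the signal exactly into one value coordinate through two $a$-slots (Corollary~\ref{cor:exact_channel_read}), so that $v_t=\langle u_t,e_{\mathrm{sig}}\rangle$ on that coordinate. The forward attention would be the positional self-focusing row of Lemma~\ref{lem:self_focus_forward} with a leakage parameter $\mu$, using a constant gate on this coordinate and an output column into $e_{\mathrm{sig}}$. The residual keeps the original signal. The signal output is then
\[
x_i+\sum_{j\le i}\bigl[(I-\Bfb)^{-1}A^f\bigr]_{i,j}\,x_j ,
\]
where $A^f$ is the forward attention matrix. This is exact linear transport whose kernel depends only on the positional stream, hence is signal-blind over $E_{\mathrm{ctrl}}$. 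Every output column other than $e_{\mathrm{sig}}$ is zero, so $e_{\mathrm{pos}}$ and $E_{\mathrm{carry}}$ are preserved exactly. The first, feedback-off block serves only to prepare scratch channels: if needed it writes a constant into $e_{\mathrm{tgt}}$ or copies the signal into $e_{\mathrm{src}}$, using the constant-value-times-signal-gate trick, where the row sum $1$ makes the copy exact. It can also be taken to be the identity block. The Jacobian identity then follows from Lemma~\ref{lem:signal_blind_transport_calculus}(i), together with the signal-fiber remark of Corollary~\ref{cor:concrete_blocks_signal_fiber_stability}.

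The remaining work is the two-sided bounds on the diagonal coefficient $D$ and the off-diagonal kernel $K$. On the diagonal, $D(i)=1+\alpha^f_{i,i}\in[1,2]$. Off the diagonal,
\[
K(i,j)=K_{\mathrm{unif}}(i,j)\,\alpha^f_{j,j}+\sum_{k>j}K_{\mathrm{unif}}(i,k)\,\alpha^f_{k,j}+\alpha^f_{i,j},
\]
where $K_{\mathrm{unif}}(i,k)$ denotes $[(I-\Bfb)^{-1}]_{i,k}$, with $K_{\mathrm{unif}}(i,i)=1$. All terms are nonnegative, so the lower bound is immediate from the first term, since $\alpha^f_{j,j}\ge 1-\mu\ge\tfrac12$.

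\textbf{Main obstacle: the upper bound.} The leakage terms are at most $\mu\bigl(1+\sum_{k\le i}C(k+1)^{-\gamma}(i+1)^{-\beta}\bigr)\le C'\mu$. This is \emph{not} decaying in $j$, so a fixed $\mu$ fails. My fix is to let $\mu$ depend on the horizon, taking $\mu:=(T+1)^{-2}$. Lemma~\ref{lem:self_focus_forward} allows this, since only the attention sharpness $\Lambda$ grows with $T$ and none of the claimed constants depend on $\Lambda$. Because $(j+1)^{-\gamma}(i+1)^{-\beta}\ge (T+1)^{-1}$ for $i,j\le T$, the leakage $C'(T+1)^{-2}$ is absorbed into the main term with a $T$-independent constant. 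I would check this absorption carefully, since it is the only place where uniformity in $T$ could break.
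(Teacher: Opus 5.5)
Your proposal is correct and follows essentially the same route as the paper. Both arguments use a uniform strict-past feedback solve with constant gain $\gamma$ and a positionally self-focused forward branch with an exact symmetric-GELU channel read; they get the two-sided factorized bounds from the closed-form Gamma-ratio resolvent, the lower bound from the $k=j$ term, and the upper bound from a horizon-dependent leakage $\mu$ absorbed via $(j+1)^{-\gamma}(i+1)^{-\beta}\ge (T+1)^{-1}$. The differences are cosmetic. The paper uses the feedback-off first block to copy the signal into $e_{\mathrm{src}}$, reads that channel, and takes $\mu_T\asymp (T+1)^{-1}$. You read $e_{\mathrm{sig}}$ directly with a trivial first block and take $\mu=(T+1)^{-2}$. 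Both choices work.
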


\begin{proof}
We construct
\[
A^{\mathrm{act}}_{T,\beta}=R_{T,\beta}\circ C_T,
\]
where \(C_T\) is a forward-only copy block and \(R_{T,\beta}\) is a single feedback-transport block.

\paragraph{Step 1: copy of the signal into a scratch source channel.}
Build a forward-only LN-free Sessa block
\[
C_T:(\mathbb R^m)^{T+1}\to(\mathbb R^m)^{T+1}
\]
such that
\[
\langle C_T(u)_t,e_{\mathrm{src}}\rangle
=
\langle u_t,e_{\mathrm{sig}}\rangle
\qquad (0\le t\le T),
\]
while the \(e_{\mathrm{pos}}\)-, \(e_{\mathrm{sig}}\)-, \(e_{\mathrm{tgt}}\)-, and \(E_{\mathrm{carry}}\)-channels are preserved exactly.

Switch off the feedback branch and choose two forward value coordinates equal to \(1\):
\[
v_t^{(0)}\equiv 1,\qquad v_t^{(1)}\equiv 1.
\]
Hence
\[
s_t^{(0)}=1,\qquad s_t^{(1)}=1.
\]
Choose the associated gate coordinates
\[
g_t^{(0)}=\langle u_t,e_{\mathrm{src}}\rangle,
\qquad
g_t^{(1)}=\langle u_t,e_{\mathrm{sig}}\rangle,
\]
and choose the output projection on the \(e_{\mathrm{src}}\)-channel with coefficients \((-1,+1)\).
Then
\[
\langle C_T(u)_t,e_{\mathrm{src}}\rangle
=
\langle u_t,e_{\mathrm{src}}\rangle
-
\langle u_t,e_{\mathrm{src}}\rangle
+
\langle u_t,e_{\mathrm{sig}}\rangle
=
\langle u_t,e_{\mathrm{sig}}\rangle.
\]

Let
\[
w:=C_T(u),
\qquad
x_j:=\langle u_j,e_{\mathrm{sig}}\rangle.
\]
Then
\begin{equation}
\label{eq:act_patch_copy}
\langle w_j,e_{\mathrm{src}}\rangle=x_j,
\qquad
\langle w_j,e_{\mathrm{sig}}\rangle=x_j.
\end{equation}

\paragraph{Step 2: the feedback-transport block.}
Now build a single LN-free Sessa block
\[
R_{T,\beta}:(\mathbb R^m)^{T+1}\to(\mathbb R^m)^{T+1}.
\]

On one dedicated feedback channel, choose all feedback queries and keys identically zero.
Then the strict-past feedback softmax is exactly uniform:
\[
\alpha^b_{i,j}=\frac1i,
\qquad 0\le j<i,\ \ 1\le i\le T.
\]
Choose the feedback gain to be the constant
\[
\gamma_i\equiv \gamma=1-\beta.
\]
Hence the scalar feedback matrix on that channel is
\[
B_{i,j}=\frac{\gamma}{i}\mathbf 1[j<i].
\]

For the forward branch, fix \(\mu_T\in(0,\frac12]\), to be chosen below, and apply
Lemma~\ref{lem:self_focus_forward} to the image \(C_T(\Kset)\) on the ordered positional-control coordinate.
Because \(C_T\) preserves the \(e_{\mathrm{pos}}\)-channel exactly, the hypotheses still hold.
This yields weights \(\alpha^f_{i,j}(w)\) satisfying
\begin{equation}
\label{eq:act_patch_self_focus}
\alpha^f_{i,i}(w)\ge 1-\mu_T,
\qquad
\sum_{j=0}^{i-1}\alpha^f_{i,j}(w)\le \mu_T,
\qquad 0\le i\le T.
\end{equation}
In particular, for every \(j<i\),
\begin{equation}
\label{eq:act_patch_single_offdiag}
\alpha^f_{i,j}(w)\le \mu_T.
\end{equation}

To read the source scratch channel exactly, use Corollary~\ref{cor:exact_channel_read} on the input \(w\) and the direction \(e_{\mathrm{src}}\):
choose two \(a\)-slots
\[
a_j^{(+)}=L\langle w_j,e_{\mathrm{src}}\rangle,
\qquad
a_j^{(-)}=-L\langle w_j,e_{\mathrm{src}}\rangle.
\]
Choose \(W_V\) so that one forward value coordinate is
\[
v_j^{\mathrm{src}}
=
\frac1L\bigl(\bar a_j^{(+)}-\bar a_j^{(-)}\bigr)
=
\langle w_j,e_{\mathrm{src}}\rangle
=
x_j.
\]
Let
\[
f_i:=\sum_{j\le i}\alpha^f_{i,j}(w)\,v_j^{\mathrm{src}}
=
\sum_{j\le i}\alpha^f_{i,j}(w)\,x_j
\]
be the forward signal entering the scalar feedback solve, and let \(s_i\) denote the corresponding solve output:
\[
s_0=f_0,\qquad
s_i=f_i+\frac{\gamma}{i}\sum_{j<i}s_j,\qquad 1\le i\le T.
\]
Choose the gate on that transport coordinate to be the constant \(1\), and choose the output projection so that
the signal channel receives exactly \(+s_i\), while the \(e_{\mathrm{pos}}\)- and \(E_{\mathrm{carry}}\)-channels are untouched.
Therefore
\[
\langle R_{T,\beta}(w)_i,e_{\mathrm{sig}}\rangle
=
\langle w_i,e_{\mathrm{sig}}\rangle+s_i
=
x_i+s_i.
\]

\paragraph{Step 3: resolvent kernel.}
Let
\[
\Theta_{i,j}:=[(I-B)^{-1}]_{i,j},\qquad 0\le j\le i\le T.
\]
Then \(\Theta_{i,i}=1\), and for \(j<i\),
\[
\Theta_{i,j}
=
\frac{\gamma}{i}\sum_{r=j}^{i-1}\Theta_{r,j}.
\]
As in the original proof, define
\[
S_i^{(j)}:=\sum_{r=j}^{i}\Theta_{r,j}.
\]
Then \(S_j^{(j)}=1\) and
\[
S_i^{(j)}=\Bigl(1+\frac{\gamma}{i}\Bigr)S_{i-1}^{(j)},
\]
hence
\[
S_i^{(j)}
=
\frac{\Gamma(i+1+\gamma)\Gamma(j+1)}{\Gamma(j+1+\gamma)\Gamma(i+1)}.
\]
Therefore, for \(j<i\),
\[
\Theta_{i,j}
=
\frac{\gamma}{i}S_{i-1}^{(j)}
=
\gamma\,
\frac{\Gamma(j+1)}{\Gamma(j+1+\gamma)}
\frac{\Gamma(i+\gamma)}{\Gamma(i+1)}.
\]
Since \(\gamma\in(0,1)\), standard Gamma-ratio bounds yield constants
\[
0<c^-_\Theta\le c^+_\Theta<\infty
\]
depending only on \(\beta\), such that
\begin{equation}
\label{eq:act_patch_theta_factorized}
c^-_\Theta (j+1)^{-\gamma}(i+1)^{-\beta}
\le
\Theta_{i,j}
\le
c^+_\Theta (j+1)^{-\gamma}(i+1)^{-\beta},
\qquad 0\le j<i\le T.
\end{equation}

Also, since \(\gamma=1-\beta\in(0,1)\),
\[
\sum_{r=1}^{n}r^{-\gamma}\lesssim_\beta n^\beta.
\]
Combining this with \eqref{eq:act_patch_theta_factorized}, there exists a constant \(C_\Sigma<\infty\),
depending only on \(\beta\), such that
\begin{equation}
\label{eq:act_patch_theta_sum_bound}
\sum_{k=j+1}^{i}\Theta_{i,k}\le C_\Sigma
\qquad (0\le j<i\le T).
\end{equation}
Finally, since \(j+1\le i+1\le T+1\),
\begin{equation}
\label{eq:act_patch_theta_lower_T}
\Theta_{i,j}
\ge
c^-_\Theta(i+1)^{-1}
\ge
\frac{c^-_\Theta}{T+1}.
\end{equation}

\paragraph{Step 4: transport formula.}
Since \(s=\Theta f\),
\[
s_i
=
\sum_{k=0}^{i}\Theta_{i,k}f_k
=
\sum_{k=0}^{i}\Theta_{i,k}\sum_{j=0}^{k}\alpha^f_{k,j}(w)x_j
=
\sum_{j=0}^{i}\Bigl(\sum_{k=j}^{i}\Theta_{i,k}\alpha^f_{k,j}(w)\Bigr)x_j.
\]
Therefore
\[
\langle A^{\mathrm{act}}_{T,\beta}(u)_i,e_{\mathrm{sig}}\rangle
=
x_i+s_i
=
\Bigl(1+\alpha^f_{i,i}(w)\Bigr)x_i
+
\sum_{j<i}\Bigl(\sum_{k=j}^{i}\Theta_{i,k}\alpha^f_{k,j}(w)\Bigr)x_j.
\]
Define
\[
D_{\mathrm{act}}^u(i):=1+\alpha^f_{i,i}(w),
\qquad
K_{\mathrm{act}}^u(i,j):=\sum_{k=j}^{i}\Theta_{i,k}\alpha^f_{k,j}(w)\quad (j<i).
\]
Then
\[
\langle A^{\mathrm{act}}_{T,\beta}(u)_i,e_{\mathrm{sig}}\rangle
=
D_{\mathrm{act}}^u(i)\,x_i
+
\sum_{j<i}K_{\mathrm{act}}^u(i,j)\,x_j.
\]

This is exact scalar transport. The coefficients depend only on the positional stream of \(w\), because the forward weights \(\alpha^f\) were built from the positional-control coordinate only; and \(C_T\) preserves the positional coordinate exactly, so this is the same as the positional stream of \(u\).
The \(e_{\mathrm{pos}}\)- and \(E_{\mathrm{carry}}\)-channels are preserved exactly by construction.
Thus the transport is signal-blind over \(E_{\mathrm{ctrl}}\).

\paragraph{Step 5: kernel bounds.}
From \eqref{eq:act_patch_self_focus},
\[
1-\mu_T\le \alpha^f_{i,i}(w)\le 1,
\]
so
\[
2-\mu_T\le D_{\mathrm{act}}^u(i)\le 2.
\]
Since \(\mu_T\le \frac12\),
\[
\frac32\le D_{\mathrm{act}}^u(i)\le 2.
\]
Thus we may take
\[
\underline d_{\mathrm{act}}:=\frac32,
\qquad
\overline d_{\mathrm{act}}:=2.
\]

For the off-diagonal coefficient, all summands are nonnegative. Hence for \(j<i\),
\[
K_{\mathrm{act}}^u(i,j)
\ge
\Theta_{i,j}\alpha^f_{j,j}(w)
\ge
(1-\mu_T)\Theta_{i,j}
\ge
\frac12\,\Theta_{i,j}.
\]
Combining with \eqref{eq:act_patch_theta_factorized} gives
\[
K_{\mathrm{act}}^u(i,j)\ge \frac12\,c^-_\Theta (j+1)^{-\gamma}(i+1)^{-\beta}.
\]

For the upper bound,
\[
K_{\mathrm{act}}^u(i,j)
=
\Theta_{i,j}\alpha^f_{j,j}(w)
+
\sum_{k=j+1}^{i}\Theta_{i,k}\alpha^f_{k,j}(w)
\le
\Theta_{i,j}
+
\mu_T\sum_{k=j+1}^{i}\Theta_{i,k},
\]
by \eqref{eq:act_patch_single_offdiag}.
Now choose
\[
\mu_T:=\min\Bigl\{\frac12,\ \frac{c^-_\Theta}{4C_\Sigma(T+1)}\Bigr\}.
\]
Then by \eqref{eq:act_patch_theta_sum_bound},
\[
\mu_T\sum_{k=j+1}^{i}\Theta_{i,k}
\le
\frac{c^-_\Theta}{4(T+1)}.
\]
By \eqref{eq:act_patch_theta_lower_T},
\[
\frac{c^-_\Theta}{4(T+1)}\le \frac14\,\Theta_{i,j}.
\]
Hence
\[
K_{\mathrm{act}}^u(i,j)\le \frac54\,\Theta_{i,j}.
\]
Using \eqref{eq:act_patch_theta_factorized},
\[
K_{\mathrm{act}}^u(i,j)\le \frac54\,c^+_\Theta (j+1)^{-\gamma}(i+1)^{-\beta}.
\]
Thus the stated two-sided bounds hold with
\[
a^-_{\mathrm{act}}:=\frac12\,c^-_\Theta,
\qquad
a^+_{\mathrm{act}}:=\frac54\,c^+_\Theta.
\]

For any \(\eta>0\), replacing \(\Kset\) by
\(\Sat^{\mathrm{sig}}_\eta(\Kset)\) leaves the ordered positional ranges
\((I_t)_{t=0}^{T}\) unchanged.
In the concrete construction, the copy block writes the source scratch channel
from the signal channel exactly and is independent of the incoming
\(e_{\mathrm{src}}\)-channel, while the transport block uses forward and
feedback weights depending only on the positional stream and reads the copied
source scratch channel exactly.
Hence the same concrete construction yields the same exact scalar transport
formula on \(\Sat^{\mathrm{sig}}_\eta(\Kset)\), with the same coefficients
\(D_{\mathrm{act}}^u(i)\) and \(K_{\mathrm{act}}^u(i,j)\).
Applying Lemma~\ref{lem:signal_blind_transport_calculus}(i) gives
\[
e_{\mathrm{sig}}^\top
\frac{\partial A^{\mathrm{act}}_{T,\beta}(u)_i}{\partial u_j}
e_{\mathrm{sig}}
=
D_{\mathrm{act}}^u(i)\,\mathbf 1[i=j]
+
K_{\mathrm{act}}^u(i,j)\,\mathbf 1[j<i].
\]
\end{proof}

\begin{remark}[Active diffusive transport depends only on position]
\label{rem:act_positional_only}
In the concrete construction used in the proof of
Lemma~\ref{lem:active_diffusive_factorized_transport},
the coefficients
\[
D_{\mathrm{act}}^u(i),
\qquad
K_{\mathrm{act}}^u(i,j),
\qquad 0\le j<i\le T,
\]
depend only on the positional stream
\[
\bigl(\langle u_s,e_{\mathrm{pos}}\rangle\bigr)_{s=0}^{T},
\]
and are independent of the signal channel \(e_{\mathrm{sig}}\) and of the carried channels
\(E_{\mathrm{carry}}\).
\end{remark}

\begin{lemma}[Transparent source-\(0\) tail channel]
\label{lem:source0_tail_channel}
Fix \(\beta\in(0,1)\), set \(\gamma:=1-\beta\), fix \(\tau_{\max}\ge 0\), and let
\[
L_H:=\tau_{\max}+H.
\]
Let \(\Kset_H\subset(\mathbb R^m)^{L_H+1}\) be compact.
Assume orthonormal directions
\[
e_{\mathrm{sig}},\ e_{\mathrm{pos}},\ e_{\mathrm{tail}},\ e_{\mathrm{aux}},\ e_{\mathrm{src}},\ e_{\mathrm{tgt}}\in\mathbb R^m
\]
and a subspace \(E_{\mathrm{carry}}\subset\mathbb R^m\) orthogonal to all six, such that
\[
I_t:=\{\langle u_t,e_{\mathrm{pos}}\rangle:\ u\in\Kset_H\},
\qquad 0\le t\le L_H,
\]
are compact and strictly ordered:
\[
I_0<I_1<\cdots<I_{L_H}\subset(0,\infty).
\]

Then there exists a constant-depth LN-free Sessa network
\[
T_H^{\mathrm{tail}}:(\mathbb R^m)^{L_H+1}\to(\mathbb R^m)^{L_H+1}
\]

such that the \(e_{\mathrm{sig}}\)-channel, the positional-control coordinate \(e_{\mathrm{pos}}\), and every channel in \(E_{\mathrm{carry}}\) are preserved exactly and, writing
\[
g_t(u):=\bigl\langle T_H^{\mathrm{tail}}(u)_t,\ e_{\mathrm{tail}}\bigr\rangle,
\qquad 0\le t\le L_H,
\]
there exist constants \(c_g^-,c_g^+>0\), independent of \(H\), such that
\[
c_g^-(t+1)^{-\beta}\le g_t(u)\le c_g^+(t+1)^{-\beta},
\qquad
0\le t\le L_H,\ \ u\in\Kset_H;
\]
\(T_H^{\mathrm{tail}}\) is signal-transparent along \(e_{\mathrm{sig}}\) with respect to the control pair
\[
\bigl(e_{\mathrm{pos}},e_{\mathrm{tail}}\bigr):
\]
for every \(u\in\Kset_H\), every \(\tau\in\{0,\dots,L_H\}\), and every scalar \(a\in\mathbb R\),
\[
\bigl\langle T_H^{\mathrm{tail}}(u+a\,e_{\mathrm{sig}}\mathbf 1[\cdot=\tau])_t,\ e_{\mathrm{pos}}\bigr\rangle
=
\bigl\langle T_H^{\mathrm{tail}}(u)_t,\ e_{\mathrm{pos}}\bigr\rangle,
\]
\[
\bigl\langle T_H^{\mathrm{tail}}(u+a\,e_{\mathrm{sig}}\mathbf 1[\cdot=\tau])_t,\ e_{\mathrm{tail}}\bigr\rangle
=
\bigl\langle T_H^{\mathrm{tail}}(u)_t,\ e_{\mathrm{tail}}\bigr\rangle,
\]
\[
\bigl\langle T_H^{\mathrm{tail}}(u+a\,e_{\mathrm{sig}}\mathbf 1[\cdot=\tau])_t,\ e_{\mathrm{sig}}\bigr\rangle
=
\bigl\langle T_H^{\mathrm{tail}}(u)_t,\ e_{\mathrm{sig}}\bigr\rangle
+
a\,\mathbf 1[t=\tau],
\qquad
0\le t\le L_H.
\]
\end{lemma}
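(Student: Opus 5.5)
The plan is to write the uniform-routing impulse response from source $0$ into the $e_{\mathrm{tail}}$ channel, using only constant (position-independent) forward data so that the signal channel is never read. Concretely, I would build $T_H^{\mathrm{tail}}$ as a single LN-free Sessa block, with an optional preceding forward-only block that zeros the tail channel, if one is needed. The forward branch has $W_{Qf}=W_{Kf}=0$ and a value coordinate computed from $u_t$ only through the positional coordinate. The target forward signal is an impulse at $t=0$: $f_0=1$ and $f_t\approx 0$ for $t\ge 1$.

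The impulse comes from the plateau-window machinery. Applying Lemma~\ref{lem:plateau_window_gelu} with $\tau_\ast=0$ on the ordered ranges $I_t$ gives a value $v_t=W_\eta(\langle u_t,e_{\mathrm{pos}}\rangle)$ with $|v_0-1|\le\eta$ and $|v_t|\le\eta$ for $t\ge 1$. Lemma~\ref{lem:self_focus_forward} then makes the forward row nearly diagonal, exactly as in Lemma~\ref{lem:window_writer}. The feedback branch takes zero queries and keys, so the strict-past attention is exactly uniform, $\alpha^b_{t,j}=1/t$, and it uses the constant gain $\gamma=1-\beta$. By Corollary~\ref{cor:theta_tail_uniform_shifted}, the resolvent $\Theta=(I-B)^{-1}$ has column $\Theta_{t,0}=\frac{\gamma}{\Gamma(1+\gamma)}\frac{\Gamma(t+\gamma)}{\Gamma(t+1)}$ for $t\ge1$ and $\Theta_{0,0}=1$. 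Lemma~\ref{lem:gautschi} turns this into $\Theta_{t,0}\asymp(t+1)^{-\beta}$ with constants depending only on $\beta$. The output is written onto $e_{\mathrm{tail}}$ using the gate/output trick of Lemma~\ref{lem:window_writer}: one constant value coordinate with gate $\langle u_t,e_{\mathrm{tail}}\rangle$ and coefficient $-1$ erases the old content, and a constant gate $1$ adds $s_t$. All other output columns are zero, so $e_{\mathrm{sig}}$, $e_{\mathrm{pos}}$ and $E_{\mathrm{carry}}$ are preserved exactly. The feedback must act only on the designated coordinate, and the constant "eraser" coordinate sees $s^{(0)}_t=\sum_k\Theta_{t,k}$ rather than $1$. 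Because of this, the eraser value should be carried in a coordinate whose feedback is turned off. This is impossible, since the solve is shared across features, so instead I would compensate by rescaling the eraser coefficient to $-1/s^{(0)}_t$. That is not constant either, so the cleaner route is two blocks: a forward-only block (as in Lemma~\ref{lem:window_writer}) that overwrites $e_{\mathrm{tail}}$ by $0$, followed by the feedback block that only adds $s_t$ with gate $1$.

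The main obstacle is the error bound for $g_t=s_t=\sum_k\Theta_{t,k}f_k$. I would split this as $\Theta_{t,0}f_0+\sum_{k\ge1}\Theta_{t,k}f_k$. We have $f_0=v_0\in[1-\eta,1+\eta]$, and for $k\ge1$, $|f_k|\le \eta+\mu(1+\eta)$. The remainder is bounded by $(\eta+2\mu)\sum_{k=1}^t\Theta_{t,k}$. Using the factorized bound \eqref{eq:act_patch_theta_factorized}-type estimate $\Theta_{t,k}\lesssim (k+1)^{-\gamma}(t+1)^{-\beta}$ together with $\sum_{k\le t}(k+1)^{-\gamma}\lesssim (t+1)^{\beta}$, this sum is bounded by $C_\Sigma$, a constant that does not decay. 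To absorb it into $c(t+1)^{-\beta}\ge c(L_H+1)^{-\beta}$, I would choose $\eta,\mu\lesssim (L_H+1)^{-1}$. This choice is allowed because only the constants $c_g^\pm$ must be $H$-independent, not the parameters; it mirrors the choice of $\mu_T$ in Lemma~\ref{lem:active_diffusive_factorized_transport}. The result is $c_g^\pm$ equal to $(1\pm\frac14)$ times the Gamma constants, which are independent of $H$.

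Signal transparency is then immediate. Every attention weight, gain, value and gate used is a function of the $e_{\mathrm{pos}}$ and $e_{\mathrm{tail}}$ coordinates (and constants) only, via the exact channel reads of Corollary~\ref{cor:exact_channel_read}. Perturbing $u$ along $e_{\mathrm{sig}}$ therefore changes no control quantity, and the only column touching $e_{\mathrm{sig}}$ is the identity residual. This gives all three displayed identities for every $a\in\mathbb R$, not just small $a$.
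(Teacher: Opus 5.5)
Your proposal is correct, but it follows a different route from the paper. The paper builds $T_H^{\mathrm{tail}}=A_H^{\mathrm{tail}}\circ S_H^{\mathrm{tail}}\circ C_H$ in three stages:
\begin{itemize}
\item A forward-only block overwrites $e_{\mathrm{tail}}$ by the constant seed $1$.
\item The two-block selector of Lemma~\ref{lem:selector_window} multiplies that seed by a coefficient that is $\asymp 1$ at $t=0$ and at most $\varepsilon_H=c_0(H+1)^{-M}$ elsewhere, with $M>\beta$. Here $e_{\mathrm{tail}}$ plays the signal role, $e_{\mathrm{aux}}$ is scratch, and $e_{\mathrm{sig}}$ is carried.
\item The two-block active diffusive transport of Lemma~\ref{lem:active_diffusive_factorized_transport}, again with $e_{\mathrm{tail}}$ as signal, spreads the seed with the factorized kernel $K_{\mathrm{act}}(t,0)\asymp(t+1)^{-\beta}$. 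The $O(\varepsilon_H)$ leakage is absorbed because $\varepsilon_H\lesssim(t+1)^{-\beta}$ for $t\le L_H$.
\end{itemize}

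You instead create the source-$0$ impulse directly as the forward value: a plateau window in the positional coordinate, made near-diagonal by self-focusing. This sits inside a single feedback block with uniform strict-past routing and gain $\gamma$, preceded by a zero-writer (Lemma~\ref{lem:profile_zero_writer}). You correctly concluded that the zero-writer is needed. The solve is shared across features, so the one-block eraser would leave a factor $1-\sum_k\Theta_{t,k}$ multiplying the incoming tail content.

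Your main term is then the exact Gamma column of Corollary~\ref{cor:theta_tail_uniform_shifted}, and the leakage is controlled by a row-sum bound. That bound is even simpler than the factorized estimate you invoke. Since $\Theta\ge 0$ and $B$ has row sums $\gamma$, Lemma~\ref{lem:triangular_linf} gives $\sum_k\Theta_{t,k}\le 1/\beta$, so tolerances $\eta,\mu\lesssim(L_H+1)^{-\beta}$ already suffice.

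Your route gives a depth-$2$ network that never touches $e_{\mathrm{aux}},e_{\mathrm{src}},e_{\mathrm{tgt}}$ and has a closed-form main term. The paper's route is modular: it reuses the selector and active-transport modules it needs anyway for the retrieval theorem, and their positional-only dependence gives transparency for free. The price is five blocks and more scratch channels.
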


\begin{proof}
All auxiliary directions used below are part of the hypotheses; no fresh direction is chosen inside the construction.
We construct
\[
T_H^{\mathrm{tail}}=A_H^{\mathrm{tail}}\circ S_H^{\mathrm{tail}}\circ C_H,
\]
where \(C_H\) writes a constant seed on the prescribed tail direction \(e_{\mathrm{tail}}\),
\(S_H^{\mathrm{tail}}\) selects source \(0\) on that tail channel, and
\(A_H^{\mathrm{tail}}\) transports the selected seed by the active diffusive block.

\paragraph{Step 1: constant seed writer on the prescribed tail direction.}
Build a forward-only LN-free Sessa block
\[
C_H:(\mathbb R^m)^{L_H+1}\to(\mathbb R^m)^{L_H+1}
\]
as follows.

Choose two forward value coordinates equal to \(1\):
\[
v_t^{(0)}\equiv 1,\qquad v_t^{(1)}\equiv 1.
\]
Hence the corresponding forward aggregates satisfy
\[
s_t^{(0)}=1,\qquad s_t^{(1)}=1.
\]
Choose two gate coordinates
\[
g_t^{(0)}=\langle u_t,e_{\mathrm{tail}}\rangle,
\qquad
g_t^{(1)}\equiv 1,
\]
and choose the output projection on the \(e_{\mathrm{tail}}\)-channel with coefficients
\((-1,+1)\) on these two gated coordinates and zero on all other output channels.
Then
\[
\bigl\langle C_H(u)_t,e_{\mathrm{tail}}\bigr\rangle
=
\langle u_t,e_{\mathrm{tail}}\rangle
-
s_t^{(0)}\langle u_t,e_{\mathrm{tail}}\rangle
+
s_t^{(1)}
=
1.
\]
Thus \(C_H\) overwrites the \(e_{\mathrm{tail}}\)-channel by the constant seed \(1\).

Because the output projection vanishes on the \(e_{\mathrm{sig}}\)-, \(e_{\mathrm{pos}}\)-, and
\(E_{\mathrm{carry}}\)-channels, these channels are preserved exactly:
\[
\bigl\langle C_H(u)_t,e_{\mathrm{sig}}\bigr\rangle=\langle u_t,e_{\mathrm{sig}}\rangle,
\qquad
\bigl\langle C_H(u)_t,e_{\mathrm{pos}}\bigr\rangle=\langle u_t,e_{\mathrm{pos}}\rangle,
\]
and likewise on \(E_{\mathrm{carry}}\).

Moreover, since the written tail seed is constant and independent of the input,
for every \(a\in\mathbb R\),
\[
\bigl\langle C_H(u+a\,e_{\mathrm{sig}}\mathbf 1[\cdot=\tau])_t,e_{\mathrm{tail}}\bigr\rangle
=
\bigl\langle C_H(u)_t,e_{\mathrm{tail}}\bigr\rangle
=
1,
\]
while the \(e_{\mathrm{sig}}\)-channel passes through exactly. So \(C_H\) is already
signal-transparent along \(e_{\mathrm{sig}}\) with respect to \((e_{\mathrm{pos}},e_{\mathrm{tail}})\).

\paragraph{Step 2: positional selector on the tail channel.}
Let
\[
\Kset_H^{(1)}:=C_H(\Kset_H).
\]
Apply Lemma~\ref{lem:selector_window} to \(\Kset_H^{(1)}\) with signal direction
\(e_{\mathrm{sig}}^{\mathrm{sel}}:=e_{\mathrm{tail}}\), positional-control direction
\(e_{\mathrm{pos}}\), auxiliary direction \(e_{\mathrm{aux}}\), source index
\(\tau_\ast=0\), and carried-through subspace
\[
E_{\mathrm{carry}}^{\mathrm{sel}}
:=
\operatorname{span}\{e_{\mathrm{sig}}\}\oplus E_{\mathrm{carry}}.
\]

Choose an exponent \(M>\beta\) and set
\[
\varepsilon_H:=c_0(H+1)^{-M},
\]
where \(c_0>0\) will be chosen later.
The lemma yields a depth-\(2\) network
\[
S_H^{\mathrm{tail}}:=S_{L_H,0,\varepsilon_H}
\]
which preserves \(e_{\mathrm{pos}}\), the original \(e_{\mathrm{sig}}\), and every channel in \(E_{\mathrm{carry}}\) exactly, and whose exact diagonal transport on the tail channel is
\[
\bigl\langle S_H^{\mathrm{tail}}(v)_t,e_{\mathrm{tail}}\bigr\rangle
=
D_{\mathrm{sel}}^v(t)\,\langle v_t,e_{\mathrm{tail}}\rangle.
\]
Since \(\langle C_H(u)_t,e_{\mathrm{tail}}\rangle\equiv 1\), the selected seed stream is
\[
z_t(u):=\bigl\langle S_H^{\mathrm{tail}}(C_H(u))_t,e_{\mathrm{tail}}\bigr\rangle
=
D_{\mathrm{sel}}^{C_H(u)}(t).
\]
By Lemma~\ref{lem:selector_window},
\[
\frac12\le z_0(u)\le 2,
\qquad
|z_t(u)|\le \varepsilon_H\quad (t\ge 1).
\]

By Remark~\ref{rem:selector_positional_only}, in the concrete construction of
\(S_H^{\mathrm{tail}}=S_{L_H,0,\varepsilon_H}\) the coefficient
\[
D_{\mathrm{sel}}^{C_H(u)}(t)
\]
depends only on the positional stream
\[
\bigl(\langle C_H(u)_s,e_{\mathrm{pos}}\rangle\bigr)_{s=0}^{L_H}.
\]
Since \(C_H\) preserves the positional coordinate exactly,
\[
\langle C_H(u)_s,e_{\mathrm{pos}}\rangle=\langle u_s,e_{\mathrm{pos}}\rangle,
\qquad 0\le s\le L_H,
\]
it follows that
\[
z_t(u)=D_{\mathrm{sel}}^{C_H(u)}(t)
\]
depends only on the original positional stream and not on the original signal channel.

\paragraph{Step 3: active diffusive transport on the same prescribed tail direction.}
Let
\[
\Kset_H^{(2)}:=S_H^{\mathrm{tail}}(\Kset_H^{(1)}).
\]
Apply Lemma~\ref{lem:active_diffusive_factorized_transport} to \(\Kset_H^{(2)}\) with
positional direction \(e_{\mathrm{pos}}\), signal direction
\(e_{\mathrm{sig}}^{\mathrm{act}}:=e_{\mathrm{tail}}\), scratch directions
\(e_{\mathrm{src}},e_{\mathrm{tgt}}\), and carried-through subspace
\[
E_{\mathrm{carry}}^{\mathrm{act}}
:=
\operatorname{span}\{e_{\mathrm{sig}}\}\oplus E_{\mathrm{carry}}.
\]
Denote the resulting network by
\[
A_H^{\mathrm{tail}}.
\]
By the lemma, \(A_H^{\mathrm{tail}}\) preserves \(e_{\mathrm{pos}}\), the original \(e_{\mathrm{sig}}\), and \(E_{\mathrm{carry}}\) exactly, and has exact scalar transport on the tail channel:
\[
\bigl\langle A_H^{\mathrm{tail}}(w)_t,e_{\mathrm{tail}}\bigr\rangle
=
D_{\mathrm{act}}^w(t)\,\langle w_t,e_{\mathrm{tail}}\rangle
+
\sum_{j<t}K_{\mathrm{act}}^w(t,j)\,\langle w_j,e_{\mathrm{tail}}\rangle.
\]
Therefore, for
\[
g_t(u):=\bigl\langle T_H^{\mathrm{tail}}(u)_t,e_{\mathrm{tail}}\bigr\rangle,
\]
we have
\[
g_t(u)
=
D_{\mathrm{act}}^{w}(t)\,z_t(u)+\sum_{j<t}K_{\mathrm{act}}^{w}(t,j)\,z_j(u),
\qquad
w:=S_H^{\mathrm{tail}}(C_H(u)).
\]

By Remark~\ref{rem:act_positional_only}, in the concrete construction of
\(A_H^{\mathrm{tail}}\) the coefficients
\[
D_{\mathrm{act}}^{w}(t),
\qquad
K_{\mathrm{act}}^{w}(t,j)
\]
depend only on the positional stream
\[
\bigl(\langle w_s,e_{\mathrm{pos}}\rangle\bigr)_{s=0}^{L_H}.
\]
Since both \(C_H\) and \(S_H^{\mathrm{tail}}\) preserve the positional coordinate exactly,
this is the same as the original positional stream of \(u\).
Hence these coefficients are independent of the original signal channel.

\paragraph{Step 4: two-sided tail bounds.}
At \(t=0\), the sum is empty, so
\[
g_0(u)=D_{\mathrm{act}}^{w}(0)\,z_0(u).
\]
By Lemma~\ref{lem:active_diffusive_factorized_transport},
\[
\underline d_{\mathrm{act}}\le D_{\mathrm{act}}^{w}(0)\le \overline d_{\mathrm{act}},
\]
hence
\[
\frac12\,\underline d_{\mathrm{act}}
\le
g_0(u)
\le
2\,\overline d_{\mathrm{act}}.
\]

Now fix \(t\ge 1\). Using the exact transport formula, the bounds on \(z_j(u)\), and the coefficient bounds from Lemma~\ref{lem:active_diffusive_factorized_transport}, we obtain
\begin{align*}
g_t(u)
&\ge
K_{\mathrm{act}}^{w}(t,0)\,z_0(u)
-
|D_{\mathrm{act}}^{w}(t)z_t(u)|
-
\sum_{j=1}^{t-1}K_{\mathrm{act}}^{w}(t,j)\,|z_j(u)|\\
&\ge
\frac12\,a^-_{\mathrm{act}}(t+1)^{-\beta}
-
\overline d_{\mathrm{act}}\,\varepsilon_H
-
a^+_{\mathrm{act}}\varepsilon_H\sum_{j=1}^{t-1}(j+1)^{-\gamma}(t+1)^{-\beta}.
\end{align*}
Since \(\gamma=1-\beta\in(0,1)\),
\[
\sum_{j=1}^{t-1}(j+1)^{-\gamma}\lesssim_\beta (t+1)^\beta,
\]
hence
\[
g_t(u)\ge c_1(t+1)^{-\beta}-c_2\varepsilon_H
\]
for constants \(c_1,c_2>0\) independent of \(H\).

Now \(M>\beta\), so
\[
\varepsilon_H=c_0(H+1)^{-M}\le c_0(H+1)^{-\beta}.
\]
Also \(0\le t\le L_H=\tau_{\max}+H\), hence
\[
(H+1)^{-\beta}\le (\tau_{\max}+1)^\beta (t+1)^{-\beta}.
\]
Therefore
\[
\varepsilon_H\lesssim_{\tau_{\max}} c_0 (t+1)^{-\beta}.
\]
Choosing \(c_0>0\) sufficiently small makes the error absorbable, so
\[
g_t(u)\ge c_g^-(t+1)^{-\beta}
\]
for some \(c_g^->0\) independent of \(H\).

Similarly,
\begin{align*}
g_t(u)
&\le
|D_{\mathrm{act}}^{w}(t)z_t(u)|
+
K_{\mathrm{act}}^{w}(t,0)|z_0(u)|
+
\sum_{j=1}^{t-1}K_{\mathrm{act}}^{w}(t,j)|z_j(u)|\\
&\le
\overline d_{\mathrm{act}}\varepsilon_H
+
2a^+_{\mathrm{act}}(t+1)^{-\beta}
+
a^+_{\mathrm{act}}\varepsilon_H\sum_{j=1}^{t-1}(j+1)^{-\gamma}(t+1)^{-\beta},
\end{align*}
hence
\[
g_t(u)\le c_g^+(t+1)^{-\beta}
\]
for some \(c_g^+<\infty\) independent of \(H\).

Thus
\[
c_g^-(t+1)^{-\beta}\le g_t(u)\le c_g^+(t+1)^{-\beta},
\qquad
0\le t\le L_H.
\]

\paragraph{Step 5: signal-transparency along \(e_{\mathrm{sig}}\).}
Let
\[
u^{(a,\tau)}:=u+a\,e_{\mathrm{sig}}\mathbf 1[\cdot=\tau].
\]
Since \(e_{\mathrm{sig}}\perp e_{\mathrm{pos}}\), we have
\[
\langle u^{(a,\tau)}_t,e_{\mathrm{pos}}\rangle=\langle u_t,e_{\mathrm{pos}}\rangle
\qquad \forall t.
\]
By Step 1,
\[
\langle C_H(u^{(a,\tau)})_t,e_{\mathrm{tail}}\rangle
=
\langle C_H(u)_t,e_{\mathrm{tail}}\rangle
=
1,
\]
and
\[
\langle C_H(u^{(a,\tau)})_t,e_{\mathrm{sig}}\rangle
=
\langle C_H(u)_t,e_{\mathrm{sig}}\rangle+a\,\mathbf 1[t=\tau].
\]

By the dependence analysis in Step 2, \(z_t(u)\) depends only on the positional stream, so
\[
z_t(u^{(a,\tau)})=z_t(u).
\]
By the dependence analysis in Step 3, the coefficients \(D_{\mathrm{act}}^{w},K_{\mathrm{act}}^{w}\) also depend only on the positional stream, hence they are unchanged under the perturbation.
Therefore the tail output is unchanged:
\[
g_t(u^{(a,\tau)})=g_t(u).
\]
Since each constituent block preserves the original \(e_{\mathrm{sig}}\)-channel exactly, the full composition satisfies
\[
\bigl\langle T_H^{\mathrm{tail}}(u^{(a,\tau)})_t,e_{\mathrm{sig}}\bigr\rangle
=
\bigl\langle T_H^{\mathrm{tail}}(u)_t,e_{\mathrm{sig}}\bigr\rangle
+
a\,\mathbf 1[t=\tau].
\]
The \(e_{\mathrm{pos}}\)-coordinate is preserved exactly at each stage as well. This proves signal-transparency.
\end{proof}

\begin{lemma}[Residual zero-writer]
\label{lem:profile_zero_writer}
Fix \(T\ge 0\), a compact set \(\Kset\subset (\mathbb R^m)^{T+1}\), orthonormal directions
\[
e_{\mathrm{sig}},\ e_{\mathrm{pos}},\ e_{\mathrm{zero}}\in\mathbb R^m,
\]
and a subspace \(E_{\mathrm{carry}}\subset\mathbb R^m\) orthogonal to all three.
Then there exists a single LN-free Sessa block
\[
Z_{T,e_{\mathrm{zero}}}:(\mathbb R^m)^{T+1}\to(\mathbb R^m)^{T+1}
\]

such that the feedback branch is switched off, the \(e_{\mathrm{sig}}\)-channel, the \(e_{\mathrm{pos}}\)-channel, and every channel in \(E_{\mathrm{carry}}\) are preserved exactly, the prescribed channel is written to zero exactly:
\[
\bigl\langle Z_{T,e_{\mathrm{zero}}}(u)_t,\ e_{\mathrm{zero}}\bigr\rangle=0
\qquad
\forall\,u\in\Kset,\ \forall\,0\le t\le T;
\]
\(Z_{T,e_{\mathrm{zero}}}\) is signal-transparent along \(e_{\mathrm{sig}}\) with respect to the control pair
\((e_{\mathrm{pos}},e_{\mathrm{zero}})\):
for every \(u\in\Kset\), every \(\tau\in\{0,\dots,T\}\), every scalar \(a\in\mathbb R\), and every \(0\le t\le T\),
\[
\bigl\langle Z_{T,e_{\mathrm{zero}}}(u+a\,e_{\mathrm{sig}}\mathbf 1[\cdot=\tau])_t,\ e_{\mathrm{pos}}\bigr\rangle
=
\bigl\langle Z_{T,e_{\mathrm{zero}}}(u)_t,\ e_{\mathrm{pos}}\bigr\rangle,
\]
\[
\bigl\langle Z_{T,e_{\mathrm{zero}}}(u+a\,e_{\mathrm{sig}}\mathbf 1[\cdot=\tau])_t,\ e_{\mathrm{zero}}\bigr\rangle
=
\bigl\langle Z_{T,e_{\mathrm{zero}}}(u)_t,\ e_{\mathrm{zero}}\bigr\rangle
=
0,
\]
and
\[
\bigl\langle Z_{T,e_{\mathrm{zero}}}(u+a\,e_{\mathrm{sig}}\mathbf 1[\cdot=\tau])_t,\ e_{\mathrm{sig}}\bigr\rangle
=
\bigl\langle Z_{T,e_{\mathrm{zero}}}(u)_t,\ e_{\mathrm{sig}}\bigr\rangle
+
a\,\mathbf 1[t=\tau].
\]
\end{lemma}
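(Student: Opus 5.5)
The plan is to reuse the "overwrite" trick already employed in the constant seed writer (Step~1 of Lemma~\ref{lem:source0_tail_channel}) and in the window writer (Lemma~\ref{lem:window_writer}). We take a single forward-only block in which the subtracted quantity is the current content of \(e_{\mathrm{zero}}\) itself, and we add nothing back. Concretely, we switch off the feedback branch by setting \(w^\gamma=0\) and \(b^\gamma=0\), so that \(\gamma_t\equiv 0\) and \(s=f\). We set \(W^{\mathrm{in}}\) so that one \(a\)-coordinate is a constant \(c_1>0\) with \(\GELU(c_1)=1\), with all other \(a\)-coordinates zero. We then choose \(W_V\) so that a single value coordinate is \(v_t^{(0)}\equiv 1\). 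The forward queries and keys may be arbitrary, for instance zero. Since every forward row is a probability vector, we get \(s_t^{(0)}=\sum_{j\le t}\alpha^f_{t,j}\cdot 1=1\) exactly, for every input and every \(t\).

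For the gate, we choose the linear part of \(W^{\mathrm{in}}\) on the \(g\)-branch so that \(g_t^{(0)}=\langle u_t,e_{\mathrm{zero}}\rangle\). We take \(W^{\mathrm{out}}\) to map the \(0\)-th mixer coordinate to \(-e_{\mathrm{zero}}\), with all other rows zero, and set \(b^{\mathrm{out}}=0\). The residual update then gives
\[
Z_{T,e_{\mathrm{zero}}}(u)_t=u_t-s_t^{(0)}g_t^{(0)}e_{\mathrm{zero}}=u_t-\langle u_t,e_{\mathrm{zero}}\rangle e_{\mathrm{zero}} .
\]
This is the orthogonal projection removing the \(e_{\mathrm{zero}}\)-component, applied tokenwise. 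It follows immediately that the \(e_{\mathrm{zero}}\)-coordinate of the output is exactly \(0\). Every direction orthogonal to \(e_{\mathrm{zero}}\) is preserved exactly, and this includes \(e_{\mathrm{sig}}\), \(e_{\mathrm{pos}}\) and all of \(E_{\mathrm{carry}}\). These identities hold for all \(u\in(\mathbb R^m)^{T+1}\), not only on \(\Kset\).

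Signal-transparency then follows directly from the closed form. Adding \(a\,e_{\mathrm{sig}}\mathbf 1[\cdot=\tau]\) does not change \(\langle u_t,e_{\mathrm{zero}}\rangle\), because \(e_{\mathrm{sig}}\perp e_{\mathrm{zero}}\). The map is therefore affine, indeed linear, with the perturbation passing straight through: the \(e_{\mathrm{pos}}\)-coordinate and the \(e_{\mathrm{zero}}\)-coordinate (which is \(0\)) are unchanged, and the \(e_{\mathrm{sig}}\)-coordinate shifts by exactly \(a\,\mathbf 1[t=\tau]\).

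The only point requiring care is that the cancellation is exact. This in turn relies only on the fact that the softmax rows sum to one and that \(\GELU(c_1)=1\) is attainable, since \(\GELU\) is continuous and unbounded on \((0,\infty)\). No approximation argument is needed, so we do not expect a genuine obstacle. We need only check that the width suffices, namely that one \(a\)-slot and one \(g\)-slot are available, which holds for any \(m\ge 3\).
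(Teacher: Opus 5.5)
Your proposal is correct and follows the paper's proof essentially verbatim. Both arguments switch the feedback off, obtain a constant value coordinate $v_t^{(0)}\equiv 1$ from $\GELU(c_1)=1$ so that row-stochasticity gives $s_t^{(0)}=1$ exactly, gate with $g_t^{(0)}=\langle u_t,e_{\mathrm{zero}}\rangle$, and write $-e_{\mathrm{zero}}$ through $W^{\mathrm{out}}$, which yields the tokenwise projection $u_t-\langle u_t,e_{\mathrm{zero}}\rangle e_{\mathrm{zero}}$ from which exact preservation and signal-transparency are read off directly.
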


\begin{proof}
Switch off the feedback branch.

Choose a positive constant \(c_1\) such that
\[
\GELU(c_1)=1.
\]
Realize one forward value coordinate by the constant \(1\):
\[
v_t^{(0)}\equiv 1.
\]
Since every forward attention row sums to \(1\), the corresponding forward aggregate is
\[
s_t^{(0)}=\sum_{j\le t}\alpha^f_{t,j}\cdot 1=1
\qquad
(0\le t\le T).
\]

Choose one gate coordinate equal to the prescribed channel:
\[
g_t^{(0)}=\langle u_t,e_{\mathrm{zero}}\rangle.
\]
Choose the output projection so that this gated coordinate contributes
\[
-\,e_{\mathrm{zero}}
\]
and all other output columns are zero. Then the residual update adds
\[
-\,s_t^{(0)}\,g_t^{(0)}\,e_{\mathrm{zero}}
=
-\,\langle u_t,e_{\mathrm{zero}}\rangle e_{\mathrm{zero}}.
\]
Therefore
\[
Z_{T,e_{\mathrm{zero}}}(u)_t
=
u_t-\langle u_t,e_{\mathrm{zero}}\rangle e_{\mathrm{zero}}.
\]
Taking the \(e_{\mathrm{zero}}\)-coordinate gives
\[
\bigl\langle Z_{T,e_{\mathrm{zero}}}(u)_t,e_{\mathrm{zero}}\bigr\rangle
=
\langle u_t,e_{\mathrm{zero}}\rangle-\langle u_t,e_{\mathrm{zero}}\rangle
=0,
\]
which proves the exact zero-writing claim.

Because the update is supported only on the \(e_{\mathrm{zero}}\)-direction, and
\[
e_{\mathrm{sig}},\ e_{\mathrm{pos}},\ E_{\mathrm{carry}}\perp e_{\mathrm{zero}},
\]
the \(e_{\mathrm{sig}}\)-channel, the \(e_{\mathrm{pos}}\)-channel, and all channels in \(E_{\mathrm{carry}}\)
are preserved exactly. This proves the exact preservation claim.

For signal-transparency, let
\[
u^{(a,\tau)}:=u+a\,e_{\mathrm{sig}}\mathbf 1[\cdot=\tau].
\]
Since \(e_{\mathrm{sig}}\perp e_{\mathrm{zero}},e_{\mathrm{pos}}\), one has
\[
\langle u^{(a,\tau)}_t,e_{\mathrm{zero}}\rangle=\langle u_t,e_{\mathrm{zero}}\rangle,
\qquad
\langle u^{(a,\tau)}_t,e_{\mathrm{pos}}\rangle=\langle u_t,e_{\mathrm{pos}}\rangle.
\]
Applying the explicit formula for \(Z_{T,e_{\mathrm{zero}}}\) yields
\[
Z_{T,e_{\mathrm{zero}}}(u^{(a,\tau)})_t
=
u_t+a\,e_{\mathrm{sig}}\mathbf 1[t=\tau]
-\langle u_t,e_{\mathrm{zero}}\rangle e_{\mathrm{zero}}
=
Z_{T,e_{\mathrm{zero}}}(u)_t+a\,e_{\mathrm{sig}}\mathbf 1[t=\tau].
\]
Taking the \(e_{\mathrm{pos}}\)-, \(e_{\mathrm{zero}}\)-, and \(e_{\mathrm{sig}}\)-coordinates gives the stated signal-transparency property.
\end{proof}

\begin{lemma}[Exact reset of finitely many scratch channels]
\label{lem:multi_channel_reset}
Fix \(T\ge 0\), orthonormal directions
\[
e_{\mathrm{sig}},\ e_{z,1},\dots,e_{z,p}\in\mathbb R^m,
\]
and a subspace \(E_{\mathrm{keep}}\subset\mathbb R^m\) orthogonal to all of them.
Then there exists a single forward-only concrete LN-free Sessa block
\[
Z^{\mathrm{scr}}_{T,\{e_{z,r}\}}:(\mathbb R^m)^{T+1}\to (\mathbb R^m)^{T+1}
\]
such that \(Z^{\mathrm{scr}}_{T,\{e_{z,r}\}}\) preserves \(e_{\mathrm{sig}}\) and every channel in \(E_{\mathrm{keep}}\) exactly, and for every \(u\) and every \(t\),
\[
\bigl\langle Z^{\mathrm{scr}}_{T,\{e_{z,r}\}}(u)_t,e_{z,r}\bigr\rangle=0
\qquad
(r=1,\dots,p);
\]
\(Z^{\mathrm{scr}}_{T,\{e_{z,r}\}}\) is signal-transparent along \(e_{\mathrm{sig}}\) over \(E_{\mathrm{keep}}\).
\end{lemma}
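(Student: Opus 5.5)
The plan is to generalize the construction of Lemma~\ref{lem:profile_zero_writer} from one channel to $p$ channels simultaneously, inside a single block. Switch off the feedback branch by setting $w^\gamma=0$ and $b^\gamma=0$, so that $\gamma_t\equiv 0$ and therefore $s=f$. Then use the bias $b^{\mathrm{in}}$ to create one constant $a$-coordinate equal to $c_1$, where $\GELU(c_1)=1$. Choose $W_V$ so that one value coordinate is identically $1$. Because every forward attention row is a probability vector, that forward aggregate satisfies $s_t^{(0)}=1$ exactly. This holds for every input and independently of the forward query/key choice, so we may take $W_{Qf}=W_{Kf}=0$.

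For the gates, take $p$ gate coordinates $g_t^{(r)}=\langle u_t,e_{z,r}\rangle$ for $r=1,\dots,p$, realized linearly through the $g$-half of $W^{\mathrm{in}}$ with zero bias.

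The step that needs care is the pairing: each gate is multiplied elementwise by $s_t$ in the same coordinate. So we need $p$ coordinates of $s$, each identically $1$, sitting in the coordinate slots of the $p$ gates. Achieve this by letting $W_V$ copy the constant post-GELU coordinate into all $p$ slots. Each such aggregate is then exactly $1$, again by the row-sum property. We must check that the width suffices: we need $p+1$ coordinates in total (the constant $a$-slot plus $p$ value/gate slots). Since the directions $e_{\mathrm{sig}}$ and $e_{z,1},\dots,e_{z,p}$ are orthonormal, we have $m\ge p+1$, and the $a$-constant may share an index with one of the value slots because the $a$ and $g$ halves are separate.

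Choose $W^{\mathrm{out}}$ to send coordinate slot $r$ to $-e_{z,r}$, with $b^{\mathrm{out}}=0$ and all other rows zero. The block is then exactly
\[
Z^{\mathrm{scr}}(u)_t=u_t-\sum_{r=1}^p\langle u_t,e_{z,r}\rangle e_{z,r}.
\]
This is the orthogonal projection onto $\operatorname{span}\{e_{z,r}\}^\perp$ applied tokenwise. Exact vanishing of the scratch coordinates follows from orthonormality of the $e_{z,r}$. Exact preservation of $e_{\mathrm{sig}}$ and of $E_{\mathrm{keep}}$ follows because the update lies in $\operatorname{span}\{e_{z,r}\}$, which is orthogonal to both.

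Signal-transparency is read off the explicit formula, as in Lemma~\ref{lem:profile_zero_writer}. For $u^{(a,\tau)}=u+a\,e_{\mathrm{sig}}\mathbf 1[\cdot=\tau]$, the gates are unchanged since $e_{\mathrm{sig}}\perp e_{z,r}$. Hence $Z^{\mathrm{scr}}(u^{(a,\tau)})_t=Z^{\mathrm{scr}}(u)_t+a\,e_{\mathrm{sig}}\mathbf 1[t=\tau]$. Projecting this identity onto $E_{\mathrm{keep}}$ gives the control-invariance part of Definition~\ref{def:signal_transparent_preprocessing}, and projecting onto $e_{\mathrm{sig}}$ gives the additive-shift part. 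I expect no genuine obstacle here; the only point to check carefully is the slot bookkeeping and width count described above.
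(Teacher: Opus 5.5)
Your proposal is correct and follows the paper's proof essentially step for step. The shared construction is: switch off the feedback branch; copy one constant post-GELU coordinate via $W_V$ into $p$ value slots, whose forward aggregates equal $1$ by the row-sum property; set gates $g_t^{(r)}=\langle u_t,e_{z,r}\rangle$; and let $W^{\mathrm{out}}$ send slot $r$ to $-e_{z,r}$. This yields the tokenwise projection formula, from which exact vanishing, preservation, and signal-transparency are read off. One minor point: your width bookkeeping overcounts slightly, since the constant slot lives in $\bar a_t$ while the $p$ aligned slots live in $v_t,s_t,g_t$, so $m\ge p$ already suffices, but this is harmless.
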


\begin{proof}
Switch off the feedback branch and choose the forward queries and keys identically zero, so that every forward row has sum \(1\).

Choose a positive constant \(c_\ast\) with
\[
\GELU(c_\ast)=1.
\]
Activate one constant \(a\)-slot:
\[
a_t^{(1)}\equiv c_\ast.
\]
Then one post-GELU coordinate is identically \(1\). Choose \(W_V\) so that the first \(p\) forward value coordinates are all equal to \(1\). Since each forward row sums to \(1\), the corresponding forward aggregates satisfy
\[
s_t^{(r)}=1
\qquad
(r=1,\dots,p).
\]

Choose the first \(p\) gate coordinates as
\[
g_t^{(r)}=\langle u_t,e_{z,r}\rangle,
\qquad
r=1,\dots,p,
\]
and set all remaining gate coordinates to \(0\).
Finally choose \(W^{\mathrm{out}}\) so that the \(r\)-th active gated coordinate contributes \(-e_{z,r}\), with all other output columns equal to \(0\).
Then the residual update equals
\[
-\sum_{r=1}^{p}\langle u_t,e_{z,r}\rangle e_{z,r},
\]
so
\[
Z^{\mathrm{scr}}_{T,\{e_{z,r}\}}(u)_t
=
u_t-\sum_{r=1}^{p}\langle u_t,e_{z,r}\rangle e_{z,r}.
\]
Hence each scratch channel is reset exactly to zero, while \(e_{\mathrm{sig}}\) and every channel in \(E_{\mathrm{keep}}\) are preserved exactly.

Now let
\[
u^{(a,\tau)}:=u+a\,e_{\mathrm{sig}}\mathbf 1[\cdot=\tau].
\]
Because \(e_{\mathrm{sig}}\perp e_{z,r}\) for every \(r\), the reset term is identical for \(u^{(a,\tau)}\) and for \(u\). Therefore
\[
Z^{\mathrm{scr}}_{T,\{e_{z,r}\}}(u^{(a,\tau)})_t
=
Z^{\mathrm{scr}}_{T,\{e_{z,r}\}}(u)_t+a\,e_{\mathrm{sig}}\mathbf 1[t=\tau].
\]
This is exactly signal-transparency along \(e_{\mathrm{sig}}\) over \(E_{\mathrm{keep}}\).
\end{proof}

\begin{lemma}[Transparent damped predecessor integrator]
\label{lem:damped_predecessor_integrator}
Fix \(\beta\in(0,1)\), set \(\gamma:=1-\beta\), and let
\[
L_H:=\tau_{\max}+H.
\]
Let \(\Kset_H\subset(\mathbb R^m)^{L_H+1}\) be compact.
Assume orthonormal directions
\[
e_{\mathrm{sig}},\ e_{\mathrm{pos}},\ e_{\mathrm{tail}},\ e_{\mathrm{prof}}\in\mathbb R^m
\]
and a subspace \(E_{\mathrm{carry}}\subset\mathbb R^m\) orthogonal to all four, such that:

\begin{enumerate}[label=(\roman*), leftmargin=*, nosep]
\item the positional-control ranges
\[
I_t:=\{\langle u_t,e_{\mathrm{pos}}\rangle:\ u\in\Kset_H\},
\qquad 0\le t\le L_H,
\]
are compact and strictly ordered:
\[
I_0<I_1<\cdots<I_{L_H}\subset(0,\infty);
\]

\item the auxiliary tail input channel
\[
g_t(u):=\langle u_t,e_{\mathrm{tail}}\rangle
\]
satisfies
\[
c_g^-(t+1)^{-\beta}\le g_t(u)\le c_g^+(t+1)^{-\beta},
\qquad
0\le t\le L_H,\ \ u\in\Kset_H;
\]

\item the profile input channel is identically zero on \(\Kset_H\):
\[
\langle u_t,e_{\mathrm{prof}}\rangle=0
\qquad
\forall\,u\in\Kset_H,\ \forall\,0\le t\le L_H.
\]
\end{enumerate}

Then there exists a single LN-free Sessa block
\[
I_H:(\mathbb R^m)^{L_H+1}\to(\mathbb R^m)^{L_H+1}
\]

such that the \(e_{\mathrm{sig}}\)-channel, the \(e_{\mathrm{pos}}\)-coordinate, the \(e_{\mathrm{tail}}\)-channel, and every channel in \(E_{\mathrm{carry}}\) are preserved exactly and, writing
\[
r_t(u):=\bigl\langle I_H(u)_t,\ e_{\mathrm{prof}}\bigr\rangle,
\]
there exist constants \(c_r^-,c_r^+>0\), independent of \(H\), such that
\[
c_r^-(t+1)^\gamma\le r_t(u)\le c_r^+(t+1)^\gamma,
\qquad
0\le t\le L_H,\ \ u\in\Kset_H;
\]

\(I_H\) is signal-transparent along \(e_{\mathrm{sig}}\) with respect to the control pair
\[
\bigl(e_{\mathrm{pos}},e_{\mathrm{prof}}\bigr):
\]
for every \(u\in\Kset_H\), every \(\tau\in\{0,\dots,L_H\}\), every scalar \(a\in\mathbb R\), and every \(0\le t\le L_H\),
\[
\bigl\langle I_H(u+a\,e_{\mathrm{sig}}\mathbf 1[\cdot=\tau])_t,\ e_{\mathrm{pos}}\bigr\rangle
=
\bigl\langle I_H(u)_t,\ e_{\mathrm{pos}}\bigr\rangle,
\]
\[
\bigl\langle I_H(u+a\,e_{\mathrm{sig}}\mathbf 1[\cdot=\tau])_t,\ e_{\mathrm{prof}}\bigr\rangle
=
\bigl\langle I_H(u)_t,\ e_{\mathrm{prof}}\bigr\rangle,
\]
\[
\bigl\langle I_H(u+a\,e_{\mathrm{sig}}\mathbf 1[\cdot=\tau])_t,\ e_{\mathrm{sig}}\bigr\rangle
=
\bigl\langle I_H(u)_t,\ e_{\mathrm{sig}}\bigr\rangle
+
a\,\mathbf 1[t=\tau].
\]
\end{lemma}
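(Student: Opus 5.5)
The plan is to build $I_H$ as a single block whose forward branch is self-focused and reads the tail channel exactly, and whose feedback branch is focused on the immediate predecessor with a constant gain. The solve then becomes an approximate running sum of the tail values. Concretely, the feedback solve is used as a near-exact integrator
\[
r_t\approx \sum_{j\le t} g_j(u)\asymp (t+1)^{1-\beta}=(t+1)^\gamma .
\]
The result is written onto $e_{\mathrm{prof}}$, which is zero on input, so the residual add is the only source of that coordinate.

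For the forward branch, apply Lemma~\ref{lem:self_focus_forward} on the positional coordinate with some parameter $\mu$. One forward value coordinate is taken to be the exact read of $e_{\mathrm{tail}}$ given by Corollary~\ref{cor:exact_channel_read}, so that $f_t=\sum_{j\le t}\alpha^f_{t,j}g_j$. For the feedback branch, apply Lemma~\ref{lem:predecessor_focus}, which is realized on a second post-GELU positional coordinate, with parameter $\mu'$. Set the gain $\gamma_t\equiv\gamma_{\mathrm{fb}}:=\tanh(b^\gamma)$ close to $1$. Every weight involved depends only on $e_{\mathrm{pos}}$ and $e_{\mathrm{tail}}$.

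The output is taken on $e_{\mathrm{prof}}$ only, with gate $\equiv1$ and coefficient $\kappa>0$. This gives $r_t=\kappa s_t$ and leaves $e_{\mathrm{sig}}$, $e_{\mathrm{pos}}$, $e_{\mathrm{tail}}$ and $E_{\mathrm{carry}}$ exactly preserved. Signal-transparency then follows at once: perturbing along $e_{\mathrm{sig}}$ changes none of the quantities the block reads, and the signal channel passes through the residual untouched.

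For the bounds, all coefficients are nonnegative. Since $g_j>0$, the lower bound uses only the predecessor-chain paths:
\[
s_t\ge (1-\mu)\sum_{j\le t}\bigl(\gamma_{\mathrm{fb}}(1-\mu')\bigr)^{t-j} g_j .
\]
This is the hardest step. A damped chain with rate $q=\gamma_{\mathrm{fb}}(1-\mu')<1$ gives only a truncated sum of length about $1/(1-q)$, not the full sum, unless $1-q\lesssim 1/(L_H+1)$. That would make the constants depend on $H$ through the parameters, but not the bounds themselves. So the first choice is $\gamma_{\mathrm{fb}}$ and $\mu'$ with $1-q\le c/(L_H+1)$, which yields $q^{t-j}\ge e^{-c}$ for all $j\le t\le L_H$. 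Then
\[
s_t\gtrsim \sum_{j\le t}(j+1)^{-\beta}\asymp (t+1)^\gamma ,
\]
with constants independent of $H$.

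For the upper bound, the row sum of $B$ is at most $\gamma_{\mathrm{fb}}<1$. Here Lemma~\ref{lem:triangular_linf} is not enough, because it gives an $\ell_\infty$ bound of order $L_H$. I would instead compare with the undamped majorant: the resolvent entries satisfy $\Theta_{t,j}\le$ the number of weighted paths. The plan is to bound the off-predecessor mass by $\mu'\le c'/(L_H+1)^2$ so that non-chain paths contribute $O(1)$ in total. This gives $\Theta_{t,j}\le C$, and hence $s_t\le C\sum_{j\le t}f_j\le C'(t+1)^\gamma$, using $\sum_j\alpha^f_{k,j}g_j\le c_g^+\max$ together with self-focusing, for which $\mu\le 1/(L_H+1)$ suffices. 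If the path count for non-chain contributions proves delicate, the fallback is an induction $s_t\le A(t+1)^\gamma$. Its step is
\[
\gamma_{\mathrm{fb}}\bigl((1-\mu')A t^\gamma+\mu' A t^\gamma\bigr)+c(t+1)^{-\beta}\le A(t+1)^\gamma ,
\]
which holds for $A$ large because $(t+1)^\gamma-t^\gamma\asymp (t+1)^{-\beta}$. Finally, choose $\kappa$ to normalize the constants $c_r^\pm$.
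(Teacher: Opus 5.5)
Your proposal is correct. The block you build is the paper's block: a self-focused forward branch reading $e_{\mathrm{tail}}$ exactly, predecessor-focused feedback with constant gain near $1$, output on $e_{\mathrm{prof}}$ only, and transparency because nothing the block reads depends on $e_{\mathrm{sig}}$. The difference lies in how the two-sided profile bound is proved.

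\textbf{The paper's route.} The paper takes $\gamma_t\equiv\lambda_H=1-\frac{1}{4(L_H+1)}$ and the focusing parameter $\mu_H=\kappa_\mu(L_H+1)^{-3}$ in both branches. It compares the actual solve with the ideal recursion $\widetilde r_t=g_t+\lambda_H\widetilde r_{t-1}$, whose two-sided bound is immediate from $e^{-1/4}\le\lambda_H^{t-m}\le 1$. The additive error is at most $\|(I-B_H)^{-1}\|_{\infty\to\infty}\le 4(L_H+1)$ times $\|f-g\|_\infty+\|B_H-B_H^\ast\|\,\|\widetilde r\|_\infty$. This gives an error $O((L_H+1)^{\gamma-2})$, which is absorbed because $(t+1)^\gamma\ge 1$. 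So the paper does accept the order-$L_H$ resolvent bound you set aside, and pays for it with cubic focusing.

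\textbf{Your route.} You exploit nonnegativity of all coefficients instead.
\begin{itemize}
\item Your lower bound, using only the predecessor chain, needs just $1-\gamma_{\mathrm{fb}}(1-\mu')\lesssim (L_H+1)^{-1}$.
\item Your fallback induction, $s_t\le f_t+\gamma_{\mathrm{fb}}\max_{j<t}s_j$, gives the upper bound with no predecessor focusing at all. It only requires forward leakage $\mu\lesssim (L_H+1)^{-\beta}$.
\item Your primary path-counting claim $\Theta_{t,j}\le C$ also holds, already for $\mu'\le\frac{1}{2(L_H+1)}$. To see this, dominate $B$ entrywise by $P_0+E$, where $P_0$ is the unit shift, and write $(I-P_0-E)^{-1}=(I-UE)^{-1}U$ with $U$ the all-ones lower-triangular matrix. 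The max row sum of $UE$ is at most $(L_H+1)\mu'$.
\end{itemize}

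\textbf{Comparison.} Your argument needs far weaker focusing. It also shows that damping is present only to respect $|\gamma_t|<1$, not for the upper bound. The paper's argument obtains both bounds from a single additive comparison against an explicit ideal recursion, at the cost of sharper parameter choices.
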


\begin{proof}
Fix a small constant
\[
0<\kappa_\mu\le 1
\]
to be chosen later, and set
\[
\lambda_H:=1-\frac{1}{4(L_H+1)}\in(0,1),
\qquad
\mu_H:=\kappa_\mu (L_H+1)^{-3}.
\]

\paragraph{Step 1: choose the attention patterns.}
Use Lemma~\ref{lem:predecessor_focus} on the positional-control coordinate \(e_{\mathrm{pos}}\) with parameter \(\mu_H\).
This yields strict-past feedback attention satisfying
\[
\alpha^b_{t,t-1}\ge 1-\mu_H,
\qquad
\sum_{j=0}^{t-2}\alpha^b_{t,j}\le \mu_H.
\]
Use Lemma~\ref{lem:self_focus_forward} on the same positional-control coordinate, again with parameter \(\mu_H\), so that the forward row satisfies
\[
\alpha^f_{t,t}\ge 1-\mu_H,
\qquad
\sum_{j<t}\alpha^f_{t,j}\le \mu_H.
\]
Both \(\alpha^b\) and \(\alpha^f\) depend only on the positional stream.

\paragraph{Step 2: feed the tail channel into the solve.}
Read the tail input channel exactly using Corollary~\ref{cor:exact_channel_read}.
Choose two \(a\)-slots
\[
a_t^{(+)}=L\langle u_t,e_{\mathrm{tail}}\rangle,
\qquad
a_t^{(-)}=-L\langle u_t,e_{\mathrm{tail}}\rangle,
\]
for any fixed \(L>0\), and choose one dedicated transport value coordinate
\[
v_t^{\mathrm{tail}}
=
\frac1L\bigl(\bar a_t^{(+)}-\bar a_t^{(-)}\bigr)
=
\langle u_t,e_{\mathrm{tail}}\rangle
=
g_t(u).
\]

Choose the feedback gain constant
\[
\gamma_t\equiv \lambda_H.
\]
Let \(f_t(u)\) denote the forward signal entering the scalar solve on that dedicated coordinate:
\[
f_t(u)=\sum_{j\le t}\alpha^f_{t,j}(u)\,g_j(u).
\]
Let \(s_t(u)\) be the corresponding solve output:
\[
s_0(u)=f_0(u),
\qquad
s_t(u)=f_t(u)+\lambda_H\sum_{j<t}\alpha^b_{t,j}(u)\,s_j(u),
\qquad t\ge 1.
\]

Choose the gate on that dedicated coordinate to be the constant \(1\), and choose the output projection so that this solve output is written onto the prescribed profile direction \(e_{\mathrm{prof}}\), with all output columns on
\[
e_{\mathrm{sig}},\ e_{\mathrm{pos}},\ e_{\mathrm{tail}},\ E_{\mathrm{carry}}
\]
set to zero.

Because the input profile channel is identically zero on \(\Kset_H\), the residual formula gives
\[
\bigl\langle I_H(u)_t,e_{\mathrm{prof}}\bigr\rangle
=
\langle u_t,e_{\mathrm{prof}}\rangle+s_t(u)
=
s_t(u).
\]
Hence
\[
r_t(u):=\bigl\langle I_H(u)_t,e_{\mathrm{prof}}\bigr\rangle=s_t(u).
\]

The \(e_{\mathrm{sig}}\)-, \(e_{\mathrm{pos}}\)-, \(e_{\mathrm{tail}}\)-, and \(E_{\mathrm{carry}}\)-channels are preserved exactly, because the output projection vanishes on those directions.

\paragraph{Step 3: compare with the ideal predecessor recursion.}
Define the ideal predecessor recursion
\[
\widetilde r_0(u):=g_0(u),
\qquad
\widetilde r_t(u):=g_t(u)+\lambda_H\,\widetilde r_{t-1}(u),
\qquad t\ge 1,
\]
so that
\[
\widetilde r_t(u)=\sum_{m=0}^{t}\lambda_H^{\,t-m}g_m(u).
\]
Since \(0\le m\le t\le L_H\) and \(\lambda_H=1-\frac{1}{4(L_H+1)}\),
\[
e^{-1/4}\le \lambda_H^{\,t-m}\le 1.
\]
Therefore
\[
e^{-1/4}\sum_{m=0}^{t}g_m(u)
\le
\widetilde r_t(u)
\le
\sum_{m=0}^{t}g_m(u).
\]
Using
\[
c_g^-(m+1)^{-\beta}\le g_m(u)\le c_g^+(m+1)^{-\beta}
\]
and
\[
\sum_{m=0}^{t}(m+1)^{-\beta}\asymp (t+1)^{1-\beta}=(t+1)^\gamma,
\]
we obtain constants \(\widetilde c_r^-,\widetilde c_r^+>0\), independent of \(H\), such that
\[
\widetilde c_r^-(t+1)^\gamma
\le
\widetilde r_t(u)
\le
\widetilde c_r^+(t+1)^\gamma.
\]

\paragraph{Step 4: control the perturbation error.}
Let \(B_H(u)\) be the actual feedback matrix on the dedicated profile coordinate and \(B_H^\ast\) the ideal predecessor matrix
\[
(B_H^\ast)_{t,t-1}=\lambda_H,
\qquad
(B_H^\ast)_{t,j}=0\ \ (j<t-1).
\]
By the predecessor-focusing estimate,
\[
\sup_t\sum_{j<t}|(B_H(u)-B_H^\ast)_{t,j}|
\le
C\,\mu_H
\]
for an absolute constant \(C\).

Also,
\[
f_t(u)-g_t(u)
=
\sum_{j\le t}\alpha^f_{t,j}(u)\bigl(g_j(u)-g_t(u)\bigr)
=
\sum_{j<t}\alpha^f_{t,j}(u)\bigl(g_j(u)-g_t(u)\bigr),
\]
hence
\[
|f_t(u)-g_t(u)|
\le
2c_g^+\sum_{j<t}\alpha^f_{t,j}(u)
\le
2c_g^+\mu_H.
\]
Therefore
\[
\|f(u)-g(u)\|_\infty\le 2c_g^+\mu_H.
\]

Now
\[
r(u)=(I-B_H(u))^{-1}f(u),
\qquad
\widetilde r(u)=(I-B_H^\ast)^{-1}g(u),
\]
so
\[
r(u)-\widetilde r(u)
=
(I-B_H(u))^{-1}\Bigl((f(u)-g(u))+(B_H(u)-B_H^\ast)\widetilde r(u)\Bigr).
\]
Since the row sum of \(B_H(u)\) is at most \(\lambda_H<1\),
\[
\|(I-B_H(u))^{-1}\|_{\infty\to\infty}
\le
\frac{1}{1-\lambda_H}
=
4(L_H+1).
\]
Also
\[
\|\widetilde r(u)\|_\infty\lesssim (L_H+1)^\gamma.
\]
Therefore there exists a constant \(C_\ast>0\), independent of \(H\), such that
\[
\|r(u)-\widetilde r(u)\|_\infty
\le
C_\ast (L_H+1)^{\gamma+1}\mu_H
=
C_\ast \kappa_\mu (L_H+1)^{\gamma-2}.
\]

Since \(L_H=\tau_{\max}+H\ge \tau_{\max}+1\), we have
\[
(L_H+1)^{\gamma-2}\le (\tau_{\max}+2)^{\gamma-2}.
\]
Choose \(\kappa_\mu>0\) so small that
\[
C_\ast \kappa_\mu (\tau_{\max}+2)^{\gamma-2}
\le
\frac12 \widetilde c_r^-.
\]
Then uniformly in \(H\),
\[
\|r(u)-\widetilde r(u)\|_\infty
\le
\frac12 \widetilde c_r^-.
\]

Hence for every \(0\le t\le L_H\),
\[
r_t(u)
\ge
\widetilde r_t(u)-\frac12\widetilde c_r^-
\ge
\widetilde c_r^-(t+1)^\gamma-\frac12\widetilde c_r^-.
\]
Since \((t+1)^\gamma\ge 1\),
\[
\widetilde c_r^-(t+1)^\gamma-\frac12\widetilde c_r^-
\ge
\frac12 \widetilde c_r^-(t+1)^\gamma.
\]
So
\[
r_t(u)\ge \frac12 \widetilde c_r^-(t+1)^\gamma.
\]

Similarly,
\[
r_t(u)
\le
\widetilde r_t(u)+\frac12\widetilde c_r^-
\le
\widetilde c_r^+(t+1)^\gamma+\frac12\widetilde c_r^-.
\]
Again using \((t+1)^\gamma\ge 1\),
\[
r_t(u)\le \Bigl(\widetilde c_r^+ + \frac12\widetilde c_r^-\Bigr)(t+1)^\gamma.
\]

Thus the stated two-sided profile bound holds with
\[
c_r^-:=\frac12\widetilde c_r^-,
\qquad
c_r^+:=\widetilde c_r^+ + \frac12\widetilde c_r^-.
\]

\paragraph{Step 5: verify signal-transparency.}
Let
\[
u^{(a,\tau)}:=u+a\,e_{\mathrm{sig}}\mathbf 1[\cdot=\tau].
\]
Since \(e_{\mathrm{sig}}\perp e_{\mathrm{pos}},e_{\mathrm{tail}},e_{\mathrm{prof}}\), one has
\[
\langle u^{(a,\tau)}_t,e_{\mathrm{pos}}\rangle=\langle u_t,e_{\mathrm{pos}}\rangle,
\qquad
\langle u^{(a,\tau)}_t,e_{\mathrm{tail}}\rangle=\langle u_t,e_{\mathrm{tail}}\rangle,
\qquad
\langle u^{(a,\tau)}_t,e_{\mathrm{prof}}\rangle=\langle u_t,e_{\mathrm{prof}}\rangle=0.
\]
Therefore the feedback weights \(\alpha^b\) are unchanged, since they depend only on the positional stream. The forward weights \(\alpha^f\) are also unchanged for the same reason. Finally, the forward values \(g_t\) are unchanged, since they are exact reads of the tail channel.
Hence the actual forward signal \(f_t\), the actual feedback matrix \(B_H\), and therefore the solve output \(r_t\) are all unchanged under perturbations along \(e_{\mathrm{sig}}\):
\[
r_t(u^{(a,\tau)})=r_t(u).
\]
By construction, the output projection vanishes on the \(e_{\mathrm{sig}}\)-channel, so that channel passes through exactly:
\[
\bigl\langle I_H(u^{(a,\tau)})_t,e_{\mathrm{sig}}\bigr\rangle
=
\bigl\langle I_H(u)_t,e_{\mathrm{sig}}\bigr\rangle
+
a\,\mathbf 1[t=\tau].
\]
The \(e_{\mathrm{pos}}\)-coordinate is preserved exactly as well. This proves the stated signal-transparency property.
\end{proof}

\begin{corollary}[Transparent power-profile block]
\label{cor:preparatory_power_profile}
Fix \(\beta\in(0,1)\), set \(\gamma:=1-\beta\), fix \(H\ge 1\), and let \(L_H:=\tau_{\max}+H\).
Let \(\Kset_H\subset(\mathbb R^m)^{L_H+1}\) be the compact input set under consideration.

Assume \(\Kset_H\) carries orthonormal directions
\[
e_{\mathrm{sig}},\ e_{\mathrm{pos}}\in\mathbb R^m
\]
such that:
\begin{enumerate}[label=(\roman*), leftmargin=*, nosep]
\item the original signal channel is
\[
u\mapsto \langle u_t,e_{\mathrm{sig}}\rangle;
\]
\item the positional-control coordinate is
\[
u\mapsto \langle u_t,e_{\mathrm{pos}}\rangle,
\]
with ordered positive ranges
\[
I_0<I_1<\cdots<I_{L_H}\subset(0,\infty).
\]
\end{enumerate}

Fix additional orthonormal directions
\[
e_{\mathrm{prof}},\ e_{\mathrm{tail}},\ e_{\mathrm{aux}},\ e_{\mathrm{src}},\ e_{\mathrm{tgt}}\in\mathbb R^m
\]
orthogonal to both \(e_{\mathrm{sig}}\) and \(e_{\mathrm{pos}}\).

Then there exists a constant-depth LN-free Sessa network
\[
Q_H:(\mathbb R^m)^{L_H+1}\to(\mathbb R^m)^{L_H+1}
\]

such that the original signal channel is preserved exactly:
\[
\langle Q_H(u)_t,e_{\mathrm{sig}}\rangle
=
\langle u_t,e_{\mathrm{sig}}\rangle
\qquad (0\le t\le L_H,\ u\in\Kset_H);
\]

the positional-control coordinate is preserved exactly:
\[
\langle Q_H(u)_t,e_{\mathrm{pos}}\rangle
=
\langle u_t,e_{\mathrm{pos}}\rangle
\qquad (0\le t\le L_H,\ u\in\Kset_H);
\]

the profile channel on the prescribed direction \(e_{\mathrm{prof}}\) satisfies the uniform two-sided bound
\[
c_r^-(t+1)^\gamma
\le
\langle Q_H(u)_t,e_{\mathrm{prof}}\rangle
\le
c_r^+(t+1)^\gamma,
\qquad 0\le t\le L_H,\ u\in\Kset_H,
\]
with constants independent of \(H\); and \(Q_H\) is signal-transparent along \(e_{\mathrm{sig}}\) with respect to the control pair
\((e_{\mathrm{pos}},e_{\mathrm{prof}})\): for every \(u\in\Kset_H\), every \(\tau\in\{0,\dots,L_H\}\), and every scalar \(a\in\mathbb R\),
\[
\bigl\langle Q_H(u+a\,e_{\mathrm{sig}}\mathbf 1[\cdot=\tau])_t,\ e_{\mathrm{pos}}\bigr\rangle
=
\bigl\langle Q_H(u)_t,\ e_{\mathrm{pos}}\bigr\rangle,
\qquad
0\le t\le L_H,
\]
\[
\bigl\langle Q_H(u+a\,e_{\mathrm{sig}}\mathbf 1[\cdot=\tau])_t,\ e_{\mathrm{prof}}\bigr\rangle
=
\bigl\langle Q_H(u)_t,\ e_{\mathrm{prof}}\bigr\rangle,
\qquad
0\le t\le L_H,
\]
and
\[
\bigl\langle Q_H(u+a\,e_{\mathrm{sig}}\mathbf 1[\cdot=\tau])_t,\ e_{\mathrm{sig}}\bigr\rangle
=
\bigl\langle Q_H(u)_t,\ e_{\mathrm{sig}}\bigr\rangle
+
a\,\mathbf 1[t=\tau],
\qquad
0\le t\le L_H.
\]
\end{corollary}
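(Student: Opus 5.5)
The plan is to build \(Q_H\) as a constant-depth composition of the transparent blocks already available, in the order
\[
Q_H := I_H\circ Z_{L_H,e_{\mathrm{prof}}}\circ Z^{\mathrm{scr}}\circ T_H^{\mathrm{tail}}\circ Z^{\mathrm{scr}}_0,
\]
where each stage preserves \(e_{\mathrm{sig}}\) and \(e_{\mathrm{pos}}\) exactly and is signal-transparent.

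\emph{Stage 0: clean the scratch channels.} First apply Lemma~\ref{lem:multi_channel_reset} to reset \(e_{\mathrm{tail}},e_{\mathrm{aux}},e_{\mathrm{src}},e_{\mathrm{tgt}}\) to zero, with \(E_{\mathrm{keep}}\) containing \(e_{\mathrm{pos}}\). This reset is not strictly needed, because the tail writer overwrites its channel. Still, it makes the scratch directions deterministic, so the hypotheses of later lemmas do not depend on unknown input content there.

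\emph{Stage 1: build the tail channel.} Apply Lemma~\ref{lem:source0_tail_channel} to the image set, with the six prescribed directions \(e_{\mathrm{sig}},e_{\mathrm{pos}},e_{\mathrm{tail}},e_{\mathrm{aux}},e_{\mathrm{src}},e_{\mathrm{tgt}}\) and with \(E_{\mathrm{carry}}\) containing \(e_{\mathrm{prof}}\). The ordered positional ranges are unchanged because \(e_{\mathrm{pos}}\) is preserved. This stage yields \(g_t\asymp (t+1)^{-\beta}\) on \(e_{\mathrm{tail}}\), with \(H\)-independent constants, together with exact preservation and transparency.

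\emph{Stage 2: zero the profile channel.} Before integrating, apply the zero-writer Lemma~\ref{lem:profile_zero_writer} with \(e_{\mathrm{zero}}=e_{\mathrm{prof}}\) and \(E_{\mathrm{carry}}\ni e_{\mathrm{tail}}\). This makes hypothesis (iii) of Lemma~\ref{lem:damped_predecessor_integrator} hold exactly on the current compact set. One point needs checking here: the carry subspace used in the tail lemma must contain \(e_{\mathrm{prof}}\), yet that lemma only preserves carry channels, so \(e_{\mathrm{prof}}\) could hold arbitrary input data. That is exactly why the zero-writer is placed after the tail stage and not before it.

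\emph{Stage 3: integrate.} Apply Lemma~\ref{lem:damped_predecessor_integrator} with the directions \(e_{\mathrm{sig}},e_{\mathrm{pos}},e_{\mathrm{tail}},e_{\mathrm{prof}}\). Its hypotheses are now satisfied: ordered positions, the two-sided tail bound, and a zero profile channel. It outputs \(r_t\asymp(t+1)^\gamma\) on \(e_{\mathrm{prof}}\), with constants independent of \(H\), while preserving \(e_{\mathrm{sig}}\) and \(e_{\mathrm{pos}}\).

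\emph{Transparency of the composition.} Signal-transparency of \(Q_H\) with respect to \((e_{\mathrm{pos}},e_{\mathrm{prof}})\) follows by chaining. A perturbation \(a\,e_{\mathrm{sig}}\mathbf 1[\cdot=\tau]\) passes unchanged through each stage: the reset and zero-writer are affine and do not touch \(e_{\mathrm{sig}}\). In the tail and integrator stages, every control output depends only on non-signal channels, which themselves are unchanged under the perturbation by the transparency of earlier stages. Inductively, the perturbed trajectory differs from the unperturbed one only by the same \(a\,e_{\mathrm{sig}}\) at position \(\tau\).

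\emph{Main obstacle.} The delicate step is this bookkeeping. Each lemma's transparency is stated relative to a particular control pair, and it is applied on image sets rather than on the original \(\Kset_H\). So I must verify, stage by stage, that the perturbed inputs stay inside the domains where the constructions are defined. The constructions are explicit and valid on all inputs whose positional coordinate is unchanged, which makes this routine but tedious. I would handle it by noting that all blocks are globally defined and that their control outputs depend only on \(e_{\mathrm{pos}}\) and on previously written channels.
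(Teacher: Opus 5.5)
Your construction is correct and is essentially the paper's. It uses the same four ingredients: the profile zero-writer, the source-$0$ tail network with $e_{\mathrm{prof}}$ carried, a scratch reset, and the damped predecessor integrator. Transparency comes, as in the paper, from chaining the stages and noting that the integrator reads only the $e_{\mathrm{pos}}$- and $e_{\mathrm{tail}}$-streams.

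Three small corrections:
\begin{itemize}
\item The paper applies the zero-writer first, and that works equally well because the tail network preserves carried channels exactly. So your claim that the zero-writer must come after the tail stage is unfounded; it is just one valid order.
\item Your initial reset is unnecessary.
\item The middle $Z^{\mathrm{scr}}$ in your displayed composition is never described. It must not reset $e_{\mathrm{tail}}$: either reset only $e_{\mathrm{aux}},e_{\mathrm{src}},e_{\mathrm{tgt}}$ with $e_{\mathrm{pos}},e_{\mathrm{tail}}\in E_{\mathrm{keep}}$, as the paper does, or drop it. Otherwise hypothesis (ii) of Lemma~\ref{lem:damped_predecessor_integrator} fails.
\end{itemize}
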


\begin{proof}
The auxiliary orthonormal directions
\[
e_{\mathrm{prof}},\ e_{\mathrm{tail}},\ e_{\mathrm{aux}},\ e_{\mathrm{src}},\ e_{\mathrm{tgt}}
\]
are fixed by hypothesis and are orthogonal to both \(e_{\mathrm{sig}}\) and \(e_{\mathrm{pos}}\).

\paragraph{Step 1: clear the profile channel.}
Apply Lemma~\ref{lem:profile_zero_writer} with
\[
e_{\mathrm{zero}}:=e_{\mathrm{prof}},
\qquad
E_{\mathrm{carry}}:=\{0\}.
\]
This yields a forward-only block
\[
Z_H^{\mathrm{prof}}:(\mathbb R^m)^{L_H+1}\to(\mathbb R^m)^{L_H+1}
\]
such that
\[
\langle Z_H^{\mathrm{prof}}(u)_t,e_{\mathrm{sig}}\rangle=\langle u_t,e_{\mathrm{sig}}\rangle,
\qquad
\langle Z_H^{\mathrm{prof}}(u)_t,e_{\mathrm{pos}}\rangle=\langle u_t,e_{\mathrm{pos}}\rangle,
\qquad
\langle Z_H^{\mathrm{prof}}(u)_t,e_{\mathrm{prof}}\rangle=0.
\]
Moreover, \(Z_H^{\mathrm{prof}}\) is signal-transparent along \(e_{\mathrm{sig}}\) with respect to \((e_{\mathrm{pos}},e_{\mathrm{prof}})\).

Let
\[
\Kset_H^{(0)}:=Z_H^{\mathrm{prof}}(\Kset_H).
\]

\paragraph{Step 2: build the tail channel.}
Apply Lemma~\ref{lem:source0_tail_channel} to \(\Kset_H^{(0)}\), with
\[
E_{\mathrm{carry}}:=\operatorname{span}\{e_{\mathrm{prof}}\}.
\]
This yields a constant-depth network
\[
T_H^{\mathrm{tail}}:(\mathbb R^m)^{L_H+1}\to(\mathbb R^m)^{L_H+1}
\]
such that
\[
\langle T_H^{\mathrm{tail}}(v)_t,e_{\mathrm{sig}}\rangle=\langle v_t,e_{\mathrm{sig}}\rangle,
\qquad
\langle T_H^{\mathrm{tail}}(v)_t,e_{\mathrm{pos}}\rangle=\langle v_t,e_{\mathrm{pos}}\rangle,
\qquad
\langle T_H^{\mathrm{tail}}(v)_t,e_{\mathrm{prof}}\rangle=\langle v_t,e_{\mathrm{prof}}\rangle,
\]
and the tail channel
\[
g_t(v):=\langle T_H^{\mathrm{tail}}(v)_t,e_{\mathrm{tail}}\rangle
\]
satisfies
\[
c_g^-(t+1)^{-\beta}\le g_t(v)\le c_g^+(t+1)^{-\beta}.
\]
Because the carried profile channel is identically zero on \(\Kset_H^{(0)}\) and is preserved exactly by \(T_H^{\mathrm{tail}}\), one still has
\[
\langle T_H^{\mathrm{tail}}(v)_t,e_{\mathrm{prof}}\rangle=0
\qquad
\forall\,v\in \Kset_H^{(0)}.
\]

Let
\[
\Kset_H^{(1)}:=T_H^{\mathrm{tail}}(\Kset_H^{(0)}).
\]

\paragraph{Step 3: clear the scratch channels.}
Apply Lemma~\ref{lem:multi_channel_reset} to the scratch directions
\[
e_{\mathrm{aux}},\ e_{\mathrm{src}},\ e_{\mathrm{tgt}},
\]
with
\[
E_{\mathrm{keep}}:=\operatorname{span}\{e_{\mathrm{pos}},e_{\mathrm{tail}},e_{\mathrm{prof}}\}.
\]
This yields a forward-only concrete block
\[
Z_H^{\mathrm{scr}}:(\mathbb R^m)^{L_H+1}\to(\mathbb R^m)^{L_H+1}
\]
such that it preserves
\[
e_{\mathrm{sig}},\ e_{\mathrm{pos}},\ e_{\mathrm{tail}},\ e_{\mathrm{prof}}
\]
exactly and writes
\[
\langle Z_H^{\mathrm{scr}}(w)_t,e_{\mathrm{aux}}\rangle
=
\langle Z_H^{\mathrm{scr}}(w)_t,e_{\mathrm{src}}\rangle
=
\langle Z_H^{\mathrm{scr}}(w)_t,e_{\mathrm{tgt}}\rangle
=
0.
\]
Since \(Z_H^{\mathrm{scr}}\) preserves the tail channel exactly, the same bounds
\[
c_g^-(t+1)^{-\beta}
\le
\langle Z_H^{\mathrm{scr}}(w)_t,e_{\mathrm{tail}}\rangle
\le
c_g^+(t+1)^{-\beta}
\]
hold on the image.

Let
\[
\widetilde{\Kset}_H:=Z_H^{\mathrm{scr}}(\Kset_H^{(1)}).
\]
On \(\widetilde{\Kset}_H\) we therefore retain the same ordered positional ranges as on \(\Kset_H\), the same tail bounds \(c_g^\pm(t+1)^{-\beta}\), an identically zero profile channel, and identically zero scratch channels \(e_{\mathrm{aux}},e_{\mathrm{src}},e_{\mathrm{tgt}}\).

\paragraph{Step 4: integrate the tail channel.}
Apply Lemma~\ref{lem:damped_predecessor_integrator} to \(\widetilde{\Kset}_H\), with
\[
E_{\mathrm{carry}}:=\operatorname{span}\{e_{\mathrm{aux}},e_{\mathrm{src}},e_{\mathrm{tgt}}\}.
\]
Because these carried channels are already identically zero on \(\widetilde{\Kset}_H\), this application is fully legitimate and keeps them zero.
We obtain a single LN-free Sessa block
\[
I_H:(\mathbb R^m)^{L_H+1}\to(\mathbb R^m)^{L_H+1}
\]
such that
\[
\langle I_H(w)_t,e_{\mathrm{sig}}\rangle=\langle w_t,e_{\mathrm{sig}}\rangle,
\qquad
\langle I_H(w)_t,e_{\mathrm{pos}}\rangle=\langle w_t,e_{\mathrm{pos}}\rangle,
\qquad
\langle I_H(w)_t,e_{\mathrm{tail}}\rangle=\langle w_t,e_{\mathrm{tail}}\rangle,
\]
and
\[
c_r^-(t+1)^\gamma
\le
\langle I_H(w)_t,e_{\mathrm{prof}}\rangle
\le
c_r^+(t+1)^\gamma.
\]

\paragraph{Step 5: define the preparatory network.}
Set
\[
Q_H:=I_H\circ Z_H^{\mathrm{scr}}\circ T_H^{\mathrm{tail}}\circ Z_H^{\mathrm{prof}}.
\]
The exact preservation and two-sided profile bounds follow immediately from the four stages above.

\paragraph{Step 6: verify signal-transparency.}
Fix \(u\in\Kset_H\), \(\tau\in\{0,\dots,L_H\}\), and \(a\in\mathbb R\). Define
\[
u^{(a,\tau)}:=u+a\,e_{\mathrm{sig}}\mathbf 1[\cdot=\tau].
\]

By signal-transparency of \(Z_H^{\mathrm{prof}}\),
\[
Z_H^{\mathrm{prof}}(u^{(a,\tau)})
=
Z_H^{\mathrm{prof}}(u)+a\,e_{\mathrm{sig}}\mathbf 1[\cdot=\tau]
\]
on the signal channel, while the \(e_{\mathrm{pos}}\)- and \(e_{\mathrm{prof}}\)-channels are unchanged.

Applying signal-transparency of \(T_H^{\mathrm{tail}}\) then gives
\[
T_H^{\mathrm{tail}}(Z_H^{\mathrm{prof}}(u^{(a,\tau)}))
=
T_H^{\mathrm{tail}}(Z_H^{\mathrm{prof}}(u))
+
a\,e_{\mathrm{sig}}\mathbf 1[\cdot=\tau]
\]
on the signal channel, while the \(e_{\mathrm{pos}}\)- and \(e_{\mathrm{tail}}\)-channels are unchanged and the \(e_{\mathrm{prof}}\)-channel remains zero.

Now \(Z_H^{\mathrm{scr}}\) preserves \(e_{\mathrm{sig}},e_{\mathrm{pos}},e_{\mathrm{tail}},e_{\mathrm{prof}}\) exactly, so
\[
Z_H^{\mathrm{scr}}(T_H^{\mathrm{tail}}(Z_H^{\mathrm{prof}}(u^{(a,\tau)})))
=
Z_H^{\mathrm{scr}}(T_H^{\mathrm{tail}}(Z_H^{\mathrm{prof}}(u)))
+
a\,e_{\mathrm{sig}}\mathbf 1[\cdot=\tau]
\]
on the signal channel, and the \(e_{\mathrm{pos}}\)-, \(e_{\mathrm{tail}}\)-, and \(e_{\mathrm{prof}}\)-channels are unchanged.

Thus the two inputs fed into \(I_H\) differ only on the \(e_{\mathrm{sig}}\)-channel
and have the same \(e_{\mathrm{pos}}\)-, \(e_{\mathrm{tail}}\)-, and
\(e_{\mathrm{prof}}\)-streams.
In the concrete construction of Lemma~\ref{lem:damped_predecessor_integrator},
the feedback weights \(\alpha^b\) and forward weights \(\alpha^f\) depend only on
the positional stream, while the forward values \(g_t\) are exact reads of the
\(e_{\mathrm{tail}}\)-channel. Hence the forward signals \(f_t\), the feedback
matrices \(B_H\), and the solve outputs \(r_t\) are identical for the two inputs.
Moreover, the output projection of \(I_H\) vanishes on the \(e_{\mathrm{sig}}\)-,
\(e_{\mathrm{pos}}\)-, and \(e_{\mathrm{tail}}\)-channels, so the
\(e_{\mathrm{sig}}\)-channel passes through exactly and the
\(e_{\mathrm{pos}}\)-coordinate is unchanged.
Therefore
\[
\langle Q_H(u^{(a,\tau)})_t,e_{\mathrm{pos}}\rangle
=
\langle Q_H(u)_t,e_{\mathrm{pos}}\rangle,
\]
\[
\langle Q_H(u^{(a,\tau)})_t,e_{\mathrm{prof}}\rangle
=
\langle Q_H(u)_t,e_{\mathrm{prof}}\rangle,
\]
\[
\langle Q_H(u^{(a,\tau)})_t,e_{\mathrm{sig}}\rangle
=
\langle Q_H(u)_t,e_{\mathrm{sig}}\rangle
+
a\,\mathbf 1[t=\tau].
\]
This proves the stated signal-transparency property.
\end{proof}

\begin{lemma}[Profile-compensated macro-layer]
\label{lem:profile_weighted_macro}
Fix \(\beta\in(0,1)\), set \(\gamma:=1-\beta\), and fix \(T\ge 0\).
Let \(\Kset\subset(\mathbb R^m)^{T+1}\) be compact.
Assume orthonormal directions
\[
e_{\mathrm{sig}},\ e_{\mathrm{pos}},\ e_{\mathrm{prof}},\ e_{\mathrm{src}}\in\mathbb R^m
\]
and a subspace \(E_{\mathrm{carry}}\subset\mathbb R^m\) orthogonal to all four, such that:

\begin{enumerate}[label=(\roman*), leftmargin=*, nosep]
\item the positional-control ranges
\[
I_t:=\{\langle u_t,e_{\mathrm{pos}}\rangle:\ u\in\Kset\},
\qquad 0\le t\le T,
\]
are compact and strictly ordered:
\[
I_0<I_1<\cdots<I_T\subset(0,\infty);
\]

\item the profile channel
\[
r_t(u):=\langle u_t,e_{\mathrm{prof}}\rangle
\]
satisfies
\[
c_r^-(t+1)^\gamma\le r_t(u)\le c_r^+(t+1)^\gamma,
\qquad
0\le t\le T,\ \ u\in\Kset.
\]
\end{enumerate}

Then there exists a constant-depth LN-free Sessa macro-layer
\[
M_T:(\mathbb R^m)^{T+1}\to(\mathbb R^m)^{T+1}
\]

such that the \(e_{\mathrm{pos}}\)-channel, the \(e_{\mathrm{prof}}\)-channel, and every channel in \(E_{\mathrm{carry}}\) are preserved exactly, and
\(M_T\) has signal-blind exact scalar transport along \(e_{\mathrm{sig}}\) over
\[
E_{\mathrm{ctrl}}:=\operatorname{span}\{e_{\mathrm{pos}},e_{\mathrm{prof}}\}\oplus E_{\mathrm{carry}},
\]
with kernel
\[
\mathcal T_{M_T}^u(i,j)
=
D_{\mathrm{mac}}^u(i)\,\mathbf 1[i=j]
+
K_{\mathrm{mac}}^u(i,j)\,\mathbf 1[j<i];
\]

There exist constants
\[
1\le d^-_{\mathrm{mac}}\le d^+_{\mathrm{mac}}<\infty,
\qquad
0<a^-_{\mathrm{mac}}\le a^+_{\mathrm{mac}}<\infty,
\]
depending only on \((\beta,c_r^-,c_r^+)\), but independent of \(T\), such that
\[
d^-_{\mathrm{mac}}
\le
D_{\mathrm{mac}}^u(i)
\le
d^+_{\mathrm{mac}},
\qquad 0\le i\le T,
\]
and
\[
a^-_{\mathrm{mac}}(i+1)^{-\beta}
\le
K_{\mathrm{mac}}^u(i,j)
\le
a^+_{\mathrm{mac}}(i+1)^{-\beta},
\qquad 0\le j<i\le T.
\]
In particular,
\[
K_{\mathrm{mac}}^u(i,j)\le a^+_{\mathrm{mac}}(i-j+1)^{-\beta}.
\]

Consequently,
\[
e_{\mathrm{sig}}^\top
\frac{\partial M_T(u)_i}{\partial u_j}
e_{\mathrm{sig}}
=
D_{\mathrm{mac}}^u(i)\,\mathbf 1[i=j]
+
K_{\mathrm{mac}}^u(i,j)\,\mathbf 1[j<i].
\]
\end{lemma}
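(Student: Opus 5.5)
We would build \(M_T\) as a composition of three blocks on the ordered positional control \(e_{\mathrm{pos}}\): \(M_T=N_T\circ R_T\circ P_T\). The target kernel \((i+1)^{-\beta}\) is obtained by compensation. The uniform strict-past feedback resolvent from the proof of Lemma~\ref{lem:active_diffusive_factorized_transport} has the factorized form \(\Theta_{i,j}\asymp(j+1)^{-\gamma}(i+1)^{-\beta}\), and the profile supplies \(r_j\asymp(j+1)^{\gamma}\). Multiplying the source by \(r_j\) therefore cancels the \(j\)-dependence and leaves \(\Theta_{i,j}r_j\asymp(i+1)^{-\beta}\) uniformly in \(j<i\). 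The only nuisance is the diagonal term \(\Theta_{i,i}r_i=r_i\), which is unbounded, so a third block subtracts it.

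\emph{Block \(P_T\) (profile multiplier into scratch).} Switch feedback off and use Lemma~\ref{lem:self_focus_forward} with a parameter \(\mu_T\) to be fixed below. Take two value coordinates: the constant \(1\), and an exact read of \(e_{\mathrm{prof}}\) via Corollary~\ref{cor:exact_channel_read}. Use gates \(\langle u_t,e_{\mathrm{src}}\rangle\) and \(\langle u_t,e_{\mathrm{sig}}\rangle\), with output coefficients \((-1,+1)\) on \(e_{\mathrm{src}}\), exactly as in the local multiplier. This overwrites the scratch channel by \(\langle w_i,e_{\mathrm{src}}\rangle=\rho_i x_i\), where \(\rho_i:=\sum_{j\le i}\alpha^f_{i,j}r_j\) and \(x_i=\langle u_i,e_{\mathrm{sig}}\rangle\). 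Self-focusing gives \(|\rho_i-r_i|\le 2c_r^+(T+1)^\gamma\mu_T\). All other channels are preserved exactly.

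\emph{Block \(R_T\) (diffusive transport).} This is the Step~2 construction of Lemma~\ref{lem:active_diffusive_factorized_transport}. Use zero feedback queries and keys, so the feedback row is exactly uniform, and constant gain \(\gamma\). The forward branch is self-focused, and the forward value is an exact read of \(e_{\mathrm{src}}\). The solve output \(s=\Theta f\) is added to \(e_{\mathrm{sig}}\).

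\emph{Block \(N_T\) (diagonal cancellation).} This is again forward-only and self-focused. Its value is an exact read of \(e_{\mathrm{src}}\), which both preceding blocks leave untouched; its gate is \(1\), and it adds \(-\sum_{j\le i}\alpha'^f_{i,j}\rho_j x_j\) to \(e_{\mathrm{sig}}\).

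Composing these, the signal output becomes
\[
x_i+\sum_{j\le i}\Bigl(\sum_{k=j}^{i}\Theta_{i,k}\alpha^f_{k,j}-\alpha'^f_{i,j}\Bigr)\rho_j x_j .
\]
This formula is exact and signal-blind: every attention weight depends only on the positional stream, and \(\rho\) depends only on positions and the profile. Hence Lemma~\ref{lem:signal_blind_transport_calculus}(i), applied on signal-fiber saturations as in Corollary~\ref{cor:concrete_blocks_signal_fiber_stability}, yields the Jacobian identity. The diagonal coefficient is
\[
D(i)=1+\bigl(\alpha^f_{i,i}-\alpha'^f_{i,i}\bigr)\rho_i,
\]
and its deviation from \(1\) is at most \(2\mu_T\rho_i\lesssim\mu_T(T+1)^\gamma\). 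For \(j<i\), the off-diagonal coefficient is
\[
K(i,j)=\rho_j\Bigl(\Theta_{i,j}\alpha^f_{j,j}+\sum_{k>j}\Theta_{i,k}\alpha^f_{k,j}-\alpha'^f_{i,j}\Bigr).
\]
Its main term \(\Theta_{i,j}\rho_j\) is \(\asymp(i+1)^{-\beta}\) by the factorized \(\Theta\) bounds and \(\rho_j\asymp(j+1)^\gamma\). The two error terms are each bounded by \(\mu_T(T+1)^\gamma\bigl(C_\Sigma+1\bigr)c_r^+\).

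The step we expect to be the main obstacle is making these leakage errors absorbable uniformly in \(T\). The lower bound \((i+1)^{-\beta}\ge(T+1)^{-\beta}\) means the errors must be small relative to \((T+1)^{-\beta}\). We would choose
\[
\mu_T:=\kappa\,(T+1)^{-2},
\]
with \(\kappa\) depending only on \((\beta,c_r^\pm)\). This gives error \(\lesssim\kappa(T+1)^{\gamma-2}\le\kappa(T+1)^{-\beta}\), which is absorbable into half the lower constant. The same choice keeps \(D(i)\in[1,2]\). If a strict lower bound \(D\ge1\) is needed, we would note that \(\alpha^f\) and \(\alpha'^f\) can be taken to be the same row, so the diagonal correction vanishes exactly and \(D\equiv1\). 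The upper bound on \(K\) follows in the same way, and \((i+1)^{-\beta}\le(i-j+1)^{-\beta}\) gives the final remark. Finally, \(e_{\mathrm{pos}}\), \(e_{\mathrm{prof}}\) and \(E_{\mathrm{carry}}\) are preserved exactly because every output projection is supported on \(e_{\mathrm{sig}}\) or \(e_{\mathrm{src}}\).
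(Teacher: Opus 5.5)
Your construction is correct, but it takes a genuinely different route from the paper's.

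\textbf{The paper's route.} It uses only two blocks, $M_T=A_T^{\mathrm{diff}}\circ W_T^{\mathrm{src}}$. The source writer is your $P_T$, but with a \emph{fixed} focusing parameter $\mu\le\tfrac12$. The quantity $m_t=\sum_{j\le t}\alpha^f_{t,j}r_j$ is then bounded multiplicatively, $(1-\mu)c_r^-(t+1)^\gamma\le m_t\le c_r^+(t+1)^\gamma$, using only nonnegativity of the $r_j$ and monotonicity of the envelope. So no $T$-dependent leakage ever appears.

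The transport block then uses \emph{exactly uniform} forward attention (zero forward queries and keys) instead of a self-focused row. The factor $1/(k+1)$ from forward averaging tames the diagonal automatically: $D_{\mathrm{mac}}^u(i)=1+m_i/(i+1)\in[1,1+c_r^+]$. No cancellation block is needed. The price is a short computation showing $L(i,j)=\sum_{k=j}^{i}\Theta_{i,k}/(k+1)\asymp(j+1)^{-\gamma}(i+1)^{-\beta}$, via a tail-sum upper bound and a three-case dyadic lower bound.

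\textbf{Your route.} You keep $\Theta_{i,j}\rho_j$ as the main term, so the factorized resolvent bounds give the envelope directly. You pay instead with a third block and a focusing parameter $\mu_T\sim(T+1)^{-2}$, which means query/key scales that grow with $T$.

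\textbf{Commit to the identical row.} Use the same focusing row in $R_T$ and $N_T$ from the start, not as a fallback. The bound $d^-_{\mathrm{mac}}\ge 1$ is part of the statement, and nonnegativity of path weights is used downstream when the main theorem restricts to balanced paths. With that choice the kernel is exactly $I+(\Theta-I)\alpha^f\,\mathrm{diag}(\rho)$. Hence $D\equiv1$ and $K(i,j)=\rho_j\sum_{k=j}^{i-1}\Theta_{i,k}\alpha^f_{k,j}\ge0$. The lower bound then needs no absorption at all, and $\mu_T\lesssim(T+1)^{-1}$ already suffices for the upper bound.

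A minor slip: $N_T$ reads the scratch value written by $P_T$ and left untouched by $R_T$; it is not untouched by both preceding blocks.

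Your macro-layer also has the structural features that Lemma~\ref{lem:projected_macro_signal_fiber} extracts from the paper's construction. The scratch channel is overwritten independently of its incoming value, and the later blocks write only to $e_{\mathrm{sig}}$. So it would serve as a drop-in replacement there.
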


\begin{proof}
Write
\[
x_t:=\langle u_t,e_{\mathrm{sig}}\rangle,
\qquad
r_t(u):=\langle u_t,e_{\mathrm{prof}}\rangle,
\qquad 0\le t\le T.
\]

We construct
\[
M_T=A_T^{\mathrm{diff}}\circ W_T^{\mathrm{src}},
\]
where \(W_T^{\mathrm{src}}\) is a local source writer and \(A_T^{\mathrm{diff}}\) is the diffuse
transport-bearing block.

\paragraph{Step 1: local source writer.}
Choose a parameter \(\mu\in(0,\frac12]\) and apply Lemma~\ref{lem:self_focus_forward} to the
ordered positional-control coordinate \(e_{\mathrm{pos}}\).
This yields a forward attention row satisfying
\[
\alpha^f_{t,t}\ge 1-\mu,
\qquad
\sum_{j<t}\alpha^f_{t,j}\le \mu,
\qquad 0\le t\le T.
\]

We now build a forward-only LN-free Sessa block
\[
W_T^{\mathrm{src}}:(\mathbb R^m)^{T+1}\to(\mathbb R^m)^{T+1}.
\]

Choose one forward value coordinate equal to \(1\):
\[
v_t^{(0)}\equiv 1.
\]
Hence
\[
s_t^{(0)}=\sum_{j\le t}\alpha^f_{t,j}\cdot 1=1.
\]

Next read the profile channel exactly using Corollary~\ref{cor:exact_channel_read}.
Choose two \(a\)-slots
\[
a_t^{(+)}=L\langle u_t,e_{\mathrm{prof}}\rangle,
\qquad
a_t^{(-)}=-L\langle u_t,e_{\mathrm{prof}}\rangle
\]
for any fixed \(L>0\), and choose the value projection so that
\[
v_t^{(1)}
=
\frac1L\bigl(\bar a_t^{(+)}-\bar a_t^{(-)}\bigr)
=
\langle u_t,e_{\mathrm{prof}}\rangle
=
r_t(u).
\]
Let
\[
m_t^u:=s_t^{(1)}:=\sum_{j\le t}\alpha^f_{t,j}\,r_j(u).
\]

Choose two gate coordinates
\[
g_t^{(0)}=\langle u_t,e_{\mathrm{src}}\rangle,
\qquad
g_t^{(1)}=\langle u_t,e_{\mathrm{sig}}\rangle=x_t,
\]
and choose the output projection on the \(e_{\mathrm{src}}\)-channel with coefficients \((-1,+1)\).
Then
\[
\langle W_T^{\mathrm{src}}(u)_t,e_{\mathrm{src}}\rangle
=
\langle u_t,e_{\mathrm{src}}\rangle-s_t^{(0)}\langle u_t,e_{\mathrm{src}}\rangle+s_t^{(1)}x_t
=
m_t^u\,x_t.
\]

All other output columns are zero, so the \(e_{\mathrm{sig}}\)-, \(e_{\mathrm{pos}}\)-,
\(e_{\mathrm{prof}}\)-, and \(E_{\mathrm{carry}}\)-channels are preserved exactly.

It remains to bound \(m_t^u\).
Since every \(r_j(u)\ge 0\),
\[
m_t^u
\ge
\alpha^f_{t,t}\,r_t(u)
\ge
(1-\mu)c_r^-(t+1)^\gamma.
\]
Also, for every \(j\le t\),
\[
r_j(u)\le c_r^+(j+1)^\gamma\le c_r^+(t+1)^\gamma,
\]
so
\[
m_t^u
=
\sum_{j\le t}\alpha^f_{t,j}r_j(u)
\le
c_r^+(t+1)^\gamma.
\]
Therefore
\[
m^-(t+1)^\gamma\le m_t^u\le m^+(t+1)^\gamma,
\qquad
m^-:=(1-\mu)c_r^-,
\qquad
m^+:=c_r^+.
\]

\paragraph{Step 2: diffuse transport block.}
Let
\[
w:=W_T^{\mathrm{src}}(u).
\]
We now build a single LN-free Sessa block
\[
A_T^{\mathrm{diff}}:(\mathbb R^m)^{T+1}\to(\mathbb R^m)^{T+1}
\]
as follows.

\emph{Forward branch.}
Choose all forward queries and keys equal to zero:
\[
q_k^f\equiv 0,
\qquad
k_j^f\equiv 0.
\]
Hence the forward row is exactly uniform on the visible prefix:
\[
\alpha^f_{k,j}=\frac1{k+1}\mathbf 1[j\le k].
\]

Read the source scratch channel exactly using Corollary~\ref{cor:exact_channel_read}.
Choose two \(a\)-slots
\[
a_j^{(+)}=L\langle w_j,e_{\mathrm{src}}\rangle,
\qquad
a_j^{(-)}=-L\langle w_j,e_{\mathrm{src}}\rangle,
\]
and choose the value projection so that
\[
v_j^{\mathrm{src}}
=
\frac1L\bigl(\bar a_j^{(+)}-\bar a_j^{(-)}\bigr)
=
\langle w_j,e_{\mathrm{src}}\rangle
=
m_j^u\,x_j.
\]
Thus the forward signal is
\[
f_k
=
\sum_{j\le k}\alpha^f_{k,j}v_j^{\mathrm{src}}
=
\frac1{k+1}\sum_{j=0}^{k}m_j^u x_j.
\]

\emph{Feedback branch.}
Choose all feedback queries and keys equal to zero and the feedback gain constant:
\[
q_i^b\equiv 0,
\qquad
k_j^b\equiv 0,
\qquad
\gamma_i\equiv \gamma=1-\beta.
\]
Therefore the strict-past feedback row is exactly uniform:
\[
\alpha^b_{i,k}=\frac1{i}\mathbf 1[k<i],
\qquad 1\le i\le T,
\]
and the scalar feedback matrix is
\[
B_{i,k}=\frac{\gamma}{i}\mathbf 1[k<i].
\]

Let
\[
\Theta_{i,k}:=[(I-B)^{-1}]_{i,k},
\qquad 0\le k\le i\le T.
\]
Exactly as in the proof of Lemma~\ref{lem:active_diffusive_factorized_transport}, one has
\[
\Theta_{i,i}=1,
\]
and for \(k<i\),
\[
\Theta_{i,k}
=
\gamma\,
\frac{\Gamma(k+1)}{\Gamma(k+1+\gamma)}
\frac{\Gamma(i+\gamma)}{\Gamma(i+1)}.
\]
Hence there exist constants
\[
0<c^-_\Theta\le c^+_\Theta<\infty
\]
depending only on \(\beta\), such that
\[
c^-_\Theta (k+1)^{-\gamma}(i+1)^{-\beta}
\le
\Theta_{i,k}
\le
c^+_\Theta (k+1)^{-\gamma}(i+1)^{-\beta},
\qquad 0\le k<i\le T.
\]

\emph{Write transport into the signal channel.}
Choose one gate coordinate identically \(1\), and choose the output projection so that the solve output
adds \(+s_i\) to the \(e_{\mathrm{sig}}\)-channel and all output columns on
\[
e_{\mathrm{pos}},\ e_{\mathrm{prof}},\ E_{\mathrm{carry}}
\]
vanish.

Therefore
\[
\langle A_T^{\mathrm{diff}}(w)_i,e_{\mathrm{sig}}\rangle
=
\langle w_i,e_{\mathrm{sig}}\rangle+s_i
=
x_i+s_i,
\]
where
\[
s_i
=
\sum_{k=0}^{i}\Theta_{i,k}f_k.
\]

Since \(W_T^{\mathrm{src}}\) preserves \(e_{\mathrm{sig}},e_{\mathrm{pos}},e_{\mathrm{prof}},E_{\mathrm{carry}}\) exactly,
the full macro-layer \(M_T=A_T^{\mathrm{diff}}\circ W_T^{\mathrm{src}}\) also preserves
\(e_{\mathrm{pos}},e_{\mathrm{prof}},E_{\mathrm{carry}}\) exactly.

\paragraph{Step 3: exact transport formula.}
Substituting the expression for \(f_k\), we get
\[
s_i
=
\sum_{k=0}^{i}\Theta_{i,k}\frac1{k+1}\sum_{j=0}^{k}m_j^u x_j
=
\sum_{j=0}^{i}
\left(
m_j^u\sum_{k=j}^{i}\frac{\Theta_{i,k}}{k+1}
\right)x_j.
\]
Define
\[
L(i,j):=\sum_{k=j}^{i}\frac{\Theta_{i,k}}{k+1},
\qquad 0\le j\le i\le T.
\]
Then
\[
\langle M_T(u)_i,e_{\mathrm{sig}}\rangle
=
x_i+\sum_{j=0}^{i}m_j^u\,L(i,j)\,x_j.
\]

Since \(\Theta_{i,i}=1\), we have
\[
L(i,i)=\frac1{i+1}.
\]
Therefore
\[
\langle M_T(u)_i,e_{\mathrm{sig}}\rangle
=
\left(1+\frac{m_i^u}{i+1}\right)x_i
+
\sum_{j<i}m_j^u\,L(i,j)\,x_j.
\]
Define
\[
D_{\mathrm{mac}}^u(i):=1+\frac{m_i^u}{i+1},
\qquad
K_{\mathrm{mac}}^u(i,j):=m_j^u\,L(i,j)\quad (j<i).
\]
This yields exact scalar transport on the signal channel:
\[
\langle M_T(u)_i,e_{\mathrm{sig}}\rangle
=
D_{\mathrm{mac}}^u(i)\,x_i
+
\sum_{j<i}K_{\mathrm{mac}}^u(i,j)\,x_j.
\]

The coefficient \(m_j^u\) depends only on the \(e_{\mathrm{pos}}\)- and \(e_{\mathrm{prof}}\)-control streams,
because the source writer uses positional self-focusing and an exact read of the profile channel only.
The kernel \(L(i,j)\) depends only on the fixed diffuse transport block.
Hence \(D_{\mathrm{mac}}^u(i)\) and \(K_{\mathrm{mac}}^u(i,j)\) depend only on the control stream
\[
(\Pi_{\mathrm{ctrl}}u_t)_{t=0}^{T},
\qquad
E_{\mathrm{ctrl}}:=\operatorname{span}\{e_{\mathrm{pos}},e_{\mathrm{prof}}\}\oplus E_{\mathrm{carry}}.
\]
Thus \(M_T\) has signal-blind exact scalar transport over \(E_{\mathrm{ctrl}}\).

\paragraph{Step 4: diagonal bounds.}
Since
\[
m^-(i+1)^\gamma\le m_i^u\le m^+(i+1)^\gamma,
\]
we obtain
\[
1
\le
D_{\mathrm{mac}}^u(i)
=
1+\frac{m_i^u}{i+1}
\le
1+m^+(i+1)^{\gamma-1}
=
1+m^+(i+1)^{-\beta}
\le
1+m^+.
\]
Hence we may take
\[
d^-_{\mathrm{mac}}:=1,
\qquad
d^+_{\mathrm{mac}}:=1+m^+.
\]

\paragraph{Step 5: off-diagonal upper bound.}
Fix \(0\le j<i\le T\).
Using \(\Theta_{i,i}=1\) and the upper bound on \(\Theta_{i,k}\) for \(k<i\),
\[
L(i,j)
\le
\frac1{i+1}
+
c^+_\Theta (i+1)^{-\beta}\sum_{k=j}^{i-1}(k+1)^{-1-\gamma}.
\]
Since
\[
\frac1{i+1}
\le
(j+1)^{-\gamma}(i+1)^{-\beta},
\]
and
\[
\sum_{k=j}^{i-1}(k+1)^{-1-\gamma}
\le
\sum_{k=j}^{\infty}(k+1)^{-1-\gamma}
\lesssim_\gamma (j+1)^{-\gamma},
\]
there exists \(C_L^+<\infty\), depending only on \(\beta\), such that
\[
L(i,j)\le C_L^+ (j+1)^{-\gamma}(i+1)^{-\beta}.
\]
Therefore
\[
K_{\mathrm{mac}}^u(i,j)
=
m_j^u\,L(i,j)
\le
m^+(j+1)^\gamma\cdot C_L^+ (j+1)^{-\gamma}(i+1)^{-\beta}.
\]
Hence
\[
K_{\mathrm{mac}}^u(i,j)\le a^+_{\mathrm{mac}}(i+1)^{-\beta},
\qquad
a^+_{\mathrm{mac}}:=m^+ C_L^+.
\]

\paragraph{Step 6: off-diagonal lower bound.}
Fix \(0\le j<i\le T\).

\emph{Case 0: \(j=0\).}
Since \(\Theta_{i,0}\) appears in the sum defining \(L(i,0)\), we have
\[
L(i,0)\ge \Theta_{i,0}.
\]
By the resolvent bound,
\[
\Theta_{i,0}\ge c^-_\Theta (0+1)^{-\gamma}(i+1)^{-\beta}
= c^-_\Theta (i+1)^{-\beta}.
\]
Also \(m_0^u\ge m^-\). Therefore
\[
K_{\mathrm{mac}}^u(i,0)
=
m_0^u\,L(i,0)
\ge
m^- c^-_\Theta (i+1)^{-\beta}.
\]

\emph{Case 1: \(1\le j\le i/2\).}
Then \(2j\le i\), so
\[
L(i,j)
\ge
\sum_{k=j}^{2j-1}\frac{\Theta_{i,k}}{k+1}
\ge
c^-_\Theta (i+1)^{-\beta}\sum_{k=j}^{2j-1}(k+1)^{-1-\gamma}.
\]
Since the sum over one dyadic block is comparable to \((j+1)^{-\gamma}\),
there exists \(c_L^{(1)}>0\), depending only on \(\beta\), such that
\[
L(i,j)\ge c_L^{(1)}(j+1)^{-\gamma}(i+1)^{-\beta}.
\]
Hence
\[
K_{\mathrm{mac}}^u(i,j)
=
m_j^u\,L(i,j)
\ge
m^-(j+1)^\gamma\cdot c_L^{(1)}(j+1)^{-\gamma}(i+1)^{-\beta}
=
m^- c_L^{(1)}(i+1)^{-\beta}.
\]

\emph{Case 2: \(j> i/2\).}
Then
\[
L(i,j)\ge \frac1{i+1},
\]
so
\[
K_{\mathrm{mac}}^u(i,j)
=
m_j^u\,L(i,j)
\ge
\frac{m_j^u}{i+1}
\ge
\frac{m^-(j+1)^\gamma}{i+1}.
\]
Since \(j+1>\frac{i+1}{2}\),
\[
(j+1)^\gamma\ge 2^{-\gamma}(i+1)^\gamma.
\]
Therefore
\[
K_{\mathrm{mac}}^u(i,j)\ge m^- 2^{-\gamma}(i+1)^{\gamma-1}
=
m^-2^{-\gamma}(i+1)^{-\beta}.
\]

Combining the three cases gives
\[
K_{\mathrm{mac}}^u(i,j)\ge a^-_{\mathrm{mac}}(i+1)^{-\beta},
\qquad
a^-_{\mathrm{mac}}
:=
\min\{m^-c^-_\Theta,\ m^-c_L^{(1)},\ m^-2^{-\gamma}\}.
\]
For any \(\eta>0\), replacing \(\Kset\) by
\(\Sat^{\mathrm{sig}}_\eta(\Kset)\) leaves the ordered positional ranges and the
two-sided profile bounds unchanged, since only the \(e_{\mathrm{sig}}\)-channel
is perturbed.
The same source-writer plus diffuse-transport construction therefore yields the
same exact scalar transport formula on \(\Sat^{\mathrm{sig}}_\eta(\Kset)\), with
the same coefficients \(D_{\mathrm{mac}}^u(i)\) and \(K_{\mathrm{mac}}^u(i,j)\),
because these coefficients depend only on the control stream
\((e_{\mathrm{pos}},e_{\mathrm{prof}},E_{\mathrm{carry}})\).
Applying Lemma~\ref{lem:signal_blind_transport_calculus}(i) gives
\[
e_{\mathrm{sig}}^\top
\frac{\partial M_T(u)_i}{\partial u_j}
e_{\mathrm{sig}}
=
D_{\mathrm{mac}}^u(i)\,\mathbf 1[i=j]
+
K_{\mathrm{mac}}^u(i,j)\,\mathbf 1[j<i].
\]
\end{proof}

\begin{corollary}[Macro-layer transport]
\label{cor:macro_signal_fiber_transport}
Under the hypotheses of Lemma~\ref{lem:profile_weighted_macro}, let
\[
E_{\mathrm{ctrl}}
:=
\operatorname{span}\{e_{\mathrm{pos}},e_{\mathrm{prof}}\}\oplus E_{\mathrm{carry}},
\qquad
\Pi_{\mathrm{ctrl}}:\mathbb R^m\to E_{\mathrm{ctrl}},
\qquad
\pi_{\mathrm{sig}}(v):=\langle v,e_{\mathrm{sig}}\rangle,
\]
and let \(M_T\) be the concrete macro-layer constructed there.
Then for every \(\delta\ge 0\), \(M_T\) has signal-blind exact scalar transport
along \(e_{\mathrm{sig}}\) over \(E_{\mathrm{ctrl}}\) on
\(\Sat^{\mathrm{sig}}_\delta(\Kset)\), with the same scalar transport kernel
\(\mathcal T_{M_T}^u(i,j)\) as on \(\Kset\).

More precisely, if
\[
v=u+\sum_{t=0}^{T} a_t e_{\mathrm{sig}}\mathbf 1[\cdot=t],
\qquad
u\in\Kset,
\]
then
\[
\Pi_{\mathrm{ctrl}}M_T(v)_i=\Pi_{\mathrm{ctrl}}v_i,
\qquad 0\le i\le T,
\]
and
\[
\pi_{\mathrm{sig}}(M_T(v)_i)
=
\sum_{j=0}^{i}\mathcal T_{M_T}^{u}(i,j)\,\pi_{\mathrm{sig}}(v_j),
\qquad 0\le i\le T.
\]
The right-hand side depends only on the control stream of \(v\), hence is independent
of the choice of \(u\in\Kset\) with the same control stream.
\end{corollary}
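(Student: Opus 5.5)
The plan is to reduce the corollary to a reading-off of the exact formulas already established inside the proof of Lemma~\ref{lem:profile_weighted_macro}, rather than to any new estimate. Fix $v=u+\sum_t a_t e_{\mathrm{sig}}\mathbf 1[\cdot=t]$ with $u\in\Kset$. Since $e_{\mathrm{sig}}\perp E_{\mathrm{ctrl}}$, the control streams agree: $\Pi_{\mathrm{ctrl}}v_t=\Pi_{\mathrm{ctrl}}u_t$ for all $t$. In particular $v$ has the same positional values $\langle v_t,e_{\mathrm{pos}}\rangle\in I_t$ and the same profile values $r_t(v)=r_t(u)$. It also has the same carried channels.

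I would then trace $v$ through the two constituent blocks $W_T^{\mathrm{src}}$ and $A_T^{\mathrm{diff}}$. For the source writer, the forward weights come from Lemma~\ref{lem:self_focus_forward} and are computed from the $e_{\mathrm{pos}}$-coordinate only, so $\alpha^f_{t,j}(v)=\alpha^f_{t,j}(u)$. The two value coordinates are the constant $1$ and the exact read $r_t$ of Corollary~\ref{cor:exact_channel_read}. Hence $m_t^v=m_t^u$. The gate $g^{(1)}_t=\pi_{\mathrm{sig}}(v_t)$ enters linearly, so the same residual computation gives
\[
\langle W_T^{\mathrm{src}}(v)_t,e_{\mathrm{src}}\rangle=m_t^u\,\pi_{\mathrm{sig}}(v_t).
\]
The remaining output columns vanish, so the signal and control channels pass through unchanged.

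For the diffuse block, the forward and feedback queries and keys are zero and the gain is the constant $\gamma$. The routing is therefore input-independent, with kernel $\Theta$ as before, and the value is an exact read of $e_{\mathrm{src}}$. Repeating Step~3 of that proof verbatim yields
\[
\pi_{\mathrm{sig}}(M_T(v)_i)=\pi_{\mathrm{sig}}(v_i)+\sum_{j\le i}m_j^u L(i,j)\,\pi_{\mathrm{sig}}(v_j)=\sum_{j\le i}\mathcal T^u_{M_T}(i,j)\,\pi_{\mathrm{sig}}(v_j).
\]
Because the output projection vanishes on $E_{\mathrm{ctrl}}$ in both blocks, $\Pi_{\mathrm{ctrl}}M_T(v)_i=\Pi_{\mathrm{ctrl}}v_i$.

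The only delicate point, and the main obstacle, is certifying that no hidden dependence on the signal channel enters through nonlinear stages: GELU, the scratch channel $e_{\mathrm{src}}$, or the gate on $e_{\mathrm{src}}$. For the scratch channel, the incoming $e_{\mathrm{src}}$-value is cancelled exactly by the $(-1,+1)$ output combination using $s^{(0)}_t=1$. For GELU, the signal enters only through gates and exact linear reads, never through GELU-activated logits. I would state these cancellations explicitly.

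Independence from the representative $u$ follows because $m_j^u$ depends only on $(\Pi_{\mathrm{ctrl}}u_t)_t=(\Pi_{\mathrm{ctrl}}v_t)_t$, and $L(i,j)$ is fixed. The kernel is therefore a function of the control stream of $v$ alone. No bounds need rechecking, since the coefficients are literally identical to those on $\Kset$.
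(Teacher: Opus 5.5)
Your proposal is correct and follows the paper's proof essentially step for step. You decompose $M_T=A_T^{\mathrm{diff}}\circ W_T^{\mathrm{src}}$ and note that $v$ and $u$ share the same control stream, so $m_t^v=m_t^u$. You push $v$ through the source writer to obtain $m_t^u\,\pi_{\mathrm{sig}}(v_t)$ on $e_{\mathrm{src}}$, then use the input-independent uniform routing and the fixed resolvent $\Theta$ of the diffuse block to recover the kernel $m_j^u L(i,j)$. Control preservation and independence of the representative then follow from the vanishing output columns on $E_{\mathrm{ctrl}}$ and the control-only dependence of $m_j^u$. Your explicit checks, that the incoming $e_{\mathrm{src}}$ value cancels exactly and that GELU touches the signal only through exact reads, are slightly more detailed than the paper, which takes them as evident from the construction.
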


\begin{proof}
Write
\[
M_T=A_T^{\mathrm{diff}}\circ W_T^{\mathrm{src}}
\]
exactly as in the proof of Lemma~\ref{lem:profile_weighted_macro}.

Fix
\[
v=u+\sum_{t=0}^{T} a_t e_{\mathrm{sig}}\mathbf 1[\cdot=t],
\qquad
u\in\Kset.
\]
Since \(v\) differs from \(u\) only on the \(e_{\mathrm{sig}}\)-channel, the
\(e_{\mathrm{pos}}\)-, \(e_{\mathrm{prof}}\)-, and \(E_{\mathrm{carry}}\)-streams are unchanged.
Hence the self-focused profile averages from the source-writer stage are unchanged:
\[
m_t^v=m_t^u,
\qquad 0\le t\le T.
\]

Therefore the explicit source-writer formula gives
\[
\bigl\langle W_T^{\mathrm{src}}(v)_t,e_{\mathrm{src}}\bigr\rangle
=
m_t^u\,\pi_{\mathrm{sig}}(v_t),
\qquad 0\le t\le T.
\]
Moreover, \(W_T^{\mathrm{src}}\) preserves the channels in \(E_{\mathrm{ctrl}}\) exactly,
because it modifies only the \(e_{\mathrm{src}}\)-channel.

In the diffuse stage, the forward row is the exact uniform prefix average, so the
forward signal entering the fixed feedback solve is
\[
f_k(v)
=
\frac1{k+1}\sum_{j=0}^{k} m_j^u\,\pi_{\mathrm{sig}}(v_j),
\qquad 0\le k\le T.
\]
The feedback matrix \(B\), its resolvent \(\Theta\), and the kernel
\[
L(i,j):=\sum_{k=j}^{i}\frac{\Theta_{i,k}}{k+1}
\]
depend only on \(\beta\), hence are independent of \(v\).
Thus the solve output satisfies
\[
s_i(v)
=
\sum_{k=0}^{i}\Theta_{i,k}f_k(v)
=
\sum_{j=0}^{i} m_j^u\,L(i,j)\,\pi_{\mathrm{sig}}(v_j).
\]
Using the definitions from Lemma~\ref{lem:profile_weighted_macro},
\[
D_{\mathrm{mac}}^u(i):=1+\frac{m_i^u}{i+1},
\qquad
K_{\mathrm{mac}}^u(i,j):=m_j^u\,L(i,j)\quad (j<i),
\]
we obtain
\[
\pi_{\mathrm{sig}}(M_T(v)_i)
=
\pi_{\mathrm{sig}}(v_i)+s_i(v)
=
\sum_{j=0}^{i}\mathcal T_{M_T}^{u}(i,j)\,\pi_{\mathrm{sig}}(v_j).
\]

Finally, \(A_T^{\mathrm{diff}}\) modifies only the \(e_{\mathrm{sig}}\)-channel and preserves
\(e_{\mathrm{pos}},e_{\mathrm{prof}},E_{\mathrm{carry}}\) exactly.
Hence \(M_T\) preserves \(E_{\mathrm{ctrl}}\) exactly on
\(\Sat^{\mathrm{sig}}_\delta(\Kset)\).
Since the coefficients \(m_j^u\), and therefore
\(\mathcal T_{M_T}^{u}(i,j)\), depend only on the control stream, the displayed
kernel is independent of the choice of \(u\in\Kset\) with the same control stream.
This proves the claim.
\end{proof}

\begin{lemma}[Projected macro-layer]
\label{lem:projected_macro_signal_fiber}
Under the hypotheses of Lemma~\ref{lem:profile_weighted_macro}, let
\[
\Pi_{\mathrm{src}}(v)_t:=v_t-\langle v_t,e_{\mathrm{src}}\rangle e_{\mathrm{src}},
\qquad 0\le t\le T,
\]
be the tokenwise orthogonal projection that kills the \(e_{\mathrm{src}}\)-channel, and define
\[
\bar M_T:=\Pi_{\mathrm{src}}\circ M_T.
\]
Then:

\begin{enumerate}[label=(\roman*), leftmargin=*, nosep]
\item \(M_T\) is blind to the incoming \(e_{\mathrm{src}}\)-channel:
\[
M_T = M_T\circ \Pi_{\mathrm{src}}.
\]

\item \(\bar M_T\) preserves the \(e_{\mathrm{pos}}\)-channel, the \(e_{\mathrm{prof}}\)-channel,
and every channel in \(E_{\mathrm{carry}}\) exactly.

\item \(\bar M_T\) has signal-blind exact scalar transport along \(e_{\mathrm{sig}}\) over
\[
E_{\mathrm{ctrl}}:=\operatorname{span}\{e_{\mathrm{pos}},e_{\mathrm{prof}}\}\oplus E_{\mathrm{carry}},
\]
with exactly the same scalar transport kernel as \(M_T\):
\[
\mathcal T_{\bar M_T}^u(i,j)=\mathcal T_{M_T}^u(i,j),
\qquad
0\le j\le i\le T.
\]

\item For every \(\delta\ge 0\) there exists \(\delta'=\delta'(\delta,\Kset)<\infty\) such that
\[
\bar M_T\bigl(\Sat^{\mathrm{sig}}_\delta(\Kset)\bigr)
\subset
\Sat^{\mathrm{sig}}_{\delta'}\bigl(\bar M_T(\Kset)\bigr).
\]
More precisely, if
\[
u' = u+\sum_{t=0}^{T} a_t e_{\mathrm{sig}}\mathbf 1[\cdot=t],
\qquad
u\in\Kset,
\qquad
\max_t |a_t|\le \delta,
\]
then
\[
\bar M_T(u')_i
=
\bar M_T(u)_i
+
\left(
\sum_{j=0}^{i}\mathcal T_{M_T}^u(i,j)a_j
\right)e_{\mathrm{sig}},
\qquad
0\le i\le T.
\]

\item For every \(\delta\ge 0\), \(\bar M_T\) has signal-blind exact scalar transport
along \(e_{\mathrm{sig}}\) over
\[
E_{\mathrm{ctrl}}:=\operatorname{span}\{e_{\mathrm{pos}},e_{\mathrm{prof}}\}\oplus E_{\mathrm{carry}}
\]
on \(\Sat^{\mathrm{sig}}_\delta(\Kset)\), with the same scalar transport kernel as \(M_T\).
More precisely, if
\[
v=u+\sum_{t=0}^{T} a_t e_{\mathrm{sig}}\mathbf 1[\cdot=t],
\qquad
u\in\Kset,
\]
then
\[
\Pi_{\mathrm{ctrl}}\bar M_T(v)_i=\Pi_{\mathrm{ctrl}}v_i,
\qquad 0\le i\le T,
\]
and
\[
\pi_{\mathrm{sig}}(\bar M_T(v)_i)
=
\sum_{j=0}^{i}\mathcal T_{M_T}^u(i,j)\,\pi_{\mathrm{sig}}(v_j),
\qquad 0\le i\le T.
\]
The right-hand side depends only on the control stream of \(v\), hence is independent of the
choice of \(u\in\Kset\) with the same control stream.
\end{enumerate}
\end{lemma}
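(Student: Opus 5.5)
The whole proof rests on one structural fact from the concrete construction of $M_T=A_T^{\mathrm{diff}}\circ W_T^{\mathrm{src}}$ in Lemma~\ref{lem:profile_weighted_macro}: the incoming $e_{\mathrm{src}}$-channel is overwritten and never read. We establish this first; items (ii)--(v) then follow from Corollary~\ref{cor:macro_signal_fiber_transport} and the fact that $\Pi_{\mathrm{src}}$ is linear and tokenwise.

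For (i), we revisit the source writer. Its output on $e_{\mathrm{src}}$ is $\langle u_t,e_{\mathrm{src}}\rangle-s_t^{(0)}\langle u_t,e_{\mathrm{src}}\rangle+m_t^u x_t$ with $s_t^{(0)}\equiv 1$, which equals $m_t^u x_t$ exactly. This value is independent of $\langle u_t,e_{\mathrm{src}}\rangle$. All other output channels of $W_T^{\mathrm{src}}$ are either preserved (pos, prof, sig, carry) or read only from those preserved channels. The $a$-branch uses only $e_{\mathrm{pos}}$ and $e_{\mathrm{prof}}$, and the gate uses $e_{\mathrm{src}}$ only in the cancelling pair. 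Hence $W_T^{\mathrm{src}}(u)$ and $W_T^{\mathrm{src}}(\Pi_{\mathrm{src}}u)$ agree on every channel except possibly $e_{\mathrm{src}}$, and they agree there too, since it was overwritten. So $W_T^{\mathrm{src}}=W_T^{\mathrm{src}}\circ\Pi_{\mathrm{src}}$, and composing with $A_T^{\mathrm{diff}}$ gives (i).

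The main subtlety is that the gate coordinate $g^{(0)}_t=\langle u_t,e_{\mathrm{src}}\rangle$ does read the channel. It cancels only because $s_t^{(0)}=1$ holds exactly, which is the exact row-sum-one property of softmax. We need to state this explicitly rather than approximately.

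For (ii), note that $\Pi_{\mathrm{src}}$ acts only on $e_{\mathrm{src}}$. That direction is orthogonal to $e_{\mathrm{pos}}$, $e_{\mathrm{prof}}$ and $E_{\mathrm{carry}}$, and $M_T$ preserves those channels exactly, so $\bar M_T$ does as well.

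For (iii), $\Pi_{\mathrm{src}}$ also leaves $e_{\mathrm{sig}}$ untouched. Therefore $\pi_{\mathrm{sig}}(\bar M_T(u)_i)=\pi_{\mathrm{sig}}(M_T(u)_i)$ and the control projections coincide. The kernel is literally the same, $\mathcal T_{M_T}^u$, which depends only on the control stream.

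For (v), we apply Corollary~\ref{cor:macro_signal_fiber_transport} on $\Sat^{\mathrm{sig}}_\delta(\Kset)$ and post-compose with $\Pi_{\mathrm{src}}$. Both $E_{\mathrm{ctrl}}$ and $e_{\mathrm{sig}}$ are orthogonal to $e_{\mathrm{src}}$, so the control-preservation identity and the transport identity carry over verbatim.

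For (iv), we take $u'=u+\sum_t a_t e_{\mathrm{sig}}\mathbf 1[\cdot=t]$. By (v), $\bar M_T(u')_i$ and $\bar M_T(u)_i$ have the same $E_{\mathrm{ctrl}}$-component. Their $e_{\mathrm{src}}$-components are both $0$ because of the projection. Their signal components differ by $\sum_{j\le i}\mathcal T^u_{M_T}(i,j)a_j$, by linearity of the transport formula with a common kernel.

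It remains to check that no other direction of $\mathbb R^m$ changes. We decompose $\mathbb R^m=\operatorname{span}\{e_{\mathrm{sig}},e_{\mathrm{src}}\}\oplus E_{\mathrm{ctrl}}\oplus E_{\mathrm{rest}}$. We then check from the construction that $M_T$ preserves $E_{\mathrm{rest}}$ exactly, since both blocks have output projections vanishing off $e_{\mathrm{src}}$ and $e_{\mathrm{sig}}$. This is the point requiring care: Lemma~\ref{lem:profile_weighted_macro} only asserts preservation of the listed channels, so we argue directly from the output-projection choices in its proof. This yields the displayed identity.

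For the constant, we take $\delta'=\delta\,\sup_{u\in\Kset}\max_i\sum_{j\le i}|\mathcal T_{M_T}^u(i,j)|$. This is finite by compactness and the bounds $D^u_{\mathrm{mac}}\le d^+_{\mathrm{mac}}$ and $K^u_{\mathrm{mac}}\le a^+_{\mathrm{mac}}$ over the finite index set.
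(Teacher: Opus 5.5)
Your proposal is correct and follows the paper's proof essentially step for step. The ingredients are the same: the exact overwrite of the $e_{\mathrm{src}}$-channel by the source writer (including the row-sum-one cancellation of the gate that reads it), output projections supported only on $e_{\mathrm{src}}$ and $e_{\mathrm{sig}}$, and the fact that $\Pi_{\mathrm{src}}$ leaves the signal and control channels untouched.

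The only difference is the order of (iv) and (v). You obtain (v) directly from Corollary~\ref{cor:macro_signal_fiber_transport} and then deduce (iv). The paper proves (iv) first and derives (v) from it, but in doing so it tacitly applies the transport formula to $u'$, which lies in the signal-fiber saturation rather than the base compact set. Your ordering makes that dependence explicit.
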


\begin{proof}
Write
\[
M_T=A_T^{\mathrm{diff}}\circ W_T^{\mathrm{src}}
\]
as in the proof of Lemma~\ref{lem:profile_weighted_macro}.

For item~(i), the explicit source-writer formula there gives
\[
\langle W_T^{\mathrm{src}}(u)_t,e_{\mathrm{src}}\rangle
=
m_t^u\,\langle u_t,e_{\mathrm{sig}}\rangle,
\]
where \(m_t^u\) depends only on the control stream
\((e_{\mathrm{pos}},e_{\mathrm{prof}},E_{\mathrm{carry}})\), and not on the incoming
\(e_{\mathrm{src}}\)-coordinate.
All other channels used by \(W_T^{\mathrm{src}}\) are likewise independent of the incoming
\(e_{\mathrm{src}}\)-channel. Hence
\[
W_T^{\mathrm{src}}(u)=W_T^{\mathrm{src}}(\Pi_{\mathrm{src}}u).
\]
Applying \(A_T^{\mathrm{diff}}\) yields
\[
M_T(u)=M_T(\Pi_{\mathrm{src}}u),
\]
which is item~(i).

Item~(ii) follows because \(M_T\) already preserves
\(e_{\mathrm{pos}},e_{\mathrm{prof}},E_{\mathrm{carry}}\) exactly by
Lemma~\ref{lem:profile_weighted_macro}, and \(\Pi_{\mathrm{src}}\) acts as the identity on those channels.

For item~(iii), \(\Pi_{\mathrm{src}}\) acts as the identity on the \(e_{\mathrm{sig}}\)-coordinate, so
\[
\langle \bar M_T(u)_i,e_{\mathrm{sig}}\rangle
=
\langle M_T(u)_i,e_{\mathrm{sig}}\rangle.
\]
Since \(M_T\) has signal-blind exact scalar transport with kernel \(\mathcal T_{M_T}^u\),
the same is true for \(\bar M_T\), with the same kernel.

For item~(iv), fix \(u\in\Kset\) and
\[
u' = u+\sum_{t=0}^{T} a_t e_{\mathrm{sig}}\mathbf 1[\cdot=t],
\qquad
\max_t |a_t|\le \delta.
\]
The control stream is unchanged, so the same transport kernel
\(\mathcal T_{M_T}^u\) applies to both \(u\) and \(u'\).
By item~(iii),
\[
\langle \bar M_T(u')_i-\bar M_T(u)_i,e_{\mathrm{sig}}\rangle
=
\sum_{j=0}^{i}\mathcal T_{M_T}^u(i,j)a_j.
\]
In the concrete construction of Lemma~\ref{lem:profile_weighted_macro},
the source writer modifies only the \(e_{\mathrm{src}}\)-channel and the diffuse block
modifies only the \(e_{\mathrm{sig}}\)-channel; every channel orthogonal to
\[
\operatorname{span}\{e_{\mathrm{sig}},e_{\mathrm{pos}},e_{\mathrm{prof}},e_{\mathrm{src}}\}\oplus E_{\mathrm{carry}}
\]
is preserved exactly. Thus the only possible signal-dependent non-signal output channel is \(e_{\mathrm{src}}\), and \(\Pi_{\mathrm{src}}\) removes it.
Hence
\[
\bar M_T(u')_i-\bar M_T(u)_i
=
\left(
\sum_{j=0}^{i}\mathcal T_{M_T}^u(i,j)a_j
\right)e_{\mathrm{sig}},
\]
which is exactly a bounded signal-fiber perturbation over \(\bar M_T(u)\).
Since \(T\) is finite and \(\Kset\) is compact, the quantity
\[
\sup_{u\in\Kset}\sup_{0\le i\le T}\sum_{j=0}^{i}|\mathcal T_{M_T}^u(i,j)|
\]
is finite, so one may take
\[
\delta'
:=
\delta\,
\sup_{u\in\Kset}\sup_{0\le i\le T}\sum_{j=0}^{i}|\mathcal T_{M_T}^u(i,j)|.
\]

For item~(v), fix \(\delta\ge 0\) and \(v\in \Sat^{\mathrm{sig}}_\delta(\Kset)\).
Write
\[
v=u+\sum_{t=0}^{T} a_t e_{\mathrm{sig}}\mathbf 1[\cdot=t]
\qquad\text{with }u\in\Kset.
\]
By item~(iv),
\[
\bar M_T(v)_i
=
\bar M_T(u)_i
+
\left(
\sum_{j=0}^{i}\mathcal T_{M_T}^u(i,j)a_j
\right)e_{\mathrm{sig}}.
\]
Taking the \(e_{\mathrm{sig}}\)-coordinate and using item~(iii) on \(u\in\Kset\), we obtain
\begin{align*}
\pi_{\mathrm{sig}}(\bar M_T(v)_i)
&=
\pi_{\mathrm{sig}}(\bar M_T(u)_i)
+
\sum_{j=0}^{i}\mathcal T_{M_T}^u(i,j)a_j\\
&=
\sum_{j=0}^{i}\mathcal T_{M_T}^u(i,j)\,\pi_{\mathrm{sig}}(u_j)
+
\sum_{j=0}^{i}\mathcal T_{M_T}^u(i,j)a_j\\
&=
\sum_{j=0}^{i}\mathcal T_{M_T}^u(i,j)\,\pi_{\mathrm{sig}}(v_j).
\end{align*}
Moreover, from the explicit construction,
\(W_T^{\mathrm{src}}\) modifies only the \(e_{\mathrm{src}}\)-channel,
\(A_T^{\mathrm{diff}}\) modifies only the \(e_{\mathrm{sig}}\)-channel,
and \(\Pi_{\mathrm{src}}\) kills only the \(e_{\mathrm{src}}\)-channel.
Hence \(\bar M_T\) acts as the identity on
\[
E_{\mathrm{ctrl}}=\operatorname{span}\{e_{\mathrm{pos}},e_{\mathrm{prof}}\}\oplus E_{\mathrm{carry}}
\]
for every input, and therefore
\[
\Pi_{\mathrm{ctrl}}\bar M_T(v)_i=\Pi_{\mathrm{ctrl}}v_i.
\]
Finally, since \(\mathcal T_{M_T}^u\) depends only on the control stream, the displayed kernel is
independent of the choice of \(u\in\Kset\) with the same control stream as \(v\).
Thus \(\bar M_T\) has signal-blind exact scalar transport on
\(\Sat^{\mathrm{sig}}_\delta(\Kset)\) with the same kernel as \(M_T\).
This proves the claim.
\end{proof}

\begin{lemma}[Balanced path lower bound]
\label{lem:balanced_profile_paths}
Fix $\beta\in(0,1)$, set $\gamma:=1-\beta$, fix $k\ge 1$, and fix $\tau_{\max}\ge 0$.
Then there exists a constant $c^{\mathrm{bal}}_{k,\beta,\tau_{\max}}>0$ such that for every
$0\le \tau_\ast\le \tau_{\max}$ and every $\ell\ge k$, with $t=\tau_\ast+\ell$,
\[
\sum_{\substack{\tau_\ast=i_0<i_1<\cdots<i_k=t\\
\frac{\ell}{2k}\le i_r-i_{r-1}\le \frac{2\ell}{k}\ \forall r}}
\ \prod_{r=1}^{k}(i_r+1)^{-\beta}
\ge
c^{\mathrm{bal}}_{k,\beta,\tau_{\max}}(1+\ell)^{k(1-\beta)-1}.
\]
\end{lemma}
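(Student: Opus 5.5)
The approach is a direct counting argument: bound each summand from below uniformly, then bound the number of admissible balanced compositions from below.

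\emph{Uniform summand bound.} For every admissible path, each node satisfies $i_r\le t=\tau_\ast+\ell\le \tau_{\max}+\ell$. Since $\beta>0$, this gives $(i_r+1)^{-\beta}\ge(\tau_{\max}+1+\ell)^{-\beta}\ge(\tau_{\max}+1)^{-\beta}(1+\ell)^{-\beta}$. Hence every summand is at least $(\tau_{\max}+1)^{-k\beta}(1+\ell)^{-k\beta}$. This step is routine.

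\emph{Counting balanced paths.} A path corresponds to a composition $m_1+\cdots+m_k=\ell$ with $m_r:=i_r-i_{r-1}$ integers in $[\ell/(2k),2\ell/k]$. For $k=1$ the only composition is $m_1=\ell$, which is admissible, so the count is $1=(1+\ell)^0$. For $k\ge2$:
\begin{enumerate}
\item Choose $m_1,\dots,m_{k-1}$ freely among integers in $J:=[\ell/k-\ell/(2k^2),\,\ell/k+\ell/(2k^2)]$, and set $m_k:=\ell-\sum_{r<k}m_r$.
\item Then $|m_k-\ell/k|\le (k-1)\ell/(2k^2)<\ell/(2k)$. So $m_k\in(\ell/(2k),3\ell/(2k))\subset[\ell/(2k),2\ell/k]$, and each $m_r\in J\subset[\ell/(2k),2\ell/k]$ as well.
\item The interval $J$ has length $\ell/k^2$, so it contains at least $\lfloor \ell/k^2\rfloor$ integers. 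For $\ell\ge 2k^2$ this is $\ge \ell/(2k^2)\gtrsim_k(1+\ell)$.
\end{enumerate}
Hence for $\ell\ge 2k^2$ the number of admissible paths is $\gtrsim_k(1+\ell)^{k-1}$. Multiplying by the summand bound gives $\gtrsim_{k,\beta,\tau_{\max}}(1+\ell)^{k-1-k\beta}=(1+\ell)^{k(1-\beta)-1}$.

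\emph{Small lags.} The remaining range $k\le\ell<2k^2$ is a finite set of $\ell$, uniformly over finitely many $\tau_\ast\le\tau_{\max}$. It suffices that the sum is strictly positive there, i.e.\ that at least one balanced composition exists. Write $\ell=qk+s$ with $0\le s<k$, and use $s$ parts equal to $q+1$ and the rest equal to $q$; note $q\ge1$ because $\ell\ge k$. Each part is at least $\lfloor\ell/k\rfloor\ge \ell/(2k)$, since $\lfloor x\rfloor\ge x/2$ for $x\ge1$. Each part is at most $\lceil\ell/k\rceil\le 2\ell/k$, since $\lceil x\rceil\le 2x$ for $x\ge1$. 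So this composition is admissible. Taking the minimum of the finitely many positive ratios against $(1+\ell)^{k(1-\beta)-1}$ and combining with the large-$\ell$ constant yields $c^{\mathrm{bal}}_{k,\beta,\tau_{\max}}$.

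\emph{Main obstacle.} The only delicate point is step~2: the free parts must be restricted to a window narrow enough, of width $\ell/(2k^2)$, that the forced last part $m_k$ remains in the balanced range. Everything else is elementary.
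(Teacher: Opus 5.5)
Your proposal is correct and follows the same route as the paper: a uniform per-path lower bound $(i_r+1)^{-\beta}\gtrsim_{\tau_{\max}}(1+\ell)^{-\beta}$ (for which, as you note, only $i_r\le t$ is needed), multiplied by a count of balanced compositions of order $\ell^{k-1}$. The paper only asserts that count, so your window argument with $J$ for $\ell\ge 2k^2$ and your near-equal composition for $k\le\ell<2k^2$ supply the details the paper leaves implicit.
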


\begin{proof}
The number of balanced paths is
$\gtrsim_k \ell^{k-1}$ for all $\ell\ge k$.

For every balanced path and every $r=1,\dots,k$,
\[
i_r+1\asymp_{k,\tau_{\max}} 1+\ell.
\]
Hence every balanced path contributes at least
\[
C_{k,\beta,\tau_{\max}}^{-1}(1+\ell)^{-k\beta}.
\]
Multiplying by the number of balanced paths gives
\[
\gtrsim \ell^{k-1}(1+\ell)^{-k\beta}
\asymp
(1+\ell)^{k-1-k\beta}
=
(1+\ell)^{k(1-\beta)-1}.
\]
\end{proof}

\begin{lemma}[Competitor suppression]
\label{lem:competitor_suppression_profile}
Fix \(\beta\in(0,1)\), set \(\gamma:=1-\beta\), fix \(k\ge 1\), and fix \(\tau_{\max}\ge 0\).
Consider a depth-\((k+1)\) exact scalar transport stack on a distinguished signal channel,
consisting of one selector block \(S_{H,\tau_\ast,\varepsilon_H}\) followed by \(k\) diffuse profile-compensated macro-layers.
Let
\[
\mathcal T_{\mathrm{stack}}^u(t,\tau)
\]
denote the resulting exact scalar transport kernel on that signal channel.
Assume the selector satisfies
\[
\frac12\le D_{\mathrm{sel}}^u(\tau_\ast)\le 2,
\qquad
|D_{\mathrm{sel}}^u(\tau)|\le \varepsilon_H\quad (\tau\neq \tau_\ast),
\]
uniformly in \(u\), and each macro-layer satisfies
\[
1\le D_{\mathrm{mac}}^u(i)\le d^+_{\mathrm{mac}},
\qquad
K_{\mathrm{mac}}^u(i,j)\le a^+_{\mathrm{mac}}(i+1)^{-\beta}.
\]
Then there exists \(C_{\mathrm{comp}}<\infty\), independent of \(H\), such that for every
\[
t=\tau_\ast+\ell,\qquad 1\le \ell\le H,
\]
one has
\[
\sum_{\substack{0\le \tau<t\\ \tau\neq \tau_\ast}}
\bigl|\mathcal T_{\mathrm{stack}}^u(t,\tau)\bigr|
\le
C_{\mathrm{comp}}\,\varepsilon_H\,(1+\ell)^{k(1-\beta)}.
\]
In particular, if
\[
\varepsilon_H\le c_0(H+1)^{-1}
\]
with \(c_0>0\) small enough, then
\[
\sum_{\substack{0\le \tau<t\\ \tau\neq \tau_\ast}}
\bigl|\mathcal T_{\mathrm{stack}}^u(t,\tau)\bigr|
\le
\frac12 c_{\mathrm{sig}}(1+\ell)^{k(1-\beta)-1}
\]
for any prescribed \(c_{\mathrm{sig}}>0\) after reducing \(c_0\).
\end{lemma}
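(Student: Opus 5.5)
The plan is to expand the stack kernel by the composition rule of Lemma~\ref{lem:signal_blind_transport_calculus}(ii). Since the selector is the first block and is diagonal, the stack kernel factors as
\[
\mathcal T_{\mathrm{stack}}^u(t,\tau)=\mathcal P^u(t,\tau)\,D_{\mathrm{sel}}^u(\tau),
\qquad
\mathcal P^u:=\mathcal T_{M^{(k)}}\cdots\mathcal T_{M^{(1)}} .
\]
Here $\mathcal P^u$ is the lower-triangular kernel product of the $k$ macro-layer kernels; each factor is evaluated at the appropriate intermediate state, and only uniform bounds are used. Consequently
\[
\sum_{\tau<t,\ \tau\neq\tau_\ast}|\mathcal T_{\mathrm{stack}}^u(t,\tau)|\le \varepsilon_H\sum_{\tau<t}|\mathcal P^u(t,\tau)|,
\]
and the task reduces to a uniform bound on the row sum $\sum_{\tau< t}|\mathcal P^u(t,\tau)|\lesssim (1+\ell)^{k(1-\beta)}$, with $t=\tau_\ast+\ell$ and constants independent of $H$.

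For the row sum, write each macro kernel as $D+K$ with $0\le D\le d^+_{\mathrm{mac}}$ and $0\le K(i,j)\le a^+_{\mathrm{mac}}(i+1)^{-\beta}$. Expanding the product over subsets of layers, exactly as in Theorem~\ref{thm:app_e2e_residual_calculus}, gives
\[
|\mathcal P^u(t,\tau)|\le \sum_{S\subseteq\{1,\dots,k\}}(d^+_{\mathrm{mac}})^{k-|S|}(a^+_{\mathrm{mac}})^{|S|}\,\mathcal Q_{|S|}(t,\tau),
\qquad
\mathcal Q_p(t,\tau):=\sum_{\tau=i_0<\cdots<i_p=t}\prod_{r=1}^p(i_r+1)^{-\beta},
\]
with $\mathcal Q_0(t,\tau)=\mathbf 1[t=\tau]$. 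The diagonal terms drop out of the strict row sum. The key estimate is then
\[
\sum_{\tau<t}\mathcal Q_p(t,\tau)=\sum_{0\le i_0<i_1<\cdots<i_p=t}\prod_{r=1}^p(i_r+1)^{-\beta}\le \frac{1}{p!}\Bigl(\sum_{i=0}^{t}(i+1)^{-\beta}\Bigr)^{p}\cdot(t+1)\cdot(\text{factor}),
\]
which we argue by induction on $p$. The case $p=1$ is $t(t+1)^{-\beta}\le (t+1)^{1-\beta}$. For the inductive step,
\[
\sum_{\tau<t}\mathcal Q_{p}(t,\tau)=(t+1)^{-\beta}\Bigl(\sum_{i<t}\sum_{\tau\le i}\mathcal Q_{p-1}(i,\tau)\Bigr),
\]
with the convention that $\sum_{\tau\le i}\mathcal Q_0(i,\tau)=1$. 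Using the hypothesis $\sum_{\tau\le i}\mathcal Q_{p-1}(i,\tau)\lesssim (i+1)^{(p-1)(1-\beta)}$, we get
\[
\sum_{i<t}(i+1)^{(p-1)(1-\beta)}\lesssim (t+1)^{(p-1)(1-\beta)+1},
\]
so after multiplying by $(t+1)^{-\beta}$ the row sum is $\lesssim (t+1)^{p(1-\beta)}$. Summing over $S$ gives $\sum_{\tau<t}|\mathcal P^u(t,\tau)|\lesssim_k (t+1)^{k(1-\beta)}$, with the exponent maximized at $p=k$ because $1-\beta>0$. Finally, $t+1=\tau_\ast+\ell+1\le(\tau_{\max}+1)(1+\ell)$ converts this into $(1+\ell)^{k(1-\beta)}$ with a constant depending only on $k,\beta,\tau_{\max}$. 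This yields $C_{\mathrm{comp}}$.

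For the "in particular" clause, substitute $\varepsilon_H\le c_0(H+1)^{-1}$ and use $\ell\le H$, so that $(H+1)^{-1}\le (1+\ell)^{-1}$. The bound becomes
\[
C_{\mathrm{comp}}\,c_0\,(1+\ell)^{k(1-\beta)-1},
\]
and choosing $c_0\le c_{\mathrm{sig}}/(2C_{\mathrm{comp}})$ finishes the argument.

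The main obstacle is keeping every constant independent of $H$, in particular not introducing any $T$-dependence through $\sup_u$ of the kernels. This works because Lemma~\ref{lem:profile_weighted_macro} provides $T$-independent envelopes and the induction uses only those envelopes.
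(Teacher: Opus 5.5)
Your proof is correct and is essentially the paper's argument. You factor out the diagonal selector, which contributes at most $\varepsilon_H$ at every competitor source, and bound the macro-layer product by the path sum over jump layers, with diagonal factors $\le d^+_{\mathrm{mac}}$ and jumps $\le a^+_{\mathrm{mac}}(i_r+1)^{-\beta}$. The only difference is bookkeeping: the paper bounds each competitor entry by $C\varepsilon_H(1+t)^{k(1-\beta)-1}$ and then multiplies by the at most $t$ competitors, whereas your induction bounds the whole row sum at once, in fact with constant $1$: $\sum_{\tau<t}\mathcal Q_p(t,\tau)\le (t+1)^{p(1-\beta)}$. The loose display containing ``$(\text{factor})$'' can therefore simply be deleted.
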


\begin{proof}
Fix a competitor source \(\tau\neq\tau_\ast\) with \(\tau<t\).
Any path from \(\tau\) to \(t\) through the selector-plus-\(k\)-macro-layer stack must contain at least one genuine jump,
because diagonal propagation alone cannot change the time index.

Fix a path with exactly \(j\) jump layers, where \(1\le j\le k\), and let
\[
\tau=i_0<i_1<\cdots<i_j=t
\]
be the corresponding jump times.
The selector contributes at most \(\varepsilon_H\) at the source \(\tau\neq\tau_\ast\).
Each jump contributes at most
\[
a^+_{\mathrm{mac}}(i_r+1)^{-\beta},
\qquad r=1,\dots,j.
\]
Each non-jump macro-layer contributes at most the diagonal bound \(d^+_{\mathrm{mac}}\).

Hence every such path has weight bounded by
\[
C_0\,\varepsilon_H
\prod_{r=1}^{j}(i_r+1)^{-\beta},
\]
where \(C_0\) depends only on \(k\) and \(d^+_{\mathrm{mac}}\).

Now sum over all jump times for fixed \(j\):
\[
\sum_{\tau=i_0<i_1<\cdots<i_j=t}
\prod_{r=1}^{j}(i_r+1)^{-\beta}
=
(t+1)^{-\beta}
\sum_{\tau<i_1<\cdots<i_{j-1}<t}
\prod_{r=1}^{j-1}(i_r+1)^{-\beta}.
\]
Using the elementary symmetric-sum bound,
\[
\sum_{\tau<i_1<\cdots<i_{j-1}<t}
\prod_{r=1}^{j-1}(i_r+1)^{-\beta}
\le
\frac{1}{(j-1)!}
\left(\sum_{m=1}^{t-1}(m+1)^{-\beta}\right)^{j-1},
\]
and
\[
\sum_{m=1}^{t-1}(m+1)^{-\beta}\lesssim (1+t)^{1-\beta},
\]
we obtain
\[
\sum_{\tau=i_0<i_1<\cdots<i_j=t}
\prod_{r=1}^{j}(i_r+1)^{-\beta}
\le
C_j (1+t)^{j(1-\beta)-1}.
\]
Therefore
\[
|\mathcal T_{\mathrm{stack}}^u(t,\tau)|
\le
C_1\,\varepsilon_H
\sum_{j=1}^{k}(1+t)^{j(1-\beta)-1}
\le
C_2\,\varepsilon_H(1+t)^{k(1-\beta)-1},
\]
since \(k\) is fixed.

Now \(t=\tau_\ast+\ell\) with \(0\le \tau_\ast\le\tau_{\max}\), so
\[
1+t\asymp_{\tau_{\max}}1+\ell.
\]
Hence
\[
|\mathcal T_{\mathrm{stack}}^u(t,\tau)|
\lesssim
\varepsilon_H(1+\ell)^{k(1-\beta)-1}.
\]

Finally sum over all competitors \(\tau<t\). There are at most \(t\lesssim_{\tau_{\max}}1+\ell\) of them, so
\[
\sum_{\substack{0\le \tau<t\\ \tau\neq \tau_\ast}}
|\mathcal T_{\mathrm{stack}}^u(t,\tau)|
\lesssim
\varepsilon_H(1+\ell)^{k(1-\beta)}.
\]
This proves the first claim.

For the in-particular clause, use \(1+\ell\le H+1\):
\[
\varepsilon_H(1+\ell)^{k(1-\beta)}
\le
c_0(H+1)^{-1}(1+\ell)^{k(1-\beta)}
\le
c_0(1+\ell)^{k(1-\beta)-1}.
\]
Reducing \(c_0\) if necessary yields the desired factor \(\frac12 c_{\mathrm{sig}}\).
\end{proof}

\begin{remark}[Width bookkeeping]
\label{rem:flexible_route_B_width7}
After the positional writer has fixed the direction \(e_{\mathrm{pos}}\), choose once and for all six orthonormal directions
\[
e_{\mathrm{sig}},\ e_{\mathrm{prof}},\ e_{\mathrm{tail}},\ e_{\mathrm{aux}},\ e_{\mathrm{src}},\ e_{\mathrm{tgt}},
\]
all orthogonal to \(e_{\mathrm{pos}}\).

The preparatory network \(Q_H\) uses \(e_{\mathrm{prof}},e_{\mathrm{tail}},e_{\mathrm{aux}},e_{\mathrm{src}},e_{\mathrm{tgt}}\);
the selector block reuses \(e_{\mathrm{aux}}\) and preserves \(e_{\mathrm{prof}}\);
each diffuse profile-compensated macro-layer reuses \(e_{\mathrm{src}}\) and preserves \(e_{\mathrm{prof}}\);
the direction \(e_{\mathrm{tgt}}\) remains available as an auxiliary spare scratch direction.
No block requires any additional fresh ambient direction beyond these seven coordinates.

In the concrete architecture, each width-\(D\) block also provides \(D\) \(a\)-slots and \(D\) \(g\)-slots in the split
\[
(a,g)=\mathrm{split}(xW^{\mathrm{in}}+b^{\mathrm{in}}).
\]
The constructions below use at most six active \(a\)-slots and at most three active \(g\)-slots in any single block:
the plateau window uses four \(a\)-slots, the window writer uses six \(a\)-slots and two \(g\)-slots, the local multiplier uses four \(a\)-slots and two \(g\)-slots, the repaired source writer uses four \(a\)-slots and two \(g\)-slots, the repaired diffuse transport block uses two \(a\)-slots and one \(g\)-slot, the damped predecessor integrator uses three \(a\)-slots and one \(g\)-slot, and the simultaneous scratch reset uses one \(a\)-slot and three \(g\)-slots.

Hence the same condition
\[
D\ge 7
\]
simultaneously provides the seven persistent ambient directions and enough concrete \(a\)-/\(g\)-slots for every primitive block.
\end{remark}

\begin{proof}[Proof of Theorem~\ref{thm:flexible_route_B_sessa}]
Fix \(H\ge 1\) and \(0\le \tau_\ast\le \tau_{\max}\).
Set
\[
L_H:=\tau_{\max}+H,
\qquad
T_H:=L_H+1.
\]

\paragraph{Composite architecture.}
For each horizon parameter \(H\ge 1\) and source index \(0\le \tau_\ast\le \tau_{\max}\), we construct
\[
G_{H,\tau_\ast}
=
M_{H,k}\circ\cdots\circ M_{H,1}\circ
S_{H,\tau_\ast,\varepsilon_H}\circ Q_H\circ P_H.
\]
Here \(P_H\) writes a one-directional positional code, \(Q_H\) builds a signal-transparent preparatory
power-profile channel, \(S_{H,\tau_\ast,\varepsilon_H}\) is a selector that isolates the chosen source
\(\tau_\ast\), and \(M_{H,1},\dots,M_{H,k}\) are the diffuse profile-compensated macro-layers that
generate the target polynomial transport envelope.

Inside the proof we also introduce projected variants of the macro-layers in order to expose the exact
signal-channel transport kernel while removing an auxiliary scratch channel. This internal projection does
not change the realized map on the relevant signal fibers, so it is used only as a bookkeeping device in
the kernel calculation.

\paragraph{Step 1: write the positional code.}
Apply Corollary~\ref{cor:pos_code_one_direction} on the finite prefix \(\{0,\dots,L_H\}\).
This yields a block
\[
P_H:(\mathbb R^D)^{T_H}\to(\mathbb R^D)^{T_H}
\]
and a unit direction \(e_{\mathrm{pos}}\) such that
\[
P_H(h)_t=h_t+\lambda_t e_{\mathrm{pos}},
\qquad 0\le t\le L_H,
\]
for some scalars \(\lambda_t\), and such that on
\[
\Kset_H:=P_H(\mathcal X_0^{(H)})
\]
the scalar ranges
\[
I_t:=\{\langle u_t,e_{\mathrm{pos}}\rangle:\ u\in\Kset_H\}
\]
are compact and strictly ordered:
\[
I_0<\cdots<I_{L_H}\subset(0,\infty).
\]

Since \(D\ge 7\), after fixing \(e_{\mathrm{pos}}\) we may choose orthonormal directions
\[
e_{\mathrm{sig}},\ e_{\mathrm{prof}},\ e_{\mathrm{tail}},\ e_{\mathrm{aux}},\ e_{\mathrm{src}},\ e_{\mathrm{tgt}}
\]
all orthogonal to \(e_{\mathrm{pos}}\); see Remark~\ref{rem:flexible_route_B_width7}.

By Corollary~\ref{cor:pos_code_signal_transparency}, for every \(x\in\mathcal X_0^{(H)}\),
every \(\tau\), and every scalar \(a\),
\[
P_H(x+a\,e_{\mathrm{sig}}\mathbf 1[\cdot=\tau])_t
=
P_H(x)_t+a\,e_{\mathrm{sig}}\mathbf 1[t=\tau].
\]
In particular,
\[
\langle P_H(x+a\,e_{\mathrm{sig}}\mathbf 1[\cdot=\tau])_t,e_{\mathrm{pos}}\rangle
=
\langle P_H(x)_t,e_{\mathrm{pos}}\rangle.
\]

\paragraph{Step 2: build the preparatory power-profile network.}
Apply Corollary~\ref{cor:preparatory_power_profile} to the compact set \(\Kset_H\), with the fixed orthonormal directions
\[
e_{\mathrm{sig}},\ e_{\mathrm{pos}},\ e_{\mathrm{prof}},\ e_{\mathrm{tail}},\ e_{\mathrm{aux}},\ e_{\mathrm{src}},\ e_{\mathrm{tgt}},
\]
which satisfy the hypotheses of that corollary.
This yields a constant-depth network
\[
Q_H:(\mathbb R^D)^{T_H}\to(\mathbb R^D)^{T_H}
\]
with the following properties.

\paragraph{Signal preservation.}
The signal channel is preserved exactly:
\[
\langle Q_H(u)_t,e_{\mathrm{sig}}\rangle=\langle u_t,e_{\mathrm{sig}}\rangle.
\]

\paragraph{Positional preservation.}
The positional-control coordinate is preserved exactly:
\[
\langle Q_H(u)_t,e_{\mathrm{pos}}\rangle=\langle u_t,e_{\mathrm{pos}}\rangle.
\]

\paragraph{Profile growth.}
The profile channel on the prescribed direction \(e_{\mathrm{prof}}\) satisfies
\[
c_r^-(t+1)^\gamma
\le
\langle Q_H(u)_t,e_{\mathrm{prof}}\rangle
\le
c_r^+(t+1)^\gamma,
\qquad
\gamma=1-\beta.
\]

\paragraph{Signal transparency.}
The map \(Q_H\) is signal-transparent relative to \((e_{\mathrm{pos}},e_{\mathrm{prof}})\): for every \(u\), every \(\tau\), and every scalar \(a\),
\[
\langle Q_H(u+a\,e_{\mathrm{sig}}\mathbf 1[\cdot=\tau])_t,e_{\mathrm{pos}}\rangle
=
\langle Q_H(u)_t,e_{\mathrm{pos}}\rangle,
\]
\[
\langle Q_H(u+a\,e_{\mathrm{sig}}\mathbf 1[\cdot=\tau])_t,e_{\mathrm{prof}}\rangle
=
\langle Q_H(u)_t,e_{\mathrm{prof}}\rangle,
\]
\[
\langle Q_H(u+a\,e_{\mathrm{sig}}\mathbf 1[\cdot=\tau])_t,e_{\mathrm{sig}}\rangle
=
\langle Q_H(u)_t,e_{\mathrm{sig}}\rangle+a\,\mathbf 1[t=\tau].
\]

Write
\[
R_H:=Q_H\circ P_H.
\]

\paragraph{Step 3: select the source index.}
Apply Lemma~\ref{lem:selector_window} on the image of \(R_H\), using the already fixed directions
\(e_{\mathrm{pos}},e_{\mathrm{sig}},e_{\mathrm{aux}}\), with
\[
E_{\mathrm{carry}}:=\operatorname{span}\{e_{\mathrm{prof}}\},
\qquad
\varepsilon_H:=c_0(H+1)^{-1},
\]
where \(c_0>0\) will be fixed later.
This yields a selector module
\[
S_{H,\tau_\ast,\varepsilon_H}
\]
which preserves the positional and profile channels and has exact diagonal signal transport
\[
\mathcal T_S^u(i,j)=D_{\mathrm{sel}}^u(i)\mathbf 1[i=j]
\]
with
\[
\frac12\le D_{\mathrm{sel}}^u(\tau_\ast)\le 2,
\qquad
|D_{\mathrm{sel}}^u(\tau)|\le \varepsilon_H\quad (\tau\neq \tau_\ast).
\]

\paragraph{Step 4: add the \(k\) macro-layers.}
Define
\[
\Kset_{H,0}^{\mathrm{mac}}
:=
S_{H,\tau_\ast,\varepsilon_H}\bigl(R_H(\mathcal X_0^{(H)})\bigr).
\]
This is compact. By Step~2 and Step~3, on \(\Kset_{H,0}^{\mathrm{mac}}\) the positional-control ranges are still
\[
I_0<\cdots<I_{L_H}\subset(0,\infty),
\]
and the profile channel still satisfies
\[
c_r^-(t+1)^\gamma
\le
\langle u_t,e_{\mathrm{prof}}\rangle
\le
c_r^+(t+1)^\gamma,
\qquad
0\le t\le L_H.
\]

Apply Lemma~\ref{lem:profile_weighted_macro} with \(T=L_H\) to
\(\Kset_{H,0}^{\mathrm{mac}}\), using the fixed directions
\[
e_{\mathrm{sig}},\ e_{\mathrm{pos}},\ e_{\mathrm{prof}},\ e_{\mathrm{src}},
\qquad
E_{\mathrm{carry}}:=\{0\},
\]
to obtain \(M_{H,1}\). Define
\[
\bar M_{H,1}:=\Pi_{\mathrm{src}}\circ M_{H,1}.
\]
If \(k\ge 2\), set
\[
\Kset_{H,1}^{\mathrm{mac}}
:=
\bar M_{H,1}\bigl(\Kset_{H,0}^{\mathrm{mac}}\bigr).
\]

Inductively, suppose that for some \(1\le r\le k-1\) we have already constructed
\[
M_{H,1},\dots,M_{H,r},
\qquad
\bar M_{H,1},\dots,\bar M_{H,r},
\]
and compact sets
\[
\Kset_{H,0}^{\mathrm{mac}},\dots,\Kset_{H,r}^{\mathrm{mac}}
\]
such that for each \(1\le s\le r\),
\[
\Kset_{H,s}^{\mathrm{mac}}
=
\bar M_{H,s}\bigl(\Kset_{H,s-1}^{\mathrm{mac}}\bigr),
\]
and on every \(\Kset_{H,s}^{\mathrm{mac}}\) the same ordered positional ranges
\[
I_0<\cdots<I_{L_H}\subset(0,\infty)
\]
and the same two-sided profile bounds
\[
c_r^-(t+1)^\gamma
\le
\langle u_t,e_{\mathrm{prof}}\rangle
\le
c_r^+(t+1)^\gamma
\]
hold.

Apply Lemma~\ref{lem:profile_weighted_macro} to
\(\Kset_{H,r}^{\mathrm{mac}}\), with the same fixed directions, to obtain
\(M_{H,r+1}\). Define
\[
\bar M_{H,r+1}:=\Pi_{\mathrm{src}}\circ M_{H,r+1}.
\]
If \(r+1\le k-1\), set
\[
\Kset_{H,r+1}^{\mathrm{mac}}
:=
\bar M_{H,r+1}\bigl(\Kset_{H,r}^{\mathrm{mac}}\bigr).
\]
By Lemma~\ref{lem:projected_macro_signal_fiber}(ii)--(iii), each
\(\bar M_{H,r}\) preserves the \(e_{\mathrm{pos}}\)- and
\(e_{\mathrm{prof}}\)-channels exactly and has the same exact signal-channel
transport kernel as \(M_{H,r}\). Therefore the induction is well-posed, and
after \(k\) steps we obtain macro-layers
\[
M_{H,1},\dots,M_{H,k},
\qquad
\bar M_{H,1},\dots,\bar M_{H,k-1},
\]
all preserving the positional and profile channels and having exact signal
transport kernels
\[
\mathcal T_{M_{H,r}}^u(i,j)
=
D_{\mathrm{mac},r}^u(i)\mathbf 1[i=j]
+
K_{\mathrm{mac},r}^u(i,j)\mathbf 1[j<i],
\]
with uniform bounds
\[
1\le D_{\mathrm{mac},r}^u(i)\le d^+_{\mathrm{mac}},
\]
\[
a^-_{\mathrm{mac}}(i+1)^{-\beta}
\le
K_{\mathrm{mac},r}^u(i,j)
\le
a^+_{\mathrm{mac}}(i+1)^{-\beta}
\qquad (j<i).
\]

Moreover, by Lemma~\ref{lem:projected_macro_signal_fiber}(i),
\[
M_{H,r+1}=M_{H,r+1}\circ \Pi_{\mathrm{src}}
\qquad (r=1,\dots,k-1),
\]
hence the actual network from the theorem statement satisfies
\[
G_{H,\tau_\ast}
=
M_{H,k}\circ\cdots\circ M_{H,1}\circ
S_{H,\tau_\ast,\varepsilon_H}\circ Q_H\circ P_H
=
\widehat G_{H,\tau_\ast}\circ R_H,
\]
where
\[
\widehat G_{H,\tau_\ast}
:=
M_{H,k}\circ \bar M_{H,k-1}\circ\cdots\circ \bar M_{H,1}\circ
S_{H,\tau_\ast,\varepsilon_H},
\qquad
R_H:=Q_H\circ P_H.
\]
By Lemma~\ref{lem:projected_macro_signal_fiber}(iii), each \(\bar M_{H,r}\)
has the same signal-channel transport kernel as the corresponding \(M_{H,r}\),
so all of the above kernel bounds remain unchanged.

\paragraph{Step 5: identify the score with the transport kernel.}
Take the normalized probes in Definition~\ref{def:uniform_finite_horizon_profile_realization} to be
\[
c^{(H,\tau_\ast)}:=e_{\mathrm{sig}},
\qquad
\rho_t^{(H,\tau_\ast)}:=e_{\mathrm{sig}}
\qquad (0\le t\le L_H).
\]
These are independent of \(x\), common to all source indices \(\tau\), and satisfy
\[
\|c^{(H,\tau_\ast)}\|_2=1,
\qquad
\|\rho_t^{(H,\tau_\ast)}\|_2=1.
\]

Set
\[
R_H:=Q_H\circ P_H.
\]
By Step~1 and Step~2, \(R_H\) is signal-transparent along \(e_{\mathrm{sig}}\) over
\[
E_{\mathrm{ctrl}}:=\operatorname{span}\{e_{\mathrm{pos}},e_{\mathrm{prof}}\}
\]
on \(\mathcal X_0^{(H)}\).

Fix some \(\delta_\ast>0\), for example \(\delta_\ast=1\), and define
\[
\mathcal Y_H
:=
\Sat^{\mathrm{sig}}_{\delta_\ast}\bigl(R_H(\mathcal X_0^{(H)})\bigr).
\]
This set is compact.

Define
\[
\mathcal Y_{H,0}
:=
S_{H,\tau_\ast,\varepsilon_H}(\mathcal Y_H).
\]
By Lemma~\ref{lem:selector_signal_fiber_closure}, there exists a finite
\(\delta_{H,0}\) such that
\[
\mathcal Y_{H,0}
\subset
\Sat^{\mathrm{sig}}_{\delta_{H,0}}\bigl(\Kset_{H,0}^{\mathrm{mac}}\bigr).
\]

For \(r=1,\dots,k-1\), define inductively
\[
\mathcal Y_{H,r}
:=
\bar M_{H,r}(\mathcal Y_{H,r-1}).
\]
By Lemma~\ref{lem:projected_macro_signal_fiber}(iv), there exists a finite
\(\delta_{H,r}\) such that
\[
\mathcal Y_{H,r}
\subset
\Sat^{\mathrm{sig}}_{\delta_{H,r}}\bigl(\Kset_{H,r}^{\mathrm{mac}}\bigr),
\qquad
r=1,\dots,k-1.
\]

By Corollary~\ref{cor:concrete_blocks_signal_fiber_stability}, the selector
\(S_{H,\tau_\ast,\varepsilon_H}\) has signal-blind exact scalar transport
along \(e_{\mathrm{sig}}\) over
\[
E_{\mathrm{ctrl}}=\operatorname{span}\{e_{\mathrm{pos}},e_{\mathrm{prof}}\}
\]
on \(\mathcal Y_H\).
For each \(r=1,\dots,k-1\), Lemma~\ref{lem:projected_macro_signal_fiber}(v)
shows that \(\bar M_{H,r}\) has signal-blind exact scalar transport along
\(e_{\mathrm{sig}}\) over the same control subspace on \(\mathcal Y_{H,r-1}\).
Finally, since
\[
\mathcal Y_{H,k-1}
\subset
\Sat^{\mathrm{sig}}_{\delta_{H,k-1}}\bigl(\Kset_{H,k-1}^{\mathrm{mac}}\bigr),
\]
Corollary~\ref{cor:macro_signal_fiber_transport} implies that the
final macro-layer \(M_{H,k}\) has signal-blind exact scalar transport along
\(e_{\mathrm{sig}}\) over the same control subspace on \(\mathcal Y_{H,k-1}\),
with the same kernel \(\mathcal T_{M_{H,k}}^u\) as on
\(\Kset_{H,k-1}^{\mathrm{mac}}\).

Repeated application of Lemma~\ref{lem:signal_blind_transport_calculus}(ii)
therefore yields that the full post-preparatory stack
\[
\widehat G_{H,\tau_\ast}
=
M_{H,k}\circ \bar M_{H,k-1}\circ\cdots\circ \bar M_{H,1}\circ S_{H,\tau_\ast,\varepsilon_H}
\]
has signal-blind exact scalar transport along \(e_{\mathrm{sig}}\) over
\[
E_{\mathrm{ctrl}}=\operatorname{span}\{e_{\mathrm{pos}},e_{\mathrm{prof}}\}
\]
on \(\mathcal Y_H\), with transport kernel
\[
\mathcal T_{\widehat G_{H,\tau_\ast}}^{u}(t,\tau).
\]

Hence Lemma~\ref{lem:transparent_preprocessing_exact_transport} applies with
\[
R=R_H,
\qquad
B=\widehat G_{H,\tau_\ast},
\qquad
\Kset=\mathcal X_0^{(H)}.
\]
Therefore, for every \(x\in\mathcal X_0^{(H)}\) and every \(0\le \tau\le t\le L_H\),
\[
e_{\mathrm{sig}}^\top
\frac{\partial G_{H,\tau_\ast,t}(x)}{\partial x_\tau}
e_{\mathrm{sig}}
=
\mathcal T_{\widehat G_{H,\tau_\ast}}^{\,R_H(x)}(t,\tau).
\]
By our choice of score channels,
\[
\mathsf S^{(H,\tau_\ast)}_{t,\tau}(x)
=
\bigl(\rho_t^{(H,\tau_\ast)}\bigr)^\top
J^{G_{H,\tau_\ast}}_{t,\tau}(x)\,
c^{(H,\tau_\ast)}
=
e_{\mathrm{sig}}^\top
J^{G_{H,\tau_\ast}}_{t,\tau}(x)\,
e_{\mathrm{sig}}
=
\mathcal T_{\widehat G_{H,\tau_\ast}}^{\,R_H(x)}(t,\tau).
\]
Set
\[
u:=R_H(x).
\]

\paragraph{Step 6: lower-bound the balanced paths.}
Fix
\[
t=\tau_\ast+\ell,\qquad \ell\ge k.
\]
Expand the kernel product along the intermediate states. Writing
\[
u^{(0)}:=u,
\qquad
u^{(r)}:=\bar M_{H,r}\circ\cdots\circ \bar M_{H,1}\circ S_{H,\tau_\ast,\varepsilon_H}(u)
\quad (1\le r\le k-1),
\]
one has
\[
\mathcal T_{\widehat G_{H,\tau_\ast}}^{u}
=
\mathcal T_{M_{H,k}}^{u^{(k-1)}}\,
\mathcal T_{\bar M_{H,k-1}}^{u^{(k-2)}}\cdots
\mathcal T_{\bar M_{H,1}}^{u^{(0)}}\,
\mathcal T_{S_{H,\tau_\ast,\varepsilon_H}}^{u}.
\]
Since every factor preserves the control channels exactly and its kernel depends only on the
control stream, all intermediate control streams equal that of \(u\). Hence the same pathwise
kernel bounds apply throughout. Moreover, by Lemma~\ref{lem:projected_macro_signal_fiber},
\[
\mathcal T_{\bar M_{H,r}}^{u^{(r-1)}}(i,j)=\mathcal T_{M_{H,r}}^{u^{(r-1)}}(i,j)
\qquad (r=1,\dots,k-1).
\]
Consider the family of paths that use all \(k\) macro-layers as jumps and whose jump times are balanced:
\[
\tau_\ast=i_0<i_1<\cdots<i_k=t,
\qquad
\frac{\ell}{2k}\le i_r-i_{r-1}\le \frac{2\ell}{k}.
\]
For each such path, the selector contributes at least \(\frac12\), and each jump contributes at least
\[
a^-_{\mathrm{mac}}(i_r+1)^{-\beta}.
\]
Hence
\[
\mathcal T_{\widehat G_{H,\tau_\ast}}^{u}(t,\tau_\ast)
\ge
\frac12 (a^-_{\mathrm{mac}})^k
\sum_{\substack{\tau_\ast=i_0<\cdots<i_k=t\\ \text{balanced}}}
\prod_{r=1}^{k}(i_r+1)^{-\beta}.
\]
By Lemma~\ref{lem:balanced_profile_paths},
\[
\mathcal T_{\widehat G_{H,\tau_\ast}}^{u}(t,\tau_\ast)
\ge
c_{\mathrm{good}}(1+\ell)^{k(1-\beta)-1}.
\]

\paragraph{Step 7: handle small lags.}
There are only finitely many pairs \((\tau_\ast,\ell)\) with
\[
0\le \tau_\ast\le \tau_{\max},
\qquad
1\le \ell<k.
\]
For each such pair, choose the path that jumps in the first \(\ell\) macro-layers and then propagates diagonally.
Since all indices lie in the finite set \(\{0,\dots,\tau_{\max}+k-1\}\), the corresponding exact path weight is bounded below by
a positive constant depending only on \((k,\beta,\tau_{\max})\).
Therefore there exists
\[
c_{\mathrm{small}}>0
\]
such that
\[
\mathcal T_{\widehat G_{H,\tau_\ast}}^{u}(\tau_\ast+\ell,\tau_\ast)\ge c_{\mathrm{small}}
\qquad (1\le \ell<k).
\]

Combining the large- and small-lag cases, there exists \(c_{\mathrm{sig}}>0\) such that for all
\(1\le \ell\le H\),
\[
\mathcal T_{\widehat G_{H,\tau_\ast}}^{u}(\tau_\ast+\ell,\tau_\ast)
\ge
c_{\mathrm{sig}}(1+\ell)^{\nu_k(\beta)},
\qquad
\nu_k(\beta)=k(1-\beta)-1.
\]

\paragraph{Step 8: suppress the competitors.}
Apply Lemma~\ref{lem:competitor_suppression_profile} to the
selector-plus-macro transport kernel. By
Lemma~\ref{lem:projected_macro_signal_fiber}(iii), each projected macro-layer
\(\bar M_{H,r}\) has exactly the same signal-channel transport kernel as the
corresponding macro-layer \(M_{H,r}\), so the lemma applies verbatim to the
post-preparatory stack
\[
\widehat G_{H,\tau_\ast}
=
M_{H,k}\circ \bar M_{H,k-1}\circ\cdots\circ \bar M_{H,1}\circ S_{H,\tau_\ast,\varepsilon_H}.
\]
Since the exact transport coefficient equals the Jacobian score coefficient on the signal channel,
\[
\sum_{\substack{0\le \tau<t\\ \tau\neq \tau_\ast}}
\bigl|\mathsf S^{(H,\tau_\ast)}_{t,\tau}(x)\bigr|
=
\sum_{\substack{0\le \tau<t\\ \tau\neq \tau_\ast}}
\bigl|\mathcal T_{\widehat G_{H,\tau_\ast}}^{u}(t,\tau)\bigr|
\le
C_{\mathrm{comp}}\varepsilon_H(1+\ell)^{k(1-\beta)}.
\]
Choose \(c_0>0\) small enough that
\[
C_{\mathrm{comp}}\varepsilon_H(1+\ell)^{k(1-\beta)}
\le
\frac12 c_{\mathrm{sig}}(1+\ell)^{\nu_k(\beta)}
\qquad (1\le \ell\le H).
\]
Then
\[
\mathsf M^{(H,\tau_\ast)}_{\tau_\ast+\ell,\tau_\ast}(x)
\ge
\frac12 c_{\mathrm{sig}}(1+\ell)^{\nu_k(\beta)}.
\]
So we may take
\[
c_-:=\frac12 c_{\mathrm{sig}}.
\]

\paragraph{Step 9: anchor bounds.}
At \(\ell=1\),
\[
\mathsf M^{(H,\tau_\ast)}_{\tau_\ast+1,\tau_\ast}(x)
\ge
c_-(1+1)^{\nu_k(\beta)}
=
2^{\nu_k(\beta)}c_-.
\]
Hence we may take
\[
m_-:=2^{\nu_k(\beta)}c_->0.
\]

For the anchor upper bound, note first that
\[
\mathsf M^{(H,\tau_\ast)}_{\tau_\ast+1,\tau_\ast}(x)
\le
\bigl|\mathsf S^{(H,\tau_\ast)}_{\tau_\ast+1,\tau_\ast}(x)\bigr|.
\]
By Step~5,
\[
\mathsf S^{(H,\tau_\ast)}_{\tau_\ast+1,\tau_\ast}(x)
=
\mathcal T_{\widehat G_{H,\tau_\ast}}^{\,R_H(x)}(\tau_\ast+1,\tau_\ast).
\]

Since the selector is diagonal, any path from \(\tau_\ast\) to \(\tau_\ast+1\)
through
\[
\widehat G_{H,\tau_\ast}
=
M_{H,k}\circ \bar M_{H,k-1}\circ\cdots\circ \bar M_{H,1}\circ S_{H,\tau_\ast,\varepsilon_H}
\]
must contain exactly one off-diagonal jump, and that jump must occur in one of
the \(k\) macro-layers. Therefore
\begin{align*}
&\mathcal T_{\widehat G_{H,\tau_\ast}}^{\,R_H(x)}(\tau_\ast+1,\tau_\ast) \\
&\qquad=
D_{\mathrm{sel}}^{u}(\tau_\ast)
\sum_{r=1}^{k}
\left(\prod_{q<r} D_{\mathrm{mac},q}^{u}(\tau_\ast)\right)
K_{\mathrm{mac},r}^{u}(\tau_\ast+1,\tau_\ast)
\left(\prod_{q>r} D_{\mathrm{mac},q}^{u}(\tau_\ast+1)\right),
\end{align*}
where \(u=R_H(x)\).

Using
\[
D_{\mathrm{sel}}^{u}(\tau_\ast)\le 2,
\qquad
D_{\mathrm{mac},q}^{u}(i)\le d_{\mathrm{mac}}^{+},
\qquad
K_{\mathrm{mac},r}^{u}(\tau_\ast+1,\tau_\ast)
\le
a_{\mathrm{mac}}^{+}(\tau_\ast+2)^{-\beta}
\le
a_{\mathrm{mac}}^{+},
\]
we obtain
\[
\bigl|\mathcal T_{\widehat G_{H,\tau_\ast}}^{\,R_H(x)}(\tau_\ast+1,\tau_\ast)\bigr|
\le
2k\,(d_{\mathrm{mac}}^{+})^{k-1}a_{\mathrm{mac}}^{+}.
\]
Hence one may take
\[
m_+
:=
2k\,(d_{\mathrm{mac}}^{+})^{k-1}a_{\mathrm{mac}}^{+},
\]
which is independent of \(H\), \(\tau_\ast\), and \(x\). Consequently,
\[
\mathsf M^{(H,\tau_\ast)}_{\tau_\ast+1,\tau_\ast}(x)\le m_+.
\]

This verifies Definition~\ref{def:uniform_finite_horizon_profile_realization}.
The sign classification follows immediately from the sign of
\[
\nu_k(\beta)=k(1-\beta)-1.
\]
\end{proof}

\end{document}